\providecommand{\cref}[1]{Chapter~\ref{chap:#1}}
\providecommand{\norm}[1]{\lVert#1\rVert}
\providecommand{\set}[1]{\left\{#1\right\}}
\providecommand{\va}{\bm{a}}
\newcommand{\mycommentbegin}{}
\newcommand{\bz}{{\boldsymbol z}}
\newcommand{\bu}{{\boldsymbol u}}
\newcommand{\bM}{{\boldsymbol M}}
\newcommand{\ba}{{\boldsymbol a}}
\newcommand{\bx}{{\boldsymbol x}}
\newcommand{\bU}{{\boldsymbol U}}
\newcommand{\by}{{\boldsymbol y}}
\newcommand{\bR}{{\boldsymbol R}}
\newcommand{\bB}{{\boldsymbol B}}
\newcommand{\bY}{{\boldsymbol Y}}
\newcommand{\bP}{{\boldsymbol P}}
\newcommand{\bI}{{\boldsymbol I}}
\newcommand{\vertiii}[1]{{\left\vert\kern-0.25ex\left\vert\kern-0.25ex\left\vert #1 
    \right\vert\kern-0.25ex\right\vert\kern-0.25ex\right\vert}}
\newcommand{\vertiiibig}[1]{{\big\vert\kern-0.25ex\big\vert\kern-0.25ex\big\vert #1 
    \big\vert\kern-0.25ex\big\vert\kern-0.25ex\big\vert}}
\newcommand{\vertiiiplain}[1]{{\vert\kern-0.25ex\vert\kern-0.25ex\vert #1 
    \vert\kern-0.25ex\vert\kern-0.25ex\vert}}
\definecolor{yxc}{RGB}{255,0,0}
\definecolor{yjc}{RGB}{125,0,0}
\definecolor{cm}{RGB}{0,0,200}
\definecolor{yly}{RGB}{0,150,0}
\definecolor{dacong}{RGB}{88,178,220}
\newtheorem{assumption}{\textbf{Assumption}}[chapter]
\newtheorem{claim}{\textbf{Claim}}[chapter]
\newcommand{\mynewline}{}
\title{Spectral Methods for Data Science: A Statistical Perspective}
\author[1]{Chen,Yuxin}
\author[2]{Chi,Yuejie}
\author[3]{Fan,Jianqing}
\author[4]{Ma,Cong}
\affil[1]{Princeton University; yuxin.chen@princeton.edu}
\affil[2]{Carnegie Mellon University; yuejiechi@cmu.edu}
\affil[3]{Princeton University; jqfan@princeton.edu}
\affil[4]{University of Chicago; congm@uchicago.edu}
\begin{document}

\makeabstracttitle
\renewcommand{\mynewline}{}

\begin{abstract}

Spectral methods have emerged as a simple yet surprisingly effective approach for extracting information from massive, noisy and incomplete data. In a nutshell, spectral methods refer to a collection of algorithms built upon the eigenvalues (resp.~singular values) and eigenvectors (resp.~singular vectors) of some properly designed  matrices constructed from data. A diverse array of applications have been found in machine learning, imaging science, financial and econometric modeling, and signal processing, including recommendation systems, community detection, ranking, structured matrix recovery, tensor data estimation, joint shape matching, blind deconvolution, financial investments, risk managements,  treatment evaluations, causal inference, amongst others. Due to their simplicity and effectiveness, spectral methods are not only used  as a stand-alone estimator, but also frequently employed to facilitate other more sophisticated algorithms to enhance performance.

While the studies of spectral methods can be traced back to classical matrix perturbation theory and the method of moments, the past decade has witnessed tremendous theoretical advances in demystifying  their efficacy through the lens of statistical modeling, with the aid of concentration inequalities and non-asymptotic random matrix theory.
This monograph aims to present a systematic, comprehensive, yet accessible introduction to spectral methods from a modern statistical perspective,
highlighting their algorithmic implications in diverse large-scale applications.
In particular, our exposition gravitates around several central questions that span various applications: how to characterize the sample efficiency of spectral methods in reaching a target level of statistical accuracy, and how to assess their stability in the face of random noise, missing data, and adversarial corruptions?
In addition to conventional $\ell_2$ perturbation analysis, we present a systematic $\ell_{\infty}$ and $\ell_{2,\infty}$ perturbation theory for eigenspace and singular subspaces, which has only recently become available owing to a powerful ``leave-one-out'' analysis framework.

\end{abstract}

\chapter{Introduction}
\label{cha:introduction}

In contemporary science and engineering applications, the volume of available data  is growing at an enormous rate. The emergence of this trend is  due to recent technological advances that have enabled the collection, transmission, storage and processing of data from every corner of our life, in the forms of images, videos, network traffic, email logs, electronic health records, genomic and genetic measurements, high-frequency financial trades, grocery transactions, online exchanges, and so on.
In the meantime, modern applications often require reasonings about an unprecedented scale of features or parameters of interest.
This gives rise to the pressing demand of developing {\em low-complexity} algorithms that can effectively distill actionable insights from large-scale and high-dimensional data. In addition to the curse of dimensionality, the challenge is further compounded when the data in hand are  noisy, messy, and contain missing features.

Towards addressing the above challenges, \emph{spectral methods} have emerged as a simple yet surprisingly effective approach to information extraction from massive and noisy data. In a nutshell, spectral methods refer to a collection of algorithms built upon the eigenvectors (resp.~singular vectors) and eigenvalues (resp.~singular values) of some properly designed  matrices generated from data. Remarkably, spectral methods lend themselves to a diverse array of applications in practice, including community detection in networks \citep{newman2006finding,abbe2017community,rohe2011spectral,mcsherry2001spectral}, angular synchronization in cryo-EM \citep{singer2011three,singer2011angular}, joint image alignment \citep{chen2016projected}, clustering \citep{von2007tutorial,ng2002spectral}, ranking \citep{negahban2016rank,chen2015spectral,chen2017spectral}, dimensionality reduction \citep{belkin2003laplacian}, low-rank matrix estimation \citep{achlioptas2007fast, keshavan2010matrix}, tensor estimation \citep{montanari2016spectral,cai2019tensor},  covariance and precision matrix estimation \citep{fan2013large,fan2020robust}, shape reconstruction \citep{li2004fast}, econometric and financial modeling \citep{fan2021recent}, among others.
Motivated by their applicability to numerous real-world problems, this monograph seeks to offer a unified and comprehensive treatment towards establishing the theoretical underpinnings for spectral methods, particularly through a statistical lens.

\section{Motivating applications}
\label{sec:motivating_examples}

At the heart of spectral methods is the idea that the eigenvectors or singular vectors of certain data matrices reveal crucial information pertaining to the targets of interest.
We single out a few examples that epitomize this idea.

\paragraph{Clustering.}

\begin{figure}
	\begin{center}
		\begin{tabular}{ccc}
			\includegraphics[width=0.25\textwidth]{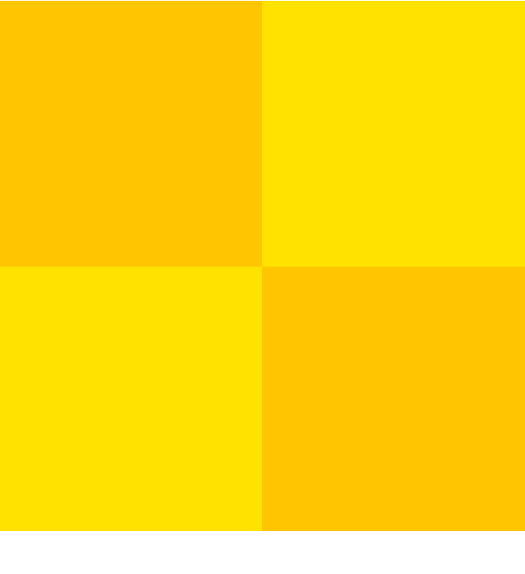} & \includegraphics[width=0.25\textwidth]{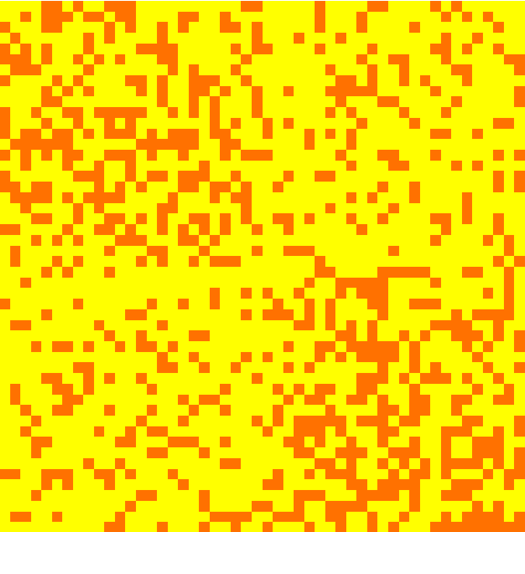} &   \includegraphics[width=0.4\textwidth]{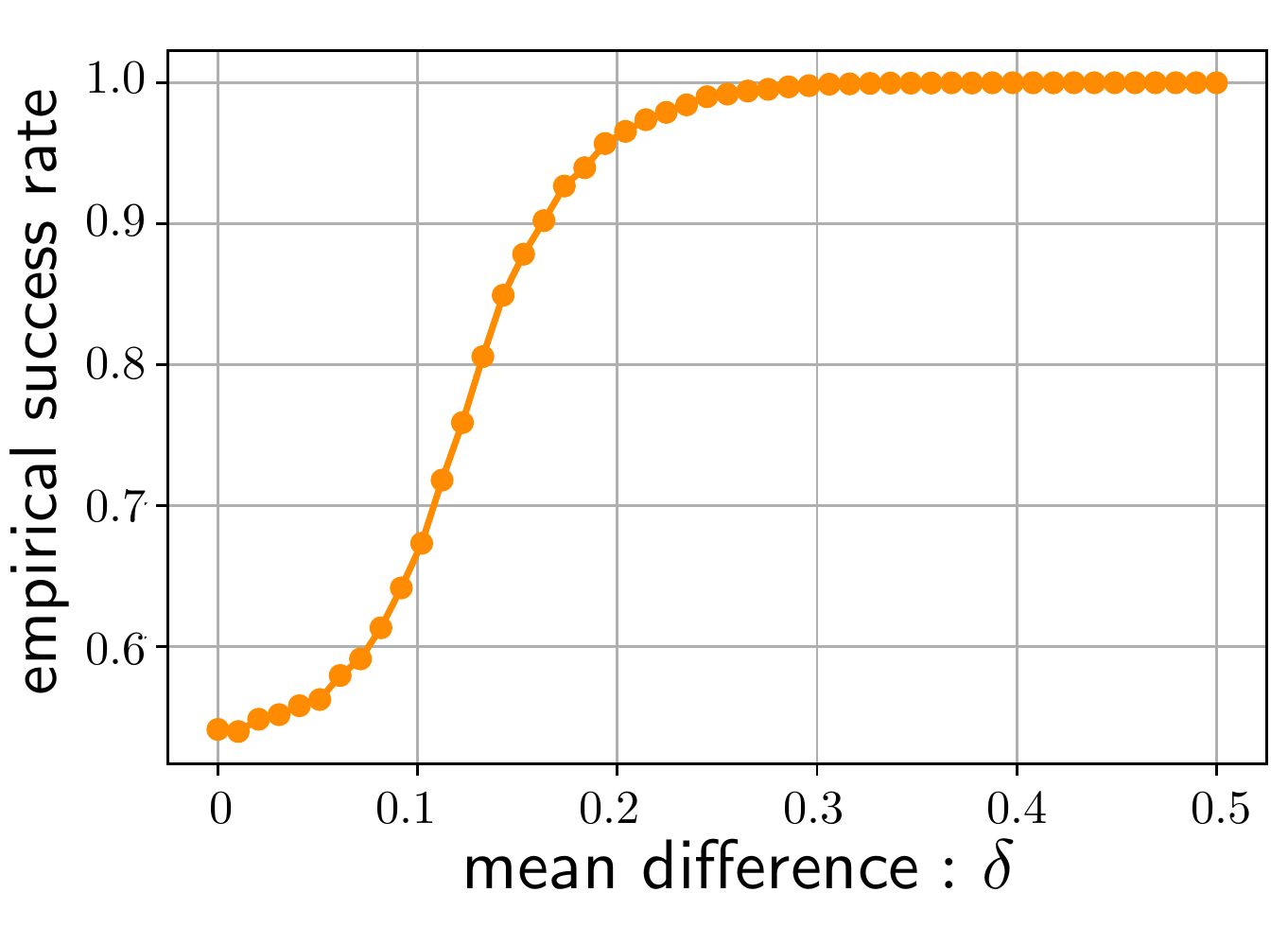} \tabularnewline
			(a) & (b) & (c) \tabularnewline
		\end{tabular}
	\end{center}
	\caption{Spectral methods for clustering. We plot in (a)  an ideal structure of the adjacency matrix $\bm{A}$ in \eqref{eq:defn-A-motivation-CD}, and in (b) a noisy version which is a realization from the stochastic block model, where $A_{i,j}$ is an independent Bernoulli variable with mean $\frac{1+\delta}{2}$ (resp.~$\frac{1-\delta}{2}$) if $i$ and $j$ belong to the same group (resp.~different groups).  We report in (c) the empirical success rate of the spectral method over 200 Monte Carlo trials in correctly clustering $n=100$ individuals as the mean difference $\delta$ varies.
		\label{fig:clustering-motivation}}
	
\end{figure}
Clustering corresponds to the grouping of individuals based on their mutual similarities,
which constitutes a fundamental task in unsupervised learning and spans numerous applications such as image segmentation (e.g., grouping pixels based on the objects they represent in an image) \citep{browet2011community}
and community detection (e.g., grouping users on the basis of their social circles) \citep{fortunato2016community}.
For concreteness, let us take a look at a simple scenario with $n$ individuals such that: (1) there exists a latent partitioning that divides all individuals into two groups, with the first $n/2$ individuals belonging to the first group and the rest  belonging to the second group (without loss of generality);
and (2) we observe pairwise similarity measurements generated based on their group memberships.
Ideally, if we know whether any two individuals belong to the same group or not, then we can form an adjacency matrix $\bm{A}=[A_{i,j}]_{1\leq i,j\leq n}$ such that
\begin{equation}
	A_{i,j}=\begin{cases}
1,\qquad & \text{if }(i,j)\text{ belongs to the same group},\\
0, & \text{else}.
\end{cases}
	\label{eq:defn-A-motivation-CD}
\end{equation}
As a key observation, this matrix $\bm{A}$, as illustrated in Figure \ref{fig:clustering-motivation}(a), turns out to be a rank-2 matrix
\[
	\bm{A} =
	\left[\begin{array}{cc}
		\bm{1}_{n/2}\bm{1}^{\top}_{n/2}\\
 & \bm{1}_{n/2}\bm{1}^{\top}_{n/2}
\end{array}\right]=\frac{1}{2}\bm{1}_{n}\bm{1}^{\top}_{n} + \frac{1}{2}\left[\begin{array}{c}
\bm{1}_{n/2}\\
-\bm{1}_{n/2}
\end{array}\right]\left[\begin{array}{cc}
\bm{1}^{\top}_{n/2} & -\bm{1}^{\top}_{n/2}\end{array}\right],
\]
where $\bm{1}_n$ represents an $n$-dimensional all-one vector. 
After subtracting $\frac{1}{2}\bm{1}_{n}\bm{1}^{\top}_{n}$ from $\bm{A}$, the eigenvector $\bm{u}_2\coloneqq [\begin{array}{cc}
\bm{1}^{\top}_{n/2} & -\bm{1}^{\top}_{n/2}\end{array}]$ of the remaining component uncovers the underlying group structure; namely, all positive entries of $\bm{u}_2$ represent one group, with all negative entries of $\bm{u}_2$  reflecting another group.
In reality, however, we typically only get to collect imprecise information about whether two individuals belong to the same group,
thus resulting in a corrupted version of $\bm{A}$ (see Figure~\ref{fig:clustering-motivation}(b)). Fortunately, the eigenvector (the one corresponding to $\bm{u}_2$ above) of the observed data matrix (with proper arrangement) might continue to be informative, as long as the noise level is not overly high.
To illustrate the practical applicability, we plot in Figure~\ref{fig:clustering-motivation}(c) the numerical performance of this approach, which allows for perfect clustering of all individuals  for a wide range of noisy scenarios. Similar ideas continue to fare well on the clustering of real data, where we illustrate in Figure~\ref{fig:dolphin-motivation} that the penultimate eigenvector of a Laplacian matrix (also known as the Fiedler vector) of an undirected social network reveals  two communities of 62 dolphins residing in Doubtful Sound, New Zealand.

\begin{figure}[t]
\begin{center}
\begin{tabular}{cc}
	\includegraphics[width=0.42\textwidth]{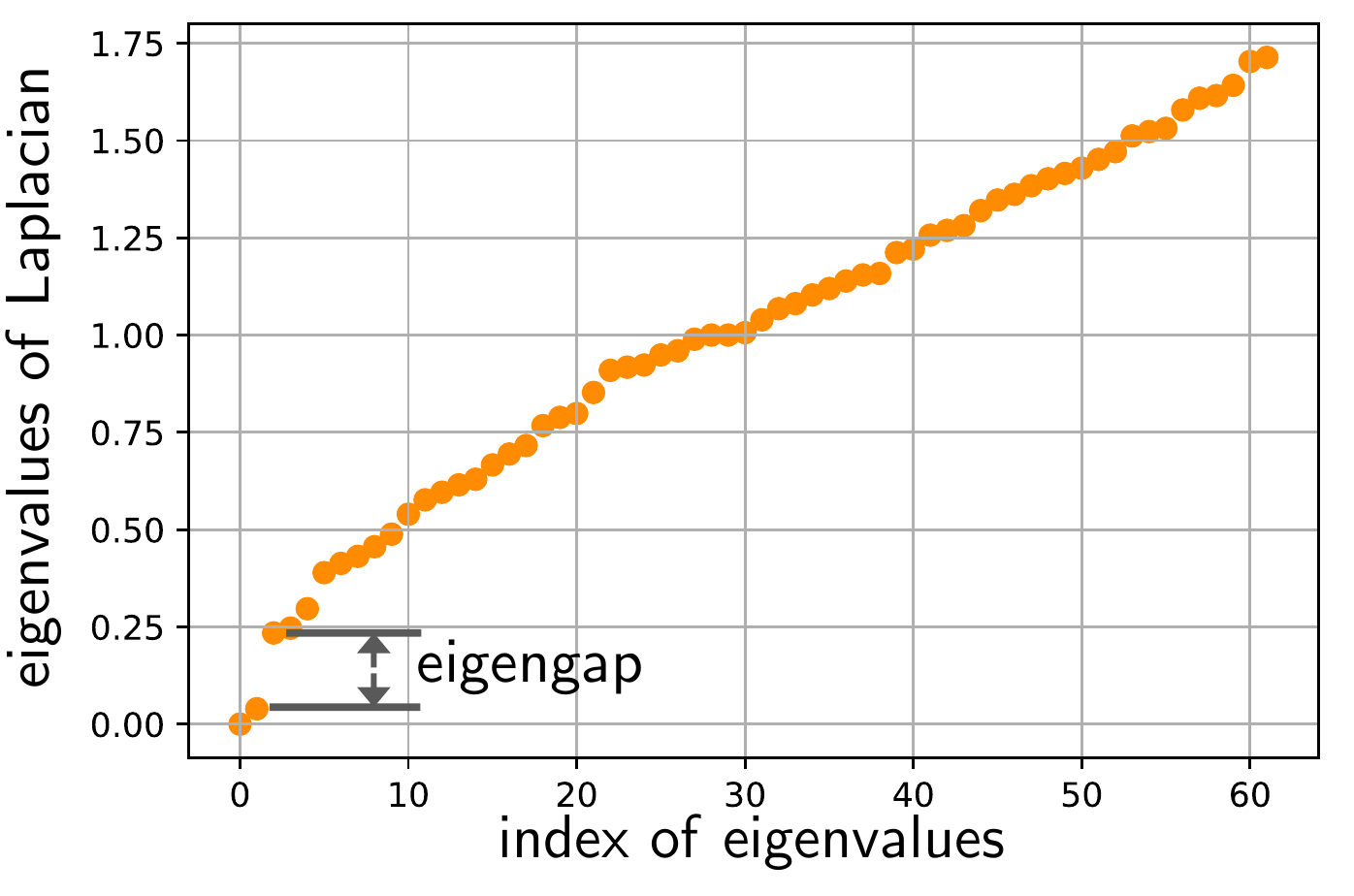}  &   \includegraphics[width=0.47\textwidth,height=1.4in]{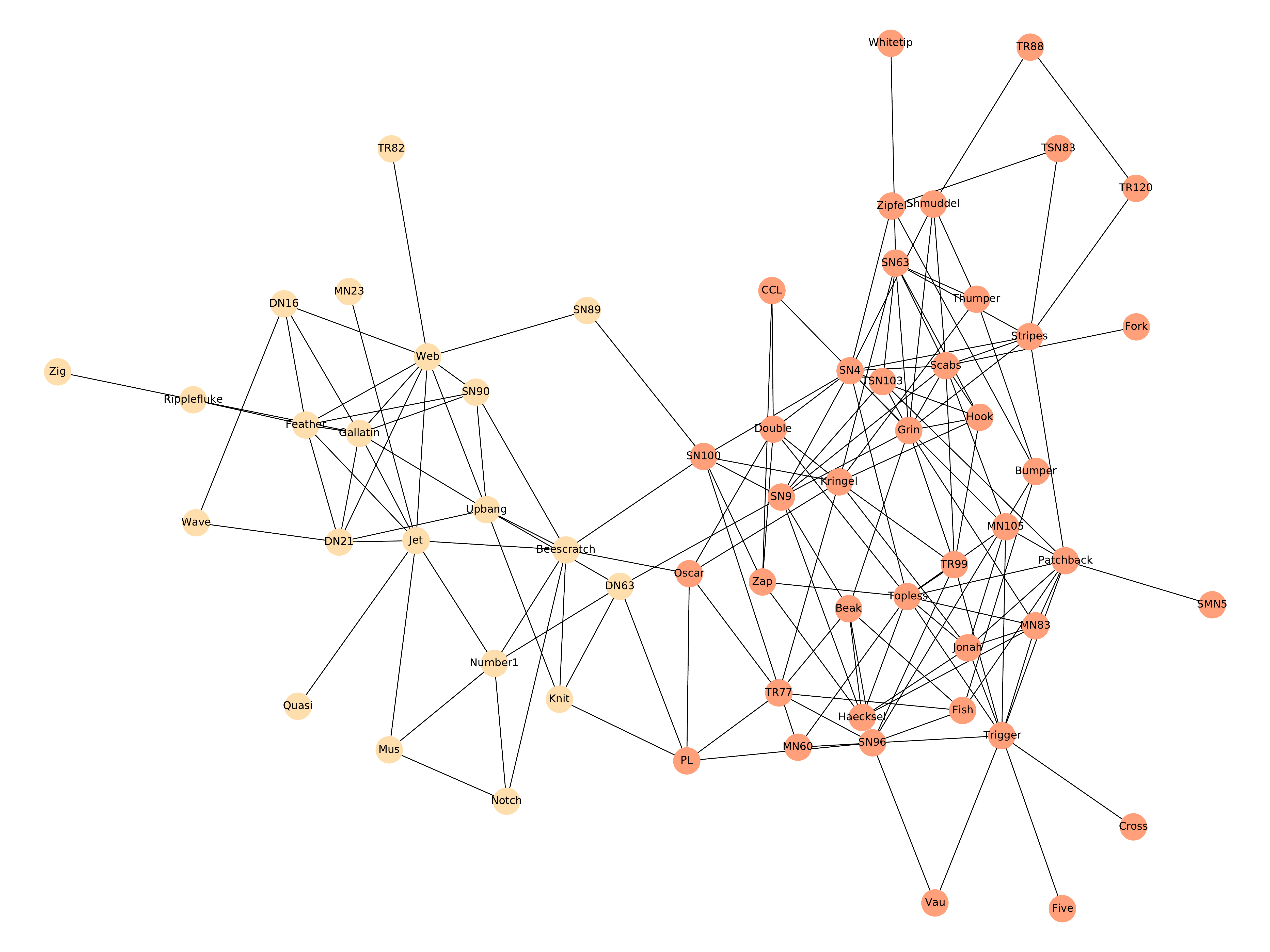} \tabularnewline
	(a) & (b) \tabularnewline
\end{tabular}	
\end{center}	
	\caption{Illustration of spectral clustering for 62 dolphins residing in Doubtful Sound, New Zealand. (a) plots the spectrum of the  Laplacian matrix of an undirected social network of frequent associations, and (b) illustrates the  two communities recovered using the penultimate eigenvector of the Laplacian matrix. Data source: \citet{lusseau2003bottlenose}.}
	\label{fig:dolphin-motivation}

\end{figure}

 \begin{figure}[t]
\begin{center}
\begin{tabular}{ccccc}
\hspace{-0.15in}\includegraphics[width=0.215\textwidth]{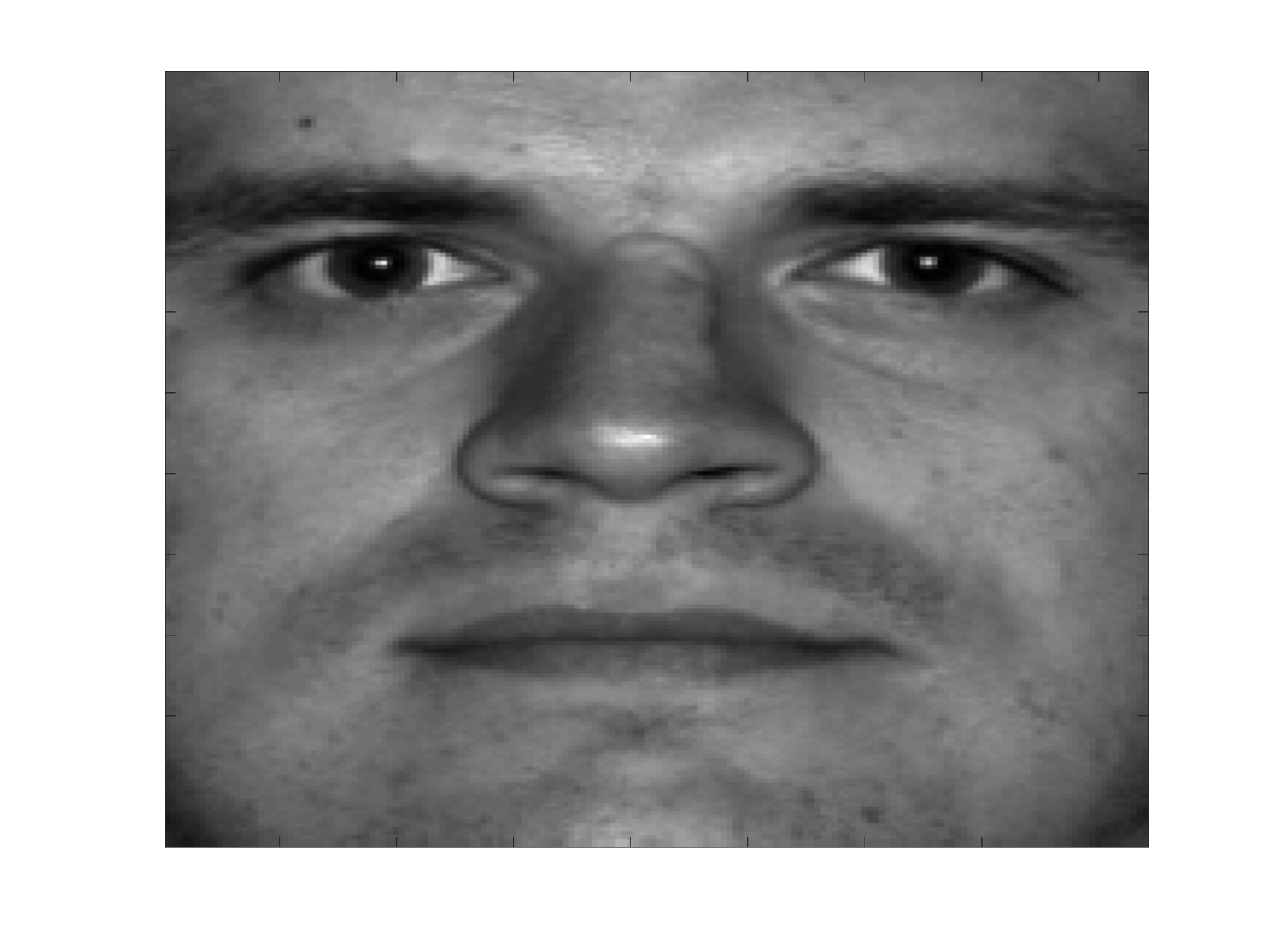} &
\hspace{-0.22in}\includegraphics[width=0.215\textwidth]{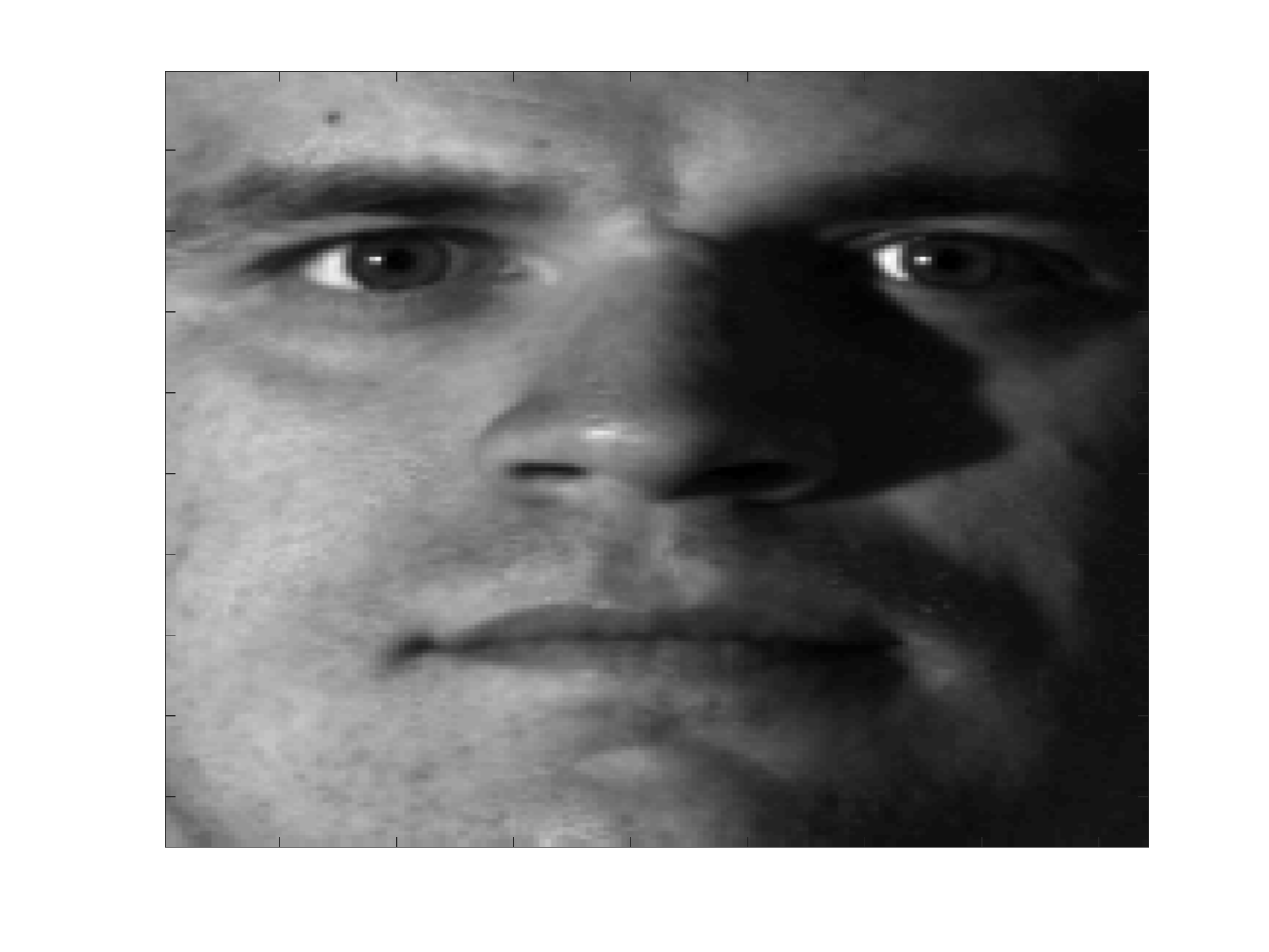} &
\hspace{-0.22in}\includegraphics[width=0.215\textwidth]{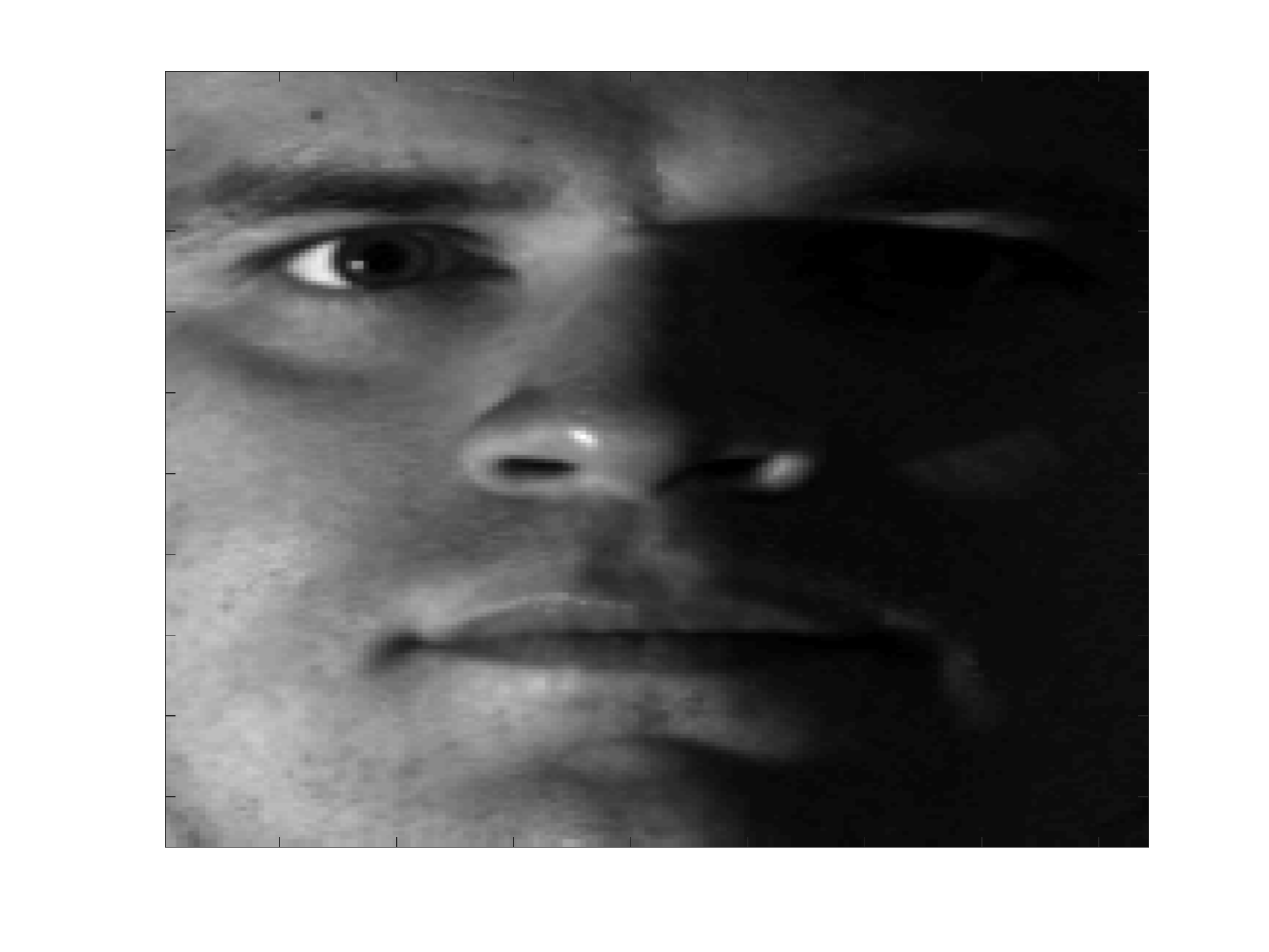} &
\hspace{-0.22in}\includegraphics[width=0.215\textwidth]{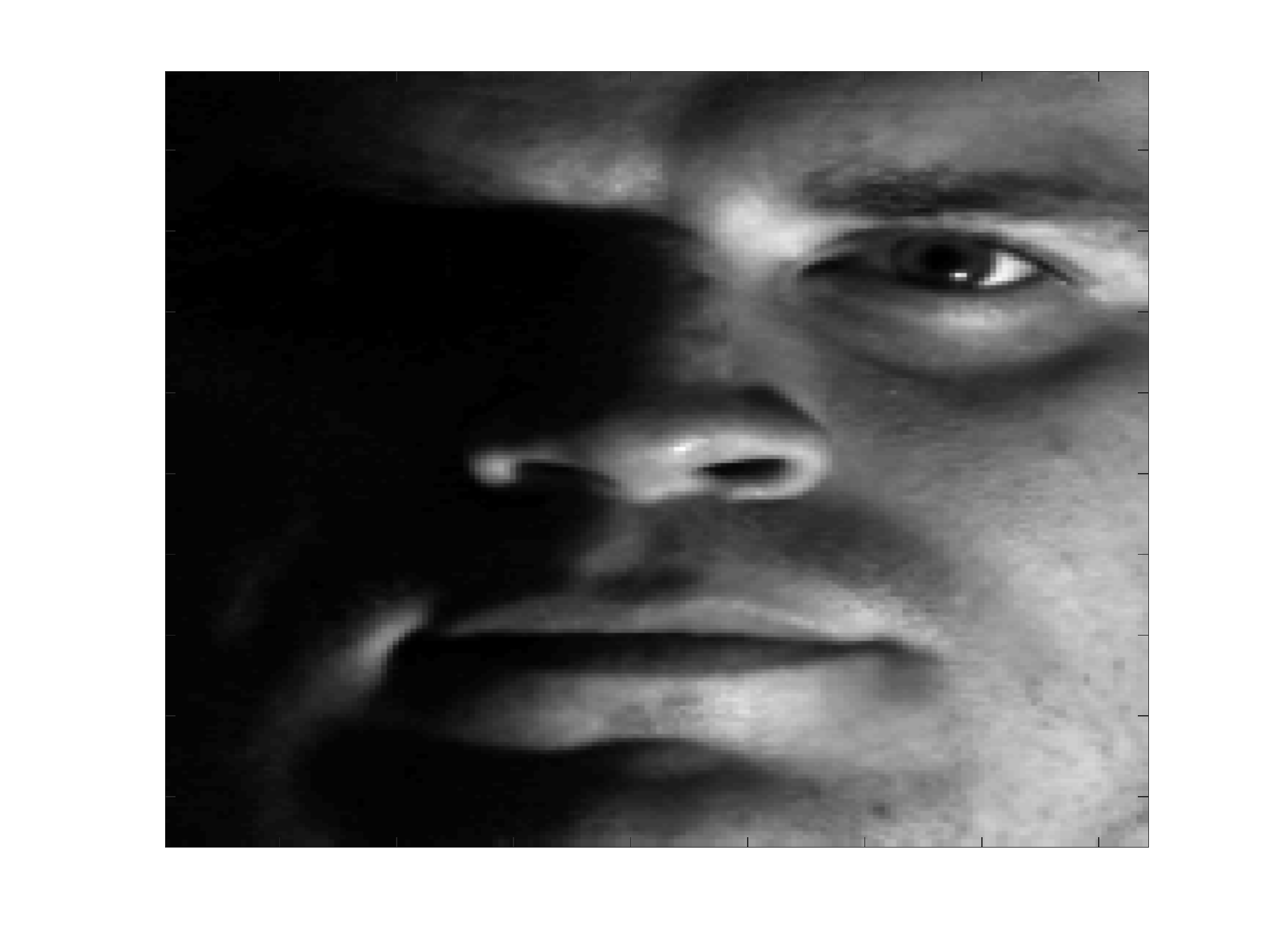} &
\hspace{-0.22in}\includegraphics[width=0.215\textwidth]{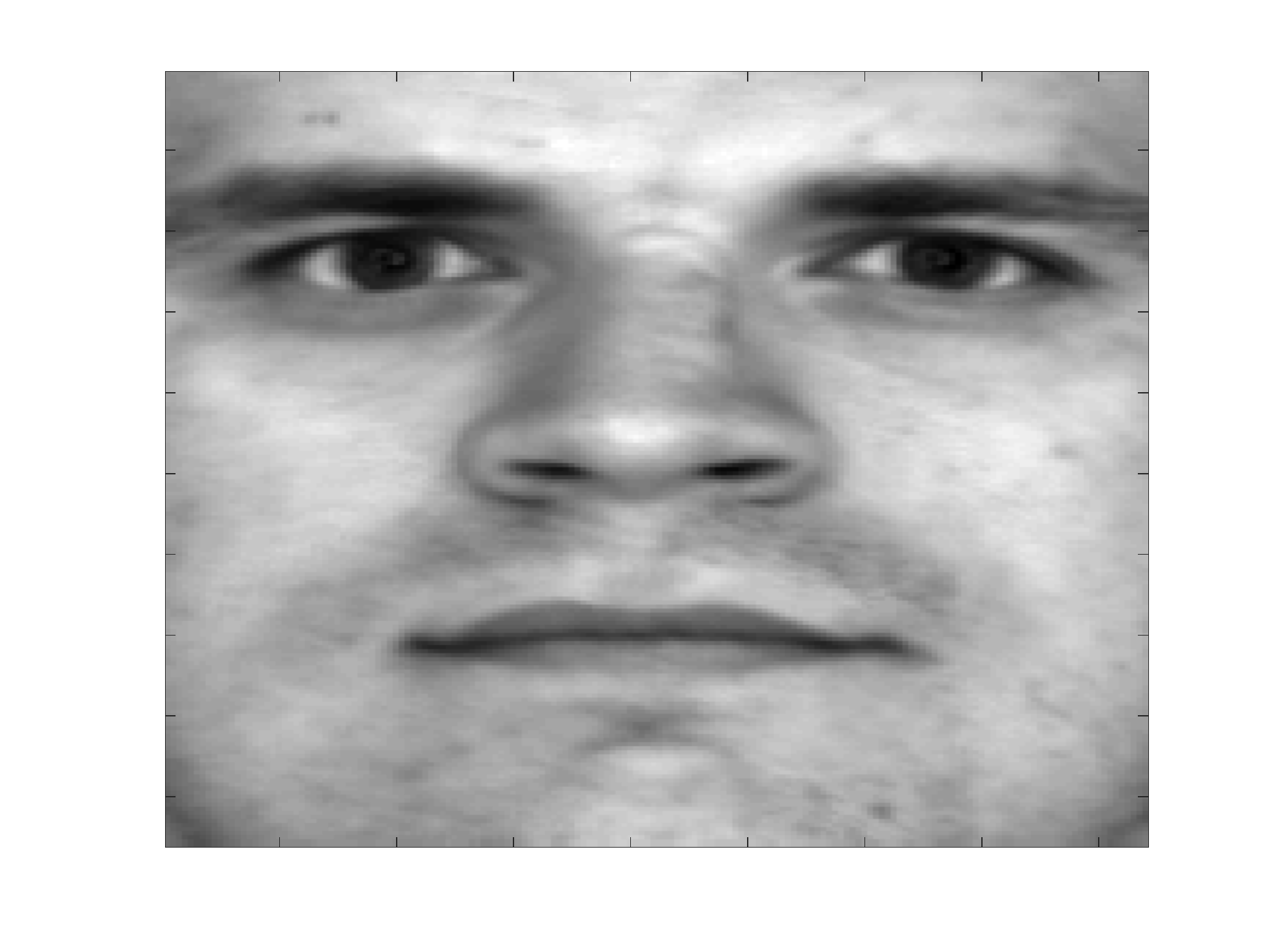}
\end{tabular}
\end{center}
	 \caption{Illustration of the eigenface using the Cropped YaleB dataset \citep{georghiades2001few}. The first four images are sampled from this dataset, representing typical images taken under different illumination conditions with various occlusions. The last one represents the eigenface (i.e., the first principal component) of this dataset.\label{fig:eigenface}}
\end{figure}

\paragraph{Principal component analysis (PCA).}
PCA is arguably one of the most commonly employed tools for data exploration and visualization.
Given a collection of data samples $\bm{x}_1, \cdots, \bm{x}_n\in\mathbb{R}^p$,
PCA seeks to identify a rank-$r$ subspace that explains most of the variability of the data.
This is particularly well-grounded when, say, the sample vectors $\{\bm{x}_i\}_{1\leq i\leq n}$ reside primarily within a common rank-$r$ subspace---denoted by $\bm{U}^{\star}$.
To extract out this principal subspace,  it is instrumental to examine the following sample covariance matrix
\begin{align*}
	\bm{M} = \frac{1}{n} \sum_{i=1}^n \bx_i \bx_i ^\top .
\end{align*}
If all sample vectors approximately lie within $\bm{U}^{\star}$, then one might be able to infer
$\bm{U}^{\star}$ by inspecting the rank-$r$ leading eigenspace of $\bm{M}$ (or its variants), provided that the signal-to-noise ratio exceeds some reasonable level.  This reflects the role of spectral methods in enabling meaningful dimensionality reduction and factor analysis.

In practice, a key benefit of  PCA is its ability to  remove  nuance factors in, and  extract out salient features from,  each data point.  As an illustration, the first four images of Figure \ref{fig:eigenface} are representative ones sampled from a face dataset \citep{georghiades2001few}, which correspond to faces of the same person under different illumination and occlusion conditions. In contrast, the ``eigenface'' \citep{turk1991face} depicted in the last image of Figure \ref{fig:eigenface}  corresponds to the first principal component (i.e., $r=1$), which effectively removes the nuance factors and highlights the feature of the face.

\begin{figure}
\begin{center}
\begin{tabular}{cc}
	\includegraphics[width=0.3\textwidth]{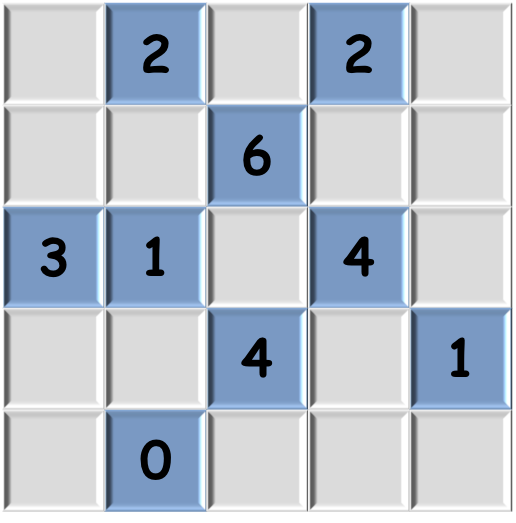} \qquad\qquad &   \includegraphics[width=0.47\textwidth]{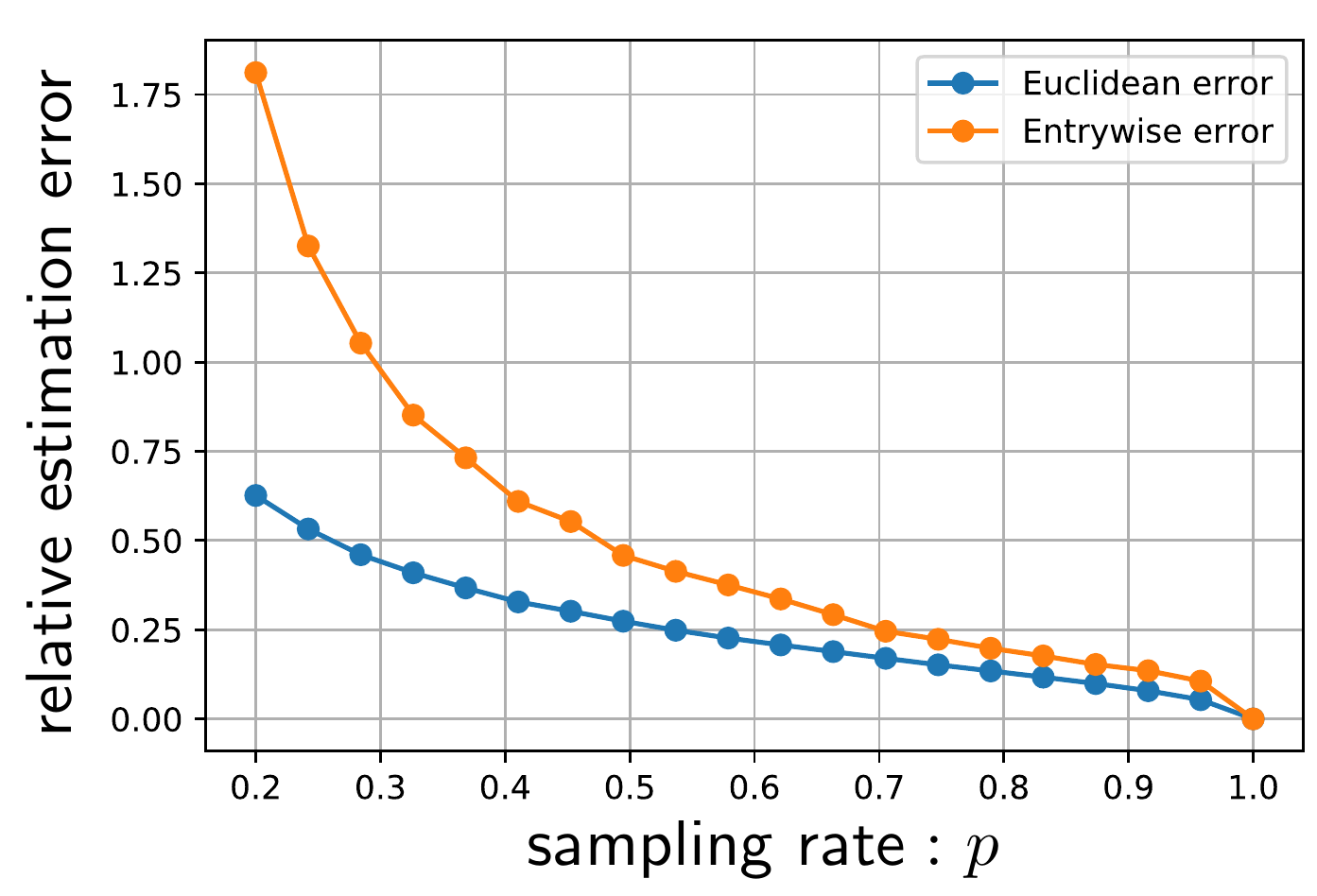} \tabularnewline
	(a) & (b) \tabularnewline
\end{tabular}
\end{center}	
	\caption{Spectral methods for matrix recovery with missing data, where (a) is an illustration of missing data and (b) reports the empirical estimation errors of spectral methods as the sampling rate $p$ varies.  Both the relative Euclidean error $\frac{\|\widehat{\bm{M}}-\bm{M}^{\star}\|_{\mathrm{F}}}{\|\bm{M}^{\star}\|_{\mathrm{F}}}$ and the relative entrywise error $\frac{\|\widehat{\bm{M}}-\bm{M}^{\star}\|_{\infty}}{\|\bm{M}^{\star}\|_{\infty}}$ are plotted (with $\widehat{\bm{M}}$ denoting the matrix estimate and $\|\cdot\|_{\infty}$ the entrywise $\ell_{\infty}$ norm).
	\label{fig:matrix-completion-motivation}}

\end{figure}

\paragraph{Matrix recovery in the face of missing data.}

A proliferation of big-data applications has to deal with matrix estimation in the presence of missing data,  either due to the infeasibility to acquire complete observations of a massive data matrix \citep{davenport2016overview} such as the Netflix problem in recommender systems (as users only watch and rate a small fraction of movies), or because of the incentive to accelerate computation by means of sub-sampling \citep{mahoney2016lecture}.
Imagine that we are asked to estimate a large matrix $\bm{M}^{\star}=[M_{i,j}^{\star}]_{1\leq i,j\leq n}$, even though a dominant fraction of its entries are unseen.  While in general we cannot  predict anything about the missing entries,  reliable estimation might become possible if $\bm{M}^{\star}$ is known {\em a priori} to enjoy a low-rank structure, as is the case in many applications like structure from motion \citep{tomasi1992shape} and sensor network localization \citep{javanmard2013localization}. This low-rank assumption motivates the use of spectral methods.  More specifically, suppose the entries of $\bm{M}^{\star}$ are randomly sampled such that each entry  is observed independently with probability $p \in (0,1]$. An unbiased estimate $\bm{M}=[M_{i,j}]_{1\leq i,j\leq n}$ of $\bm{M}^{\star}$ can be readily obtained via  rescaling and zero filling (also called the inverse probability weighting method):
\begin{equation*}
	M_{i,j}=\begin{cases}
\frac{1}{p}M_{i,j}^{\star},\quad & \text{if the }(i,j)\text{-th}\text{ entry is observed},\\
0, & \text{else}.
\end{cases}
\end{equation*}
To capture the assumed low-rank structure of $\bm{M}^{\star}$, it is natural to resort to the best rank-$r$ approximation of $\bm{M}$ (with $r$  the true rank of $\bm{M}^{\star}$),
computable through the rank-$r$ singular value decomposition of $\bm{M}$.  Given its (trivial) success when $p=1$,  we expect the algorithm  to perform well when  $p$ is close to 1.  The key question, however, is where the algorithm stands if the vast majority of the entries is missing.
While we shall illuminate this in Chapters~\ref{chap:application-L2} and \ref{cha:Linf-theory}, Figure~\ref{fig:matrix-completion-motivation} provides some immediate numerical assessment,
which demonstrates the appealing performance of spectral methods---in terms of both Euclidean and entrywise estimation errors---even when  the missing rate is quite high.

\begin{figure}
\begin{center}
\begin{tabular}{cc}
	\includegraphics[width=0.4\textwidth]{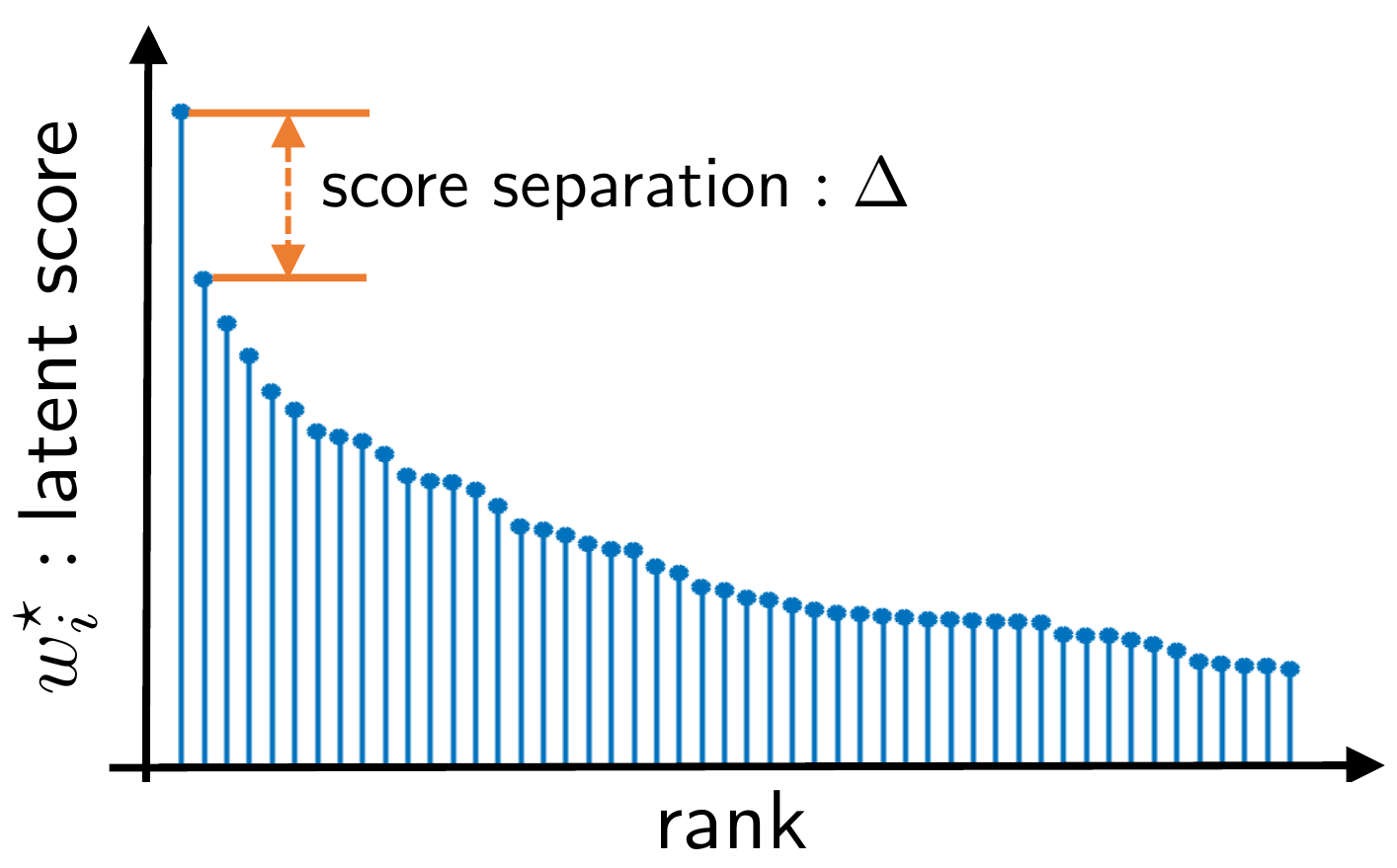} \qquad &
	\includegraphics[width=0.4\textwidth]{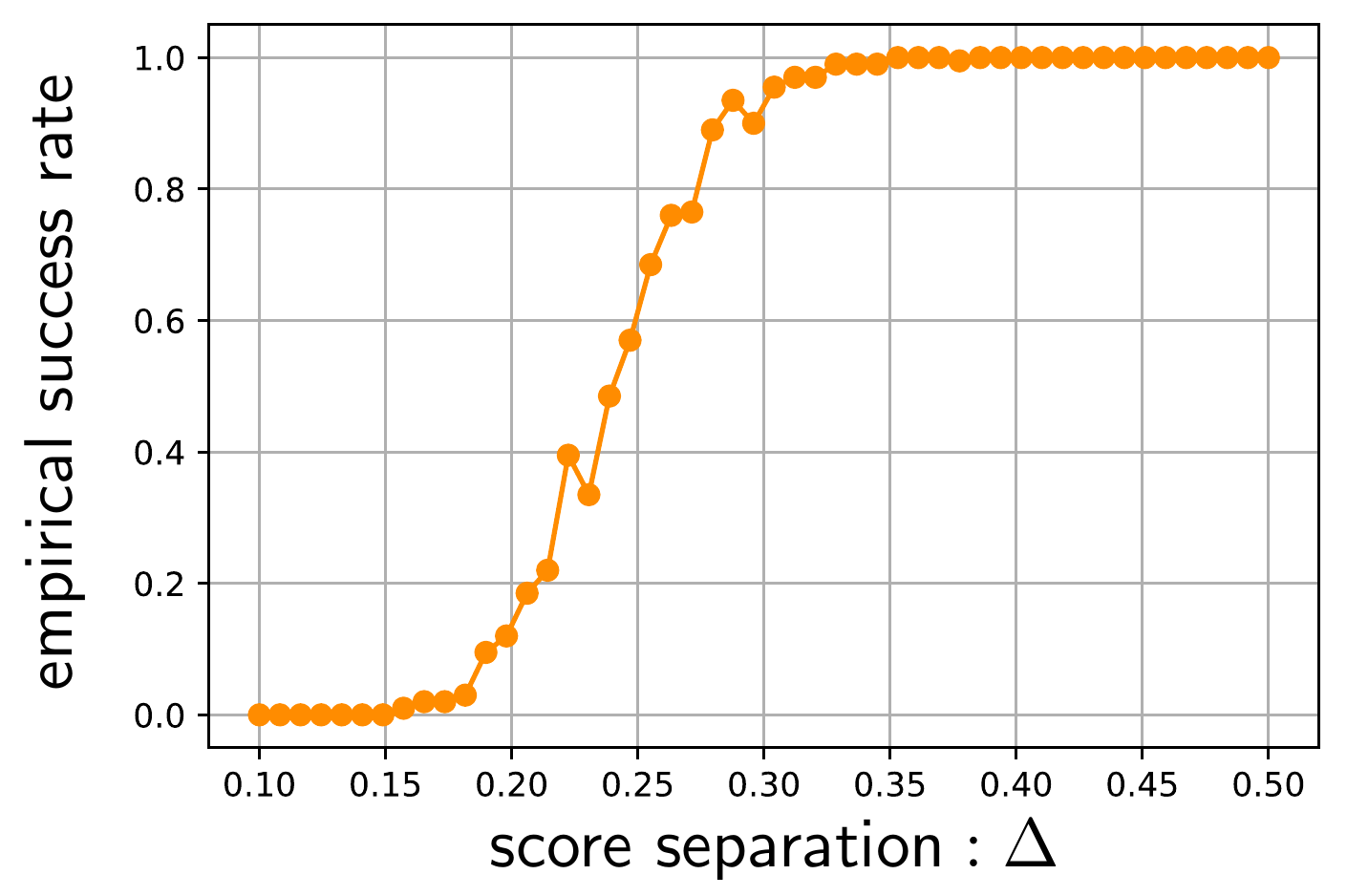} \tabularnewline
	(a) & (b) \tabularnewline
\end{tabular}
\end{center}	
	\caption{Spectral methods for ranking from pairwise comparisons.   (a)  illustrates the latent preference  scores $\{w_i^{\star}\}$ that govern the ranking of items. The empirical success rates in correctly identifying the top-ranked item  are plotted in (b) as $\Delta$ varies, where   $\Delta$ represents the separation between the score of the top item and that of the second-ranked item.
	\label{fig:ranking-motivation}}

\end{figure}

\paragraph{Ranking from pairwise comparisons.}
Another important application of spectral methods arises from the context of ranking,  a task  of central importance in, say, web search and recommendation systems.
In a variety of scenarios, humans find it difficult to simultaneously rank many items, but relatively easier to express pairwise preferences.
This gives rise to the problem of ranking based on pairwise comparisons.
More specifically, imagine we are given a collection of $n$ items, and wish to identify top-ranked items based on pairwise preferences (with uncertainties in comparison outcomes) between observed pairs of items.
A classical statistical model proposed by \citet{bradley1952rank, luce2012individual} postulates the existence of a set of latent positive scores $\{w_i^{\star}\}_{1\leq i\leq n}$---each associated with an item---that determines the ranks of these items.
 The outcome of the comparison between items $i$ and $j$ is generated in a way that
 \begin{align*}
	 \mathbb{P}(i\text{ beats }j) = \frac{w_i^{\star}}{w_i^{\star}+w_j^{\star}}, \qquad 1\leq i,j\leq n.
\end{align*}
As it turns out, the preference scores  are closely related to the stationary distribution of a Markov chain associated with the above probability kernel,
thus forming the basis of spectral ranking algorithms.
To elucidate it in a little more detail, let us construct a probability transition matrix $\bm{P}^{\star}=[P_{i,j}^{\star}]_{1\leq i,j\leq n}$ with
\begin{align*}
	P_{i,j}^{\star}=\begin{cases}
\frac{1}{n}\cdot\frac{w_{j}^{\star}}{w_{i}^{\star}+w_{j}^{\star}}, & \text{if }i\neq j,\\
		1-\sum_{l:l\neq i}P_{i,l}^{\star}, \qquad & \text{if }i=j.
\end{cases}
\end{align*}
Clearly, it forms a probability transition matrix since each element is nonnegative and the entries in each row add up to one.
It is straightforward to verify that the score vector $\bm{w}^{\star}\coloneqq [w_i^{\star}]_{1\leq i\leq n}$ satisfies $\bm{w}^{\star\top} = \bm{w}^{\star\top} \bm{P}^{\star}$, namely $\bm{w}^{\star}$ is a left eigenvector of $\bm{P}^{\star}$ associated with eigenvalue one. A candidate  method then consists of (i) forming an unbiased estimate of $\bm{P}^{\star}$ (which can be easily obtained using pairwise comparison outcomes), (ii) computing its left eigenvector (in fact, the leading left eigenvector), and (iii) reporting the ranking result in accordance with the order of the elements in this eigenvector.
This spectral ranking scheme, which shares similar spirit with the celebrated {\em PageRank} algorithm \citep{page1999pagerank},
exhibits intriguing performance when identifying the top-ranked items, as showcased in the numerical experiments in Figure \ref{fig:ranking-motivation}(b).

\paragraph{A unified theme.}  In all preceding applications, the core ideas underlying the development of spectral methods can be described in a unified fashion:
\begin{itemize}
	\item[1.] Identify a key matrix $\bm{M}^{\star}$---which is typically unobserved---whose eigenvectors or singular vectors disclose the information being sought after;

	\item[2.] Construct a surrogate matrix $\bm{M}$ of $\bm{M}^{\star}$ using the data samples in hand, and compute the corresponding eigenvectors or singular vectors of this surrogate matrix.
\end{itemize}
Viewed in this light,  this monograph aims to identify key factors---e.g., certain spectral structure of $\bm{M}^{\star}$ as well as the size of the approximation error $\bm{M}-\bm{M}^{\star}$---that exert main influences on the efficacy of the resultant spectral methods.

\section{A modern statistical perspective}

The idea of spectral methods can be traced back to early statistical literature on methods of moments (e.g., \citet{pearson1894contributions,hansen1982large}),
where one seeks to extract key parameters of the probability distributions of interest by examining the empirical  moments of data.
While classical matrix perturbation theory lays a sensible foundations for the analysis of spectral methods \citep{stewart1990matrix},
the theoretical understanding can be considerably enhanced through the lens of statistical modeling---a way of thinking that has flourished in the past decade.
To the best of our knowledge, however,
a systematic and comprehensive introduction to the modern statistical foundation of spectral methods, as well as an overview of recent advances, is previously unavailable.

The current monograph aims to fill this gap by developing a coherent and accessible treatment of spectral methods from
a modern statistical perspective. Highlighting  algorithmic implications that inform practice,
our exposition gravitates around the following central questions:
how to characterize the sample efficiency of spectral methods in reaching a prescribed accuracy level,
and how to assess the stability of spectral methods in the face of random noise, missing data, and adversarial corruptions?
We underscore several distinguishing features of our treatment compared to prior studies:
\begin{itemize}
	\item In comparison to the worst-case performance guarantees derived solely based on classical matrix perturbation theory,
our statistical treatment emphasizes the benefit of harnessing the ``typical'' behavior of data models,
which offers key insights into how to harvest performance gains by leveraging intrinsic properties of data generating mechanisms.

	\item In contrast to classical asymptotic theory~\citep{van2000asymptotic},
		we adopt a non-asymptotic (or finite-sample) analysis framework that draws on tools from recent developments of concentration inequalities \citep{tropp2015introduction} and high-dimensional statistics \citep{wainwright2019high}. This framework accommodates the scenario where both the sample size and the number of features are enormous,
and unveils a clearer and more complete picture about the interplay and trade-off between salient model parameters.

\end{itemize}

Another unique feature of this monograph is a principled introduction of {\em fine-grained entrywise analysis} (e.g., a theory studying $\ell_{\infty}$ eigenvector perturbation),
which reflects cutting-edge research activities in this area.
This is particularly important when, for example, demonstrating the feasibility of exact clustering or perfect ranking in the aforementioned applications.
In truth, an effective entrywise analysis framework cannot be  readily obtained from classical matrix analysis alone,
and has only recently become available owing to the emergence of modern statistical toolboxes. In particular, we shall present
a powerful framework, called {\em leave-one-out analysis}, that proves effective and versatile for delivering fine-grained performance guarantees for spectral methods in a variety of problems.

\section{Organization}

We now present a high-level overview of the structure of this monograph.
\begin{itemize}
\item
Chapter~\ref{cha:matrix-perturbation} reviews the fundamentals of classical matrix perturbation theory for spectral analysis, focusing on $\ell_2$-type distances measured by the spectral norm and the Frobenius norm. This chapter covers the celebrated Davis-Kahan $\sin\bm{\Theta}$ theorem for eigenspace perturbation, the Wedin theorem for singular subspace perturbation, and an extension to probability transition matrices, laying the algebraic foundations for the remaining chapters.

\item
Chapter~\ref{chap:application-L2} explores the utility of $\ell_2$ matrix perturbation theory when paired with statistical tools,
presenting a unified recipe for statistical analysis empowered by non-asymptotic matrix tail bounds.
We develop spectral methods for a variety of statistical data science applications,
and derive nearly tight theoretical guarantees (up to logarithmic factors) based on this unified recipe.

\item Chapter~\ref{cha:Linf-theory} develops fine-grained perturbation theory for spectral analysis in terms of  $\ell_{\infty}$ and $\ell_{2,\infty}$ metrics, based on a leave-one-out analysis framework rooted in probability theory. Its effectiveness is demonstrated through concrete applications including community recovery and matrix completion.
  This analysis framework also enables a non-asymptotic distributional theory for spectral methods, which paves the way for uncertainty quantification in applications like noisy matrix completion.

\item  Chapter~\ref{chapter:conclusion} concludes this monograph by identifying a few directions that are worthy of future investigation.

\end{itemize}

\noindent While this monograph pursues a coherent and accessible treatment that might appeal to a broad audience,
it does not necessarily deliver the sharpest possible results for the applications discussed herein in terms of the logarithmic terms and/or pre-constants.
The bibliographic notes at the end of each chapter contain information about the state-of-the-art theory for each application as a pointer to further readings.

\section{What is not here and complementary readings}

The topics presented in this monograph do not cover the tensor decomposition methods studied in another recent strand of work \citep{anandkumar2014tensor}.  While such tensor-based methods are also sometimes referred to as spectral methods, their primary focus is to invoke tensor decomposition to learn latent variables,  based on higher-order moments estimated from data samples.
We elect not to discuss this class of methods but instead refer the interested reader to the recently published monograph by \citet{MAL-057}.
Another monograph by \citet{kannan2009spectral} provides an in-depth computational and algorithmic treatment of spectral methods from the perspective of theoretical computer science.
The applications and results covered therein (e.g., fast matrix multiplication)  complement the ones presented in the current monograph.
In addition, spectral methods have been frequently employed to initialize nonconvex optimization algorithms. We will not elaborate on the nonconvex optimization aspect here but instead recommend the reader to the recent overview article by \citet{chi2019nonconvex}.  Finally, spectral methods are widely adopted to estimate high-dimensional covariance and precision matrices, and extract latent factors for econometric and statistical modeling. This topic alone has a huge literature, and we refer the interested reader to \citet{fan2020statistical} for in-depth discussions.

\section{Notation}

Before moving forward, let us introduce some notation that will be used throughout this monograph.

First of all, we reserve boldfaced symbols  for vectors, matrices and tensors.
For any matrix $\bm{A}$, let $\sigma_{j}(\bm{A})$ (resp.~$\lambda_{j}(\bm{A})$) represent its $j$-th largest singular value (resp.~eigenvalue).
In particular, $\sigma_{\max}(\bm{A})$ (resp.~$\lambda_{\max}(\bm{A})$) stands for the largest singular value (resp.~eigenvalue) of $\bm{A}$,
while $\sigma_{\min}(\bm{A})$ (resp.~$\lambda_{\min}(\bm{A})$) indicates the smallest singular value (resp.~eigenvalue) of $\bm{A}$.
We use $\bm{A}^{\top}$ to denote the transpose of $\bm{A}$, and
let $\bm{A}_{i,\cdot}$ and $\bm{A}_{\cdot,i}$ indicate the $i$-th row and the $i$-th column of $\bm{A}$, respectively.
We follow standard conventions by letting
$\bm{I}_{n}$ be the $n\times n$ identity matrix, $\bm{1}_{n}$ the $n$-dimensional all-one vector, and $\bm{0}_{n}$ the $n$-dimensional all-zero vector;
we shall often suppress the subscript as long as it is clear from the context.
The $i$-th standard basis vector is denoted by $\bm{e}_i$ throughout.
The notation $\mathcal{O}^{n\times r}$ ($r\leq n$) represents the set of all $n\times r$ orthonormal matrices (whose columns are orthonormal). Moreover, we refer to $[n]$ as the set $\{1,\cdots, n\}$.

Next, we  turn to vector and matrix norms. For any vector $\bm{v}$, we denote by $\|\bm{v}\|_2$, $\|\bm{v}\|_1$ and $\|\bm{v}\|_{\infty}$  its $\ell_2$ norm, $\ell_1$ norm and $\ell_{\infty}$ norm, respectively.
 For any matrix $\bm{A}=[A_{i,j}]_{1\leq i\leq m,1\leq j\leq n}$,  we let $\|\bm{A}\|$, $\|\bm{A}\|_{*}$, $\|\bm{A}\|_{\mathrm{F}}$ and $\|\bm{A}\|_{\infty}$  represent respectively its spectral norm (i.e., the largest singular value of $\bm{A}$), its nuclear norm (i.e., the sum of singular values of $\bm{A}$), its Frobenius norm (i.e., $\|\bm{A}\|_{\mathrm{F}} \coloneqq \sqrt{\sum_{i,j}A_{i,j}^2}$), and its entrywise $\ell_{\infty}$ norm (i.e., $\|\bm{A}\|_{\infty} \coloneqq \max_{i,j}|A_{i,j}|$).
We also refer to $\|\bm{A}\|_{2,\infty}$  as the $\ell_{2,\infty}$ norm  of $\bm{A}$,
defined as $\|\bm{A}\|_{2,\infty} \coloneqq \max_i \|\bm{A}_{i,\cdot}\|_2$.
Similarly, we define the $\ell_{\infty,2}$ norm  of $\bm{A}$ as $\|\bm{A}\|_{\infty,2} \coloneqq \|\bm{A}^{\top}\|_{2,\infty}$.
In addition, for any matrices $\bm{A}=[A_{i,j}]_{1\leq i\leq m,1\leq j\leq n}$ and $\bm{B}=[B_{i,j}]_{1\leq i\leq m,1\leq j\leq n}$,
the inner product of $\bm{A}$ and $\bm{B}$ is defined as and denoted by $\langle \bm{A}, \bm{B} \rangle = \sum_{1\leq i\leq m,1\leq j\leq n} A_{i,j} B_{i,j} = \mathsf{Tr}(\bm{A}^{\top}\bm{B})$.

When it comes to diagonal matrices,
 we employ $\mathsf{diag}([\theta_1, \theta_2, \cdots, \theta_r])$ to abbreviate the diagonal matrix with diagonal elements $\theta_1, \cdots, \theta_r$.
For any diagonal matrix $\bm{\Theta} = \mathsf{diag}([\theta_1, \theta_2, \cdots, \theta_r])$,
we adopt the shorthand notation $\sin \bm{\Theta} \coloneqq \mathsf{diag}([\sin\theta_1, \sin\theta_2, \cdots, \sin\theta_r])$; the notation $\sin^2\bm{\Theta}$, $\cos\bm{\Theta}$, and $\cos^2\bm{\Theta}$ is defined analogously.

Finally, this monograph makes heavy use of the following standard notation:
(1) $f(n)=O\left(g(n)\right)$ or
$f(n)\lesssim g(n)$ means that there exists a universal constant $c>0$ such
that $\left|f(n)\right|\leq c|g(n)|$ holds for all sufficiently large $n$; (2) $f(n)\gtrsim g(n)$ means that there exists a universal constant $c>0$ such
that $|f(n)|\geq c\left|g(n)\right|$ holds for all sufficiently large $n$;
 (3) $f(n)\asymp g(n)$ means that there exist universal constants $c_{1},c_{2}>0$
such that $c_{1}|g(n)|\leq|f(n)|\leq c_{2}|g(n)|$ holds for all sufficiently large $n$;
and (4) $f(n)=o(g(n))$ indicates that $f(n)/g(n)\rightarrow 0$ as $n\rightarrow \infty$.
Additionally, we sometimes use $f(n)\gg g(n)$ (resp.~$f(n)\ll g(n)$) to indicate that there exists some sufficiently large (resp.~small) universal constant $c >0$ such that $|f(n)|\geq c \left|g(n)\right|$ (resp.~$|f(n)|\leq c \left|g(n)\right|$).

\newpage

\chapter[Classical spectral analysis: $\ell_2$ perturbation theory]{Classical spectral analysis: $\ell_2$ perturbation theory}
\label{cha:matrix-perturbation}

Characterizing the performance of spectral methods requires understanding the perturbation of eigenspaces and/or that of singular subspaces.
Classical matrix perturbation theory (e.g., \citet{stewart1990matrix}) offers elementary toolkits that prove effective for this purpose, which we review in this chapter.

Setting the stage, consider a real-valued matrix $\bm{M}^{\star}$ and its perturbed version as follows
\begin{align}
	\bm{M} = \bm{M}^{\star} + \bm{E},
	\label{eq:perturbed-M}
\end{align}
where $\bm{E} = 	\bm{M} -  \bm{M}^{\star}$ denotes a real-valued perturbation or error matrix. In statistical applications, $\bm{M}$ can be an observed or estimated data matrix such as the sample covariance matrix, and $\bm{M}^{\star}$ is the target matrix such as the population covariance matrix.  This chapter primarily aims to address the following questions by means of elementary linear algebra:
\begin{enumerate}

	\item For a symmetric matrix $\bm{M}^{\star}$, how does the eigenspace change in response to a symmetric perturbation matrix $ \bm{E}$?
	\item For a general matrix $\bm{M}^{\star}$, how is the singular subspace affected as a result of the perturbation matrix $ \bm{E}$?
\end{enumerate}
We shall also explore eigenvector perturbation for a special class of asymmetric matrices: probability transition matrices.

\section{Preliminaries: Basics of matrix analysis} \label{subsec:Matrix-analysis}

We begin this chapter by gathering a few elementary materials in matrix analysis that prove useful for our theoretical development. The readers familiar with matrix analysis can proceed directly to Section~\ref{sec:preliminary-distance-angles}.

\paragraph{Unitarily invariant norms.}

Among all matrix norms, the family of unitarily invariant norms defined below is of central interest, which subsumes  as special cases the spectral norm $\|\cdot\|$ and the Frobenius norm $\|\cdot\|_{\mathrm{F}}$.
\begin{definition}
A matrix norm $\vertiii{\cdot}$ on $\mathbb{R}^{m\times n}$ is said to be unitarily invariant if
\[
	\vertiii{\bm{A}}=\vertiiibig{\bm{U}^{\top}\bm{A}\bm{V}}
\]
holds for any matrix $\bm{A}\in\mathbb{R}^{m\times n}$ and any two square orthonormal
matrices $\bm{U}\in\mathcal{O}^{m\times m}$ and $\bm{V}\in\mathcal{O}^{n\times n}$.
\end{definition}
This class of matrix norms enjoys several useful properties, as summarized in the following lemma. The proof can be found in \citet[Theorem 3.9]{stewart1990matrix}.
\begin{lemma}
\label{prop:unitary_norm_relation}
For any unitarily invariant norm $\vertiii{\cdot}$, one has 
\begin{align*}
\vertiii{\bm{A}\bm{B}} & \leq\vertiii{\bm{A}} \cdot  \left\Vert \bm{B}\right\Vert, &  & \vertiii{\bm{A}\bm{B}}\leq\vertiii{\bm{B}} \cdot \left\Vert \bm{A}\right\Vert,\\
\vertiii{\bm{A}\bm{B}} & \geq\vertiii{\bm{A}} \, \sigma_{\min}\left(\bm{B}\right), &  & \vertiii{\bm{A}\bm{B}}\geq\vertiii{\bm{B}} \, \sigma_{\min}\left(\bm{A}\right).
\end{align*}
%
\end{lemma}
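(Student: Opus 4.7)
The plan is to deduce all four inequalities from a single contraction principle: for any diagonal matrix $\bm{D}$ with $\|\bm{D}\|\leq 1$, one has $\vertiii{\bm{C}\bm{D}}\leq \vertiii{\bm{C}}$ (and, symmetrically, $\vertiii{\bm{D}\bm{C}}\leq \vertiii{\bm{C}}$). Granted this principle, the upper bounds emerge by diagonalizing one factor via its SVD and peeling off the orthonormal factors through unitary invariance, while the lower bounds follow by applying the upper bounds to a one-sided inverse of the inner factor.

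For the contraction principle, write $\bm{D}=\diag(d_1,\ldots,d_n)$ with $|d_i|\leq 1$ and introduce the weights
\[
p_{\bm{\epsilon}} \;=\; \prod_{i=1}^{n}\frac{1+\epsilon_i d_i}{2},\qquad \bm{\epsilon}\in\{-1,+1\}^n.
\]
A direct calculation verifies that these are nonnegative, sum to $1$, and satisfy $\sum_{\bm{\epsilon}} p_{\bm{\epsilon}}\,\epsilon_i = d_i$ for every $i$; hence $\bm{D} = \sum_{\bm{\epsilon}} p_{\bm{\epsilon}}\,\bm{D}_{\bm{\epsilon}}$, where $\bm{D}_{\bm{\epsilon}} := \diag(\epsilon_1,\ldots,\epsilon_n)$ is orthogonal. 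Unitary invariance gives $\vertiii{\bm{C}\bm{D}_{\bm{\epsilon}}}=\vertiii{\bm{C}}$, and the triangle inequality then yields $\vertiii{\bm{C}\bm{D}}\leq \sum_{\bm{\epsilon}} p_{\bm{\epsilon}}\,\vertiii{\bm{C}}=\vertiii{\bm{C}}$. The left-multiplication version is identical.

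Turning to the upper bound $\vertiii{\bm{A}\bm{B}}\leq \vertiii{\bm{A}}\,\|\bm{B}\|$, write the SVD $\bm{B}=\bm{U}_B\bm{\Sigma}_B\bm{V}_B^{\top}$. Unitary invariance applied to $\bm{V}_B$ gives $\vertiii{\bm{A}\bm{B}}=\vertiii{\bm{A}\bm{U}_B\bm{\Sigma}_B}$; expressing $\bm{\Sigma}_B=\|\bm{B}\|\bm{D}$ with $\|\bm{D}\|\leq 1$, the contraction principle followed by unitary invariance applied to $\bm{U}_B$ delivers
\[
\vertiii{\bm{A}\bm{U}_B\bm{\Sigma}_B} \;\leq\; \|\bm{B}\|\,\vertiii{\bm{A}\bm{U}_B} \;=\; \|\bm{B}\|\,\vertiii{\bm{A}}.
\]
The companion upper bound $\vertiii{\bm{A}\bm{B}}\leq \vertiii{\bm{B}}\,\|\bm{A}\|$ is obtained identically, this time diagonalizing $\bm{A}$ and performing the absorption on the left.

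Finally, the lower bounds are reduced to the upper bounds via one-sided inverses. For $\vertiii{\bm{A}\bm{B}}\geq \vertiii{\bm{A}}\,\sigma_{\min}(\bm{B})$, the statement is trivial when $\sigma_{\min}(\bm{B})=0$, so suppose otherwise; the SVD of $\bm{B}$ then produces a right inverse $\bm{B}^{+}$ satisfying $\bm{B}\bm{B}^{+}=\bm{I}$ and $\|\bm{B}^{+}\|=1/\sigma_{\min}(\bm{B})$. Since $\bm{A}=(\bm{A}\bm{B})\bm{B}^{+}$, the first upper bound immediately yields
\[
\vertiii{\bm{A}} \;\leq\; \vertiii{\bm{A}\bm{B}}\,\|\bm{B}^{+}\| \;=\; \vertiii{\bm{A}\bm{B}}\big/\sigma_{\min}(\bm{B}),
\]
and the other lower bound follows from a symmetric argument based on a left inverse of $\bm{A}$. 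The only delicate step in the entire proof is verifying the convex-combination identity underlying the contraction principle; all remaining manipulations are routine unitary absorptions.
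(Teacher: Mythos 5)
Your proof is correct. The paper itself does not prove this lemma---it cites Stewart and Sun, Theorem~3.9---so there is no in-text argument to compare against; but your route is worth noting because it sidesteps the machinery usually invoked. The textbook derivation typically rests on von~Neumann's characterization of unitarily invariant norms as symmetric gauge functions of the singular values, combined with the singular-value inequality $\sigma_i(\bm{A}\bm{B})\leq\sigma_i(\bm{A})\|\bm{B}\|$ and the monotonicity of the gauge function. Your argument is more elementary: the key insight is that any real diagonal contraction $\bm{D}$ (with $\|\bm{D}\|\leq 1$) lies in the convex hull of the diagonal sign matrices $\bm{D}_{\bm{\epsilon}}=\diag(\epsilon_1,\ldots,\epsilon_n)$, and you supply the explicit product-form barycentric coordinates $p_{\bm{\epsilon}}=\prod_i\tfrac{1+\epsilon_i d_i}{2}$, which are nonnegative precisely because $|d_i|\leq 1$, sum to one, and reproduce $d_i$ by a telescoping factorization. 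Once the contraction principle is established, the triangle inequality plus unitary invariance yield the first two bounds after absorbing the orthogonal SVD factors, and the pseudo-inverse reduction cleanly delivers the lower bounds from the upper ones. The only detail you pass over silently is the case $\|\bm{B}\|=0$ (resp.\ $\|\bm{A}\|=0$) in the upper bounds, which is trivial; and, like the lemma statement itself, you leave implicit that the norm must be applicable to all matrices appearing, which for a single fixed-size unitarily invariant norm forces the factor being peeled off to be square---but this is a convention issue with the lemma as stated, not a flaw in your argument.
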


\paragraph{Perturbation bounds for eigenvalues and singular values.}

Next, we review classical perturbation
bounds for eigenvalues of symmetric matrices and for singular values of general matrices. 

\begin{lemma}[Weyl's inequality for eigenvalues]
\label{lemma:weyl}
Let $\bm{A},\bm{E}\in\mathbb{R}^{n\times n}$ be two real symmetric matrices.
	For every $1\leq i\leq n$, the $i$-th largest eigenvalues of $\bm{A}$ and $\bm{A}+\bm{E}$ obey
\begin{equation}
	\left|\lambda_{i}\left(\bm{A}\right)-\lambda_{i}\left(\bm{A} +\bm{E}\right)\right|\leq\left\Vert \bm{E}\right\Vert .
	\label{eq:weyl}
\end{equation}
\end{lemma}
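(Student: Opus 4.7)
The plan is to prove Weyl's inequality via the Courant--Fischer (min--max) variational characterization of eigenvalues, which is the most transparent route and works uniformly for every index $i$. Recall that for any real symmetric matrix $\bm{B} \in \mathbb{R}^{n\times n}$, the $i$-th largest eigenvalue admits the representation
\[
\lambda_i(\bm{B}) \;=\; \max_{\substack{S \subseteq \mathbb{R}^n \\ \dim(S)=i}} \;\min_{\substack{\bm{x}\in S \\ \|\bm{x}\|_2 = 1}} \; \bm{x}^{\top} \bm{B}\, \bm{x}.
\]
With this in hand, the task reduces to a one-line bound on the perturbation of the Rayleigh quotient.

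First, I would establish the one-sided inequality $\lambda_i(\bm{A}+\bm{E}) \geq \lambda_i(\bm{A}) - \|\bm{E}\|$. Let $S^{\star}$ be an $i$-dimensional subspace that attains the outer maximum in the min--max characterization of $\lambda_i(\bm{A})$. For any unit-norm $\bm{x} \in S^{\star}$, the symmetry of $\bm{E}$ together with the definition of the spectral norm gives $|\bm{x}^{\top}\bm{E}\bm{x}| \leq \|\bm{E}\|$, so that $\bm{x}^{\top}(\bm{A}+\bm{E})\bm{x} \geq \bm{x}^{\top}\bm{A}\bm{x} - \|\bm{E}\|$. Taking the minimum over unit $\bm{x} \in S^{\star}$ on both sides and then using $S^{\star}$ as a feasible subspace in the min--max formula for $\lambda_i(\bm{A}+\bm{E})$ yields
\[
\lambda_i(\bm{A}+\bm{E}) \;\geq\; \min_{\substack{\bm{x}\in S^{\star}\\ \|\bm{x}\|_2=1}} \bm{x}^{\top}(\bm{A}+\bm{E})\bm{x} \;\geq\; \lambda_i(\bm{A}) - \|\bm{E}\|.
\]

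Second, the reverse inequality follows by a perfectly symmetric argument: apply the bound just obtained to the matrix $\bm{A}+\bm{E}$ perturbed by $-\bm{E}$ (whose spectral norm equals $\|\bm{E}\|$), giving $\lambda_i(\bm{A}) \geq \lambda_i(\bm{A}+\bm{E}) - \|\bm{E}\|$. Combining the two bounds delivers \eqref{eq:weyl}.

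I do not anticipate any genuine obstacle; the only nontrivial ingredient is the Courant--Fischer variational formula, which is a standard fact from matrix analysis and can be cited without proof. The core step to verify carefully is the pointwise Rayleigh-quotient estimate $|\bm{x}^{\top}\bm{E}\bm{x}| \leq \|\bm{E}\|$ for unit $\bm{x}$, which is immediate from Cauchy--Schwarz and the definition $\|\bm{E}\| = \sup_{\|\bm{x}\|_2 = 1}\|\bm{E}\bm{x}\|_2$. Everything else is a direct manipulation of the min--max formula.
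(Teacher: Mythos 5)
Your proof is correct, and since the paper itself only cites Tao (2012, Eq.\ (1.63)) rather than supplying an argument, you have in effect filled in the standard proof. The Courant--Fischer min--max route you take is exactly the canonical one, and both the Rayleigh-quotient estimate $|\bm{x}^{\top}\bm{E}\bm{x}|\leq\|\bm{E}\|$ and the symmetric application to $-\bm{E}$ are sound.
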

\begin{proof}See Equation (1.63) in \citet{Tao2012RMT}. \end{proof}

\begin{lemma}[Weyl's inequality for singular values]
\label{lemma:weyls-singular-value}
Let $\bm{A},\bm{E}\in\mathbb{R}^{m\times n}$ be two general matrices.
Then for every $1\leq i\leq \min\{m,n\}$, the $i$-th largest singular values of $\bm{A}$ and $\bm{A}+\bm{E}$ obey 
\[
	\left|\sigma_{i}\left(\bm{A}+\bm{E}\right)-\sigma_{i}\left(\bm{A}\right)\right|\leq\left\Vert \bm{E}\right\Vert .
\]
\end{lemma}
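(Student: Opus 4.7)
The plan is to reduce the singular value bound to the eigenvalue bound of Lemma~\ref{lemma:weyl} via the Jordan--Wielandt (a.k.a.\ Hermitian dilation) construction. Specifically, for any matrix $\bm{B}\in\mathbb{R}^{m\times n}$ I would form the $(m+n)\times(m+n)$ symmetric matrix
\[
\widetilde{\bm{B}} \;=\; \begin{bmatrix} \bm{0} & \bm{B} \\ \bm{B}^{\top} & \bm{0} \end{bmatrix},
\]
and record two standard facts: (i) the eigenvalues of $\widetilde{\bm{B}}$ consist of $\pm\sigma_i(\bm{B})$ for $1\leq i\leq \min\{m,n\}$ together with $|m-n|$ additional zero eigenvalues; and (ii) $\|\widetilde{\bm{B}}\| = \sigma_{\max}(\bm{B}) = \|\bm{B}\|$.

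Granting (i), the $i$-th largest eigenvalue of $\widetilde{\bm{A}}$ (in decreasing order) is exactly $\sigma_i(\bm{A})$, and analogously $\lambda_i(\widetilde{\bm{A}+\bm{E}}) = \sigma_i(\bm{A}+\bm{E})$. Since the dilation map $\bm{B}\mapsto\widetilde{\bm{B}}$ is linear, $\widetilde{\bm{A}+\bm{E}} = \widetilde{\bm{A}} + \widetilde{\bm{E}}$. Applying Lemma~\ref{lemma:weyl} to the symmetric pair $(\widetilde{\bm{A}}, \widetilde{\bm{E}})$ gives
\[
\big|\sigma_i(\bm{A}+\bm{E}) - \sigma_i(\bm{A})\big| \;=\; \big|\lambda_i(\widetilde{\bm{A}}+\widetilde{\bm{E}}) - \lambda_i(\widetilde{\bm{A}})\big| \;\leq\; \|\widetilde{\bm{E}}\|,
\]
and invoking (ii) on $\bm{E}$ to replace $\|\widetilde{\bm{E}}\|$ with $\|\bm{E}\|$ completes the proof.

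The main (and only non-routine) obstacle is verifying fact (i). I would do this by feeding the SVD $\bm{B} = \bm{U}\bm{\Sigma}\bm{V}^{\top}$ into $\widetilde{\bm{B}}$ and checking that the vectors $\tfrac{1}{\sqrt{2}}\bigl[\bm{u}_i^{\top},\pm\bm{v}_i^{\top}\bigr]^{\top}$ are orthonormal eigenvectors of $\widetilde{\bm{B}}$ with eigenvalues $\pm\sigma_i(\bm{B})$, and that any remaining vectors from the orthogonal complement of $\operatorname{span}(\bm{U})$ or $\operatorname{span}(\bm{V})$ (present when $m\neq n$) yield zero eigenvalues. Fact (ii) then follows from (i) because the spectral norm of a symmetric matrix equals its largest absolute eigenvalue.

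As an alternative route that bypasses the dilation entirely, one can invoke the Courant--Fischer--Weyl min-max characterization
\[
\sigma_i(\bm{B}) \;=\; \max_{\substack{\mathcal{U}\subseteq\mathbb{R}^n \\ \dim(\mathcal{U})=i}}\;\min_{\substack{\bm{x}\in\mathcal{U} \\ \|\bm{x}\|_2=1}} \|\bm{B}\bm{x}\|_2,
\]
combined with the elementary estimate $\bigl|\|(\bm{A}+\bm{E})\bm{x}\|_2 - \|\bm{A}\bm{x}\|_2\bigr| \leq \|\bm{E}\bm{x}\|_2 \leq \|\bm{E}\|$, and then take the max-min on both sides to obtain the stated inequality. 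This variant trades the dilation bookkeeping for the min-max principle, but is otherwise equally direct.
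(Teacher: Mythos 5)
The paper does not actually supply a proof of this lemma---it simply cites Exercise 1.3.22 of Tao (2012)---so there is no in-text argument to compare against. Your proof is correct and complete. The Hermitian dilation route is sound: the eigenvalue computation with the vectors $\tfrac{1}{\sqrt{2}}[\bm{u}_i^{\top},\pm\bm{v}_i^{\top}]^{\top}$ is exactly right, the $|m-n|$ null eigenvectors come from the orthogonal complement of the top singular vectors on the larger side, linearity of $\bm{B}\mapsto\widetilde{\bm{B}}$ gives $\widetilde{\bm{A}+\bm{E}}=\widetilde{\bm{A}}+\widetilde{\bm{E}}$, and Lemma~\ref{lemma:weyl} closes the argument once you identify $\lambda_i(\widetilde{\bm{B}})=\sigma_i(\bm{B})$ for $1\le i\le\min\{m,n\}$ and $\|\widetilde{\bm{E}}\|=\|\bm{E}\|$. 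This is also the approach the paper itself endorses elsewhere---see the remark following Theorem~\ref{thm:tighter-spectral-normal-ramon} and the proof of Theorem~\ref{thm:2-inf-asymm}, both of which use the same symmetric dilation---so your argument is not only correct but stylistically consistent with the monograph. Your alternative via Courant--Fischer is equally valid and has the small advantage of being self-contained (no reliance on Lemma~\ref{lemma:weyl}), at the cost of needing the min-max characterization of singular values rather than the purely algebraic SVD bookkeeping.
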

\begin{proof}See Exercise 1.3.22 in \citet{Tao2012RMT}. \end{proof}

An immediate implication of Lemma~\ref{lemma:weyl} (resp.~Lemma~\ref{lemma:weyls-singular-value}) is that the eigenvalues of a real symmetric matrix (resp.~the singular values of a general matrix) 
 are stable vis-\`a-vis small perturbations.

\section{Preliminaries: Distance and angles between subspaces}
\label{sec:preliminary-distance-angles}

In order to develop perturbation theory for eigenspaces and singular subspaces, we first need to delineate a metric that quantifies the proximity of two subspaces in a meaningful way.

\subsection{Setup and notation}
\label{sec:setting-distance-angle}

Consider two $r$-dimensional subspaces  $\mathcal{U}^{\star}$ and ${\mathcal{U}}$ in $\mathbb{R}^{n}$, where $1\le r\le n$.
One can represent these two subspaces by two matrices
 $\bm{U}^{\star} \in \mathbb{R}^{n\times r}$
and ${\bm{U}} \in \mathbb{R}^{n\times r}$, whose columns form an orthonormal
basis of $\mathcal{U}^{\star}$ and ${\mathcal{U}}$, respectively.
Here and throughout, we shall use $\mathcal{U}$ and its matrix representation $\bm{U}$ interchangeably whenever it is clear from the context.

For the sake of convenience, we further introduce two $n\times (n-r)$ matrices $\bm{U}^{\star}_{\perp}$ and $\bm{U}_{\perp}$, such that $[\bm{U}^{\star},\bm{U}_{\perp}^{\star}]$ and $[\bm{U},\bm{U}_{\perp}]$ are both $n\times n$ orthonormal matrices.
In other words, $\bm{U}_{\perp}^{\star}$ and $\bm{U}_{\perp}$ represent the orthogonal complement of $\bm{U}^{\star}$ and $\bm{U}$, respectively.

\subsection{Distance metrics and principal angles}
\label{sec:introduction-distance-principal-angles}

\paragraph{Global rotational ambiguity.} To measure the distance between the two subspaces $\mathcal{U}$ and ${\mathcal{U}}^{\star}$,
a naive idea is to employ the ``metric'' $\vertiiiplain{\bm{U}-{\bm{U}}^{\star}}$,
where $\vertiii{\cdot}$ is a certain norm of interest (e.g., the spectral norm or the Frobenius norm).
An immediate drawback arises, however, since this ``metric'' does not
take into account the global rotational ambiguity---namely, for any rotation matrix $\bm{R}\in\mathcal{O}^{r\times r}$,
the columns of the matrix $\bm{U}\bm{R}$ also form a valid orthonormal basis of $\mathcal{U}$.
This means that even when the two subspaces $\mathcal{U}$ and ${\mathcal{U}}^{\star}$ coincide, one might still have $\vertiiiplain{\bm{U}-{\bm{U}^{\star}}} \neq 0$, depending on how we rotate these matrices.

\paragraph{Valid choices of distance and angles.}
The takeaway of the above discussion is that any meaningful metric employed to measure the proximity of two subspaces should account for the rotational ambiguity properly.
In what follows, we single out a few widely used metrics that meet such a requirement.

\begin{enumerate}

\item \emph{Distance with optimal rotation.}  Given the global rotational ambiguity, it is natural to first adjust the rotation matrix suitably before computing the distance.  One choice is to measure the distance upon optimal rotation, namely,
\begin{align}
	\mathsf{dist}_{\vertiii{\cdot}}\big(\bm{U},{\bm{U}}^{\star}\big) \coloneqq
	\min _{\bm{R}\in \mathcal{O}^{r\times r}} \vertiiibig{ \bm{U} \bm{R} - \bm{U}^{\star}  } ,
	\label{defn:dist-rotation}
\end{align}
where $\vertiii{\cdot}$ is a certain norm to be chosen (e.g., the spectral norm or the Frobenius norm).

\item \emph{Distance between projection matrices.}  As an established fact, the projection matrix onto a subspace $\mathcal{U}$---given by $\bm{U}\bm{U}^{\top}$---is unique and unaffected by how $\bm{U}$ is rotated (since $\bm{U}\bm{U}^{\top}=\bm{U}\bm{R}\bm{R}^{\top}\bm{U}^{\top}$ for any rotation matrix $\bm{R}\in \mathcal{O}^{r\times r}$).
The rotational invariance of the projection matrix motivates us to define the distance between  $\mathcal{U}$
and ${\mathcal{U}}^{\star}$ as follows
\begin{equation}
	\mathsf{dist}_{\mathsf{p},\vertiii{\cdot}}\big(\bm{U},{\bm{U}}^{\star}\big) \coloneqq \vertiiibig{\bm{U} \bm{U}^{\top}- {\bm{U}}^{\star} {\bm{U}}^{\star\top}},
	\label{eq:algebraic-dist}
\end{equation}
where, as usual, $\vertiii{\cdot}$ is a certain matrix norm of interest, and the subscript $\mathsf{p}$ stands for projection.

\item \emph{Geometric construction via principal angles}. Let $\sigma_{1}\geq \sigma_{2}\geq \cdots\geq \sigma_{r} \geq 0$ be the singular values of $\bm{U}^{\top}{\bm{U}}^{\star}$,
arranged in descending order. Given that $\| \bm{U}^{\top}{\bm{U}}^{\star} \| \leq \| \bm{U} \| \, \| {\bm{U}}^{\star} \|=1 $,
all the singular values
$\{\sigma_{i}\}_{i=1}^r$ fall within the interval $[0,1]$.
Therefore,
one can define the principal angles (or canonical angles) between the two subspaces of interest as
\begin{align}
	\theta_{i} \coloneqq \arccos \left(\sigma_{i}\right)\qquad\text{for all }1\leq i\leq r,
	\label{defn:principal-angles}
\end{align}
which clearly satisfy
\begin{align}
	0\leq\theta_{1}\leq \cdots\leq\theta_{r}\leq\pi/2.
	\label{eq:principal-angle-range}
\end{align}
To see why this definition makes sense, consider the simplest
		example where $r=1$.  In this case, the principal angle $\theta_{1}$ coincides with the conventionally defined angle between two unit vectors $\bm{U}$ and ${\bm{U}}^{\star}$. Armed with these angles, one might measure the distance between the subspaces
$\mathcal{U}$ and ${\mathcal{U}}^{\star}$ through the following metric
\begin{equation}
	\mathsf{dist}_{\mathsf{sin},\vertiii{\cdot}}\big(\bm{U}, {\bm{U}}^{\star} \big) \coloneqq \vertiii{\sin\bm{\Theta}},\label{eq:geometric-dist}
\end{equation}
where $\vertiii{\cdot}$ is again some matrix norm to be selected, and
\begin{equation}
	\bm{\Theta}\coloneqq {\footnotesize \left[\begin{array}{ccc}
\theta_{1}\\
 & \ddots\\
 &  & \theta_{r}
	\end{array}\right] }
	,\quad
	\sin\bm{\Theta}\coloneqq {\footnotesize \left[\begin{array}{ccc}
\sin\theta_{1}\\
 & \ddots\\
 &  & \sin\theta_{r}
	\end{array}\right] } .
	\label{eq:defn-Theta-sin-Theta}
\end{equation}
%
With slight abuse of notation, we can define other diagonal matrices such as $\cos \bm{\Theta}$ analogously, where $\cos(\cdot)$ is applied in an entrywise manner to the diagonal elements of $\bm{\Theta}$. Such matrices will be useful for future discussions.

\end{enumerate}

\subsection{Intimate connections between the distance metrics}
It turns out that the   metrics \eqref{defn:dist-rotation}, (\ref{eq:algebraic-dist}) and
 (\ref{eq:geometric-dist}) introduced above are tightly related, as we shall explain in this subsection.
 The proofs of all the results in this subsection are deferred to Section~\ref{sec:proof-preliminary-dist-angles}.

To begin with, we take a look at the relation between $\mathsf{dist}_{\mathsf{p},\vertiii{\cdot}}(\cdot,\cdot)$ and $\mathsf{dist}_{\mathsf{sin},\vertiii{\cdot}}(\cdot,\cdot)$,  which is perhaps best illuminated by the following lemma.

\begin{lemma}\label{lemma:connection-algebraic-geometric}
	Consider the settings of Section~\ref{sec:setting-distance-angle}. If $2r\leq n$, then the  singular values of $\bm{U}\bm{U}^{\top}-{\bm{U}}^{\star}{\bm{U}}^{\star\top}$ (including zeros)
are given by
\[
\underbrace{\sin\theta_{r},\, \sin\theta_{r},\, \sin\theta_{r-1},\, \sin\theta_{r-1},\,\cdots,\,\sin\theta_{1},\,\sin\theta_{1}}_{2r},\;\underbrace{0,\,0,\,\cdots,\,0}_{n-2r}.
\]
\end{lemma}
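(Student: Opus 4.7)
The plan is to reduce the problem to a direct $2\times 2$ block calculation via a carefully chosen change of basis, with the hypothesis $2r\leq n$ supplying just enough room in the orthogonal complement of $\bm{U}^{\star}$.

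First, I would exploit the rotational freedom built into the principal angles. Let $\bm{U}^{\star\top}\bm{U} = \bm{X}\cos\bm{\Theta}\,\bm{Y}^{\top}$ be an SVD with $\bm{X},\bm{Y}\in\mathcal{O}^{r\times r}$ (using the definition \eqref{defn:principal-angles}), and replace $\bm{U}^{\star}\leftarrow\bm{U}^{\star}\bm{X}$ and $\bm{U}\leftarrow\bm{U}\bm{Y}$. This leaves both subspaces (hence the projectors $\bm{U}\bm{U}^{\top}$ and $\bm{U}^{\star}\bm{U}^{\star\top}$) untouched, so I may assume without loss of generality that $\bm{U}^{\star\top}\bm{U} = \cos\bm{\Theta}$ is already diagonal.

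Second, decompose $\bm{U}$ in the basis $[\bm{U}^{\star},\bm{U}^{\star}_{\perp}]$: with $\bm{B}\coloneqq \bm{U}^{\star\top}_{\perp}\bm{U}\in\mathbb{R}^{(n-r)\times r}$, one has $\bm{U} = \bm{U}^{\star}\cos\bm{\Theta} + \bm{U}^{\star}_{\perp}\bm{B}$, and the constraint $\bm{U}^{\top}\bm{U}=\bm{I}_r$ forces $\bm{B}^{\top}\bm{B} = \bm{I}_r - \cos^2\bm{\Theta} = \sin^2\bm{\Theta}$. Consequently, $\bm{B}$ admits a factorization $\bm{B} = \bm{Z}\sin\bm{\Theta}$ for some $\bm{Z}\in\mathbb{R}^{(n-r)\times r}$ with orthonormal columns. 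The hypothesis $2r\leq n$ (equivalently $n-r\geq r$) enters crucially here: it guarantees enough room in $\mathbb{R}^{n-r}$ to extend $\bm{Z}$ to an orthonormal basis $[\bm{Z},\bm{Z}_{\perp}]$, and also lets us freely choose the columns of $\bm{Z}$ corresponding to indices $i$ with $\sin\theta_i=0$ (which is the only subtlety worth flagging). In the resulting orthonormal basis $[\bm{U}^{\star},\,\bm{U}^{\star}_{\perp}\bm{Z},\,\bm{U}^{\star}_{\perp}\bm{Z}_{\perp}]$ of $\mathbb{R}^n$, the coordinate vector of $\bm{U}$ is $[\cos\bm{\Theta};\,\sin\bm{\Theta};\,\bm{0}]$ and that of $\bm{U}^{\star}$ is $[\bm{I}_r;\,\bm{0};\,\bm{0}]$.

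Third, a direct block multiplication yields the orthogonal similarity
\[
\bm{U}\bm{U}^{\top}-\bm{U}^{\star}\bm{U}^{\star\top}\;\sim\;\begin{bmatrix}-\sin^2\bm{\Theta} & \sin\bm{\Theta}\cos\bm{\Theta} & \bm{0}\\ \sin\bm{\Theta}\cos\bm{\Theta} & \sin^2\bm{\Theta} & \bm{0}\\ \bm{0} & \bm{0} & \bm{0}\end{bmatrix},
\]
which preserves singular values. Every block is diagonal, so a simultaneous row-and-column permutation brings this matrix into block-diagonal form: $r$ copies of the symmetric $2\times 2$ block
\[
\bm{T}_i \;=\; \sin\theta_i\begin{bmatrix}-\sin\theta_i & \cos\theta_i \\ \cos\theta_i & \sin\theta_i\end{bmatrix},
\]
padded with $n-2r$ zero diagonal entries. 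The inner $2\times 2$ matrix has trace $0$ and determinant $-(\sin^2\theta_i+\cos^2\theta_i)=-1$, so its eigenvalues are $\pm 1$; hence the eigenvalues of $\bm{T}_i$ are $\pm\sin\theta_i$. Because $\bm{U}\bm{U}^{\top}-\bm{U}^{\star}\bm{U}^{\star\top}$ is symmetric, its singular values equal the absolute values of its eigenvalues, giving exactly the claimed multiset $\{\sin\theta_r,\sin\theta_r,\ldots,\sin\theta_1,\sin\theta_1,0,\ldots,0\}$.

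The only real obstacle is step two — constructing $\bm{Z}$ — where one must handle vanishing $\sin\theta_i$ carefully and where the dimension hypothesis $2r\leq n$ is used; everything else is bookkeeping via the CS-decomposition-style coordinate computation followed by a $2\times 2$ eigenvalue check.
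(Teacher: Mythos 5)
Your proof is correct, and it takes a genuinely different route from the paper's. The paper avoids any explicit construction of a CS-style basis: it multiplies $\bm{U}\bm{U}^{\top}-\bm{U}^{\star}\bm{U}^{\star\top}$ on the left by $[\bm{U},\bm{U}_{\perp}]^{\top}$ and on the right by $[\bm{U}_{\perp}^{\star},\bm{U}^{\star}]$ to produce the $2\times 2$ block-diagonal matrix $\mathrm{diag}(\bm{U}^{\top}\bm{U}_{\perp}^{\star},\,-\bm{U}_{\perp}^{\top}\bm{U}^{\star})$, and then shows that the singular values of each block are exactly $\{\sin\theta_i\}_{1\le i\le r}$ by diagonalizing, e.g., $\bm{U}^{\top}\bm{U}_{\perp}^{\star}\bm{U}_{\perp}^{\star\top}\bm{U}=\bm{X}(\sin^2\bm{\Theta})\bm{X}^{\top}$ with $\bm{X}$ from the SVD of $\bm{U}^{\top}\bm{U}^{\star}$ (the doubling then drops out because the two blocks contribute identical singular spectra). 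Your proof instead builds the common basis $[\bm{U}^{\star},\,\bm{U}^{\star}_{\perp}\bm{Z},\,\bm{U}^{\star}_{\perp}\bm{Z}_{\perp}]$ and reads the answer off the symmetric $2\times 2$ blocks $\sin\theta_i\begin{pmatrix}-\sin\theta_i&\cos\theta_i\\ \cos\theta_i&\sin\theta_i\end{pmatrix}$, whose eigenvalues are $\pm\sin\theta_i$. What the paper's argument buys is a cleaner and shorter proof with no extension-to-a-basis step and no degenerate-angle bookkeeping; what yours buys is a fully explicit canonical form (the classical CS decomposition) from which the doubling is immediately visible, and which makes the role of the hypothesis $2r\le n$ completely transparent. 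Two small things you did well: you correctly flagged that the factorization $\bm{B}=\bm{Z}\sin\bm{\Theta}$ requires care when some $\sin\theta_i=0$ (zero columns of $\bm{B}$ mean the corresponding columns of $\bm{Z}$ must be chosen separately), and you used that $\bm{U}\bm{U}^{\top}-\bm{U}^{\star}\bm{U}^{\star\top}$ is symmetric, so singular values are just absolute eigenvalues. Both proofs are sound.
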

In a nutshell, Lemma~\ref{lemma:connection-algebraic-geometric} establishes an explicit link between (a) the difference of the projection matrices and (b) the principal angles between the two  subspaces of interest.
This lemma and its analysis unveil the following crucial equivalence relation under two of our favorite norms---the spectral norm and the Frobenius norm; in light of this, we might refer to these metrics as the $\sin{\bm{\Theta}}$ distances from time to time.

\begin{lemma}\label{prop:unitary-norm-property}
Consider the settings of Section~\ref{sec:setting-distance-angle}, and recall the definition of $\sin\bm{\Theta}$ in \eqref{eq:defn-Theta-sin-Theta}. For any $1\le r \le n$, one has
\begin{subequations}
\begin{align}
	\big\Vert \bm{U}\bm{U}^{\top}-{\bm{U}}^{\star}{\bm{U}}^{\star\top}\big\Vert
	& =\left\Vert \sin\bm{\Theta}\right\Vert  = \big\Vert \bm{U}_{\perp}^{\top}\bm{U}^{\star}\big\Vert = \big\Vert \bm{U}^{\top}\bm{U}_{\perp}^{\star}\big\Vert ; \label{eq:spectral-norm-equiv} \\
	\tfrac{1}{\sqrt{2}} \big\Vert \bm{U}\bm{U}^{\top}-{\bm{U}}^{\star}{\bm{U}}^{\star\top}\big\Vert _{\mathrm{F}}
	& =\left\Vert \sin\bm{\Theta}\right\Vert _{\mathrm{F}}
	=  \big\Vert \bm{U}_{\perp}^{\top}\bm{U}^{\star}\big\Vert_{\mathrm{F}} =   \big\Vert \bm{U}^{\top}\bm{U}_{\perp}^{\star} \big\Vert_{\mathrm{F}} .
	\label{eq:fro-norm-equiv}
\end{align}
\end{subequations}
\end{lemma}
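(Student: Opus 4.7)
The plan is to establish the chain of equalities in two stages. First, I would compute the singular values of the cross-terms $\bm{U}_{\perp}^{\top}\bm{U}^{\star}$ and $\bm{U}^{\top}\bm{U}_{\perp}^{\star}$ directly, thereby proving the rightmost equalities in both displays. Then I would relate these to the spectral and Frobenius norms of the projection difference $\bm{M} \coloneqq \bm{U}\bm{U}^{\top}-\bm{U}^{\star}\bm{U}^{\star\top}$, invoking Lemma~\ref{lemma:connection-algebraic-geometric} for the spectral case and a self-contained trace computation for the Frobenius case.

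For the first stage, the completeness relation $\bm{U}_{\perp}\bm{U}_{\perp}^{\top}=\bm{I}_{n}-\bm{U}\bm{U}^{\top}$ together with $\bm{U}^{\star\top}\bm{U}^{\star}=\bm{I}_{r}$ yields the Gram-matrix identity
\[
\bm{U}^{\star\top}\bm{U}_{\perp}\bm{U}_{\perp}^{\top}\bm{U}^{\star}=\bm{I}_{r}-(\bm{U}^{\top}\bm{U}^{\star})^{\top}(\bm{U}^{\top}\bm{U}^{\star}).
\]
Since the singular values of $\bm{U}^{\top}\bm{U}^{\star}$ are $\cos\theta_{1},\ldots,\cos\theta_{r}$ by the very definition of the principal angles in \eqref{defn:principal-angles}, the eigenvalues of the right-hand side are $\sin^{2}\theta_{1},\ldots,\sin^{2}\theta_{r}$, and hence the singular values of $\bm{U}_{\perp}^{\top}\bm{U}^{\star}$ are exactly $\sin\theta_{1},\ldots,\sin\theta_{r}$. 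This immediately yields $\|\bm{U}_{\perp}^{\top}\bm{U}^{\star}\|=\|\sin\bm{\Theta}\|$ and $\|\bm{U}_{\perp}^{\top}\bm{U}^{\star}\|_{\mathrm{F}}=\|\sin\bm{\Theta}\|_{\mathrm{F}}$; swapping the roles of $\bm{U}$ and $\bm{U}^{\star}$ in the identity gives the same conclusion for $\bm{U}^{\top}\bm{U}_{\perp}^{\star}$.

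For the second stage, the Frobenius equality follows from a short trace expansion: writing $\bm{M}^{2}=\bm{U}\bm{U}^{\top}+\bm{U}^{\star}\bm{U}^{\star\top}-\bm{U}\bm{U}^{\top}\bm{U}^{\star}\bm{U}^{\star\top}-\bm{U}^{\star}\bm{U}^{\star\top}\bm{U}\bm{U}^{\top}$ and taking traces gives $\|\bm{M}\|_{\mathrm{F}}^{2}=2r-2\sum_{i}\cos^{2}\theta_{i}=2\|\sin\bm{\Theta}\|_{\mathrm{F}}^{2}$, valid for every $r$. For the spectral equality, Lemma~\ref{lemma:connection-algebraic-geometric} enumerates the singular values of $\bm{M}$ as each $\sin\theta_{i}$ with multiplicity two together with $n-2r$ zeros, so $\|\bm{M}\|=\max_{i}\sin\theta_{i}=\|\sin\bm{\Theta}\|$.

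The main obstacle I foresee is the scope of Lemma~\ref{lemma:connection-algebraic-geometric}, which is stated under $2r\leq n$, whereas the present claim asserts the equalities for all $1\leq r\leq n$. My intended remedy is the complementation identity $\bm{U}\bm{U}^{\top}-\bm{U}^{\star}\bm{U}^{\star\top}=-(\bm{U}_{\perp}\bm{U}_{\perp}^{\top}-\bm{U}_{\perp}^{\star}\bm{U}_{\perp}^{\star\top})$, which rewrites the projection difference in terms of complements of dimension $n-r<n/2$ in the regime $2r>n$; Lemma~\ref{lemma:connection-algebraic-geometric} then applies to these complements, and a short check shows that the nonzero principal-angle sines are preserved under complementation. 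Alternatively, one can sidestep the case split entirely by verifying directly that $\bm{M}^{2}$ is block-diagonal in the orthonormal basis $[\bm{U}^{\star},\bm{U}_{\perp}^{\star}]$, with each diagonal block having nonzero eigenvalues drawn from $\{\sin^{2}\theta_{i}\}_{i=1}^{r}$, whence $\|\bm{M}\|=\max_{i}\sin\theta_{i}$ for every $r$.
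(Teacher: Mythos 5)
Your argument is correct and follows the same overall strategy as the paper's proof: compute the singular values of the cross-terms $\bm{U}_{\perp}^{\top}\bm{U}^{\star}$ and $\bm{U}^{\top}\bm{U}_{\perp}^{\star}$ via a Gram-matrix identity (your stage~1 is equivalent to the paper's use of \eqref{eq:U-Uperp-equiv}, and correctly holds for all $1\le r\le n$ since it never invokes the $2r\le n$ hypothesis), and then relate those norms to the projection difference $\bm{M}=\bm{U}\bm{U}^{\top}-\bm{U}^{\star}\bm{U}^{\star\top}$.

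Where you diverge slightly from the paper is in stage~2. For the Frobenius norm you expand $\mathsf{Tr}(\bm{M}^2)$ directly, which works without any block-structure argument; the paper instead reads off $\|\bm{M}\|_{\mathrm{F}}^2 = \|\bm{U}^{\top}\bm{U}_{\perp}^{\star}\|_{\mathrm{F}}^2 + \|\bm{U}_{\perp}^{\top}\bm{U}^{\star}\|_{\mathrm{F}}^2$ from \eqref{eq:transform-UU-UU}, a two-sided block diagonalization of $\bm{M}$ itself (left-multiply by $[\bm{U},\bm{U}_{\perp}]^\top$, right-multiply by $[\bm{U}_\perp^\star,\bm{U}^\star]$). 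For the spectral norm, the paper again uses \eqref{eq:transform-UU-UU}, which applies uniformly to all $r$ and hence avoids the case split you worry about. Your primary remedy (complementation plus Lemma~\ref{lemma:connection-algebraic-geometric}) would work but relies on the classical but not entirely obvious fact that complementation preserves the nonzero principal-angle sines; as you foresee, this needs a short extra justification (it can be derived from the CS decomposition, or from the complementation identity combined with your own stage~1). Your alternative via block-diagonality of $\bm{M}^2$ in the basis $[\bm{U}^{\star},\bm{U}_{\perp}^{\star}]$ is cleaner and sidesteps this; it is a legitimate variant of the paper's two-sided block form, with the only difference that you work with $\bm{M}^2$ in a single orthonormal basis whereas the paper works with $\bm{M}$ using two different orthonormal bases on the two sides. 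Either way, the chain of equalities goes through.
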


Next, we move on to demonstrate the (near) equivalence of $\mathsf{dist}_{\vertiii{\cdot}}(\cdot,\cdot)$ and $\mathsf{dist}_{\mathsf{p},\vertiii{\cdot}}(\cdot,\cdot)$ under the above-mentioned two norms.

\begin{lemma}\label{prop:rotation-UR}
	Under the settings of Section~\ref{sec:setting-distance-angle}, for any $1 \leq r \leq n$,
	one has\footnote{It is straightforward to verify that the upper bounds on both $\min_{\bm{R}\in \mathcal{O}^{r\times r}}\big\|\bm{U}\bm{R}-\bm{U}^{\star}\big\|$ and $\min_{\bm{R}\in\mathcal{O}^{r\times r}}\left\Vert \bm{U}\bm{R}-\bm{U}^{\star}\right\Vert _{\mathrm{F}}$  are attainable when $\bm{U} = [1,0]^\top$ and $\bm{U}^\star = [0,1]^\top$.}
\begin{align*}
	\|\bm{U}\bm{U}^{\top} - \bm{U}^{\star}\bm{U}^{\star\top} \|
	&\leq
	\min_{\bm{R}\in \mathcal{O}^{r\times r}}\big\|\bm{U}\bm{R}-\bm{U}^{\star}\big\|
	\leq \sqrt{2} \|\bm{U}\bm{U}^{\top} - \bm{U}^{\star}\bm{U}^{\star\top} \|; \\
	\tfrac{1}{\sqrt{2}} \|\bm{U}\bm{U}^{\top} - \bm{U}^{\star}\bm{U}^{\star\top} \|_{\mathrm{F}}
	&\leq
	\min_{\bm{R}\in\mathcal{O}^{r\times r}}\left\Vert \bm{U}\bm{R}-\bm{U}^{\star}\right\Vert _{\mathrm{F}}
	\leq \|\bm{U}\bm{U}^{\top} - \bm{U}^{\star}\bm{U}^{\star\top} \|_{\mathrm{F}}.
\end{align*}
\end{lemma}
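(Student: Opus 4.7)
The plan is to derive both bounds for each norm by exploiting the singular value decomposition (SVD) of $\bm{U}^{\top}\bm{U}^{\star}$, which by the construction of principal angles (Section~\ref{sec:introduction-distance-principal-angles}) can be written as $\bm{U}^{\top}\bm{U}^{\star} = \bm{X}\cos\bm{\Theta}\bm{Y}^{\top}$ for some $\bm{X},\bm{Y}\in \mathcal{O}^{r\times r}$. Let $\bm{R}^{\star} \coloneqq \bm{X}\bm{Y}^{\top}$, the so-called Procrustes rotation; this particular choice will witness the upper bounds, while the lower bounds will be shown to hold for every orthogonal $\bm{R}$.

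For the lower bounds I would observe that, for any $\bm{R}\in \mathcal{O}^{r\times r}$, the identity
\[
	(\bm{I}-\bm{U}\bm{U}^{\top})(\bm{U}\bm{R}-\bm{U}^{\star}) = -(\bm{I}-\bm{U}\bm{U}^{\top})\bm{U}^{\star} = -\bm{U}_{\perp}\bm{U}_{\perp}^{\top}\bm{U}^{\star}
\]
holds, since $\bm{I}-\bm{U}\bm{U}^{\top}$ kills the column space of $\bm{U}\bm{R}$. Because $\bm{I}-\bm{U}\bm{U}^{\top}$ is an orthogonal projection (hence a contraction under both the spectral and the Frobenius norms), this yields $\|\bm{U}\bm{R}-\bm{U}^{\star}\|_{\vertiii{\cdot}} \geq \|\bm{U}_{\perp}^{\top}\bm{U}^{\star}\|_{\vertiii{\cdot}}$ for either norm. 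Invoking Lemma~\ref{prop:unitary-norm-property} to rewrite the right-hand side then delivers the two stated lower bounds.

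For the upper bounds I would substitute $\bm{R}^{\star}=\bm{X}\bm{Y}^{\top}$ and compute
\[
	(\bm{U}\bm{R}^{\star}-\bm{U}^{\star})^{\top}(\bm{U}\bm{R}^{\star}-\bm{U}^{\star}) = 2\bm{I}_r - 2\bm{Y}\cos\bm{\Theta}\bm{Y}^{\top} = 2\bm{Y}(\bm{I}_r-\cos\bm{\Theta})\bm{Y}^{\top},
\]
which shows that the $r$ singular values of $\bm{U}\bm{R}^{\star}-\bm{U}^{\star}$ equal $\big\{\sqrt{2(1-\cos\theta_i)}\big\}_{i=1}^r = \{2\sin(\theta_i/2)\}_{i=1}^r$. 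The remaining ingredient is the elementary half-angle estimate $2\sin(\theta/2)\leq \sqrt{2}\sin\theta$ for $\theta\in[0,\pi/2]$, which follows from $\cos(\theta/2)\geq 1/\sqrt{2}$ on this range. In the spectral norm this yields $\|\bm{U}\bm{R}^{\star}-\bm{U}^{\star}\| = 2\sin(\theta_r/2) \leq \sqrt{2}\sin\theta_r = \sqrt{2}\|\sin\bm{\Theta}\|$, while in the Frobenius norm
\[
	\|\bm{U}\bm{R}^{\star}-\bm{U}^{\star}\|_{\mathrm{F}}^2 = 4\sum_{i=1}^r \sin^2(\theta_i/2) \leq 2\sum_{i=1}^r \sin^2\theta_i = 2\|\sin\bm{\Theta}\|_{\mathrm{F}}^2.
\]
A final application of Lemma~\ref{prop:unitary-norm-property} converts each of these into the claimed upper bound in terms of $\|\bm{U}\bm{U}^{\top}-\bm{U}^{\star}\bm{U}^{\star\top}\|$ or its Frobenius analogue.

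No step presents a serious obstacle. The main technical care lies in carrying out the explicit Procrustes evaluation cleanly and, more importantly, in tracking the different constants that Lemma~\ref{prop:unitary-norm-property} supplies for the two norms (the extra $\sqrt{2}$ on the Frobenius side), so that the half-angle estimate combines with those constants to produce factors of exactly $1$ and $\sqrt{2}$ in the final bounds, in agreement with the $2\times 1$ tightness example mentioned in the footnote.
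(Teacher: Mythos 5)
Your proposal is correct, and it splits naturally into two parts of different character relative to the paper's argument.

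The upper bounds follow essentially the paper's route: you witness the infimum by the Procrustes rotation $\bm{R}^{\star}=\bm{X}\bm{Y}^{\top}$, compute $(\bm{U}\bm{R}^{\star}-\bm{U}^{\star})^{\top}(\bm{U}\bm{R}^{\star}-\bm{U}^{\star}) = 2\bm{Y}(\bm{I}_r-\cos\bm{\Theta})\bm{Y}^{\top}$, and bound $2(1-\cos\theta)\leq 2\sin^2\theta$ via the half-angle inequality $2\sin(\theta/2)\leq\sqrt{2}\sin\theta$. The paper writes the same estimate as $1-\cos\theta\leq 1-\cos^2\theta$, but the content and constants are identical.

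The lower bounds, on the other hand, take a genuinely different and more economical path. The paper establishes each lower bound by a variational argument: after reducing $\min_{\bm{R}}\|\bm{U}\bm{R}-\bm{U}^{\star}\|^2$ to $\min_{\bm{O}\in\mathcal{O}^{r\times r}}\|2\bm{I}_r-\bm{\Sigma}\bm{O}-\bm{O}^{\top}\bm{\Sigma}\|$, it tests against the coordinate direction $\bm{e}_r$ to extract $4\sin^2(\theta_r/2)$; the Frobenius side is handled separately via a trace--nuclear-norm duality estimate. Your projection argument replaces all of this with a single observation valid for every $\bm{R}$: applying the orthogonal projector $\bm{I}-\bm{U}\bm{U}^{\top}=\bm{U}_{\perp}\bm{U}_{\perp}^{\top}$ to $\bm{U}\bm{R}-\bm{U}^{\star}$ annihilates the $\bm{U}\bm{R}$ term and leaves $-\bm{U}_{\perp}\bm{U}_{\perp}^{\top}\bm{U}^{\star}$ independently of $\bm{R}$; since a projection is a contraction in any unitarily invariant norm, $\|\bm{U}\bm{R}-\bm{U}^{\star}\|_{\vertiii{\cdot}}\geq\|\bm{U}_{\perp}^{\top}\bm{U}^{\star}\|_{\vertiii{\cdot}}$, and Lemma~\ref{prop:unitary-norm-property} converts the right-hand side into the stated quantity. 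This handles the spectral and Frobenius cases in the same breath, dispenses with the $\min$--$\max$ manipulation and the half-angle inequality entirely on the lower-bound side, and makes clear that the asymmetry in constants ($1$ versus $1/\sqrt{2}$) is purely an artifact of Lemma~\ref{prop:unitary-norm-property}, not of the optimization over rotations. It is a legitimate improvement in clarity over the paper's treatment of this half of the lemma.
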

In words, $\mathsf{dist}_{\vertiii{\cdot}}(\cdot,\cdot)$ and $\mathsf{dist}_{\mathsf{p},\vertiii{\cdot}}(\cdot,\cdot)$
are equivalent up to a factor of $\sqrt{2}$, when $\vertiii{\cdot}$ is the spectral norm or the Frobenius norm.

\subsection{The distance metrics of choice in this monograph}

In conclusion, the following metrics, which are seemingly distinct at first glance,  are (nearly) equivalent in measuring the distance between two subspaces $\bm{U}$ and $\bm{U}^{\star}$:
\begin{align}
	\mathrm{1)} \quad & \vertiiibig{\bm{U}\bm{U}^{\top} - \bm{U}^{\star}\bm{U}^{\star\top}}  \notag\\
	\mathrm{2)} \quad & \vertiiibig{\sin\bm{\Theta}}  \notag\\
	\mathrm{3)} \quad & \vertiiibig{ \bm{U}_{\perp}^{\top}\bm{U}^{\star} }=\vertiiibig{ \bm{U}^{\top}\bm{U}^{\star}_{\perp} }  \notag \\
	\mathrm{4)} \quad & \min_{\bm{R}\in \mathcal{O}^{r\times r}} \vertiiibig{ \bm{U}\bm{R} - \bm{U}^{\star} } \notag
\end{align}
when $\vertiii{\cdot}$ represents either the spectral norm  or the Frobenius norm.
Viewed in this light, we shall mainly concentrate on the following metrics throughout the rest of this monograph:
\begin{subequations}
\label{eq:dist_UUstar}
\begin{align}
	\mathsf{dist}(\bm{U},\bm{U}^{\star}) &\coloneqq \min_{\bm{R}\in \mathcal{O}^{r\times r}} \big\| \bm{U}\bm{R} - \bm{U}^{\star} \big\|;
	\label{eq:dist_UUstar-spectral} \\
	\mathsf{dist}_{\mathrm{F}}(\bm{U},\bm{U}^{\star}) &\coloneqq  \min_{\bm{R}\in \mathcal{O}^{r\times r}} \big\| \bm{U}\bm{R} - \bm{U}^{\star} \big\|_{\mathrm{F}}.
	\label{eq:dist_UUstar-fro}
\end{align}
\end{subequations}

\section{Perturbation theory for eigenspaces}
\label{sec:perturbation}

Armed with the above metrics for subspace distances, we are in a position to identify key factors that affect the perturbation of eigenvectors and eigenspaces.

\subsection{Setup and notation}
\label{sec:setting-davis-kahan}

Let $\bm{M}^{\star}$ and $\bm{M}=\bm{M}^{\star}+\bm{E}$ be two $n\times n$ real symmetric
matrices. We express the eigendecomposition
of $\bm{M}^{\star}$ and $\bm{M}$ as follows
%
\begin{alignat}{2}
\bm{M}^{\star} & =\sum_{i=1}^{n}\lambda_{i}^{\star}\bm{u}_{i}^{\star}\bm{u}_{i}^{\star\top} & & =\left[\begin{array}{cc}
\bm{U}^{\star} & \bm{U}_{\perp}^{\star}\end{array}\right]\left[\begin{array}{cc}
\bm{\Lambda}^{\star} & \bm{0}\\
\bm{0} & \bm{\Lambda}_{\perp}^{\star}
\end{array}\right]\left[\begin{array}{c}
\bm{U}^{\star\top}\\
\bm{U}_{\perp}^{\star\top}
\end{array}\right];  \label{eq:M-e-decompose} \\
\bm{M} & =\sum_{i=1}^{n}\lambda_{i}\bm{u}_{i}\bm{u}_{i}^{\top} & & =\left[\begin{array}{cc}
\bm{U} & \bm{U}_{\perp}\end{array}\right]\left[\begin{array}{cc}
\bm{\Lambda} & \bm{0}\\
\bm{0} & \bm{\Lambda}_{\perp}
\end{array}\right]\left[\begin{array}{c}
\bm{U}^{\top}\\
\bm{U}_{\perp}^{\top}
\end{array}\right]. \label{eq:Mstar-e-decompose}
\end{alignat}
%
Here, $\{\lambda_{i}\}$ (resp.~$\{\lambda_{i}^{\star}\}$)  denote
the eigenvalues of $\bm{M}$ (resp.~$\bm{M}^{\star}$), and $\bm{u}_{i}$ (resp.~$\bm{u}_i^{\star}$)
stands for the eigenvector associated with the eigenvalue $\lambda_{i}$ (resp.~$\lambda_i^{\star}$).
Additionally, we take
\begin{align*}
	  \bm{U} &\coloneqq[\bm{u}_{1},\cdots,\bm{u}_{r}]\in\mathbb{R}^{n\times r}, \qquad&\bm{U}_{\perp} &\coloneqq[\bm{u}_{r+1},\cdots,\bm{u}_{n}]\in\mathbb{R}^{n\times (n-r)},\\
	  \bm{\Lambda}&\coloneqq\mathsf{diag}\big([\lambda_{1},\cdots,\lambda_{r}]\big),\qquad &\bm{\Lambda}_{\perp}  &\coloneqq \mathsf{diag}\big([\lambda_{r+1},\cdots,\lambda_{n}]\big).
\end{align*}
The matrices $\bm{U}^{\star}$, $\bm{U}_{\perp}^{\star}$, $\bm{\Lambda}^{\star}$, and $\bm{\Lambda}_{\perp}^{\star}$ are defined analogously.

\subsection{A warm-up example}

In general, the eigenvector/eigenspace of a real symmetric matrix might change drastically even upon a small perturbation.
To understand this, consider the following toy example borrowed from~\citet{hsu2016notes}:
\[
\bm{M}^{\star}=\left[\begin{array}{cc}
1+\epsilon & 0\\
0 & 1-\epsilon
\end{array}\right],
~~
\bm{E}=\left[\begin{array}{cc}
-\epsilon & \epsilon\\
\epsilon & \epsilon
\end{array}\right],~~
\bm{M}=\left[\begin{array}{cc}
1 & \epsilon\\
\epsilon & 1
\end{array}\right],
\]
where $0<\epsilon<1$ can be arbitrarily small. It is straightforward
to check that the leading eigenvectors of $\bm{M}^{\star}$ and $\bm{M}$ are given respectively by
\[
\bm{u}_{1}^{\star}=\left[\begin{array}{c}
1\\
0
\end{array}\right],
\qquad\text{and}\qquad
\bm{u}_{1}=\frac{1}{\sqrt{2}} \left[\begin{array}{c}
1 \\
1
\end{array}\right] .
\]
Consequently, we have
\begin{align}
	\big\|\bm{u}_{1}\bm{u}_{1}^{\top} - \bm{u}_{1}^{\star}\bm{u}_{1}^{\star\top}\big\| = \frac{1}{\sqrt{2}}, 
	\quad \text{and} \quad
	\big\|\bm{u}_{1}\bm{u}_{1}^{\top} - \bm{u}_{1}^{\star}\bm{u}_{1}^{\star\top}\big\|_{\mathrm{F}}= 1,
\end{align}
which are both quite large regardless of the size of $\epsilon$ or the size of the perturbation $\|\bm{E}\|$.

On closer inspection, this ``pathological'' behavior comes up due to the
fact that perturbation size $\epsilon$  is comparable to the
eigengap of $\bm{M}^{\star}$ (namely, $\lambda_{1}(\bm{M}^{\star})-\lambda_{2}(\bm{M}^{\star})=2\epsilon$).
This hints at the important role played by the eigengap in influencing eigenspace perturbation.

\subsection{The Davis-Kahan sin$\bm{\Theta}$ theorem}

At the core of classical eigenspace perturbation theory lies the landmark result of \citet{davis1970rotation},
which delivers powerful eigenspace perturbation bounds in terms of the size of the perturbation matrix as well as the associated eigengap.
Here and throughout, for any symmetric matrix $\bm{A}$, we denote by $\mathsf{eigenvalues}(\bm{A})$ the set of eigenvalues of $\bm{A}$.

\begin{theorem}[Davis-Kahan's sin$\bm{\Theta}$ theorem]
\label{thm:davis-kahan}
Consider the settings in Section~\ref{sec:setting-davis-kahan}. Assume that
\begin{subequations}
\label{eq:assumption-eigenvalues-DK}
\begin{align}
	\mathsf{eigenvalues}(\bm{\Lambda}^{\star}) &\subseteq [\alpha,\beta] ,\\
	\mathsf{eigenvalues}(\bm{\Lambda}_{\perp}) &\subseteq (-\infty, \alpha-\Delta] \cup [\beta+\Delta, \infty)
\end{align}
\end{subequations}
for some quantities $\alpha,\beta\in \mathbb{R}$ and eigengap $\Delta>0$. Then one has
\begin{subequations}
\label{eq:davis-kahan-conclusion-general}
\begin{align}
\mathsf{dist}\big(\bm{U},\bm{U}^{\star}\big) & \leq\sqrt{2}\|\sin\bm{\Theta}\|\leq\frac{\sqrt{2}\big\|\bm{E}\bm{U}^{\star}\big\|}{\Delta}\leq\frac{\sqrt{2}\|\bm{E}\|}{\Delta};\\
\mathsf{dist}_{\mathrm{F}}\big(\bm{U},\bm{U}^{\star}\big) & \leq\sqrt{2}\|\sin\bm{\Theta}\|_{\mathrm{F}}\leq\frac{\sqrt{2}\big\|\bm{E}\bm{U}^{\star}\big\|_{\mathrm{F}}}{\Delta}\leq\frac{\sqrt{2r}\|\bm{E}\|}{\Delta}.
\end{align}
\end{subequations}
This conclusion remains valid if Assumption~\eqref{eq:assumption-eigenvalues-DK} is replaced by
\begin{subequations}
\label{eq:assumption-eigenvalues-DK-reverse}
\begin{align}
	\mathsf{eigenvalues}(\bm{\Lambda}^{\star}) &\subseteq (-\infty, \alpha-\Delta] \cup [\beta+\Delta, \infty) ; \\
	\mathsf{eigenvalues}(\bm{\Lambda}_{\perp}) &\subseteq [\alpha,\beta].
\end{align}
\end{subequations}
\end{theorem}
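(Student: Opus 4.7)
The plan is to reduce the problem to a Sylvester-type matrix equation for the off-diagonal block $\bm{X} \coloneqq \bm{U}_\perp^\top \bm{U}^\star$ and then leverage the eigenvalue separation $\Delta$. By Lemma~\ref{prop:unitary-norm-property}, both $\|\sin\bm{\Theta}\|$ and $\|\sin\bm{\Theta}\|_{\mathrm{F}}$ equal the spectral and Frobenius norms of $\bm{X}$, so it suffices to bound $\bm{X}$ in these two norms. Lemma~\ref{prop:rotation-UR} then converts such bounds into bounds on $\mathsf{dist}(\bm{U},\bm{U}^\star)$ and $\mathsf{dist}_{\mathrm{F}}(\bm{U},\bm{U}^\star)$ at the cost of the factor $\sqrt{2}$ already visible in \eqref{eq:davis-kahan-conclusion-general}.

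To derive the Sylvester equation, I would compute $\bm{U}_\perp^\top \bm{M} \bm{U}^\star$ in two different ways. Using $\bm{U}_\perp^\top \bm{M} = \bm{\Lambda}_\perp \bm{U}_\perp^\top$ (a consequence of \eqref{eq:Mstar-e-decompose} and the symmetry of $\bm{M}$) yields $\bm{\Lambda}_\perp \bm{X}$; alternatively, writing $\bm{M} = \bm{M}^\star + \bm{E}$ together with $\bm{M}^\star \bm{U}^\star = \bm{U}^\star \bm{\Lambda}^\star$ gives $\bm{X}\bm{\Lambda}^\star + \bm{U}_\perp^\top \bm{E} \bm{U}^\star$. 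Equating the two expressions produces
\begin{align*}
\bm{\Lambda}_\perp \bm{X} - \bm{X} \bm{\Lambda}^\star = \bm{U}_\perp^\top \bm{E} \bm{U}^\star.
\end{align*}
Crucially, either of the assumptions \eqref{eq:assumption-eigenvalues-DK} and \eqref{eq:assumption-eigenvalues-DK-reverse} implies the same spectral separation $|\lambda_i^\perp - \lambda_j^\star| \geq \Delta$ for every pair of eigenvalues of $(\bm{\Lambda}_\perp,\bm{\Lambda}^\star)$, so the two cases are handled by exactly the same downstream argument.

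The core and most delicate step is to invert this Sylvester equation in norm: I would establish $\vertiii{\bm{X}} \leq \vertiii{\bm{U}_\perp^\top \bm{E} \bm{U}^\star}/\Delta$ for both $\vertiii{\cdot} = \|\cdot\|$ and $\vertiii{\cdot} = \|\cdot\|_{\mathrm{F}}$. Since $\bm{\Lambda}_\perp$ and $\bm{\Lambda}^\star$ are diagonal, the equation is separable entrywise, $X_{ij} = (\bm{U}_\perp^\top \bm{E} \bm{U}^\star)_{ij}/(\lambda_i^\perp - \lambda_j^\star)$, which immediately gives the Frobenius bound by term-by-term comparison. The spectral-norm bound is the main obstacle because the eigenvalues of $\bm{\Lambda}_\perp$ may interleave around $[\alpha,\beta]$, so entrywise comparison no longer suffices; instead I would invoke the standard Sylvester norm inequality which states that for Hermitian $\bm{A}$ and $\bm{B}$ whose spectra are separated by at least $\Delta$, any unitarily invariant norm satisfies $\vertiii{\bm{A}\bm{Z} - \bm{Z}\bm{B}} \geq \Delta\vertiii{\bm{Z}}$. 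This can be proved by partitioning $\bm{\Lambda}_\perp$ into its two sub-blocks with eigenvalues in $(-\infty,\alpha-\Delta]$ and $[\beta+\Delta,\infty)$ and, on each block where the two spectra lie cleanly on opposite sides of a horizontal line, exploiting the integral representation $\bm{Z} = \int_0^\infty e^{-t\bm{A}} \bm{C}\, e^{t\bm{B}}\,dt$ (after a suitable shift) to get the one-sided bound.

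With the $\bm{X}$-bound in hand, the rest is bookkeeping. Lemma~\ref{prop:unitary_norm_relation} gives $\|\bm{U}_\perp^\top \bm{E} \bm{U}^\star\| \leq \|\bm{E} \bm{U}^\star\| \leq \|\bm{E}\|$ and $\|\bm{U}_\perp^\top \bm{E} \bm{U}^\star\|_{\mathrm{F}} \leq \|\bm{E} \bm{U}^\star\|_{\mathrm{F}} \leq \sqrt{r}\,\|\bm{E}\|$ (using $\|\bm{U}^\star\|_{\mathrm{F}} = \sqrt{r}$). Combining these estimates with Lemma~\ref{prop:unitary-norm-property} to rewrite $\|\bm{X}\|$ and $\|\bm{X}\|_{\mathrm{F}}$ as $\|\sin\bm{\Theta}\|$ and $\|\sin\bm{\Theta}\|_{\mathrm{F}}$, and with Lemma~\ref{prop:rotation-UR} to absorb the factor $\sqrt{2}$ when passing to the $\mathsf{dist}$ metrics, yields the entire chain of inequalities in \eqref{eq:davis-kahan-conclusion-general}. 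Because neither the Sylvester equation nor the spectral-separation hypothesis distinguishes between the two assumptions, the same argument applies verbatim under \eqref{eq:assumption-eigenvalues-DK-reverse}.
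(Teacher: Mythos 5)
Your starting point---the Sylvester identity $\bm{\Lambda}_\perp \bm{X} - \bm{X}\bm{\Lambda}^\star = \bm{U}_\perp^\top \bm{E} \bm{U}^\star$ with $\bm{X} = \bm{U}_\perp^\top \bm{U}^\star$---is exactly the paper's (equation \eqref{eq:sylvester}), and the entrywise comparison is a perfectly clean way to get the Frobenius bound. The spectral-norm argument, however, has a genuine gap. After you partition $\bm{\Lambda}_\perp$ into the sub-block with eigenvalues $\leq \alpha-\Delta$ and the one with eigenvalues $\geq \beta+\Delta$, the integral representation gives you $\|\bm{X}_i\| \leq \|\bm{C}_i\|/\Delta$ on each block, but you never explain how these recombine into $\|\bm{X}\| \leq \|\bm{C}\|/\Delta$ for the full stacked matrix. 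The naive estimate $\|\bm{X}\| \leq (\|\bm{X}_1\|^2+\|\bm{X}_2\|^2)^{1/2}$ together with $\|\bm{C}_i\|\le\|\bm{C}\|$ only gives $\sqrt{2}\,\|\bm{C}\|/\Delta$, which loses the constant. Moreover, the ``standard Sylvester norm inequality'' you quote is overclaimed: for Hermitian $\bm{A},\bm{B}$ with spectra merely $\Delta$-separated, the sharp constant in $\vertiii{\bm{X}} \leq c\,\vertiii{\bm{A}\bm{X}-\bm{X}\bm{B}}/\Delta$ is $c=\pi/2$, not $c=1$ (Bhatia--Davis--McIntosh). What rescues you here is the particular interval-versus-complement structure of the hypothesis, and exploiting that structure efficiently is precisely where the paper's argument diverges from yours.

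The paper's route is to center: replace $\bm{M}^\star$ and $\bm{M}$ by $\bm{M}^\star - \tfrac{\alpha+\beta}{2}\bm{I}$ and $\bm{M} - \tfrac{\alpha+\beta}{2}\bm{I}$, which fixes the eigenvectors and turns the assumption into $\|\bm{\Lambda}^\star\|\leq\gamma$ and $\sigma_{\min}(\bm{\Lambda}_\perp)\geq\gamma+\Delta$ with $\gamma=(\beta-\alpha)/2$. From there a single application of the triangle inequality to the Sylvester identity, together with Lemma~\ref{prop:unitary_norm_relation} (giving $\vertiii{\bm{\Lambda}_\perp\bm{X}}\geq\sigma_{\min}(\bm{\Lambda}_\perp)\vertiii{\bm{X}}$ and $\vertiii{\bm{X}\bm{\Lambda}^\star}\leq\|\bm{\Lambda}^\star\|\vertiii{\bm{X}}$), yields $\vertiii{\bm{U}_\perp^\top\bm{E}\bm{U}^\star}\geq\Delta\,\vertiii{\bm{X}}$ for \emph{every} unitarily invariant norm at once---no integral representations, no block partition, no recombination step, and Remark~\ref{rmk:generalized_DK} falls out for free. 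The reverse assumption \eqref{eq:assumption-eigenvalues-DK-reverse} is handled identically by swapping which of $\bm{\Lambda}^\star$, $\bm{\Lambda}_\perp$ is small after centering, which is also what your observation about symmetric roles was getting at. If you cite the band-separated Sylvester bound as a black box the proof goes through, but for a self-contained argument you should adopt the centering trick.
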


\begin{remark}\label{rmk:generalized_DK}
In fact, Theorem~\ref{thm:davis-kahan} can be generalized to accommodate any unitarily invariant norm $\vertiii{\cdot}$, in the sense that
\begin{align}
\vertiii{\sin\bm{\Theta}}
\leq\frac{\vertiii{\bm{E}\bm{U}^{\star}}}{\Delta }.
\end{align}

\end{remark}

The proof of Theorem~\ref{thm:davis-kahan} and Remark~\ref{rmk:generalized_DK} is quite elementary and can be found in Section~\ref{subsection2.2.4}.

\begin{remark}
As we shall demonstrate in Chapter \ref{cha:Linf-theory}, the above bounds that involve $\|\bm{E}\bm{U}^{\star}\|$ and $\|\bm{E}\bm{U}^{\star}\|_{\mathrm{F}}$ are particularly useful when $\bm{E}$ exhibits special structure (e.g., row sparsity or column sparsity).
\end{remark}
Theorem~\ref{thm:davis-kahan} is commonly referred to as the Davis-Kahan sin$\bm{\Theta}$ theorem, given that it concerns the sin${\bm\Theta}$ distance between subspaces.
Both bounds scale linearly with the perturbation size, and are inversely proportional to the eigengap $\Delta$. Informally, if we view $\|\bm{E}\|$ as the noise size and interpret the eigengap as the ``signal strength'' (which dictates how easy it is to distinguish the $r$ eigenvalues of interest from the remaining spectrum), then Theorem~\ref{thm:davis-kahan}
asserts that the eigenspace perturbation degrades gracefully as the signal-to-noise-ratio decreases.

The careful reader might notice that Theorem~\ref{thm:davis-kahan}  stays silent on the allowable size $\|\bm{E}\|$ of the perturbation.  Note, however, that a restriction on $\|\bm{E}\|$ is somewhat hidden in Assumptions (\ref{eq:assumption-eigenvalues-DK}) and (\ref{eq:assumption-eigenvalues-DK-reverse}).
When the eigenvalues in $\bm{\Lambda}^{\star}$ (resp.~$\bm{\Lambda}$) and $\bm{\Lambda}^{\star}_{\perp}$ (resp.~$\bm{\Lambda}_{\perp}$) are suitably ordered,
it is oftentimes more convenient to work with the following corollary, which makes apparent the constraint on the  size $\|\bm{E}\|$ with regard to the eigengap of $\bm{M}^{\star}$.
\begin{corollary}
\label{cor:davis-kahan-conclusion-corollary}
	Consider the settings in Section~\ref{sec:setting-davis-kahan}. Suppose that $|\lambda_1^{\star} | \geq |\lambda_2^{\star} | \geq \cdots \geq  |\lambda_r^{\star} | >  |\lambda_{r+1}^{\star} | \geq \cdots \geq |\lambda_n^{\star}|$ and $|\lambda_1 | \geq |\lambda_2 | \geq \cdots \geq |\lambda_n|$ (i.e., the eigenvalues are sorted by their magnitudes).
If $\|\bm{E}\|< (1-1/\sqrt{2}) (|\lambda_{r}^{\star}|-|\lambda_{r+1}^{\star}|)$, then
\begin{subequations}
\label{eq:davis-kahan-conclusion-corollary}
\begin{align}
\mathsf{dist}\big(\bm{U},\bm{U}^{\star}\big) & \leq\sqrt{2}\|\sin\bm{\Theta}\|\leq\frac{2 \big\|\bm{E}\bm{U}^{\star}\big\|}{|\lambda_{r}^{\star}|-|\lambda_{r+1}^{\star}|}\leq\frac{2\|\bm{E}\|}{|\lambda_{r}^{\star}|-|\lambda_{r+1}^{\star}|};\\
	\mathsf{dist}_{\mathrm{F}}\big(\bm{U},\bm{U}^{\star}\big) & \leq\sqrt{2}\|\sin\bm{\Theta}\|_{\mathrm{F}}\leq\frac{2 \big\|\bm{E}\bm{U}^{\star}\big\|_{\mathrm{F}}}{|\lambda_{r}^{\star}|-|\lambda_{r+1}^{\star}|}\leq\frac{2\sqrt{r}\|\bm{E}\|}{|\lambda_{r}^{\star}|-|\lambda_{r+1}^{\star}|}.
\end{align}
\end{subequations}
\end{corollary}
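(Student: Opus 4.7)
The plan is to reduce Corollary~\ref{cor:davis-kahan-conclusion-corollary} to Theorem~\ref{thm:davis-kahan} in its reversed form~\eqref{eq:assumption-eigenvalues-DK-reverse}, by choosing the parameters $\alpha,\beta,\Delta$ so as to capture the magnitude gap. Write $\delta\coloneqq|\lambda_r^\star|-|\lambda_{r+1}^\star|$ and $\varepsilon\coloneqq\|\bm{E}\|$ for brevity. Because $\bm{M}^\star$ and $\bm{M}$ are symmetric, their singular values are precisely the magnitudes of their eigenvalues, sorted in exactly the order used in the corollary. Consequently, Weyl's inequality for singular values (Lemma~\ref{lemma:weyls-singular-value}) gives the useful bound
\[
|\lambda_{r+1}|\;\leq\;|\lambda_{r+1}^\star|+\varepsilon.
\]

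Using this, I would localize the two spectra. By definition, every eigenvalue in $\bm{\Lambda}^\star$ has magnitude at least $|\lambda_r^\star|$, hence $\mathsf{eigenvalues}(\bm{\Lambda}^\star)\subseteq(-\infty,-|\lambda_r^\star|]\cup[|\lambda_r^\star|,\infty)$; while every eigenvalue in $\bm{\Lambda}_\perp$ has magnitude at most $|\lambda_{r+1}|\leq|\lambda_{r+1}^\star|+\varepsilon$, hence $\mathsf{eigenvalues}(\bm{\Lambda}_\perp)\subseteq[-(|\lambda_{r+1}^\star|+\varepsilon),\,|\lambda_{r+1}^\star|+\varepsilon]$. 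I would then take $\alpha=-(|\lambda_{r+1}^\star|+\varepsilon)$, $\beta=|\lambda_{r+1}^\star|+\varepsilon$, and $\Delta=\delta-\varepsilon$. With this choice, $\alpha-\Delta=-|\lambda_r^\star|$ and $\beta+\Delta=|\lambda_r^\star|$, so the two inclusions above are exactly assumption~\eqref{eq:assumption-eigenvalues-DK-reverse} of Theorem~\ref{thm:davis-kahan}, provided $\Delta>0$.

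The hypothesis $\varepsilon<(1-1/\sqrt{2})\delta$ immediately yields $\Delta=\delta-\varepsilon>\delta/\sqrt{2}>0$, so Theorem~\ref{thm:davis-kahan} applies and delivers bounds with $\sqrt{2}/\Delta$ as the leading constant. Chaining $\sqrt{2}/\Delta<\sqrt{2}/(\delta/\sqrt{2})=2/\delta$ then converts each bound of~\eqref{eq:davis-kahan-conclusion-general} into precisely the inequality asserted in~\eqref{eq:davis-kahan-conclusion-corollary}. The final upper bound in each line is just $\|\bm{E}\bm{U}^\star\|\leq\|\bm{E}\|\cdot\|\bm{U}^\star\|=\|\bm{E}\|$ and $\|\bm{E}\bm{U}^\star\|_{\mathrm{F}}\leq\|\bm{E}\|\cdot\|\bm{U}^\star\|_{\mathrm{F}}=\sqrt{r}\|\bm{E}\|$.

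The only subtle step is the first one: because the eigenvalues are ordered by \emph{magnitude} rather than by signed value, Weyl's inequality for eigenvalues (Lemma~\ref{lemma:weyl}) does not directly yield $|\lambda_{r+1}|\leq|\lambda_{r+1}^\star|+\varepsilon$, and one really must route through singular values. Once that observation is in place the remainder is pure arithmetic plus a direct invocation of Theorem~\ref{thm:davis-kahan}, and the spectral- and Frobenius-norm conclusions fall out simultaneously with no additional work.
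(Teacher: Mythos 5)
Your proof is correct, and it follows the same high-level reduction as the paper (verify the reversed assumption~\eqref{eq:assumption-eigenvalues-DK-reverse} and invoke Theorem~\ref{thm:davis-kahan}), but the key technical step is handled differently. The paper applies Weyl's inequality for \emph{eigenvalues} (Lemma~\ref{lemma:weyl}), which controls the eigenvalues sorted by signed value rather than by magnitude; reconciling this with the magnitude ordering used in the corollary forces the paper into a case analysis, splitting the top-$r$ eigenvalues of $\bm{M}^\star$ into $r_1$ positive and $r_2 = r - r_1$ negative ones and arguing separately about each group to conclude that exactly $r$ eigenvalues of $\bm{M}$ have magnitude above the threshold $|\lambda_{r+1}^\star| + \|\bm{E}\|$. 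You instead observe that for symmetric matrices the singular values \emph{are} the eigenvalue magnitudes in the corollary's ordering, so Weyl's inequality for \emph{singular values} (Lemma~\ref{lemma:weyls-singular-value}) gives $|\lambda_{r+1}| \le |\lambda_{r+1}^\star| + \|\bm{E}\|$ in one line, after which your choice $\alpha = -\beta = -(|\lambda_{r+1}^\star| + \|\bm{E}\|)$ and $\Delta = (|\lambda_r^\star| - |\lambda_{r+1}^\star|) - \|\bm{E}\| > (|\lambda_r^\star| - |\lambda_{r+1}^\star|)/\sqrt{2}$ matches the paper's parameters exactly and the arithmetic to finish is identical. Your route is the cleaner of the two: it replaces the sign-splitting bookkeeping with a single application of the singular-value Weyl inequality, and your explicit explanation of \emph{why} the eigenvalue version of Weyl is insufficient here (mismatch between sign-order and magnitude-order) is precisely the point the paper's proof implicitly has to wrestle with.
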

The proof of Corollary~\ref{cor:davis-kahan-conclusion-corollary} is also given in Section~\ref{subsection2.2.4}.

\subsection{Proof of the Davis-Kahan \texorpdfstring{sin$\bm{\Theta}$}{TEXT} theorem}\label{subsection2.2.4}

\paragraph{Proof of Theorem \ref{thm:davis-kahan}.}

The proof proceeds by controlling the distance metric $\vertiiibig{\bm{U}_{\perp}^{\top}\bm{U}^{\star}}$, where $\vertiii{\cdot}$ denotes a unitarily invariant norm.

We start by proving the theorem under Assumption~\eqref{eq:assumption-eigenvalues-DK}, and
	claim that it suffices to consider the case where
	\begin{align}
		\alpha=-\beta \leq 0.
		\label{eq:alpha-beta-equal}
	\end{align}
	%
	In fact, if this condition is violated, then one can employ a ``centering'' trick by enforcing global offset to $\bm{M}^{\star}$ and $\bm{M}$  as follows
	\begin{align*}
		\bm{M}^{\star}_{\mathsf{c}} = \bm{M}^{\star} - \frac{\alpha + \beta}{2} \bm{I}_{n},
		\quad \text{and} \quad
		\bm{M}_{\mathsf{c}} = \bm{M} - \frac{\alpha + \beta}{2} \bm{I}_{n}.
	\end{align*}
	It is straightforwardly seen that (a) $\bm{M}^{\star}_{\mathsf{c}}$ (resp.~$\bm{M}_{\mathsf{c}}$) and $\bm{M}^{\star}$ (resp.~$\bm{M}$) share the same eigenvectors; (b) the eigenvalues of $\bm{M}^{\star}_{\mathsf{c}}$ (resp.~$\bm{M}_{\mathsf{c}}$) associated with $\bm{U}^{\star}$ (resp.~$\bm{U}_{\perp}$) reside within $[-\gamma,\gamma]$ (resp.~$(-\infty, -\gamma - \Delta] \cup [\gamma+\Delta,\infty)$), where $\gamma = \frac{\beta-\alpha}{2} \geq 0$. Consequently, this reduces to a scenario that resembles \eqref{eq:alpha-beta-equal}.   In addition, we isolate two
 immediate consequences of  Assumptions~\eqref{eq:assumption-eigenvalues-DK} and \eqref{eq:alpha-beta-equal} that prove useful:
\begin{equation}
	\|\bm{\Lambda}^{\star}\|\leq\beta, 
	\qquad\text{and}\qquad
	\sigma_{\min}(\bm{\Lambda}_{\perp}) \geq \beta+\Delta,
	\label{eq:key-estimate}
\end{equation}
where we recall that $\sigma_{\min}(\bm{\Lambda}_{\perp})$ is the minimal singular value of $\bm{\Lambda}_{\perp}$.

Armed with the above spectral conditions, we are prepared to study $\bm{U}_{\perp}^{\top}\bm{U}^{\star}$.
This is controlled through the following identity (obtained by the definition of eigenvectors): 
\begin{align}
	\bm{U}_{\perp}^{\top} (\bM - \bM^\star) \bU^\star = \bm{\Lambda}_{\perp}\bm{U}_{\perp}^{\top}\bm{U}^{\star}-\bm{U}_{\perp}^{\top}\bm{U}^{\star}\bm{\Lambda}^{\star}, \label{eq:sylvester}
\end{align}
Let $\bR \coloneqq \left(\bm{M}-\bm{M}^{\star}\right)\bm{U}^{\star} = \bm{E} \bm{U}^{\star}$.
The triangle inequality then tells us that
\begin{align}
\vertiiibig{\bm{U}_{\perp}^{\top}\bm{R}} & \geq\vertiiibig{\bm{\Lambda}_{\perp}\bm{U}_{\perp}^{\top}\bm{U}^{\star}}-\vertiiibig{\bm{U}_{\perp}^{\top}\bm{U}^{\star}\bm{\Lambda}^{\star}} \notag\\
	& \geq  \sigma_{\min}(\bm{\Lambda}_{\perp})  \vertiiibig{\bm{U}_{\perp}^{\top}\bm{U}^{\star}}- \|\bm{\Lambda}^{\star}\| \cdot \vertiiibig{\bm{U}_{\perp}^{\top}\bm{U}^{\star}} \notag\\
	& \geq(\beta+\Delta-\beta)\vertiiibig{\bm{U}_{\perp}^{\top}\bm{U}^{\star}}= \Delta\,\vertiiibig{\bm{U}_{\perp}^{\top}\bm{U}^{\star}}.
	\label{eq:Uperp-R-LB-Delta1}
\end{align}
Here, the middle line follows from Lemma~\ref{prop:unitary_norm_relation} in Section~\ref{subsec:Matrix-analysis}, whereas the last inequality arises from the properties (\ref{eq:key-estimate}).
As a consequence,
\[
\vertiiibig{\bm{U}_{\perp}^{\top}\bm{U}^{\star}}\leq\frac{\vertiiibig{\bm{U}_{\perp}^{\top}\bm{R}}}{\Delta}\leq\frac{\vertiii{\bm{R}}}{\Delta}=\frac{\vertiiibig{\bm{E}\bm{U}^{\star}}}{\Delta},
\]
where the second inequality follows again from Lemma~\ref{prop:unitary_norm_relation} and $\|\bm{U}_{\perp}\|=1$.
%
When $\vertiii{\cdot}$ is either the spectral norm or the Frobenius norm, combining the preceding inequality with Lemmas \ref{prop:unitary-norm-property}-\ref{prop:rotation-UR} and the facts $\|\bm{U}^{\star}\|=1$ and $\|\bm{U}^{\star}\|_{\mathrm{F}}=\sqrt{r}$ immediately establishes the theorem for this case.

Next, we turn to the scenario where
 Assumption (\ref{eq:assumption-eigenvalues-DK-reverse}) is in effect; it can be analyzed in a similar manner and hence we remark only on the difference.
Assuming \eqref{eq:alpha-beta-equal} holds without loss of generality, we have
\begin{equation}
	\|\bm{\Lambda}_{\perp}\|\leq\beta,\qquad\text{and}\qquad
	\sigma_{\min}(\bm{\Lambda}^{\star}) \geq \beta+\Delta.
	\label{eq:key-estimate-case2}
\end{equation}
Applying the triangle inequality  to (\ref{eq:sylvester}) in a different way yields
\begin{align*}
\vertiiibig{\bm{U}_{\perp}^{\top}\bm{R}} & \geq\vertiiibig{\bm{U}_{\perp}^{\top}\bm{U}^{\star}\bm{\Lambda}^{\star}}-\vertiiibig{\bm{\Lambda}_{\perp}\bm{U}_{\perp}^{\top}\bm{U}^{\star}}\\
 & \geq\sigma_{\min}(\bm{\Lambda}^{\star})\vertiiibig{\bm{U}_{\perp}^{\top}\bm{U}^{\star}}-\|\bm{\Lambda}_{\perp}\| \cdot \vertiiibig{\bm{U}_{\perp}^{\top}\bm{U}^{\star}}\\
 & \geq(\beta+\Delta-\beta)\vertiiibig{\bm{U}_{\perp}^{\top}\bm{U}^{\star}}=\Delta\vertiiibig{\bm{U}_{\perp}^{\top}\bm{U}^{\star}},
\end{align*}
a conclusion that coincides with \eqref{eq:Uperp-R-LB-Delta1}. The rest of the proof is the same as the one in the previous case.

Before concluding, we remark that $\vertiiibig{\bm{U}_{\perp}^{\top}\bm{U}^{\star}}=\vertiiibig{\sin\bm{\Theta}}$
holds for any unitarily invariant norm $\vertiii{\cdot}$; see \citet[Lemma 2.1]{li1998relative}. This together with the above analysis leads to Remark~\ref{rmk:generalized_DK}.

\paragraph{Proof of Corollary \ref{cor:davis-kahan-conclusion-corollary}.}
We first examine the spectral ranges of $\bm{\Lambda}_{\perp}$ and $\bm{\Lambda}^{\star}$.
Let $\lambda_i(\bm{M}^{\star})$ (resp.~$\lambda_i(\bm{M})$) be the $i$-th largest eigenvalue of $\bm{M}^{\star}$ (resp.~$\bm{M}$), sorted by their values (as opposed to their magnitudes). Then Weyl's inequality (cf.~Lemma \ref{lemma:weyl} in Section~\ref{subsec:Matrix-analysis}) asserts that
\[
	|\lambda_{i}(\bm{M})-\lambda_{i}(\bm{M}^{\star})|  \leq\|\bm{E}\|, \qquad 1\leq i\leq n.
\]
Suppose that $\bm{M}^{\star}$ has $r_1$ positive (resp.~$r_2=r-r_1$ negative) eigenvalues whose magnitudes exceed $|\lambda_{r+1}^{\star}|$.
Then for any $i$ obeying $1\leq i\leq r_{1}$ or $i>n-r_{2}$, the triangle inequality gives
\begin{align*}
\big|\lambda_{i}(\bm{M})\big| & \geq\big|\lambda_{i}(\bm{M}^{\star})\big|-\|\bm{E}\|>|\lambda_{r}^{\star}|-\big(1-1/\sqrt{2}\big)(|\lambda_{r}^{\star}|-|\lambda_{r+1}^{\star}|)\\
 & > |\lambda_{r+1}^{\star}|+\big(1-1/\sqrt{2}\big)(|\lambda_{r}^{\star}|-|\lambda_{r+1}^{\star}|)\geq|\lambda_{r+1}^{\star}|+\|\bm{E}\|,
\end{align*}
where the last inequality arises from our assumption on $\|\bm{E}\|$.
On the contrary, if $r_1 < i \leq n-r_2$, then one has
\begin{align*}
	\big|\lambda_{i}(\bm{M})\big| & \leq  \big|\lambda_{r+1}^{\star}\big| + \|\bm{E}\| .
\end{align*}
As a consequence, there are exactly $r$ (resp.~$n-r$) eigenvalues of $\bm{M}$ whose magnitudes exceed (resp.~lie below) $|\lambda_{r+1}^{\star}|+\|\bm{E}\|$.

The above observation together with the ordering $|\lambda_1 | \geq |\lambda_2 | \geq \cdots \geq |\lambda_n|$ implies
\[
\mathsf{eigenvalues}(\bm{\Lambda}_{\perp}) \subseteq \big[-|\lambda_{r+1}^{\star}|-\|\bm{E}\|, \, |\lambda_{r+1}^{\star}|+ \|\bm{E}\| \big].
\]
In addition, the assumption that $|\lambda_1^{\star} | \geq |\lambda_2^{\star} | \geq \cdots \geq |\lambda_n^{\star}|$ tells us that
\[
\mathsf{eigenvalues}(\bm{\Lambda}^{\star}) \subseteq \big(-\infty, -|\lambda_{r}^{\star}| \big] \cup \big[|\lambda_{r}^{\star}|, \infty\big).
\]
Taking $\beta = -\alpha= |\lambda_{r+1}^{\star}|+ \|\bm{E}\|$ and $\Delta=|\lambda_{r}^\star| -|\lambda_{r+1}^{\star}| - \|\bm{E}\| > (|\lambda_{r}^\star|-|\lambda_{r+1}^{\star}|) / \sqrt{2}$,
we can invoke Theorem~\ref{thm:davis-kahan} under Assumption~(\ref{eq:assumption-eigenvalues-DK-reverse}) to establish the advertised results.

\section{Perturbation theory for singular subspaces}

There is no shortage of scenarios where the data matrices under consideration are asymmetric or  rectangular. In these cases, one is often asked to study singular value decomposition (SVD) rather than eigendecomposition. Fortunately, the eigenspace perturbation theory can be naturally extended to accommodate perturbation of singular subspaces.

\subsection{Setup and notation}
\label{sec:setup-SVD}

Let $\bm{M}^{\star}$ and $\bm{M}=\bm{M}^{\star}+\bm{E}$ be two matrices
in $\mathbb{R}^{n_1 \times n_2}$ (without loss of generality, we assume $n_1\leq n_2$), whose SVDs
are given respectively by
\begin{alignat}{2}
\bm{M}^{\star} & =\sum_{i=1}^{n_1}\sigma_{i}^{\star}\bm{u}_{i}^{\star}\bm{v}_{i}^{\star\top} & & =\left[\begin{array}{cc}
\bm{U}^{\star} & \bm{U}_{\perp}^{\star}\end{array}\right]\left[\begin{array}{ccc}
\bm{\Sigma}^{\star} & \bm{0} & \bm{0}\\
\bm{0} & \bm{\Sigma}_{\perp}^{\star} & \bm{0}
\end{array}\right]\left[\begin{array}{c}
\bm{V}^{\star\top}\\
\bm{V}_{\perp}^{\star\top}
\end{array}\right];  \label{eq:Mstar-SVD} \\
\bm{M} & =\sum_{i=1}^{n_1}\sigma_{i}\bm{u}_{i}\bm{v}_{i}^{\top} & & =\left[\begin{array}{cc}
\bm{U} & \bm{U}_{\perp}\end{array}\right]\left[\begin{array}{ccc}
\bm{\Sigma} & \bm{0} & \bm{0}\\
\bm{0} & \bm{\Sigma}_{\perp} & \bm{0}
\end{array}\right]\left[\begin{array}{c}
\bm{V}^{\top}\\
\bm{V}_{\perp}^{\top}
\end{array}\right]. \label{eq:M-SVD}
\end{alignat}
Here, $\sigma_{1}\geq\cdots\geq\sigma_{n_1}$ (resp.~$\sigma_{1}^{\star}\geq\cdots\geq\sigma_{n_1}^{\star}$)
stand for the singular values of $\bm{M}$ (resp.~$\bm{M}^{\star}$)
arranged in descending order, $\bm{u}_{i}$ (resp\@.~$\bm{u}_{i}^{\star}$)
denotes the left singular vector associated with the singular value
$\sigma_{i}$ (resp.~$\sigma_{i}^{\star}$), and $\bm{v}_{i}$ (resp\@.~$\bm{v}_{i}^{\star}$)
represents the right singular vector associated with $\sigma_{i}$ (resp.~$\sigma_{i}^{\star}$).
In addition, we denote
\begin{align*}
	 \bm{\Sigma}  &\coloneqq\mathsf{diag}\big([\sigma_{1},\cdots,\sigma_{r}]\big),\quad &\bm{\Sigma}_{\perp} &\coloneqq\mathsf{diag}\big([\sigma_{r+1},\cdots,\sigma_{n_1}]\big),\\
	 \bm{U}  &\coloneqq[\bm{u}_{1},\cdots,\bm{u}_{r}]\in \mathbb{R}^{n_1\times r},\qquad &\bm{U}_{\perp} &\coloneqq[\bm{u}_{r+1},\cdots,\bm{u}_{n_1}]\in \mathbb{R}^{n_1\times (n_1-r)},\\
	 \bm{V}  &\coloneqq[\bm{v}_{1},\cdots,\bm{v}_{r}]\in \mathbb{R}^{n_2\times r},\qquad &\bm{V}_{\perp} &\coloneqq[\bm{v}_{r+1},\cdots,\bm{v}_{n_2}] \in \mathbb{R}^{n_2\times (n_2-r)}.
\end{align*}
The matrices $\bm{\Sigma}^{\star},\bm{\Sigma}_{\perp}^{\star},\bm{U}^{\star},\bm{U}_{\perp}^{\star},\bm{V}^{\star},\bm{V}_{\perp}^{\star}$
are defined analogously.

\subsection{Wedin's sin$\bm\Theta$ theorem}

\citet{wedin1972perturbation} developed a perturbation bound for singular subspaces that parallels the Davis-Kahan sin$\bm{\Theta}$ theorem for eigenspaces. In what follows, we present a version that is convenient for subsequent discussions in this monograph.
\begin{theorem}[Wedin's sin$\bm{\Theta}$ theorem]
\label{thm:wedin}
Consider the settings in Section~\ref{sec:setup-SVD}. If $\|\bm{E}\|<\sigma_{r}^{\star}-\sigma_{r+1}^{\star}$, then one has
\begin{align*}
\max\left\{ \mathsf{dist}\big(\bm{U},\bm{U}^{\star}\big),\mathsf{dist}\big(\bm{V},\bm{V}^{\star}\big)\right\}
	& \leq\frac{ \sqrt{2} \max\big\{ \|\bm{E}^{\top}\bm{U}^{\star}\|,\|\bm{E}\bm{V}^{\star}\|\big\} }{\sigma_{r}^{\star}-\sigma_{r+1}^{\star}-\|\bm{E}\|};\\
\max\left\{ \mathsf{dist}_{\mathrm{F}}\big(\bm{U},\bm{U}^{\star}\big),\mathsf{dist}_{\mathrm{F}}\big(\bm{V},\bm{V}^{\star}\big)\right\}
	& \leq\frac{\sqrt{2}\max\big\{ \|\bm{E}^{\top}\bm{U}^{\star}\|_{\mathrm{F}},\|\bm{E}\bm{V}^{\star}\|_{\mathrm{F}}\big\} }{\sigma_{r}^{\star}-\sigma_{r+1}^{\star}-\|\bm{E}\|}.
\end{align*}
\end{theorem}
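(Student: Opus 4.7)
The strategy is to reduce the singular subspace perturbation problem to an eigenspace perturbation problem (for which Theorem~\ref{thm:davis-kahan} applies) via the standard Hermitian dilation. Specifically, I would introduce the symmetric $(n_1+n_2)\times (n_1+n_2)$ matrices
\[
\widetilde{\bm{M}}^{\star} \coloneqq \begin{bmatrix} \bm{0} & \bm{M}^{\star} \\ \bm{M}^{\star\top} & \bm{0} \end{bmatrix},
\qquad
\widetilde{\bm{M}} \coloneqq \begin{bmatrix} \bm{0} & \bm{M} \\ \bm{M}^{\top} & \bm{0} \end{bmatrix},
\qquad
\widetilde{\bm{E}} \coloneqq \widetilde{\bm{M}} - \widetilde{\bm{M}}^{\star},
\]
and note that $\|\widetilde{\bm{E}}\| = \|\bm{E}\|$. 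A straightforward calculation using $\bm{M}^{\star}\bm{V}^{\star} = \bm{U}^{\star}\bm{\Sigma}^{\star}$ and $\bm{M}^{\star\top}\bm{U}^{\star} = \bm{V}^{\star}\bm{\Sigma}^{\star}$ shows that the columns of
\[
\bm{P}^{\star} \coloneqq \begin{bmatrix} \bm{U}^{\star} & \bm{0} \\ \bm{0} & \bm{V}^{\star} \end{bmatrix}
\]
form an orthonormal basis of a $2r$-dimensional invariant subspace of $\widetilde{\bm{M}}^{\star}$, whose associated eigenvalues are exactly $\pm\sigma_{1}^{\star},\ldots,\pm\sigma_{r}^{\star}$; the remaining $n_{1}+n_{2}-2r$ eigenvalues of $\widetilde{\bm{M}}^{\star}$ are $\pm\sigma_{r+1}^{\star},\ldots,\pm\sigma_{n_1}^{\star}$ together with $n_{2}-n_{1}$ zeros, all lying in $[-\sigma_{r+1}^{\star},\sigma_{r+1}^{\star}]$. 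The analogous matrix $\bm{P}$ (with $\bm{U},\bm{V}$ in place of the starred versions) spans an invariant subspace of $\widetilde{\bm{M}}$, and Lemma~\ref{lemma:weyls-singular-value} (Weyl for singular values) shows that the perp eigenvalues of $\widetilde{\bm{M}}$ lie in $[-\sigma_{r+1}^{\star}-\|\bm{E}\|, \sigma_{r+1}^{\star}+\|\bm{E}\|]$.

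I would then apply Theorem~\ref{thm:davis-kahan} (under Assumption~\eqref{eq:assumption-eigenvalues-DK-reverse}) to the pair $(\bm{P}^{\star},\bm{P})$ with $\alpha=-\sigma_{r+1}^{\star}-\|\bm{E}\|$, $\beta=\sigma_{r+1}^{\star}+\|\bm{E}\|$, and gap $\Delta = \sigma_{r}^{\star}-\sigma_{r+1}^{\star}-\|\bm{E}\|>0$ (using the hypothesis on $\|\bm{E}\|$). Denoting by $\widetilde{\bm{\Theta}}$ the principal angles between $\bm{P}$ and $\bm{P}^{\star}$, this yields
\[
\|\sin\widetilde{\bm{\Theta}}\| \;\leq\; \frac{\|\widetilde{\bm{E}}\bm{P}^{\star}\|}{\sigma_{r}^{\star}-\sigma_{r+1}^{\star}-\|\bm{E}\|},
\]
and similarly in the Frobenius norm (via Remark~\ref{rmk:generalized_DK}). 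A direct block computation gives
\[
\widetilde{\bm{E}}\bm{P}^{\star} = \begin{bmatrix} \bm{0} & \bm{E}\bm{V}^{\star} \\ \bm{E}^{\top}\bm{U}^{\star} & \bm{0} \end{bmatrix},
\]
whose spectral norm equals $\max\{\|\bm{E}\bm{V}^{\star}\|,\|\bm{E}^{\top}\bm{U}^{\star}\|\}$ and whose Frobenius norm equals $\sqrt{\|\bm{E}\bm{V}^{\star}\|_{\mathrm{F}}^{2}+\|\bm{E}^{\top}\bm{U}^{\star}\|_{\mathrm{F}}^{2}}$.

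To translate the dilated bound back to the original subspaces, observe that $\bm{P}^{\top}\bm{P}^{\star} = \mathsf{diag}(\bm{U}^{\top}\bm{U}^{\star},\bm{V}^{\top}\bm{V}^{\star})$ is block diagonal, so its singular values form the union of the singular values of $\bm{U}^{\top}\bm{U}^{\star}$ and $\bm{V}^{\top}\bm{V}^{\star}$. Consequently the principal angles between $\bm{P}$ and $\bm{P}^{\star}$ decompose as the union of the principal angles $\bm{\Theta}_U$ between $\bm{U},\bm{U}^{\star}$ and $\bm{\Theta}_V$ between $\bm{V},\bm{V}^{\star}$. In particular
\[
\|\sin\bm{\Theta}_U\| \,\leq\, \|\sin\widetilde{\bm{\Theta}}\|,
\qquad
\|\sin\bm{\Theta}_V\| \,\leq\, \|\sin\widetilde{\bm{\Theta}}\|,
\]
and likewise for the Frobenius norm. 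The $\mathsf{dist}$ statements then follow from Lemma~\ref{prop:rotation-UR} (which controls $\mathsf{dist}$ by $\sqrt{2}\|\sin\bm{\Theta}\|$).

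The only delicate point I anticipate is verifying that Theorem~\ref{thm:davis-kahan} can be invoked on the $2r$-dimensional invariant subspace $\bm{P}^{\star}$ even though it is not the ``top-$r$'' eigenspace in the sense of the statement: one has to check that the proof of Theorem~\ref{thm:davis-kahan} (in particular the Sylvester identity and the subsequent lower bound on $\vertiiiplain{\bm{U}_{\perp}^{\top}\bm{R}}$) uses only the invariance of the two subspaces and the eigenvalue-range assumptions, so it extends verbatim. A second minor nuisance is accounting for the $n_2 - n_1$ spurious zero eigenvalues introduced by the dilation when $n_{2}>n_{1}$, but these are harmlessly absorbed into the perp block since $0\in[-\sigma_{r+1}^{\star},\sigma_{r+1}^{\star}]$. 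Otherwise, all remaining steps are routine manipulations with block matrix norms.
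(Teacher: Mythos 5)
Your dilation-based reduction to Davis--Kahan is a legitimate and classical strategy, and it is genuinely different from what the paper does: the paper works directly with the two SVDs, derives the algebraic identity
\[
\bm{U}_{\perp}^{\top}\bm{U}^{\star}=\bm{\Sigma}_{\perp}\bm{V}_{\perp}^{\top}\bm{V}^{\star}\bm{\Sigma}^{\star-1}-\bm{U}_{\perp}^{\top}\bm{E}\bm{V}^{\star}\bm{\Sigma}^{\star-1},
\]
bounds it and its mirror image in terms of each other, and closes the resulting coupled system of two scalar inequalities. Your approach instead embeds everything in the symmetric dilation and invokes Theorem~\ref{thm:davis-kahan} once. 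Both reductions are sound, and the points you flag (invoking DK on an invariant subspace that is not a ``top block,'' and absorbing the $n_2-n_1$ spurious zero eigenvalues into the perp block) are indeed the right things to worry about; your verification of them is correct. Your computation of $\widetilde{\bm{E}}\bm{P}^{\star}$, the block-diagonal structure of $\bm{P}^{\top}\bm{P}^{\star}$, and the resulting decomposition of principal angles are all correct, and your argument proves the spectral-norm bound exactly as stated.

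There is, however, a genuine gap in the Frobenius case. After the dilation you have $\|\sin\widetilde{\bm{\Theta}}\|_{\mathrm{F}}^2=\|\sin\bm{\Theta}_U\|_{\mathrm{F}}^2+\|\sin\bm{\Theta}_V\|_{\mathrm{F}}^2$ and $\|\widetilde{\bm{E}}\bm{P}^{\star}\|_{\mathrm{F}}=\bigl(\|\bm{E}\bm{V}^{\star}\|_{\mathrm{F}}^2+\|\bm{E}^{\top}\bm{U}^{\star}\|_{\mathrm{F}}^2\bigr)^{1/2}$, so the best you can conclude is
\[
\max\bigl\{\|\sin\bm{\Theta}_U\|_{\mathrm{F}},\|\sin\bm{\Theta}_V\|_{\mathrm{F}}\bigr\}\;\leq\;\frac{\bigl(\|\bm{E}^{\top}\bm{U}^{\star}\|_{\mathrm{F}}^2+\|\bm{E}\bm{V}^{\star}\|_{\mathrm{F}}^2\bigr)^{1/2}}{\sigma_r^{\star}-\sigma_{r+1}^{\star}-\|\bm{E}\|},
\]
whose numerator can exceed $\max\{\|\bm{E}^{\top}\bm{U}^{\star}\|_{\mathrm{F}},\|\bm{E}\bm{V}^{\star}\|_{\mathrm{F}}\}$ by a factor of $\sqrt{2}$ (attained when the two are equal). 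Feeding this through Lemma~\ref{prop:rotation-UR} gives the theorem's Frobenius bound with constant $2$ rather than the stated $\sqrt{2}$. This loss is not an artifact of sloppiness but is intrinsic to the dilation: the dilated angle matrix couples the left and right angles in an $\ell_2$ sense, so there is no way to isolate $\|\sin\bm{\Theta}_U\|_{\mathrm{F}}$ alone without conceding that it might carry the full mass. The paper's direct argument avoids this by never passing through the dilated quantity: the coupled recursion bounds $\|\bm{U}_{\perp}^{\top}\bm{U}^{\star}\|_{\mathrm{F}}$ and $\|\bm{V}_{\perp}^{\top}\bm{V}^{\star}\|_{\mathrm{F}}$ by each other and by $\max\{\|\bm{E}^{\top}\bm{U}^{\star}\|_{\mathrm{F}},\|\bm{E}\bm{V}^{\star}\|_{\mathrm{F}}\}$ directly, which is what yields the tighter numerator. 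So your proof plan establishes the spectral-norm half of the theorem and a correct but slightly weaker Frobenius bound; to recover the stated constant you would need to replace the dilation step for the Frobenius part with a decoupled argument along the lines of the paper's.
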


This theorem simultaneously controls the perturbation of left and right singular subspaces. As a worthy note, both the interaction between $\bm{E}$ and $\bm{U}^{\star}$, and that between $\bm{E}$ and $\bm{V}^{\star}$, come into play in determining the perturbation bounds.
In particular, if $\|\bm{E}\|< (1-1/\sqrt{2})(\sigma_{r}^{\star}-\sigma_{r+1}^{\star})$, then one can apply Lemma~\ref{prop:unitary_norm_relation} in Section~\ref{subsec:Matrix-analysis} to obtain
\begin{subequations}
\label{eq:wedin-simpler-friendly}
\begin{align}
\max\left\{ \mathsf{dist}\big(\bm{U},\bm{U}^{\star}\big),\mathsf{dist}\big(\bm{V},\bm{V}^{\star}\big)\right\}
	& \leq\frac{ 2 \|\bm{E}\| }{\sigma_{r}^{\star}-\sigma_{r+1}^{\star}}, \\
\max\left\{ \mathsf{dist}_{\mathrm{F}}\big(\bm{U},\bm{U}^{\star}\big),\mathsf{dist}_{\mathrm{F}}\big(\bm{V},\bm{V}^{\star}\big)\right\}
	& \leq \frac{ 2\sqrt{r}\|\bm{E}\| }{\sigma_{r}^{\star}-\sigma_{r+1}^{\star}},
\end{align}
\end{subequations}
akin to the eigenspace perturbation bounds \eqref{eq:davis-kahan-conclusion-corollary}.

\subsection{Proof of the Wedin \texorpdfstring{sin$\bm{\Theta}$}{TEXT} theorem}
We now present a proof of the Wedin theorem.
Similar to the proof of the Davis-Kahan theorem, we start by bounding $\vertiiibig{\bm{U}_{\perp}^{\top}\bm{U}^{\star}}$,
where $\vertiii{\cdot}$ stands for any unitarily invariant
norm. To this end, it is seen that
\begin{align}
\bm{U}_{\perp}^{\top}\bm{U}^{\star}
	& =\bm{U}_{\perp}^{\top}\big(\bm{U}^{\star}\bm{\Sigma}^{\star}\bm{V}^{\star\top}\big)\bm{V}^{\star}\bm{\Sigma}^{\star-1}\nonumber \\
	& =\bm{U}_{\perp}^{\top}\left(\bm{M}-\bm{E}-\bm{U}_{\perp}^{\star}\bm{\Sigma}_{\perp}^{\star}\bm{V}_{\perp}^{\star\top}\right)\bm{V}^{\star}\bm{\Sigma}^{\star-1}\nonumber \\
 & =\bm{U}_{\perp}^{\top}\left(\bm{U}\bm{\Sigma}\bm{V}^{\top}+\bm{U}_{\perp}\bm{\Sigma}_{\perp}\bm{V}_{\perp}^{\top}-\bm{E}-\bm{U}_{\perp}^{\star}\bm{\Sigma}_{\perp}^{\star}\bm{V}_{\perp}^{\star\top}\right)\bm{V}^{\star}\bm{\Sigma}^{\star-1}\nonumber \\
 & =\bm{\Sigma}_{\perp}\bm{V}_{\perp}^{\top}\bm{V}^{\star}\bm{\Sigma}^{\star-1}-\bm{U}_{\perp}^{\top}\bm{E}\bm{V}^{\star}\bm{\Sigma}^{\star-1} .
	\label{eq:wedin-identity}
\end{align}
Here, the first identity is valid as long as $\bm{\Sigma}^{\star}$ is invertible (which is guaranteed since $\sigma_{\min}(\bm{\Sigma}^{\star})=\sigma_r^{\star} > \sigma_{r+1}^{\star}+ \|\bm{E}\|>0$ under our assumption),
the second line follows from the identities $\bm{M}-\bm{E}=\bm{M}^{\star}$ and $\bm{M}^{\star}=\bm{U}^{\star}\bm{\Sigma}^{\star}\bm{V}^{\star\top}+\bm{U}_{\perp}^{\star}\bm{\Sigma}_{\perp}^{\star}\bm{V}_{\perp}^{\star\top}$, the third line holds since $\bm{M}=\bm{U}\bm{\Sigma}\bm{V}^{\top}+\bm{U}_{\perp}\bm{\Sigma}_{\perp}\bm{V}_{\perp}^{\top}$,
whereas the last identity exploits the property
\[
\bm{U}_{\perp}^{\top}\bm{U}=\bm{0},\qquad \text{and}\qquad \bm{V}_{\perp}^{\star\top}\bm{V}^{\star}=\bm{0}.
\]
Applying the triangle inequality and Lemma~\ref{prop:unitary_norm_relation} in Section~\ref{subsec:Matrix-analysis} to the identity (\ref{eq:wedin-identity})
yields
\begin{align}
\vertiiibig{\bm{U}_{\perp}^{\top}\bm{U}^{\star}}
	& \leq \|\bm{\Sigma}_{\perp}\| \cdot \vertiiibig{\bm{V}_{\perp}^{\top}\bm{V}^{\star}} \cdot \|\bm{\Sigma}^{\star-1}\|
	 +  \| \bm{U}_{\perp}^{\top}\|\cdot \vertiiibig{\bm{E}\bm{V}^{\star}} \cdot \|\bm{\Sigma}^{\star-1}\| \notag\\
	 & =  \sigma_{r+1} \cdot \vertiiibig{\bm{V}_{\perp}^{\top}\bm{V}^{\star}} \cdot \frac{1}{\sigma_r^{\star}}
	 + \vertiiibig{\bm{E}\bm{V}^{\star}} \cdot \frac{1}{\sigma_r^{\star}}
	   \notag\\
 & \leq\frac{\sigma_{r+1}^{\star}+\|\bm{E}\|}{\sigma_{r}^{\star}}\vertiiibig{\bm{V}_{\perp}^{\top}\bm{V}^{\star}}+\frac{\vertiii{\bm{E}\bm{V}^{\star}}}{\sigma_{r}^{\star}}.
	\label{eq:Uperp-Ustar-UB}
\end{align}
Here, the second line uses the properties $\|\bm{\Sigma}^{\star-1}\|=1/\sigma_{r}^{\star}$ and $\|\bm{\Sigma}_{\perp}\|= \sigma_{r+1}$, while
the last inequality follows from Weyl's inequality $\sigma_{r+1}\leq \sigma_{r+1}^{\star}+\|\bm{E}\|$ (cf.~Lemma~\ref{lemma:weyls-singular-value} in Section~\ref{subsec:Matrix-analysis}). Repeating the same argument yields
\begin{align}
\vertiiibig{\bm{V}_{\perp}^{\top}\bm{V}^{\star}}\leq\frac{\vertiiibig{\bm{E}^{\top}\bm{U}^{\star}}}{\sigma_{r}^{\star}}
	+\frac{\sigma_{r+1}^{\star}+\|\bm{E}\|}{\sigma_{r}^{\star}}\vertiiibig{\bm{U}_{\perp}^{\top}\bm{U}^{\star}}.
	\label{eq:Vperp-Vstar-UB}
\end{align}

To finish up, combine the inequalities \eqref{eq:Uperp-Ustar-UB} and \eqref{eq:Vperp-Vstar-UB} to obtain
\begin{align*}
 & \max\big\{ \vertiiibig{\bm{U}_{\perp}^{\top}\bm{U}^{\star}},\vertiiibig{\bm{V}_{\perp}^{\top}\bm{V}^{\star}}\big\}
  \leq \frac{  \max\big\{  \vertiiibig{\bm{E}^{\top}\bm{U}^{\star}},\vertiiibig{\bm{E}\bm{V}^{\star}}  \big\} }{\sigma_r^{\star}}  \\
	&	\qquad\qquad\qquad\qquad
	+ \frac{\sigma_{r+1}^{\star}+\|\bm{E}\|}{\sigma_{r}^{\star}} \max\big\{ \vertiiibig{\bm{U}_{\perp}^{\top}\bm{U}^{\star}},\vertiiibig{\bm{V}_{\perp}^{\top}\bm{V}^{\star}}\big\} .
\end{align*}
When $\|\bm{E}\|<\sigma_r^{\star} - \sigma_{r+1}^{\star}$, we can rearrange terms to arrive at
\begin{align*}
 & \max\big\{ \vertiiibig{\bm{U}_{\perp}^{\top}\bm{U}^{\star}},\vertiiibig{\bm{V}_{\perp}^{\top}\bm{V}^{\star}}\big\}
	\leq\frac{\max\big\{ \vertiiibig{\bm{E}^{\top}\bm{U}^{\star}},\vertiiibig{\bm{E}\bm{V}^{\star}}\big\} }{\sigma_{r}^{\star}-\sigma_{r+1}^{\star}-\|\bm{E}\|}.
\end{align*}
The proof is then completed by invoking Lemmas~\ref{prop:unitary-norm-property} and \ref{prop:rotation-UR}.

\section{Eigenvector perturbation for probability transition matrices}
\label{sec:eigenvector-theory-DK}

Thus far, our eigenvector perturbation analysis has been constrained to the set of \emph{symmetric}
matrices. Note, however, that the utility of eigenvectors is by no means confined to symmetric matrices.
In fact, eigenvector analysis  plays a vital role in studying  asymmetric matrices as well, most notably the family of probability transition matrices of Markov chains.  This section explores how to extend eigenvector perturbation theory to accommodate an important class of probability transition matrices associated with \emph{reversible} Markov chains.

\subsection{Background, setup and notation}
\label{subsec:Setup-and-notation-prob-matrix}

Before presenting the formulation, we remind the readers that a matrix $\bm{P}\in\mathbb{R}^{n\times n}$
is a probability transition matrix if it is composed of non-negative entries with
each row summing to $1$, which is used to describe the state transition of a Markov chain over a set of $n$ states.
Of special interest is the stationary distribution of $\bm{P}$, denoted by a probability vector $\bm{\pi}=[\pi_i]_{1\leq i\leq n}$,
that satisfies
\begin{equation}\label{eq2.30}
\bm{\pi}\geq\bm{0},\qquad\bm{1}^{\top}\bm{\pi}=1,\qquad\text{and}\qquad\bm{\pi}^{\top}\bm{P}=\bm{\pi}^{\top}.
\end{equation}
In words, the distribution $\bm{\pi}$ is invariant with respect to $\bm{P}$.
Clearly, $\bm{\pi}$ is the left eigenvector of $\bm{P}$ associated with eigenvalue $1$, with the corresponding right eigenvector given by $\bm{1}$.  By the  Gershgorin circle theorem (see, e.g., \citet{olver2006applied}), the modulus of all eigenvalues must be bounded by the maximum of the row sum, which is  1.  Given that $1$ is an eigenvalue of $\bm{P}$, the largest modulus of the eigenvalues of $\bm{P}$ is precisely 1, and therefore $\bm{\pi}$ is the {\em leading} left eigenvector of $\bm{P}$.
In addition, a Markov chain is said to be reversible when the following {\em detailed balance} equations are satisfied:
\begin{equation}
	\pi_{i} P_{i,j} =\pi_{j} P_{j,i},  \qquad \text{for all }1\leq i,j\leq n, \label{eq:detailed-balance}
\end{equation}
where $\bm{\pi}=[\pi_i]_{1\leq i\leq n}$ is the stationary distribution obeying~\eqref{eq2.30}.  It will be seen in the proof of Theorem~\ref{thm:DK_asym} that all eigenvalues of such a matrix $\bP$ are real. For readers who wish  an introduction to the basics of Markov chains, 
we recommend  the monograph by \citet{bremaud2013markov}.

In this section, we consider the probability transition matrix $\bm{P}^{\star} \in \mathbb{R}^{n\times n}$ of a reversible Markov chain, as well as its perturbed version---also in the form of a probability transition matrix:
\begin{align*}
	\bm{P}=\bm{P}^{\star}+\bm{E} \in \mathbb{R}^{n\times n}.
\end{align*}
The leading left eigenvectors of $\bm{P}^{\star}$ and $\bm{P}$---or equivalently, the vectors representing their stationary distributions---are denoted by $\bm{\pi}^{\star}$ and $\bm{\pi}$, respectively.
Here, we allow $\bm{E}$ to be fairly general, meaning that $\bm{P}$ does not necessarily represent a reversible Markov chain. The question is: how does the matrix $\bm{E}$ affect the perturbation $\bm{\pi} - \bm{\pi}^{\star}$ of the leading left eigenvector of interest?

Additionally, we find it helpful to introduce several notation frequently used in the studies of Markov chains.
Instead of operating under the usual $\ell_{2}$ norm, the stationary
distribution $\bm{\pi}$ equips us with a new set of norms.
Specifically,  for a strictly positive probability vector $\bm{\pi}=[\pi_i]_{1\leq i\leq n}$, any vector $\bm{x}=[x_i]_{1\leq i\leq n}$ and any matrix $\bm{A}$,
it is useful to introduce the vector norm $\|\bm{x}\|_{\bm{\pi}} \coloneqq \sqrt{\sum_i \pi_i x_i^2}$
and the corresponding matrix norm $\|\bm{A}\|_{\bm{\pi}} \coloneqq  \sup_{\|\bm{x}\|_{\bm{\pi}}=1}\|\bm{A}\bm{x}\|_{\bm{\pi}}$.

\subsection{Perturbation of the leading eigenvector}
\label{sec:perturbation-theory-MC}

Now we are ready to present the perturbation bound for the leading left
eigenvector of a probability transition matrix, a result originally developed in \citet{chen2017spectral}.

\begin{theorem}
\label{thm:DK_asym}
	Consider the settings in Section~\ref{subsec:Setup-and-notation-prob-matrix}. Suppose that $\bm{P}^{\star}$ represents a reversible Markov chain, whose stationary distribution vector $\bm{\pi}^{\star}$ is strictly positive.  Assume that
\begin{equation}
	\left\Vert \bm{E}\right\Vert _{\bm{\pi}^{\star}}
	< 1-\max\big\{ \lambda_{2}(\bm{P}^{\star}),-\lambda_{n}(\bm{P}^{\star})\big\} .
	\label{eq:spectral-gap-condition}
\end{equation}
Then one has
\[
	\|\bm{\pi}-\bm{\pi}^{\star}\|_{\bm{\pi}^{\star}}
	\leq
	\frac{\big\Vert \bm{\pi}^{\star\top}\bm{E}\big\Vert _{\bm{\pi}^{\star}}}{1-\max\big\{ \lambda_{2}(\bm{P}^{\star}),-\lambda_{n}(\bm{P}^{\star})\big\}
	-\left\Vert \bm{E}\right\Vert _{\bm{\pi}^{\star}}}.
\]
\end{theorem}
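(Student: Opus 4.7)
The central idea of the proof is a similarity transform that exploits reversibility to convert the asymmetric spectral problem into a nearly symmetric one. Let $\bm{\Pi}^\star = \mathsf{diag}(\bm{\pi}^\star)$; since $\bm{\pi}^\star$ is strictly positive, $\bm{\Pi}^\star$ is invertible. I would define the conjugated matrices
\begin{equation*}
\bm{S}^\star \coloneqq (\bm{\Pi}^\star)^{1/2} \bm{P}^\star (\bm{\Pi}^\star)^{-1/2}, \quad \bm{S} \coloneqq (\bm{\Pi}^\star)^{1/2} \bm{P} (\bm{\Pi}^\star)^{-1/2}, \quad \bm{F} \coloneqq \bm{S} - \bm{S}^\star.
\end{equation*}
The detailed-balance relation \eqref{eq:detailed-balance} is equivalent to $\bm{\Pi}^\star \bm{P}^\star = \bm{P}^{\star\top} \bm{\Pi}^\star$, from which one reads off that $\bm{S}^\star$ is real symmetric; in particular, all eigenvalues of $\bm{P}^\star$ coincide with those of $\bm{S}^\star$ and are real, which is the unproved claim invoked just after \eqref{eq:detailed-balance}. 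Transforming eigenvectors accordingly, set $\bm{u}^\star \coloneqq (\bm{\Pi}^\star)^{-1/2} \bm{\pi}^\star$ and $\bm{u} \coloneqq (\bm{\Pi}^\star)^{-1/2} \bm{\pi}$; one checks that $u_i^\star = \sqrt{\pi_i^\star}$, so $\|\bm{u}^\star\|_2 = 1$, and $\bm{u}^{\star\top}\bm{S}^\star = \bm{u}^{\star\top}$, $\bm{u}^\top\bm{S} = \bm{u}^\top$. Under the correspondence induced by this change of variables, $\|\bm{u} - \bm{u}^\star\|_2$, $\|\bm{u}^{\star\top}\bm{F}\|_2$, and $\|\bm{F}\|$ match the three $\bm{\pi}^\star$-weighted norms appearing in the statement.

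The next step is to harvest two orthogonality properties. Mass conservation $\bm{1}^\top\bm{\pi} = \bm{1}^\top\bm{\pi}^\star = 1$ pulls back through the transform to give $\bm{u}_\perp \perp \bm{u}^\star$, where $\bm{u}_\perp \coloneqq \bm{u} - \bm{u}^\star$. Moreover, because both $\bm{P}$ and $\bm{P}^\star$ are stochastic, $\bm{E}\bm{1} = \bm{0}$, and consequently $\bm{u}^\top\bm{F}$ (as a row vector) is also orthogonal to $\bm{u}^\star$. Subtracting $\bm{u}^{\star\top}\bm{S}^\star = \bm{u}^{\star\top}$ from $\bm{u}^\top(\bm{S}^\star + \bm{F}) = \bm{u}^\top$ and splitting $\bm{u} = \bm{u}^\star + \bm{u}_\perp$ then produces the key identity
\begin{equation*}
\bm{u}_\perp^\top (\bm{I} - \bm{S}^\star) = \bm{u}^{\star\top}\bm{F} + \bm{u}_\perp^\top\bm{F},
\end{equation*}
with both sides lying entirely in $\{\bm{u}^\star\}^\perp$.

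I would finish by inverting $\bm{I} - \bm{S}^\star$ on this invariant subspace. Since $\bm{S}^\star$ is symmetric with $\bm{u}^\star$ as the leading eigenvector, the remaining eigenvalues of $\bm{S}^\star$ on $\{\bm{u}^\star\}^\perp$ are $\lambda_2(\bm{P}^\star), \ldots, \lambda_n(\bm{P}^\star)$, so the operator norm of $\bm{S}^\star$ restricted to this subspace equals $\max\{\lambda_2(\bm{P}^\star), -\lambda_n(\bm{P}^\star)\}$; assumption \eqref{eq:spectral-gap-condition} makes this strictly smaller than $1$, and a Neumann series supplies a bounded inverse with $\|(\bm{I}-\bm{S}^\star)^{-1}|_{\{\bm{u}^\star\}^\perp}\| \le \bigl(1 - \max\{\lambda_2,-\lambda_n\}\bigr)^{-1}$. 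Taking Euclidean norms in the displayed identity, using $\|\bm{u}_\perp^\top\bm{F}\|_2 \le \|\bm{F}\|\|\bm{u}_\perp\|_2$, and rearranging gives
\begin{equation*}
\bigl(1 - \max\{\lambda_2,-\lambda_n\} - \|\bm{E}\|_{\bm{\pi}^\star}\bigr)\|\bm{u}_\perp\|_2 \le \|\bm{u}^{\star\top}\bm{F}\|_2,
\end{equation*}
which, after translating back through the correspondences above, is exactly the stated bound.

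The main obstacle is locating the correct subspace on which to work: because $\bm{E}$ is not assumed to produce a reversible $\bm{P}$, the perturbed matrix $\bm{S}$ is not symmetric and the Davis-Kahan/Wedin theorems cannot be applied directly to $\bm{S}^\star + \bm{F}$. The two orthogonality facts $\bm{u}_\perp \perp \bm{u}^\star$ (from mass conservation) and $\bm{u}^\top\bm{F} \perp \bm{u}^\star$ (from $\bm{E}\bm{1}=\bm{0}$) are what keep the entire perturbation equation inside $\{\bm{u}^\star\}^\perp$, where the symmetric operator $\bm{S}^\star$ has the advertised spectral gap and the Neumann inversion is legitimate; without them one would incur an uncontrolled component along $\bm{u}^\star$.
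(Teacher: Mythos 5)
Your proof is correct and is essentially the paper's argument carried out in the symmetrized coordinates rather than in the $\bm{\pi}^{\star}$-weighted norms directly. Where you write the key identity $\bm{u}_\perp^\top(\bm{I}-\bm{S}^\star) = \bm{u}^{\star\top}\bm{F}+\bm{u}_\perp^\top\bm{F}$ and restrict $\bm{S}^\star$ to $\{\bm{u}^\star\}^\perp$, the paper instead inserts the rank-one deflation $\bm{1}\bm{\pi}^{\star\top}$ into the decomposition of $\bm{\pi}^\top-\bm{\pi}^{\star\top}$ (exploiting the same fact, $(\bm{\pi}-\bm{\pi}^\star)^\top\bm{1}=0$, that you use for $\bm{u}_\perp\perp\bm{u}^\star$), applies the triangle inequality in $\|\cdot\|_{\bm{\pi}^\star}$, and separately proves $\|\bm{P}^\star-\bm{1}\bm{\pi}^{\star\top}\|_{\bm{\pi}^\star}=\max\{\lambda_2,-\lambda_n\}$; since $\bm{S}^\star-\bm{u}^\star\bm{u}^{\star\top}$ annihilates $\bm{u}^\star$, its global norm equals the restricted norm you use, so the two quantities coincide. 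Your route makes the orthogonality structure more explicit and avoids a mild notational wrinkle in the paper about row-vector versus column-vector $\bm{\pi}^\star$-norms; note that the final rearrangement you perform requires only the direct lower bound $\|\bm{u}_\perp^\top(\bm{I}-\bm{S}^\star)\|_2 \ge (1-\max\{\lambda_2,-\lambda_n\})\|\bm{u}_\perp\|_2$, so the Neumann inversion you mention is a valid but unnecessary detour, and the second orthogonality fact $\bm{F}\bm{u}^\star=\bm{0}$ is used only if one chooses to invert rather than bound from below.
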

The similarity between Theorem~\ref{thm:DK_asym}
and Corollary~\ref{cor:davis-kahan-conclusion-corollary} is noteworthy. Indeed,
recalling that the largest eigenvalue of the probability transition matrix $\bm{P}^{\star}$ is precisely 1,
one might view $1-\max\left\{ \lambda_{2}(\bm{P}^{\star}),-\lambda_{n}(\bm{P}^{\star})\right\} $
as the gap between the first and the second largest eigenvalues of
$\bm{P}^{\star}$ (in magnitude),
akin to the eigengap $|\lambda_{r}^{\star}|-|\lambda_{r+1}^{\star}|$ in Corollary~\ref{cor:davis-kahan-conclusion-corollary} (with $r=1$).
In words, Theorem~\ref{thm:DK_asym} guarantees that as long as the size of the
perturbation matrix $\bm{E}$ is not too large, 
the  perturbation of the leading left eigenvector---or equivalently, the perturbation of the stationary distribution of the associated Markov chain---is proportional to the size of the noise when projected onto the direction $\bm{\pi}^{\star}$, as measured by $\Vert \bm{\pi}^{\star\top}\bm{E}\Vert _{\bm{\pi}^{\star}}$.
As we shall demonstrate in Section~\ref{sec:ranking}, this perturbation theory delivers powerful techniques for analyzing the ranking problem  described previously in Chapter~\ref{cha:introduction}.

\begin{remark}
	Sensitivity and perturbation analyses for the steady-state distributions of Markov chains have been studied in the literature; see, e.g., \citet{mitrophanov2005sensitivity,liu2012perturbation,jiang2017unified,rudolf2018perturbation} and the references therein. 
\end{remark}

\subsection{Proof of Theorem~\ref{thm:DK_asym}} \label{subsec:proof-DK-asym}

Since $\bm{\pi}^{\star}$ and $\bm{\pi}$ denote respectively the leading left eigenvectors of $\bm{P}^{\star}$ and $\bm{P}$, namely,
\[
\bm{\pi}^{\star\top}\bm{P}^{\star}=\bm{\pi}^{\star\top},\qquad\text{and}\qquad\bm{\pi}^{\top}\bm{P}=\bm{\pi}^{\top},
\]
the perturbation $\bm{\pi}-\bm{\pi}^{\star}$ admits the following decomposition
\begin{align*}
&\bm{\pi}^{\top}-\bm{\pi}^{\star\top}  =\bm{\pi}^{\top}\bm{P}-\bm{\pi}^{\star\top}\bm{P}^{\star}=\left(\bm{\pi}-\bm{\pi}^{\star}\right)^{\top}\bm{P}+\bm{\pi}^{\star\top}\left(\bm{P}-\bm{P}^{\star}\right)\\
 &\quad =\left(\bm{\pi}-\bm{\pi}^{\star}\right)^{\top}\left(\bm{P}-\bm{P}^{\star}\right)+\left(\bm{\pi}-\bm{\pi}^{\star}\right)^{\top}\bm{P}^{\star}+\bm{\pi}^{\star\top}\left(\bm{P}-\bm{P}^{\star}\right)\\
 &\quad =\left(\bm{\pi}-\bm{\pi}^{\star}\right)^{\top}\left(\bm{P}-\bm{P}^{\star}\right)
	+\left(\bm{\pi}-\bm{\pi}^{\star}\right)^{\top}\big(\bm{P}^{\star}-\bm{1}\bm{\pi}^{\star\top}\big) + \bm{\pi}^{\star\top}\left(\bm{P}-\bm{P}^{\star}\right).
\end{align*}
Here, the last relation hinges upon the fact that $\bm{\pi}$ and $\bm{\pi}^{\star}$
are probability vectors, and hence $\left(\bm{\pi}-\bm{\pi}^{\star}\right)^{\top}\bm{1}= 1 - 1 = 0$.
Apply the triangle inequality with respect to the norm $\|\cdot\|_{\bm{\pi}^{\star}}$ to obtain
\begin{align*}
\|\bm{\pi}-\bm{\pi}^{\star}\|_{\bm{\pi}^{\star}}
  & \leq \big\|\left(\bm{\pi}-\bm{\pi}^{\star}\right)^{\top}\left(\bm{P}-\bm{P}^{\star}\right) \big\|_{\bm{\pi}^{\star}}
	+ \big\|\left(\bm{\pi}-\bm{\pi}^{\star}\right)^{\top}\big(\bm{P}^{\star}-\bm{1}\bm{\pi}^{\star\top}\big) \big\|_{\bm{\pi}^{\star}} \nonumber\\
&\qquad\quad + \big\|\bm{\pi}^{\star\top}\left(\bm{P}-\bm{P}^{\star}\right) \big\|_{\bm{\pi}^{\star}}\nonumber \\
 & \leq\left( \|\bm{P}-\bm{P}^{\star}\|_{\bm{\pi}^{\star}}+ \big\|\bm{P}^{\star}-\bm{1}\bm{\pi}^{\star\top} \big\|_{\bm{\pi}^{\star}}\right)  \|\bm{\pi}-\bm{\pi}^{\star}\|_{\bm{\pi}^{\star}}\nonumber\\
 &\qquad\quad + \big\|\bm{\pi}^{\star\top}\left(\bm{P}-\bm{P}^{\star}\right) \big\|_{\bm{\pi}^{\star}},
\end{align*}
where the last line relies on the definition of the matrix norm $\|\cdot\|_{\bm{\pi}^{\star}}$. Rearranging terms, we are left with
\[
	\|\bm{\pi}-\bm{\pi}^{\star}\|_{\bm{\pi}^{\star}}
	\leq\frac{\big\|\bm{\pi}^{\star\top}\left(\bm{P}-\bm{P}^{\star}\right)\big\|_{\bm{\pi}^{\star}}}{1-\|\bm{P}-\bm{P}^{\star}\|_{\bm{\pi}^{\star}}-\big\|\bm{P}^{\star}-\bm{1}\bm{\pi}^{\star\top}\big\|_{\bm{\pi}^{\star}}},
\]
with the proviso that $\|\bm{P}-\bm{P}^{\star}\|_{\bm{\pi}^{\star}}+ \|\bm{P}^{\star}-\bm{1}\bm{\pi}^{\star\top}\|_{\bm{\pi}^{\star}}<1$.
The proof would then be completed as long as one could justify that
\begin{equation}
	\big\|\bm{P}^{\star}-\bm{1}\bm{\pi}^{\star\top} \big\|_{\bm{\pi}^{\star}}
	=\max\big\{ \lambda_{2}(\bm{P}^{\star}),-\lambda_{n}(\bm{P}^{\star}) \big\} .
	\label{eq:spectral-gap}
\end{equation}

\begin{proof}[Proof of the identity (\ref{eq:spectral-gap})]
Let $\bm{\pi}^{\star}=[\pi_i^{\star}]_{1\leq i\leq n}$, and define a diagonal matrix $\bm{\Pi}^{\star} = \mathsf{diag}([\pi_{1}^{\star}, \cdots, \pi_{n}^{\star}])\in\mathbb{R}^{n\times n}$.
%
%
From the definition of the norm $\|\cdot\|_{\bm{\pi}^{\star}}$ (both the matrix version and the vector version), it is easily seen that for any matrix $\bm{A}$,
\begin{align}
	& \|\bm{A}\|_{\bm{\pi}^{\star}}  =\sup_{\bm{x}\neq\bm{0}}\frac{\|\bm{A}\bm{x}\|_{\bm{\pi}^{\star}}}{\|\bm{x}\|_{\bm{\pi}^{\star}}}=\sup_{\bm{x}\neq\bm{0}}\frac{\big\|\big(\bm{\Pi}^{\star}\big)^{1/2}\bm{A}\big(\bm{\Pi}^{\star}\big)^{-1/2}\big(\bm{\Pi}^{\star}\big)^{1/2}\bm{x}\big\|_{2}}{\big\|\big(\bm{\Pi}^{\star}\big)^{1/2}\bm{x}\big\|_{2}} \notag\\
	& \quad =\sup_{\bm{v}\neq\bm{0}}\frac{\big\|\big(\bm{\Pi}^{\star}\big)^{1/2}\bm{A}\big(\bm{\Pi}^{\star}\big)^{-1/2}\bm{v}\big\|_{2}}{\big\|\bm{v}\big\|_{2}}=\big\|\big(\bm{\Pi}^{\star}\big)^{1/2}\bm{A}\big(\bm{\Pi}^{\star}\big)^{-1/2}\big\|,  \label{eq:pi-norm-matrix-equation}
\end{align}
with $\|\cdot\|$ the usual spectral norm,  where the last line replaces $\big(\bm{\Pi}^{\star}\big)^{1/2}\bm{x}$ with $\bm{v}$.
Consequently, we obtain
\begin{align*}
\|\bm{P}^{\star}-\bm{1}\bm{\pi}^{\star\top}\|_{\bm{\pi}^{\star}} & =\|\left(\bm{\Pi}^{\star}\right)^{1/2}\big(\bm{P}^{\star}-\bm{1}\bm{\pi}^{\star\top}\big)\left(\bm{\Pi}^{\star}\right)^{-1/2}\|\\
	& =\big\Vert \bm{S}^{\star}- \bm{\pi}^{\star}_{1/2}(\bm{\pi}^{\star}_{1/2})^{\top} \big\Vert ,
\end{align*}
where we define $\bm{S}^{\star}\coloneqq\left(\bm{\Pi}^{\star}\right)^{1/2}\bm{P}^{\star}\left(\bm{\Pi}^{\star}\right)^{-1/2}$ and $\bm{\pi}^{\star}_{1/2} \coloneqq \big[ \sqrt{\pi_i^{\star} } \, \big]_{1\leq i\leq n}$.
Several basic properties regarding $\bm{S}^{\star}$ are in order; see \citet[Chapter 6.2]{bremaud2013markov}.
\begin{itemize}
	\item[(a)] Since $\bm{P}^{\star}$ represents a reversible Markov chain with stationary distribution $\bm{\pi}^{\star}$,   			
		the matrix $\bm{S}^{\star}$ is symmetric, whose eigenvalues are real-valued. This can be verified by the detailed balance equations~(\ref{eq:detailed-balance}).
	\item[(b)] Given that $\bm{S}^{\star}$ is obtained via a similarity transformation of $\bm{P}^{\star}$,
		we see that $\bm{S}^{\star}$ and $\bm{P}^{\star}$ share the same set of eigenvalues.
		This can easily be verified from the definition of eigenvectors:
	$$
	   \bm{S}^{\star} \bm{\xi} = \lambda \bm{\xi}  \quad \Longleftrightarrow \quad
	   \bm{P}^{\star}\left(\bm{\Pi}^{\star}\right)^{-1/2} \bm{\xi} = \lambda \left(\bm{\Pi}^{\star}\right)^{-1/2}  \bm{\xi}.
	$$
	\item[(c)] In particular, $\lambda_{1}(\bm{S}^{\star})=\lambda_{1}(\bm{P}^{\star})=1$,
		and  $\bm{\pi}^{\star}_{1/2}$ is precisely the eigenvector of $\bm{S}^{\star}$ associated with  $\lambda_{1}(\bm{S}^{\star})=1$.
		Thus, from the eigendecomposition of the symmetric matrix $\bm{S}^{\star}$,
		it is easy to see that the eigenvalues of $\bm{S}^{\star} - \bm{\pi}^{\star}_{1/2}(\bm{\pi}^{\star}_{1/2})^{\top}$ are $0,
	\lambda_{2}(\bm{S}^{\star}), \cdots, \lambda_{n}(\bm{S}^{\star})$.
\end{itemize}
Taking the preceding facts collectively, we reach
\begin{align*}
	& \big\Vert \bm{S}^{\star} - \bm{\pi}^{\star}_{1/2}(\bm{\pi}^{\star}_{1/2})^{\top} \big\Vert
	\overset{(\mathrm{i})}{=} \max \big\{ \big| \lambda_{2}(\bm{S}^{\star}) \big|, \big| \lambda_{n}(\bm{S}^{\star}) \big| \big\}  \\
	& \qquad = \max \big\{ \lambda_{2}(\bm{S}^{\star}),-\lambda_{n}(\bm{S}^{\star}) \big\}
	 \overset{\mathrm{(ii)}}{=}\max\big\{ \lambda_{2}(\bm{P}^{\star}),-\lambda_{n}(\bm{P}^{\star}) \big\} .
\end{align*}
Here, (i) relies on Property (c), while (ii) follows from Property (b). This concludes the proof. \end{proof}

\section{Appendix: Proofs of auxiliary lemmas in Section~\ref{sec:preliminary-distance-angles}}
\label{sec:proof-preliminary-dist-angles}

\subsection{Proof of Lemma \ref{lemma:connection-algebraic-geometric}}
Given that singular values are unitarily invariant, it suffices to look at
the singular values of the following matrix
\begin{align}
\left[\begin{array}{c}
\bm{U}^{\top}\\
\bm{U}_{\perp}^{\top}
\end{array}\right]\big(\bm{U}\bm{U}^{\top}- {\bm{U}}^{\star} {\bm{U}}^{\star\top}\big)
	\big[
{\bm{U}}_{\perp}^{\star} , {\bm{U}}^{\star} \big]=\left[\begin{array}{cc}
	\bm{U}^{\top}{\bm{U}}_{\perp}^{\star} & \bm{0}\\
\bm{0} & -\bm{U}_{\perp}^{\top} {\bm{U}}^{\star}
\end{array}\right].
	\label{eq:transform-UU-UU}
\end{align}
Consequently, the singular values of $\bm{U}\bm{U}^{\top}- {\bm{U}}^{\star} {\bm{U}}^{\star\top}$
are composed of those of $\bm{U}^{\top}{\bm{U}}_{\perp}^{\star}$ and those of
$\bm{U}_{\perp}^{\top} {\bm{U}}^{\star}$ combined. It then boils down to characterizing the spectrum of $\bm{U}^{\top}{\bm{U}}_{\perp}^{\star}$ and $\bm{U}_{\perp}^{\top} {\bm{U}}^{\star}$.

To pin down the singular values of $\bm{U}^{\top}{\bm{U}}_{\perp}^{\star}$, we first turn attention to the eigenvalues of $\bm{U}^{\top}\bm{U}_{\perp}^{\star}\bm{U}_{\perp}^{\star\top}\bm{U}$. Assuming that the SVD of $\bm{U}^{\top}\bm{U}^{\star}$ is
given by $\bm{X}\bm{\Sigma}\bm{Y}^{\top}$ (where $\bm{X}$ and $\bm{Y}$
are $r\times r$ orthonormal matrices, and $\bm{\Sigma}$ is diagonal), we can derive
\begin{align}
\bm{U}^{\top}\bm{U}_{\perp}^{\star}\bm{U}_{\perp}^{\star\top}\bm{U} & =\bm{U}^{\top}\big(\bm{I}_{n}-\bm{U}^{\star}\bm{U}^{\star\top}\big)\bm{U}=\bm{U}^{\top}\bm{U}-\bm{U}^{\top}\bm{U}^{\star}\bm{U}^{\star\top}\bm{U} \notag\\
	& =\bm{I}_{r}-\bm{X}\bm{\Sigma}^{2}\bm{X}^{\top}=\bm{X}\big(\bm{I}_{r}-\cos^{2}\bm{\Theta}\big)\bm{X}^{\top} \notag\\
	& = \bm{X} \big( \sin^{2}\bm{\Theta} \big) \bm{X}^{\top} .
	\label{eq:U-Uperp-equiv}
\end{align}
Here, the penultimate identity follows from our construction (cf. \eqref{defn:principal-angles}), where
	we define $\cos\bm{\Theta} \coloneqq \mathsf{diag}([\cos\theta_1,\cdots,\cos\theta_r])$.
Therefore, for any $1\le i \le r$, the $i$-th largest singular value of $\bm{U}^{\top}\bm{U}_{\perp}^{\star}$ obeys
\[
\sigma_{i}\big(\bm{U}^{\top}\bm{U}_{\perp}^{\star}\big)=\sqrt{\lambda_{i}\big(\bm{U}^{\top}\bm{U}_{\perp}^{\star}\bm{U}_{\perp}^{\star\top}\bm{U} \big)}
	=\sin\theta_{r+1-i},
\]
which results from the ordering in \eqref{eq:principal-angle-range}.  This means that, if $r\leq n-r$, then the singular values of $\bm{U}^{\top}{\bm{U}}_{\perp}^{\star}$ are precisely given by $\{\sin\theta_{i}\}_{1\leq i\leq r}$.
Repeating this argument reveals that the singular values of $\bm{U}_{\perp}^{\top} {\bm{U}}^{\star}$
are also $\{\sin\theta_{i}\}_{1\leq i\leq r}$ if $r\leq n-r$.

Combining the above observations thus completes the proof.

\subsection{Proof of Lemma \ref{prop:unitary-norm-property}}
A closer inspection  of the proof of Lemma \ref{lemma:connection-algebraic-geometric} (in particular,  \eqref{eq:U-Uperp-equiv} and the orthonormality of $\bm{X}$) reveals that
\begin{align*}
\big\|\bm{U}^{\top}\bm{U}_{\perp}^{\star}\big\| & =\sqrt{\big\|\bm{U}^{\top}\bm{U}_{\perp}^{\star}\bm{U}_{\perp}^{\star\top}\bm{U}\big\|}=\sqrt{\|\bm{X}\big(\sin^{2}\bm{\Theta}\big)\bm{X}^{\top}\|}=\|\sin\bm{\Theta}\|,\\
\big\|\bm{U}^{\top}\bm{U}_{\perp}^{\star}\big\|_{\mathrm{F}} & =\sqrt{\mathsf{Tr}(\bm{U}^{\top}\bm{U}_{\perp}^{\star}\bm{U}_{\perp}^{\star\top}\bm{U})}=\sqrt{\mathsf{Tr}(\bm{X}\big(\sin^{2}\bm{\Theta}\big)\bm{X}^{\top})}\\
 & =\sqrt{\mathsf{Tr}(\bm{X}^{\top}\bm{X}\sin^{2}\bm{\Theta})}=\sqrt{\mathsf{Tr}(\sin^{2}\bm{\Theta})}=\|\sin\bm{\Theta}\|_{\mathrm{F}},
\end{align*}
where we have used the basic property $\mathsf{Tr}(\bm{A}\bm{B})=\mathsf{Tr}(\bm{B}\bm{A})$. Similarly,
\begin{align*}
\big\|\bm{U}_{\perp}^{\top}\bm{U}^{\star}\big\| =\|\sin\bm{\Theta}\|, 
	\qquad \text{and} \qquad
\big\|\bm{U}_{\perp}^{\top}\bm{U}^{\star}\big\|_{\mathrm{F}} =\|\sin\bm{\Theta}\|_{\mathrm{F}}.
\end{align*}
Note that the above identities hold for all $1\leq r\leq n$. In addition, the relation \eqref{eq:transform-UU-UU} tells us that
\begin{subequations}
\begin{align}
	\big\|\bm{U}\bm{U}^{\top}-\bm{U}^{\star}\bm{U}^{\star\top}\big\| &= \max\big\{\big\|\bm{U}^{\top}\bm{U}_{\perp}^{\star}\big\|,\big\|\bm{U}_{\perp}^{\top}\bm{U}^{\star}\big\|\big\}; \\
	\big\|\bm{U}\bm{U}^{\top}-\bm{U}^{\star}\bm{U}^{\star\top}\big\|_{\mathrm{F}} & =\Big(\big\|\bm{U}^{\top}\bm{U}_{\perp}^{\star}\big\|_{\mathrm{F}}^{2}+\big\|\bm{U}_{\perp}^{\top}\bm{U}^{\star}\big\|_{\mathrm{F}}^{2}\Big)^{1/2}.
\end{align}
\end{subequations}
Putting the above identities together immediately establishes the advertised results.

\subsection{Proof of Lemma~\ref{prop:rotation-UR}}
\label{proof:equiv-rotation-dist}

As before, suppose that the SVD of $\bm{U}^{\top}\bm{U}^{\star}$
is given by $\bm{X}\bm{\Sigma}\bm{Y}^{\top}$, where $\bm{X}$ and $\bm{Y}$
are $r\times r$ orthonormal matrices whose columns contain the left singular vectors and the right singular vectors of $\bm{U}^{\top}\bm{U}^{\star}$, respectively, and $\bm{\Sigma}\in \mathbb{R}^{r\times r}= \cos\bm{\Theta}$ is a diagonal matrix whose diagonal entries correspond to the singular values of $\bm{U}^{\top}\bm{U}^{\star}$.

\paragraph{The spectral norm upper bound.}
We first observe that
\begin{align}
\|\bm{U}\bm{X}\bm{Y}^{\top}-\bm{U}^{\star}\|^{2} & =\|(\bm{U}\bm{X}\bm{Y}^{\top}-\bm{U}^{\star})^{\top}(\bm{U}\bm{X}\bm{Y}^{\top}-\bm{U}^{\star})\|\nonumber\\
 & =\|2\bm{I}_{r}-\bm{Y}\bm{X}^{\top}\bm{U}^{\top}\bm{U}^{\star}-\bm{U}^{\star\top}\bm{U}\bm{X}\bm{Y}^{\top}\| \nonumber\\
	& =\|2\bm{I}_{r}-\bm{Y}\bm{X}^{\top}\bm{X}\bm{\Sigma}\bm{Y}^{\top}-\bm{Y}\bm{\Sigma}\bm{X}^{\top}\bm{X}\bm{Y}^{\top}\| \nonumber\\
 & =2\|\bm{Y}(\bm{I}_{r}-\bm{\Sigma})\bm{Y}^{\top}\|=2\|\bm{I}_{r}-\bm{\Sigma}\|.
	\label{eq:UXT-Ustar-UB1}
\end{align}
Here, the penultimate line relies on the singular value decomposition $\bm{U}^{\top}\bm{U}^{\star}=\bm{X}\bm{\Sigma}\bm{Y}^{\top}$,
while the two identities in the last line result from  the orthonormality of $\bm{X}$ and $\bm{Y}$, respectively. In addition, note that
\begin{align*}
	\|\bm{I}_{r}-\bm{\Sigma}\| &= \|\bm{I}_{r}-\cos\bm{\Theta}\|\leq\|\bm{I}_{r}-\cos^{2}\bm{\Theta}\| \\
	& =\|\sin^{2}\bm{\Theta}\|=\|\sin\bm{\Theta}\|^2.
\end{align*}
This taken together with \eqref{eq:UXT-Ustar-UB1} leads to
\begin{align*}
	\min_{\bm{R}\in \mathcal{O}^{r\times r}}\big\|\bm{U}\bm{R}-\bm{U}^{\star}\big\|
	\leq \big\|\bm{U}\bm{X}\bm{Y}^{\top}-\bm{U}^{\star}\big\| \leq \sqrt{2} \|\sin \bm{\Theta} \| ,
\end{align*}
where the first inequality holds since $\bm{X}$ and $\bm{Y}$ are both orthonormal matrices and hence $\bm{X}\bm{Y}^{\top}$ is also orthonormal.

\paragraph{The spectral norm lower bound.}
On the other hand, we make the observation that
\begin{align}
	& \min_{\bm{R}\in\mathcal{O}^{r\times r}}\big\|\bm{U}\bm{R}-\bm{U}^{\star}\big\|^{2}  =\min_{\bm{R}\in\mathcal{O}^{r\times r}}\big\|(\bm{U}\bm{R}-\bm{U}^{\star})^{\top}(\bm{U}\bm{R}-\bm{U}^{\star})\big\|\nonumber \\
 & \qquad\qquad =\min_{\bm{R}\in\mathcal{O}^{r\times r}}\big\|\bm{R}^{\top}\bm{U}^{\top}\bm{U}\bm{R}+\bm{U}^{\star\top}\bm{U}^{\star}-\bm{R}^{\top}\bm{U}^{\top}\bm{U}^{\star}-\bm{U}^{\star\top}\bm{U}\bm{R}\big\|\nonumber \\
 & \qquad\qquad{=}\min_{\bm{R}\in\mathcal{O}^{r\times r}}\big\|2\bm{I}_{r}-\bm{R}^{\top}\bm{X}\bm{\Sigma}\bm{Y}^{\top}-\bm{Y}\bm{\Sigma}\bm{X}^{\top}\bm{R}\big\|,
	\label{eq:relation-i-123456}
\end{align}
where the last relation holds since $\bm{X}\bm{\Sigma}\bm{Y}^{\top}$ is the
SVD of $\bm{U}^{\top}\bm{U}^{\star}$. Continue the derivation to obtain
\begin{align}
\eqref{eq:relation-i-123456}  & \overset{(\mathrm{i})}{=}\min_{\bm{Q}\in\mathcal{O}^{r\times r}}\big\|2\bm{I}_{r}-\bm{Q}\bm{\Sigma}\bm{Y}^{\top}-\bm{Y}\bm{\Sigma}\bm{Q}^{\top}\big\|\nonumber \\
 & \overset{(\mathrm{ii})}{=}\min_{\bm{Q}\in\mathcal{O}^{r\times r}}\big\|2\bm{Q}^{\top}\bm{Q}-\bm{Q}^{\top}\bm{Q}\bm{\Sigma}\bm{Y}^{\top}\bm{Q}-\bm{Q}^{\top}\bm{Y}\bm{\Sigma}\bm{Q}^{\top}\bm{Q}\big\|\nonumber \\
 &  =\min_{\bm{Q}\in\mathcal{O}^{r\times r}}\big\|2\bm{I}_{r}-\bm{\Sigma}\bm{Y}^{\top}\bm{Q}-\bm{Q}^{\top}\bm{Y}\bm{\Sigma}\big\|\nonumber \\
 &  \overset{(\mathrm{iii})}{=}\min_{\bm{O}\in\mathcal{O}^{r\times r}}\big\|2\bm{I}_{r}-\bm{\Sigma}\bm{O}-\bm{O}^{\top}\bm{\Sigma}\big\|.\label{eq:UR-Ustar-identity}
\end{align}
Here, (i) follows by setting $\bm{Q}=\bm{R}^{\top}\bm{X}$
(since both $\bm{X}$ and $\bm{R}$ are orthonormal matrices), (ii)
results from the unitary invariance of the spectral norm, whereas (iii) holds
by setting $\bm{O}=\bm{Y}^{\top}\bm{Q}$. Moreover, recognizing that
$\|\bm{\Sigma}\bm{O}\|\leq\|\bm{\Sigma}\| \cdot \|\bm{O}\|\leq1$ (and hence
$2\bm{I}_{r}-\bm{\Sigma}\bm{O}-\bm{O}^{\top}\bm{\Sigma}\succeq\bm{0}$),
one can obtain
\begin{align}
\min_{\bm{O}\in\mathcal{O}^{r\times r}}\big\|2\bm{I}_{r}-\bm{\Sigma}\bm{O}-\bm{O}^{\top}\bm{\Sigma}\big\|
	& = \min_{\bm{O}\in\mathcal{O}^{r\times r}}\lambda_{\max} \big( 2\bm{I}_{r}-\bm{\Sigma}\bm{O}-\bm{O}^{\top}\bm{\Sigma}\big) \notag\\
	& =\min_{\bm{O}\in\mathcal{O}^{r\times r}}\max_{\bm{u}:\|\bm{u}\|_{2}=1}\bm{u}^{\top}\big(2\bm{I}_{r}-\bm{\Sigma}\bm{O}-\bm{O}^{\top}\bm{\Sigma}\big)\bm{u}\nonumber \\
 & =\min_{\bm{O}\in\mathcal{O}^{r\times r}}\max_{\bm{u}:\|\bm{u}\|_{2}=1}\big(2-2\bm{u}^{\top}\bm{\Sigma}\bm{O}\bm{u}\big)\nonumber \\
 & \geq\min_{\bm{O}\in\mathcal{O}^{r\times r}}\big(2-2\bm{e}_{r}^{\top}\bm{\Sigma}\bm{O}\bm{e}_{r}\big)\nonumber \\
 & = 2-2\cos\theta_{r}\max_{\bm{O}\in\mathcal{O}^{r\times r}}\bm{e}_{r}^{\top}\bm{O}\bm{e}_{r}\nonumber \\
 & \geq 2-2\cos\theta_{r}
	=4\sin^{2}(\theta_{r}/2).\label{eq:UR-star-inequality}
\end{align}
Here, the inequality follows by taking $\bm{u}$ to be $\bm{e}_{r}$
(recall that by construction, $\sigma_{r}=\cos\theta_{r}\geq 0$ is the
smallest singular value of $\bm{\Sigma}$), and the penultimate line
holds by combining the facts $|\bm{e}_{r}^{\top}\bm{O}\bm{e}_{r}|\leq\|\bm{O}\|=1$
and $\bm{e}_{r}^{\top}\bm{e}_{r}=1$. Putting (\ref{eq:UR-star-inequality})
and (\ref{eq:UR-Ustar-identity}) together yields
\begin{align*}
\min_{\bm{R}\in\mathcal{O}^{r\times r}}\big\|\bm{U}\bm{R}-\bm{U}^{\star}\big\| & \geq\sqrt{4\sin^{2}(\theta_{r}/2)}=2\sin(\theta_{r}/2)=\|2\sin(\bm{\Theta}/2)\|\nonumber \\
 & \geq\|\sin\bm{\Theta}\|,
\end{align*}
where we again use the inequality $2\sin(\theta/2) \geq \sin \theta$ for all $\theta\in [0,\pi/2]$.

Finally, invoking the relation $\|\sin\bm{\Theta}\|=  \|\bm{U}\bm{U}^{\top} - \bm{U}^{\star}\bm{U}^{\star\top}\|$ (see Lemma~\ref{prop:unitary-norm-property}) establishes the claimed spectral norm bounds.

\paragraph{The Frobenius norm upper bound.}
  Regarding the Frobenius norm upper bound, one sees that
\begin{align}
	& \big\Vert \bm{U}\bm{X}\bm{Y}^{\top}-\bm{U}^{\star}\big\Vert _{\mathrm{F}}^{2}  =\left\Vert \bm{U}\right\Vert _{\mathrm{F}}^{2}+\big\|\bm{U}^{\star}\big\|_{\mathrm{F}}^{2}-2\mathsf{Tr}\big(\bm{Y}\bm{X}^{\top}\bm{U}^{\top}\bm{U}^{\star}\big) \notag\\
 & \qquad\qquad \overset{(\mathrm{i})}{=}r+r-2\mathsf{Tr}\big(\bm{Y}\bm{X}^{\top}\bm{X}\bm{\Sigma}\bm{Y}^{\top}\big)
  \overset{(\mathrm{ii})}{=}2r-2\mathsf{Tr}\left(\bm{\Sigma}\right) ,
	\label{eq:relation-UB-12689}
\end{align}
where (i) holds since $\bm{U}$ and $\bm{U}^{\star}$ are both $n\times r$
matrices with orthonormal columns, and (ii) follows since $\bm{X}^{\top}\bm{X}=\bm{Y}^{\top}\bm{Y}=\bm{I}$ (and hence $\mathsf{Tr}(\bm{Y}\bm{X}^{\top}\bm{X}\bm{\Sigma}\bm{Y}^{\top})=\mathsf{Tr}(\bm{Y}^{\top}\bm{Y}\bm{X}^{\top}\bm{X}\bm{\Sigma})=\mathsf{Tr}(\bm{\Sigma})$).
Furthermore,
\begin{align*}
2r-2\mathsf{Tr}\left(\bm{\Sigma}\right)
	& \overset{(\mathrm{iii})}{=}2\sum\nolimits_{i} (1-\cos\theta_{i})
	\leq2\sum\nolimits_{i} (1-\cos^{2}\theta_{i})\\
 & =2\left\Vert \sin\bm{\Theta}\right\Vert _{\mathrm{F}}^{2} = \big\Vert \bm{U}\bm{U}^{\top}-{\bm{U}}^{\star}{\bm{U}}^{\star\top}\big\Vert _{\mathrm{F}}^2,
\end{align*}
where (iii) holds by construction (cf. \eqref{defn:principal-angles}), and the last identity results from Lemma~\ref{prop:unitary-norm-property}. This
taken collectively with  \eqref{eq:relation-UB-12689}
reveals that
\begin{align*}
\min_{\bm{R}\in\mathcal{O}^{r\times r}}\left\Vert \bm{U}\bm{R}-\bm{U}^{\star}\right\Vert _{\mathrm{F}}^{2}
	& \leq\big\Vert \bm{U}\bm{X}\bm{Y}^{\top}-\bm{U}^{\star}\big\Vert _{\mathrm{F}}^{2}
	\leq
	\big\Vert \bm{U}\bm{U}^{\top}-{\bm{U}}^{\star}{\bm{U}}^{\star\top}\big\Vert _{\mathrm{F}}^2  ,
\end{align*}
where the first inequality holds since $\bm{X}$ and $\bm{Y}$ are both orthonormal matrices and hence $\bm{X}\bm{Y}^{\top}$ is also orthonormal.

\paragraph{The Frobenius norm lower bound.}

With regards to the Frobenius norm lower bound, it is seen that
\begin{align}
\min_{\bm{R}\in\mathcal{O}^{r\times r}}\big\|\bm{U}\bm{R}-\bm{U}^{\star}\big\|_{\mathrm{F}}^{2} & =\min_{\bm{R}\in\mathcal{O}^{r\times r}}\Big\{\|\bm{U}\bm{R}\|_{\mathrm{F}}^{2}+\|\bm{U}^{\star}\|_{\mathrm{F}}^{2}-2\big\langle\bm{U}\bm{R},\bm{U}^{\star}\big\rangle\Big\}\nonumber \\
 & \overset{(\mathrm{i})}{=} 2\min_{\bm{R}\in\mathcal{O}^{r\times r}}\Big\{ r-\big\langle\bm{R},\bm{U}^{\top}\bm{U}^{\star}\big\rangle\Big\}\nonumber \\
 & \overset{(\mathrm{ii})}{=} 2\min_{\bm{R}\in\mathcal{O}^{r\times r}}\Big\{ r-\big\langle\bm{R},\bm{X}\bm{\Sigma}\bm{Y}^{\top}\big\rangle\Big\} ,
	\label{eq:relation-ii-67890}
\end{align}
where (i) holds since $\|\bm{U}\|_{\mathrm{F}}=\|\bm{U}^{\star}\|_{\mathrm{F}}=\sqrt{r}$,
and (ii) relies on the SVD $\bm{X}\bm{\Sigma}\bm{Y}^{\top}$ of $\bm{U}^{\top}\bm{U}^{\star}$.
Continue the derivation to obtain
\begin{align}
\eqref{eq:relation-ii-67890}
	& \overset{(\mathrm{iii})}{=}  2\min_{\bm{Q}\in\mathcal{O}^{r\times r}}\Big\{ r-\big\langle\bm{Q},\cos\bm{\Theta}\big\rangle\Big\}
	 \overset{(\mathrm{iv})}{\geq} 2\min_{\bm{Q}\in\mathcal{O}^{r\times r}}\Big\{ r-\|\bm{Q}\|\,\|\cos\bm{\Theta}\|_{*}\Big\}\nonumber \\
 & =2 \big( r-\sum\nolimits_{i}\cos\theta_{i}\big).\label{eq:min-UR-Ustar-fro-1}
\end{align}
Here, (iii) sets $\bm{Q}=\bm{X}^{\top}\bm{R}\bm{Y}$ and identifies
$\bm{\Sigma}$ as $\cos\bm{\Theta}$,
(iv) comes from the elementary inequality $\langle \bm{A}, \bm{B} \rangle \leq \|\bm{A}\|\,\|\bm{B}\|_*$,
whereas the last line follows
since $\cos\theta_{i}\geq0$. Additionally, it is easily seen that
\begin{align}
\eqref{eq:min-UR-Ustar-fro-1}
	& =2\sum\nolimits_{i} (1-\cos\theta_{i})=4\sum\nolimits_{i} \sin^{2}(\theta_{i}/2)\nonumber \\
 & \geq \sum\nolimits_{i} \sin^{2}\theta_{i}
  =\frac{1}{2} \big\Vert \bm{U}\bm{U}^{\top}-{\bm{U}}^{\star}{\bm{U}}^{\star\top}\big\Vert _{\mathrm{F}}^2 ,
	\label{eq:min-UR-Ustar-fro-2}
\end{align}
where the penultimate relation follows from the elementary inequality
$2\sin(\theta/2)\geq\sin\theta$ (which holds for any $0\leq\theta\leq\pi/2$), and
the last line invokes Lemma \ref{prop:unitary-norm-property}.
Combining the inequalities (\ref{eq:min-UR-Ustar-fro-1}) and (\ref{eq:min-UR-Ustar-fro-2}),
we establish the claimed lower bound.

\section{Notes}

\paragraph{Additional resources on matrix perturbation theory.}
Matrix perturbation theory is a firmly established topic that has been extensively studied in the past several decades.
Two classic books that offer in-depth discussions of  perturbation theory for eigenspaces and singular subspaces are \citet{stewart1990matrix,sun1987perturbation}.
Other valuable resources on this topic include \citet{bhatia2013matrix,horn2012matrix}.
The exposition herein is  largely influenced by the excellent lecture notes by \citet{montanari2011notes,hsu2016notes}.
In addition, the book \citep{kato2013perturbation} offers a more abstract  treatment of perturbation theory from the viewpoint of linear operators. Several variants of the sin$\bm{\Theta}$ theorem amenable to statistical analysis are available in the statistics literature as well (e.g., \citet{MR3371006,vu2013minimaxSPCA,cai2018rate,zhang2018heteroskedastic}). 
%

\paragraph{Extensions.}
We point out several well-known extensions of the theorems presented in this chapter.
To begin with, the current exposition restricts attention to the real case for simplicity, while in fact all results herein generalize to the complex-valued case \citep{stewart1990matrix}. In addition, Theorem~\ref{thm:wedin} together with Lemma~\ref{prop:rotation-UR} reveals the existence of two rotation matrices $\bm{R}_{U}$ and $\bm{R}_{V}$ obeying
\begin{align*}
	\max\big\{ \|\bm{U}\bm{R}_{U}-\bm{U}^{\star}\|_{\mathrm{F}}, \|\bm{V}\bm{R}_{V}-\bm{V}^{\star}\|_{\mathrm{F}} \big\}
	\leq\frac{\sqrt{2}\max\big\{ \|\bm{E}^{\top}\bm{U}^{\star}\|_{\mathrm{F}},\|\bm{E}\bm{V}^{\star}\|_{\mathrm{F}}\big\} }{\sigma_{r}^{\star}-\sigma_{r+1}^{\star}-\|\bm{E}\|},
\end{align*}
but falls short of illuminating the connection between $\bm{R}_{U}$ and $\bm{R}_{V}$. An extension derived in \citet{dopico2000sintheta} establishes a similar perturbation bound even when $\bm{R}_U$ and $\bm{R}_V$ are taken to be the same rotation matrix.

\newpage

\chapter[Applications of $\ell_2$ perturbation theory to data science]{Applications of $\ell_2$ perturbation theory \\ to data science}
\label{chap:application-L2}

This chapter develops tailored spectral methods for several important applications arising in statistics, machine learning and signal processing.
As it turns out,  these methods are all variations of a common recipe: extracting the information of interest  from the eigenspace (resp.~singular spaces) and eigenvalues (resp.~singular values) of a certain matrix $\bm{M}$ properly constructed from data. The inspiration stems from the observation that: the corresponding quantities of $\bm{M}^{\star}=\mathbb{E}[\bm{M}]$---when properly constructed and under appropriate statistical models---might faithfully reveal the information being sought after.
The classical $\ell_2$ perturbation theory introduced in Chapter~\ref{cha:matrix-perturbation}, when paired with modern probabilistic tools reviewed in Section~\ref{sec:matrix-Bernstein}, uncovers appealing performance of spectral methods in numerous applications by controlling the size of the perturbation $\bm{E} := \bm{M} - \bm{M}^{\star} $.
The vignettes in this chapter provide ample evidence regarding the benefits of
harnessing the statistical nature of the acquired data.

\section{Preliminaries: Matrix tail bounds}
\label{sec:matrix-Bernstein}

In order to invoke the sin$\bm{\Theta}$ theorems (Theorems~\ref{thm:davis-kahan} and \ref{thm:wedin}), an important ingredient lies in developing a tight upper bound on the spectral norm $\|\bm{E}\|$ of the perturbation matrix $\bm{E}$. This is where statistical/probabilistic tools play a major role.
Rather than presenting an encyclopedia of probabilistic techniques (which can be gleaned from \citet{tropp2015introduction,vershynin2016high,boucheron2013concentration,wainwright2019high,tropp2011freedman,raginsky2013concentration,howard2020time}), this monograph singles out only two useful matrix concentration inequalities that suffice for the applications considered herein.

\subsubsection{The (truncated) matrix Bernstein inequality}
The first result is an extension of the celebrated matrix Bernstein inequality \citep{oliveira2009concentration, tropp2012user,hopkins2016fast}. 
This is an elegant and convenient tail bound for the sum of independent random matrices, resulting in effective performance guarantees for a diverse array of statistical applications.   We refer the interested reader to \citet{tropp2015introduction} for a highly accessible introduction of the classical matrix Bernstein inequality,
and \citet[Section~A.2.2]{hopkins2016fast} for a proof of the truncated variant stated in Theorem~\ref{thm:matrix-Bernstein}.

\begin{theorem}[Truncated matrix Bernstein]
\label{thm:matrix-Bernstein}
Let $\{\bm{X}_{i}\}_{1\leq i\leq m}$ be a sequence of independent real random matrices with dimension $n_{1}\times n_{2}$. Suppose that for all $1\leq i\leq m$,
%
%
\begin{subequations} 	\label{eq:mean-zero-bound-B-truncated}
\begin{align}
	\mathbb{P}\big\{\left\Vert \bm{X}_{i}- \mathbb{E}\left[\bm{X}_{i}\right] \right\Vert \geq L \big\} & \leq q_{0}  \\
	\big\| \mathbb{E}\left[\bm{X}_{i}\right]- \mathbb{E}\left[\bm{X}_{i}\mathbbm1\big\{\left\Vert \bm{X}_{i}\right\Vert < L \big\}\right] \big\| & \leq q_{1}
\end{align}	
\end{subequations}
hold for some quantities $0 \leq q_0 \leq 1$ and $q_1 \geq 0$. In addition, define  the matrix variance statistic $v$  as
\begin{align}
	v \coloneqq
	\max\Bigg\{ & \Bigg\Vert \sum_{i=1}^m\mathbb{E}\Big[(\bm{X}_{i}-\mathbb{E}\left[\bm{X}_{i}\right])(\bm{X}_{i}-\mathbb{E}\left[\bm{X}_{i}\right])^{\top}\Big]\Bigg\Vert , \nonumber \\
 &	\qquad   \Bigg\Vert \sum_{i=1}^m\mathbb{E}\Big[(\bm{X}_{i}-\mathbb{E}\left[\bm{X}_{i}\right])^{\top}(\bm{X}_{i}-\mathbb{E}\left[\bm{X}_{i}\right])\Big]\Bigg\Vert \Bigg\} .
	\label{defn:variance-statistic}
\end{align}
Then for all $t\geq  mq_1$, one has
\[
	\mathbb{P}\left( \left\Vert \sum_{i=1}^m(\bm{X}_{i}-\mathbb{E}\left[\bm{X}_{i}\right])\right\Vert \geq t\right)\leq\left(n_{1}+n_{2}\right) \exp\left(\frac{-(t- mq_1)^{2}/2}{v+L(t-mq_1)/3}\right) + mq_0.
\]
\end{theorem}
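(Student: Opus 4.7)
The plan is to reduce the statement to the classical matrix Bernstein inequality for independent, mean-zero, uniformly bounded summands, with the two additive terms $m q_0$ and $m q_1$ absorbing the errors introduced by a truncation step.

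First, I would introduce truncated matrices $\bm{Y}_i \coloneqq \bm{X}_i \mathbbm{1}\{\|\bm{X}_i\| < L\}$, matching the form appearing in assumption~(b), along with the good event $\mathcal{G} \coloneqq \bigcap_{i=1}^m \{\bm{Y}_i = \bm{X}_i\}$. A union bound combined with assumption~(a) yields $\mathbb{P}(\mathcal{G}^c) \leq m q_0$, which will become the additive $m q_0$ contribution in the final bound. On $\mathcal{G}$ the relevant sum decomposes as
\begin{align*}
\sum_{i=1}^m \big(\bm{X}_i - \mathbb{E}[\bm{X}_i]\big)
\;=\; \sum_{i=1}^m \big(\bm{Y}_i - \mathbb{E}[\bm{Y}_i]\big)
\;+\; \sum_{i=1}^m \big(\mathbb{E}[\bm{Y}_i] - \mathbb{E}[\bm{X}_i]\big),
\end{align*}
where the second, purely deterministic, sum is bounded in spectral norm by $m q_1$ via the triangle inequality and assumption~(b). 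This explains why the hypothesis $t \geq m q_1$ is imposed: the stochastic part must overshoot $t - m q_1 \geq 0$ in order for the full sum to overshoot $t$.

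The summands $\bm{Z}_i \coloneqq \bm{Y}_i - \mathbb{E}[\bm{Y}_i]$ are now independent, mean zero, and uniformly bounded in spectral norm (by roughly $2L$, since $\|\bm{Y}_i\| < L$ by construction). A brief variance-comparison argument---built on the operator-monotonicity $\mathbb{E}[\bm{Z}_i \bm{Z}_i^\top] \preceq \mathbb{E}[\bm{Y}_i \bm{Y}_i^\top]$ together with the pointwise domination $\bm{Y}_i \bm{Y}_i^\top \preceq \bm{X}_i \bm{X}_i^\top$---shows that the matrix variance statistic of the $\bm{Z}_i$'s is controlled by the original $v$ in~\eqref{defn:variance-statistic}, up to harmless absolute constants. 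Invoking the classical matrix Bernstein inequality (e.g., Theorem~6.1.1 of \citet{tropp2015introduction}) for $\sum_i \bm{Z}_i$ at deviation level $t - m q_1$ then produces the advertised exponential tail bound, and intersecting with $\mathcal{G}$ appends the additive $m q_0$.

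The principal obstacle is book-keeping rather than genuinely new mathematics: assumption~(a) truncates the \emph{centered} matrix $\bm{X}_i - \mathbb{E}[\bm{X}_i]$, whereas assumption~(b) truncates $\bm{X}_i$ itself, so some care is required when aligning the good event with the bias bound---one can, for instance, use a single consistent truncation throughout and allow one of the two assumptions to absorb the mismatch at the cost of absolute constants. The variance-comparison step, while routine, is also mildly tedious because one must translate between centered and uncentered second moments to arrive precisely at the expression~\eqref{defn:variance-statistic}.
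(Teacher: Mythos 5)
The paper does not actually prove this theorem; it defers explicitly to \citet[Section~A.2.2]{hopkins2016fast}, so there is no internal proof to compare against. Your overall strategy — truncate, control the deterministic bias, union-bound the bad event, and invoke the classical matrix Bernstein inequality on the truncated sum — is indeed the standard route, and your decomposition of $\sum_i(\bm{X}_i - \mathbb{E}[\bm{X}_i])$ into a centered truncated sum plus a bias term is the right skeleton.

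That said, the obstacles you wave away as ``book-keeping'' are, as written, genuine gaps. Assumption~(a) is a tail bound for the \emph{centered} matrix $\bm{X}_i - \mathbb{E}[\bm{X}_i]$, while assumption~(b) is a bias bound for truncating the \emph{uncentered} $\bm{X}_i$, and these really are different events. With your choice $\bm{Y}_i = \bm{X}_i \mathbbm{1}\{\|\bm{X}_i\|<L\}$, the good event fails on $\{\|\bm{X}_i\|\geq L\}$, but~(a) controls $\mathbb{P}\{\|\bm{X}_i-\mathbb{E}[\bm{X}_i]\|\geq L\}$; when $\|\mathbb{E}[\bm{X}_i]\|$ is comparable to $L$, these probabilities are not related by any absolute constant, so $\mathbb{P}(\mathcal{G}^c)\leq m q_0$ is simply not implied. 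Switching to the other truncation criterion repairs the union bound but then assumption~(b) no longer bounds the bias $\|\mathbb{E}[\bm{Y}_i]-\mathbb{E}[\bm{X}_i]\|$. A second, independent gap is in your variance domination: the chain $\mathbb{E}[\bm{Z}_i\bm{Z}_i^\top]\preceq \mathbb{E}[\bm{Y}_i\bm{Y}_i^\top]\preceq \mathbb{E}[\bm{X}_i\bm{X}_i^\top]$ is valid, but the theorem's $v$ in \eqref{defn:variance-statistic} is built from the \emph{centered} second moments $\mathbb{E}[(\bm{X}_i-\mathbb{E}[\bm{X}_i])(\bm{X}_i-\mathbb{E}[\bm{X}_i])^\top]$, which are smaller than $\mathbb{E}[\bm{X}_i\bm{X}_i^\top]$ by $\mathbb{E}[\bm{X}_i]\mathbb{E}[\bm{X}_i]^\top$; your chain therefore controls the truncated variance by the \emph{uncentered} quantity, not by $v$.

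The unifying resolution is that the theorem is essentially a statement about mean-zero summands. In every invocation in the paper one has $\mathbb{E}[\bm{X}_i]=\bm{0}$, in which case the two truncation events coincide, the centered and uncentered second moments agree, and both of the above gaps evaporate at once. In that regime you also get a sharper uniform bound than your ``roughly $2L$'': assumption~(b) then gives $\|\mathbb{E}[\bm{Y}_i]\|=\|\mathbb{E}[\bm{X}_i\mathbbm{1}\{\|\bm{X}_i\|\geq L\}]\|\leq q_1$, so $\|\bm{Z}_i\|<L+q_1$, matching the stated $L$ up to the (usually negligible) $q_1$ correction. I would make the mean-zero reduction explicit rather than promising to absorb the mismatches into constants — the final bound has exact constants, so they cannot simply be absorbed.
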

\begin{remark}
Note that when the $\bm{X}_{i}$'s are  i.i.d.~zero-mean random matrices, the matrix variance statistic simplifies  to
$$
  v = m \max \Big\{ \big\Vert \mathbb{E}\big[\bm{X}_{i}\bm{X}_{i}^{\top}\big]\big\Vert,
	\big\Vert \mathbb{E}\big[\bm{X}_{i}^{\top} \bm{X}_{i}\big]\big\Vert \Big\}.
$$
\end{remark}

To make it more user-friendly, we record a straightforward consequence of Theorem~\ref{thm:matrix-Bernstein} as follows.
\begin{corollary}
\label{thm:matrix-Bernstein-friendly-truncated}
	Suppose the assumptions of Theorem~\ref{thm:matrix-Bernstein} hold, and set $n\coloneqq\max\{n_1,n_2\}$. For any $a\geq 2$, with probability exceeding $1-2n^{-a+1}-mq_0$ one has
	\begin{align}
		\left\Vert \sum_{i=1}^m (\bm{X}_{i} - \mathbb{E}\left[\bm{X}_{i}\right]) \right\Vert \leq \sqrt{2a v \log n } + \frac{2a}{3}L\log n  + mq_1.
	\end{align}
\end{corollary}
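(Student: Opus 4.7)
The plan is to deduce the corollary directly from Theorem~\ref{thm:matrix-Bernstein} by a concrete choice of the deviation parameter $t$. Specifically, I would set
\[
 t \;\coloneqq\; \sqrt{2av\log n}\,+\,\tfrac{2a}{3}L\log n\,+\,mq_1,
\]
which clearly satisfies $t\ge mq_1$ so that Theorem~\ref{thm:matrix-Bernstein} applies. Writing $s\coloneqq t-mq_1$, the goal then reduces to showing that the exponent in the tail bound satisfies
\[
 \frac{s^2/2}{v+Ls/3}\;\ge\; a\log n,
\]
so that the leading term $(n_1+n_2)\exp(\cdot)$ is bounded by $2n\cdot n^{-a}=2n^{-a+1}$ (using $n_1+n_2\le 2n$), which when combined with the additive term $mq_0$ yields the advertised failure probability $2n^{-a+1}+mq_0$.

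The core calculation is to verify the quadratic inequality. Rearranging the desired exponent bound gives
\[
 s^2\;\ge\; 2av\log n\,+\,\tfrac{2aL\log n}{3}\,s,
\]
equivalently $s^2 - (2aL\log n/3)\,s - 2av\log n\ge 0$. The positive root of this quadratic is
\[
 s_\star \;=\; \frac{aL\log n}{3}\,+\,\sqrt{\Bigl(\tfrac{aL\log n}{3}\Bigr)^{2}+2av\log n},
\]
so by the elementary inequality $\sqrt{x+y}\le \sqrt{x}+\sqrt{y}$ (for $x,y\ge 0$), one has $s_\star\le \sqrt{2av\log n}+\tfrac{2a}{3}L\log n$. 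Our choice of $s$ matches exactly this upper bound on $s_\star$, so the quadratic inequality is satisfied and the exponent bound holds. The main (and essentially only) obstacle is handling this mixed subexponential/subgaussian quadratic cleanly, which is dispatched by the $\sqrt{x+y}\le\sqrt{x}+\sqrt{y}$ trick above.

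Finally, I would assemble the conclusion: plugging $t$ back into Theorem~\ref{thm:matrix-Bernstein} yields
\[
 \mathbb{P}\!\left(\Bigl\|\sum_{i=1}^{m}(\bm{X}_{i}-\mathbb{E}[\bm{X}_{i}])\Bigr\|\ge \sqrt{2av\log n}+\tfrac{2a}{3}L\log n+mq_1\right) \le 2n^{-a+1}+mq_0,
\]
which is precisely the advertised statement once phrased as a high-probability upper bound. The hypothesis $a\ge 2$ is only used to ensure the probability bound $2n^{-a+1}$ is nontrivial (i.e., at most $2/n$), so no further case analysis is required.
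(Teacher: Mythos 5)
Your proof is correct and is exactly the standard derivation the paper has in mind; the paper simply records this corollary as a ``straightforward consequence'' of Theorem~\ref{thm:matrix-Bernstein} without writing out the algebra. Your choice $t=\sqrt{2av\log n}+\tfrac{2a}{3}L\log n+mq_1$, the reduction to the quadratic inequality $s^2\ge 2av\log n+\tfrac{2aL\log n}{3}s$, and the $\sqrt{x+y}\le\sqrt{x}+\sqrt{y}$ estimate on the positive root are all sound, and the final bound $(n_1+n_2)n^{-a}\le 2n^{-a+1}$ closes the argument.
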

In order to make effective use of the above results (particularly when handling unbounded random matrices), it is advisable to take $L$ as a high-probability bound on $\|\bm{X}_i -  \mathbb{E}\left[\bm{X}_{i}\right]  \|$.
The rationale is simple: by properly truncating $\bm{X}_i$ based on the level $L$,
we end up with a {\em bounded} sequence that is more convenient to work with while not deviating much  from the original sequence.
In particular,  if all $\|\bm{X}_i-  \mathbb{E}\left[\bm{X}_{i}\right] \|$ are bounded by a deterministic quantity which is set to be $L$,
then both $q_0$ and $q_1$ vanish, thus eliminating the need of enforcing truncation.
In this case, Corollary~\ref{thm:matrix-Bernstein-friendly-truncated} simplifies to a user-friendly version of the standard matrix Bernstein inequality, which we record below for ease of reference.
\begin{corollary}[Matrix Bernstein]
\label{thm:matrix-Bernstein-friendly}
		Let $\{\bm{X}_{i}\}_{1 \leq i \leq m}$
be a set of independent real random matrices with dimension $n_{1}\times n_{2}$. Suppose that
\begin{align}
\mathbb{E}\left[\bm{X}_{i}\right]=\bm{0},
	\quad \text{and} \quad
	\left\Vert \bm{X}_{i}\right\Vert \leq L,\qquad\text{for all } i.
	\label{eq:mean-zero-bound-B}
\end{align}
Set $n\coloneqq\max\{n_1,n_2\}$, and recall the definition of variance statistic in~\eqref{defn:variance-statistic}. For any $a\geq 2$, with probability exceeding $1-2n^{-a+1}$ one has
	\begin{align}
		\Bigg\Vert \sum_{i=1}^m \bm{X}_{i} \Bigg\Vert \leq \sqrt{2a v \log n } + \frac{2a}{3}L\log n.
	\end{align}
\end{corollary}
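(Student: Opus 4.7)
The plan is to derive this statement as an almost immediate specialization of the preceding truncated matrix Bernstein bound in Corollary~\ref{thm:matrix-Bernstein-friendly-truncated}, by observing that under the clean boundedness hypothesis $\|\bm{X}_i\|\le L$ together with $\mathbb{E}[\bm{X}_i]=\bm{0}$, the truncation mechanism collapses and the auxiliary parameters $q_0$ and $q_1$ can both be taken to be zero.

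First I would check that the hypotheses of Theorem~\ref{thm:matrix-Bernstein} hold with $q_0=q_1=0$. Since $\mathbb{E}[\bm{X}_i]=\bm{0}$, we have $\|\bm{X}_i-\mathbb{E}[\bm{X}_i]\|=\|\bm{X}_i\|\le L$ surely. The only cosmetic subtlety is that the theorem is stated with a strict inequality inside the truncation indicator, so the boundary event $\{\|\bm{X}_i\|=L\}$ is in principle not truncated. I would handle this by enlarging $L$ by an arbitrarily small $\varepsilon>0$ and applying the truncated inequality at level $L+\varepsilon$; the event $\{\|\bm{X}_i\|\ge L+\varepsilon\}$ is then empty, giving $q_0=0$, and the indicator $\mathbbm{1}\{\|\bm{X}_i\|<L+\varepsilon\}$ equals $1$ almost surely, so $\mathbb{E}\bigl[\bm{X}_i\,\mathbbm{1}\{\|\bm{X}_i\|<L+\varepsilon\}\bigr]=\mathbb{E}[\bm{X}_i]=\bm{0}$, yielding $q_1=0$ as well.

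Next I would substitute $q_0=q_1=0$ into the conclusion of Corollary~\ref{thm:matrix-Bernstein-friendly-truncated}. The failure probability $1-2n^{-a+1}-mq_0$ collapses to $1-2n^{-a+1}$ and the additive $mq_1$ term in the deviation bound vanishes, producing, for every $a\ge 2$, the inequality
$$\Bigg\|\sum_{i=1}^m\bm{X}_i\Bigg\|\le \sqrt{2a v\log n}+\frac{2a}{3}(L+\varepsilon)\log n$$
with probability at least $1-2n^{-a+1}$. Sending $\varepsilon\downarrow 0$ (the right-hand side is continuous in $L$ and the probability bound does not depend on $\varepsilon$) yields exactly the advertised inequality. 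Note that the variance statistic $v$ is unchanged by the enlargement, since it is defined in terms of the second moments of $\bm{X}_i-\mathbb{E}[\bm{X}_i]$ and does not involve $L$.

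There is essentially no substantive obstacle: the argument is a definitional deduction, and the only mild issue is the passage from the non-strict almost-sure bound $\|\bm{X}_i\|\le L$ to the strict threshold appearing inside the truncation indicator of Theorem~\ref{thm:matrix-Bernstein}. This is resolved by the infinitesimal enlargement $L\mapsto L+\varepsilon$ outlined above (or, equivalently, by noting that the proof of Theorem~\ref{thm:matrix-Bernstein} cited from \citet{hopkins2016fast} goes through verbatim with a non-strict inequality in the indicator).
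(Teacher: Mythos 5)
Your proposal is correct and follows the same route the paper itself indicates: the corollary is exactly the boundedness specialization of Corollary~\ref{thm:matrix-Bernstein-friendly-truncated} with $q_0=q_1=0$. Your $\varepsilon$-enlargement of $L$ to handle the strict inequality inside the truncation indicator (which the paper glosses over) is a careful and correct refinement, but it does not change the substance of the argument.
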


By virtue of the above  inequalities, the key to bounding  $\left\Vert \sum\nolimits_{i}\bm{X}_{i}\right\Vert$ largely lies in controlling the following  two crucial quantities:
\begin{align*}
	\sqrt{v\log n} \qquad \text{and} \qquad L\log n,
\end{align*}
where the former depends on the number $m$ of random matrices involved.

\subsubsection{Spectral norm of random matrices with independent entries}
An important family of random matrices that merits special attention comprises the ones with independent random entries,
that is, matrices of the form $\bm{X}=[X_{i,j}]_{1\leq i,j\leq n}$ with independent $X_{i,j}$'s.
While the spectral norm of such a matrix can also be analyzed via matrix Bernstein (by treating $\bm{X}$ as the sum of independent random matrices $X_{i,j}\bm{e}_i\bm{e}_j^{\top}$),
this approach is typically loose in terms of the logarithmic factor.
Motivated by the abundance of such random matrices in practice,
we record below a strengthened non-asymptotic spectral norm bound, which is of significant utility and is tighter than what matrix Bernstein has to offer for this case.
\begin{theorem}
	\label{thm:tighter-spectral-normal-ramon}
	Consider a \emph{symmetric} random matrix $\bm{X}=[X_{i,j}]_{1\leq i,j\leq n}$ in $\mathbb{R}^{n\times n}$, whose entries are independently generated and obey		%
	\begin{align}
		\label{eq:Xij-condition-iid}
		\mathbb{E}[X_{i,j}]=0, \quad \text{and} \quad  |X_{i,j}|\leq B,  \qquad 1\leq i,j\leq n.
	\end{align}	
	Define
	\begin{align}
		\label{eq:Xij-row-sum-variance}
		\nu \coloneqq \max_i \sum\nolimits_j \mathbb{E}[X_{i,j}^2].
	\end{align}
	Then there exists some universal constant $c>0$ such that for any $t\geq 0$,
	\begin{align} \label{eq3.8}
		\mathbb{P}\Big\{ \|\bm{X}\|\geq4\sqrt{\nu}+t \Big\} \leq n\exp\Big(-\frac{t^{2}}{cB^{2}}\Big).		
	\end{align}
\end{theorem}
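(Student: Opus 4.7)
The plan is to combine a moment-method bound on the spectral norm with an optimized Markov-type (polynomial Chernoff) argument, in the spirit of the F\"uredi--Koml\'os trace method as sharpened by Bandeira--van Handel. First I will write, for any positive integer $p$,
\[
	\|\bm X\|^{2p} \leq \mathsf{Tr}(\bm X^{2p})
	= \sum_{i_1,\ldots,i_{2p}\in[n]} X_{i_1 i_2} X_{i_2 i_3}\cdots X_{i_{2p} i_1},
\]
and take expectations. Because the $X_{i,j}$ are centered and independent (up to the symmetry constraint $X_{i,j}=X_{j,i}$), a term survives only if every edge of the closed walk $(i_1,i_2,\ldots,i_{2p},i_1)$ is used at least twice. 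The surviving terms can be grouped by the isomorphism class of the associated \emph{shape}, i.e.\ the multigraph structure of the walk.

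Next I will carry out the combinatorial bookkeeping. Each shape whose underlying graph has $k$ distinct vertices admits at most $n(n-1)\cdots(n-k+1) \leq n^k$ vertex labelings, and each resulting walk contributes in absolute value at most $B^{2p-2(k-1)} \cdot \nu^{k-1}$ once one uses $|X_{i,j}|\leq B$ on the ``excess'' edge appearances and $\mathbb{E}[X_{i,j}^{2}]$ on the doubly-covered tree edges, together with the row-sum bound $\nu = \max_i \sum_j \mathbb{E}[X_{i,j}^2]$. Splitting the admissible shapes into ``tree-type'' walks (Dyck walks on a rooted tree, which dominate and are enumerated by the Catalan number) and ``excess-cycle'' walks, the standard count gives a moment bound of the form
\[
	\mathbb{E}\|\bm X\|^{2p} \;\leq\; n\cdot(2\sqrt{\nu})^{2p} \;+\; n\cdot(C_0 B\sqrt{p})^{2p}
\]
for a universal constant $C_0>0$, valid for all $p \geq 1$.

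Finally I will convert this moment estimate into the advertised tail bound by Markov's inequality. For any $\tau>0$,
\[
	\mathbb{P}\big\{\|\bm X\|\geq \tau\big\} \leq \tau^{-2p}\,\mathbb{E}\|\bm X\|^{2p}.
\]
Setting $\tau = 4\sqrt{\nu}+t$, the first summand contributes $n\cdot(2\sqrt{\nu}/(4\sqrt{\nu}+t))^{2p} \leq n\cdot 2^{-2p}$, which already decays geometrically in $p$. Choosing $p$ of order $t^2/(c B^2)$ for a sufficiently large $c>0$, the second summand contributes at most $n\cdot(C_0 B\sqrt{p}/t)^{2p} \leq n\cdot e^{-t^2/(cB^2)}$. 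Combining the two terms and adjusting the constant $c$ yields
\[
	\mathbb{P}\big\{\|\bm X\|\geq 4\sqrt{\nu}+t\big\}\;\leq\; n\exp\!\Big(-\tfrac{t^2}{cB^2}\Big),
\]
which is exactly \eqref{eq3.8}.

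The main obstacle I anticipate lies in the sharp combinatorial step: obtaining the correct constant $2$ in front of $\sqrt{\nu}$ in the tree-type contribution (which then leaves a factor-of-$2$ cushion to reach the advertised $4\sqrt{\nu}$), and verifying that the excess-cycle walks can be uniformly absorbed without producing an additive $B\sqrt{\log n}$ term outside the exponential. This requires a careful tracking of how the $B$-bound and the $\nu$-bound interact edge-by-edge, together with a clean enumeration of shapes via the ``canonical form''/Dyck-path bijection, so that the crossover at $p \asymp t^2/B^2$ yields a genuinely Gaussian (rather than Poissonian) tail while preserving the stated prefactor $n$.
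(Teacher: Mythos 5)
Your route is genuinely different from the paper's. The paper (following Remark~3.13 of Bandeira--van Handel) bounds the \emph{mean} $\mathbb{E}\|\bm X\|$ via the moment method and then applies Talagrand's concentration inequality for convex $1$-Lipschitz functions of bounded independent variables---which applies because $\|\cdot\|$ is convex, $1$-Lipschitz in Frobenius norm, and $|X_{i,j}|\leq B$. Talagrand delivers the sub-Gaussian tail of scale $B$ for free; the prefactor $n$ in \eqref{eq3.8} is precisely what absorbs the additive $O(B\sqrt{\log n})$ slack in the mean estimate, so that a Talagrand tail with prefactor $2$ can be restated for all $t\geq 0$ with prefactor $n$. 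Your route instead bounds $\mathbb{E}\|\bm X\|^{2p}$ directly and applies Markov with optimized $p\asymp t^2/B^2$, which makes the $n$ appear via $\|\bm X\|^{2p}\leq\mathsf{Tr}(\bm X^{2p})$. Both routes reach the same conclusion, but they place the difficulty in different places. The potential gap in yours is the claim that ``the standard count'' gives $\mathbb{E}\|\bm X\|^{2p}\leq n(2\sqrt{\nu})^{2p}+n(C_0 B\sqrt{p})^{2p}$. The $B\sqrt{p}$ scaling in the excess term is not a byproduct of the classical F\"uredi--Koml\'os/Vu shape enumeration, which yields an excess contribution more like $(CBp^{\alpha})^{2p}$ with $\alpha>1/2$; plugging such a bound into your Markov step gives only a stretched-exponential (not Gaussian) tail in $t$, e.g.\ $\exp(-ct/B)$ if $\alpha=1$. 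Reaching $\alpha=1/2$ is exactly the sharp combinatorial content of Bandeira--van Handel, who prove it first for Gaussian entries and then transfer to the bounded case by a moment comparison. You correctly identify this as your ``main obstacle,'' but it is worth being explicit that it is not a refinement---it is the whole argument. The paper's route is the more modular one: a comparatively crude mean bound suffices because Talagrand's inequality supplies the Gaussian tail shape automatically, decoupling ``where the norm concentrates'' from ``how sharply it concentrates.''
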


This result, which appeared in \citet[Remark 3.13]{bandeira2016sharp}, can be established via  tighter control of the expected spectral norm in conjunction with Talagrand's concentration inequality. Two remarks are in order.
\begin{itemize}
\item
First, it is easy to see that the result extends to asymmetric matrices with independent entries, using the standard ``dilation trick'' (see, e.g., \citep[Section~2.1.17]{tropp2015introduction}). Specifically, for an asymmetric random matrix $\bm{X}\in\mathbb{R}^{n_1\times n_2}$, let us introduce the symmetric dilation $\mathcal{S}(\bm{X})$ of $\bm{X}$:
\[
\mathcal{S}(\bm{X})\coloneqq\left[\begin{array}{cc}
\bm{0} & \bm{X}\\
\bm{X}^{\top} & \bm{0}
\end{array}\right] \in \mathbb{R}^{ (n_1+n_2) \times (n_1+ n_2)},
\]
which enjoys the desired symmetry and can be analyzed directly using Theorem~\ref{thm:tighter-spectral-normal-ramon}. The resulting bound on $\|\mathcal{S}(\bm{X})\|$ can be translated back to $\|\bm{X}\|$ via the elementary identity $\|\bm{X}\| = \|\mathcal{S}(\bm{X})\|$. For conciseness, we will occasionally apply Theorem~\ref{thm:tighter-spectral-normal-ramon} directly to asymmetric matrices without invoking the dilation trick.

\item As a useful corollary, if we know {\em a priori} that $\mathbb{E}[X_{i,j}^2]\leq \sigma^2$ for all $1\leq i,j\leq n$, then Theorem~\ref{thm:tighter-spectral-normal-ramon} implies that
\begin{align}
	\|\bm{X}\| \leq 4\sigma \sqrt{n} + \widetilde{c} B \sqrt{ \log n }
	\label{eq:X-spectral-norm-iid-special-ramon}
\end{align}
with probability at least $1-n^{-8}$ for some  constant $\widetilde{c} >0$. To see this, it suffices to set $\widetilde{c} = \sqrt{9c}$ and take $t = B\sqrt{9c \log n} $ in \eqref{eq3.8}.

\end{itemize}

\begin{remark}
	The inequality \eqref{eq:X-spectral-norm-iid-special-ramon} continues to hold if we replace $n^{-8}$ with $n^{-\alpha}$ for any positive constant $\alpha>0$.
	Here and below, we often go with the artificial choice like $n^{-8}$ since it is small enough for our purpose.
\end{remark}

\section{Low-rank matrix denoising}
\label{sec:matrix-denoising-L2}

To catch a glimpse of the effectiveness of the approach we have introduced,
let us start by trying it out on a warm-up example: low-rank matrix denoising.

\subsection{Problem formulation and algorithm}\label{sec:matrix-denoising-setup}  Consider an unknown rank-$r$ symmetric matrix $\bm{M}^{\star} \in \mathbb{R}^{n\times n}$ with eigendecomposition $\bm{M}^{\star}=\bm{U}^{\star}\bm{\Lambda}^{\star}\bm{U}^{\star\top}$, where the columns of $\bm{U}^{\star}\in \mathbb{R}^{n\times r}$ are orthonormal, and $\bm{\Lambda}^{\star}\in \mathbb{R}^{r\times r}$ is a diagonal matrix containing the nonzero eigenvalues $\{\lambda_{i}^{\star}\}$ of $\bm{M}^{\star}$. Assume that $|\lambda_1^{\star}|\geq |\lambda_2^{\star}|\geq \cdots \geq |\lambda_r^{\star}|>0$.  Suppose that  we observe a noisy copy
\[
\bm{M}=\bm{M}^{\star}+\bm{E},
\]
 where $\bm{E}=[E_{i,j}]_{1\leq i,j\leq n}$ is a symmetric noise matrix. It is assumed that the entries $\{{E}_{i,j}\}_{i\geq j}$ are independently generated obeying
\begin{align}
	E_{i,j} \overset{\mathrm{i.i.d.}}{\sim} \mathcal{N}(0,\sigma^2) , \qquad i\geq j.
\end{align}
The aim is to estimate the eigenspace $\bm{U}^{\star}$ from the data matrix $\bm{M}$. Despite its simplicity, this problem  has been extensively studied in the literature \citep{koltchinskii2016perturbation,bao2018singular,ding2020high,xia2019normal,li2021minimax}. It also  bears close relevance to the famous angular/phase synchronization problem \citep{singer2011angular,bandeira2017tightness}.

In order to estimate the low-rank factors specified by $\bm{U}^{\star}$, a natural scheme is to resort to the rank-$r$ leading eigenspace of the data matrix $\bm{M}$.  More precisely, denote by $\lambda_1,\cdots,\lambda_n$ the eigenvalues of $\bm{M}$ sorted by their magnitudes, i.e.,
\begin{align}
	|\lambda_1| \geq |\lambda_2| \geq \cdots \geq |\lambda_n|,
\end{align}
and let $\bm{u}_1$, $\cdots$, $\bm{u}_n$ represent the associated eigenvectors.
This spectral method returns  $\bm{U} = [\bm{u}_1,\cdots,\bm{u}_r]\in \mathbb{R}^{n\times r}$ as an estimate of $\bm{U}^{\star}$.

\subsection{Performance guarantees}
\label{sec:performance-denoising-L2}

\paragraph{Statistical accuracy of the spectral estimate.}

We now examine the accuracy of the above spectral estimate. Towards this,
a key step lies in bounding the spectral norm of the noise matrix
$\bm{E}$. We claim for the moment that (which will be established in Section~\ref{sec:proof-eqn-noise-matrix-E-bound-denoising})
\begin{align}
	\|\bm{E}\| \leq 5\sigma\sqrt{n}
	\label{eq:noise-matrix-E-bound-denoising}
\end{align}
with probability at least $1-O(n^{-8})$.
Armed with this claim and the fact $\lambda_{r+1}^{\star}=0$,
we are in a situation where it is quite easy to see how the Davis-Kahan theorem applies.
According to Corollary \ref{cor:davis-kahan-conclusion-corollary}, with probability greater than $1-O(n^{-8})$ one has
\begin{align}
	\mathsf{dist}\big(\bm{U},\bm{U}^{\star}\big)\leq\frac{2\|\bm{E}\|}{|\lambda_{r}^{\star}|}\leq\frac{10\sigma\sqrt{n}}{|\lambda_{r}^{\star}|} ,
	\label{eq:dist-U-Ustar-denoising-L2}
\end{align}
provided that the noise variance is sufficiently small obeying $ \sigma \sqrt{n} \leq \frac{1-1/\sqrt{2}}{5} |\lambda_r^{\star}|$ so that $\|\bm{E}\|\leq (1-1/\sqrt{2})|\lambda_r^{\star}|$.

\paragraph{Tightness and optimality.}
The tightness of the statistical guarantee~\eqref{eq:dist-U-Ustar-denoising-L2} can be assessed when compared with the minimax lower bound. For instance, it is well-known in the literature (e.g., \citet[Theorem 3]{cheng2020tackling}) that: even for the case with $r=1$, one cannot hope to achieve $\mathsf{dist}\big(\widehat{\bm{U}},\bm{U}^{\star}\big)=o(\sigma \sqrt{n}/|\lambda_{r}^{\star}| )$---in a minimax sense---regardless of the estimator $\widehat{\bm{U}}$ in use. 
Consequently, the spectral method turns out to be orderwise statistically optimal for low-rank matrix denoising.


\paragraph{Additional useful results: eigenvalue and matrix estimation.}
	Before concluding, we record several immediate consequences of the above analysis that will be useful later on. Specifically, assuming that $ \sigma \sqrt{n} \leq \frac{1-1/\sqrt{2}}{5} |\lambda_r^{\star}|$, we see from Weyl's inequality (cf.~Lemma \ref{lemma:weyl}) that
	\begin{align}
		|\lambda_{i}| & \leq\|\bm{E}\|\leq5\sigma\sqrt{n},
		\qquad
		\text{for all }i\geq r+1
		\label{eq:lambda-i-bound-E-denoising}
	\end{align}
	%
	%
with probability $1-O(n^{-8})$.

We further remark on the Euclidean statistical accuracy when estimating the unknown matrix $\bm{M}^{\star}$ using $\widehat{\bm{M}} \coloneqq \bm{U}\bm{\Lambda}\bm{U}^{\top}$, where $\bm{\Lambda} \coloneqq \mathsf{diag}\big([\lambda_{1},\cdots,\lambda_{r}]\big)$.
It is seen from the triangle inequality that
\begin{align}
\big\|\bm{U}\bm{\Lambda}\bm{U}^{\top}-\bm{M}^{\star}\big\| & \leq\big\|\bm{M}-\bm{M}^{\star}\big\|+\big\|\bm{U}\bm{\Lambda}\bm{U}^{\top}-\bm{M}\big\|\nonumber \\
 & =\big\|\bm{E}\big\|+\big|\lambda_{r+1}\big|\leq2\big\|\bm{E}\big\|,
	\label{eq:ULambdaU-Mstar-norm-denoising}
\end{align}
where the last inequality relies on \eqref{eq:lambda-i-bound-E-denoising}.
Since the rank of $\bm{U}\bm{\Lambda}\bm{U}^{\top}-\bm{M}^{\star}$ is at most $2r$,
with probability at least $1-O(n^{-8})$ one has
\begin{align}
	\big\|\bm{U}\bm{\Lambda}\bm{U}^{\top}-\bm{M}^{\star}\big\|_{\mathrm{F}}
	& \leq\sqrt{2r} \, \big\|\bm{U}\bm{\Lambda}\bm{U}^{\top}-\bm{M}^{\star}\big\|  \leq 2\sqrt{2r}\, \|\bm{E}\| \nonumber \\
	& \leq 10 \sigma\sqrt{2nr}.
	\label{eq:top-r-approx-Fro-bound}
\end{align}

\subsection{Proof of the inequality \eqref{eq:noise-matrix-E-bound-denoising} on $\|\bm{E}\|$}
\label{sec:proof-eqn-noise-matrix-E-bound-denoising}

We plan to employ Theorem~\ref{thm:tighter-spectral-normal-ramon}.
Given that Gaussian entries are unbounded, we introduce a truncated copy $\widetilde{\bm{E}}=[\widetilde{E}_{i,j}]_{1\leq i,j\leq n}$ defined as follows
\begin{equation}
	\widetilde{E}_{i,j}\coloneqq E_{i,j} \mathbbm{1}\big\{|E_{i,j}|\leq 5\sigma\sqrt{\log n}\big\}, \quad 1\leq i,j\leq n.
	\label{eq:defn-Eij-truncated-denoising}
\end{equation}
Two properties are in place.
\begin{itemize}
\item It is readily seen from the property of Gaussian distributions
that
\[
\mathbb{P}\big\{ E_{i,j}=\widetilde{E}_{i,j}\big\}\geq1-n^{-12},\qquad1\leq i,j\leq n,
\]
which combined with the union bound leads to
\begin{equation}
\mathbb{P}\big\{\bm{E}=\widetilde{\bm{E}}\big\}\geq1-n^{-10}.\label{eq:prob-E-Etilde-equivalence-denoising}
\end{equation}
\item Given that $B\coloneqq\max_{i,j}|\widetilde{E}_{i,j}|\leq 5\sigma\sqrt{\log n}$,
we can invoke Theorem~\ref{thm:tighter-spectral-normal-ramon} (or more directly,
(\ref{eq:X-spectral-norm-iid-special-ramon})) to demonstrate that
\[
\|\widetilde{\bm{E}}\|\leq4\sigma\sqrt{n}+O(B\log n)\leq5\sigma\sqrt{n}
\]
for sufficiently large $n$, 
with probability exceeding $1-O(n^{-8})$. Here, we implicitly use the fact that $\mathbb{E}[\widetilde{{E}}_{i,j}^2] \leq \mathbb{E}[E_{i,j}^2 ] = \sigma^2$.
\end{itemize}
Combining the above two observations implies that
\[
\|\bm{E}\|=\|\widetilde{\bm{E}}\|\leq5\sigma\sqrt{n}
\]
 with probability exceeding $1-O(n^{-8})$, as claimed.


\section{Principal component analysis and factor models}
\label{sec:PCA}

Principal component analysis (PCA) and factor models  \citep{jolliffe1986principal,lawley1962factor,fan2020statistical}---which serve as an effective unsupervised learning tool for exploring and understanding data---arise frequently in data-intensive applications in economics,  finance, psychology, signal processing, speech, neuroscience, traffic data analysis, among other things \citep{stock2002forecasting,mccrae1992introduction, scharf1991svd, chen2015reduced, balzano2018streaming,fan2020robust}.
PCA and factor models not only allow for dimensionality reduction,
but also provide intermediate means for data visualization, noise removal, anomaly detection, and other downstream tasks.
In this section, we investigate a simple, yet broadly applicable, factor model.

\subsection{Problem formulation and assumptions}
\label{sec:formulation-PCA}

Dependence of high-dimensional measurements is a stylized feature in data science.  To model the dependence among observed high-dimensional data, we assume that there are latent factors that drive the dependence, with a loading matrix that describes how each component depends on the latent factors and an idiosyncratic noise that captures the remaining part.
To set the stage, imagine we have collected a set of $n$ independent sample vectors
$\bm{x}_{i} \in \mathbb{R}^{p}$, $1\leq i\leq n$ obeying
\begin{equation}
	\bm{x}_{i}=\bm{L}^{\star}\bm{f}_{i}+\bm{\eta}_{i},\qquad1\leq i\leq n.
	\label{eq:samples-PCA}
\end{equation}
Here, $\bm{f}_{i}\in\mathbb{R}^{r}$ is a vector of latent factors, $\bm{L}^{\star}\in\mathbb{R}^{p\times r}$ represents a factor
loading matrix that is not known {\em a priori}, whereas $\bm{\eta}_{i}\in\mathbb{R}^{p}$
stands for additive random noise or the idiosyncratic part that cannot be explained by the latent factor $\bm{f}_i$.
Informally, the samples $\{\bm{x}_{i}\}$ are, in some sense, assumed to be approximately embedded in a low-dimensional subspace encoded by the loading matrix $\bm{L}^{\star}$,
which describes how each component of data $\bm{x}_{i}$ depends on the factor $\bm f_i$ and captures the inter-dependency across different variables.
In the language of PCA, the subspace spanned by $\bm{L}^{\star}$ specifies the $r$ principal components underlying this sequence of data samples.
A common goal thus amounts to estimating the subspace spanned by the loading matrix $\bm{L}^{\star}$ and the latent factors $\{\bm{f}_i\}$. 
In the PCA literature, the subspace  represented by $\bm{L}^{\star}$ is commonly referred  to as the principal subspace.

In this monograph, we concentrate on the following tractable statistical model for pedagogical reasons.
See  \citet[Chapter 10]{fan2020statistical} for more general settings (including, say, heavy-tailed distributions and non-isotropic noise covariance matrices).
\begin{assumption}
\label{assumption:factor-model-f-eta}
The vectors $\bm{f}_{i}$ and $\bm{\eta}_{i}$ ($1\leq i\leq n$) are all independently generated according to
\begin{equation}
	\bm{f}_{i}\overset{\mathrm{i.i.d.}}{\sim}\mathcal{N}(\bm{0},\bm{I}_{r}),
	\qquad \text{and} \qquad
	\bm{\eta}_{i}\overset{\mathrm{i.i.d.}}{\sim}\mathcal{N} \big( \bm{0}, \sigma^2\bm{I}_{p} \big).
	\label{eq:random-model-fi-eta-i}
\end{equation}
\end{assumption}
Moreover, we assume without loss of generality that $\bm{L}^{\star}=\bm{U}^{\star} (\bm{\Lambda}^{\star})^{1/2}$, where the columns of $\bm{U}^{\star}\in \mathbb{R}^{p\times r}$ are composed of orthonormal vectors, and $\bm{\Lambda}^{\star}=\mathsf{diag}\big( [\lambda_1^{\star}, \cdots, \lambda_r^{\star}] \big)$ is an $r$-dimensional diagonal matrix obeying $\lambda_1^{\star}\geq \cdots \geq \lambda_r^{\star}>0$.
Throughout this section, we denote by $$\kappa \coloneqq \lambda_1^{\star}\,/\,\lambda_r^{\star}$$ the condition number of the low-rank matrix $\bm{L}^{\star}\bm{L}^{\star\top}= \bm{U}^{\star}\bm{\Lambda}^{\star}\bm{U}^{\star\top}$.

\subsection{Algorithm}
\label{sec:algorithm-PCA}

As a starting point, it is readily seen under Assumption~\ref{assumption:factor-model-f-eta} that
\begin{align}
	\bm{x}_{i}\sim\mathcal{N}\big(\bm{0},\bm{M}^{\star}\big)
	\quad\quad
	\text{with }\bm{M}^{\star} \coloneqq \bm{U}^{\star}\bm{\Lambda}^{\star}\bm{U}^{\star\top} + \sigma^2 \bm{I}_p.
	\label{eq:defn-Mstar-PCA}
\end{align}
In brief, the covariance matrix $\bm{M}^{\star}$  is a low-rank matrix superimposed by a scaled identity matrix;
for this reason, this model is also frequently referred to as the spiked covariance model \citep{johnstone2001distribution}.
The key takeaway is that  the top-$r$ eigenspace of the covariance matrix $\bm{M}^{\star}$ in \eqref{eq:defn-Mstar-PCA} coincides with the $r$-dimensional principal subspace being sought after (i.e., the one spanned by $\bm{L}^{\star}$ or $\bm{U}^{\star}$).

 The above observation motivates a simple spectral algorithm, which begins by computing a sample covariance matrix
\begin{equation}
	\bm{M} \coloneqq \frac{1}{n} \sum_{i=1}^{n}\bm{x}_{i}\bm{x}_{i}^{\top},
	\label{eq:defn-sample-covariance}
\end{equation}
followed by  computation of the rank-$r$ eigendecomposition $\bm{U}\bm{\Lambda}\bm{U}^{\top}$
of $\bm{M}$. Here, $\bm{\Lambda}\in\mathbb{R}^{r\times r}$ is a
diagonal matrix whose diagonal entries entail the $r$ largest eigenvalues $\lambda_{1}\geq\cdots\geq\lambda_{r}$
of $\bm{M}$, and $\bm{U}\coloneqq[\bm{u}_{1},\cdots,\bm{u}_{r}]\in\mathbb{R}^{p\times r}$
with $\bm{u}_{i}$ representing the eigenvector of $\bm{M}$ associated with $\lambda_{i}$.
The spectral algorithm studied herein then returns $\bm{U}$ as the  estimate for the principal subspace $\bm{U}^{\star}$.

\begin{remark}
	In the presence of missing data or heteroskedastic noise (meaning that the  variance of the noise entries varies across different entries),  the second part of the covariance matrix $\bm{M}^{\star}$ (i.e., $\sigma^{2}\bm{I}_{p}$ in~\eqref{eq:defn-Mstar-PCA}) might no longer be a scaled identity. Under such circumstances,
one might need to carefully adjust the diagonal entries of  $\bm{M}$ in order for the  algorithm to succeed; see, e.g., \citet{lounici2014high,loh2012high,zhang2018heteroskedastic,cai2019subspace,zhu2019high,yan2021inference}. The reader might  consult Section~\ref{sec:tensor-completion} for an introduction to a commonly adopted diagonal deletion idea to address the aforementioned issue.  
\end{remark}

\subsection{Performance guarantees}

This subsection develops statistical guarantees for the spectral method described above by invoking the eigenspace perturbation theory introduced previously.
The first step is to establish a connection between the sample covariance $\bm{M}$ and the true covariance $\bm{M}^{\star}$.
Defining $\bm{F}\coloneqq[\bm{f}_{1},\cdots,\bm{f}_{n}]\in\mathbb{R}^{r\times n}$
and $\bm{Z}\coloneqq[\bm{\eta}_{1},\cdots,\bm{\eta}_{n}]\in\mathbb{R}^{p\times n}$,
one can easily compute that
\begin{align}
\bm{M} & = \frac{1}{n} (\bm{L}^{\star}\bm{F}+\bm{Z})(\bm{L}^{\star}\bm{F}+\bm{Z})^{\top}=\bm{M}^{\star}+\bm{E},
\label{eq:connection-M-Mstar-E-PCA}
\end{align}
where $\bm{M}^{\star}$ is defined in \eqref{eq:defn-Mstar-PCA}, and
\begin{align}
\bm{E}  & \coloneqq\bm{L}^{\star}\Big(\frac{1}{n}\bm{F}\bm{F}^{\top}-\bm{I}_r \Big)\bm{L}^{\star\top}+\frac{1}{n}\bm{L}^{\star}\bm{F}\bm{Z}^{\top} +\frac{1}{n}\bm{Z}\bm{F}^{\top}\bm{L}^{\star\top}  \notag \\
  &\qquad +  \Big( \frac{1}{n}\bm{Z}\bm{Z}^{\top} -\sigma^2 \bm{I}_p \Big).
\label{eq:defn-E-PCA}
\end{align}

To apply the Davis-Kahan theorem, we are in need of controlling the size of the perturbation matrix $\bm{E}$.
This is achieved by the following lemma, whose proof is deferred to Section~\ref{sec:proof-lemmas:perturbation-size-E-PCA}.
\begin{lemma}
\label{lemma:perturbation-size-E-PCA}
Consider the settings in Section~\ref{sec:formulation-PCA}. Suppose that $n\geq c r\log^{3}(n+p)$ for some sufficiently large constant $c>0$. Then with probability exceeding $1-O((n+p)^{-10})$, one has
\begin{align*}
	\|\bm{E}\| & \lesssim\Bigg(\lambda_{1}^{\star}\sqrt{\frac{r}{n}}+\sigma\sqrt{\frac{\lambda_{1}^{\star}p}{n}}+\sigma^{2}\sqrt{\frac{p}{n}} + \frac{\sigma^2 p\log^{\frac{3}{2}}(n+p)}{n} \Bigg)\log^{\frac{1}{2}}(n+p).
\end{align*}
\end{lemma}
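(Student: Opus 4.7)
The plan is to decompose $\bm{E}$ into its four natural pieces
\[
\bm{E} = \underbrace{\bm{L}^{\star}\big(\tfrac{1}{n}\bm{F}\bm{F}^{\top} - \bm{I}_r\big)\bm{L}^{\star\top}}_{\bm{E}_1} \;+\; \underbrace{\tfrac{1}{n}\bm{L}^{\star}\bm{F}\bm{Z}^{\top}}_{\bm{E}_2} \;+\; \bm{E}_2^{\top} \;+\; \underbrace{\tfrac{1}{n}\bm{Z}\bm{Z}^{\top} - \sigma^2 \bm{I}_p}_{\bm{E}_4},
\]
bound each summand separately via the (truncated) matrix Bernstein inequality from Section~\ref{sec:matrix-Bernstein}, and then combine via the triangle inequality. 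The four terms produce, respectively, the four contributions in the claimed bound: $\bm{E}_1$ yields $\lambda_1^{\star}\sqrt{r/n}$, the cross terms $\bm{E}_2, \bm{E}_2^{\top}$ yield $\sigma\sqrt{\lambda_1^{\star} p/n}$, the variance part of $\bm{E}_4$ yields $\sigma^2\sqrt{p/n}$, and the ``bounded range'' part of $\bm{E}_4$ yields the higher-order term $\sigma^2 p\log^{3/2}/n$. The extra overall $\log^{1/2}(n+p)$ factor arises from the standard $\sqrt{\log}$ appearing in the Bernstein tail.

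For $\bm{E}_1$ I will write $\|\bm{E}_1\| \le \|\bm{L}^{\star}\|^2\,\|\tfrac{1}{n}\sum_i(\bm{f}_i\bm{f}_i^{\top}-\bm{I}_r)\| = \lambda_1^{\star}\,\|\tfrac{1}{n}\sum_i(\bm{f}_i\bm{f}_i^{\top}-\bm{I}_r)\|$, then apply truncated matrix Bernstein with $L \asymp r\log(n+p)/n$ (obtained from the Gaussian tail $\|\bm{f}_i\|_2^2 \lesssim r + \log$) and variance statistic $v\asymp r/n$, using the identity $\mathbb{E}[\bm{f}_i\bm{f}_i^{\top}\bm{f}_i\bm{f}_i^{\top}] = (r+2)\bm{I}_r$. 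For $\bm{E}_2$, I bound $\|\bm{E}_2\|\le \|\bm{L}^{\star}\|\cdot \|\tfrac{1}{n}\sum_i \bm{f}_i\bm{\eta}_i^{\top}\|$; the sum is of independent mean-zero matrices with $\mathbb{E}[\bm{f}_i\bm{\eta}_i^{\top}\bm{\eta}_i\bm{f}_i^{\top}]=\sigma^2 p\,\bm{I}_r$ and $\mathbb{E}[\bm{\eta}_i\bm{f}_i^{\top}\bm{f}_i\bm{\eta}_i^{\top}]=\sigma^2 r\,\bm{I}_p$, so the variance scales like $\sigma^2 p/n$. Truncating the Gaussian norms gives $L \asymp \sigma\sqrt{rp}\log(n+p)/n$, and the sample-size condition $n\gtrsim r\log^3(n+p)$ ensures that the variance term dominates. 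For $\bm{E}_4$, the matrix Bernstein bound on $\tfrac{1}{n}\sum_i(\bm{\eta}_i\bm{\eta}_i^{\top}-\sigma^2\bm{I}_p)$ uses $\mathbb{E}[\|\bm{\eta}_i\bm{\eta}_i^{\top}\|^2] = \sigma^4(p+2)p$ (so variance statistic $\asymp \sigma^4 p/n$) and truncation level $L \asymp \sigma^2 p\log(n+p)/n$ from $\|\bm{\eta}_i\|_2^2 \lesssim \sigma^2 (p+\log)$; Corollary~\ref{thm:matrix-Bernstein-friendly-truncated} then yields the two terms $\sigma^2\sqrt{p\log/n}$ and $\sigma^2 p\log^2/n$.

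The main obstacle is the unboundedness of the Gaussian entries, which prevents a direct application of the clean Corollary~\ref{thm:matrix-Bernstein-friendly} and forces us to invoke Theorem~\ref{thm:matrix-Bernstein} with a carefully chosen truncation level $L$ together with sharp estimates for the tail-probability $q_0$ and bias $q_1$ introduced by the truncation. In each of the three concentration steps, $L$ must be set at a Gaussian high-probability bound on the operator norm of $\bm{f}_i\bm{f}_i^{\top}$, $\bm{f}_i\bm{\eta}_i^{\top}$, or $\bm{\eta}_i\bm{\eta}_i^{\top}$ (giving the $r\log$, $\sqrt{rp}\log$, $p\log$ scalings respectively), while $q_0$ and $q_1$ are made polynomially small in $n+p$ using the sub-exponential tails of Gaussian quadratic forms. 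Once these three bounds are combined by the triangle inequality and the condition $n\gtrsim r\log^3(n+p)$ is used to absorb the lower-order Bernstein terms into the variance terms where possible, the advertised bound falls out.
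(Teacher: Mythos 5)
Your proposal is correct and follows essentially the same route as the paper: decompose $\bm{E}$ into the factor-covariance term, the two cross terms, and the noise-covariance term, then control each via the truncated matrix Bernstein inequality with the same variance statistics ($v\asymp r/n$, $\sigma^2p/n$, $\sigma^4p/n$ after normalization) and the same truncation levels, and use $n\gtrsim r\log^3(n+p)$ to absorb the Bernstein range terms for the first two pieces. The paper works out only the cross term $\tfrac1n\bm{F}\bm{Z}^\top$ in full and asserts the other two by analogy, whereas you sketch all three; also note a small slip in your derivation of the truncation level for $\bm{f}_i\bm{f}_i^\top$ (the sharp $\chi^2$ tail gives $\|\bm{f}_i\|_2^2\lesssim r+\log(n+p)$, so $L\asymp(r+\log(n+p))/n$ rather than $r\log(n+p)/n$, but either choice suffices since the variance term dominates under the sample-size condition).
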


With Lemma~\ref{lemma:perturbation-size-E-PCA} in place, we are ready to present the following theorem that controls the estimation error of the spectral algorithm.
\begin{theorem}
\label{thm:perturbation-bound-PCA-l2}
Consider the settings in Section~\ref{sec:formulation-PCA}.
Suppose that $n \geq C \big(\kappa^{2}r+ r\log^2(n+p)+ \frac{\kappa\sigma^{2}p}{\lambda_{r}^{\star}}+\frac{\sigma^{4}p}{(\lambda_{r}^{\star})^{2}}\big)\log^{3} (n+p)$ for some sufficiently large constant $C>0$.
Then with probability  at least $1-O((n+p)^{-10})$, the following holds:
\begin{align}
\mathsf{dist}\big(\bm{U},\bm{U}^{\star}\big)
	\lesssim\Bigg(\frac{\sigma}{\sqrt{\lambda_{r}^{\star}}}\sqrt{\frac{\kappa p}{n}}+\frac{\sigma^{2}}{\lambda_{r}^{\star}}\sqrt{\frac{p}{n}} + \kappa\sqrt{\frac{r}{n}}\Bigg)\log^{\frac{1}{2}}(n+p).
\label{eq:thm-perturbation-PCA-l2}
\end{align}
\end{theorem}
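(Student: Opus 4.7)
The plan is to deduce the theorem as a direct consequence of the Davis–Kahan sin-$\bm{\Theta}$ theorem (Corollary~\ref{cor:davis-kahan-conclusion-corollary}) combined with the perturbation bound of Lemma~\ref{lemma:perturbation-size-E-PCA}. The starting point is the identity $\bm{M}^{\star}=\bm{U}^{\star}\bm{\Lambda}^{\star}\bm{U}^{\star\top}+\sigma^{2}\bm{I}_{p}$: since the two summands are simultaneously diagonalizable in any orthonormal basis extending the columns of $\bm{U}^{\star}$, the eigenvalues of $\bm{M}^{\star}$ in descending order are $\lambda_{1}^{\star}+\sigma^{2}\geq\cdots\geq\lambda_{r}^{\star}+\sigma^{2}>\sigma^{2}=\cdots=\sigma^{2}$, the top-$r$ eigenspace is exactly $\bm{U}^{\star}$, and the eigengap required by Corollary~\ref{cor:davis-kahan-conclusion-corollary} is precisely $(\lambda_{r}^{\star}+\sigma^{2})-\sigma^{2}=\lambda_{r}^{\star}$.

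Next I would invoke Lemma~\ref{lemma:perturbation-size-E-PCA} to bound $\|\bm{E}\|=\|\bm{M}-\bm{M}^{\star}\|$. A termwise inspection under the stated sample-size hypothesis shows that each of the four summands in Lemma~\ref{lemma:perturbation-size-E-PCA}, after dividing by $\lambda_{r}^{\star}$, is bounded by $O(1/\log(n+p))$: the hypothesis $n\gtrsim\kappa^{2}r\log^{3}(n+p)$ handles the $\kappa\sqrt{r/n}$ piece, $n\gtrsim\kappa\sigma^{2}p\log^{3}(n+p)/\lambda_{r}^{\star}$ handles the signal–noise cross term, $n\gtrsim\sigma^{4}p\log^{3}(n+p)/(\lambda_{r}^{\star})^{2}$ handles the main noise-covariance contribution, and the same two hypotheses together dispose of the residual logarithmic piece. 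Choosing $C$ large enough then guarantees $\|\bm{E}\|<(1-1/\sqrt{2})\lambda_{r}^{\star}$, which is precisely the premise of Corollary~\ref{cor:davis-kahan-conclusion-corollary}, and the corollary delivers
$$\mathsf{dist}(\bm{U},\bm{U}^{\star})\leq\frac{2\|\bm{E}\|}{\lambda_{r}^{\star}}.$$

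To finish, I would rewrite $2\|\bm{E}\|/\lambda_{r}^{\star}$ in the form displayed in~\eqref{eq:thm-perturbation-PCA-l2}. Dividing the first three terms of Lemma~\ref{lemma:perturbation-size-E-PCA} by $\lambda_{r}^{\star}$ produces, up to constants, exactly $\kappa\sqrt{r/n}$, $(\sigma/\sqrt{\lambda_{r}^{\star}})\sqrt{\kappa p/n}$, and $(\sigma^{2}/\lambda_{r}^{\star})\sqrt{p/n}$, each multiplied by $\log^{1/2}(n+p)$---matching the three stated terms verbatim. The leftover contribution $\sigma^{2}p\log^{2}(n+p)/(n\lambda_{r}^{\star})$ is then absorbed into the second term by noting that the ratio equals $\sqrt{\sigma^{2}p\log^{3}(n+p)/(n\kappa\lambda_{r}^{\star})}$, which is $\lesssim 1$ by the hypothesis $n\gtrsim\kappa\sigma^{2}p\log^{3}(n+p)/\lambda_{r}^{\star}$ (using $\kappa\geq 1$).

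Conceptually, the substantive analytic work has been outsourced to Lemma~\ref{lemma:perturbation-size-E-PCA}; granted that lemma, Theorem~\ref{thm:perturbation-bound-PCA-l2} reduces to a one-line Davis–Kahan application followed by careful bookkeeping. The only point requiring genuine care rather than routine algebra is verifying that the sample-size hypothesis really enforces the Davis–Kahan premise $\|\bm{E}\|<(1-1/\sqrt{2})\lambda_{r}^{\star}$, since this is what unlocks the clean denominator $\lambda_{r}^{\star}$ in Corollary~\ref{cor:davis-kahan-conclusion-corollary} instead of the messier $\lambda_{r}^{\star}-\|\bm{E}\|$ that would otherwise appear.
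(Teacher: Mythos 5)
Your proposal is correct and follows essentially the same route as the paper's proof: identify the eigengap $\lambda_r(\bm{M}^{\star})-\lambda_{r+1}(\bm{M}^{\star})=\lambda_r^{\star}$, apply Corollary~\ref{cor:davis-kahan-conclusion-corollary} after verifying $\|\bm{E}\|\le(1-1/\sqrt{2})\lambda_r^{\star}$ from Lemma~\ref{lemma:perturbation-size-E-PCA} and the sample-size hypothesis, and then absorb the $\sigma^2 p\log^{3/2}(n+p)/n$ term into the second term using $n\gtrsim\kappa\sigma^2 p\log^3(n+p)/\lambda_r^{\star}$ and $\kappa\geq 1$. The bookkeeping is arranged slightly differently (you compare the residual to the second term after dividing through by $\lambda_r^{\star}$, whereas the paper compares before dividing), but the resulting condition is identical.
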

\begin{remark}
	\label{remark:additional-term-PCA}
	The third term $\kappa\sqrt{(r \log(n+p)) /n} $ on the right-hand side of \eqref{eq:thm-perturbation-PCA-l2} arises due to the randomness of $\{\bm{f}_i\}$ but not that of $\{\bm{\eta}_i\}$. If our goal is instead to estimate the eigenspace of $\bm{L}^{\star} (\frac{1}{n}\sum_i\bm{f}_i\bm{f}_i^{\top})\bm{L}^{\star\top}$ as opposed to that of $\bm{L}^{\star}\bm{L}^{\star\top}$, then this term can be erased.
\end{remark}

To interpret what Theorem~\ref{thm:perturbation-bound-PCA-l2} conveys,
we include a few remarks in the sequel, focusing on the simple scenario where $\kappa =O(1)$.
In view of Remark~\ref{remark:additional-term-PCA}, we shall ignore the term $\kappa\sqrt{(r \log(n+p)) /n} $ in the discussion below.

\paragraph{Linear vs.~quadratic dependency on the noise level.}  In comparison to the matrix denoising task (cf.~Section~\ref{sec:performance-denoising-L2}) where $\mathsf{dist}\big(\bm{U},\bm{U}^{\star}\big)$ scales linearly with the noise level $\sigma$ (cf.~\eqref{eq:dist-U-Ustar-denoising-L2}), the above performance guarantees for PCA exhibit contrasting behavior in two different regimes depending on the strength of the signal-to-noise ratio (SNR), measured in terms of $\lambda_{r}^{\star} / \sigma^2$:
\begin{itemize}
\item
	When the SNR is sufficiently large with $\lambda_{r}^{\star} / \sigma^2   \gtrsim 1$,
	then the dominant factor in \eqref{eq:thm-perturbation-PCA-l2} is the term $\sigma \big( {\frac{  p \log(n+p)}{\lambda_{r}^{\star} n}} \big)^{1/2}$, which scales linearly with the noise level.
\item
When the SNR drops below the threshold $\lambda_{r}^{\star} / \sigma^2 \lesssim 1$, then  the term $\frac{\sigma^{2}}{\lambda_{r}^{\star}}\sqrt{\frac{p\log(n+p)}{n}} $---which scales quadratically with the noise level---enters the picture and becomes the dominant effect.
\end{itemize}
In truth, the quadratic term emerges since our spectral method operates upon the sample covariance matrix,  which inevitably contains second moments of the noise components.

\paragraph{Tightness and optimality.} Natural questions arise as to  whether the performance guarantees in Theorem~\ref{thm:perturbation-bound-PCA-l2} are tight,
and whether the statistical accuracy can be further improved by designing more intelligent algorithms. These questions can be addressed by looking into the fundamental statistical limits. As established in the literature \citep{zhang2018heteroskedastic,cai2019subspace}, one cannot hope to achieve
\begin{align}
	\mathsf{dist}\big( \widehat{\bm{U}},\bm{U}^{\star}\big)
	=o \Bigg( \frac{\sigma}{\sqrt{\lambda_{r}^{\star}}}\sqrt{\frac{p}{n}}+\frac{\sigma^{2}}{\lambda_{r}^{\star}}\sqrt{\frac{p}{n}} \Bigg)
	\label{eq:minimax-lower-bound-PCA}
\end{align}
in a minimax sense, regardless of the choice of the estimator $\widehat{\bm{U}}$; see, e.g., \citet[Theorem 2]{zhang2018heteroskedastic} for a precise statement. Comparing \eqref{eq:minimax-lower-bound-PCA} with Theorem~\ref{thm:perturbation-bound-PCA-l2} reveals the near statistical optimality of the spectral method (modulo some log factor), and  confirms the tightness of the eigenspace perturbation theory when applied to this problem.

\paragraph{Proof of Theorem~\ref{thm:perturbation-bound-PCA-l2}.}
We first make the observation that
\begin{align*}
\lambda_{1}(\bm{M}^{\star}) & \geq\cdots\geq\lambda_{r}(\bm{M}^{\star})>\lambda_{r+1}(\bm{M}^{\star})=\cdots=\lambda_{p}(\bm{M}^{\star})=\sigma^{2}>0,\\
 & \qquad\quad\text{and}\quad \lambda_{r}(\bm{M}^{\star})-\lambda_{r+1}(\bm{M}^{\star})=\lambda_{r}^{\star}.
\end{align*}
The Davis-Kahan sin$\bm{\Theta}$ theorem (cf.~Corollary \ref{cor:davis-kahan-conclusion-corollary})
thus implies that: if the perturbation size obeys $\|\bm{E}\|\leq(1-1/\sqrt{2})\lambda_{r}^{\star}$,
then one has
\begin{align*}
 & \mathsf{dist}\big(\bm{U},\bm{U}^{\star}\big)  \leq\frac{2\|\bm{E}\|}{\lambda_{r}(\bm{M}^{\star})-\lambda_{r+1}(\bm{M}^{\star})}=\frac{2\|\bm{E}\|}{\lambda_{r}^{\star}}\\
	& \quad \lesssim \frac{1}{\lambda_{r}^{\star}}\Bigg(\lambda_{1}^{\star}\sqrt{\frac{r}{n}}+\sigma\sqrt{\frac{\lambda_{1}^{\star}p}{n}}+\sigma^{2}\sqrt{\frac{p}{n}} + \frac{\sigma^{2}p\log^{\frac{3}{2}}(n+p)}{n} \Bigg)\log^{\frac{1}{2}}(n+p) \\
	& \quad \asymp \Bigg(\kappa\sqrt{\frac{r}{n}}+\frac{\sigma}{\sqrt{\lambda_{r}^{\star}}}\sqrt{\frac{\kappa p}{n}}+\frac{\sigma^{2}}{\lambda_{r}^{\star}}\sqrt{\frac{p}{n}}  \Bigg)\log^{\frac{1}{2}}(n+p) .
\end{align*}
Here, the penultimate inequality results from Lemma~\ref{lemma:perturbation-size-E-PCA};
the last line is valid as long as $n\gtrsim (\sigma^2 / \lambda_{1}^{\star}) p\log^{3}(n+p)$---a condition that would hold under the assumption of this theorem---so that the fourth term is dominated by the second one in the parenthesis of the penultimate line. Finally, it is immediately seen from Lemma~\ref{lemma:perturbation-size-E-PCA}
that the condition $\|\bm{E}\|\leq(1-1/\sqrt{2})\lambda_{r}^{\star}$ would hold
under the assumption of this theorem.
%

\subsection{Proof of Lemma~\ref{lemma:perturbation-size-E-PCA}}
\label{sec:proof-lemmas:perturbation-size-E-PCA}

We start by applying the triangle inequality to \eqref{eq:defn-E-PCA} as follows
\begin{align}
\|\bm{E}\| & \leq \big\|\bm{L}^{\star}\big\|^{2}\Big\|\frac{1}{n}\bm{F}\bm{F}^{\top}-\bm{I}_{r}\Big\|+\|\bm{L}^{\star} \|\,\Big\|\frac{1}{n}\bm{F}\bm{Z}^{\top}\Big\|+\Big\|\frac{1}{n}\bm{Z}\bm{F}^{\top}\Big\|\,\big\|\bm{L}^{\star} \big\|\nonumber \\
 & \qquad+\Big\|\frac{1}{n}\bm{Z}\bm{Z}^{\top}-\sigma^{2}\bm{I}_{p}\Big\|.
\label{eq:defn-E-decompose-PCA-triangle}
\end{align}
In order to develop an upper bound on this quantity, one needs to control the spectral
norm of $\frac{1}{n}\bm{F}\bm{F}^{\top}-\bm{I}_{r}$, $\frac{1}{n}\bm{F}\bm{Z}^{\top}$,
$\frac{1}{n}\bm{Z}\bm{F}^{\top}$ and $\frac{1}{n}\bm{Z}\bm{Z}^{\top}-\sigma^{2}\bm{I}_{p}$.
All of these terms share similar randomness structure, namely, they are all averages of independent zero-mean random matrices.
As a result, the truncated matrix Bernstein inequality in Corollary~\ref{thm:matrix-Bernstein-friendly-truncated} becomes applicable.
In what follows, we shall only demonstrate how to control the size
of $\frac{1}{n}\bm{F}\bm{Z}^{\top}$; the other terms can be bounded similarly.

Write
$
\bm{F}\bm{Z}^{\top}=\sum_{i=1}^{n}\bm{f}_{i}\bm{\eta}_{i}^{\top}
$.
Since the entries of $\bm{F}\bm{Z}^{\top}$ might be unbounded, we
start by identifying an appropriate truncation level. From standard
properties about Gaussian distributions and the union bound, it is
straightforward to verify that
\[
\mathbb{P}\left\{  \|\bm{f}_{i}\big\|_{\infty}\leq5\sqrt{\log(n+p)}\text{ and }\|\bm{\eta}_{i}\big\|_{\infty}\leq5\sigma\sqrt{\log(n+p)}\right\} \geq1-(n+p)^{-11.5}.
\]
One can further derive
%
\[
	\|\bm{f}_{i}\bm{\eta}_{i}^{\top}\big\|
	\leq\|\bm{f}_{i}\big\|_{2}\|\bm{\eta}_{i}\big\|_{2} \leq\sqrt{rp}\,\|\bm{f}_{i}\big\|_{\infty}\|\bm{\eta}_{i}\big\|_{\infty}\leq25\sqrt{rp}\sigma\log(n+p)
\]
with probability greater than $1-(n+p)^{-11.5}$. In other words,
with the choice $L\coloneqq 25\sqrt{rp}\sigma\log(n+p)$ one has
\[
\mathbb{P}\left\{ \|\bm{f}_{i}\bm{\eta}_{i}^{\top}\big\|\geq L\right\} \leq(n+p)^{-11.5}\eqqcolon q_{0}.
\]
Additionally, the symmetry of Gaussian distributions implies
\[
	\mathbb{E}\big[\bm{f}_{i}\bm{\eta}_{i}^{\top}\big]
	- \mathbb{E}\Big[\bm{f}_{i}\bm{\eta}_{i}^{\top}\mathbbm1\big\{\|\bm{f}_{i}\bm{\eta}_{i}^{\top}\big\| < L\big\}\Big]
	=0.
\]

To invoke the truncated Bernstein inequality, it remains to determine
the variance statistic. Towards this end, letting $\bB_{i} = \bm{f}_{i}\bm{\eta}_{i}^{\top}$, we observe that
\begin{align*}
\mathbb{E}\big[\bm{B}_{i}\bm{B}_{i}^{\top}\big] & =\mathbb{E}\big[\bm{f}_{i}\bm{\eta}_{i}^{\top}\bm{\eta}_{i}\bm{f}_{i}^{\top}\big]=\mathbb{E}\big[\bm{\eta}_{i}^{\top}\bm{\eta}_{i}\big]\mathbb{E}\big[\bm{f}_{i}\bm{f}_{i}^{\top}\big]=p\sigma^{2}\bm{I}_r,\\
\mathbb{E}\big[\bm{B}_{i}^{\top}\bm{B}_{i}\big] & =\mathbb{E}\big[\bm{\eta}_{i}\bm{f}_{i}^{\top}\bm{f}_{i}\bm{\eta}_{i}^{\top}\big]=\mathbb{E}\big[\bm{f}_{i}^{\top}\bm{f}_{i}\big]\mathbb{E}\big[\bm{\eta}_{i}\bm{\eta}_{i}^{\top}\big]=r\sigma^{2}\bm{I}_p,
\end{align*}
thus leading to
\[
	v\coloneqq \max\Big\{\Big\|\sum_{i}\mathbb{E}\big[\bm{B}_{i}\bm{B}_{i}^{\top}\big]\Big\|,\Big\|\sum_{i}\mathbb{E}\big[\bm{B}_{i}^{\top}\bm{B}_{i}\big]\Big\|\Big\}
	= np\sigma^{2},
\]
where we use the fact that $r \leq p$.
Taking these bound together and applying the truncated matrix Bernstein theorem
(see Corollary~\ref{thm:matrix-Bernstein-friendly-truncated}) demonstrate that if $n\gtrsim r\log^3(n+p)$, one has
\begin{subequations}
\label{eq:norm-noise-bounds-PCA}
\begin{align}
	& \frac{1}{n}\big\|\bm{F}\bm{Z}^{\top}\big\|  \lesssim \frac{1}{n} \sqrt{v\log(n+p)}+ \frac{1}{n} L\log(n+p) \notag \\
	& \quad \asymp\sigma\sqrt{\frac{p\log(n+p)}{n}}+\frac{\sqrt{rp}}{n}\sigma\log^2(n+p)
 	 \asymp\sigma\sqrt{\frac{p\log(n+p)}{n}}
	 \label{eq:norm-bound-FZtop-PCA}
\end{align}
with probability at least $1-O\big((n+p)^{-10}\big)-nq_{0}=1-O\big((n+p)^{-10}\big)$.

Repeating the above analysis yields that: if $n\gtrsim r\log^{3}(n+p)$,  with probability at least $1-O((n+p)^{-10})$ one has
\begin{align}
\Big\| \frac{1}{n} \bm{F}\bm{F}^{\top}-\bm{I}_{r}\Big\| & \lesssim \sqrt{\frac{r\log(n+p)}{n}},\label{eq:norm-bound-FFt-PCA}\\
\Big\| \frac{1}{n} \bm{Z}\bm{Z}^{\top}-\sigma^{2}\bm{I}_{p} \Big\| & \lesssim \sigma^{2}\sqrt{\frac{p\log(n+p)}{n}} + \frac{\sigma^{2}p\log^2(n+p)}{n}.
\label{eq:norm-bound-ZZt-PCA}
\end{align}
\end{subequations}
Note that we do not get rid of the second term on the right-hand side of \eqref{eq:norm-bound-ZZt-PCA} since we do not assume $n\gtrsim p\log^{3}(n+p)$.

Substituting the above results \eqref{eq:norm-noise-bounds-PCA} into \eqref{eq:defn-E-decompose-PCA-triangle} and recognizing the basic fact
$\|\bm{L}^{\star}\|=\|\bm{U}^{\star}(\bm{\Lambda}^{\star})^{1/2}\|\leq\|(\bm{\Lambda}^{\star})^{1/2}\|=\sqrt{\lambda_{1}^{\star}}$,
we conclude that
\begin{align*}
	\|\bm{E}\| & \lesssim\Bigg(\lambda_{1}^{\star}\sqrt{\frac{r}{n}}+\sigma\sqrt{\frac{\lambda_{1}^{\star}p}{n}}+\sigma^{2}\sqrt{\frac{p}{n}} + \frac{\sigma^2 p\log^{\frac{3}{2}}(n+p)}{n} \Bigg)\log^{\frac{1}{2}}(n+p).
\end{align*}

\section{Graph clustering and community recovery}
\label{sec:community-detection}

\begin{figure}
\begin{center}
\includegraphics[width=0.85\textwidth]{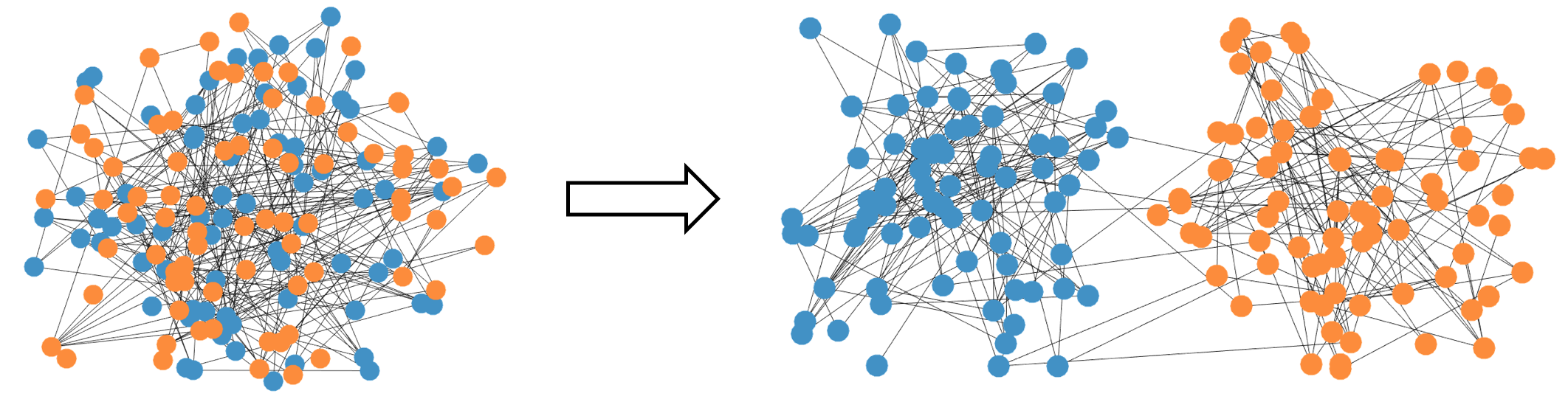}
\end{center}
\caption{Illustration of graph clustering and community recovery, where one wishes to cluster all nodes into two communities based on the edges in the graph.} \label{fig:community}
\end{figure}

Next, we move on to a central problem that permeates data science applications: clustering. An important formulation that falls under this category is graph clustering or community recovery, which aims to cluster individuals into different communities based on  pairwise measurements of their relationships, each of which reveals information about whether or not two individuals belong to the same community \citep{abbe2017community}; see Figure~\ref{fig:community} for an illustration. There has been a recent explosion of interest in this problem, due to its wide applicability in, say, social network analysis \citep{azaouzi2019community}, image segmentation \citep{browet2011community}, shape mapping in computer vision \citep{huang2013consistent}, haplotype phasing in genome sequencing \citep{chen2016community}, to name just a few.
This section explores the capability of spectral methods in application to graph clustering;
 we will revisit the clustering problem again in Section~\ref{sec:Gaussian-mixture} for another common formulation.

\subsection{Problem formulation and assumptions}
\label{sec:setup-community-detection}

In this section, we formulate the graph clustering problem via the well-renowned {\em stochastic block model} (SBM) introduced in \citet{holland1983stochastic}---an idealized generative model that commonly serves as a  theoretical benchmark for evaluating community recovery algorithms.

Consider an undirected graph $\mathcal{G}=(\mathcal{V},\mathcal{E})$ that comprises $n$ vertices, where $\mathcal{V}$ and   $\mathcal{E}$ denote the vertex set and the edge set of $\mathcal{G}$, respectively.  The $n$ vertices, labelled by $1,\cdots, n$,
exhibit  community structures and can be grouped into two non-overlapping communities of {\em equal} sizes. Here and throughout,  $n$ is assumed to be an even number, so that each community contains exactly $n/2$ vertices.
To encode the community memberships, we assign $n$ binary-valued variables $x_i^{\star} \in \{1,-1\}$ ($1\leq i \leq n$) to the  vertices in a way that
\begin{align*}
	x_i^{\star} = \begin{cases} 1, \qquad & \text{if vertex } i \text{ belongs to the 1st community}, \\
				   -1, \quad  & \text{otherwise}. \end{cases}
\end{align*}

The SBM assumes that the set $\mathcal{E}$ of (undirected) edges  is generated randomly based on the community memberships of the incident vertices. To be precise, each pair $(i,j)$ of vertices  is  connected by an edge independently with probability $p$ (resp.~$q$) if $i$ and $j$ belong to the same community (resp.~different communities). The resultant connectivity pattern is represented by an adjacency matrix
$\bm{A}=[A_{i,j}]_{1\leq i,j\leq n}\in \{0,1\}^{n\times n}$, such that for each pair $(i,j)$,
\begin{align}
	A_{i,j} = \begin{cases} 1, \qquad & \text{if }(i,j)\in \mathcal{E}, \\ 0, & \text{otherwise}.   \end{cases}
\end{align}
By convention, we take the diagonal entries to be $A_{i,i}=0$ for all $1\leq i\leq n$. As a remark, the matrix $\bm{A}$ is symmetric since $\mathcal{G}$ is an undirected graph,  with upper triangular elements being realizations of independent Bernoulli random variables with mean either $p$ (if two nodes are in the same community) or $q$ (otherwise).  In addition, it is assumed throughout that $p>q>0$, implying that there are in expectation more within-community edges than across-community edges.

Based on the adjacency matrix $\bm{A}$ generated by the SBM,
the goal is to identify the latent community memberships of the vertices. To phrase it in mathematical terms,
the aim is to reconstruct the vector  $\bm{x}^{\star}=[x_i^{\star}]_{1\leq i\leq n}\in \{1,-1\}^n$ modulo the global sign,
namely, recovering either $\bm{x}^{\star}$ or $-\bm{x}^{\star}$.
This is all one can hope for, as there is
absolutely no basis to distinguish the names of two groups.

\subsection{Algorithm: spectral clustering}
\label{sec:algorithm-community-detection}

Now we describe a spectral method.
To simplify presentation, it is assumed without loss of generality that: $x_i^{\star}=1$ for any $1\leq i\leq n/2$, and $x_i^{\star}=-1$ for any $i > n/2$.

A starting point for the algorithm design is to examine the mean of the adjacency matrix,  given as follows
\begin{align*}
	 \mathbb{E}[\bm{A}]  	=
	\left[\begin{array}{cc}
		p\,\bm{1}_{n/2}\bm{1}^{\top}_{n/2} & q\,\bm{1}_{n/2}\bm{1}^{\top}_{n/2}\\
		q\,\bm{1}_{n/2}\bm{1}^{\top}_{n/2} & p\,\bm{1}_{n/2}\bm{1}^{\top}_{n/2}
	\end{array}\right] -   p \bm{I}.
\end{align*}
As revealed by the above calculation,   the  matrix constructed below
\begin{align}
	\bm{M} = \bm{A} -\frac{p+q}{2} \bm{1}_n\bm{1}^{\top}_n  + p\bm{I}
	\label{eq:M-data-matrix-SBM}
\end{align}
exhibits an approximate rank-1 structure, in the sense that its mean
\begin{align} \label{eq:M-data-matrix-expectation-SBM}
	\bm{M}^{\star} \coloneqq \mathbb{E}[\bm{M}] =
	\frac{p-q}{2}\left[\begin{array}{c}
		\bm{1}_{n/2}\\
		-\bm{1}_{n/2}
	\end{array}  \right] \left[\begin{array}{cc}
\bm{1}^{\top}_{n/2} & -\bm{1}^{\top}_{n/2}\end{array}\right]
\end{align}
is a rank-1 matrix.  The leading eigenvalue of $\bm{M}^{\star}$ and its associated eigenvector are given respectively by
\begin{align}
	\lambda^{\star}\coloneqq\frac{(p-q)n}{2},  
	\quad \text{and} \quad
	\bm{u}^{\star}  \coloneqq \frac{1}{\sqrt{n}}
	\left[\begin{array}{c}
		\bm{1}_{n/2}\\
		-\bm{1}_{n/2}
	\end{array}\right].
	\label{eq:leading-evalue-evector-M-SBM}
\end{align}
Crucially, the eigenvector  $\bm{u}^{\star}$ encapsulates the precise community structure we seek to recover: all positive entries of $\bm{u}^{\star}$ correspond to vertices from one community, while the remaining ones form another community.

Inspired by the above calculation, a candidate spectral clustering algorithm consists of eigendecomposition followed by entrywise rounding:
\begin{itemize}
	\item[1.] Compute the leading eigenvector $\bm{u}$ of  $\bm{M}$ (constructed in \eqref{eq:M-data-matrix-SBM});
	\item[2.] Compute the estimate $\bm{x}=[x_i]_{1\leq i\leq n}$  such that for any $1\leq i\leq n$,
	\begin{align}
		  x_{i}= \mathsf{sgn}(u_i)  =
		  \begin{cases}
1,\quad & \text{if }u_{i}>0,\\
-1, \quad & \text{if }u_i \leq 0.
\end{cases}
	\end{align}
\end{itemize}
%
In words, the community memberships are estimated in accordance with the signs of the entries of the leading eigenvector of $\bm{M}$, namely, the entries with the same signs are declared to come from the same cluster.


\begin{remark}
	The above algorithm requires prior knowledge of the parameters $p$ and $q$ when constructing  $\bm{M}$. It is also feasible to develop a ``model-agnostic'' alternative by, for instance, looking at the second eigenvector of $\bm{A}$ (since the second eigenvector of $\mathbb{E}[\bm{A}]$ turns out to be precisely $\bm{u}^{\star}$), which does not rely on prior information about $p$ and $q$ at all; see, e.g., \citet{abbe2020entrywise} for details.  Here, we adopt the above model-dependent version primarily for convenience of exposition.  
\end{remark}

\subsection{Performance guarantees: almost exact recovery}
\label{sec:theory-community-detection}

The spectral method enjoys appealing statistical guarantees for recovering the community structure of the SBM, which can be readily obtained by invoking the $\ell_2$ eigenvector perturbation theory.  To demonstrate this, we begin by developing an upper bound on the spectral norm of the perturbation matrix $\bm{E}\coloneqq \bm{M} - \bm{M}^{\star}$,
postponing the proof to Section~\ref{sec:proof-auxiliary-lemmas-community-detection}.
\begin{lemma} \label{lemma:E-spectral-norm-SBM}
	Consider the settings in Section~\ref{sec:setup-community-detection}, and suppose that $np\gtrsim \log n$. Then with probability at least $1-O(n^{-8})$, one has
	\begin{align}
		\| \bm{E} \| \lesssim \sqrt{np} .
	\end{align}
\end{lemma}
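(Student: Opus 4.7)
The plan is to apply Theorem~\ref{thm:tighter-spectral-normal-ramon} to the perturbation matrix $\bm{E} = \bm{M} - \bm{M}^{\star}$. First, I would observe that since $\bm{M} = \bm{A} - \frac{p+q}{2}\bm{1}_n\bm{1}_n^{\top} + p\bm{I}$ and $\bm{M}^{\star} = \mathbb{E}[\bm{M}]$, the deterministic shifts cancel and we simply have $\bm{E} = \bm{A} - \mathbb{E}[\bm{A}]$, i.e., the centered adjacency matrix. This is a symmetric matrix whose strict upper-triangular entries are \emph{independent}, centered, and bounded, with $\mathbb{E}[E_{i,j}] = 0$, $|E_{i,j}| \leq 1$, and the diagonal entries identically equal to zero.

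Next, I would compute the row-variance parameter $\nu$ defined in \eqref{eq:Xij-row-sum-variance}. For each off-diagonal $(i,j)$, the variance equals either $p(1-p)$ or $q(1-q)$, both bounded by $p$ since $p \geq q$. Therefore
\[
\nu \;=\; \max_i \sum_j \mathbb{E}[E_{i,j}^2] \;\leq\; np.
\]
Plugging $B=1$ and this estimate on $\nu$ into Theorem~\ref{thm:tighter-spectral-normal-ramon}, we obtain
\[
\mathbb{P}\big\{ \|\bm{E}\| \geq 4\sqrt{np} + t \big\} \;\leq\; n\exp\!\Big(-\tfrac{t^2}{c}\Big)
\]
for some universal constant $c>0$.

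Finally, I would choose $t \asymp \sqrt{\log n}$ (with a sufficiently large absolute constant) so that the right-hand side is bounded by $O(n^{-8})$. Under the hypothesis $np \gtrsim \log n$, this choice satisfies $t \lesssim \sqrt{np}$, so the final bound simplifies to $\|\bm{E}\| \lesssim \sqrt{np}$ with probability at least $1 - O(n^{-8})$.

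There is no real obstacle here: the argument is a direct application of the Bandeira--van Handel type spectral-norm bound already recorded as Theorem~\ref{thm:tighter-spectral-normal-ramon}. The only point requiring care is the bookkeeping of which entries are independent (the upper-triangular ones, mirrored by symmetry) and confirming that the condition $np \gtrsim \log n$ is precisely what is needed to absorb the $B\sqrt{\log n}$ correction into the dominant $\sqrt{np}$ term, analogously to the simplified bound~\eqref{eq:X-spectral-norm-iid-special-ramon}.
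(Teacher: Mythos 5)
Your proof is correct and follows essentially the same route as the paper: identify $\bm{E}=\bm{A}-\mathbb{E}[\bm{A}]$, bound the entrywise variance by $p$ (so $\nu\leq np$) and magnitude by $1$, and invoke Theorem~\ref{thm:tighter-spectral-normal-ramon}, using $np\gtrsim\log n$ to absorb the $\sqrt{\log n}$ correction. The only cosmetic difference is that the paper cites the already-simplified corollary \eqref{eq:X-spectral-norm-iid-special-ramon} directly rather than choosing $t\asymp\sqrt{\log n}$ by hand.
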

This spectral norm bound, in conjunction with the Davis-Kahan sin$\bm{\Theta}$ theorem, leads to the following theoretical support for the spectral method introduced in Section~\ref{sec:algorithm-community-detection}.
\begin{theorem}
	\label{thm:community-detection-weak}
	Consider the setting in Section~\ref{sec:setup-community-detection}, and suppose that
	\begin{align}
		p \gtrsim\frac{\log n}{n}, 
		\qquad\text{and}\qquad \sqrt{\frac{p}{n}} =o(p-q).
		\label{eq:assumption-p-q-SBM-L2}
	\end{align}
	With probability exceeding $1-O(n^{-8})$, the spectral method achieves
	\begin{align*}
		\frac{1}{n}\sum_{i=1}^{n}\mathbbm1\big\{ x_{i} = x_{i}^{\star}\big\} = 1- o(1), 
		\quad \text{or} \quad
		\frac{1}{n}\sum_{i=1}^{n}\mathbbm1\big\{ x_{i} = - x_{i}^{\star}\big\} = 1- o(1) .
	\end{align*}
\end{theorem}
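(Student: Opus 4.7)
The plan is to invoke the Davis-Kahan $\sin\bm{\Theta}$ theorem with $r=1$ on the matrix $\bm{M}$ in \eqref{eq:M-data-matrix-SBM}, and then translate the resulting $\ell_2$ eigenvector bound into an average misclassification guarantee via a simple entrywise argument. First, note that $\bm{M}^{\star} = \mathbb{E}[\bm{M}]$ in \eqref{eq:M-data-matrix-expectation-SBM} is an exactly rank-one matrix with nonzero eigenvalue $\lambda^{\star} = (p-q)n/2$ and leading eigenvector $\bm{u}^{\star} = n^{-1/2}[\bm{1}_{n/2};\,-\bm{1}_{n/2}]$; the eigengap between its first and second eigenvalues (in magnitude) is therefore exactly $\lambda^{\star}$.

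Combining Lemma~\ref{lemma:E-spectral-norm-SBM}, which yields $\|\bm{E}\| \lesssim \sqrt{np}$ with probability at least $1 - O(n^{-8})$, with Corollary~\ref{cor:davis-kahan-conclusion-corollary} applied at $r=1$, I obtain (on the same high-probability event)
\begin{align*}
\mathsf{dist}(\bm{u},\bm{u}^{\star}) \;\lesssim\; \frac{\|\bm{E}\|}{\lambda^{\star}} \;\lesssim\; \frac{\sqrt{np}}{(p-q)n} \;=\; \frac{1}{p-q}\sqrt{\frac{p}{n}} \;=\; o(1),
\end{align*}
where the last step uses assumption~\eqref{eq:assumption-p-q-SBM-L2}; assumption~\eqref{eq:assumption-p-q-SBM-L2} also guarantees $\|\bm{E}\| < (1-1/\sqrt{2})\lambda^{\star}$ so that Corollary~\ref{cor:davis-kahan-conclusion-corollary} indeed applies. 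By the definition~\eqref{eq:dist_UUstar-spectral} of $\mathsf{dist}(\cdot,\cdot)$ at $r=1$, there exists a global sign $s^{\star}\in\{-1,+1\}$ with $\|s^{\star}\bm{u} - \bm{u}^{\star}\|_2 = \mathsf{dist}(\bm{u},\bm{u}^{\star}) = o(1)$.

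Next I convert this $\ell_2$ closeness into a bound on the fraction of mislabeled vertices. Let $\mathcal{S} \coloneqq \{i : \mathsf{sgn}(s^{\star}u_i) \neq x_i^{\star}\}$. Since $|u_i^{\star}| = 1/\sqrt{n}$ with $\mathsf{sgn}(u_i^{\star}) = x_i^{\star}$, whenever $i\in\mathcal{S}$ the quantities $s^{\star}u_i$ and $u_i^{\star}$ have opposite signs (or $s^{\star}u_i=0$), so $|s^{\star}u_i - u_i^{\star}| \geq |u_i^{\star}| = 1/\sqrt{n}$ and hence $(s^{\star}u_i - u_i^{\star})^2 \geq 1/n$. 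Summing over $i\in\mathcal{S}$,
\begin{align*}
\frac{|\mathcal{S}|}{n} \;\leq\; \sum_{i\in\mathcal{S}} (s^{\star}u_i - u_i^{\star})^2 \;\leq\; \|s^{\star}\bm{u} - \bm{u}^{\star}\|_2^2 \;=\; o(1).
\end{align*}
Depending on the value of $s^{\star}$, this yields either $\frac{1}{n}\sum_i \mathbbm{1}\{x_i = x_i^{\star}\} = 1 - o(1)$ (when $s^{\star}=+1$) or $\frac{1}{n}\sum_i \mathbbm{1}\{x_i = -x_i^{\star}\} = 1 - o(1)$ (when $s^{\star}=-1$), which is exactly the claim.

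The argument is short, and there is no serious obstacle once Lemma~\ref{lemma:E-spectral-norm-SBM} is in hand; the only mildly delicate point is the sign-matching step, which is handled by the usual ``the error must be at least $|u_i^{\star}|$ at a misclassified coordinate'' trick. It is worth emphasizing that this $\ell_2$-based route is what forces the weaker ``almost exact'' conclusion rather than exact recovery: without entrywise ($\ell_\infty$) control of $\bm{u}-\bm{u}^{\star}$, one cannot rule out a vanishing but nonzero fraction of sign errors, and upgrading to exact recovery will have to wait for the leave-one-out analysis of Chapter~\ref{cha:Linf-theory}.
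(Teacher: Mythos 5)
Your proof is correct and follows essentially the same route as the paper's: bound $\|\bm{E}\|$ via Lemma~\ref{lemma:E-spectral-norm-SBM}, apply Corollary~\ref{cor:davis-kahan-conclusion-corollary} to get the $\ell_2$ eigenvector error $\mathsf{dist}(\bm{u},\bm{u}^{\star}) \lesssim \sqrt{p/n}/(p-q) = o(1)$, then translate this into an average mislabeling bound by observing that any sign error at coordinate $i$ forces $|s^{\star}u_i - u_i^{\star}| \geq |u_i^{\star}| = 1/\sqrt{n}$ and Markov-type counting. The only cosmetic difference is that you carry the global sign $s^{\star}$ explicitly through the argument, while the paper absorbs it upfront with a ``without loss of generality'' normalization; your concluding remark about why this route stops at almost-exact recovery matches the paper's own discussion.
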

It is noteworthy that the metric
\[
	\min \Bigg\{  \frac{1}{n}\sum_{i=1}^{n}\mathbbm1\big\{ x_{i} \neq x_{i}^{\star}\big\},\, \frac{1}{n}\sum_{i=1}^{n}\mathbbm1\big\{ x_{i} \neq - x_{i}^{\star}\big\} \Bigg\}
\]
can be understood as the mis-clustering rate.
In a nutshell, Theorem~\ref{thm:community-detection-weak} asserts that with the assistance of simple rounding (i.e., the $\mathsf{sgn}(\cdot)$ operation), the spectral method allows for {\em almost} exact community recovery---namely, correctly clustering all but a vanishing fraction of the vertices---assuming satisfaction of Condition~\eqref{eq:assumption-p-q-SBM-L2}.  Note that ``almost exact recovery'' is also referred to as ``weak consistency'' in the literature \citep{abbe2017community}.

Let us take a moment to interpret the recovery condition in \eqref{eq:assumption-p-q-SBM-L2}.
The first requirement  in Condition~\eqref{eq:assumption-p-q-SBM-L2} ensures the presence of sufficiently many edges  in the observed graph, while still permits the graph to be fairly sparse (with average vertex degrees as low as the order of $\log n$).
The second  requirement in Condition~\eqref{eq:assumption-p-q-SBM-L2}---which imposes a lower bound on the separation  between  the edge densities $p$ and $q$---guarantees that the within-community edges can be
adequately differentiated from across-community edges. As a more concrete example, consider the scenario where $p\asymp (\log n)/{n}$
(so that each vertex is only expected to be incident to $O(\log n)$ edges).
In this case, the second requirement in Condition~\eqref{eq:assumption-p-q-SBM-L2} can be translated into
\begin{align*}
	p-q \gg {\sqrt{\log n}}\,/\,{n}, \qquad \text{if } p\asymp  (\log n)\,/\,n.
\end{align*}
This indicates that the separation  $p-q$ is allowed to be considerably smaller than the edge densities, even in this low-edge-density regime.
In comparison, in another extreme case with $p \asymp 1$ (so that each vertex is likely to be connected with a constant fraction of other vertices), the second requirement in Condition~\eqref{eq:assumption-p-q-SBM-L2} reads
\begin{align*}
	p-q \gg 1/\sqrt{n} , \qquad \text{if } p \asymp  1,
\end{align*}
thereby allowing the edge density difference to be even $\sqrt{n}$ times smaller than the edge densities themselves.

It is worth highlighting that the spectral method is not merely capable of correctly clustering all but a diminishing fraction of vertices;
in fact, it allows for simultaneous and exact recovery for {\em all} vertices under slightly modified conditions. Establishing this stronger assertion requires developing a significantly strengthened $\ell_{\infty}$-based eigenvector perturbation theory, which will be elucidated in Section~\ref{sec:community-detection-linf}.
The discussion about the statistical optimality of this spectral method is  postponed to Section~\ref{sec:community-detection-linf} as well.

\paragraph{Proof of Theorem~\ref{thm:community-detection-weak}.}
It is readily seen from Lemma \ref{lemma:E-spectral-norm-SBM} that with with probability at least  $1-O(n^{-8})$,
\[
	\|\bm{E}\| \leq \Big(1-\frac{1}{\sqrt{2}}\Big) \frac{n(p-q)}{2}  = \Big(1-\frac{1}{\sqrt{2}}\Big) \lambda^{\star},
\]
provided that Condition~\eqref{eq:assumption-p-q-SBM-L2} holds. Here, $\lambda^{\star}$ is defined in \eqref{eq:leading-evalue-evector-M-SBM}. Apply Corollary~\ref{cor:davis-kahan-conclusion-corollary} to yield that with probability at least  $1-O(n^{-8})$,
\begin{align}
	\mathsf{dist}\big(\bm{u},\bm{u}^{\star}\big)\leq\frac{2\|\bm{E}\|}{\lambda^{\star}}\lesssim\frac{\sqrt{np}}{n(p-q)}	=o(1),
	\label{eq:L2-bound-u-ustar-CD-L2}
\end{align}
where the last relation follows from Condition \eqref{eq:assumption-p-q-SBM-L2}.

Assume, without loss of generality, that $\|\bm{u}-\bm{u}^{\star}\|_{2}=\mathsf{dist}\big(\bm{u},\bm{u}^{\star}\big)$.
We shall pay attention to the set
$$
	\mathcal{N}\coloneqq \big\{ i\mid |u_{i}-u_{i}^{\star}| \geq {1}/{\sqrt{n}}\big\}.
$$
In view of the rounding procedure: for any $i$ obeying $x_{i}\neq x_{i}^{\star}$, one necessarily has $\mathsf{sgn}(u_{i})\neq \mathsf{sgn}(u_{i}^{\star})$,
thus indicating that $|u_i - u_{i}^{\star}| \geq |u_{i}^{\star}| = 1/\sqrt{n}$ and hence $i\in \mathcal{N}$.
Combining the $\ell_2$ bound \eqref{eq:L2-bound-u-ustar-CD-L2} and the definition of $\mathcal{N}$, we can easily verify that
\[
|\mathcal{N}|\leq\frac{\|\bm{u}-\bm{u}^{\star}\|_{2}^{2}}{(1/\sqrt{n})^{2}}=o\big(n\big),
\]
which in turn leads to the advertised result
\[
	\frac{1}{n}\sum_{i=1}^{n} \mathbbm{1} \big\{ x_{i}\neq x_{i}^{\star}\big\}
	\leq \frac{1}{n}\sum_{i=1}^{n} \mathbbm{1} \Bigg \{ |u_i - u_{i}^{\star}| \geq \frac{1}{\sqrt{n}} \Bigg\}
	= \frac{|\mathcal{N}|}{n}=o(1) .
\]
%

\subsection{Proof of Lemma~\ref{lemma:E-spectral-norm-SBM}}
\label{sec:proof-auxiliary-lemmas-community-detection}

We intend to apply Theorem~\ref{thm:tighter-spectral-normal-ramon} to establish this lemma.
First, observe from the definition $\bm{E}=\bm{M}-\bm{M}^{\star}=\bm{A}-\mathbb{E}[\bm{A}]$ that
\[
	\big|E_{i,j}\big|\leq  \max_{i,j}\big|A_{i,j}\big| = 1.
\]
In addition, the variance of $E_{i,j}$ is upper bounded by
\begin{align*}
	 \mathbb{E}\big[E_{i,j}^{2}\big]  = \mathsf{Var}(A_{i,j})
	 \leq \mathbb{E}\big[A_{i,j}^{2}\big]
 	 \overset{(\mathrm{i})}{\leq}\max\{p,q\}\overset{(\mathrm{ii})}{=} p
\end{align*}
for any $(i,j)$,
where 
(i) follows since $A_{i,j}$ is a Bernoulli random variable with mean either $p$ or $q$, and (ii) is due to the assumption $p>q$.
The bound \eqref{eq:X-spectral-norm-iid-special-ramon} and the condition $np\gtrsim \log n$ thus imply that
\begin{align}
	\|\bm{E}\|= \big\| \bm{M} - \mathbb{E}[\bm{M}] \big\| \lesssim \sqrt{np} + O(\sqrt{\log n}) \asymp \sqrt{np}
\end{align}
with probability exceeding $1-O(n^{-8})$.

\section{Clustering in Gaussian mixture models}
\label{sec:Gaussian-mixture}


This section is also concerned with clustering, with the aim of grouping unlabeled data points into a few clusters (so that the data within the same cluster share similar characteristics). 
In contrast to the graph clustering setting in Section~\ref{sec:community-detection} where only pairwise measurements are available, this section  assumes direct access to data samples for each individual. Spectral methods---possibly with the aid of subsequent refinement like $k$-means---continue to be remarkably effective  for this setting,  achieving practical success in, say, image segmentation \citep{shi2000normalized}, text separation \citep{reynolds1995robust}, climate modeling \citep{lin2017statistical}, and heterogeneity modeling in precision medicine and marketing \citep{fan2014challenges}. Motivated by the empirical successes, understanding the theoretical properties of spectral clustering has garnered growing attention recently. In particular, Gaussian mixture models emerge as a succinct model of attack,  providing elegant yet intuitive abstractions to pivotal quantities that dictate the feasibility of  spectral clustering.

\subsection{Gaussian mixture models and assumptions}
\label{sec:setup-gaussian-mixture}

\paragraph{Model and goal.}
Imagine that we have collected $n$ independent samples $\{\bm{x}_{i}\}_{1\leq i\leq n}$,
generated from a mixture of $r$ spherical Gaussians with respective centers $\bm{\theta}_{1}^{\star},\cdots,\bm{\theta}_{r}^{\star}\in\mathbb{R}^{p}$.
More precisely, for each sample vector $\bm{x}_{i}\in\mathbb{R}^{p}$, we assume the existence of a predetermined, yet {\em a priori} unknown,
cluster membership variable $\xi_{i}^{\star}\in [r]$ such that
\begin{equation}
\bm{x}_{i}=\begin{cases}
\bm{\theta}_{1}^{\star}+\bm{\eta}_{i},\qquad & \text{if }\xi_{i}^{\star}=1,\\
\quad\,\,\vdots & \quad\,\vdots\\
\bm{\theta}_{r}^{\star}+\bm{\eta}_{i}, & \text{if }\xi_{i}^{\star}=r,
\end{cases}\label{eq:data-model-r-Gaussian-GM}
\end{equation}
where the noise vector $\bm{\eta}_{i}\sim\mathcal{N}(\bm{0},\bm{I}_{p})$ is independently generated across the samples.
In words, $\xi_{i}^{\star}$ indicates which Gaussian component a sample is generated from.
Clustering in this Gaussian mixture model can, therefore,
be posed as recovering the set of cluster membership variables $\{\xi_{i}^{\star}\}_{1\leq i\leq n}$ (modulo the global permutation ambiguity).

\paragraph{Assumptions.} To simplify our exposition, we impose the following assumptions throughout this section. As a worthy note,  this assumption is often non-essential and can be significantly relaxed, which we shall remark on momentarily in Remark~\ref{remark:assumptions_gmm}.
\begin{assumption}
	\label{assumption:gaussian-mixture}
	The centers are independently generated obeying
	\[
		\bm{\theta}_{i}^{\star} ~\overset{\mathrm{i.i.d.}}{\sim}~ \mathcal{N}\Bigg( \bm{0}, \frac{\Delta^2}{2p} \bm{I}_p \Bigg),
		\qquad 1\leq i\leq r
	\]
	for some parameter $\Delta>0$.
\end{assumption}
Under this assumption, standard Gaussian concentration inequalities  \citep{vershynin2016high} tell us that, with high probability (for large $p$ and $n=\mathrm{poly}(p)$),
\begin{align*}
& ~~~~~~ \big\|\bm{\theta}_{i}^{\star}\big\|_{2}^{2}  =(1+o(1))\frac{\Delta^{2}}{2},
\quad~~
\big|\bm{\theta}_{i}^{\star\top}\bm{\theta}_{j}^{\star}\big|= o(\Delta^2), \\
& \big\|\bm{\theta}_{i}^{\star}-\bm{\theta}_{j}^{\star}\big\|_{2}^{2}
=\big\|\bm{\theta}_{i}^{\star}\big\|_{2}^{2}+\big\|\bm{\theta}_{j}^{\star}\big\|_{2}^{2}-2\bm{\theta}_{i}^{\star\top}\bm{\theta}_{j}^{\star}=(1+o(1))\Delta^{2}
\end{align*}
hold for any pair $i\neq j$, where $o(1)$ denotes a vanishingly small quantity as $p$ approaches infinity. The indication is that the parameter $\Delta$ reflects (approximately) the separation between any pair of centers.

For simplicity of presentation, it is further assumed that there are exactly $n/r$ samples drawn from each of the $r$ Gaussian components.
Without loss of generality, we assume that
\begin{align}
	\xi_{i}^{\star}=l,\qquad\text{if }\left\lceil \frac{\,i\,}{r}\right\rceil =l
	\label{eq:zi-assignment-GM}
\end{align}
for any $1\leq i\leq n$, where the ceiling function $\lceil x \rceil$ represents the least integer greater than or equal to the number $x\in \mathbb{R}$.
In other words, the first batch of $n/r$ samples is drawn from the first Gaussian component, the second batch comes from the second component, and so on.
It is worth pointing out that this assumed assignment information \eqref{eq:zi-assignment-GM} is unavailable when running the spectral clustering algorithm.

\subsection{Algorithm and rationale}
\label{sec:algorithm-gaussian-mixture}

\paragraph{Motivation: spectral structure of the data matrix.}

In order to develop a spectral clustering algorithm, it is instrumental to first examine the spectral feature of the following data matrix
\begin{align}
	\bm{X}\coloneqq[\bm{x}_{1},\cdots,\bm{x}_{n}]=\mathbb{E}[\bm{X}]+\underset{\eqqcolon\,\bm{Z}}{\underbrace{[\bm{\eta}_{1},\cdots,\bm{\eta}_{n}]}}.
	\label{eq:defn-Z-X-EX-GMM}
\end{align}
Clearly, $\mathbb{E}[\bm{X}] \in \mathbb{R}^{p \times n}$ exhibits a rank-$r$ structure:
\begin{align*}
	\mathbb{E}[\bm{X}] & =\big[\bm{\theta}_{1}^{\star},\cdots,\bm{\theta}_{1}^{\star},\bm{\theta}_{2}^{\star},\cdots,\bm{\theta}_{2}^{\star},\cdots,\bm{\theta}_{r}^{\star},\cdots,\bm{\theta}_{r}^{\star}\big]
 	= \bm{\Theta}^{\star}\bm{F}^{\star\top},
\end{align*}
where we define
\begin{align}
\bm{\Theta}^{\star}\coloneqq\left[\bm{\theta}_{1}^{\star},\cdots,\bm{\theta}_{r}^{\star}\right] \in \mathbb{R}^{p\times r},
\quad
\bm{F}^{\star}\coloneqq {\footnotesize\left[\begin{array}{cccc}
\bm{1}_{\frac{n}{r}}\\
 & \bm{1}_{\frac{n}{r}}\\
 &  & \ddots\\
 &  &  & \bm{1}_{\frac{n}{r}}
\end{array}\right] }\in \mathbb{R}^{n\times r}.
\label{eq:defn-Theta-star-F-star-GMM}
\end{align}

Similarly, the Gram matrix $\bm{X}^{\top}\bm{X}$ also inherits this rank-$r$ structure in the following sense (albeit in the form of a ``spiked'' structure due to the presence of noise):
\begin{align}
	\mathbb{E}\big[\bm{X}^{\top}\bm{X}\big]
	& =\mathbb{E}[\bm{X}]^{\top}\mathbb{E}[\bm{X}]+\mathbb{E}\big[\bm{Z}^{\top}\bm{Z}\big]
	=\bm{F}^{\star}\bm{\Theta}^{\star\top}\bm{\Theta}^{\star}\bm{F}^{\star\top}+p\bm{I}_n .
	\label{eq:mean-XT-X-GM}
\end{align}
Recognizing that  $\bm{F}^{\star}$ encodes all the cluster membership information,
one is motivated to attempt information extraction from the rank-$r$ eigenspace of $\bm{X}^{\top}\bm{X}$,
akin to the PCA algorithm introduced in Section~\ref{sec:algorithm-PCA}.

\paragraph{Algorithm: spectral clustering followed by $k$-means.}

With the preceding spectral properties in mind, we are ready to present a spectral clustering algorithm tailored to this Gaussian mixture model.
Given that the eigenspace of $\bm{X}^{\top}\bm{X}$ might only approximate $\bm{F}^{\star}$ up to global rotation,
we include a follow-up $k$-means scheme \citep{macqueen1967some}
 to produce a valid clustering outcome based on the spectral estimate.
\begin{itemize}
	\item[1.] Compute the leading rank-$r$ eigenspace $\bm{U}\in \mathbb{R}^{n\times r}$ of $\bm{X}^{\top}\bm{X}$.

	\item[2.] Compute $\bm{Y}= \mathcal{P} \big( \bm{U}\bm{U}^{\top} \big) \in \mathbb{R}^{n\times n}$,
		where the operator $\mathcal{P}(\cdot)$ projects each column onto the unit sphere,
		i.e., 
\begin{align*}
	\mathcal{P}(\bm{Z}) \coloneqq \Big[ \frac{\bm{z}_{1}}{\|\bz_1\|_2},\cdots,  \frac{\bz_n}{\|\bm{z}_{n}\|_2}\Big] 
\end{align*}
	%
	for any matrix $\bm{Z}=[\bm{z}_1,\cdots,\bm{z}_n]$. 
	As will be discussed below, the projection step is not necessary, and we can also simply take $\bY = \bU \bU^\top$.

	\item[3.] Let $\bm{y}_i$ represent the $i$-th column of $\bm{Y}$, and apply the $k$-means algorithm
	(with $k=r$) to the vectors $\{\by_i\}_{1 \leq i \leq n}$  to find the cluster centers and cluster labels for all individuals; namely, we  compute
\begin{equation}
\Big(\big\{\widehat{\xi}_{i}\big\}_{i=1}^{n},\big\{\widehat{\bm{\vartheta}}_{i}\big\}_{i=1}^{r}\Big)
= \hspace{-0.2em}
\underset{\xi_{1},\cdots,\xi_{n}\in[r],\,\bm{\vartheta}_{1},\cdots,\bm{\vartheta}_{r}\in\mathbb{R}^{n}}{\arg\min}\sum_{i=1}^{n}\big\|\bm{y}_{i}-\bm{\vartheta}_{\xi_{i}}\big\|_{2}^{2} .
\label{eq:k-means-GMM}
\end{equation}
%
\end{itemize}
The algorithm then returns $\big\{\widehat{\xi}_{i}\big\}_{1\leq i\leq n}$ as the clustering result.
Interestingly, Step 1 bears similarity with the spectral algorithm for graph clustering,
since  we essentially generate a pairwise similarity measurement for each pair $(i,j)$ using the inner product $\langle \bm{x}_i, \bm{x}_j \rangle$.

\begin{remark}
	The $k$-means formulation \eqref{eq:k-means-GMM}---which minimizes the sum of squared distance between each data point and the center of its associated cluster---is
	an integer program and intractable in general \citep{aloise2009np}.
	Fortunately, computationally feasible solutions are available either under sufficient minimum center separation or when suitably initialized
	\citep{lloyd1982least,vempala2004spectral,lu2016statistical,peng2007approximating,awasthi2015relax,mixon2017clustering,iguchi2017probably}.
	An in-depth account of this computational aspect is beyond the scope of this monograph,
	and the interested reader is referred to \citet{li2020birds,loffler2019optimality} for details.
\end{remark}


\paragraph{Further explanations.}

We take a moment to explain why $k$-means is applied to the columns of $\bm{Y}$.
Recall that the central object the spectral algorithm seeks to approximate
is the leading rank-$r$ eigenspace of $\bm{F}^{\star}\bm{\Theta}^{\star\top}\bm{\Theta}^{\star}\bm{F}^{\star\top}$ (cf.~\eqref{eq:mean-XT-X-GM}).
For convenience, suppose we have the eigendecomposition  $\bm{\Theta}^{\star\top}\bm{\Theta}^{\star}=\bm{U}_{\theta}\bm{\Sigma}_{\theta}\bm{U}_{\theta}^{\top}$,
where $\bm{U}_{\theta}\in\mathcal{O}^{r\times r}$ is  orthonormal and $\bm{\Sigma}_{\theta}\in\mathbb{R}^{r\times r}$ is diagonal.
This results in the decomposition
\begin{equation}
\bm{F}^{\star}\bm{\Theta}^{\star\top}\bm{\Theta}^{\star}\bm{F}^{\star\top}=\frac{n}{r}\cdot\Big(\underset{\eqqcolon\,\bm{U}^{\star}}{\underbrace{\sqrt{\frac{r}{n}}\bm{F}^{\star}\bm{U}_{\theta}}}\Big)\bm{\Sigma}_{\theta}\Big(\sqrt{\frac{r}{n}}\bm{F}^{\star}\bm{U}_{\theta}\Big)^{\top}.
\label{eq:FTheta-Theta-F-eigenspace}
\end{equation}
Apparently, the matrix $\bm{U}^{\star}\in\mathbb{R}^{n\times r}$ defined above has orthonormal columns and, as a result,
represents the eigenspace of $\bm{F}^{\star}\bm{\Theta}^{\star\top}\bm{\Theta}^{\star}\bm{F}^{\star\top}$.
The idea is that if the spectral estimate $\bm{U}$ approximates $\bm{U}^{\star}\in\mathbb{R}^{n\times r}$ well,
then the matrices $\sqrt{\frac{n}{r}} \, \bm{U}\bm{U}^{\top}$ and $\bm{Y}$ constructed above are hopefully close to the following matrix
\begin{equation}
\bm{Y}^{\star}  \coloneqq  \sqrt{\frac{n}{r}} \, \bm{U}^{\star}\bm{U}^{\star\top}
= \sqrt{\frac{r}{n}} \left[\begin{array}{ccc}
\bm{1}_{\frac{n}{r}}\bm{1}_{\frac{n}{r}}^{\top}\\
 & \ddots\\
 &  & \bm{1}_{\frac{n}{r}}\bm{1}_{\frac{n}{r}}^{\top}
\end{array}\right].
\label{eq:Ystar-definition-GMM}
\end{equation}
As can be easily seen, the data points belonging to the same ground-truth cluster are associated with identical columns in $\bm{Y}^{\star}$;
for instance, each of the first $n/r$ samples---which belongs to the first cluster---corresponds to a column of $\bm{Y}^{\star}$ given by
$\sqrt{\frac{r}{n}}\,  {\scriptsize \left[\begin{array}{c}
\bm{1}_{n/r}\\
\bm{0}
\end{array}\right]}$.
Therefore, clustering  the columns of $\bm{Y}$ via $k$-means is expected to unveil the underlying cluster structure,
provided that $\bm{Y}$ is sufficiently close to $\bm{Y}^{\star}$.
In summary,  spectral estimation (Steps 1-2) effectively leads to a new vector $\bm{y}_i$ for each point, which enjoys substantially enhanced signal-to-noise ratio compared to $\bm{x}_i$ and boosts the chance for $k$-means to succeed.

We shall also explain the projection operation enforced in Step 2 of the algorithm.
Given that each column of $\bm{Y}^{\star} $ has unit $\ell_2$ norm, projecting each column of  $ \bm{U}\bm{U}^{\top}$ onto the unit sphere ensures that no column of $\bm{Y}$ has an abnormal size.
Note, however, that this projection step is  non-essential and is introduced here 
primarily to simplify the mathematical analysis. Spectral clustering is expected to succeed even in the absence of such a projection step \citep{loffler2019optimality}.

\begin{remark}
	Another variation of spectral clustering is to directly apply the $k$-means algorithm to cluster the rows of $\bU$ (or some properly rescaled version of them) \citep{loffler2019optimality}.  To explain the rationale, we note that under the assumption \eqref{eq:zi-assignment-GM}, $\bU^{\star}$ necessarily consists of $r$ blocks of identical rows as follows:
$$
	\bU^{\star} = \sqrt{\frac{r}{n}} \begin{pmatrix}
		\bm{1}_{n/r} \bm{\nu}_1^\top\\
   	\vdots\\
   	\bm{1} _{n/r} \bm{\nu}_r^\top
   \end{pmatrix} ,
$$
where $\bm{\nu}_1^\top, \cdots, \bm{\nu}_r^\top$ are the orthonormal rows of the matrix $\bU_\theta$ (cf.~\eqref{eq:FTheta-Theta-F-eigenspace}).
Consequently,  clustering the rows of $\bU^{\star}$ reveals exactly the true cluster assignments of all individuals.
The idea of our spectral analysis below applies to this method as well; we leave it to the reader as an exercise.
\end{remark}

\subsection{Performance guarantees}
\label{sec:theory-gaussian-mixture}

Now, we turn to characterizing the clustering performance of the above spectral algorithm.
We shall focus attention on the mis-clustering rate as the performance metric.  As the cluster labels in $[r]$ can be arbitrarily permuted,
the mis-clustering rate associated with the labels $\{\widehat{\xi}_{i}\}$ returned by our algorithm is defined as
\[
	\ell_{\mathsf{mis}}\big(\{\widehat{\xi}_{i}\},\{\xi_{i}^{\star}\}\big)
	\coloneqq \min_{\phi\in\Pi} ~ \frac{1}{n}\sum_{i=1}^{n}\mathbbm{1}\left\{ \phi(\widehat{\xi}_{i})\neq\xi_{i}^{\star}\right\} ,
\]
where $\Pi$ is the set of permutations of $[r]$.
In words, this metric captures the average number of mislabeled data points, after accounting for global permutation.
For notational convenience, we shall set $\bm{M}^{\star}\coloneqq\mathbb{E}\big[\bm{X}^{\top}\bm{X}\big]$ and
$\bm{E}\coloneqq\bm{X}^{\top}\bm{X} - \mathbb{E}\big[\bm{X}^{\top}\bm{X}\big]$ throughout this section.

The first step towards analyzing the statistical accuracy of the spectral algorithm lies in developing a perturbation bound on $\|\bm{U}\bm{U}^{\top} - \bm{U}^{\star}\bm{U}^{\star\top}\|$, where $\bm{U}^{\star}$ (cf.~\eqref{eq:FTheta-Theta-F-eigenspace}) represents the leading rank-$r$ eigenspace of $\bm{M}^{\star}$.
This can be accomplished via the Davis-Kahan theorem, which requires us to first control the size of the perturbation $\bm{E}$.
\begin{lemma}
	\label{lem:perturbation-norm-Gaussian-mixture}
	Consider the settings in Section~\ref{sec:setup-gaussian-mixture}, and suppose $p\gtrsim r \log ^3 (n+p)$.
	Then with probability at least $1-O((n+p)^{-10})$, one has
	\begin{align*}
		\|\bm{E}\|  \lesssim\frac{\Delta n\sqrt{\log(n+p)}}{\sqrt{r}}+\sqrt{np\log(n+p)}+n\log^{2}(n+p) .
	\end{align*}
\end{lemma}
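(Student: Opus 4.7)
The plan is to decompose $\bm{E}$ into a signal--noise cross term and a centered Wishart-type term, and control each via the tools recorded in Section~\ref{sec:matrix-Bernstein}. Writing $\bm{X} = \mathbb{E}[\bm{X}] + \bm{Z} = \bm{\Theta}^{\star}\bm{F}^{\star\top} + \bm{Z}$ and subtracting \eqref{eq:mean-XT-X-GM} yields
\begin{align*}
\bm{E} = \bm{F}^{\star}\bm{\Theta}^{\star\top}\bm{Z} + \bm{Z}^{\top}\bm{\Theta}^{\star}\bm{F}^{\star\top} + \big(\bm{Z}^{\top}\bm{Z} - p\bm{I}_n\big),
\end{align*}
so the triangle inequality gives $\|\bm{E}\| \leq 2\|\bm{F}^{\star}\bm{\Theta}^{\star\top}\bm{Z}\| + \|\bm{Z}^{\top}\bm{Z} - p\bm{I}_n\|$. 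The first summand will produce the $\Delta n\sqrt{\log(n+p)}/\sqrt{r}$ term in the target bound, while the centered Wishart term accounts for the remaining two terms.

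For the cross term, the plan is to use the submultiplicative bound $\|\bm{F}^{\star}\bm{\Theta}^{\star\top}\bm{Z}\| \leq \|\bm{F}^{\star}\|\,\|\bm{\Theta}^{\star\top}\bm{Z}\|$. The block structure of $\bm{F}^{\star}$ in~\eqref{eq:defn-Theta-star-F-star-GMM} makes its columns orthogonal with norm $\sqrt{n/r}$, so $\|\bm{F}^{\star}\| = \sqrt{n/r}$ deterministically. Under Assumption~\ref{assumption:gaussian-mixture}, Gaussian concentration of $\|\bm{\theta}_i^{\star}\|_2^2$ and of the off-diagonal inner products $\bm{\theta}_i^{\star\top}\bm{\theta}_j^{\star}$ (which is precisely where the hypothesis $p\gtrsim r\log^3(n+p)$ is consumed, via a union bound over the $O(r^2)$ pairs) shows that $\bm{\Theta}^{\star\top}\bm{\Theta}^{\star}$ is close to $(\Delta^2/2)\bm{I}_r$ and in particular $\|\bm{\Theta}^{\star}\| \lesssim \Delta$ with probability at least $1-O((n+p)^{-10})$. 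Conditional on $\bm{\Theta}^{\star}$, the matrix $\bm{\Theta}^{\star\top}\bm{Z}\in\mathbb{R}^{r\times n}$ has i.i.d.~Gaussian columns with covariance $\bm{\Theta}^{\star\top}\bm{\Theta}^{\star}$, so a standard $\varepsilon$-net argument on the unit spheres of $\mathbb{R}^r$ and $\mathbb{R}^n$ (equivalently, matrix Bernstein on $\bm{\Theta}^{\star\top}\bm{Z} = \sum_{i=1}^n \bm{\Theta}^{\star\top}\bm{\eta}_i\bm{e}_i^{\top}$) yields $\|\bm{\Theta}^{\star\top}\bm{Z}\| \lesssim \|\bm{\Theta}^{\star}\|\sqrt{n\log(n+p)} \lesssim \Delta\sqrt{n\log(n+p)}$ on the same event. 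Multiplying by $\|\bm{F}^{\star}\|=\sqrt{n/r}$ gives the first advertised term.

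For the Wishart-type term, the plan is to write $\bm{Z}^{\top}\bm{Z}-p\bm{I}_n = \sum_{k=1}^p \bm{X}_k$ where $\bm{X}_k \coloneqq \bm{z}^{(k)}(\bm{z}^{(k)})^{\top} - \bm{I}_n$ and $\bm{z}^{(k)}\in\mathbb{R}^n$ is the $k$-th row of $\bm{Z}$; these are i.i.d.~mean-zero $n\times n$ matrices. Mirroring the truncation trick in Section~\ref{sec:matrix-denoising-L2}, I would condition on the event $\max_{k,j}|z^{(k)}_j| \leq c\sqrt{\log(n+p)}$ (which occurs with probability at least $1-(n+p)^{-10}$ by Gaussian tails and a union bound). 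On this event $\|\bm{X}_k\| \leq \|\bm{z}^{(k)}\|_2^2 + 1 \lesssim L$ with $L \asymp n\log(n+p)$, and a direct Isserlis-type calculation gives $\mathbb{E}[\bm{X}_k^2] = (n+1)\bm{I}_n$ and hence variance statistic $v \asymp np$. The truncated matrix Bernstein inequality in Corollary~\ref{thm:matrix-Bernstein-friendly-truncated} then delivers
\begin{align*}
\|\bm{Z}^{\top}\bm{Z}-p\bm{I}_n\| \lesssim \sqrt{v\log(n+p)} + L\log(n+p) \lesssim \sqrt{np\log(n+p)} + n\log^2(n+p),
\end{align*}
matching the remaining two terms in the target bound.

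The main technical obstacle is the truncated Bernstein step for the Wishart term: one must carefully verify that the truncation bias is negligible (routine but tedious, via Gaussian tail estimates), and compute the variance proxy accurately enough to obtain the correct $\sqrt{np\log(n+p)}$ leading order rather than a crude $n^{3/2}$-type bound. A secondary subtlety lies in the cross-term analysis, where one must cleanly separate the randomness in $\bm{\Theta}^{\star}$ from that in $\bm{Z}$ in order to treat $\bm{\Theta}^{\star\top}\bm{Z}$ as conditionally Gaussian and then combine the two layers of randomness through a union bound.
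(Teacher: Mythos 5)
Your decomposition is exactly the one the paper uses (see \eqref{eq:defn-E-decomposition-GMM}), and both the Wishart term and the overall conclusion are handled the same way. For $\|\bm{Z}^{\top}\bm{Z}-p\bm{I}_n\|$, the paper simply reuses the bound \eqref{eq:norm-bound-ZZt-PCA} from the PCA section (with the roles of $n$ and $p$ swapped, since $\bm{Z}$ has i.i.d.\ standard Gaussian entries), and the proof of that bound is precisely the $\sum_k (\bm{z}^{(k)}\bm{z}^{(k)\top}-\bm{I}_n)$ decomposition with truncated matrix Bernstein that you outline; your variance calculation $\mathbb{E}[\bm{X}_k^2]=(n+1)\bm{I}_n$ is correct.

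The one place you depart from the paper is the cross term. You propose a two-layer argument: first show $\|\bm{\Theta}^{\star}\|\lesssim\Delta$, then condition on $\bm{\Theta}^{\star}$ and treat $\bm{\Theta}^{\star\top}\bm{Z}$ as a Gaussian matrix with column covariance $\bm{\Theta}^{\star\top}\bm{\Theta}^{\star}$. The paper instead invokes \eqref{eq:norm-bound-FZtop-PCA} directly (again with $n\leftrightarrow p$ swapped and rescaled by $\Delta/\sqrt{2p}$), which treats the randomness of $\bm{\Theta}^{\star}$ and $\bm{Z}$ jointly in a single truncated matrix Bernstein application; this is where the hypothesis $p\gtrsim r\log^3(n+p)$ appears in the paper, ensuring the $L\log$ term does not dominate. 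Both routes yield the same order bound and both require the same flavor of hypothesis, so this is a reasonable alternative.

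One caution in your route: the entrywise concentration plan for $\bm{\Theta}^{\star\top}\bm{\Theta}^{\star}$ (controlling $\max_{i\neq j}|\bm{\theta}_i^{\star\top}\bm{\theta}_j^{\star}|$ and unioning over $O(r^2)$ pairs) is weaker than you need. The off-diagonal entries concentrate at rate $\Delta^2\sqrt{\log(n+p)/p}$, but passing from an entrywise bound to a spectral-norm bound on the $r\times r$ off-diagonal matrix incurs a factor of $r$, giving $\|\bm{\Theta}^{\star\top}\bm{\Theta}^{\star}-(\Delta^2/2)\bm{I}_r\|\lesssim r\Delta^2\sqrt{\log(n+p)/p}$; making this $\lesssim\Delta^2$ needs $p\gtrsim r^2\log(n+p)$, which is strictly stronger than the lemma's hypothesis when $r$ is large. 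You should instead bound the spectral norm of the Wishart deviation directly (as the paper does via \eqref{eq:norm-bound-ZZt-PCA} applied to $\bm{\Theta}^{\star\top}\bm{\Theta}^{\star}$ in the proof of Theorem~\ref{thm:GMM-performance}), which gives $\lesssim\Delta^2(\sqrt{(r\log p)/p}+(r\log^2 p)/p)$ and only requires $p\gtrsim r\,\mathrm{poly}\log$. In fact, for the cross-term bound you only need $\|\bm{\Theta}^{\star}\|\lesssim\Delta$, so even a cruder Wishart operator-norm bound suffices; the near-isotropy $\bm{\Theta}^{\star\top}\bm{\Theta}^{\star}\approx(\Delta^2/2)\bm{I}_r$ is only needed later for the eigengap in Theorem~\ref{thm:GMM-performance}, not for this lemma.
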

%

%
The proof of this lemma can be found in Section~\ref{sec:proof-auxiliary-lemmas-GM}.
Equipped with the above perturbation bound, we are ready to present our statistical guarantees for spectral clustering.
\begin{theorem}
	\label{thm:GMM-performance}
	Consider the setting and assumptions in Section~\ref{sec:setup-gaussian-mixture}, and suppose that $r=O(1)$ and $p\gtrsim \log^3 n$.
	With probability at least $1-O(p^{-10})$, the mis-clustering rate of the spectral algorithm in Section~\ref{sec:algorithm-gaussian-mixture} achieves
	 $$ \ell_{\mathsf{mis}}\big(\{\widehat{\xi}_{i}\},\{\xi_{i}^{\star}\}\big) = o(1) ,$$
	with the proviso that
	\begin{equation}
		\log(n+p)=o(\Delta)\quad\text{and}\quad\Big(\frac{p\log(n+p)}{n}\Big)^{1/4}=o(\Delta).
		\label{eq:condition-Delta-thm-GMM}
	\end{equation} 	
\end{theorem}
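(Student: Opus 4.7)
}

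My plan is to chain together four ingredients: (i) the Davis–Kahan bound in spectral norm, (ii) conversion to a Frobenius bound via the low-rank structure, (iii) control of the projection step onto the unit sphere, and (iv) a $k$-means analysis that translates the Frobenius error into a mis-clustering rate.

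First, I will identify the signal strength in $\bm{M}^{\star}=\mathbb{E}[\bm{X}^{\top}\bm{X}]=\bm{F}^{\star}\bm{\Theta}^{\star\top}\bm{\Theta}^{\star}\bm{F}^{\star\top}+p\bm{I}_n$. Standard Gaussian concentration under Assumption~\ref{assumption:gaussian-mixture} gives $\bm{\Theta}^{\star\top}\bm{\Theta}^{\star} = (1+o(1))\frac{\Delta^2}{2}\bm{I}_r$ with high probability (using $p \gtrsim r\log^3(n+p)$ for concentration and $r=O(1)$). Since $\bm{F}^{\star\top}\bm{F}^{\star}=\frac{n}{r}\bm{I}_r$, the nonzero eigenvalues of $\bm{F}^{\star}\bm{\Theta}^{\star\top}\bm{\Theta}^{\star}\bm{F}^{\star\top}$ concentrate around $\frac{n\Delta^2}{2r}$. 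Thus the $r$-th eigenvalue of $\bm{M}^{\star}$ exceeds $\frac{n\Delta^2}{2r}+p$, while the $(r+1)$-th equals $p$, yielding an eigengap $\asymp n\Delta^2$ (for $r=O(1)$). Invoking Corollary~\ref{cor:davis-kahan-conclusion-corollary} together with Lemma~\ref{lem:perturbation-norm-Gaussian-mixture} and checking that Condition~\eqref{eq:condition-Delta-thm-GMM} forces $\|\bm{E}\|\ll n\Delta^2$ will give
\[
\big\|\bm{U}\bm{U}^{\top}-\bm{U}^{\star}\bm{U}^{\star\top}\big\| \lesssim \frac{\|\bm{E}\|}{n\Delta^2} \lesssim \frac{\sqrt{\log(n+p)}}{\Delta}+\frac{1}{\Delta^2}\sqrt{\frac{p\log(n+p)}{n}}+\frac{\log^2(n+p)}{\Delta^2}=o(1),
\]
where each of the three terms is $o(1)$ directly from Condition~\eqref{eq:condition-Delta-thm-GMM}. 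Since $\bm{U}\bm{U}^{\top}-\bm{U}^{\star}\bm{U}^{\star\top}$ has rank at most $2r=O(1)$, the Frobenius norm obeys the same bound: $\|\bm{U}\bm{U}^{\top}-\bm{U}^{\star}\bm{U}^{\star\top}\|_{\mathrm{F}}=o(1)$. Multiplying by $\sqrt{n/r}$ and recalling $\bm{Y}^{\star}=\sqrt{n/r}\,\bm{U}^{\star}\bm{U}^{\star\top}$ yields $\big\|\sqrt{n/r}\,\bm{U}\bm{U}^{\top}-\bm{Y}^{\star}\big\|_{\mathrm{F}}=o(\sqrt{n})$.

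Next I handle the projection step. Each column of $\bm{Y}^{\star}$ has unit norm. For any column $\bm{z}_i$ of $\sqrt{n/r}\,\bm{U}\bm{U}^{\top}$, a short triangle-inequality argument shows $\|\mathcal{P}(\bm{z}_i)-\bm{y}_i^{\star}\|_2 \leq 4\|\bm{z}_i-\bm{y}_i^{\star}\|_2$ (splitting into the cases $\|\bm{z}_i-\bm{y}_i^{\star}\|_2<1/2$ and $\geq 1/2$). Summing over $i$,
\[
\|\bm{Y}-\bm{Y}^{\star}\|_{\mathrm{F}}^{2} \;\lesssim\; \big\|\sqrt{n/r}\,\bm{U}\bm{U}^{\top}-\bm{Y}^{\star}\big\|_{\mathrm{F}}^{2} \;=\; o(n).
\]

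Finally, I will translate this Frobenius bound into a mis-clustering guarantee via $k$-means. The true centers $\{\bm{y}_i^{\star}\}$ take exactly $r$ distinct values (one per cluster) and any two distinct ones are at distance $\sqrt{2}$ (since they are unit vectors with disjoint supports). Using the clustering $\{\xi_i^{\star}\}$ together with those $r$ vectors as centers produces a $k$-means cost equal to $\|\bm{Y}-\bm{Y}^{\star}\|_{\mathrm{F}}^{2}=o(n)$, so the minimum cost is also $o(n)$. A standard pigeonhole/center-matching argument (each true cluster has $n/r=\Theta(n)$ points, so every estimated center must lie within $o(1)$ of a unique true center, after an appropriate permutation $\phi$) then shows any mis-clustered point $i$ must satisfy $\|\bm{y}_i-\bm{y}_i^{\star}\|_2 \geq c$ for an absolute constant $c>0$. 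Hence the number of mis-clustered points is bounded by $\|\bm{Y}-\bm{Y}^{\star}\|_{\mathrm{F}}^{2}/c^2 = o(n)$, giving $\ell_{\mathsf{mis}}=o(1)$.

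The main obstacle I anticipate is the $k$-means step (iv): one needs to show that the optimizer of \eqref{eq:k-means-GMM} cannot place two estimated centers near a single true center, which requires the $\Theta(n)$-per-cluster balance together with a careful application of the triangle inequality against the $\sqrt{2}$-separation. Steps (i)–(iii) are essentially bookkeeping around Davis–Kahan and Lemma~\ref{lem:perturbation-norm-Gaussian-mixture}, with the main arithmetic being the verification that each term arising from Lemma~\ref{lem:perturbation-norm-Gaussian-mixture}, after division by the eigengap $\asymp n\Delta^2$, is killed by Condition~\eqref{eq:condition-Delta-thm-GMM}.
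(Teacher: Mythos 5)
Your proposal takes essentially the same route as the paper. Steps~(i)--(iii)---eigengap via concentration of $\bm{\Theta}^{\star\top}\bm{\Theta}^{\star}$ around $(\Delta^2/2)\bm{I}_r$ plus $\bm{F}^{\star\top}\bm{F}^{\star}=(n/r)\bm{I}_r$, then Davis--Kahan combined with Lemma~\ref{lem:perturbation-norm-Gaussian-mixture}, then the Lipschitz property of projection onto the unit sphere---are exactly the paper's first stage, and your arithmetic verifying that each of the three terms in $\|\bm{E}\|/(n\Delta^2)$ is $o(1)$ under Condition~\eqref{eq:condition-Delta-thm-GMM} is correct. (Cosmetically: the paper applies Corollary~\ref{cor:davis-kahan-conclusion-corollary} directly in Frobenius norm rather than passing through spectral norm, and its projection Lipschitz constant is $2$ rather than your $4$; both differences are harmless.)

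The genuine gap is in step~(iv), which you have yourself flagged. Your sketch---``every estimated center must lie within $o(1)$ of a unique true center, so any mis-clustered point has $\|\bm{y}_i-\bm{y}_i^\star\|_2\geq c$''---glosses over a circularity: to locate an estimated center $\widehat{\bm{\vartheta}}_l$ (the mean of $\{\bm{y}_j : j\in\mathcal{C}_l\}$) near a true center, you already need to know that $\mathcal{C}_l$ is mostly composed of a single true cluster, which is essentially what you are trying to prove. The paper's Lemma~\ref{lemma:k-means-GMM} avoids this by never reasoning about center locations directly: it compares $k$-means \emph{objective values}, showing (Step~1) the optimum is at most $O(\sqrt{\varepsilon}\,n)$ by plugging in the true partition, then (Steps~2--4) ruling out by contradiction any optimal partition with a cluster of size $\geq (1+c_\varepsilon)n/r$ or $\leq (1-c_\varepsilon(r-1))n/r$ or with dominant component $\leq (1-2rc_\varepsilon)n/r$, using the key computation that $\bigl\|\frac{1}{|\mathcal{C}_l|}\sum_{j\in\mathcal{C}_l}\bm{y}_j^\star\bigr\|_2^2 = \sum_\tau n_{l,\tau}^2/|\mathcal{C}_l|^2$ together with the deviation control in Claim~\ref{claim:deviation-S-GMM}. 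Note also that this indirect route only yields $\ell_{\mathsf{mis}}\lesssim r\varepsilon^{1/4}$, a much weaker exponent than the $\lesssim \varepsilon^2$ your sketch would suggest if it worked; that quantitative loss is a symptom of the genuine difficulty you would need to work through.
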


Before embarking on the proof of this theorem, we discuss briefly the implications of this theorem.
 In order to ensure a vanishingly small mis-clustering rate, it suffices for the center separation $\Delta$ to exceed
\[
	\Delta\gtrsim \begin{cases}
		\mathrm{poly}\log(n+p), & \text{if }p\leq n,\\
		\big(\frac{p}{n}\big)^{1/4} \mathrm{poly}\log(n+p),\qquad & \text{if }p\geq n.
	\end{cases}
\]
This separation condition matches the minimax lower bound  up to some logarithmic term \citep{cai2018rate,ndaoud2018sharp}.
In particular,  in the high-dimensional case where $p\geq n$, the required separation condition changes fairly gracefully with the aspect ratio $p/n$.

\begin{remark}\label{remark:assumptions_gmm}
	As alluded to previously, Assumption \ref{assumption:gaussian-mixture} can be significantly relaxed. For example,
	the Gaussianity assumption therein is unnecessary; (almost) exact clustering is plausible once the minimum center separation exceeds a certain threshold,
	regardless of how $\{\bm{\theta}_i^{\star}\}$ are generated.
	To achieve this generality, however, the algorithm might need to be properly modified. Roughly speaking, in addition to $\bm{U}$,
	it is sensible to also exploit information contained in the eigenvalues of $\bm{X}^{\top}\bm{X}$
	(which is crucial for, say, the scenario where all centers $\{\bm{\theta}_i^{\star}\}$ are perfectly aligned except for the scaling factors).
	We recommend the readers to \citet{loffler2019optimality} for detailed discussions.
\end{remark}

\paragraph{Proof of Theorem~\ref{thm:GMM-performance}.}
The proof consists of two steps: controlling the perturbation $\|\bm{U}\bm{U}^{\top}-\bm{U}^{\star}\bm{U}^{\star\top}\|_{\mathrm{F}}$ (and hence $\|\bm{Y}-\bm{Y}^{\star}\|_{\mathrm{F}}$),
and demonstrating that the follow-up $k$-means  performs well.



The first step  is to control $\|\bm{U}\bm{U}^{\top}-\bm{U}^{\star}\bm{U}^{\star\top}\|_{\mathrm{F}}$,
built upon Lemma~\ref{lem:perturbation-norm-Gaussian-mixture} and a lower bound on the spectral gap of $\bm{M}^{\star}$.
Observe that
\begin{align*}
\lambda_{r}\big(\bm{M}^{\star}\big)-\lambda_{r+1}\big(\bm{M}^{\star}\big)
& =\lambda_{\min}\big(\bm{F}^{\star}\bm{\Theta}^{\star\top}\bm{\Theta}^{\star}\bm{F}^{\star\top}\big)
 =\frac{n}{r}\lambda_{\min}\big(\bm{\Theta}^{\star\top}\bm{\Theta}^{\star}\big) ,
\end{align*}
where the last identity holds since  $\bm{F}^{\star} \bm{F}^{\star\top} = \frac{n}{r} \bI_r$ according to the definition \eqref{eq:defn-Theta-star-F-star-GMM}.
This motivates us to look at the spectral property of $\bm{\Theta}^{\star\top}\bm{\Theta}^{\star}$.
Given that $\bm{\Theta}^{\star}$ is composed of i.i.d.~Gaussian entries (cf.~Assumption~\ref{assumption:gaussian-mixture}),
invoking the bound (\ref{eq:norm-bound-ZZt-PCA}) with proper rescaling gives
\[
	\big\Vert \bm{\Theta}^{\star\top}\bm{\Theta}^{\star}-\mathbb{E}\big[\bm{\Theta}^{\star\top}\bm{\Theta}^{\star}\big]\big\Vert
	\lesssim\Delta^{2}\left(\sqrt{\frac{r\log p}{p}}+\frac{r\log^{2}p}{p}\right)\leq\frac{\Delta^{2}}{4}
\]
with probability exceeding $1-O(p^{-10})$, provided that $p\geq C_{2}r\log^{2}p$ for some sufficiently large constant $C_{2}>0$.
Further, it is self-evident that $\mathbb{E}\big[\bm{\Theta}^{\star\top}\bm{\Theta}^{\star}\big] = \frac{1}{2}\Delta^{2} \bm{I}_{r}$.
Weyl's inequality (see Lemma~\ref{lemma:weyl}) then guarantees that
\begin{align*}
\lambda_{\min}\big(\bm{\Theta}^{\star\top}\bm{\Theta}^{\star}\big)
& \geq\lambda_{\min}\big(\mathbb{E}\big[\bm{\Theta}^{\star\top}\bm{\Theta}^{\star}\big]\big)
  -\big\Vert \bm{\Theta}^{\star\top}\bm{\Theta}^{\star}-\mathbb{E}\big[\bm{\Theta}^{\star\top}\bm{\Theta}^{\star}\big] \big\Vert \\
 & \geq {\Delta^{2}}/{2} - {\Delta^{2}}/{4} = {\Delta^{2}}/{4}.
\end{align*}
Combine the preceding inequalities to arrive at
\begin{align}
\lambda_{r}\big(\bm{M}^{\star}\big)-\lambda_{r+1}\big(\bm{M}^{\star}\big)
= \frac{n}{r}\lambda_{\min}\big(\bm{\Theta}^{\star\top}\bm{\Theta}^{\star}\big)
 \geq   \frac{n\Delta^2}{ 4 r} .
\label{eq:eigen-gap-GMM-12345}
\end{align}
%


By virtue of Lemma~\ref{lem:perturbation-norm-Gaussian-mixture} and \eqref{eq:eigen-gap-GMM-12345}, if the following condition
$$\Delta\geq C_{1}\max\Big\{ \Big(\frac{r^{2}p\log(n+p)}{n}\Big)^{1/4},\sqrt{r}\log(n+p) \Big\}$$
holds for some large enough constant $C_1>0$, then it is guaranteed that $\|\bm{E}\|\leq (1-1/\sqrt{2}) (\lambda_{r}\big(\bm{M}^{\star}\big)-\lambda_{r+1}\big(\bm{M}^{\star}\big)) $.
This in turn allows us to invoke the Davis-Kahan theorem
(namely, Corollary~\ref{cor:davis-kahan-conclusion-corollary})
to obtain
\begin{align}
 & \big\|\bm{U}\bm{U}^{\top}-\bm{U}^{\star}\bm{U}^{\star\top}\big\|_{\mathrm{F}}
 \leq\frac{\sqrt{2r}\,\|\bm{E}\|}{\lambda_{r}\big(\bm{M}^{\star}
 \big)-\lambda_{r+1}\big(\bm{M}^{\star}\big)} \leq \sqrt{r} \varepsilon
\label{eq:U-Ustar-dist-GMM}
\end{align}
with probability exceeding $1-O(p^{-8})$,
where the last line arises from \eqref{eq:eigen-gap-GMM-12345}
and Lemma~\ref{lem:perturbation-norm-Gaussian-mixture}, and
\[
	\varepsilon \asymp \frac{\Delta\sqrt{r\log(n+p)}
+ r\sqrt{\frac{p\log(n+p)}{n}} + r \log^{2}(n+p)}{\Delta^{2}}.
\]

From the construction of $\bm{Y}$ and \eqref{eq:Ystar-definition-GMM},
one can propagate the bound \eqref{eq:U-Ustar-dist-GMM} to $\|\bm{Y}-\bm{Y}^{\star}\|_{\mathrm{F}}$ as follows:
\begin{align}
	\big\|\bm{Y}-\bm{Y}^{\star}\big\|_{\mathrm{F}}^{2} & \overset{\mathrm{(i)}}{=} \Big\|\mathcal{P}\Big(\sqrt{\frac{n}{r}}\,\bm{U}\bm{U}^{\top}\Big)-\bm{Y}^{\star}\Big\|_{\mathrm{F}}^{2}
 \overset{\mathrm{(ii)}}{\leq} 4\Big\|\sqrt{\frac{n}{r}}\,
 \bm{U}\bm{U}^{\top}-\bm{Y}^{\star}\Big\|_{\mathrm{F}}^{2} \notag\\
 & =\frac{4n}{r}\big\|\bm{U}\bm{U}^{\top}-\bm{U}^{\star}
 \bm{U}^{\star\top}\big\|_{\mathrm{F}}^{2}\lesssim \varepsilon^{2}n.
\label{eq:Y-Ystar-dist-GMM}
\end{align}
Here, the first identity (i) holds since the operator $\mathcal{P}$ is invariant to global scaling.
Regarding the inequality (ii),  it follows from standard inequality regarding Euclidean projection (e.g.,
\citet[Lemma~15]{soltanolkotabi2017structured}), which we postpone to the end of this proof.


The next step then amounts to translating the perturbation bound \eqref{eq:Y-Ystar-dist-GMM} into clustering accuracy guarantees (after $k$-means is applied).
This is accomplished through the following key lemma, to be established in Section~\ref{sec:proof-auxiliary-lemmas-GM}.
\begin{lemma}
	\label{lemma:k-means-GMM}
	Suppose that the matrix $\bm{Y}$ obtained in the spectral algorithm in Section~\ref{sec:algorithm-gaussian-mixture} satisfies
	\begin{equation}
		\big\|\bm{Y}-\bm{Y}^{\star}\big\|_{\mathrm{F}}^{2}\leq\varepsilon^{2}n,
		\label{eq:assumption-Y-Ystar-Fro-GMM}
	\end{equation}
	where $\varepsilon>0$ is a quantity obeying $\varepsilon\leq c_{3}r^{-4}$ for some sufficiently small constant $c_3>0$.
	Then the mis-clustering rate obeys
	$$\ell_{\mathsf{mis}}\big(\{\widehat{\xi}_{i}\},\{\xi_{i}^{\star}\}\big)  \leq 2r \varepsilon^{1/4}.$$
\end{lemma}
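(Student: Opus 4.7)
The strategy is to leverage the optimality of the $k$-means objective to convert the assumed Frobenius bound on $\bm{Y} - \bm{Y}^\star$ into a columnwise bound on the fitted cluster-center columns, and then to exploit the sharp separation between the distinct columns of $\bm{Y}^\star$ to classify almost all indices correctly.

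Let $\widehat{\bm{A}} \in \mathbb{R}^{n\times n}$ be the matrix whose $i$-th column equals the assigned center $\widehat{\bm{a}}_i \coloneqq \widehat{\bm{\vartheta}}_{\widehat{\xi}_i}$, and recall from \eqref{eq:Ystar-definition-GMM} that $\bm{Y}^\star$ has exactly $r$ distinct columns $\bm{\mu}_1,\ldots,\bm{\mu}_r$ of unit $\ell_2$ norm, with pairwise disjoint supports, so that $\|\bm{\mu}_k - \bm{\mu}_{k'}\|_2 = \sqrt{2}$ whenever $k \neq k'$. Since the assignment $(\xi_i, \bm{\vartheta}_k) = (\xi_i^\star, \bm{\mu}_k)$ is feasible for the $k$-means program \eqref{eq:k-means-GMM} and attains objective value $\|\bm{Y} - \bm{Y}^\star\|_{\mathrm{F}}^{2} \leq \varepsilon^{2} n$, optimality of the $k$-means solution together with the triangle inequality gives
\begin{equation*}
  \|\widehat{\bm{A}} - \bm{Y}^\star\|_{\mathrm{F}}
  \leq \|\widehat{\bm{A}} - \bm{Y}\|_{\mathrm{F}} + \|\bm{Y} - \bm{Y}^\star\|_{\mathrm{F}}
  \leq 2\varepsilon\sqrt{n}.
\end{equation*}

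Fixing the threshold $\delta = 1/2 < \sqrt{2}/2$, I would call an index $i$ \emph{bad} if $\|\widehat{\bm{a}}_i - \bm{y}_i^\star\|_2 \geq \delta$ and let $\mathcal{S}$ collect such indices; Markov's inequality immediately yields $|\mathcal{S}|/n \leq 4\varepsilon^{2}/\delta^{2} = 16\varepsilon^{2}$. For any good index $i$ with $\xi_i^\star = k$, the center $\widehat{\bm{\vartheta}}_{\widehat{\xi}_i}$ sits within distance $\delta$ of $\bm{\mu}_k$, hence strictly closer to $\bm{\mu}_k$ than to any other $\bm{\mu}_{k'}$ (because $\delta < \tfrac{1}{2}\|\bm{\mu}_k - \bm{\mu}_{k'}\|_2$). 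This is enough to well-define a map $\pi:[r]\to[r]$ by
\begin{equation*}
  \pi(\ell) \coloneqq \argmin_{k\in[r]} \|\widehat{\bm{\vartheta}}_\ell - \bm{\mu}_k\|_2,
\end{equation*}
with $\pi(\widehat{\xi}_i) = \xi_i^\star$ for every good $i$.

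The main obstacle is to upgrade $\pi$ to a bona fide element of the permutation group $\Pi$, namely a bijection on $[r]$. I would handle this by a pigeonhole argument: once $|\mathcal{S}| < n/r$, each cluster $\mathcal{T}_k \coloneqq \{j:\xi_j^\star = k\}$, being of size $n/r$, must contain at least one good index, so the image of $\pi$ hits every $k\in[r]$; as a surjection between two finite sets of equal cardinality, $\pi$ is then automatically a bijection. Choosing $\phi = \pi$ in the definition of $\ell_{\mathsf{mis}}$, every good index contributes zero, leaving only the bad ones:
\begin{equation*}
  \ell_{\mathsf{mis}}\big(\{\widehat{\xi}_i\},\{\xi_i^\star\}\big) \leq \frac{|\mathcal{S}|}{n} \leq 16\varepsilon^{2}.
\end{equation*}
Finally, the hypothesis $\varepsilon \leq c_3 r^{-4}$ with $c_3$ sufficiently small simultaneously ensures $16\varepsilon^{2} < 1/r$ (so the pigeonhole step applies) and $16\varepsilon^{2} \leq 2r\varepsilon^{1/4}$ (since $8\varepsilon^{7/4} \leq 8c_3^{7/4}r^{-7} \leq r$), establishing the claimed rate.
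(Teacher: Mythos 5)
Your argument is correct and takes a genuinely different—and considerably shorter—route than the paper's, while also yielding a strictly stronger conclusion. The paper's proof is a contradiction argument on cluster \emph{sizes}: it first invokes Claim~\ref{claim:deviation-S-GMM} to control the drift when replacing $\bm{y}_i$ by $\bm{y}_i^\star$ in the $k$-means objective (at a cost of $O(\sqrt{\varepsilon}n)$), and then runs three separate steps showing that each estimated cluster can be neither too large nor too small relative to $n/r$ and is dominated by a single distinct true cluster; that cascade of approximations, each absorbing a fresh square root, is what produces the $\varepsilon^{1/4}$ exponent. You instead exploit the observation that the $k$-means objective at the minimizer equals $\|\widehat{\bm{A}}-\bm{Y}\|_{\mathrm{F}}^2$, so optimality against the feasible point $(\xi_i^\star,\bm{\mu}_k)$ gives $\|\widehat{\bm{A}}-\bm{Y}\|_{\mathrm{F}}\le\|\bm{Y}^\star-\bm{Y}\|_{\mathrm{F}}\le\varepsilon\sqrt{n}$ for free. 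Combined with the triangle inequality, this yields $\|\widehat{\bm{A}}-\bm{Y}^\star\|_{\mathrm{F}}\le 2\varepsilon\sqrt{n}$, which a single application of Markov's inequality converts to a per-index bound; the $\sqrt{2}$-separation between distinct columns of $\bm{Y}^\star$ then forces the label map $\pi$ to be well-defined on labels used by good indices, and the pigeonhole over true clusters (valid once $16\varepsilon^2<1/r$, ensured by $\varepsilon\le c_3 r^{-4}$) upgrades it to a permutation. The resulting bound $\ell_{\mathsf{mis}}\le 16\varepsilon^2$ is tighter than the claimed $2r\varepsilon^{1/4}$, to which you generously relax it. One small point of hygiene: for a label $\ell$ never realized as $\widehat{\xi}_i$ with $i$ good, the argmin defining $\pi(\ell)$ may not be unique and you should stipulate that ties are broken arbitrarily; once surjectivity is established, every label is in fact used by some good index, so the ambiguity evaporates and $\pi$ is the permutation you feed into $\ell_{\mathsf{mis}}$. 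Your version is simpler and sharper; the paper's cardinality-counting route is closer in spirit to classical exact-recovery arguments for partitioning problems, but for this lemma it costs more than it buys.
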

As a consequence of Lemma~\ref{lemma:k-means-GMM}, the mis-clustering rate is $o(1)$ as long as $\varepsilon r^4 = o(1)$,
a condition that is guaranteed  under the assumptions \eqref{eq:condition-Delta-thm-GMM} and $r=O(1)$.
This establishes Theorem~\ref{thm:GMM-performance}.

\begin{proof}[Proof of the inequality (ii) in \eqref{eq:Y-Ystar-dist-GMM}]
For any vector $\bm{v}$ residing in the unit sphere and any other vector
$\bm{w}$, we have
\begin{align*}
\|\mathcal{P}(\bm{w})-\bm{w}\|_{2}^{2} & =\|\mathcal{P}(\bm{w})-\bm{v}+\bm{v}-\bm{w}\|_{2}^{2}\\
 & =\|\mathcal{P}(\bm{w})-\bm{v}\|_{2}^{2}+\|\bm{v}-\bm{w}\|_{2}^{2}+2\langle\mathcal{P}(\bm{w})-\bm{v},\bm{v}-\bm{w}\rangle\\
 & \geq\|\mathcal{P}(\bm{w})-\bm{v}\|_{2}^{2}+
 \|\mathcal{P}(\bm{w})-\bm{w}\|_{2}^{2}
 +2\langle\mathcal{P}(\bm{w})-\bm{v},\bm{v}-\bm{w}\rangle,
\end{align*}
where the last inequality follows since $\mathcal{P}$ denotes the projection onto the unit sphere and $\bm{v}$ lies in the unit sphere.
Cancelling out the common term $\|\mathcal{P}(\bm{w})-\bm{w}\|_{2}^{2}$ and invoking Cauchy-Schwarz lead to
\begin{align*}
\|\mathcal{P}(\bm{w})-\bm{v}\|_{2}^{2} & \leq-2\langle\mathcal{P}(\bm{w})-\bm{v},\bm{v}-\bm{w}\rangle\leq2\|\mathcal{P}(\bm{w})-\bm{v}\|_{2}\|\bm{v}-\bm{w}\|_{2},
\end{align*}
and therefore,
\[
\|\mathcal{P}(\bm{w})-\bm{v}\|_{2}\leq2\|\bm{w}-\bm{v}\|_{2}, 
\quad\text{or}\quad\|\mathcal{P}(\bm{w})-\bm{v}\|_{2}^{2}
\leq4\|\bm{w}-\bm{v}\|_{2}^{2}.
\]
This inequality clearly extends to the matrix counterpart, thus establishing the claimed result.
\end{proof}


\subsection{Proof of auxiliary lemmas}
\label{sec:proof-auxiliary-lemmas-GM}

\paragraph{Proof of Lemma~\ref{lem:perturbation-norm-Gaussian-mixture}.}

To begin with, let us decompose $\bm{E}$ as follows
\begin{align}
	\bm{E} & =\bm{X}^{\top}\bm{X}-\mathbb{E}\big[\bm{X}^{\top}\bm{X}\big]
	  = \big(\bm{\Theta}^{\star} \bm{F}^{\star\top} +\bm{Z}\big)^{\top}\big(\bm{\Theta}^{\star} \bm{F}^{\star\top} +\bm{Z}\big)
		- \mathbb{E}\big[\bm{X}^{\top}\bm{X}\big] \notag\\
 	& = \bm{F}^{\star}\bm{\Theta}^{\star\top}\bm{Z} + \bm{Z}^{\top}\bm{\Theta}^{\star}\bm{F}^{\star\top} + \bm{Z}^{\top}\bm{Z}-p\bm{I}_n,
	\label{eq:defn-E-decomposition-GMM}
\end{align}
where we have used the notation in \eqref{eq:defn-Z-X-EX-GMM} and \eqref{eq:defn-Theta-star-F-star-GMM}, as well as the identity (\ref{eq:mean-XT-X-GM}).
As it turns out, similar terms have already been controlled in the proof for PCA (see Section~\ref{sec:proof-lemmas:perturbation-size-E-PCA}). More precisely, the first term in \eqref{eq:defn-E-decomposition-GMM} obeys
\begin{align*}
\big\|\bm{F}^{\star}\bm{\Theta}^{\star\top}\bm{Z}\big\| &
\overset{(\mathrm{i})}
{=}\sqrt{\frac{n}{r}}\big\|\bm{\Theta}^{\star\top}\bm{Z}\big\|
  \overset{(\mathrm{ii})}{\lesssim}  \sqrt{\frac{n}{r}}\cdot\frac{\Delta}{\sqrt{p}}\sqrt{np\log(n+p)} \\
	& \asymp  \frac{\Delta n\sqrt{\log(n+p)}}{\sqrt{r}}
\end{align*}
with probability exceeding $1-O\big((n+p)^{-10}\big)$, provided that $p\gtrsim r \log ^3 (n+p)$.
Here, the first relation (i) holds true since $\sqrt{\frac{r}{n}}\bm{F}^{\star}$ contains orthonormal columns and the spectral norm is unitarily invariant,
while (ii) invokes the high-probability bound (\ref{eq:norm-bound-FZtop-PCA}).
When it comes to the third term of \eqref{eq:defn-E-decomposition-GMM},
the bound \eqref{eq:norm-bound-ZZt-PCA} readily implies that
\[
	\big\|\bm{Z}^{\top}\bm{Z}-p\bm{I}_n\big\|  \lesssim  \sqrt{np\log(n+p)}+n\log^{2}(n+p)
\]
with probability at least $1-O((n+p)^{-10})$.
Substituting the preceding two bounds into \eqref{eq:defn-E-decomposition-GMM} and applying the triangle inequality, we reach
\begin{align*}
\|\bm{E}\| & \leq2\big\|\bm{F}^{\star}\bm{\Theta}^{\star\top}\bm{Z}\big\|+\big\|\bm{Z}^{\top}\bm{Z}-p\bm{I}_n\big\|\\
 & \lesssim\frac{\Delta n\sqrt{\log(n+p)}}{\sqrt{r}}+\sqrt{np\log(n+p)}+n\log^{2}(n+p) .
\end{align*}

\paragraph{Proof of Lemma~\ref{lemma:k-means-GMM} (analysis for $k$-means).}

Given the class labels $\{\xi_i\}_{i=1}^n$, the optimization of the cluster centers  $\{\bm{\vartheta}_i\}_{i=1}^r$ in the $k$-means formulation \eqref{eq:k-means-GMM} is achieved by the sample means of each cluster. Thus, the $k$-means formulation \eqref{eq:k-means-GMM} can be equivalently posed as solving
\begin{align}
	 \underset{\mathcal{C}\in\Xi}{
		 \mathrm{minimize}} ~ \sum_{l=1}^{r} \sum_{i\in\mathcal{C}_{l}}\Big\|\bm{y}_{i}-\frac{1}{|\mathcal{C}_{l}|}\sum_{j\in\mathcal{C}_{l}}\bm{y}_{j}\Big\|_{2}^{2}
	\label{eq:k-means-formulation-Cl-GMM}
\end{align}
where $\mathcal{C}=\{\mathcal{C}_{1},\cdots,\mathcal{C}_{r}\}$ represents the cluster assignment,
and $\Xi$ denotes the set of all $r$-partitions of $[n]$ (i.e., $r$ disjoint subsets whose union equals $[n]$).

In order to tackle this formulation, a key ingredient of the proof lies in the following deviation bound that allows one to replace $\bm{y}_{i}$ with the truth $\bm{y}_{i}^{\star}$,
as long as the cluster size is sufficiently large.
\begin{claim}
\label{claim:deviation-S-GMM}
Consider any set $\mathcal{S}\subseteq[n]$ with cardinality $c_{s}n$ for some quantity $c_s >0$.
Suppose that (\ref{eq:assumption-Y-Ystar-Fro-GMM}) holds with $\varepsilon\leq c_{s}^{2}$. Then one has
\begin{align}
	\Bigg|\sum_{i\in\mathcal{S}}\Big\|\bm{y}_{i}-\frac{1}{|\mathcal{S}|}\sum_{j\in\mathcal{S}}\bm{y}_{j}\Big\|_{2}^{2}-\sum_{i\in\mathcal{S}}\Big\|\bm{y}_{i}^{\star}-\frac{1}{|\mathcal{S}|}\sum_{j\in\mathcal{S}}\bm{y}_{j}^{\star}\Big\|_{2}^{2} \Bigg|
	\leq  6 c_{s}\sqrt{\varepsilon}n.
	\label{eq:claim:deviation-S-GMM}
\end{align}
\end{claim}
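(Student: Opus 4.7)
The plan is to expand the left-hand side of \eqref{eq:claim:deviation-S-GMM} as a perturbation around the ground-truth terms and then control the resulting cross term by Cauchy--Schwarz. Let $\bm{e}_i \coloneqq \bm{y}_i - \bm{y}_i^{\star}$, and let $\bar{\bm{y}}$, $\bar{\bm{y}}^{\star}$, $\bar{\bm{e}}$ denote the sample means over $\mathcal{S}$, so that $\bar{\bm{e}} = \bar{\bm{y}} - \bar{\bm{y}}^{\star}$. The decomposition $\bm{y}_i - \bar{\bm{y}} = (\bm{y}_i^{\star} - \bar{\bm{y}}^{\star}) + (\bm{e}_i - \bar{\bm{e}})$, together with expansion of the squared norms, reduces the difference of interest to a cross term
\[
2\sum_{i \in \mathcal{S}}\langle \bm{y}_i^{\star} - \bar{\bm{y}}^{\star},\, \bm{e}_i - \bar{\bm{e}}\rangle
\]
plus a quadratic remainder $\sum_{i \in \mathcal{S}}\|\bm{e}_i - \bar{\bm{e}}\|_2^2$, so it suffices to bound these two pieces.

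Two simple ingredients will drive the bounds. First, since $\bar{\bm{e}}$ is the sample mean of $\{\bm{e}_i\}_{i\in\mathcal{S}}$, one has $\sum_{i \in \mathcal{S}}\|\bm{e}_i - \bar{\bm{e}}\|_2^2 \leq \sum_{i \in \mathcal{S}}\|\bm{e}_i\|_2^2 \leq \|\bm{Y} - \bm{Y}^{\star}\|_{\mathrm{F}}^2 \leq \varepsilon^2 n$ by the hypothesis \eqref{eq:assumption-Y-Ystar-Fro-GMM}. Second, the variance identity combined with the fact that every column of $\bm{Y}^{\star}$ is a unit vector (each has $n/r$ entries equal to $\sqrt{r/n}$, cf.~\eqref{eq:Ystar-definition-GMM}) yields $\sum_{i \in \mathcal{S}}\|\bm{y}_i^{\star} - \bar{\bm{y}}^{\star}\|_2^2 = |\mathcal{S}| - |\mathcal{S}|\,\|\bar{\bm{y}}^{\star}\|_2^2 \leq |\mathcal{S}| = c_s n$. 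Applying Cauchy--Schwarz to the cross term with these two bounds then produces a contribution of at most $2\varepsilon\sqrt{c_s}\,n$, while the quadratic piece contributes at most $\varepsilon^2 n$.

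The final step is arithmetic: absorb $2\varepsilon\sqrt{c_s}\,n + \varepsilon^2 n$ into $6c_s\sqrt{\varepsilon}\,n$ using the hypothesis $\varepsilon \leq c_s^2$ (and $c_s \leq 1$, which is automatic since $|\mathcal{S}| \leq n$). From $\varepsilon \leq c_s^2$ I get $\sqrt{\varepsilon} \leq c_s$, hence $\varepsilon\sqrt{c_s} \leq c_s\sqrt{\varepsilon}$, and likewise $\varepsilon^{3/2} \leq c_s^3 \leq c_s$, hence $\varepsilon^2 \leq c_s\sqrt{\varepsilon}$. Summing yields a bound of $3c_s\sqrt{\varepsilon}\,n$, comfortably within $6c_s\sqrt{\varepsilon}\,n$. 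There is no real obstacle in this argument; the only modest care is in bookkeeping the constants so that both terms collapse to the target order $c_s\sqrt{\varepsilon}\,n$ under the single hypothesis $\varepsilon \leq c_s^2$.
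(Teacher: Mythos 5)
Your proposal is correct, and it takes a genuinely different route from the paper. The paper exploits the unit-norm normalization of \emph{both} families of columns (the algorithm enforces $\|\bm{y}_i\|_2 = \|\bm{y}_i^{\star}\|_2 = 1$) to collapse the left-hand side exactly to $|\mathcal{S}|\cdot\big|\,\|\bar{\bm{y}}\|_2^2 - \|\bar{\bm{y}}^{\star}\|_2^2\big|$; it then controls $\|\bar{\bm{y}}-\bar{\bm{y}}^{\star}\|_2$ by splitting the columns into a small set $\mathcal{N}_{\mathsf{large}}$ with large per-column error (at most $\varepsilon n$ such columns, each contributing at most $2$) and the remainder, which is precisely where the hypothesis $\varepsilon \leq c_s^2$ is invoked. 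You instead expand $\bm{y}_i - \bar{\bm{y}} = (\bm{y}_i^{\star}-\bar{\bm{y}}^{\star}) + (\bm{e}_i - \bar{\bm{e}})$, isolate the cross term and the quadratic remainder, and control them by Cauchy--Schwarz together with the crude bounds $\sum_{i\in\mathcal{S}}\|\bm{e}_i-\bar{\bm{e}}\|_2^2 \leq \varepsilon^2 n$ and $\sum_{i\in\mathcal{S}}\|\bm{y}_i^{\star}-\bar{\bm{y}}^{\star}\|_2^2 \leq c_s n$; the hypothesis $\varepsilon \leq c_s^2$ then only enters in the final arithmetic to absorb $2\varepsilon\sqrt{c_s}n + \varepsilon^2 n$ into $6c_s\sqrt{\varepsilon}n$. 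Your argument is more elementary — no large-deviation set, no unit-norm property of $\bm{y}_i$ needed (only of $\bm{y}_i^{\star}$) — and in fact yields the slightly sharper constant $3c_s\sqrt{\varepsilon}n$. The paper's approach, while a bit more elaborate, exposes the conceptual fact that (under the normalization) the discrepancy is entirely driven by the shift of the cluster mean, which motivates the downstream arguments in the $k$-means analysis.
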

With Claim~\ref{claim:deviation-S-GMM} in place, we are positioned to establish Lemma~\ref{lemma:k-means-GMM}  by contradiction; that is, we intend to demonstrate that any cluster assignment that differs too much from the ground-truth clusters cannot possibly be the $k$-means solution.
In what follows, we denote by $\mathcal{C}_{l}^{\star}$ the $l$-th ground-truth cluster $(1\leq l\leq r)$,
and let $\{\mathcal{C}_{1},\cdots,\mathcal{C}_{r}\}$ represent the minimizer of \eqref{eq:k-means-formulation-Cl-GMM} whenever it is clear from the context.

\bigskip
\noindent {\em Step 1: developing an upper bound on \eqref{eq:k-means-formulation-Cl-GMM}.}
To begin with, we derive an upper bound on the optimal objective value of \eqref{eq:k-means-formulation-Cl-GMM},
which serves as a reference in assessing the (sub)-optimality of other cluster assignments.
By virtue of Claim~\ref{claim:deviation-S-GMM} and the assumption $|\mathcal{C}_{l}^{\star}|=n/r$, one has
\[
\Bigg|\sum_{i\in\mathcal{C}_{l}^{\star}}\Big\|\bm{y}_{i}-\frac{1}{|\mathcal{C}_{l}^{\star}|}\sum_{j\in\mathcal{C}_{l}^{\star}}\bm{y}_{j}\Big\|_{2}^{2}-\sum_{i\in\mathcal{C}_{l}^{\star}}\Big\|\bm{y}_{i}^{\star}-\frac{1}{|\mathcal{C}_{l}^{\star}|}\sum_{j\in\mathcal{C}_{l}^{\star}}\bm{y}_{j}^{\star}\Big\|_{2}^{2}\Bigg|\leq\frac{6\sqrt{\varepsilon}n}{r},
\]
with the proviso that $\varepsilon \leq 1/r^2$.
Note that by construction,  for the ``ideal'' fitting,
one has $\sum_{i\in\mathcal{C}_{l}^{\star}}\big\|\bm{y}_{i}^{\star}-\frac{1}{|\mathcal{C}_{l}^{\star}|}\sum_{j\in\mathcal{C}_{l}^{\star}}\bm{y}_{j}^{\star}\big\|_{2}^{2}=0$.
Using this fact and summing the above inequality over
all $1\leq l \leq r$, we arrive at
\begin{align}
\sum_{l=1}^{r}\sum_{i\in\mathcal{C}_{l}^{\star}}\Big\|\bm{y}_{i}-\frac{1}{|\mathcal{C}_{l}^{\star}|}\sum_{j\in\mathcal{C}_{l}^{\star}}\bm{y}_{j}\Big\|_{2}^{2}
& \leq 6\sqrt{\varepsilon}n.
\label{eq:error-true-cluster-GMM}
\end{align}
Consequently, due to the assumed optimality of $\mathcal{C}$ w.r.t.~\eqref{eq:k-means-formulation-Cl-GMM},
replacing $\{\mathcal{C}_{l}^\star\}_{l=1}^r$ in \eqref{eq:error-true-cluster-GMM} by $\{\mathcal{C}_{l}\}_{l=1}^r$ can only further improve the objective value:
\begin{align}
\sum_{l=1}^{r}\sum_{i\in\mathcal{C}_{l}}\Big\|\bm{y}_{i}-\frac{1}{|\mathcal{C}_{l}|}\sum_{j\in\mathcal{C}_{l}}\bm{y}_{j}\Big\|_{2}^{2}
	& \leq \sum_{l=1}^{r}\sum_{i\in\mathcal{C}_{l}^{\star}}\Big\|\bm{y}_{i}-\frac{1}{|\mathcal{C}_{l}^{\star}|}\sum_{j\in\mathcal{C}_{l}^{\star}}\bm{y}_{j}\Big\|_{2}^{2}
	\notag\\
&  \leq 6\sqrt{\varepsilon}n.
\label{eq:error-optimizer-cluster-GMM}
\end{align}

\bigskip
\noindent {\em Step 2: showing that no cluster can be too large.}
Suppose that there exists a cluster $\mathcal{C}_{l}$ $(1\leq l\leq r)$ that is too large in the sense that
\begin{align}
	\label{eq:Cl-lower-bound-contradition-GMM}
	|\mathcal{C}_{l}|\geq\frac{(1+c_{\varepsilon})n}{r}
\end{align}
for some quantity $c_{\varepsilon}>0$. We would like to show that this is impossible unless $c_{\varepsilon}$ is  small;
in fact,  in light of Claim~\ref{claim:deviation-S-GMM},
we need only to establish a lower bound on the second term in \eqref{eq:claim:deviation-S-GMM} (with $\mathcal{S}=\mathcal{C}_{l}$)
so that it leads to a contradiction with the upper bound \eqref{eq:error-optimizer-cluster-GMM}.
Towards this end, we start with the elementary decomposition of the sum of squared errors:
\begin{align}
\sum_{i\in\mathcal{C}_{l}}\Big\|\bm{y}_{i}^{\star}-\frac{1}{|\mathcal{C}_{l}|}\sum_{j\in\mathcal{C}_{l}}\bm{y}_{j}^{\star}\Big\|_{2}^{2} & =\Bigg(\sum_{i\in\mathcal{C}_{l}}\big\|\bm{y}_{i}^{\star}\big\|_{2}^{2}\Bigg)-|\mathcal{C}_{l}|\cdot\Big\|\frac{1}{|\mathcal{C}_{l}|}\sum_{j\in\mathcal{C}_{l}}\bm{y}_{j}^{\star}\Big\|_{2}^{2}\nonumber \\
 & =|\mathcal{C}_{l}| \Big(1-\Big\|\frac{1}{|\mathcal{C}_{l}|}\sum_{j\in\mathcal{C}_{l}}\bm{y}_{j}^{\star}\Big\|_{2}^{2}\Big),\label{eq:fitting-error-cluster-GMM-123}
\end{align}
where we have invoked the fact that $\big\|\bm{y}_{i}^{\star}\big\|_{2}=1$.
It thus comes down to controlling $\big\|\sum_{j\in\mathcal{C}_{l}}\bm{y}_{j}^{\star}\big\|_{2}^{2}$.
For notational convenience, for any $1\leq\tau\leq r$, we set $n_{l,\tau} \coloneqq \big|\mathcal{C}_{l}\cap\mathcal{C}_{\tau}^{\star}\big|$
(namely, the number of points in $\mathcal{C}_{l}$ coming from the $\tau$-th ground-truth cluster),
and let $\bm{y}_{(\tau)}^{\star}$ represent the vector associated with the $\tau$-th cluster
(namely, $\bm{y}_{(\tau)}^{\star}=\bm{y}_{j}^{\star}$ for any $j\in\mathcal{C}_{\tau}^{\star}$).
Armed with this set of notation, we can write
\begin{equation}
\Big\|\frac{1}{|\mathcal{C}_{l}|}\sum_{j\in\mathcal{C}_{l}}\bm{y}_{j}^{\star}\Big\|_{2}^{2}
= \Big\|\sum_{\tau=1}^{r}\frac{n_{l,\tau}}{|\mathcal{C}_{l}|}\bm{y}_{(\tau)}^{\star}\Big\|_{2}^{2}
= \sum_{\tau=1}^{r}\Big\|\frac{n_{l,\tau}}{|\mathcal{C}_{l}|}\bm{y}_{(\tau)}^{\star}\Big\|_{2}^{2}
=\frac{\sum_{\tau=1}^{r}n_{l,\tau}^{2}}{|\mathcal{C}_{l}|^{2}} .
\label{eq:mean-yjstar-decompose-456-GMM}
\end{equation}
Here, the penultimate identity holds since $\langle\bm{y}_{(i)}^{\star},\bm{y}_{(j)}^{\star}\rangle=0$ for any $i\neq j$,
while the last relation relies on the fact that $\big\|\bm{y}_{(\tau)}^{\star}\big\|_{2}=1$.
Using $0\leq n_{l,\tau} \leq n/r$ and $\sum_{\tau=1}^{r}n_{l,\tau} = |\mathcal{C}_l| $, we have
\begin{align*}
	\eqref{eq:mean-yjstar-decompose-456-GMM}  & \leq   \frac{ \big( \max_{\tau} n_{l,\tau} \big) \big( \sum_{\tau} n_{l,\tau} \big) }{|\mathcal{C}_{l}|^2}
	\leq \frac{(n/r) \cdot |\mathcal{C}_l|}{|\mathcal{C}_{l}|^2}
\leq\frac{1}{1+c_{\varepsilon}},
\end{align*}
where the first inequality comes from the basic fact that $\|\bm{a}\|_2^2 \leq \|\bm{a}\|_{\infty}\|\bm{a}\|_{1}$ for any vector $\bm{a}$, and the last relation arises from the assumed cardinality constraint on $\mathcal{C}_{l}$ (cf.~(\ref{eq:Cl-lower-bound-contradition-GMM})).

%

%
%
Substitution into (\ref{eq:fitting-error-cluster-GMM-123}) yields
\begin{align}
\sum_{i\in\mathcal{C}_{l}}\Big\|\bm{y}_{i}^{\star}-\frac{1}{|\mathcal{C}_{l}|}\sum_{j\in\mathcal{C}_{l}}\bm{y}_{j}^{\star}\Big\|_{2}^{2}
& \geq|\mathcal{C}_{l}| \Big(1-\frac{1}{1+c_{\varepsilon}}\Big)
\geq
c_{\varepsilon}
\frac{n}{r},
\label{eq:fitting-error-cluster-GMM-123-1}
\end{align}
where the last inequality again arises from the assumption \eqref{eq:Cl-lower-bound-contradition-GMM}. 
This in turn demonstrates that
\begin{align}
 & \sum_{l=1}^{r}\sum_{i\in\mathcal{C}_{l}}\Big\|\bm{y}_{i}-\frac{1}{|\mathcal{C}_{l}|}\sum_{j\in\mathcal{C}_{l}}\bm{y}_{j}\Big\|_{2}^{2}\geq\sum_{i\in\mathcal{C}_{l}}\Big\|\bm{y}_{i}-\frac{1}{|\mathcal{C}_{l}|}\sum_{j\in\mathcal{C}_{l}}\bm{y}_{j}\Big\|_{2}^{2} \notag\\
 & \qquad \geq \sum_{i\in\mathcal{C}_{l}}\Big\|\bm{y}_{i}^{\star}-\frac{1}{|\mathcal{C}_{l}|}\sum_{j\in\mathcal{C}_{l}}\bm{y}_{j}^{\star}\Big\|_{2}^{2}-6\frac{|\mathcal{C}_{l}|}{n}\sqrt{\varepsilon}n
	\label{eq:cluster-SS-lower-bound-GMM-345}\\
 & \qquad \geq 
   c_{\varepsilon}  \frac{n}{r}-6\sqrt{\varepsilon}n
   > 6\sqrt{\varepsilon}n, \notag
\end{align}
where \eqref{eq:cluster-SS-lower-bound-GMM-345} results from  Claim~\ref{claim:deviation-S-GMM} when $\varepsilon\leq 1/r^2$, and the last inequality follows as long as $12\sqrt{\varepsilon} < 
c_{\varepsilon} /r$.
Comparing this  with \eqref{eq:error-optimizer-cluster-GMM} leads to  contradiction with the optimality assumption of $\{\mathcal{C}_l\}$.

\bigskip
\noindent {\em Step 3: showing that no cluster can be too small.}
Suppose now that there exists a cluster $\mathcal{C}_{i}$ ($1\leq i\leq r$) obeying
\[
	|\mathcal{C}_{i}|\leq\frac{\big(1-c_{\varepsilon}(r-1)\big)n}{r}.
\]
Then from the pigeonhole principle, one can find another cluster $\mathcal{C}_{l}$ ($1\leq l\leq r$) with cardinality exceeding
\[
	|\mathcal{C}_{l}|\geq\frac{(1+c_{\varepsilon})n}{r};
\]
otherwise the total size obeys $\sum_{k=1}^r |\mathcal{C}_{k}| < \frac{ (1-c_{\varepsilon}(r-1) )n}{r} + (r-1)\frac{(1+c_{\varepsilon})n}{r} \leq n$ and $\{\mathcal{C}_l\}$ is infeasible.
The above condition on $|\mathcal{C}_{l}|$ coincides with the assumption \eqref{eq:Cl-lower-bound-contradition-GMM} in Step 2,
which, as a result of previous arguments, cannot possibly hold.
To conclude, for all $1\leq i\leq r$, one necessarily has
\begin{align}
	\label{eq:Cl-lower-bound-GMM-357}
	|\mathcal{C}_{i}| > \frac{\big(1-c_{\varepsilon}(r-1)\big)n}{r} \geq \frac{\big(1-c_{\varepsilon}r \big)n}{r}.
\end{align}

\bigskip
\noindent {\em Step 4: showing that each $\mathcal{C}_{l}$ is mainly composed of points from a true (and distinct) cluster.}
Suppose that there exists a cluster $\mathcal{C}_{l}$ ($1\leq l\leq r$) whose dominant component obeys
\[
	\max_{1\leq\tau\leq r}n_{l,\tau}  \leq  (1-2rc_{\varepsilon})n/r,
\]
where we recall that $n_{l,\tau}=|\mathcal{C}_{l}\cap \mathcal{C}_{\tau}^{\star}|$.
Under this assumption, we have
\begin{align*}
	\eqref{eq:mean-yjstar-decompose-456-GMM}
	 \leq \frac{\big(\max_{\tau}n_{l,\tau}\big)\big( \sum_{\tau}n_{l,\tau}\big)}{|\mathcal{C}_{l}|^2}
	\leq \frac{[(1-2rc_{\varepsilon})n/r]\cdot |\mathcal{C}_l|}{|\mathcal{C}_{l}|^2}
	\leq \frac{1-2rc_{\varepsilon}}{1-rc_{\varepsilon}},
\end{align*}
where the last inequality relies on the lower bound \eqref{eq:Cl-lower-bound-GMM-357}.
Substitution into (\ref{eq:fitting-error-cluster-GMM-123}) gives
\begin{align*}
\sum_{i\in\mathcal{C}_{l}}\Big\|\bm{y}_{i}^{\star}-\frac{1}{|\mathcal{C}_{l}|}\sum_{j\in\mathcal{C}_{l}}\bm{y}_{j}^{\star}\Big\|_{2}^{2}
& \geq|\mathcal{C}_{l}| \Big(1-\frac{1-2rc_{\varepsilon}}{1-rc_{\varepsilon}}\Big)
\overset{\mathrm{(i)}}{\geq} \frac{(1-rc_{\varepsilon})n}{r} \cdot \frac{rc_{\varepsilon}}{1-rc_{\varepsilon}} \\
 & =c_{\varepsilon}n > 12\sqrt{\varepsilon}n,
\end{align*}
where (i) arises again from \eqref{eq:Cl-lower-bound-GMM-357},
and the last relation is valid once $c_{\varepsilon}> 12\sqrt{\varepsilon}$.
This taken collectively with the inequality \eqref{eq:cluster-SS-lower-bound-GMM-345} yields
\begin{align*}
\sum_{l=1}^{r}\sum_{i\in\mathcal{C}_{l}}\Big\|\bm{y}_{i}-\frac{1}{|\mathcal{C}_{l}|}\sum_{j\in\mathcal{C}_{l}}\bm{y}_{j}\Big\|_{2}^{2} & \geq\sum_{i\in\mathcal{C}_{l}}\Big\|\bm{y}_{i}^{\star}-\frac{1}{|\mathcal{C}_{l}|}\sum_{j\in\mathcal{C}_{l}}\bm{y}_{j}^{\star}\Big\|_{2}^{2}-6\sqrt{\varepsilon}n\\
 & > 6 \sqrt{\varepsilon}n,
\end{align*}
which, however, contradicts the upper bound \eqref{eq:error-optimizer-cluster-GMM}.
Consequently, the dominant component in every cluster $1\leq l\leq r$ must obey
\begin{align}
	\max_{1\leq\tau\leq r}n_{l,\tau}  >  (1-2rc_{\varepsilon})n/r.
	\label{eq:lower-bound-n-l-tau-GMM}
\end{align}
In particular, if $2rc_{\varepsilon}< 1/2$, then $\max_{1\leq\tau\leq r}n_{l,\tau} > n/(2r)$.
An immediate consequence is that: the dominant components of the clusters $\{\mathcal{C}_l\}$ must come from distinct ground-truth clusters.

\bigskip
\noindent {\em Step 5: putting all this together.}
Armed with the preceding bound \eqref{eq:lower-bound-n-l-tau-GMM} and the remark thereafter,
it is straightforward to verify the following result on the mis-clustering rate:
\[
\ell_{\mathsf{mis}}\big(\{\widehat{\xi}_{i}\},\{\xi_{i}^{\star}\}\big)\leq1-\frac{\sum_{l=1}^{r}\max_{1\leq\tau\leq r}n_{l,\tau}}{n}\leq2rc_{\varepsilon}.
\]
Finally, setting $c_{\varepsilon} = \varepsilon^{1/4}$ leads to the advertised result,
provided that $\varepsilon\leq c_{3}r^{-4}$ for some sufficiently small constant $c_3>0$.

\paragraph{Proof of Claim~\ref{claim:deviation-S-GMM}.}
Before proceeding, let us take a quick look at how many columns
of $\bm{Y}$ might deviate considerably from their counterparts in
$\bm{Y}^{\star}$. To be precise, let us introduce the following set
\begin{equation}
	\mathcal{N}_{\mathsf{large}}\coloneqq\left\{ i\mid\|\bm{y}_{i}-\bm{y}_{i}^{\star}\|_{2}^{2}\geq\varepsilon\right\} .
	\label{eq:defn-Nlarge-GMM}
\end{equation}
Clearly, its cardinality is necessarily bounded above by
\begin{equation}
	\big|\mathcal{N}_{\mathsf{large}}\big|\leq\frac{\big\|\bm{Y}-\bm{Y}^{\star}\big\|_{\mathrm{F}}^{2}}{\varepsilon}\leq\frac{\varepsilon^{2}n}{\varepsilon}=\varepsilon n.
	\label{eq:size-Nlarge-UB-GMM}
\end{equation}
The starting point of the proof is the elementary identities
\begin{align}
 &  \sum_{i\in\mathcal{S}}\Big\|\bm{y}_{i}-\frac{1}{|\mathcal{S}|}\sum_{j\in\mathcal{S}}\bm{y}_{j}\Big\|_{2}^{2}-\sum_{i\in\mathcal{S}}\Big\|\bm{y}_{i}^{\star}-\frac{1}{|\mathcal{S}|}\sum_{j\in\mathcal{S}}\bm{y}_{j}^{\star}\Big\|_{2}^{2}  \nonumber \\
 & \quad=\sum_{i\in\mathcal{S}}\big\|\bm{y}_{i}\big\|_{2}^{2}-|\mathcal{S}|\cdot\Big\|\frac{1}{|\mathcal{S}|}\sum_{j\in\mathcal{S}}\bm{y}_{j}\Big\|_{2}^{2}  - \Big( \sum_{i\in\mathcal{S}}\big\|\bm{y}_{i}^{\star}\big\|_{2}^{2}-|\mathcal{S}|\cdot\Big\|\frac{1}{|\mathcal{S}|}\sum_{j\in\mathcal{S}}\bm{y}_{j}^{\star}\Big\|_{2}^{2} \Big) \nonumber\\
 & \quad = |\mathcal{S}|\cdot \Bigg( \Big\|\frac{1}{|\mathcal{S}|}\sum_{j\in\mathcal{S}}\bm{y}_{j}^{\star}\Big\|_{2}^{2} -\Big\|\frac{1}{|\mathcal{S}|}\sum_{j\in\mathcal{S}}\bm{y}_{j}\Big\|_{2}^{2} \Bigg), \label{eq:sum-y-S-decompose-GMM}
\end{align}
where the last inequality follows from the fact that $\big\|\bm{y}_{i}\big\|_{2}=\big\|\bm{y}_{i}^{\star}\big\|_{2}=1$.
To bound the right-hand side of (\ref{eq:sum-y-S-decompose-GMM}),
we make the observation that
\begin{align}
\Big\|\frac{1}{|\mathcal{S}|}\sum_{j\in\mathcal{S}}\big(\bm{y}_{j}-\bm{y}_{j}^{\star}\big)\Big\|_{2} & \leq\frac{1}{|\mathcal{S}|}\sum_{j\in\mathcal{S}\backslash\mathcal{N}_{\mathsf{large}}}\big\|\bm{y}_{j}-\bm{y}_{j}^{\star}\big\|_{2}+\frac{1}{|\mathcal{S}|}\sum_{j\in\mathcal{S}\cap\mathcal{N}_{\mathsf{large}}}\big\|\bm{y}_{j}-\bm{y}_{j}^{\star}\big\|_{2}\notag\\
 & \overset{(\mathrm{i})}{\leq}\sqrt{\varepsilon}+\frac{2\big|\mathcal{N}_{\mathsf{large}}\big|}{|\mathcal{S}|}\overset{(\mathrm{ii})}{\leq}\sqrt{\varepsilon}+\frac{2\varepsilon}{c_{s}}\leq 3\sqrt{\varepsilon}. \label{eq:sum-y-diff}
\end{align}
Here, (i) holds true since any column outside $\mathcal{N}_{\mathsf{large}}$
satisfies $\big\|\bm{y}_{j}-\bm{y}_{j}^{\star}\big\|_{2} < \sqrt{\varepsilon}$,
and any column coming from $\mathcal{N}_{\mathsf{large}}$ obeys
$\big\|\bm{y}_{j}-\bm{y}_{j}^{\star}\big\|_{2}\leq\big\|\bm{y}_{j}\big\|_{2}+\big\|\bm{y}_{j}^{\star}\big\|_{2} = 2$;
(ii) follows from (\ref{eq:size-Nlarge-UB-GMM}) and the assumption
$|\mathcal{S}|=c_{s}n$; and the last inequality relies on the assumption
$\varepsilon\leq c_{s}^{2}$.
In addition,
\begin{subequations}\label{eq:sum-y-ub}
\begin{align}
\Big\|\frac{1}{|\mathcal{S}|}\sum_{j\in\mathcal{S}}\bm{y}_{j}^{\star}\Big\|_{2} &\leq\frac{1}{|\mathcal{S}|}\sum_{j\in\mathcal{S}}\big\|\bm{y}_{j}^{\star}\big\|_{2}=1; \\
\Big\|\frac{1}{|\mathcal{S}|}\sum_{j\in\mathcal{S}}\bm{y}_{j} \Big\|_{2} &\leq\frac{1}{|\mathcal{S}|}\sum_{j\in\mathcal{S}}\big\|\bm{y}_{j} \big\|_{2}=1.
\end{align}
\end{subequations}
Combining  \eqref{eq:sum-y-diff} and \eqref{eq:sum-y-ub} and applying the triangle inequality, we arrive at
\begin{align}
& \Bigg|\Big\|\frac{1}{|\mathcal{S}|}\sum_{j\in\mathcal{S}}\bm{y}_{j}\Big\|_{2}^{2}-\Big\|\frac{1}{|\mathcal{S}|}\sum_{j\in\mathcal{S}}\bm{y}_{j}^{\star}\Big\|_{2}^{2}\Bigg| \notag\\
& \qquad  =\Bigg|\Big\|\frac{1}{|\mathcal{S}|}\sum_{j\in\mathcal{S}}\bm{y}_{j}\Big\|_{2} -\Big\|\frac{1}{|\mathcal{S}|}\sum_{j\in\mathcal{S}}\bm{y}_{j}^{\star}\Big\|_{2} \Bigg|\cdot \Bigg|\Big\|\frac{1}{|\mathcal{S}|}\sum_{j\in\mathcal{S}}\bm{y}_{j}\Big\|_{2} + \Big\|\frac{1}{|\mathcal{S}|}\sum_{j\in\mathcal{S}}\bm{y}_{j}^{\star}\Big\|_{2} \Bigg| \notag\\
& \qquad \leq 2\Bigg|\Big\|\frac{1}{|\mathcal{S}|}\sum_{j\in\mathcal{S}}\bm{y}_{j}\Big\|_{2} -\Big\|\frac{1}{|\mathcal{S}|}\sum_{j\in\mathcal{S}}\bm{y}_{j}^{\star}\Big\|_{2} \Bigg| \notag \\
& \qquad \leq 2 \Big\|\frac{1}{|\mathcal{S}|}\sum_{j\in\mathcal{S}}(\bm{y}_{j}-\bm{y}_{j}^{\star})\Big\|_{2}
  \leq 6\sqrt{\varepsilon}.\label{eq:mean-difference-123-GMM}
\end{align}
Finally, plugging in (\ref{eq:mean-difference-123-GMM})
into (\ref{eq:sum-y-S-decompose-GMM}),
we conclude that
\begin{align*}
 & \Bigg| \sum_{i\in\mathcal{S}}\Big\|\bm{y}_{i}-\frac{1}{|\mathcal{S}|}\sum_{j\in\mathcal{S}}\bm{y}_{j}\Big\|_{2}^{2}-\sum_{i\in\mathcal{S}}\Big\|\bm{y}_{i}^{\star}-\frac{1}{|\mathcal{S}|}\sum_{j\in\mathcal{S}}\bm{y}_{j}^{\star}\Big\|_{2}^{2} \Bigg|\\
 & \quad = |\mathcal{S}|
 \cdot\Bigg|\Big\|\frac{1}{|\mathcal{S}|}\sum_{j\in\mathcal{S}}\bm{y}_{j}\Big\|_{2}^{2}-\Big\|\frac{1}{|\mathcal{S}|}\sum_{j\in\mathcal{S}}\bm{y}_{j}^{\star}\Big\|_{2}^{2}  \Bigg|\\
 & \quad\leq   6 \sqrt{\varepsilon}|\mathcal{S}| = 6 c_{s}\sqrt{\varepsilon}n
\end{align*}
as claimed, where the last relation holds true since $|\mathcal{S}|=c_s n$.

\section{Ranking from pairwise comparisons}
\label{sec:ranking}

The ranking task---which seeks to identify a consistent ordering of several items based on
(partially) revealed preference information about them---is encountered in numerous contexts
including web search, crowd sourcing, social choice, peer grading, and so on \citep{dwork2001rank,chen2013pairwise,caplin1991aggregation,shah2013case}.
Of particular interest is the ``preference-based'' observation model,
in which we are only given relative comparisons of a few items (as opposed to individual scores of them).
In practice, comparison data of this kind abound,
partly because humans often find it easier to make a preference over two or a couple of items than to assign specific ratings to many individual ones.
The emergence of crowdsourcing platforms such as Amazon Mechanical Turk further widens the availability of comparison data,
where binary judgments over pairs of items are often solicited from a pool of non-experts.
In this section, we concentrate on pairwise comparisons and
explore the potential of spectral methods for the ranking task.

\subsection{The Bradley-Terry-Luce model and assumptions \label{subsec:Problem-setup-and-ranking}}

To formulate the problem in a statistically sound manner, we introduce a classical parametric model,
called the Bradley-Terry-Luce (BTL) model~\citep{bradley1952rank,ford1957solution,luce2012individual},
to describe the generating process of pairwise comparisons.

\paragraph{Latent preference scores.}
Imagine that there are $n$ items to be ranked.
A key component of the BTL model is the assignment of a latent preference score to each item; more concretely, the BTL model hypothesizes on the existence of an unseen preference score vector
\begin{equation}
	\bm{w}^{\star}=[w_{1}^{\star},w_{2}^{\star},\cdots, w_{n}^{\star}]^{\top},
	\label{eq:ranking-score}
\end{equation}
with $w_{i}^{\star}>0$ assigned to the $i$-th item ($1\leq i\leq n$).
The ranks of these items are therefore determined exclusively by their (relative) preference scores: an item with a larger score is ranked higher.
Throughout this section, we denote by $\kappa$ a sort of  condition number as follows
\begin{equation}
	\kappa \coloneqq\frac{\max_{1\leq i\leq n}w_{i}^{\star}}{\min_{1\leq i\leq n}w_{i}^{\star}}.
	\label{eq:ranking-well-condition}
\end{equation}


\paragraph{Pairwise comparisons.}
Equipped with the aforementioned score
vector, the BTL model posits that: the probability of an item winning a paired comparison is determined entirely by the
relative scores of the two items involved. To be precise, when comparing  every pair $(i,j)$ of items, the model assumes that
\begin{equation}
	\mathbb{P} \big\{ \text{item }j\text{ is preferred over item }i \big\}
	= \frac{w_{j}^{\star}}{w_{i}^{\star}+ w_{j}^{\star}},
	\label{eq:BTL}
\end{equation}
asserting that an item assigned a higher preference score is more likely to win.
In this section, we assume access to a comparison between every pair of items. To be precise, for each pair
$(i,j)$ ($1\le i<j\le n$), we observe an independent binary comparison outcome $y_{i,j}$ following the BTL model \eqref{eq:BTL}:
\[
y_{i,j}=\begin{cases}
1, & \text{with probability }\frac{w_{j}^{\star}}{w_{i}^{\star}+w_{j}^{\star}},\\
0, & \text{otherwise},
\end{cases}
\]
where $y_{i,j}=1$ means item $j$ beats item $i$ and $y_{i,j}=0$ otherwise.
By convention, we set $y_{i,j}=1-y_{j,i}$ for all $i>j$.
%
%

%

\paragraph{Goal. }
With the BTL parametric model in mind,
a natural strategy is to start by estimating the underlying scores $\{ w_i^{\star}\}$ based on the pairwise comparisons in hand, followed by a ranking step performed in  accordance with the estimated scores.
In this section, we shall focus on characterizing the statistical accuracy of spectral methods in accomplishing the meta task of preference score estimation,
and will remark in passing on the ranking step that follows. Obviously, from \eqref{eq:BTL}, we can only hope for estimating $\{ w_i^{\star}\}$ up to some global scaling ambiguity.

%

\subsection{A spectral ranking algorithm \label{subsec:Spectral-method-for-ranking}}

At first glance, the recipe we have introduced for designing  spectral methods seems to have no direct bearing on the BTL model.
Somewhat unexpectedly, a closer inspection unveils an intimate connection between the BTL model and a reversible Markov chain,
whose stationary distribution embodies crucial information about the score vector of interest.
This in turn lays a solid foundation for the spectral algorithm described below, originally developed by \citet{negahban2016rank}.

The first step is to convert the pairwise comparison data
$\{y_{i,j}\}_{i\neq j}$ into a probability transition matrix $\bm{P}=[P_{i,j}]_{1\leq i,j\leq n}$, in a way that
\begin{equation}
P_{i,j}=\begin{cases}
\frac{1}{n}y_{i,j}, & \text{if }i\neq j,\\
1-\sum_{j:j\neq i}\frac{1}{n}y_{i,j}, & \text{otherwise}.
\end{cases}
\label{defn:ranking-P-empirical}
\end{equation}
By construction of $\bm{P}$, all of its entries are non-negative and the entries in each row sum up to one,
thus confirming that  $\bm{P}$ is a probability transition matrix.
The spectral algorithm then computes the leading left eigenvector $\bm{\pi}$ of  $\bm{P}$, returning it as
 the estimate for the underlying score vector $\bm{w}^{\star}$.

To make sense of the rationale behind this algorithm, it is helpful to
look at the mean $\bm{P}^{\star} =[P_{i,j}^{\star}]_{1\leq i,j\leq n} \coloneqq\mathbb{E}[\bm{P}]$, which obeys
\begin{equation}
P_{i,j}^{\star}=\begin{cases}
	\frac{1}{n} \frac{w_j^{\star}}{w_i^{\star}+w_j^{\star}}, & \text{if }i\neq j,\\
	1- \frac{1}{n}  \sum_{j:j\neq i}\frac{w_j^{\star}}{w_i^{\star}+w_j^{\star}}, & \text{otherwise}.
\end{cases}\label{defn:ranking-P-star}
\end{equation}
Clearly, this matrix $\bm{P}^{\star}$ is a probability transition matrix as well.
As can be straightforwardly verified, the vector $\bm{\pi}^{\star}=[\pi_{i}^{\star}]_{1\leq i\leq n}$ defined by
\begin{align}
	\bm{\pi}^{\star} = \frac{1}{\bm{1}^{\top}\bm{w}^{\star}} \bm{w}^{\star}
	\label{defn:pi-i-star-BTL}
\end{align}
satisfies the following conditions:
\begin{itemize}
	\item $\bm{\pi}^{\star}$ is a probability vector (i.e., $\pi_i^{\star}\geq 0$ for all $i$ and $\sum_i\pi_i^{\star}=1$);
	\item $\bm{\pi}^{\star}$ satisfies the detailed balance equations as follows:
\[
	\pi_{i}^{\star}P_{i,j}^{\star} = \pi_{j}^{\star}P_{j,i}^{\star},\qquad\text{for all }(i,j).
\]
\end{itemize}
Classical Markov chain theory \citep{bremaud2013markov} thus tells us that
$\bm{P}^{\star}$ represents a reversible Markov chain, whose stationary distribution is precisely given by $\bm{\pi}^{\star}$ (this can easily be verified using the definition of the stationary distribution) and corresponds to the normalized preference scores.
As a consequence, we hold the intuition that:  $\bm{\pi}$ is close to $\bm{\pi}^{\star}$---and hence $\bm{w}^{\star}$ up to some global scaling---as long as $\bm{P}$ approximates $\bm{P}^{\star}$ reasonably well.

\subsection{Performance guarantees}
\label{sec:theory-ranking}

This subsection develops theoretical support for the above spectral ranking algorithm,
based on the eigenvector perturbation theory developed previously for probability transition matrices in Section~\ref{sec:eigenvector-theory-DK}.
For notational convenience, we shall use $\bm{E}\coloneqq\bm{P}-\bm{P}^{\star}$ to
denote the difference of the above two transition matrices of interest.

By virtue of Theorem~\ref{thm:DK_asym},
the perturbation of the stationary distribution of a reversible Markov
chain $\bm{P}^{\star}$ is dictated by two important quantities: (i)
the spectral gap $1-\max\left\{ \lambda_{2}(\bm{P}^{\star}),-\lambda_{n}(\bm{P}^{\star})\right\} $,
and (ii) the noise size $\|\bm{E}\|_{\bm{\pi}^{\star}}$ (recall the definition of $\|\cdot\|_{\bm{\pi}^{\star}}$  in Section~\ref{subsec:Setup-and-notation-prob-matrix}).
These two quantities are controlled respectively via the following two lemmas, whose
proofs can be found in Section~\ref{subsec:Proof-of-auxilliary-ranking}.

\begin{lemma}
	\label{lemma:ranking-gap}
	Consider the settings and notation in Sections~\ref{subsec:Problem-setup-and-ranking} and~\ref{subsec:Spectral-method-for-ranking}. It follows that
\[
1-\max\big\{ \lambda_{2}(\bm{P}^{\star}),-\lambda_{n}(\bm{P}^{\star}) \big\} \geq\frac{1}{2\kappa^{2}},
\]
where we recall the definition of $\kappa$  in (\ref{eq:ranking-well-condition}).
\end{lemma}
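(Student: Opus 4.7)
The plan is to bypass any direct calculation of $\lambda_2(\bm{P}^{\star})$ and $\lambda_n(\bm{P}^{\star})$ by expressing $\bm{P}^{\star}$ as a convex combination
\[
\bm{P}^{\star} \;=\; \alpha\, \bm{1}\bm{\pi}^{\star\top} \;+\; (1-\alpha)\,\bm{R}
\]
for the largest $\alpha \in (0,1]$ such that $\bm{R} := (1-\alpha)^{-1}\big(\bm{P}^{\star}-\alpha \bm{1}\bm{\pi}^{\star\top}\big)$ is an entrywise nonnegative matrix. Since the rank-one kernel $\bm{1}\bm{\pi}^{\star\top}$ contributes $1$ to the leading eigenvalue and $0$ to every other direction in the Dirichlet inner product, intuitively such a splitting pushes the rest of the spectrum of $\bm{P}^{\star}$ into a band of half-width $1-\alpha$ about the origin. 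The whole lemma thus reduces to identifying an admissible $\alpha \geq 1/(2\kappa^2)$, and in fact the slightly sharper choice $\alpha = 1/(2\kappa)$ will turn out to work.

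The first step is to verify feasibility of $\alpha = 1/(2\kappa)$, i.e., $P^{\star}_{i,j} \geq \alpha\, \pi^{\star}_j$ for every $(i,j)$. For off-diagonal entries, substituting the definitions in \eqref{defn:ranking-P-star} and \eqref{defn:pi-i-star-BTL} and letting $W \coloneqq \bm{1}^\top \bm{w}^{\star}$, the inequality simplifies to $\alpha \leq W/\big[n(w^{\star}_i+w^{\star}_j)\big]$, whose worst case is $W/(2 n w^{\star}_{\max}) \geq 1/(2\kappa)$ by the trivial bound $W \geq n w^{\star}_{\min}$. For the diagonal, I would use $w^{\star}_j/(w^{\star}_i+w^{\star}_j) \leq \kappa/(\kappa+1)$ to obtain $P^{\star}_{i,i} \geq (n+\kappa)/[n(\kappa+1)]$, and then verify directly that this lower bound exceeds $\alpha\pi^{\star}_i$ for $\alpha = 1/(2\kappa)$.

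Next I would pass to the symmetrization $\bm{S}^{\star} = (\bm{\Pi}^{\star})^{1/2}\bm{P}^{\star}(\bm{\Pi}^{\star})^{-1/2}$ already introduced in the proof of Theorem~\ref{thm:DK_asym}, which shares all eigenvalues with $\bm{P}^{\star}$. Since $\bm{1}\bm{\pi}^{\star\top}$ is reversible with respect to $\bm{\pi}^{\star}$, so is $\bm{R}$, and the similarity transformation preserves the mixture structure, giving
\[
\bm{S}^{\star} \;=\; \alpha\, \bm{\pi}^{\star}_{1/2}(\bm{\pi}^{\star}_{1/2})^\top \;+\; (1-\alpha)\, \widetilde{\bm{R}},
\]
where $\widetilde{\bm{R}} = (\bm{\Pi}^{\star})^{1/2}\bm{R}(\bm{\Pi}^{\star})^{-1/2}$ is symmetric with eigenvalues in $[-1,1]$ (being similar to a reversible transition matrix).

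The final step is the Rayleigh-quotient bookkeeping. For any unit vector $\bm{x}$, $\bm{x}^\top \bm{S}^{\star}\bm{x} \geq 0 + (1-\alpha)(-1) = -(1-\alpha)$, so $\lambda_n(\bm{S}^{\star}) \geq -(1-\alpha)$; and for any unit $\bm{x} \perp \bm{\pi}^{\star}_{1/2}$, $\bm{x}^\top \bm{S}^{\star}\bm{x} = (1-\alpha)\bm{x}^\top \widetilde{\bm{R}}\bm{x} \leq 1-\alpha$, so $\lambda_2(\bm{S}^{\star}) \leq 1-\alpha$. Combining these gives $1 - \max\{\lambda_2(\bm{P}^{\star}), -\lambda_n(\bm{P}^{\star})\} \geq \alpha = 1/(2\kappa) \geq 1/(2\kappa^2)$, as claimed. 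The main (minor) obstacle is the diagonal feasibility check in step one---a mildly finicky algebraic inequality involving $(n+\kappa)/[n(\kappa+1)]$---while the rest is entirely elementary.
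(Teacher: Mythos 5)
Your argument is correct, and it is genuinely different from the paper's. The paper proves Lemma~\ref{lemma:ranking-gap} via the Diaconis--Saloff-Coste \emph{comparison theorem} (Lemma~\ref{lemma:gap-comparison}): it introduces the auxiliary reversible chain $\bm{Q}^{\star}=n^{-1}\bm{1}\bm{1}^{\top}$ with spectral gap $1$ and uniform stationary distribution, then bounds the two comparison constants to obtain $\alpha/\beta\ge(2\kappa^{2})^{-1}$. You instead use a \emph{Doeblin minorization}: you write $\bm{P}^{\star}=\alpha\,\bm{1}\bm{\pi}^{\star\top}+(1-\alpha)\bm{R}$ with $\bm{R}$ a reversible stochastic matrix, pass to the symmetrization $\bm{S}^{\star}=(\bm{\Pi}^{\star})^{1/2}\bm{P}^{\star}(\bm{\Pi}^{\star})^{-1/2}$, and use the Rayleigh quotient to confine all eigenvalues other than $1$ to the band $[-(1-\alpha),\,1-\alpha]$. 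Both routes reduce to entrywise comparisons of $\bm{P}^{\star}$ against a rank-one kernel built from the stationary distribution, but yours is more elementary (no external comparison lemma needed) and, pleasantly, yields the \emph{sharper} bound $1-\max\{\lambda_2(\bm{P}^{\star}),-\lambda_n(\bm{P}^{\star})\}\ge\alpha=1/(2\kappa)$, which dominates the stated $1/(2\kappa^{2})$. Your feasibility checks go through: the off-diagonal constraint reduces to $W/[n(w^{\star}_i+w^{\star}_j)]\ge 1/(2\kappa)$ via $W\ge n w^{\star}_{\min}$ and $w^{\star}_i+w^{\star}_j\le 2w^{\star}_{\max}$; the diagonal lower bound $P^{\star}_{i,i}\ge(n+\kappa)/[n(1+\kappa)]\ge 1/(2n)$ combined with $\alpha\pi^{\star}_i\le 1/(2n)$ closes the case $j=i$. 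One small point of hygiene: you should require $\alpha\in(0,1)$, not $(0,1]$, so that $\bm{R}$ is well-defined, though this is immaterial since $1/(2\kappa)\le 1/2<1$.
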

\begin{lemma}
	\label{lemma:ranking-noise}
Consider the settings and notation in
Sections~\ref{subsec:Problem-setup-and-ranking} and \ref{subsec:Spectral-method-for-ranking}, and recall that $\bm{E}\coloneqq\bm{P}-\bm{P}^{\star}$.
With probability at least $1-O(n^{-8})$,
\[
	\|\bm{E}\|_{\bm{\pi}^{\star}}\leq \sqrt{\kappa}\,\|\bm{E}\| \lesssim\sqrt{\frac{\kappa\log n}{n}}.
\]
\end{lemma}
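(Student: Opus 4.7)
My plan is to prove the two inequalities separately: the deterministic bound $\|\bm{E}\|_{\bm{\pi}^{\star}}\leq \sqrt{\kappa}\,\|\bm{E}\|$ relating the two norms, and then the high-probability bound $\|\bm{E}\| \lesssim \sqrt{\log n / n}$ on the spectral norm of the random perturbation.

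For the first inequality, I would invoke the identity~\eqref{eq:pi-norm-matrix-equation} established in the proof of Theorem~\ref{thm:DK_asym}, namely $\|\bm{A}\|_{\bm{\pi}^{\star}} = \|(\bm{\Pi}^{\star})^{1/2}\bm{A}(\bm{\Pi}^{\star})^{-1/2}\|$ for any matrix $\bm{A}$, where $\bm{\Pi}^{\star}=\mathsf{diag}([\pi_1^{\star},\ldots,\pi_n^{\star}])$. Applying submultiplicativity gives
\[
\|\bm{E}\|_{\bm{\pi}^{\star}} \leq \|(\bm{\Pi}^{\star})^{1/2}\|\cdot\|\bm{E}\|\cdot\|(\bm{\Pi}^{\star})^{-1/2}\| = \sqrt{\tfrac{\max_i \pi_i^{\star}}{\min_i \pi_i^{\star}}}\,\|\bm{E}\|,
\]
and since $\pi_i^{\star} = w_i^{\star}/\bm{1}^{\top}\bm{w}^{\star}$ by~\eqref{defn:pi-i-star-BTL}, the ratio inside the square root equals exactly $\kappa$.

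For the second inequality, the key is to exploit the structure of $\bm{E}=\bm{P}-\bm{P}^{\star}$. Writing $X_{i,j}\coloneqq \tfrac{1}{n}(y_{i,j}-\mathbb{E}[y_{i,j}])$ for $i\neq j$, the off-diagonal part of $\bm{E}$ satisfies $E_{i,j}=X_{i,j}=-X_{j,i}=-E_{j,i}$, while the diagonal part satisfies $E_{i,i}=-\sum_{j\neq i} X_{i,j}$. I would decompose $\bm{E}=\bm{E}_{\mathsf{off}}+\bm{E}_{\mathsf{diag}}$ accordingly and bound the two pieces separately. For $\bm{E}_{\mathsf{diag}}$, each diagonal entry is a sum of $n-1$ independent, zero-mean random variables bounded by $1/n$, so Hoeffding's inequality and a union bound yield $\|\bm{E}_{\mathsf{diag}}\|=\max_i|E_{i,i}| \lesssim \sqrt{\log n/n}$ with probability $\geq 1 - O(n^{-8})$. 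For $\bm{E}_{\mathsf{off}}$, I would represent it as a sum of independent rank-$2$ random matrices
\[
\bm{E}_{\mathsf{off}} = \sum_{1\leq i<j\leq n} X_{i,j}\big(\bm{e}_{i}\bm{e}_{j}^{\top}-\bm{e}_{j}\bm{e}_{i}^{\top}\big),
\]
and apply the matrix Bernstein inequality (Corollary~\ref{thm:matrix-Bernstein-friendly}). A short calculation shows each summand has spectral norm at most $L=1/n$, and since $(\bm{e}_i\bm{e}_j^\top-\bm{e}_j\bm{e}_i^\top)(\bm{e}_i\bm{e}_j^\top-\bm{e}_j\bm{e}_i^\top)^{\top} = \bm{e}_i\bm{e}_i^\top+\bm{e}_j\bm{e}_j^\top$, the variance statistic obeys $v \leq \tfrac{1}{4n^2}\sum_{i<j}(\bm{e}_i\bm{e}_i^\top+\bm{e}_j\bm{e}_j^\top) \preceq \tfrac{1}{4n}\bm{I}$. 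Consequently, $\|\bm{E}_{\mathsf{off}}\| \lesssim \sqrt{v\log n}+L\log n \lesssim \sqrt{\log n/n}$ with high probability, and the triangle inequality completes the bound.

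I do not expect a major obstacle: the two pieces of the argument are essentially bookkeeping. The only mild subtlety is recognizing that the off-diagonal entries of $\bm{E}$ are not all mutually independent (they are antisymmetric), which precludes a direct application of Theorem~\ref{thm:tighter-spectral-normal-ramon} at the level of entries; grouping the independent pairs into rank-$2$ matrices and invoking matrix Bernstein neatly sidesteps this, and is also where one must be careful to track the factor-of-two arising from antisymmetry.
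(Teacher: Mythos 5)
Your proof is correct, and the first inequality (via the weighted-norm identity) and the diagonal bound match the paper's argument essentially verbatim. The interesting divergence is in how you handle the off-diagonal part. The paper splits $\bm{E}$ three ways into $\bm{E}_{\mathsf{upper}}$, $\bm{E}_{\mathsf{diag}}$, $\bm{E}_{\mathsf{lower}}$, then treats the upper-triangular block (whose nonzero entries \emph{are} independent) with the sharp Bandeira--van Handel bound of Theorem~\ref{thm:tighter-spectral-normal-ramon}, and dispenses with the lower part by symmetry. You instead exploit the antisymmetry $E_{i,j} = -E_{j,i}$ directly, writing $\bm{E}_{\mathsf{off}} = \sum_{i<j} X_{i,j}\bigl(\bm{e}_i\bm{e}_j^{\top} - \bm{e}_j\bm{e}_i^{\top}\bigr)$ as a sum of independent rank-2 antisymmetric matrices and applying matrix Bernstein (Corollary~\ref{thm:matrix-Bernstein-friendly}); your variance and uniform-bound computations check out, yielding $\|\bm{E}_{\mathsf{off}}\| \lesssim \sqrt{\log n / n}$. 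This route is slightly less tight on the off-diagonal block than the paper's (which gets $\sqrt{1/n}$ there, saving a $\sqrt{\log n}$ factor), but since the diagonal contribution already costs $\sqrt{\log n / n}$, the final estimate is identical, and your version avoids the symmetric-dilation detour. Both are sound; yours is arguably the cleaner packaging of the dependence structure, while the paper's buys a marginal refinement on one piece that ultimately does not matter.
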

Now we are well prepared to assess the quality of the spectral estimate, as summarized below, whose proof is given at the end of this subsection.
\begin{theorem}
\label{thm:ranking}
Consider the settings and algorithm in Sections~\ref{subsec:Problem-setup-and-ranking}
and \ref{subsec:Spectral-method-for-ranking}. Suppose that $n\geq C\kappa^{5}\log n$
for some sufficiently large constant $C>0$. Then with probability
exceeding $1-O(n^{-8})$, one has
\begin{align}
	\frac{\|\bm{\pi}-\bm{\pi}^{\star}\|_{2}}{\|\bm{\pi}^{\star}\|_{2}}\lesssim\kappa^{2.5}\sqrt{\frac{\log n}{n}}.
	\label{eq:pi-pistar-quality-ranking}
\end{align}
\end{theorem}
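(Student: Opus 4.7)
The proof strategy is to invoke Theorem~\ref{thm:DK_asym} (the perturbation bound for stationary distributions of reversible Markov chains from Section~\ref{sec:eigenvector-theory-DK}), and then transfer the resulting weighted-norm control back to the standard $\ell_{2}$ metric that appears in \eqref{eq:pi-pistar-quality-ranking}.

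The first step is largely mechanical. Theorem~\ref{thm:DK_asym} will yield
\[
\|\bm{\pi} - \bm{\pi}^{\star}\|_{\bm{\pi}^{\star}} \leq \frac{\|\bm{\pi}^{\star\top}\bm{E}\|_{\bm{\pi}^{\star}}}{\big(1 - \max\{\lambda_{2}(\bm{P}^{\star}), -\lambda_{n}(\bm{P}^{\star})\}\big) - \|\bm{E}\|_{\bm{\pi}^{\star}}}.
\]
The spectral-gap denominator will be controlled by combining Lemma~\ref{lemma:ranking-gap} (gap $\geq 1/(2\kappa^{2})$) with Lemma~\ref{lemma:ranking-noise} ($\|\bm{E}\|_{\bm{\pi}^{\star}} \lesssim \sqrt{\kappa\log n/n}$). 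Under the sample-size assumption $n \gtrsim \kappa^{5}\log n$, the noise term is dominated by half the gap, which both verifies the applicability hypothesis of Theorem~\ref{thm:DK_asym} and ensures that the denominator is $\gtrsim 1/\kappa^{2}$.

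The main technical hurdle will be bounding the numerator $\|\bm{\pi}^{\star\top}\bm{E}\|_{\bm{\pi}^{\star}}$. Exploiting the skew-symmetric off-diagonal structure $E_{i,j} = -E_{j,i}$ for $i \neq j$ (an immediate consequence of $y_{i,j} = 1 - y_{j,i}$) together with the row-sum constraint $E_{i,i} = -\sum_{k \neq i} E_{i,k}$, a direct computation will rewrite the $j$-th entry as
\[
(\bm{\pi}^{\star\top}\bm{E})_{j} = \sum_{i \neq j}(\pi_{i}^{\star} + \pi_{j}^{\star})\, E_{i,j},
\]
which is a sum of $n-1$ independent, zero-mean, bounded-by-$1/n$ random variables whose per-column variance budget can be controlled via $\pi_{i}^{\star} \leq \kappa/n$. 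An entrywise Bernstein inequality followed by a union bound over the $n$ columns will deliver $|(\bm{\pi}^{\star\top}\bm{E})_{j}| \lesssim \kappa\sqrt{(\log n)/n^{3}}$ with probability at least $1-O(n^{-9})$, whence the definition of $\|\cdot\|_{\bm{\pi}^{\star}}$ together with the normalization $\sum_{j}\pi_{j}^{\star} = 1$ furnishes the required weighted-norm bound on $\|\bm{\pi}^{\star\top}\bm{E}\|_{\bm{\pi}^{\star}}$. The hard part will be the variance bookkeeping: whether one absorbs the factor $\pi_{j}^{\star}$ inside or outside the Bernstein step directly determines the final power of $\kappa$ on the right-hand side of \eqref{eq:pi-pistar-quality-ranking}, so care must be taken to factor the scale of $\bm{\pi}^{\star}$ out of the variance before applying concentration.

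Finally, the weighted-norm control will be converted back to an $\ell_{2}$ statement. The lower bound $\pi_{\min}^{\star} \geq 1/(\kappa n)$ implies $\|\bm{\pi}-\bm{\pi}^{\star}\|_{2} \leq \sqrt{\kappa n}\,\|\bm{\pi}-\bm{\pi}^{\star}\|_{\bm{\pi}^{\star}}$, while Cauchy-Schwarz applied to $\sum_{i}\pi_{i}^{\star} = 1$ gives $\|\bm{\pi}^{\star}\|_{2} \geq 1/\sqrt{n}$, supplying the denominator in \eqref{eq:pi-pistar-quality-ranking}. To land exactly at the advertised $\kappa^{2.5}$ rate (rather than a cruder power of $\kappa$), it will be important to track the bounds $\pi_{\max}^{\star} \leq \kappa/n$ and $\pi_{\min}^{\star} \geq 1/(\kappa n)$ simultaneously, effectively exploiting the ratio $\pi_{\max}^{\star}/\pi_{\min}^{\star} = \kappa$ in the norm conversion rather than invoking each bound in isolation.
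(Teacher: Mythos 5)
Your proposal starts on the right track (Theorem~\ref{thm:DK_asym} plus Lemmas~\ref{lemma:ranking-gap} and \ref{lemma:ranking-noise} to reduce to bounding $\|\bm{\pi}^{\star\top}\bm{E}\|_{\bm{\pi}^{\star}}$ and validate the applicability condition), and your algebraic identity $(\bm{\pi}^{\star\top}\bm{E})_{j}=\sum_{i\neq j}(\pi_{i}^{\star}+\pi_{j}^{\star})E_{i,j}$ is a clean observation. But there is a genuine gap in the middle step, and your instinct that ``the hard part will be the variance bookkeeping'' is precisely where the argument, as you have stated it, fails.

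If you take the \emph{uniform} entrywise bound you state, namely $|(\bm{\pi}^{\star\top}\bm{E})_{j}|\lesssim\kappa\sqrt{(\log n)/n^{3}}$ for every $j$, and then aggregate via $\|\bm{\pi}^{\star\top}\bm{E}\|_{\bm{\pi}^{\star}}^{2}=\sum_{j}\pi_{j}^{\star}|(\bm{\pi}^{\star\top}\bm{E})_{j}|^{2}\leq\kappa^{2}(\log n)/n^{3}$, the rest of your chain gives
\[
\frac{\|\bm{\pi}-\bm{\pi}^{\star}\|_{2}}{\|\bm{\pi}^{\star}\|_{2}}
\leq \frac{4\kappa^{2}}{\sqrt{\pi_{\min}^{\star}}\,\|\bm{\pi}^{\star}\|_{2}}\,\kappa\sqrt{\frac{\log n}{n^{3}}}
\lesssim \kappa^{3.5}\sqrt{\frac{\log n}{n}},
\]
which misses the target by a full factor of $\kappa$. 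This loss is not repairable by ``tracking $\pi_{\max}^{\star}$ and $\pi_{\min}^{\star}$ simultaneously in the norm conversion'': the numerator bound $\kappa\sqrt{(\log n)/n^{3}}$ is only attained if essentially all $\pi_{i}^{\star}$ are of order $\kappa/n$, which contradicts $\sum_{i}\pi_{i}^{\star}=1$ for $\kappa\gg 1$; for a typical configuration the bound is off by a factor $\sqrt{\kappa}$. To rescue your route, you would instead have to carry a $\pi_{j}^{\star}$-dependent variance proxy through the Bernstein step, something like $\mathrm{Var}\big((\bm{\pi}^{\star\top}\bm{E})_{j}\big)\lesssim\big(\|\bm{\pi}^{\star}\|_{2}^{2}+n(\pi_{j}^{\star})^{2}\big)/n^{2}$, then sum against the weight $\pi_{j}^{\star}$ \emph{before} bounding anything by $\pi_{\max}^{\star}$, which yields $\|\bm{\pi}^{\star\top}\bm{E}\|_{\bm{\pi}^{\star}}\lesssim\sqrt{\pi_{\max}^{\star}(\log n)/n}\,\|\bm{\pi}^{\star}\|_{2}$; the residual $\|\bm{\pi}^{\star}\|_{2}$ then cancels against the normalizer and the factor $\sqrt{\pi_{\max}^{\star}}/\sqrt{\pi_{\min}^{\star}}=\sqrt{\kappa}$ lands you at $\kappa^{2.5}$.

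The paper sidesteps the entire entrywise calculation by applying submultiplicativity in two lines: $\|\bm{\pi}^{\star\top}\bm{E}\|_{\bm{\pi}^{\star}}\leq\sqrt{\pi_{\max}^{\star}}\|\bm{\pi}^{\star\top}\bm{E}\|_{2}\leq\sqrt{\pi_{\max}^{\star}}\|\bm{E}\|\,\|\bm{\pi}^{\star}\|_{2}$, retaining $\|\bm{\pi}^{\star}\|_{2}$ throughout the chain so it cancels with the denominator at the very end, then invoking the spectral norm estimate $\|\bm{E}\|\lesssim\sqrt{(\log n)/n}$ from Lemma~\ref{lemma:ranking-noise} (which Lemma already does the entrywise-to-spectral heavy lifting via matrix Bernstein on the decomposed triangular and diagonal pieces). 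This is both shorter and cleaner than your proposed row-wise Bernstein-plus-union-bound, and it automatically preserves the cancellation that your version loses.
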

Given the construction \eqref{defn:pi-i-star-BTL} of $\bm{\pi}^{\star}$, this theorem implies the existence of a scalar $z>0$ such that
\[
	\frac{\| z\bm{\pi} - \bm{w}^{\star}\|_{2}}{\|\bm{w}^{\star}\|_{2}}\lesssim\kappa^{2.5}\sqrt{\frac{\log n}{n}}
\]
holds with high probability. It is worth noting that one cannot possibly retrieve the global scaling factor $z$, due to the invariance of the BTL observation model under global scaling (cf.~(\ref{eq:BTL})).

To interpret the effectiveness of this theorem, consider, for example, the case when $\kappa=O(1)$ (so that all the latent scores
$w_{i}^{\star}$ are about the same order). Theorem~\ref{thm:ranking}
tells us that the relative estimation error of $\bm{\pi}$ is vanishing
as the number $n$ of items increases. As it turns out, this statistical error rate \eqref{eq:pi-pistar-quality-ranking}
 is near minimax-optimal up to a logarithmic
factor; see~\citet[Theorem 3]{negahban2016rank}. In fact, with a more careful
analysis, one can further eliminate this extra $\log n$ factor and establish (orderwise) minimax optimality of this algorithm,
as has been done in \citet[Theorem 5.2]{chen2017spectral}.

Caution needs to be exercised, however, that high score estimation accuracy alone does not necessarily imply appealing ranking accuracy.
For instance, if the goal is to identify the top-$K$ ranked items---a problem commonly referred to as ``top-$K$ ranking'' \citep{chen2015spectral}---then the ranking accuracy also relies heavily on the separation between the score of the $K$-th ranked item and that of the $(K+1)$-th ranked item (namely, whether the set of top-$K$ ranked items is sufficiently distinguishable from the remaining ones). Fortunately, the spectral ranking algorithm introduced in this section remains minimax optimal when it comes to top-$K$ ranking, through a refined $\ell_\infty$ perturbation theory to be introduced in Chapter~\ref{cha:Linf-theory}. The interested reader is referred to \citet{chen2017spectral} for details.

\begin{proof}[Proof of Theorem~\ref{thm:ranking}]
Invoke Theorem~\ref{thm:DK_asym} to see that
\begin{align}
	\|\bm{\pi}-\bm{\pi}^{\star}\|_{\bm{\pi}^{\star}}
	& \leq \frac{\big\Vert \bm{\pi}^{\star\top}\bm{E}\big\Vert _{\bm{\pi}^{\star}}}
	{1-\max\left\{ \lambda_{2}(\bm{P}^{\star}),-\lambda_{n}(\bm{P}^{\star})\right\} -\left\Vert \bm{E}\right\Vert _{\bm{\pi}^{\star}}} \nonumber\\
	& \leq 4 \kappa^{2}\big\Vert \bm{\pi}^{\star\top}\bm{E}\big\Vert _{\bm{\pi}^{\star}},
	\label{eq:pi-perturbation-UB1-ranking}
\end{align}
provided that
\begin{align}
	1-\max\big\{ \lambda_{2}(\bm{P}^{\star}),-\lambda_{n}(\bm{P}^{\star})\big\}
	-\left\Vert \bm{E}\right\Vert _{\bm{\pi}^{\star}}\geq1/(4\kappa^{2}) .
	\label{eq:condition-E-eigen-gap-bound-ranking}
\end{align}
From Lemma~\ref{lemma:ranking-gap} and Lemma~\ref{lemma:ranking-noise},
we know that Condition~\eqref{eq:condition-E-eigen-gap-bound-ranking} holds true with probability at least $1-O(n^{-8})$,
with the proviso that $n\geq C\kappa^{5}\log n$ for some sufficiently large constant $C>0$.

Additionally, letting $\pi_{\min}^{\star} \coloneqq \min_i\pi_i^{\star} $ and $\pi_{\max}^{\star} \coloneqq \max_i\pi_i^{\star} $,
we can easily see from the definition of $\|\cdot\|_{\bm{\pi}^{\star}}$ (i.e., $\|\bm{v}\|_{\bm{\pi}^{\star}}=\sqrt{\sum_i \pi_i^{\star}v_i^2 }$ for any vector $\bm{v}$)  that
\[
	\|\bm{v}\|_{\bm{\pi}^{\star}} \overset{\mathrm{(i)}}{\leq} \sqrt{\pi^{\star}_{\max }} \, \|\bm{v}\|_2 , \qquad \text{and} \qquad
	\|\bm{v}\|_2  \overset{\mathrm{(ii)}}{\leq}  \frac{1}{\sqrt{  \pi^{\star}_{\min} } } \, \|\bm{v}\|_{\bm{\pi}^{\star}} ,
\]
which allows us to further obtain
\begin{align*}
\|\bm{\pi}-\bm{\pi}^{\star}\|_{2} & \leq\frac{1}{\sqrt{\pi_{\min}^{\star}}}\|\bm{\pi}-\bm{\pi}^{\star}\|_{\bm{\pi}^{\star}}\leq\frac{4\kappa^{2}}{\sqrt{\pi_{\min}^{\star}}}\|\bm{\pi}^{\star\top}\bm{E}\|_{\bm{\pi}^{\star}}\leq4\kappa^{2.5}\|\bm{\pi}^{\star\top}\bm{E}\|_{2}\\
 & \leq4\kappa^{2.5}\|\bm{E}\|\,\|\bm{\pi}^{\star}\|_{2}.
\end{align*}
Here, the first inequality comes from (ii), the second inequality is a consequence of \eqref{eq:pi-perturbation-UB1-ranking}, whereas the third one results from (i).
The proof is then completed by applying the high-probability bound $\|\bm{E}\|\lesssim\sqrt{(\log n)/n}$ derived in Lemma~\ref{lemma:ranking-noise}.
\end{proof}

\subsection{Proof of auxiliary lemmas \label{subsec:Proof-of-auxilliary-ranking}}

Before delving into the proof, we state a general comparison theorem, which is attributed to \citet{diaconis1993comparison},
that relates the spectral gap of a reversible Markov chain with that
of another (possibly more tractable) reversible chain. We refer the interested reader
to \citet[Lemma 6]{negahban2016rank} for a proof of the following result.
\begin{lemma}
	\label{lemma:gap-comparison}
	Consider two reversible Markov chains over the state space $\{1,2,\cdots,n\}$.
	Let $\bm{P}$ and $\bm{\pi}$ (resp.~$\widetilde{\bm{P}}$ and $\widetilde{\bm{\pi}}$) denote
	the transition matrix and the stationary distribution of the first (resp.~second) chain.
	In addition, set
\[
	\alpha\coloneqq\min_{i,j}
	\frac{\pi_i P_{i,j}}{\widetilde{\pi}_i \widetilde{P}_{i,j}},
	\qquad\text{and}\qquad
	\beta\coloneqq\max_i\frac{\pi_i}{\widetilde{\pi}_i}.
\]
Then one has
\[
	\frac{1-\max\left\{ \lambda_{2}(\bm{P}),-\lambda_{n}(\bm{P})\right\} }{1-\max\big\{ \lambda_{2}(\widetilde{\bm{P}}),-\lambda_{n}(\widetilde{\bm{P}})\big\} }
	\geq\frac{\alpha}{\beta}.
\]
\end{lemma}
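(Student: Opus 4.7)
The strategy is classical: I will combine the Rayleigh--Ritz (variational) characterization of eigenvalues for reversible chains with Dirichlet-form comparison, which is the technique underlying the Diaconis--Saloff-Coste comparison theorems. Since $\bm{P}$ and $\widetilde{\bm P}$ are both reversible, their eigenvalues are all real and the spectral gaps in question admit clean variational formulas over the associated weighted $\ell_2$ spaces. My plan is to bound $1-\lambda_2(\bm P)$ and $1+\lambda_n(\bm P)$ separately from below by $(\alpha/\beta)$ times their counterparts for $\widetilde{\bm P}$, and then take the minimum.

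First, I would recall (and briefly justify via the standard symmetrization $\bm{S}=(\bm\Pi)^{1/2}\bm{P}(\bm\Pi)^{-1/2}$, as used in the proof of Theorem~\ref{thm:DK_asym}) the variational identities
\begin{align*}
1-\lambda_2(\bm P)
&=\min_{f\not\equiv\mathrm{const}} \frac{\mathcal{E}_{\bm P}(f,f)}{\mathsf{Var}_{\bm{\pi}}(f)},\qquad
1+\lambda_n(\bm P)=\min_{f\neq 0}\frac{\mathcal{E}^{+}_{\bm P}(f,f)}{\|f\|_{\bm\pi}^2},
\end{align*}
where $\mathcal{E}_{\bm P}(f,f)=\tfrac12\sum_{i,j}(f_i-f_j)^2\pi_iP_{i,j}$, $\mathcal{E}^{+}_{\bm P}(f,f)=\tfrac12\sum_{i,j}(f_i+f_j)^2\pi_iP_{i,j}$, and $\mathsf{Var}_{\bm\pi}(f)=\tfrac12\sum_{i,j}(f_i-f_j)^2\pi_i\pi_j$. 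Identical formulas hold for $\widetilde{\bm P}$ with $\widetilde{\bm\pi}$ in place of $\bm\pi$.

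Next, I would carry out the two comparison steps. The Dirichlet-form comparison is immediate from the definition of $\alpha$: term-by-term, $\pi_iP_{i,j}\ge \alpha\widetilde\pi_i\widetilde P_{i,j}$, so
\begin{align*}
\mathcal{E}_{\bm P}(f,f)\ge\alpha\,\mathcal{E}_{\widetilde{\bm P}}(f,f),\qquad \mathcal{E}^{+}_{\bm P}(f,f)\ge\alpha\,\mathcal{E}^{+}_{\widetilde{\bm P}}(f,f).
\end{align*}
For the denominators, the clean $\alpha/\beta$ (as opposed to the weaker $\alpha/\beta^2$) comes from the ``mean-shift'' trick: for any $c\in\mathbb{R}$,
\begin{align*}
\mathsf{Var}_{\bm\pi}(f)\le \sum_{i}(f_i-c)^2\pi_i\le \beta\sum_i(f_i-c)^2\widetilde\pi_i,
\end{align*}
since $\pi_i\le\beta\widetilde\pi_i$ by definition of $\beta$; choosing $c=\sum_i f_i\widetilde\pi_i$ gives $\mathsf{Var}_{\bm\pi}(f)\le\beta\,\mathsf{Var}_{\widetilde{\bm\pi}}(f)$. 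Similarly $\|f\|_{\bm\pi}^2\le\beta\|f\|_{\widetilde{\bm\pi}}^2$ directly.

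Combining the numerator and denominator bounds inside each Rayleigh quotient yields $1-\lambda_2(\bm P)\ge(\alpha/\beta)\bigl(1-\lambda_2(\widetilde{\bm P})\bigr)$ and $1+\lambda_n(\bm P)\ge(\alpha/\beta)\bigl(1+\lambda_n(\widetilde{\bm P})\bigr)$. The conclusion then follows from the elementary identity
\begin{align*}
1-\max\{\lambda_2(\bm P),-\lambda_n(\bm P)\}=\min\big\{1-\lambda_2(\bm P),\,1+\lambda_n(\bm P)\big\},
\end{align*}
and likewise for $\widetilde{\bm P}$. I expect the only genuinely delicate point to be the mean-shift argument for the variance comparison; a naive bound $\pi_i\pi_j\le\beta^2\widetilde\pi_i\widetilde\pi_j$ loses a factor of $\beta$ and would produce the suboptimal ratio $\alpha/\beta^2$, so it is essential to exploit translation-invariance of the variance before invoking the pointwise bound $\pi_i\le\beta\widetilde\pi_i$.
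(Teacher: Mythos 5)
Your proof is correct and is precisely the Diaconis--Saloff-Coste Dirichlet-form comparison argument; the paper itself does not prove this lemma but defers to \citet[Lemma~6]{negahban2016rank}, whose proof follows the same variational route you take. The one point worth flagging explicitly is that you do correctly handle the subtle step that separates the sharp ratio $\alpha/\beta$ from the weaker $\alpha/\beta^2$: exploiting translation invariance of the Dirichlet form $\mathcal{E}_{\bm P}$ (so one may work with $f-c$ for any $c$) before applying the pointwise bound $\pi_i\le\beta\widetilde\pi_i$ inside $\sum_i(f_i-c)^2\pi_i$, and then choosing $c=\mathbb{E}_{\widetilde{\bm\pi}}[f]$ to recover $\mathsf{Var}_{\widetilde{\bm\pi}}(f)$ in the denominator.
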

Armed with this comparison lemma, we are ready to present the proof of Lemma \ref{lemma:ranking-gap}.

\paragraph{Proof of Lemma \ref{lemma:ranking-gap}.}
In order to control the spectral gap with the aid of Lemma~\ref{lemma:gap-comparison}, we construct an auxiliary transition matrix
\[
\bm{Q}^{\star}=\frac{1}{n}\bm{1}\bm{1}^{\top},
\]
which clearly corresponds to a reversible Markov chain with  stationary distribution $\bm{u}^{\star}=(1/n)\cdot\bm{1}$.  The eigengap of this newly constructed reversible Markov chain is
$$1-\max\left\{ \lambda_{2}(\bm{Q}^{\star}),-\lambda_{n}(\bm{Q}^{\star})\right\} =1,$$
since $\lambda_{2}(\bm{Q}^{\star})=\lambda_{n}(\bm{Q}^{\star})=0$.  Therefore, we only need to bound $\alpha$ and $\beta$.

Recalling the construction of $\bm{P}^{\star}$ in (\ref{defn:ranking-P-star}),
we can straightforwardly check that
\[
	\pi_{i}^{\star}P^{\star}_{i,j} 
	= \frac{1}{n}\frac{\pi_{i}^{\star}\pi_{j}^{\star}}{\pi_{i}^{\star}+\pi_{j}^{\star}} \geq\frac{1}{2n}\min\{\pi_{i}^{\star},\pi_{j}^{\star}\}
\]
for every $i\neq j$,  and in addition,
\[
\pi_{i}^{\star}P_{i,i}^{\star}=\pi_{i}^{\star}\Bigg[1-\sum_{j:j\neq i}\frac{1}{n}\frac{w_{j}^{\star}}{w_{i}^{\star}+w_{j}^{\star}}\Bigg]\geq\pi_{i}^{\star}\Bigg[1-\sum_{j:j\neq i}\frac{1}{n}\Bigg]=\frac{1}{n}\pi_{i}^{\star}.
\]
Combining the previous two inequalities, we obtain
\[
	\min_{i,j} \big( \pi^{\star}_i P^{\star}_{i,j} \big)
	\geq\frac{1}{2n}\min_{1\leq i\leq n}\pi_{i}^{\star}
	= \frac{1}{2n\kappa}\max_{1\leq i\leq n}\pi_{i}^{\star}
	\geq\frac{1}{2n^{2}\kappa},
\]
where the last relation holds since $\max_{ i}\pi_{i}^{\star}\geq \frac{1}{n}\sum_i\pi_i^{\star} = \frac{1}{n}$.
This together with the construction of $\bm{Q}^{\star}$ further leads to
\[
	\alpha \coloneqq \min_{i,j}\frac{\pi_{i}^{\star}P^{\star}_{i,j}}{u_{i}^{\star}Q^{\star}_{i,j}}
	=n^{2}\min_{i,j} \big( \pi^{\star}_i P^{\star}_{i,j} \big)
	\geq\frac{1}{2\kappa}.
\]
In regard to $\beta$, it is seen that
\[
	\beta\coloneqq \max_i \frac{\pi^{\star}_i}{u^{\star}_i} = n\max_{ i}\pi_{i}^{\star}
	= n\kappa \min_{i}\pi_{i}^{\star}
	\leq \kappa,
\]
where the final inequality follows since $\min_i\pi_i\leq \frac{1}{n}\sum_i\pi_i = \frac{1}{n}$. With the preceding bounds on $\alpha$ and $\beta$ in place,
Lemma~\ref{lemma:gap-comparison} informs us that
\[
\frac{1-\max\left\{ \lambda_{2}(\bm{P}^{\star}),-\lambda_{n}(\bm{P}^{\star})\right\} }{1-\max\left\{ \lambda_{2}(\bm{Q}^{\star}),-\lambda_{n}(\bm{Q}^{\star})\right\} }\geq\frac{\alpha}{\beta}\geq\frac{1}{2\kappa^{2}}.
\]
This together with the aforementioned  eigengap for $\bm{Q}^{\star}$
establishes the advertised result.

\paragraph{Proof of Lemma \ref{lemma:ranking-noise}.}

Let $\bm{D} \coloneqq \mathsf{diag}(\sqrt{\pi_1^{\star}},\cdots,\sqrt{\pi_n^{\star}})$. We have seen from the proof in Section~\ref{subsec:proof-DK-asym} (cf.~\eqref{eq:pi-norm-matrix-equation}) that
\[
	\|\bm{E}\|_{\bm{\pi}^{\star}} =\|\bm{D}\bm{E}\bm{D}^{-1}\| \leq  \|\bm{D}\|\, \|\bm{E}\| \, \|\bm{D}^{-1}\| = \frac{  \sqrt{\max_i \pi_i^{\star}} } {  \sqrt{\min_i \pi_i^{\star}} }  \|\bm{E}\|
	=  \sqrt{\kappa} \, \|\bm{E}\|,
\]
where $\kappa$ is defined in (\ref{eq:ranking-well-condition}).
Therefore, it suffices to bound $\|\bm{E}\|$.

%

By construction of $\bm{P}$ and $\bm{P}^{\star}$ (see \eqref{defn:ranking-P-empirical} and \eqref{defn:ranking-P-star}), we see that
\begin{equation}
	E_{i,j}=P_{i,j}-P_{i,j}^{\star}=\frac{1}{n} \big( y_{i,j}- \mathbb{E} [ y_{i,j}] \big) \label{eq:ranking-noise-off-diag}
\end{equation}
for any $i\neq j$.
In addition, for all $1\leq i\leq n$, it follows that
\begin{align}
E_{i,i} & =P_{i,i}-P_{i,i}^{\star}
= -\sum_{j:j\neq i}E_{i,j}
  =-\frac{1}{n}\sum_{j:j\neq i}\big( y_{i,j}- \mathbb{E} [ y_{i,j}] \big) .\label{eq:ranking-noise-diag}
\end{align}
In view of these identities, we shall decompose the matrix $\bm{E}$
into three parts: the upper triangular part (denoted by $\bm{E}_{\mathsf{upper}}$),
the diagonal part (denoted by $\bm{E}_{\mathsf{diag}}$), and the lower triangular
part (denoted by $\bm{E}_{\mathsf{lower}}$). Clearly,  the triangle inequality gives
\begin{equation}
\|\bm{E}\|\leq\|\bm{E}_{\mathsf{upper}}\|+\|\bm{E}_{\mathsf{diag}}\|+\|\bm{E}_{\mathsf{lower}}\|.\label{eq:ranking-triangle}
\end{equation}
In the sequel, we deal with these three terms separately.

Let us start with the diagonal part $\bm{E}_{\mathsf{diag}}$.
In view of the definition of the spectral norm, we know that
\[
\|\bm{E}_{\mathsf{diag}}\|=\max_{1\leq i\leq n}|E_{i,i}|=\max_{1\leq i\leq n} \Big| \sum_{j:j\neq i}E_{i,j} \Big|,
\]
where the last relation arises from (\ref{eq:ranking-noise-diag}). Fix any
$i$, then it is easily seen that $\sum_{j:j\neq i}E_{i,j}$ is a sum of
independent zero-mean random variables $\{E_{i,j}\}$, which can be controlled via the Bernstein inequality.
Specifically, observe that
\[
\max_{j:j\neq i}|E_{i,j}|=\max_{j:j\neq i} \frac{1}{n} \big| y_{i,j}-\mathbb{E} [ y_{i,j}]   \big|
\leq\frac{1}{n} \eqqcolon B_1
\]
and,  in addition,
\[
	v_1\coloneqq\sum_{j:j\neq i}\mathbb{E}[E_{i,j}^{2}]=\frac{1}{n^{2}}\sum_{j:j\neq i}\mathsf{Var}\left(y_{i,j}\right)\leq\frac{1}{n},
\]
where the last inequality follows since the variance of a Bernoulli random variable is no larger than $1$.
Apply the Bernstein inequality (cf.~Corollary~\ref{thm:matrix-Bernstein-friendly}) and the union
bound over $1\leq i\leq n$ to demonstrate that
\begin{align*}
	\|\bm{E}_{\mathsf{diag}}\| & =\max_{1\leq i\leq n} \Big|\sum_{j:j\neq i}E_{i,j} \Big|
	\lesssim \sqrt{v_1\log n} + B_1 \log n   \\
	& \lesssim  \sqrt{\frac{\log n}{n}}+\frac{\log n}{n}\asymp\sqrt{\frac{\log n}{n}}
\end{align*}
holds with probability at least $1-O(n^{-8})$.

We now move on to the upper triangular part $\bm{E}_{\mathsf{upper}}$,
whose entries $\{E_{i,j}\}_{i<j}$ are independent. Invoking Theorem~\ref{thm:tighter-spectral-normal-ramon} (see the remark about asymmetric version right after Theorem~\ref{thm:tighter-spectral-normal-ramon}) with the bounds on $B_1$ and $v_1$ established above,
we arrive at
\[
	\|\bm{E}_{\mathsf{upper}}\|\lesssim \sqrt{v_1 } + B_1 \log n \lesssim \sqrt{\frac{1}{n}} + \frac{\log n}{n} \asymp  \sqrt{\frac{1}{n}}
\]
with probability at least $1-O(n^{-8})$. Similar arguments lead to
the same upper bound on $\|\bm{E}_{\mathsf{lower}}\|$, which
we omit  for brevity.

Substituting the upper bounds on $\|\bm{E}_{\mathsf{diag}}\|$, $\|\bm{E}_{\mathsf{upper}}\|$ and $\|\bm{E}_{\mathsf{lower}}\|$ into (\ref{eq:ranking-triangle}), we immediately establish
the desired bound.

\section{Phase retrieval and solving quadratic systems of equations}
\label{sec:phase_retrieval}

Phase retrieval is a fundamental problem arising in numerous imaging applications such as X-ray crystallography,  diffraction imaging, and so on \citep{fienup1982phase,shechtman2015phase,candes2012phaselift,candes2015phase,jaganathan2015phase}.
In physics, phase retrieval is concerned with estimating a specimen by observing the intensities (or squared modulus) of the diffracted
waves scattered by the object without knowing their phases. The advent of this problem is attributed to the physical limitation that the optical sensors are unable to record the phases of the diffracted waves. Put another way, in phase retrieval, we only have access to  measurements that are quadratic functions of the object of interest, and aim at estimating the unknown object up to global phase.  This gives rise to the problem of solving quadratic systems of equations, to be formulated below.

\begin{figure}
\begin{center}
\includegraphics[width=0.8\textwidth]{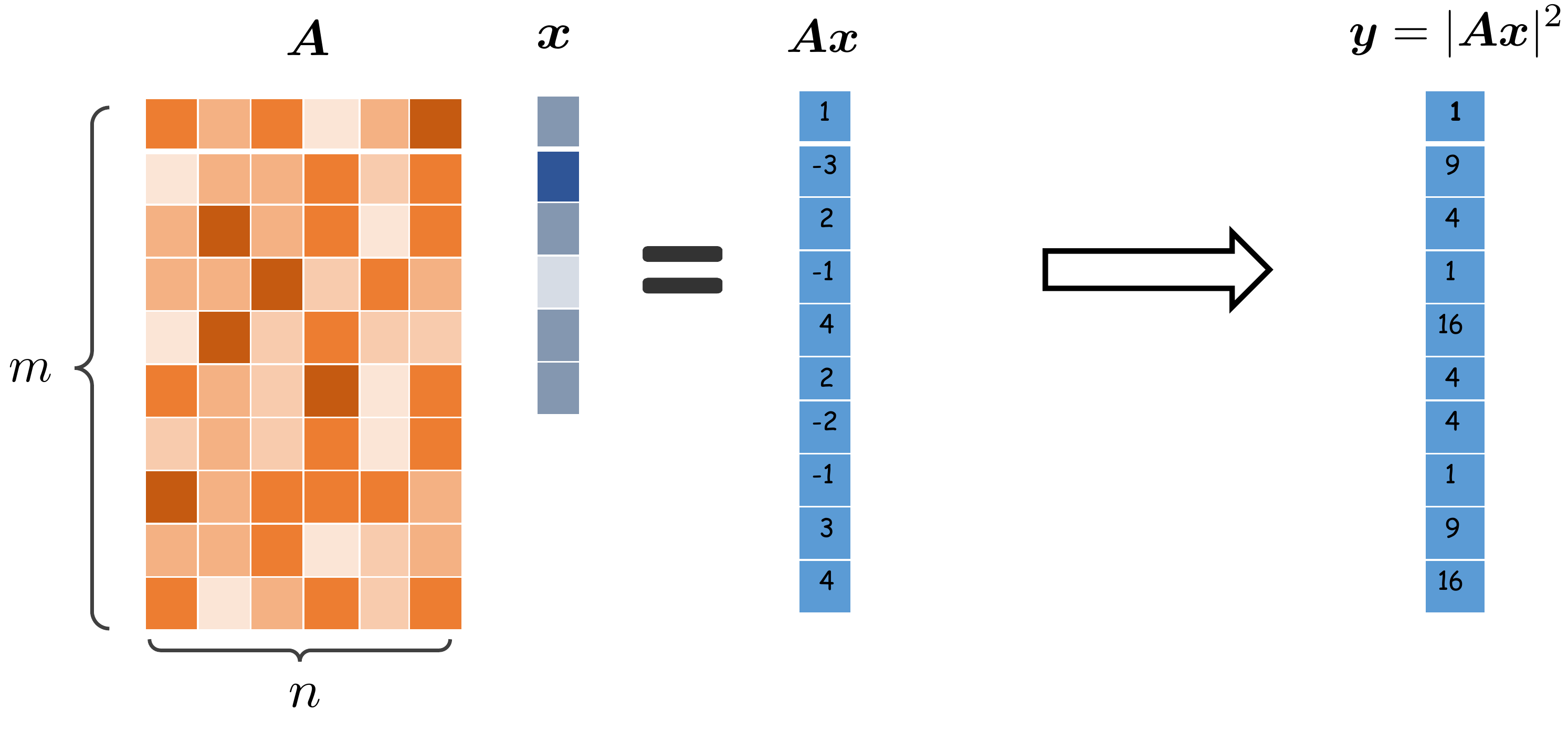}
\end{center}
	\caption{Illustration of phase retrieval and solving quadratic systems of equations, where only the intensities of linear measurements are collected. Here, $\bm{A}=[\bm{a}_1,\cdots,\bm{a}_m]^{\top}$, and $|\bm{z}|^2\coloneqq [|z_1|^2, \cdots, |z_m|^2]^{\top}$ for any vector $\bm{z}=[z_i]_{1\leq i\leq m}$. } \label{fig:phase_retrieval}
\end{figure}

\subsection{Problem formulation and assumptions}
\label{sec:formulation-PR}

Suppose that we are interested in reconstructing an unknown signal
$\bm{x}^{\star}\in\mathbb{R}^{n}$, but only have access to a collection of $m$
quadratic measurements on the linear combinations of its entries as follows:
\begin{equation}
	y_{i}=(\bm{a}_{i}^{\top}\bm{x}^{\star})^{2},\qquad1\leq i\leq m,
	\label{eq:PR-samples}
\end{equation}
where $\bm{a}_{i}=[a_{i,1},\cdots,a_{i,n}]^{\top} \in\mathbb{R}^{n}$ is the design vector known {\em a priori}. 
See Figure~\ref{fig:phase_retrieval} for an illustration of this measurement model.
The question is: when can we hope to reconstruct $\bm{x}^{\star}$, in an accurate and efficient fashion, on the basis of these nonlinear equations?

%
%

As is well known, solving quadratic systems of equations is, in general, NP hard.\footnote{See the reduction to the NP-hard stone problem in~\citet{chen2015solving}.}
Additional assumptions are therefore needed to enable tractable recovery.
Here, we adopt a Gaussian design model commonly studied in the literature.
\begin{assumption}
\label{assump:pr-gaussian}
	The design vectors $\{\bm{a}_i\}_{1\leq i\leq m}$ are independently generated  obeying $\bm{a}_{i}\overset{\text{i.i.d.}}{\sim}\mathcal{N}(\bm{0},\bm{I}_{n})$.
\end{assumption}

\subsection{Algorithm}\label{sec:pr-alg}

The Gaussian design model (cf.~Assumption~\ref{assump:pr-gaussian}) allows meaningful estimation of the unknown object~$\bm{x}^{\star}$ via the (by now) familiar spectral method.
Let us start by arranging the data into the following matrix
\begin{equation}
\bm{M}\coloneqq\frac{1}{m}\sum_{i=1}^{m}y_{i}\bm{a}_{i}\bm{a}_{i}^{\top}=\frac{1}{m}\sum_{i=1}^{m}(\bm{a}_{i}^{\top}\bm{x}^{\star})^{2}\bm{a}_{i}\bm{a}_{i}^{\top},\label{eq:D_PR_org}
\end{equation}
which can be viewed as a weighted sample covariance matrix of the design
vectors $\{\bm{a}_{i}\}$. The spectral method then estimates $\bm{x}^{\star}$ by
\begin{align}
	\bm{x}=\sqrt{\frac{\lambda_{1}}{3}}\,\bm{u}_{1},\label{eq:init-PR}
\end{align}
where $\bm{u}_{1}$ (resp.~$\lambda_{1}=\lambda_{1}(\bm{M})$) indicates
the leading eigenvector (resp.~eigenvalue) of the matrix
$\bm{M}$. This simple approach has been suggested for phase retrieval since the work of~\citet{netrapalli2015phase}.

To explain the rationale of this approach, it is instrumental
to look at the mean of $\bm{M}$ under Assumption~\ref{assump:pr-gaussian}.
Specifically, simple calculation (which we include at the end of this subsection) gives
\begin{equation}
	\bm{M}^{\star}\coloneqq \mathbb{E}[\bm{M}] = \mathbb{E}\big[(\bm{a}_{i}^{\top}\bm{x}^{\star})^{2}\bm{a}_{i}\bm{a}_{i}^{\top}\big]=2\bm{x}^{\star}\bm{x}^{\star\top}+\|\bm{x}^{\star}\|_{2}^{2}\,\bm{I}_{n}.
	\label{eq:mtx_LLN}
\end{equation}
It is self-evident that (a) the leading eigenvector of $\bm{M}^{\star}$ is precisely given by $\pm \bm{x}^{\star} / \|\bm{x}^{\star} \|_2$, and (b) the leading eigenvalue of $\bm{M}^{\star}$ is given by $3\|\bm{x}^{\star}\|_{2}^{2}$ by \eqref{eq:mtx_LLN}.
From now on, we shall set
\begin{align}
	\bm{u}_{1}^{\star} \coloneqq \bm{x}^{\star} / \|\bm{x}^{\star} \|_2,
	\qquad \text{and} \qquad
	\lambda_1^{\star} \coloneqq 3\|\bm{x}^{\star}\|_{2}^{2},
	\label{defn:u1-lambda1-star-PR}
\end{align}
which implies $\bm{x}^{\star} = \sqrt{\lambda_1^\star/3} \,\bm{u}_{1}^{\star}$ and hence explains the estimator constructed in \eqref{eq:init-PR}.
The above argument further hints that: the spectral estimate $\bm{x}$ converges to the ground truth $\pm \bm{x}^{\star}$ in the large-sample limit with $m\rightarrow \infty$ (so that $\bm{M}\rightarrow \mathbb{E}[\bm{M}] =\bm{M}^{\star}$). The question, however, boils down to  where this algorithm stands in the more realistic finite-sample scenario.
\begin{remark}
The expression \eqref{defn:u1-lambda1-star-PR} indicates that $\bm{x}^{\star} =  \|\bm{x}^{\star} \|_2\,  \bu_1^\star$.
From the law of large numbers, one expects
$$
	\frac{1}{m} \sum_{i=1}^m y_i  ~\to~   \mathbb{E}[ y_i] = \mathbb {E} \big[ (\ba^{\top} \bx^\star )^2 \big] =   \|  \bx^\star\|^2_2,
$$
with probability approaching one.
Thus, an alternative estimator is
\begin{equation}
	\label{eq3.43}
	\widehat{\bx} =  \Big( \frac{1}{m} \sum\nolimits_{i=1}^m y_i  \Big)^{1/2} \bu_1.
\end{equation}
%
\end{remark}

\paragraph{Derivation of \eqref{eq:mtx_LLN}.}

The $(i,j)$-th entry of  $\mathbb{E}[\bm{M}]$ is given by
$$
\mathbb{E}[M_{j,k}]=\mathbb{E}
\big[\big(\big(\bm{a}_{i}^{\top}\bm{x}^{\star}\big)^{2}\bm{x}^{\star}\bm{x}^{\star\top}\big)_{j,k} \big]
=\mathbb{E}\big[(a_{i,1}x_{i,1}^{\star}+\cdots+a_{i,n}x_{i,n}^{\star})^{2}a_{i,j}a_{i,k}\big] .
$$
Expanding  terms and using the moments of Gaussian variables yield
\begin{align*}
\mathbb{E}[M_{j,k}] & =\mathbb{E}\big[2a_{i,j}^{2}a_{i,k}^{2}x_{i,j}^{\star}x_{i,k}^{\star}\big]=2x_{i,j}^{\star}x_{i,k}^{\star}\qquad\text{if }j\neq k;\\
\mathbb{E}[M_{j,j}] & =\mathbb{E}\big[a_{i,j}^{4}\big(x_{i,j}^{\star}\big)^{2}\big]+\sum_{l:\,l\neq j}\mathbb{E}\big[a_{i,l}^{2}a_{i,j}^{2}\big(x_{i,l}^{\star}\big)^{2}\big]=3\big(x_{i,j}^{\star}\big)^{2}+\sum_{l:\,l\neq j}\big(x_{i,l}^{\star}\big)^{2}\\
 & = 2\big(x_{i,j}^{\star}\big)^{2} + \|\bm{x}^{\star}\|_{2}^{2}.
\end{align*}
Putting these together leads to the expression \eqref{eq:mtx_LLN}.

%


\subsection{Performance guarantees}

Developing theoretical support for the aforementioned spectral method hinges upon characterizing the proximity of $\lambda_{1}$ and $\lambda_{1}^{\star}$
and that of $\bm{u}_{1}$ and $\bm{u}_{1}^{\star}$, both of which rely largely on bounding $\bm{M}-\bm{M}^{\star}$.
In what follows, we start by controlling  $\|\bm{M}-\bm{M}^{\star}\|$, with the proof  postponed to Section~\ref{sec:proof-auxiliary-lemmas-PR-L2}.
\begin{lemma}
\label{lemma:pr-noise-bound}
Consider the settings in Section~\ref{sec:formulation-PR}.
There exist  sufficiently large constants $c,C>0$ such that if $m\geq Cn\log^{3}m$, then with probability at least $1-O(m^{-10})$ one has
\begin{equation}
	\|\bm{M}-\bm{M}^{\star}\|\leq c\sqrt{\frac{n\log^{3}m}{m}}\|\bm{x}^{\star}\|_{2}^{2} \leq\frac{1}{10}\|\bm{x}^{\star}\|_{2}^{2}.
	\label{eq:pr-noise-bound}
\end{equation}
\end{lemma}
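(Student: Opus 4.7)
}
The plan is to view $\bm{M}-\bm{M}^{\star}$ as a sum of independent zero-mean random matrices $\bm{X}_i \coloneqq \frac{1}{m}\big[(\bm{a}_{i}^{\top}\bm{x}^{\star})^{2}\bm{a}_{i}\bm{a}_{i}^{\top} - \mathbb{E}[(\bm{a}_{i}^{\top}\bm{x}^{\star})^{2}\bm{a}_{i}\bm{a}_{i}^{\top}]\big]$, and then apply the truncated matrix Bernstein inequality from Corollary~\ref{thm:matrix-Bernstein-friendly-truncated}. Since each $\bm{X}_i$ is unbounded (the $\bm{a}_i$'s are Gaussian), a truncation step is required. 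Without loss of generality, assume $\|\bm{x}^{\star}\|_2 = 1$.

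First, I would pin down a deterministic truncation level $L$. Standard Gaussian concentration inequalities give that $\|\bm{a}_i\|_2^2 \leq 2n$ and $|\bm{a}_i^{\top}\bm{x}^{\star}|^2 \lesssim \log m$ hold simultaneously with probability at least $1-m^{-12}$ for each $i$. Hence
\[
	\|\bm{X}_i\| \;\lesssim\; \frac{1}{m}(\bm{a}_i^{\top}\bm{x}^{\star})^2 \|\bm{a}_i\|_2^2 \;\lesssim\; \frac{n\log m}{m} \eqqcolon L
\]
with probability $1-m^{-12}$, so one can take $q_0 = m^{-12}$. A short computation using the same Gaussian tail bound together with Cauchy--Schwarz shows that the truncation bias $q_1$ is of order $m^{-10}$ or smaller, which is negligible.

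Next I would compute the variance statistic $v$ in~\eqref{defn:variance-statistic}. Since $\bm{X}_i$ is symmetric, one only needs
\[
	\Big\| \sum_{i=1}^m \mathbb{E}[\bm{X}_i^2] \Big\| \;=\; \frac{1}{m}\Big\|\mathbb{E}\big[(\bm{a}^{\top}\bm{x}^{\star})^4 \|\bm{a}\|_2^2\,\bm{a}\bm{a}^{\top}\big] - (\bm{M}^{\star})^2\Big\|.
\]
The dominant term is $\mathbb{E}[(\bm{a}^{\top}\bm{x}^{\star})^4 \|\bm{a}\|_2^2\,\bm{a}\bm{a}^{\top}]$; by rotational invariance of the Gaussian distribution, we may assume $\bm{x}^{\star} = \bm{e}_1$, and a direct Gaussian moment calculation shows this matrix has spectral norm on the order of $n$. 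Consequently $v \lesssim n/m$.

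Plugging $L \asymp n(\log m)/m$ and $v \lesssim n/m$ into Corollary~\ref{thm:matrix-Bernstein-friendly-truncated} with $a \asymp \log m$ yields
\[
	\|\bm{M}-\bm{M}^{\star}\| \;\lesssim\; \sqrt{\frac{n\log m}{m}} + \frac{n\log^2 m}{m},
\]
with probability at least $1-O(m^{-10})$, and under $m \gtrsim n\log^3 m$ the first term dominates, giving the advertised bound. The main obstacle is the variance bound: the fourth-moment matrix $\mathbb{E}[(\bm{a}^{\top}\bm{x}^{\star})^4\|\bm{a}\|_2^2\bm{a}\bm{a}^{\top}]$ must be computed carefully (using either rotational invariance or direct Isserlis/Wick-type expansions) to ensure that its operator norm scales like $n$ rather than $n^{3/2}$; getting this step right is what allows the $\sqrt{n/m}$ rate to emerge rather than a weaker rate.
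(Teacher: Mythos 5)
Your approach matches the paper's: apply the truncated matrix Bernstein inequality (Corollary~\ref{thm:matrix-Bernstein-friendly-truncated}) after using rotational invariance to reduce to $\bm{x}^{\star}=\bm{e}_1$, and compute that $\mathbb{E}[(\bm{a}^{\top}\bm{x}^{\star})^4\|\bm{a}\|_2^2\,\bm{a}\bm{a}^{\top}]$ is diagonal with entries $O(n)$, so the variance statistic is $\lesssim mn$ (unnormalized), giving the $\sqrt{n\log m/m}$ leading term. The variance computation and the identification of where the factor $n$ (rather than $n^{3/2}$) comes from are exactly right.

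There is one small gap in your truncation step. You take $L \asymp n\log m/m$ by asserting that $\|\bm{a}_i\|_2^2 \leq 2n$ holds with probability $1-m^{-12}$. Since $\|\bm{a}_i\|_2^2 \sim \chi_n^2$, the probability that it exceeds $2n$ decays like $e^{-cn}$, so this only gives $m^{-12}$ when $n \gtrsim \log m$. The hypothesis $m \geq Cn\log^3 m$ does not force this (take $m = e^{n^2}$, say). The robust fix is $\|\bm{a}_i\|_2^2 \lesssim n + \log m$, or, as the paper does, bound $\|\bm{a}_i\|_2^2 \leq n\|\bm{a}_i\|_\infty^2 \lesssim n\log m$ via $\|\bm{a}_i\|_\infty \lesssim \sqrt{\log m}$, yielding $L \asymp n\log^2 m$ (one extra $\log m$ compared to your choice, but still absorbed by the $\sqrt{n\log^3 m/m}$ target under $m \gtrsim n\log^3 m$). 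You also assert but do not verify the truncation-bias bound on $q_1$; the paper proves $q_1 \lesssim m^{-3}$, which suffices. Neither point changes the final conclusion, but they are worth patching before this could count as a complete proof.
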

\begin{remark}
	The sample size requirement can be further relaxed to $m\geq Cn\log n$ with a more careful treatment \citep{candes2015phase,ma2017implicit}.
	For the sake of conciseness, however, we do not strive to shave the log factors here.
\end{remark}

With the above bound in mind, we are ready to characterize the statistical accuracy of the spectral method for phase retrieval.

\begin{theorem}
\label{thm:PR-L2-performance}
Suppose the assumptions of Lemma~\ref{lemma:pr-noise-bound} hold, then with probability at least $1-O(m^{-10})$, the following holds
\[
	\min \{ \| \bm{x} - \bm{x}^{\star}\|_2, \| \bm{x} + \bm{x}^{\star}\|_2 \} \leq3c\sqrt{\frac{n\log^3 m}{m}}\|\bm{x}^{\star}\|_{2} .
\]
\end{theorem}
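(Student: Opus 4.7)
The plan is to combine Lemma~\ref{lemma:pr-noise-bound} with the Davis--Kahan theorem for the eigenvector direction and Weyl's inequality for the eigenvalue magnitude. First I would record the relevant spectral structure of $\bm{M}^{\star}$: by~\eqref{eq:mtx_LLN} one has $\bm{M}^{\star}=3\|\bm{x}^{\star}\|_{2}^{2}\,\bm{u}_{1}^{\star}\bm{u}_{1}^{\star\top}+\|\bm{x}^{\star}\|_{2}^{2}\bigl(\bm{I}_{n}-\bm{u}_{1}^{\star}\bm{u}_{1}^{\star\top}\bigr)$, so that $\lambda_{1}^{\star}=3\|\bm{x}^{\star}\|_{2}^{2}$ and $\lambda_{2}^{\star}=\cdots=\lambda_{n}^{\star}=\|\bm{x}^{\star}\|_{2}^{2}$, with an eigengap of $\lambda_{1}^{\star}-\lambda_{2}^{\star}=2\|\bm{x}^{\star}\|_{2}^{2}$. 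Writing $\varepsilon\coloneqq c\sqrt{n\log^{3}m/m}$, Lemma~\ref{lemma:pr-noise-bound} guarantees on an event of probability $1-O(m^{-10})$ that $\|\bm{M}-\bm{M}^{\star}\|\leq\varepsilon\|\bm{x}^{\star}\|_{2}^{2}\leq\tfrac{1}{10}\|\bm{x}^{\star}\|_{2}^{2}$.

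Next I would invoke the $r=1$ case of Corollary~\ref{cor:davis-kahan-conclusion-corollary}. Its hypothesis $\|\bm{M}-\bm{M}^{\star}\|<(1-1/\sqrt{2})(\lambda_{1}^{\star}-\lambda_{2}^{\star})$ is immediate since $\tfrac{1}{10}<2(1-1/\sqrt{2})$. Hence there exists $\alpha\in\{-1,+1\}$ with
\[
\|\alpha\bm{u}_{1}-\bm{u}_{1}^{\star}\|_{2}\;\leq\;\frac{2\|\bm{M}-\bm{M}^{\star}\|}{\lambda_{1}^{\star}-\lambda_{2}^{\star}}\;\leq\;\varepsilon.
\]
In parallel, Weyl's inequality (Lemma~\ref{lemma:weyl}) yields $|\lambda_{1}-\lambda_{1}^{\star}|\leq\|\bm{M}-\bm{M}^{\star}\|\leq\varepsilon\|\bm{x}^{\star}\|_{2}^{2}$. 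Applying the elementary identity $|\sqrt{a}-\sqrt{b}|=|a-b|/(\sqrt{a}+\sqrt{b})\leq|a-b|/\sqrt{b}$ with $a=\lambda_{1}/3$ and $b=\lambda_{1}^{\star}/3=\|\bm{x}^{\star}\|_{2}^{2}$, I obtain
\[
\bigl|\sqrt{\lambda_{1}/3}-\|\bm{x}^{\star}\|_{2}\bigr|\;\leq\;\tfrac{\varepsilon}{3}\|\bm{x}^{\star}\|_{2},
\]
and, in particular, $\sqrt{\lambda_{1}/3}\leq(1+\varepsilon/3)\|\bm{x}^{\star}\|_{2}$.

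Finally I would combine the two estimates through the decomposition
\[
\alpha\bm{x}-\bm{x}^{\star}\;=\;\sqrt{\lambda_{1}/3}\,(\alpha\bm{u}_{1}-\bm{u}_{1}^{\star})+\bigl(\sqrt{\lambda_{1}/3}-\|\bm{x}^{\star}\|_{2}\bigr)\bm{u}_{1}^{\star},
\]
which exploits $\bm{x}=\sqrt{\lambda_{1}/3}\,\bm{u}_{1}$ and $\bm{x}^{\star}=\|\bm{x}^{\star}\|_{2}\bm{u}_{1}^{\star}$. The triangle inequality together with the two bounds above gives
\[
\|\alpha\bm{x}-\bm{x}^{\star}\|_{2}\;\leq\;(1+\varepsilon/3)\|\bm{x}^{\star}\|_{2}\cdot\varepsilon+\tfrac{\varepsilon}{3}\|\bm{x}^{\star}\|_{2}\;\leq\;3\varepsilon\|\bm{x}^{\star}\|_{2},
\]
where the last inequality uses $\varepsilon\leq 1/10$. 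Since $\min\{\|\bm{x}-\bm{x}^{\star}\|_{2},\|\bm{x}+\bm{x}^{\star}\|_{2}\}\leq\|\alpha\bm{x}-\bm{x}^{\star}\|_{2}$, the advertised bound follows. The argument is essentially a clean application of the $\ell_{2}$ perturbation machinery of Chapter~\ref{cha:matrix-perturbation}; the only mild subtlety is transferring the eigenvalue perturbation through the square root to control $\sqrt{\lambda_{1}/3}-\|\bm{x}^{\star}\|_{2}$, which is handled by the elementary inequality above. The real work lies in Lemma~\ref{lemma:pr-noise-bound}, which is taken as given.
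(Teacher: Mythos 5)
Your proposal is correct and follows essentially the same route as the paper's proof: invoke Lemma~\ref{lemma:pr-noise-bound} together with Weyl's inequality for $|\lambda_{1}-\lambda_{1}^{\star}|$ and the Davis--Kahan theorem (Corollary~\ref{cor:davis-kahan-conclusion-corollary}) for $\mathsf{dist}(\bm{u}_{1},\bm{u}_{1}^{\star})$, then assemble the two via the triangle inequality. The paper groups the terms as $\bm{x}-\bm{x}^{\star}=\bigl(\sqrt{\lambda_{1}/3}-\|\bm{x}^{\star}\|_{2}\bigr)\bm{u}_{1}+\|\bm{x}^{\star}\|_{2}(\bm{u}_{1}-\bm{u}_{1}^{\star})$ whereas you group them symmetrically around $\bm{u}_{1}^{\star}$, but this is only a cosmetic rearrangement leading to the same bound.
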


As can be seen from Theorem~\ref{thm:PR-L2-performance}, when the number $m$ of measurements obeys $m\gg n\log^{3}m$, the relative accuracy of the spectral estimates (i.e., $\min \{ \| \bm{x} - \bm{x}^{\star}\|_2, \| \bm{x} + \bm{x}^{\star}\|_2 \}  / \|\bm{x}^{\star}\|_{2}$) becomes considerably smaller than $1$, thus indicating consistent estimation. This should be contrasted with the minimax lower bounds derived in the literature \citep{cai2015rop,eldar2014phase}, which assert that no estimator can achieve a vanishingly small relative estimation error if $m$ is orderwise smaller than $n$. All this corroborates the power of spectral methods for solving the phase retrieval problem.

\paragraph{Proof of Theorem~\ref{thm:PR-L2-performance}.}

Lemma~\ref{lemma:pr-noise-bound} and Weyl's inequality (see Lemma~\ref{lemma:weyl}) yield
\begin{align}
	 |\lambda_{1}-\lambda_{1}^{\star}|\leq\|\bm{M}-\bm{M}^{\star}\|\leq c\sqrt{\frac{n\log^3 m}{m}}\|\bm{x}^{\star}\|_{2}^{2}
	& \leq\|\bm{x}^{\star}\|_{2}^{2},\label{eq:pr-eig-pert}
\end{align}
As a result,  by using $\lambda_{1}^\star = 3\|\bm{x}^{\star}\|_{2}^{2}$, we have
\begin{equation}
	\lambda_{1} \geq \lambda_{1}^\star - \|\bm{x}^{\star}\|_{2}^{2} = 3\|\bm{x}^{\star}\|_{2}^{2}-\|\bm{x}^{\star}\|_{2}^{2} = 2\|\bm{x}^{\star}\|_{2}^{2}.
	\label{eq:pr-eig-lower-bound}
\end{equation}
In addition, note that $\lambda_{1}^{\star}=\lambda_{1}(\bm{M}^{\star})=3\|\bm{x}^{\star}\|_{2}^{2}$
and $\lambda_{i}(\bm{M}^{\star})=\|\bm{x}^{\star}\|_{2}^{2}$ for all $i\geq 2$. The
bound (\ref{eq:pr-noise-bound}) on $\bm{M}-\bm{M}^{\star}$ indicates that
\[
\|\bm{M}-\bm{M}^{\star}\|\leq(1-1/\sqrt{2})\big[ \lambda_{1}(\bm{M}^{\star})-\lambda_{2}(\bm{M}^{\star}) \big],
\]
which allows one to invoke the Davis-Kahan $\sin\bm{\Theta}$ theorem (cf.~Corollary~\ref{cor:davis-kahan-conclusion-corollary}) to obtain
\begin{align}
	\mathsf{dist}(\bm{u}_{1},\bm{u}_{1}^{\star})\leq\frac{2\|\bm{M}-\bm{M}^{\star}\|}{ \lambda_{1}(\bm{M}^{\star})-\lambda_{2}(\bm{M}^{\star}) }\leq2c\sqrt{\frac{n\log^3 m}{m}}.
	\label{eq:dist-u1-u1star-PR}
\end{align}
Without loss of generality, we shall assume $\|\bm{u}_{1}-\bm{u}_{1}^{\star}\|_{2}=\mathsf{dist}(\bm{u}_{1},\bm{u}_{1}^{\star})$ in the sequel.

Now we are ready to control our target quantity $\mathsf{dist}(\bm{x},\bm{x}^{\star})$.
In view of the definition (\ref{eq:init-PR}) of $\bm{x}$, one has
\begin{align}
\|\bm{x}-\bm{x}^{\star}\|_{2}
 & = \Big\| \sqrt{\lambda_{1}/3}\,\bm{u}_{1}-\|\bm{x}^{\star}\|_{2}\bm{u}_{1}^{\star} \Big\|_{2}\nonumber \\
 & \leq \Big\| \Big(\sqrt{\lambda_{1}/3}-\|\bm{x}^{\star}\|_{2} \Big)\,\bm{u}_{1} \Big\|_{2}+\|\bm{x}^{\star}\|_{2}\|\bm{u}_{1}-\bm{u}_{1}^{\star}\|_{2}\nonumber \\
 & \leq \Big|\sqrt{\lambda_{1}/3}-\|\bm{x}^{\star}\|_{2} \Big| + 2c \sqrt{\frac{n\log^3 m}{m}}\|\bm{x}^{\star}\|_{2}.\label{eq:pr-first-step}
\end{align}
Here, the second line applies the triangle inequality, and the last line
arises from the facts $\|\bm{u}_{1}\|_{2}=1$ and 
\eqref{eq:dist-u1-u1star-PR}. 
It then boils down to controlling $|\sqrt{\lambda_{1}/3}-\|\bm{x}^{\star}\|_{2}|$,
for which (\ref{eq:pr-eig-pert}) and (\ref{eq:pr-eig-lower-bound})
prove useful. A little algebra reveals that
\begin{align}
	\Big|\sqrt{\lambda_{1}/3}\,-\|\bm{x}^{\star}\|_{2} \Big|
	=\frac{1}{\sqrt{3}}\frac{\big|\lambda_{1}-3\|\bm{x}^{\star}\|_{2}^{2} \big|}{\sqrt{\lambda_{1}}+\sqrt{3}\|\bm{x}^{\star}\|_{2}}
	\leq c\sqrt{\frac{n\log^3 m}{m}}\|\bm{x}^{\star}\|_{2},
	\label{eq:pr-second-step}
\end{align}
where the last relation relies on the bounds (\ref{eq:pr-eig-pert})
and (\ref{eq:pr-eig-lower-bound}).

Taking collectively (\ref{eq:pr-first-step}) and (\ref{eq:pr-second-step})
concludes the proof.

\subsection{Extensions}
\label{sec:improved-spectral-truncated}

The spectral algorithm described in Section~\ref{sec:pr-alg}, while enjoying appealing statistical guarantees, is improvable in multiple aspects.
In this subsection, we briefly discuss two central issues:  sample efficiency and robustness against outliers.

\subsubsection{Improving sample efficiency}

Thus far, the spectral algorithm we have discussed requires the sample size to exceed $m \gtrsim n  \log^3 m$.
While this can be improved to $m \gtrsim n \log n$ via tighter analysis \citep{candes2015phase,ma2017implicit},
it remains suboptimal due to the presence of the log factor.
What happens in the sample-starved regime where the sample size $m$ is on the same order as the number $n$ of unknowns?
Is it possible to achieve the information-theoretic sampling limit for this problem?
As it turns out, in order to attain the desired statistical accuracy in the sample-starved regime,
we have to modify the standard recipe by applying appropriate preprocessing steps before forming the data matrix $\bm{M}$, as we shall explain momentarily.


\paragraph{Why is the algorithm in Section~\ref{sec:pr-alg} suboptimal?}

Before introducing the improved spectral algorithm, we take a closer look at the lower bound of the approximation error $\|\bm{M} - \bm{M}^\star\|$ for the sample-starved regime.  
Clearly,
\begin{equation*}
	\norm{\bm{M}} \ge \frac{\va_j^\top \bm{M} \va_j}{\| \va_j \|_2^2}
	=  \frac{1}{m} \sum_{i=1}^m y_i \frac{(\va_i^\top \va_j)^2}{\| \va_j \|_2^2} \geq \frac{1}{m}  y_j  \| \va_j \|_2^2
\end{equation*}
holds for any $1\leq j\leq m$. Taking $j = i^\ast  \coloneqq \arg \max_i  y_i$, we obtain
\begin{equation}
	\label{eq:bound_id2}
	\norm{\bm{M}} \ge \frac{(\max_i y_i) \, \| \va_{i^\ast} \|_2^2}{m}.
\end{equation}
Under the i.i.d.~Gaussian design, $\set{y_i / \| \bm{x}^{\star} \|_2^2}_{1\leq i\leq m}$ forms a collection of i.i.d.~$\chi^2$ random variables with 1 degree of freedom. Classical  Gaussian concentration results \citep{Ferguson:1996,vershynin2016high} tell us that
\begin{align*}
	\max_{1\le i\le m}\,y_{i}  =(2+o(1))\|\bm{x}^{\star}\|_{2}^{2}\log m,\quad
	\|\bm{a}_{i}\|_{2}^{2}  = (1+o(1)) n, ~~ 1 \leq i\leq m
\end{align*}
with probability approaching one as $n$ grows, as long as $m=\mathrm{poly}(n)$.
Substitution into \eqref{eq:bound_id2} implies that
\begin{equation*}
	\norm{\bm{M}} \ge \big(2+o(1)\big)  \frac{ n \log m}{  m} \|\bm{x}^{\star}\|_2^2  \gg \|\bm{x}^{\star}\|_2^2
\end{equation*}
once $m\ll n \log m$, which combined with \eqref{eq:mtx_LLN} further yields
\[
	\norm{\bm{M} - \bm{M}^{\star}} \geq \norm{\bm{M}} - \norm{\bm{M}^{\star}}
	= \norm{\bm{M}} - 3\|\bm{x}^{\star}\|_2^2 \gg \norm{\bm{M}^{\star}}.
\]
In other words, the deviation between $\bm{M}$ and $\bm{M}^{\star}$ is not as well-controlled as desired in the regime with $m\ll n \log m$, and hence classical matrix perturbation theory (e.g., the Davis-Kahan theorem) does not support the use of the spectral algorithm based on $\bm{M}$ in this case.

\paragraph{Spectral methods with data preprocessing.}

The above diagnosis  suggests a natural remedy:
since the culprit lies in the  large influence $\max_i y_i$  has brought to bear on the leading eigenvector,
it is advisable to downweight the effect of any excessively large $y_i$.
This is precisely the key idea behind the {\em truncated spectral method} proposed by \citet{chen2015solving}---as well as other variations proposed thereafter---that provably improves the sample efficiency of spectral methods.

More specifically, instead of using the matrix $\bm{M}$ constructed in \eqref{eq:D_PR_org},
we resort to a properly preprocessed data matrix
\begin{equation}
	\label{eq:D_PR_T}
	\bm{M}_{\mathcal{T}} \coloneqq \frac{1}{m} \sum_{i=1}^m \mathcal{T}(y_i) \,\va_i \va_i^\top
\end{equation}
with $\mathcal{T}$ some preprocessing function, and produce, by \eqref{eq3.43}, an estimate
\begin{equation}
	\label{eq:estimate-preprocess-PR}
	\bm{x}_{\mathcal{T}} = \Big( \frac{1}{m} \sum\nolimits_{i=1}^m y_i \Big)^{1/2} \, \bm{u}_{1,\mathcal{T}},
\end{equation}
where  $\bm{u}_{1,\mathcal{T}}$  denotes  the leading eigenvector of $\bm{M}_{\mathcal{T}}$.
A few representative examples of $\mathcal{T}$ are in order.
%
\begin{itemize}
	\item {\em mean-based truncation} \citep{chen2015solving}:
	\begin{equation}
		\label{eq:trimming}
		\mathcal{T}(y) \coloneqq y \mathbbm{1} \big\{ y \leq \alpha_1 \overline{y} \big\}, \quad
		\overline{y} \coloneqq \frac{1}{m} \sum\nolimits_{i=1}^m y_i,
	\end{equation}
	where $\alpha_1>0$ is some sufficiently large constant;

	\item {\em median-based truncation} \citep{zhang2016provable,zhang2018median}:
	\begin{equation}
		\label{eq:median-truncation}
		\mathcal{T}(y) \coloneqq y \mathbbm{1}\big\{ y \leq \alpha_2 y_{\mathsf{med}} \big\}, \quad
		y_{\mathsf{med}}  \coloneqq \mathsf{median}\big(\, \{ y_i \} \,\big),
	\end{equation}
	where $\alpha_2>0$ is some sufficiently large constant; 	
	
	\item {\em orthogonality promotion} \citep{wang2017solving,duchi2019solving}
	\begin{equation}
		\label{eq:TAF}
		\mathcal{T}(y) \coloneqq  \mathbbm{1}\big\{ y \geq y_{(\alpha_3 m)} \big\},
	\end{equation}
	where $y_{(1)}\geq y_{(2)}\geq \cdots \geq y_{(m)}$ denote the order statistics of $\{y_i\}$,
	and $0<\alpha_3<1$ is some properly chosen constant;

	\item {\em optimal preprocessing} \citep{mondelli2017fundamental,luo2019optimal}:
	\begin{equation}
		\label{eq:optimal-truncation}
		\mathcal{T}(y) \coloneqq \frac{y\,/\,\overline{y} -1}{ y\,/\,\overline{y} +\sqrt{2m/n}-1},\quad
		\overline{y} \coloneqq \frac{1}{m} \sum\nolimits_{i=1}^m y_i.
	\end{equation}
		
\end{itemize}
\begin{remark}
	To be more precise, $\mathcal{T}(y)$ depends not only on $y$ but also some statistics about $\{y_i\}$ (e.g., empirical mean).
	Here, we suppress the dependency on such additional statistics mainly to simplify notation.
\end{remark}
In words, the first two choices discard any measurement $y_i$ that is too large (compared to the order of either the empirical mean or empirical median),
the third one selects a subset of measurements that are most aligned with the unknown signal and scales their contributions to a measurement-invariant level,
while the last one effectively behaves as a shrinkage operator once $y_i$ rises above the empirical mean.
The following theorem---which was first established in \citet{chen2015solving} for the version \eqref{eq:trimming} and subsequently extended to other alternatives \citep{zhang2016provable,wang2017solving,lu2020phase,mondelli2017fundamental,luo2019optimal}---confirms the effectiveness and importance of proper preprocessing  in enabling order-optimal sample complexity.
The interested reader is referred to these papers for the proofs.
\begin{theorem}
	\label{thm:truncated-PR}
	Consider the settings in Section~\ref{sec:phase_retrieval}. Fix any constant $\varepsilon > 0$, and suppose  $m \ge c_0 n $ for some sufficiently large constant $c_0>0$ that is independent of $n$ and $m$ but possibly dependent on $\varepsilon$.  Then the spectral estimate \eqref{eq:estimate-preprocess-PR} equipped with the above choices of $\mathcal{T}$ obeys
\[
	\min \{ \| \bm{x}_{\mathcal{T}} - \bm{x}^{\star}\|_2, \| \bm{x}_{\mathcal{T}} + \bm{x}^{\star}\|_2 \} \le \varepsilon \,\| \bm{x}^{\star} \|_2
\]
with probability at least $1 - O(n^{-2})$, provided that the parameters $\alpha_1,\alpha_2,\alpha_3$ are suitably chosen in \eqref{eq:trimming}--\eqref{eq:TAF}.
\end{theorem}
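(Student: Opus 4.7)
}
The plan is to mimic the recipe used for Theorem~\ref{thm:PR-L2-performance}, but applied to the preprocessed data matrix $\bm{M}_{\mathcal{T}}$ in place of $\bm{M}$, so that the additional decay supplied by $\mathcal{T}$ lets matrix Bernstein go through at the optimal sample complexity $m\asymp n$. I will focus on the mean-based truncation \eqref{eq:trimming}; the other choices follow by essentially the same template with cosmetic changes to the scalar moment computations. The three objects to control are therefore: (i) the population matrix $\bm{M}_{\mathcal{T}}^{\star}\coloneqq \mathbb{E}[\bm{M}_{\mathcal{T}}]$ and its eigengap; (ii) the perturbation $\|\bm{M}_{\mathcal{T}}-\bm{M}_{\mathcal{T}}^{\star}\|$; and (iii) the scaling factor $\big(\tfrac{1}{m}\sum_i y_i\big)^{1/2}$ that converts the leading eigenvector into an estimate of $\bm{x}^{\star}$.

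For the population matrix, I first note that by the strong law of large numbers (together with a standard Gaussian concentration bound), $\overline{y}=(1+o(1))\|\bm{x}^{\star}\|_2^2$ with probability at least $1-O(n^{-2})$ once $m\gtrsim n$. On this high-probability event I replace the data-dependent threshold $\alpha_1\overline{y}$ by a deterministic surrogate $\tau\asymp\|\bm{x}^{\star}\|_2^2$ (with a short argument to control the residual incurred by this swap, since $|\overline{y}-\|\bm{x}^{\star}\|_2^2|$ is small), after which the summands $\mathcal{T}(y_i)\,\bm{a}_i\bm{a}_i^{\top}$ become i.i.d. By invariance under rotations that fix $\bm{x}^{\star}$, $\bm{M}_{\mathcal{T}}^{\star}$ must take the form $\mu_1\,\bm{u}_1^{\star}\bm{u}_1^{\star\top}+\mu_2(\bm{I}_n-\bm{u}_1^{\star}\bm{u}_1^{\star\top})$ with $\bm{u}_1^{\star}=\bm{x}^{\star}/\|\bm{x}^{\star}\|_2$, so I only need to compute two scalar expectations $\mu_1,\mu_2$. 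A direct Gaussian moment computation (using $y_i/\|\bm{x}^{\star}\|_2^2\sim\chi_1^2$ for the projection onto $\bm{u}_1^{\star}$, independent of the orthogonal components) shows that the resulting eigengap $\mu_1-\mu_2$ is of order $\|\bm{x}^{\star}\|_2^2$, with a nontrivial positive constant for any fixed $\alpha_1$ large enough.

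The perturbation step is the heart of the argument, and this is exactly where preprocessing buys us an order-optimal dependence on $m$. Conditioning on the aforementioned event, I may regard $\bm{X}_i\coloneqq \mathcal{T}(y_i)\bm{a}_i\bm{a}_i^{\top}$ as i.i.d. bounded above by $\|\bm{X}_i\|\le \alpha_1\overline{y}\,\|\bm{a}_i\|_2^2\lesssim n\|\bm{x}^{\star}\|_2^2$ with probability $1-O(n^{-10})$ (using Gaussian concentration of $\|\bm{a}_i\|_2^2$). For the variance statistic I bound
\[
\bigl\| \mathbb{E}[\mathcal{T}(y_i)^2 \|\bm{a}_i\|_2^2\,\bm{a}_i\bm{a}_i^{\top}]\bigr\|\lesssim n\|\bm{x}^{\star}\|_2^4,
\]
since the truncation keeps the relevant fourth-moment integrals of order $\|\bm{x}^{\star}\|_2^4$. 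Applying the (truncated) matrix Bernstein inequality (Corollary~\ref{thm:matrix-Bernstein-friendly-truncated}) yields
\[
\|\bm{M}_{\mathcal{T}}-\bm{M}_{\mathcal{T}}^{\star}\|\lesssim \|\bm{x}^{\star}\|_2^2\Bigl(\sqrt{\tfrac{n\log n}{m}}+\tfrac{n\log n}{m}\Bigr),
\]
which for $m\ge c_0 n$ with $c_0$ large enough (depending on $\varepsilon$) is at most $(1-1/\sqrt{2})(\mu_1-\mu_2)$. Corollary~\ref{cor:davis-kahan-conclusion-corollary} then delivers $\mathsf{dist}(\bm{u}_{1,\mathcal{T}},\bm{u}_1^{\star})\lesssim \varepsilon$.

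Finally, the scaling factor is handled by the scalar concentration $\big|\tfrac{1}{m}\sum_i y_i-\|\bm{x}^{\star}\|_2^2\big|\lesssim \|\bm{x}^{\star}\|_2^2\sqrt{n/m}$, so $\bigl|\bigl(\tfrac{1}{m}\sum_i y_i\bigr)^{1/2}-\|\bm{x}^{\star}\|_2\bigr|\lesssim \varepsilon\|\bm{x}^{\star}\|_2$. Assembling these bounds through the triangle inequality exactly as in the proof of Theorem~\ref{thm:PR-L2-performance} gives the claimed $\min\{\|\bm{x}_{\mathcal{T}}\pm\bm{x}^{\star}\|_2\}\le\varepsilon\|\bm{x}^{\star}\|_2$. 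The main obstacle, as I see it, is twofold: (a) the dependence between the truncation level $\overline{y}$ and the summands $\bm{a}_i\bm{a}_i^{\top}$, which I dispose of by first restricting to a high-probability event where $\overline{y}$ is pinned down and then replacing it by a deterministic surrogate; and (b) ensuring the eigengap of $\bm{M}_{\mathcal{T}}^{\star}$ is not accidentally closed by the truncation, which requires a careful (but elementary) choice of $\alpha_1$ in the scalar moment integrals. The alternative preprocessing maps \eqref{eq:median-truncation}--\eqref{eq:optimal-truncation} can be absorbed into the same skeleton, with the optimal weighting \eqref{eq:optimal-truncation} additionally making the resulting leading eigenvalue separate from the bulk at the information-theoretic limit.
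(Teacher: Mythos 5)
The paper does not actually prove Theorem~\ref{thm:truncated-PR}; it explicitly defers to \citet{chen2015solving,zhang2016provable,wang2017solving,lu2020phase,mondelli2017fundamental,luo2019optimal} for the proofs, so there is no in-paper argument to compare against. Evaluating your proposal on its own terms, the skeleton (compute the population matrix via rotational invariance, bound the perturbation, control the scalar scaling factor, then apply Davis--Kahan) is the right shape, and your handling of the data-dependent threshold $\alpha_1\overline{y}$ by pinning $\overline{y}$ on a high-probability event is a sensible device.

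However, there is a genuine gap at the perturbation step, and it is exactly the step that matters. Your matrix-Bernstein bound reads
\[
\|\bm{M}_{\mathcal{T}}-\bm{M}_{\mathcal{T}}^{\star}\|\lesssim \|\bm{x}^{\star}\|_2^2\Bigl(\sqrt{\tfrac{n\log n}{m}}+\tfrac{n\log n}{m}\Bigr),
\]
but you then assert this is $\le (1-1/\sqrt{2})(\mu_1-\mu_2)\asymp \|\bm{x}^{\star}\|_2^2$ whenever $m\ge c_0 n$ for a constant $c_0$ (depending only on $\varepsilon$). That does not follow: with $m\asymp n$ the right-hand side is of order $\sqrt{\log n}$, which diverges. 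Matrix Bernstein intrinsically carries the dimension factor (the $(n_1+n_2)$ prefactor in Theorem~\ref{thm:matrix-Bernstein-friendly-truncated}), and to drive the failure probability down to $O(n^{-2})$ you must pay a $\log n$; there is no suitable choice of $c_0$ that removes it. Your argument therefore establishes the theorem only under $m\gtrsim n\log n$, which still has a logarithmic gap to the claimed $m\gtrsim n$ — and closing that gap is precisely the content that truncation is supposed to deliver. The actual proofs in the cited works avoid matrix Bernstein entirely for this step: they bound $\sup_{\|\bm{u}\|_2=1}|\bm{u}^{\top}(\bm{M}_{\mathcal{T}}-\bm{M}_{\mathcal{T}}^{\star})\bm{u}|$ via an $\varepsilon$-net over the unit sphere, using the scalar sub-exponential concentration of $\mathcal{T}(y_i)(\bm{a}_i^{\top}\bm{u})^2$ (which the truncation makes uniformly bounded/sub-exponential with parameters of size $\|\bm{x}^{\star}\|_2^2$), and a union bound over a net of cardinality $e^{O(n)}$. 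This gives a deviation of order $\|\bm{x}^{\star}\|_2^2\sqrt{n/m}$ with exponentially small failure probability, with no $\log n$, and it is this net-plus-truncation combination — not matrix Bernstein — that makes $m\asymp n$ achievable. Your proposal, as written, does not contain this idea and hence cannot yield the stated sample complexity.
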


\begin{figure}
\begin{center}
	\begin{tabular}{c}
	\includegraphics[width=0.55\textwidth]{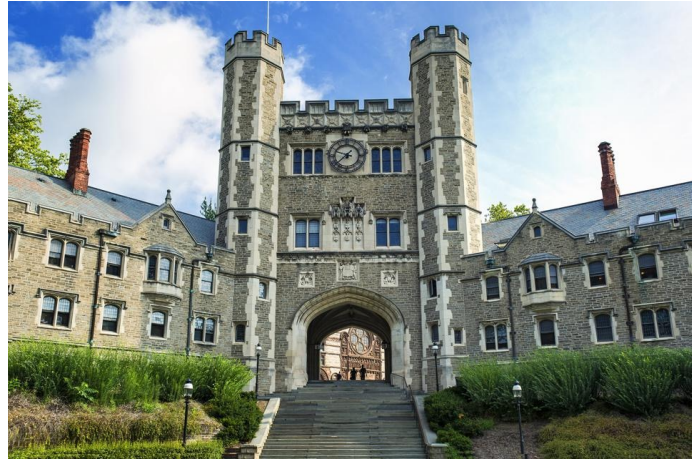}
	\tabularnewline
	(a)
	\tabularnewline
	\includegraphics[width=0.55\textwidth]{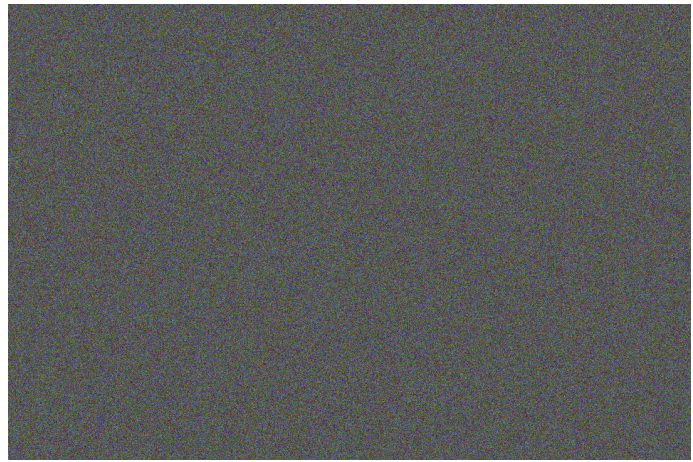}
	\tabularnewline
	(b)
	\tabularnewline
	\includegraphics[width=0.55\textwidth]{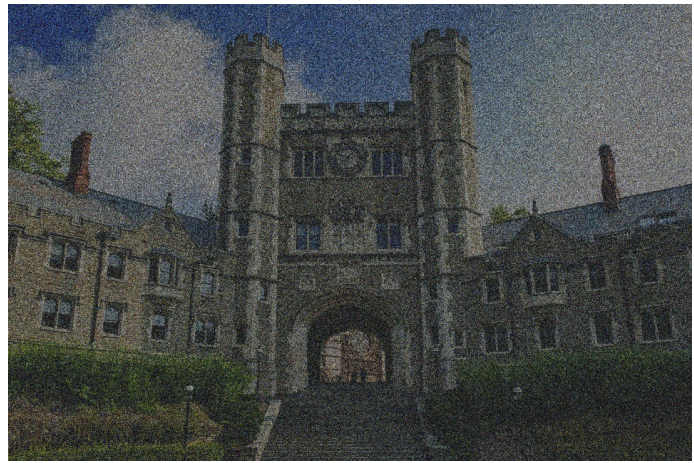}
	\tabularnewline
	(c)
	\tabularnewline
\end{tabular}
\end{center}
	\caption{Numerical performance of spectral methods for phase retrieval under coded diffraction patterns (see~\citet{candes2015phase} for details) when $m=10n$. (a) the original image $\bm{x}^{\star}$ (which is $613,760$-dimensional); (b) the estimate of the spectral method in Section~\ref{sec:pr-alg}; (c) the estimate of the mean-truncated spectral method (cf.~\eqref{eq:trimming}). 
	There are in total $10$ groups of measurements each of size $n$; to generate each group of measurements, the entries of the signal $\bm{x}^{\star}$ are first independently multiplied by random variables uniformly over $\{1,-1,i,-i\}$, followed by an application of the discrete Fourier transform. 
	\label{fig:phase_retrieval_twf}} 
\end{figure}

%

To demonstrate the practicability of preprocessing, we depict in
Figure~\ref{fig:phase_retrieval_twf} the numerical performance of  the mean-truncated spectral method (i.e., the choice \eqref{eq:trimming}) in comparison to the vanilla version described in Section~\ref{sec:pr-alg}.
These numerical experiments corroborate the clear advantage of proper preprocessing in the sample-starved regime.

Finally, we remark that in addition to order-wise statistical guarantees,  \citet{lu2020phase} further pinned down sharp characterization of the error bounds (including the pre-constants) in this sample-starved regime. Leveraging such sharp analyses,  \citet{mondelli2017fundamental} identified the information-theoretic optimal choice \eqref{eq:optimal-truncation}, in the sense that it leads to an estimate strictly better than a random guess (a.k.a.~weak recovery) whenever it is information-theoretically possible. \citet{luo2019optimal} further showed that this choice is uniformly optimal, meaning that it leads to the smallest principal angle between $\bm{x}_{\mathcal{T}}$ and $\bm{x}^{\star}$ uniformly over all sampling ratios when $m$ is on the same order of $n$.

\subsubsection{Robustness vis-\`a-vis adversarial outliers}

Another practical consideration that merits special attention is that
the collected samples are sometimes susceptible to adversarial entries
(due to, say, sensor failures or malicious attacks).
To formulate this in more formal terms, consider the following modified measurement model \citep{zhang2016provable,hand2017phaselift,hand2016corruption}:
\begin{equation}
	\label{eq:robust_pr_model}
	y_i = \begin{cases}
	(\ba_i^\top\bx^{\star})^2, \qquad  & i \notin \mathcal{S}_{\mathsf{outlier}}, \\
		\mathsf{arbitrary} ,\qquad & i \in \mathcal{S}_{\mathsf{outlier}}.
	\end{cases}
\end{equation}
Here, $\mathcal{S}_{\mathsf{outlier}} \subseteq \{1,\cdots,m\}$ represents the unknown subset of indices associated with outliers,
which is of cardinality  $|\mathcal{S}_{\mathsf{outlier}}|=\alpha  m$ for some $0<\alpha<1$.
In particular, the  measurements coming from $\mathcal{S}_{\mathsf{outlier}}$ might be corrupted arbitrarily.
The goal is to reliably estimate $\bm{x}^{\star}$ even when the measurements are grossly corrupted.

Unfortunately, the vanilla spectral method presented in Section~\ref{sec:pr-alg} might not function properly
even in the presence of a single outlier; for instance, if the magnitude of this outlier is excessively large, then the leading eigenvector of $\bm{M}$ will be heavily biased by this outlier.
As a result, the spectral method needs to be properly adjusted in order to combat  the adverse effect of outliers.

To circumvent this issue,  we first remind the readers of a classical finding in robust statistics \citep{huber2004robust}: the median statistic is oftentimes robust against adversarial outliers.
Leveraging this finding to the phase retrieval context,
one might naturally employ the median of the measurements $\{y_i\}_{1\leq i\leq m}$ as a tool to help detect any excessively large outlier. In fact, this is precisely the idea behind the median-truncated scheme presented
in \eqref{eq:median-truncation}, whose capability in dealing with outliers has been established in \citet{zhang2016provable} for phase retrieval and \citet{li2017nonconvex} for low-rank matrix recovery.
\begin{theorem}
	\label{thm:truncated-PR-init}
	Consider the measurement model in \eqref{eq:robust_pr_model},
	and the i.i.d.~Gaussian design in Assumption~\ref{assump:pr-gaussian}.
	Fix any  $\varepsilon > 0$. There exist some constants  $c_0>0$ and $0<c_1<1$ such that
	if  $m\geq c_{0}n$ and $\alpha\leq c_1$,
	then the spectral estimate \eqref{eq:estimate-preprocess-PR} equipped with \eqref{eq:median-truncation} obeys
\[
	\min \{ \| \bm{x}_{\mathcal{T}} - \bm{x}^{\star}\|_2, \| \bm{x}_{\mathcal{T}} + \bm{x}^{\star}\|_2 \}  \le \varepsilon \| \bm{x}^{\star} \|_2
\]
with probability at least $1 - O(n^{-2})$.
\end{theorem}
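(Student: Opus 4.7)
The plan is to apply the Davis--Kahan theorem to $\bm{M}_{\mathcal{T}}$ against an idealized rank-one-plus-identity reference matrix of the form $\beta_{1}\bm{x}^{\star}\bm{x}^{\star\top}+\beta_{2}\|\bm{x}^{\star}\|_{2}^{2}\bm{I}_{n}$ whose top eigenvector is $\bm{x}^{\star}/\|\bm{x}^{\star}\|_{2}$, and then convert the eigenvector bound into an estimation error guarantee along the lines of (\ref{eq:pr-first-step})--(\ref{eq:pr-second-step}). The cornerstone of the argument is the robustness of the median: even after replacing $\alpha m$ entries of $\{y_{i}\}$ arbitrarily, $y_{\mathsf{med}}$ can shift only across an $O(\alpha)$-range of quantiles of the clean distribution, so for any outlier fraction $\alpha\leq c_{1}$ with $c_{1}<1/2$ there exist absolute constants $0<c_\ell<c_h<\infty$ with
\[
c_{\ell}\|\bm{x}^{\star}\|_{2}^{2}\leq y_{\mathsf{med}}\leq c_{h}\|\bm{x}^{\star}\|_{2}^{2}
\]
holding with probability at least $1-O(n^{-2})$ as soon as $m\gtrsim n$. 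This step uses only the fact that $y_{i}/\|\bm{x}^{\star}\|_{2}^{2}\sim\chi_{1}^{2}$ on the clean indices and a standard empirical-quantile concentration bound.

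Next I would split $\bm{M}_{\mathcal{T}}=\bm{M}_{\mathcal{T}}^{\mathsf{clean}}+\bm{M}_{\mathcal{T}}^{\mathsf{out}}$ according to whether $i\in\mathcal{S}_{\mathsf{outlier}}$. For the clean piece, the high-probability two-sided control on $y_{\mathsf{med}}$ lets me sandwich $\mathcal{T}(y_{i})$ between $y_{i}\mathbbm{1}\{y_{i}\leq\alpha_{2}c_{\ell}\|\bm{x}^{\star}\|_{2}^{2}\}$ and $y_{i}\mathbbm{1}\{y_{i}\leq\alpha_{2}c_{h}\|\bm{x}^{\star}\|_{2}^{2}\}$, both of which are deterministic thresholds independent of the adversary. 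Each such deterministic-threshold sample covariance has expectation of the spiked form $\beta_{1}\bm{x}^{\star}\bm{x}^{\star\top}+\beta_{2}\|\bm{x}^{\star}\|_{2}^{2}\bm{I}_{n}$ with $\beta_{1},\beta_{2}>0$ constants depending only on $\alpha_{2}c_{\ell}$ and $\alpha_{2}c_{h}$; the truncated matrix Bernstein inequality (Theorem~\ref{thm:matrix-Bernstein}), applied to the bounded summands (each of norm $\lesssim\|\bm{x}^{\star}\|_{2}^{2}\cdot n$), delivers concentration of order $\sqrt{n/m}\,\|\bm{x}^{\star}\|_{2}^{2}$ around the expectation, which can be made smaller than any prescribed $\varepsilon'\|\bm{x}^{\star}\|_{2}^{2}$ by taking $c_{0}$ large.

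For the adversarial piece, the median-based truncation is the crucial mechanism: irrespective of how corrupted $y_{i}$ are chosen, $\mathcal{T}(y_{i})\leq\alpha_{2}y_{\mathsf{med}}\lesssim\|\bm{x}^{\star}\|_{2}^{2}$. Therefore
\[
\big\|\bm{M}_{\mathcal{T}}^{\mathsf{out}}\big\|\leq\frac{\alpha_{2}y_{\mathsf{med}}}{m}\Big\|\sum_{i\in\mathcal{S}_{\mathsf{outlier}}}\bm{a}_{i}\bm{a}_{i}^{\top}\Big\|\leq\frac{\alpha_{2}y_{\mathsf{med}}}{m}\Big\|\sum_{i=1}^{m}\bm{a}_{i}\bm{a}_{i}^{\top}\Big\|\lesssim\alpha_{2}c_{h}\|\bm{x}^{\star}\|_{2}^{2},
\]
using the deterministic dominance $\sum_{i\in\mathcal{S}}\bm{a}_{i}\bm{a}_{i}^{\top}\preceq\sum_{i=1}^{m}\bm{a}_{i}\bm{a}_{i}^{\top}$, together with the standard bound $\|\sum_{i}\bm{a}_{i}\bm{a}_{i}^{\top}\|\lesssim m$ for $m\gtrsim n$. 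A slightly sharper argument combining a net over subsets with Gaussian concentration yields the refined scaling $\lesssim\alpha\|\bm{x}^{\star}\|_{2}^{2}$, so that the outlier contribution can also be driven below any $\varepsilon''\|\bm{x}^{\star}\|_{2}^{2}$ by choosing $c_{1}$ small enough. Combining the two pieces and noting that the spiked reference matrix has eigengap $\beta_{1}\|\bm{x}^{\star}\|_{2}^{2}$, Corollary~\ref{cor:davis-kahan-conclusion-corollary} gives $\mathsf{dist}(\bm{u}_{1,\mathcal{T}},\bm{x}^{\star}/\|\bm{x}^{\star}\|_{2})\leq\varepsilon/3$.

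Finally, to close the estimation bound one needs a robust surrogate for the magnitude $\|\bm{x}^{\star}\|_{2}$: the empirical mean $\frac{1}{m}\sum_{i}y_{i}$ in (\ref{eq:estimate-preprocess-PR}) is not robust, so I would replace it by the median-based estimate $\sqrt{y_{\mathsf{med}}/q_{0.5}}$ where $q_{0.5}$ is the median of $\chi_{1}^{2}$; the first step shows this is $(1+o(1))\|\bm{x}^{\star}\|_{2}$. The triangle inequality manipulation mirroring (\ref{eq:pr-first-step}) then delivers the claimed $\varepsilon$-bound. The main obstacle is the adversarial coupling: $\mathcal{S}_{\mathsf{outlier}}$ and the corrupted values can depend on $\{\bm{a}_{i}\}$ and even on the resulting $y_{\mathsf{med}}$, breaking conditional independence. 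The sandwich argument circumvents this by using purely deterministic truncation thresholds on the high-probability event $y_{\mathsf{med}}\in[c_{\ell}\|\bm{x}^{\star}\|_{2}^{2},c_{h}\|\bm{x}^{\star}\|_{2}^{2}]$, and the outlier term is controlled by an adversary-free operator norm bound, so no joint distributional assumption on the corruption is ever invoked.
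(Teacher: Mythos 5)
Your outline is essentially the same architecture as the argument in \citet{zhang2016provable}, to which the monograph delegates this proof: (i) exploit the fact that replacing $\alpha m$ samples can shift the empirical median by only $O(\alpha)$-many quantile positions, so $y_{\mathsf{med}}$ stays sandwiched around the $\chi^{2}_{1}$ median of $\|\bm{x}^{\star}\|_{2}^{2}$; (ii) on that event replace the data-dependent threshold by deterministic brackets to restore conditional independence; (iii) control the clean part by matrix Bernstein with bounded summands; (iv) cap the adversarial part deterministically by $\alpha_{2}y_{\mathsf{med}}$; (v) finish with Davis--Kahan. Your observations about the coupling between $\mathcal{S}_{\mathsf{outlier}}$ and $\{\bm{a}_{i}\}$, and how the sandwich-by-deterministic-thresholds trick circumvents it, hit the crux. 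Two remarks.

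First, you are right that the magnitude factor in \eqref{eq:estimate-preprocess-PR} as stated, $(\frac{1}{m}\sum_{i}y_{i})^{1/2}$, is not robust: a single adversarial $y_{i}$ of size $\exp(n)$ makes it diverge, so the theorem as literally written would fail. Your fix---replacing it by $\sqrt{y_{\mathsf{med}}/q_{0.5}}$---is exactly what is done in the cited reference, and the monograph's statement here is a small imprecision inherited from the non-robust setting. It is good that you flagged this rather than trying to push through an impossible estimate.

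Second, your stated bound $\|\bm{M}_{\mathcal{T}}^{\mathsf{out}}\|\lesssim\alpha_{2}c_{h}\|\bm{x}^{\star}\|_{2}^{2}$ from the crude dominance $\sum_{i\in\mathcal{S}}\bm{a}_{i}\bm{a}_{i}^{\top}\preceq\sum_{i=1}^{m}\bm{a}_{i}\bm{a}_{i}^{\top}$ gives only an $O(1)\|\bm{x}^{\star}\|_{2}^{2}$ bound, which is on the same order as the eigengap and therefore useless in Davis--Kahan; the refinement you gesture at is genuinely necessary, not optional. Executing it correctly requires a union bound over subsets of size $\alpha m$, and the resulting rate is
\[
\sup_{|\mathcal{S}|\leq \alpha m}\Big\|\frac{1}{m}\sum_{i\in\mathcal{S}}\bm{a}_{i}\bm{a}_{i}^{\top}\Big\|\lesssim \alpha\log(1/\alpha)+\frac{n}{m},
\]
which carries an extra $\log(1/\alpha)$ beyond the $\lesssim\alpha$ you claimed. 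This does not harm the conclusion, since $\alpha\log(1/\alpha)\to0$ as $\alpha\to0$ and so the term can still be driven below any $\varepsilon''$ by shrinking $c_{1}$; but a referee would flag the missing logarithm, and the union-over-subsets argument needs to be written out rather than deferred to a ``slightly sharper argument.'' Beyond these two points the proof plan is sound.
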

In a nutshell, Theorem~\ref{thm:truncated-PR-init} reveals that a median-truncated spectral method  achieves consistent estimation even when a constant fraction of the measurements are corrupted in an arbitrary manner.  All this is guaranteed to happen even when the number $m$ of samples is on the same order as $n$,  thus further enhancing the resilience of spectral methods in the presence of adversarial corruptions. The interested reader is referred to \citet{zhang2016provable} for the proof of this theorem.

\subsection{Proof of auxiliary lemmas}
\label{sec:proof-auxiliary-lemmas-PR-L2}

\paragraph{Proof of Lemma~\ref{lemma:pr-noise-bound}.}
Given that $\{\bm{a}_{i}\}$ is rotationally invariant,  we  assume without
loss of generality that $\bm{x}^{\star}=\bm{e}_{1}$, where
$\bm{e}_{1}$ is the first standard basis vector. Thus, our task can be translated into bounding
\[
	\bm{M}-\bm{M}^{\star} =  \frac{1}{m}\sum_{i=1}^{m} \Big\{ a_{i,1}^{2}\bm{a}_{i}\bm{a}_{i}^{\top}-\big(2\bm{e}_{1}\bm{e}_{1}^{\top}+\bm{I}_{n}\big) \Big\}
	\eqqcolon \frac{1}{m}\sum_{i=1}^{m} \Big( \bm{B}_i - \mathbb{E}\big[\bm{B}_i \big] \Big),
\]
where $a_{i,1}$ denotes the first entry of the vector $\bm{a}_i$, and $\bm{B}_i\coloneqq a_{i,1}^{2}\bm{a}_{i}\bm{a}_{i}^{\top}$.


In order to deal with the unboundedness of Gaussian random variables,
we resort to the truncated matrix Bernstein inequality, which requires
us to first set a proper truncation level. In view of the Gaussianity
of $\bm{a}_{i}$ and the union bound, one has $\|\bm{a}_{i}\|_{\infty}\leq 5\sqrt{\log m}$ for all $1\leq i\leq m$
with probability at least $1-m^{-11.5}$; on this event, one would have
%
\begin{equation}
\label{eq:Bi-norm-early}
\big\|\bm{B}_{i}\big\|=\big|a_{i,1}\big|^{2}\big\|\bm{a}_{i}\big\|_{2}^{2}\leq n\big\|\bm{a}_{i}\big\|_{\infty}^{4}\leq5^4 n\log^{2}m .
\end{equation}
%
Therefore, taking
$L\coloneqq 5^4 n\log^{2}m$ leads to
\[
\mathbb{P}\big\{\big\|\bm{B}_{i}\big\|\geq L\big\}\leq m^{-11.5}\eqqcolon q_{0}.
\]
Further, truncating at this level does not incur much bias; to be precise, we claim that (with the proof deferred to the end of this subsection)
\begin{align}
	q_{1} & \coloneqq\big\|\mathbb{E}\big[\bm{B}_{i}\mathbbm1\{\|\bm{B}_{i}\|<L\}-\mathbb{E}[\bm{B}_{i}]\big]\big\| \lesssim m^{-3}.
	\label{eq:q1-bound-UB-PR}
\end{align}

The next step is to characterize the variance statistic. Towards
this end, it is first seen from the definition of $\bm{B}_i$ that
\begin{equation}
\mathbb{E}\big[\left(\bm{B}_{i}-\mathbb{E}[\bm{B}_{i}]\right)^{2}\big]\preceq\mathbb{E}[\bm{B}_{i}^{2}]
=\mathbb{E}\big[a_{i,1}^{4}\big\|\bm{a}_{i}\big\|_{2}^{2}\bm{a}_{i}\bm{a}_{i}^{\top}\big].\label{eq:E-Bi-square-PR}
\end{equation}
As can be easily verified, $\mathbb{E}\big[a_{i,1}^{4}\big\|\bm{a}_{i}\big\|_{2}^{2}\bm{a}_{i}\bm{a}_{i}^{\top}\big]$
is a diagonal matrix, whose diagonal entries obey
\[
\Big(\mathbb{E}\big[a_{i,1}^{4}\big\|\bm{a}_{i}\big\|_{2}^{2}\bm{a}_{i}\bm{a}_{i}^{\top}\big]\Big)_{l,l}=\mathbb{E}\big[a_{i,1}^{4}a_{i,l}^{2}\sum\nolimits _{j}a_{i,j}^{2}\big]=\sum\nolimits _{j}\mathbb{E}\big[a_{i,1}^{4}a_{i,l}^{2}a_{i,j}^{2}\big]\lesssim n
\]
for any $1\leq l\leq n$, where the last relation follows from the
property of standard Gaussians. This taken together with
(\ref{eq:E-Bi-square-PR}) gives
\[
v\coloneqq\Big\|\sum_{i}\mathbb{E}\big[\left(\bm{B}_{i}-\mathbb{E}[\bm{B}_{i}]\right)^{2}\big]  \Big \|\leq \sum_{i} \max _{l}\Big|\Big(\mathbb{E}\big[a_{i,1}^{4}\big\|\bm{a}_{i}\big\|_{2}^{2}\bm{a}_{i}\bm{a}_{i}^{\top}\big]\Big)_{l,l}\Big|\lesssim mn.
\]
%
%
Invoking the truncated Bernstein inequality
in Corollary \ref{thm:matrix-Bernstein-friendly-truncated} then yields: with probability at least $1-O(m^{-10})-mq_0=1-O(m^{-10})$, one has
\begin{align*}
\big\|\bm{M}-\bm{M}^{\star}\big\| & \lesssim\frac{1}{m}\sqrt{v\log m}+\frac{L}{m}\log m + \frac{mq_1}{m} \\
 & \lesssim\sqrt{\frac{n\log m}{m}}+\frac{n\log^{3}m}{m} +\frac{1}{m^3} \lesssim\sqrt{\frac{n\log^{3}m}{m}}
\end{align*}
as desired, with the proviso that $m\gtrsim n\log^{3}m$.

\begin{proof}[Proof of the inequality~\eqref{eq:q1-bound-UB-PR}] We begin by employing the relation \eqref{eq:Bi-norm-early} to help modify the truncation event as follows:
\begin{align*}
q_{1} & = \big\|\mathbb{E}\big[\bm{B}_{i}\mathbbm1\{\|\bm{B}_{i}\|<L\}-\mathbb{E}[\bm{B}_{i}]\big]\big\|
=\big\|\mathbb{E}\big[\bm{B}_{i}\mathbbm1\{\|\bm{B}_{i}\|\geq L\}\big]\big\|\\
 & \leq\big\|\mathbb{E}\big[\bm{B}_{i}\mathbbm1\{n\|\bm{a}_{i}\|_{\infty}^{4}\geq L\}\big]\big\|
  =\big\|\mathbb{E}\big[\bm{B}_{i}\mathbbm1\{\|\bm{a}_{i}\|_{\infty}\geq\overline{L}\}\big]\big\|,
\end{align*}
where
we define $\overline{L}\coloneqq(L/n)^{1/4}=5 \sqrt{\log m}$. It
is easily seen that $\mathbb{E}\big[\bm{B}_{i}\mathbbm1\{\|\bm{a}_{i}\|_{\infty}\geq\overline{L}\}\big]$
is a diagonal matrix and, therefore,
\begin{align}
 q_{1}  & \leq \max_l \Big|\Big(\mathbb{E}\big[\bm{B}_{i}\mathbbm1\{\|\bm{a}_{i}\|_{\infty}\geq\overline{L}\}\big]\Big)_{l,l}\Big|
	\overset{\mathrm{(i)}}{=} \max_l  \mathbb{E}\big[a_{i,1}^{2}a_{i,l}^{2}\mathbbm1\{\|\bm{a}_{i}\|_{\infty}\geq\overline{L}\}\big] \nonumber\\
	& \overset{\mathrm{(ii)}}{\leq} 0.5 \max_l  \mathbb{E}\big[(a_{i,1}^{4}+a_{i,l}^{4})\mathbbm1\{\|\bm{a}_{i}\|_{\infty}\geq\overline{L}\}\big]
   = \mathbb{E}\big[a_{i,1}^{4}\mathbbm1\{\|\bm{a}_{i}\|_{\infty}\geq\overline{L}\}\big] \nonumber\\
 & \leq \mathbb{E}\big[a_{i,1}^{4}\mathbbm1\{\big|a_{i,1}\big|\geq\overline{L}\}\big]+ \mathbb{E}\big[a_{i,1}^{4}\mathbbm1\{\max\nolimits_{j\neq1}\big|a_{i,j}\big|\geq\overline{L}\}\big],
\label{eq:q1-UB-PR-1234}
\end{align}
where (i) relies on the definition of $\bm{B}_i$, and (ii) comes from the AM-GM inequality. With regards to the first term of \eqref{eq:q1-UB-PR-1234}, observe that
\begin{align*}
	& \mathbb{E}\big[a_{i,1}^{4}\mathbbm{1}\{|a_{i,1}|\geq 5\sqrt{\log m}\}\big]
		=\int_{5\sqrt{\log m}}^{\infty}\frac{2\xi^{4}}{\sqrt{2\pi}}e^{-\xi^{2}/2} \mathrm{d}\xi \\
	& \quad \leq2\int_{5\sqrt{\log m}}^{\infty}\frac{e^{-\xi^{2}/4}}{\sqrt{2\pi}} \mathrm{d}\xi=2\mathbb{P}\big\{|a_{i,1}|\geq2.5\sqrt{\log m} \big\}\lesssim\frac{1}{m^3}
\end{align*}
for $m$ sufficiently large,
where we have used the fact that $\xi^{4}e^{-\xi^{2}/2}\leq e^{-\xi^{2}/4}$
for $\xi\geq5\sqrt{\log m}$. Regarding the second term of \eqref{eq:q1-UB-PR-1234}, note that
\begin{align*}
	& \mathbb{E}\Big[a_{i,1}^{4}\mathbbm1\Big\{\max_{j\neq1}\big|a_{i,j}\big|\geq\overline{L}\Big\}\Big]  =\mathbb{E}\big[a_{i,1}^{4}\big]\mathbb{E}\Big[\mathbbm1\Big\{\max_{j\neq1}\big|a_{i,j}\big|\geq\overline{L}\Big\}\Big]\\
 	& \qquad\qquad =3\mathbb{P}\Big\{\max_{j\neq1}\big|a_{i,j}\big|\geq5\sqrt{\log m}\Big\}\lesssim m^{-10},
\end{align*}
where the first identity uses the independence between $a_{i,1}$ and $\{a_{i,j}\}_{j\neq 1}$.
Substituting the preceding bounds into \eqref{eq:q1-UB-PR-1234} establishes \eqref{eq:q1-bound-UB-PR}. \end{proof}

\section{Matrix completion}\label{sec:mc-l2}

A pressing challenge often encountered in data science applications is estimation and learning in the face of \emph{missing data}.
To elucidate how to tackle this challenge via spectral methods, we delve into the renowned matrix completion problem in this section,
followed by another application called tensor completion in Section~\ref{sec:tensor-completion}.

Imagine that one observes a small subset of the entries in a large unknown matrix
and seeks to fill in all missing entries. An archetypal example is collaborative filtering, where one aims to predict the users' preferences on a collection of products based on partially revealed user-product ratings.   See Figure~\ref{fig:completion} for an illustration.  The problem, often referred to as matrix completion, is apparently ill-posed in general, as there are (much) fewer measurements than the unknowns.


Fortunately, if the matrix of interest exhibits certain low-dimensional structure, then reliable recovery becomes feasible.
A commonly encountered example of this kind concerns the case when the target matrix enjoys a low-rank structure. Again, take collaborative filtering for example:  the user-product rating matrix  might be well explained by a relatively small number of latent factors connecting users' preferences with products' attributes, thus resulting in an approximately low-rank matrix.  Motivated by its fundamental importance,  recent years have witnessed a flurry of research activity in studying low-rank matrix completion \citep{candes2009exact, keshavan2010matrix, gross2011recovering};
see \citet{chen2018harnessing,davenport2016overview} for overviews of recent developments.
In the sequel, we present a simple yet effective approach enabled by the spectral method, originally proposed in~\citet{achlioptas2007fast,keshavan2010matrix}.

\begin{figure}
\begin{center}
\includegraphics[width=0.35\textwidth]{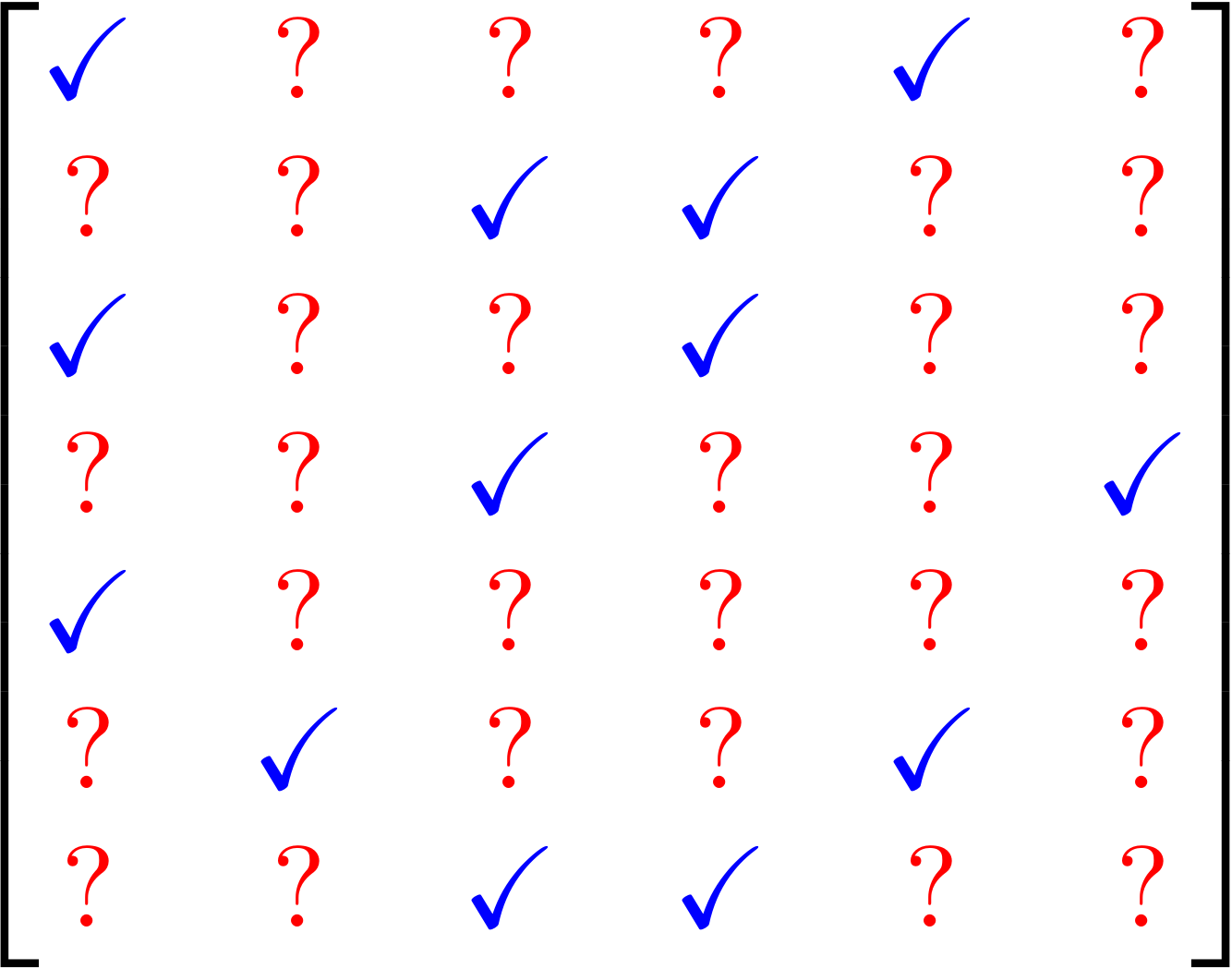}
\end{center}
\caption{Illustration of matrix completion, where each ``$\checkmark$'' stands for an observed entry and each ``$?$'' represents a missing entry.} \label{fig:completion}
\end{figure}

\subsection{Problem formulation and assumptions}
\label{sec:problem-formulation-MC}

Suppose that we are interested in estimating an $n_{1}\times n_{2}$  rank-$r$ matrix  $\bm{M}^{\star} = [{M}^{\star}_{i,j}]_{1\leq i\leq n_1, 1\leq j\leq n_2}$.
Without loss of generality, we assume
$$n_{1}\leq n_{2}.$$  Denote by  $\bm{M}^{\star}=\bm{U}^{\star}\bm{\Sigma}^{\star}\bm{V}^{\star\top}$ the SVD of $\bm{M}^{\star}$, where the columns of $\bm{U}^{\star}\in \mathbb{R}^{n_1\times r}$ (resp.~$\bm{V}^{\star}\in \mathbb{R}^{n_2\times r}$) are the left (resp.~right) singular vectors of $\bm{M}^{\star}$, and $\bm{\Sigma}^{\star}$ is a diagonal matrix whose diagonal entries are the singular values of $\bm{M}^{\star}$. We define the condition number of the matrix $\bm{M}^{\star}$ to be $\kappa \coloneqq \sigma_1(\bM^\star)/\sigma_r(\bM^\star)$.

To capture the presence of missing data, we introduce an index subset $\Omega\subseteq [n_1]\times [n_2]$,
such that each entry ${M}^{\star}_{i,j}$ is observed if and only if $(i,j)\in \Omega$.
The goal is to reconstruct the singular subspaces $\bm{U}^{\star}$ and $\bm{V}^{\star}$, as well as the  full matrix $\bm{M}^{\star}$, based on entries observed over the sampling set $\Omega$.

\paragraph{Random sampling.}
Apparently, not all sampling patterns admit reliable estimation. For instance, if $\Omega$ contains only entries in the top half of the matrix,
then there is in general no hope to predict the bottom half of the matrix. In order to allow for meaningful matrix completion,
this monograph focuses on a natural random observation model commonly adopted in the literature, as formulated below.
\begin{assumption}[Random sampling]
\label{assump:mc-bernoulli}
Each entry of $\bm{M}^{\star}$ is observed independently with probability $0<p<1$, namely,
each $(i,j)\in [ n_{1}] \times [n_{2}]$ is included in $\Omega$ independently with probability $p$.
\end{assumption}
Under this model, we shall view the expected number of observed entries---namely, $pn_1n_2$---as the sample size. In truth, as long as $p$ is not overly small,  the number of observed entries is expected to concentrate  around its mean $pn_1n_2$.

\paragraph{Incoherence conditions.}
Caution needs to be exercised, however,  that the random sampling model alone does not guarantee effective recovery of an arbitrary low-rank matrix $\bm{M}^{\star}$.
Consider, for example, the following rank-1 matrix $\bm{M}^{\star}$ containing a single nonzero entry:
\[
\left[\begin{array}{cccc}
1 & 0 & \cdots & 0\\
0 & 0 & \cdots & 0\\
\vdots & \vdots & \ddots & \vdots\\
0 & 0 & \cdots & 0
\end{array}\right] .
\]
If $p=o(1)$, then with probability $1-p=1-o(1)$, the sampling pattern will fail to include the nonzero entry $M_{1,1}^{\star}$,
thus ruling out the possibility of faithful matrix recovery.
Consequently,  one needs to make sure that the sampling pattern does not suppress too much useful information.
Towards this end, the pioneering work \citet{candes2009exact, candes2010NearOptimalMC} singled out an incoherence parameter that plays a vital role.
\begin{definition}
\label{assump:mc-incoherence}
The incoherence parameter $\mu$ of the matrix $\bm{M}^{\star}\in\mathbb{R}^{n_{1}\times n_{2}}$ is defined as
\begin{align*}
	\mu\coloneqq \max\left\{  \frac{ n_1 \|\bm{U}^{\star}\|_{2,\infty}^2 }{r} ,  \frac{ n_2 \|\bm{V}^{\star}\|_{2,\infty}^2 }{r} \right\} .
\end{align*}
\end{definition}
\begin{remark}
\label{remark:range-mu-MC}
Recognizing the following basic relation
\[
	\frac{r}{n_1}=\frac{1}{n_1} \|\bm{U}^{\star}\|_{\mathrm{F}}^2 \leq \|\bm{U}^{\star}\|_{2,\infty}^2 \leq \|\bm{U}^{\star}\|^2 = 1
\]
and an analogous one for $\bm{V}^{\star}$, we have $1\leq \mu \leq \max\{n_1,n_2\}/r= n_2/r$.
\end{remark}

In words, a small $\mu$ indicates that the energy of the singular vectors is spread out across different elements, namely,  the singular subspace of $\bm{M}^{\star}$ is not too ``aligned'' with any of the standard basis vectors,
thus ensuring that entrywise observations provide somewhat equalized information about the full spectrum of $\bm{M}^{\star}$.
The following lemma summarizes a few immediate consequences of this definition, with the proof deferred to Section~\ref{sec:proof-auxiliary-lemmas-MC-L2}.
\begin{lemma}\label{claim:mu-incoherence}
Assume that  $\bm{M}^{\star}\in\mathbb{R}^{n_{1}\times n_{2}}$ is $\mu$-incoherent. Then the following relations hold
\begin{subequations}
\label{subeq:mu-incoherence}
\begin{align}
	\|\bm{M}^{\star}\|_{2,\infty} & \leq\sqrt{\mu r/n_{1}}\, \big\| \bm{M}^{\star} \big\|; \quad
	\|\bm{M}^{\star\top}\|_{2,\infty}  \leq\sqrt{\mu r/n_{2}}\, \big\| \bm{M}^{\star} \big\|;\label{eq:mc-row-norm-upper}\\
	& \qquad\quad \|\bm{M}^{\star}\|_{\infty}  \leq\mu r  \big\| \bm{M}^{\star} \big\| /\sqrt{n_{1}n_{2}} . \label{eq:mc-entry-upper}
\end{align}
\end{subequations}
\end{lemma}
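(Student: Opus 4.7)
The plan is to exploit the SVD factorization $\bm{M}^{\star}=\bm{U}^{\star}\bm{\Sigma}^{\star}\bm{V}^{\star\top}$ and translate the incoherence hypothesis into row-norm bounds on $\bm{U}^{\star}$ and $\bm{V}^{\star}$. Directly from Definition~\ref{assump:mc-incoherence}, we have $\|\bm{U}^{\star}\|_{2,\infty}\le\sqrt{\mu r/n_{1}}$ and $\|\bm{V}^{\star}\|_{2,\infty}\le\sqrt{\mu r/n_{2}}$, which are the two basic ingredients I will use throughout.

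For the first inequality, the idea is to write each row of $\bm{M}^{\star}$ as $\bm{e}_i^{\top}\bm{M}^{\star}=(\bm{U}^{\star\top}\bm{e}_i)^{\top}\bm{\Sigma}^{\star}\bm{V}^{\star\top}$ and then apply a sub-multiplicative bound:
\[
\|\bm{e}_i^{\top}\bm{M}^{\star}\|_2 \le \|\bm{U}^{\star\top}\bm{e}_i\|_2\,\|\bm{\Sigma}^{\star}\bm{V}^{\star\top}\|.
\]
Because $\bm{V}^{\star}$ has orthonormal columns, $\|\bm{\Sigma}^{\star}\bm{V}^{\star\top}\|=\|\bm{\Sigma}^{\star}\|=\|\bm{M}^{\star}\|$, and $\|\bm{U}^{\star\top}\bm{e}_i\|_2\le\|\bm{U}^{\star}\|_{2,\infty}\le\sqrt{\mu r/n_{1}}$. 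Taking the maximum over $i$ gives the stated bound on $\|\bm{M}^{\star}\|_{2,\infty}$. The second inequality follows by the same argument applied to $\bm{M}^{\star\top}=\bm{V}^{\star}\bm{\Sigma}^{\star}\bm{U}^{\star\top}$, swapping the roles of $\bm{U}^{\star}$ and $\bm{V}^{\star}$.

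For the entrywise bound \eqref{eq:mc-entry-upper}, I would sandwich both sides of the SVD with standard basis vectors: $M^{\star}_{i,j}=\bm{e}_i^{\top}\bm{U}^{\star}\bm{\Sigma}^{\star}\bm{V}^{\star\top}\bm{e}_j$. Applying Cauchy--Schwarz (or the sub-multiplicative property $|\bm{x}^{\top}\bm{A}\bm{y}|\le\|\bm{x}\|_2\|\bm{A}\|\|\bm{y}\|_2$) yields
\[
|M^{\star}_{i,j}|\le\|\bm{U}^{\star\top}\bm{e}_i\|_2\,\|\bm{\Sigma}^{\star}\|\,\|\bm{V}^{\star\top}\bm{e}_j\|_2\le\sqrt{\tfrac{\mu r}{n_1}}\cdot\|\bm{M}^{\star}\|\cdot\sqrt{\tfrac{\mu r}{n_2}}=\frac{\mu r\,\|\bm{M}^{\star}\|}{\sqrt{n_1 n_2}},
\]
and maximizing over $(i,j)$ finishes the proof. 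There is no real obstacle here; the entire argument is just unpacking the SVD and using unitary invariance of the spectral norm together with the definition of $\mu$. The only thing to double-check is that the bound uses $\|\bm{M}^{\star}\|=\|\bm{\Sigma}^{\star}\|$ (the top singular value), not, say, the Frobenius norm—this is immediate from the SVD but worth stating explicitly so the constants match.
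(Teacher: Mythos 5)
Your proof is correct and follows the same route as the paper: factor through the SVD, use $\|\bm{\Sigma}^{\star}\|=\|\bm{M}^{\star}\|$, and apply the incoherence bounds on the rows of $\bm{U}^{\star}$ and $\bm{V}^{\star}$. The only cosmetic difference is that you unpack the matrix inequalities $\|\bm{A}\bm{B}\|_{2,\infty}\le\|\bm{A}\|_{2,\infty}\|\bm{B}\|$ and $\|\bm{A}\bm{B}^{\top}\|_{\infty}\le\|\bm{A}\|_{2,\infty}\|\bm{B}\|_{2,\infty}$ into explicit row/entry computations with standard basis vectors, whereas the paper invokes them directly.
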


\paragraph{Additional notation.}

We find it convenient to introduce a Euclidean projection operator $\mathcal{P}_{\Omega}:\mathbb{R}^{n_{1}\times n_{2}}\mapsto\mathbb{R}^{n_{1}\times n_{2}}$ such that
\begin{equation}\label{defn:Pomega}
\big[\mathcal{P}_{\Omega}(\bm{A}) \big]_{i,j}=\begin{cases}
A_{i,j},\quad & \text{if }(i,j)\in\Omega \\
0, & \text{else}
\end{cases}
\end{equation}
for any matrix $\bm{A} =[A_{i,j}]\in\mathbb{R}^{n_{1}\times n_{2}}$.
With this notation in place, matrix completion amounts to recovering $\bm{M}^{\star}$ on the basis of $\mathcal{P}_{\Omega}(\bm{M}^{\star})$.

\subsection{Algorithm}\label{sec:mc-alg}

To apply the spectral method, the first step is to form a reasonable approximation $\bm{M}$ of the unknown matrix $\bm{M}^{\star}$.
By virtue of the random sampling model (cf.~Assumption~\ref{assump:mc-bernoulli}), a candidate approximation can be obtained from the observed data matrix via inverse probability weighting:
\begin{align}
	\bm{M}   \coloneqq p^{-1}\mathcal{P}_{\Omega}(\bm{M}^{\star}).
	\label{eq:rescaled-M-matrix-completion-l2}
\end{align}
The rationale is that $\bm{M}$ forms an unbiased estimate of the ground truth, namely, $$\mathbb{E}[\bm{M}]=\bm{M}^{\star},$$ where
the expectation is taken over the randomness in $\Omega$.

As a result, the proposed spectral method proceeds by computing the rank-$r$ SVD $\bm{U}\bm{\Sigma}\bm{V}^{\top}$ of the matrix $\bm{M}$ constructed in \eqref{eq:rescaled-M-matrix-completion-l2}, and employing $\bm{U}\in \mathbb{R}^{n_1\times r}$, $\bm{V}\in \mathbb{R}^{n_2\times r}$  and $\bm{U}\bm{\Sigma}\bm{V}^{\top}$ as estimates of $\bm{U}^{\star}$, $\bm{V}^{\star}$ and $\bm{M}^{\star}$, respectively.

\subsection{Performance guarantees}\label{sec:mc-l2-theory}

As before, whether the subspace $\bm{U}$ (resp.~$\bm{V}$) is close
to $\bm{U}^{\star}$ (resp.~$\bm{V}^{\star}$) relies crucially on
the size of the perturbation $\|\bm{M}-\bm{M}^{\star}\|$.
Therefore, we begin by developing an upper bound on this quantity; the proof is based on the matrix Bernstein inequality and is postponed to Section~\ref{sec:proof-auxiliary-lemmas-MC-L2}.

\begin{lemma}
\label{lemma:mc-noise-bound}
Consider the settings in Section~\ref{sec:problem-formulation-MC}. Suppose that $n_{2}p\geq C\mu r\log n_{2}$ for some
 constant $C>0$. Then with probability at least
	$1-O(n_{2}^{-10})$, the matrix $\bm{M}$ constructed in \eqref{eq:rescaled-M-matrix-completion-l2} obeys
\[
	\big\| \bm{M} -\bm{M}^{\star} \big\|
	\lesssim\sqrt{\frac{\mu r\log n_{2}}{n_{1}p}}\, \big\| \bm{M}^{\star} \big\|.
\]
\end{lemma}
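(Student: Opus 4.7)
}
My plan is to write $\bm{M}-\bm{M}^{\star}$ as a sum of independent, zero-mean random matrices indexed by the entries of $\bm{M}^{\star}$, and then invoke the matrix Bernstein inequality (Corollary~\ref{thm:matrix-Bernstein-friendly}). Concretely, letting $\delta_{i,j} \sim \mathrm{Bern}(p)$ denote the independent indicators of $(i,j)\in\Omega$, I would decompose
\[
	\bm{M}-\bm{M}^{\star} = \sum_{1\le i\le n_1,\,1\le j\le n_2} \bm{X}_{i,j}, \qquad \bm{X}_{i,j} \coloneqq \big(p^{-1}\delta_{i,j}-1\big)\, M^{\star}_{i,j}\, \bm{e}_i \bm{e}_j^{\top}.
\]
These are clearly independent with $\mathbb{E}[\bm{X}_{i,j}]=\bm{0}$, so the matrix Bernstein machinery applies once we pin down a uniform bound $L$ on $\|\bm{X}_{i,j}\|$ and the variance statistic $v$.

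For the uniform bound, since $|p^{-1}\delta_{i,j}-1|\le p^{-1}$, I would use Lemma~\ref{claim:mu-incoherence}, specifically the entrywise bound~\eqref{eq:mc-entry-upper}, to get
\[
	\|\bm{X}_{i,j}\| \le p^{-1} \|\bm{M}^{\star}\|_{\infty} \le \frac{\mu r\|\bm{M}^{\star}\|}{p\sqrt{n_1 n_2}} \eqqcolon L.
\]
For the variance statistic, a direct computation gives $\mathbb{E}[\bm{X}_{i,j}\bm{X}_{i,j}^{\top}] = \frac{1-p}{p} (M^{\star}_{i,j})^2\,\bm{e}_i\bm{e}_i^{\top}$, so
\[
	\Big\|\sum_{i,j}\mathbb{E}[\bm{X}_{i,j}\bm{X}_{i,j}^{\top}]\Big\| \le p^{-1} \max_i \sum_j (M^{\star}_{i,j})^2 = p^{-1}\|\bm{M}^{\star}\|_{2,\infty}^2,
\]
and analogously for the other ordering. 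Invoking~\eqref{eq:mc-row-norm-upper} then yields
\[
	v \le \frac{1}{p}\max\Big\{\frac{\mu r}{n_1},\frac{\mu r}{n_2}\Big\}\|\bm{M}^{\star}\|^2 = \frac{\mu r\|\bm{M}^{\star}\|^2}{n_1 p},
\]
where the last equality uses $n_1\le n_2$.

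Plugging these into Corollary~\ref{thm:matrix-Bernstein-friendly} (with $n=n_2$) gives, with probability at least $1-O(n_2^{-10})$,
\[
	\|\bm{M}-\bm{M}^{\star}\| \lesssim \sqrt{v\log n_2} + L\log n_2 \lesssim \|\bm{M}^{\star}\|\sqrt{\frac{\mu r\log n_2}{n_1 p}} + \|\bm{M}^{\star}\|\frac{\mu r\log n_2}{p\sqrt{n_1 n_2}}.
\]
The final step is to verify that under the assumption $n_2 p \gtrsim \mu r\log n_2$ the second (``Bernstein tail'') term is dominated by the first (``sub-Gaussian'') term; squaring and simplifying shows this is equivalent to exactly that assumption. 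I do not anticipate any serious obstacle here: the main care is only in correctly tracking the $n_1$ versus $n_2$ asymmetry when bounding the two variance proxies via the incoherence lemma, and in confirming that the truncation-free Bernstein bound suffices (the $\bm{X}_{i,j}$ are already deterministically bounded, so no truncation step is needed).
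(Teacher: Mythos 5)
Your proposal is correct and follows essentially the same route as the paper's own proof: the same entrywise decomposition into independent rank-one terms $\bm{X}_{i,j}=(p^{-1}\delta_{i,j}-1)M^{\star}_{i,j}\bm{e}_i\bm{e}_j^{\top}$, the same bounds $L\le \frac{\mu r\|\bm{M}^{\star}\|}{p\sqrt{n_1n_2}}$ and $v\le \frac{\mu r\|\bm{M}^{\star}\|^2}{n_1p}$ via Lemma~\ref{claim:mu-incoherence}, and the same application of Corollary~\ref{thm:matrix-Bernstein-friendly} with the tail term absorbed under $n_2p\gtrsim \mu r\log n_2$. No gaps.
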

With this perturbation bound in place, we are equipped to apply Wedin's $\sin\bm{\Theta}$ theorem to obtain the following results.  The condition on the sample size in Theorem~\ref{thm:mc-l2-subspace} is stronger than that in Lemma~\ref{lemma:mc-noise-bound}, as we need to control the eigengap in the following theorem.
\begin{theorem}
\label{thm:mc-l2-subspace}
Consider the settings in Section~\ref{sec:problem-formulation-MC}.
Suppose that $n_{1}p\geq C_{1}\kappa^{2}\mu r\log n_{2}$ for some sufficiently
large constant $C_{1}>0$. Then with probability exceeding $1-O(n_2^{-10})$,
\begin{align*}
\max\Big\{ \mathsf{dist}\left(\bm{U},\bm{U}^{\star}\right),\mathsf{dist}\left(\bm{V},\bm{V}^{\star}\right)\Big\}  & \lesssim\kappa\sqrt{\frac{\mu r \log n_{2}}{n_{1}p}} .
\end{align*}
\end{theorem}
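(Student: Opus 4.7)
The plan is to reduce the theorem to a direct application of Wedin's $\sin\bm{\Theta}$ theorem (Theorem~\ref{thm:wedin}, or more conveniently the simplified corollary \eqref{eq:wedin-simpler-friendly}), with the perturbation size $\|\bm{E}\| = \|\bm{M} - \bm{M}^{\star}\|$ supplied by Lemma~\ref{lemma:mc-noise-bound}. Since $\bm{M}^{\star}$ has rank $r$, its $(r+1)$-th singular value vanishes, so the relevant singular-value gap is simply $\sigma_{r}(\bm{M}^{\star}) - \sigma_{r+1}(\bm{M}^{\star}) = \sigma_{r}^{\star}$; and since $\|\bm{M}^{\star}\| = \sigma_{1}^{\star} = \kappa\,\sigma_{r}^{\star}$, Lemma~\ref{lemma:mc-noise-bound} translates into
\[
\|\bm{M} - \bm{M}^{\star}\| \;\lesssim\; \kappa\sqrt{\frac{\mu r \log n_{2}}{n_{1}p}}\,\sigma_{r}^{\star}
\]
with probability at least $1 - O(n_{2}^{-10})$, provided that $n_{2}p \gtrsim \mu r \log n_{2}$ (a condition weaker than the one assumed in the theorem, and hence automatically satisfied).

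The next step will be to verify that the assumption $\|\bm{E}\| < (1 - 1/\sqrt{2})(\sigma_{r}^{\star} - \sigma_{r+1}^{\star})$ required by \eqref{eq:wedin-simpler-friendly} holds under the sample-size condition $n_{1}p \geq C_{1} \kappa^{2} \mu r \log n_{2}$. Indeed, the displayed bound above yields $\|\bm{E}\|/\sigma_{r}^{\star} \lesssim \kappa\sqrt{\mu r \log n_{2}/(n_{1}p)}$, which is bounded by $1 - 1/\sqrt{2}$ once $C_{1}$ is large enough. This is essentially the only place where the $\kappa^{2}$ factor in the sample complexity gets consumed.

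With Wedin's theorem now applicable, invoking \eqref{eq:wedin-simpler-friendly} delivers
\[
\max\bigl\{\mathsf{dist}(\bm{U},\bm{U}^{\star}),\; \mathsf{dist}(\bm{V},\bm{V}^{\star})\bigr\}
\;\leq\; \frac{2\|\bm{E}\|}{\sigma_{r}^{\star} - \sigma_{r+1}^{\star}}
\;\lesssim\; \kappa\sqrt{\frac{\mu r \log n_{2}}{n_{1}p}},
\]
which is precisely the claimed bound. So the only real work is absorbed into Lemma~\ref{lemma:mc-noise-bound}; the present theorem itself is a bookkeeping calculation that couples the spectral-norm perturbation bound with Wedin, while carefully tracking how $\|\bm{M}^{\star}\| = \kappa\sigma_{r}^{\star}$ propagates through both the eigengap condition and the final error rate. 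The ``hard part'' is therefore not in this proof but in Lemma~\ref{lemma:mc-noise-bound}, where incoherence and the truncated matrix Bernstein inequality must be combined to beat the naive $\|\bm{M}\|_{\infty}$-based bound; by the time we reach this statement, that work has already been done.
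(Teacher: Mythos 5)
Your proof is correct and follows essentially the same route as the paper's: bound $\|\bm{M}-\bm{M}^{\star}\|$ via Lemma~\ref{lemma:mc-noise-bound}, note that the rank-$r$ structure makes the singular-value gap equal to $\sigma_r^{\star}$, check the eigengap condition (which is where the extra $\kappa^2$ in the sample complexity is used), and then invoke Wedin's theorem via \eqref{eq:wedin-simpler-friendly}. Your observation that Lemma~\ref{lemma:mc-noise-bound}'s weaker hypothesis is automatically implied is a correct bookkeeping detail the paper leaves implicit.
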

\begin{proof}
As a direct consequence of Lemma \ref{lemma:mc-noise-bound}, one has
\[
	\big\| \bm{M} -\bm{M}^{\star} \big\|
	\lesssim\sqrt{\frac{\mu r\log n_{2}}{n_{1}p}} \,\big\| \bm{M}^{\star}\big\| \le \Big( 1 - \frac{1}{\sqrt{2}} \Big) \sigma_{r}(\bm{M}^{\star}),
\]
provided that $n_{1}p\geq C_{1}\kappa^{2}\mu r\log n_{2}$ for some  large enough constant $C_{1}>0$.
Apply Wedin's theorem (cf.~\eqref{eq:wedin-simpler-friendly}) and Lemma \ref{lemma:mc-noise-bound} to obtain
\begin{align*}
 & \max\Big\{ \mathsf{dist}\left(\bm{U},\bm{U}^{\star}\right),\mathsf{dist}\left(\bm{V},\bm{V}^{\star}\right) \Big\}
	\leq  \frac{2\big\|\bm{M} -\bm{M}^{\star}\big\|}{\sigma_{r}(\bm{M}^{\star})}
	\lesssim \kappa\sqrt{\frac{\mu r\log n_{2}}{n_{1}p}}
\end{align*}
as claimed.
%
\end{proof}

As an important implication of Theorem~\ref{thm:mc-l2-subspace}, once the sample size exceeds
$$pn_1n_2 \gg \kappa^{2}\mu r n_2  \log n_{2}, $$
then the spectral estimate achieves consistent estimation in the sense that
\[
	\max\Big\{ \mathsf{dist}\left(\bm{U},\bm{U}^{\star}\right),\mathsf{dist}\left(\bm{V},\bm{V}^{\star}\right)\Big\} = o(1).
\]
Given that $pn_1n_2\gtrsim \mu n_2 r \log n_2$ is an information-theoretic sampling requirement for reliable matrix completion when $r=o(n_1/\log n_2)$ \citep{candes2010NearOptimalMC},
Theorem~\ref{thm:mc-l2-subspace} confirms the near optimality of spectral methods---in terms of the scaling with $n_1$, $n_2$ and $p$---when it comes to consistent subspace estimation.

Before moving forward to the proof, we further characterize the statistical accuracy of $\bm{U}\bm{\Sigma}\bm{V}^{\top}$ in estimating the unknown matrix $\bm{M}^{\star}$. Accomplishing this only requires Lemma~\ref{lemma:mc-noise-bound}, without any need of the singular subspace perturbation theory. This result will also come in handy when we turn to discussing entrywise estimation accuracy in Chapter~\ref{cha:Linf-theory}.
\begin{theorem}
\label{thm:mc-l2-matrix}
Consider the settings in Section~\ref{sec:problem-formulation-MC}.
Suppose that $n_{2}p\geq C\mu r\log n_{2}$ for some sufficiently large constant
$C>0$. Then with probability at least $1-O(n_{2}^{-10})$, one has
\begin{align*}
\|\bm{U}\bm{\Sigma}\bm{V}^{\top}-\bm{M}^{\star}\|_{\mathrm{F}} & \lesssim\sqrt{\frac{\mu r^{2}\log n_{2}}{n_{1}p}}\, \big\| \bm{M}^{\star} \big\|.
\end{align*}
\end{theorem}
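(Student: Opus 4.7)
The plan is to mirror the strategy used for matrix denoising in Section~\ref{sec:performance-denoising-L2} (see \eqref{eq:ULambdaU-Mstar-norm-denoising} and \eqref{eq:top-r-approx-Fro-bound}), since structurally the situation is identical: we are approximating a rank-$r$ target matrix by the best rank-$r$ truncation of a noisy observation. The key observation is that $\bm{U}\bm{\Sigma}\bm{V}^{\top} - \bm{M}^{\star}$ has rank at most $2r$, so the Frobenius norm can be controlled by $\sqrt{2r}$ times the spectral norm. Thus the whole theorem will reduce to bounding $\|\bm{U}\bm{\Sigma}\bm{V}^{\top} - \bm{M}^{\star}\|$ in spectral norm.

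To control the spectral norm, I would apply the triangle inequality
\[
\big\|\bm{U}\bm{\Sigma}\bm{V}^{\top} - \bm{M}^{\star}\big\| \;\leq\; \big\|\bm{U}\bm{\Sigma}\bm{V}^{\top} - \bm{M}\big\| + \big\|\bm{M} - \bm{M}^{\star}\big\|.
\]
Since $\bm{U}\bm{\Sigma}\bm{V}^{\top}$ is by construction the best rank-$r$ approximation of $\bm{M}$, the first term equals $\sigma_{r+1}(\bm{M})$. Because $\bm{M}^{\star}$ has rank $r$, Weyl's inequality for singular values (Lemma~\ref{lemma:weyls-singular-value}) yields
\[
\sigma_{r+1}(\bm{M}) \;\leq\; \sigma_{r+1}(\bm{M}^{\star}) + \|\bm{M}-\bm{M}^{\star}\| \;=\; \|\bm{M}-\bm{M}^{\star}\|,
\]
so $\|\bm{U}\bm{\Sigma}\bm{V}^{\top} - \bm{M}^{\star}\| \leq 2\|\bm{M}-\bm{M}^{\star}\|$.

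Combining the rank bound with the spectral norm estimate gives
\[
\big\|\bm{U}\bm{\Sigma}\bm{V}^{\top} - \bm{M}^{\star}\big\|_{\mathrm{F}} \;\leq\; \sqrt{2r}\,\big\|\bm{U}\bm{\Sigma}\bm{V}^{\top} - \bm{M}^{\star}\big\| \;\leq\; 2\sqrt{2r}\,\|\bm{M}-\bm{M}^{\star}\|.
\]
Finally, under the sample complexity assumption $n_2 p \geq C\mu r \log n_2$, Lemma~\ref{lemma:mc-noise-bound} supplies the spectral norm bound $\|\bm{M}-\bm{M}^{\star}\| \lesssim \sqrt{\mu r \log n_2 / (n_1 p)}\,\|\bm{M}^{\star}\|$ with probability at least $1-O(n_2^{-10})$. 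Substituting this in produces the advertised rate $\sqrt{\mu r^2 \log n_2 / (n_1 p)}\,\|\bm{M}^{\star}\|$. There is no real obstacle here; the only ingredient that requires nontrivial work is Lemma~\ref{lemma:mc-noise-bound} itself (which is proved elsewhere via matrix Bernstein), so the theorem is essentially a two-line corollary of that spectral norm bound together with the generic ``rank-$2r$ plus Weyl'' argument.
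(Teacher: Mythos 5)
Your proof is correct and follows essentially the same route as the paper: triangle inequality, bound $\|\bm{U}\bm{\Sigma}\bm{V}^{\top}-\bm{M}\|\leq\|\bm{M}-\bm{M}^{\star}\|$, invoke the rank-$2r$ structure to pass to Frobenius norm, and finish with Lemma~\ref{lemma:mc-noise-bound}. The only cosmetic difference is that you justify the middle inequality via $\sigma_{r+1}(\bm{M})$ and Weyl, whereas the paper observes directly that $\bm{M}^{\star}$ is itself a feasible rank-$r$ competitor in the best rank-$r$ approximation problem; both are one-line arguments and entirely equivalent.
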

\begin{proof}
First, note
\begin{align*}
\|\bm{U}\bm{\Sigma}\bm{V}^{\top}-\bm{M}^{\star}\| &\leq\|\bm{U}\bm{\Sigma}\bm{V}^{\top}- \bm{M} \|+\|\bm{M} -\bm{M}^{\star}\| \leq2\|\bm{M}-\bm{M}^{\star}\|,
\end{align*}
where the first inequality comes from the triangle inequality, and the second inequality follows from the fact that $\bm{U}\bm{\Sigma}\bm{V}^{\top}$ is the best rank-$r$ approximation to $\bm{M}$,
i.e.,
$$\|\bm{U}\bm{\Sigma}\bm{V}^{\top}-\bm{M}\|=\min_{\bm{Z}: \mathsf{rank}(\bm{Z})\leq r}\|\bm{Z}-\bm{M}\| \leq \|\bm{M}-\bm{M}^{\star}\|. $$
Additionally, it is observed that $\bm{U}\bm{\Sigma}\bm{V}^{\top}-\bm{M}^{\star}$ has rank at most $2r$, which implies
\[
	\|\bm{U}\bm{\Sigma}\bm{V}^{\top}-\bm{M}^{\star}\|_{\mathrm{F}}\leq\sqrt{2r}\|\bm{U}\bm{\Sigma}\bm{V}^{\top}-\bm{M}^{\star}\| \leq 2\sqrt{2r} \big\|\bm{M} - \bm{M}^{\star} \big\|.
\]
This combined with Lemma~\ref{lemma:mc-noise-bound} immediately concludes the proof. \end{proof}

\subsection{Proof of auxiliary lemmas}
\label{sec:proof-auxiliary-lemmas-MC-L2}

\paragraph{Proof of Lemma~\ref{claim:mu-incoherence}.}
First of all, the  $\|\cdot\|_{2,\infty}$ norm of $\bm{M}^{\star}$
can be upper bounded by
\begin{equation*}
\|\bm{M}^{\star}\|_{2,\infty}=\|\bm{U}^{\star}\bm{\Sigma}^{\star}\bm{V}^{\star\top}\|_{2,\infty}\leq\|\bm{U}^{\star}\|_{2,\infty}\|\bm{\Sigma}^{\star}\| \, \|\bm{V}^{\star}\|\leq\sqrt{\frac{\mu r}{n_{1}}} \big\| \bm{M}^{\star} \big\|.
\end{equation*}
Here, the first inequality arises from the elementary bounds
$\|\bm{A}\bm{B}\|_{2,\infty}\leq\|\bm{A}\|_{2,\infty}\|\bm{B}\|$
and $\|\bm{A}\bm{B}\|\leq\|\bm{A}\|\, \|\bm{B}\|$, whereas the last relation
uses Definition~\ref{assump:mc-incoherence}, the orthonormality
of $\bm{V}^{\star}$, and identifies $\|\bm{\Sigma}^{\star}\|$ with
$\| \bm{M}^{\star} \|$. The  bound on $\|\bm{M}^{\star\top}\|_{2,\infty}$ can be derived analogously and is omitted for brevity.

In addition, the matrix $\bm{M}^{\star}$ is elementwise bounded by
\begin{equation*}
\|\bm{M}^{\star}\|_{\infty}=\|\bm{U}^{\star}\bm{\Sigma}^{\star}\bm{V}^{\star\top}\|_{\infty}\leq\|\bm{U}^{\star}\|_{2,\infty}\|\bm{V}^{\star}\|_{2,\infty} \|\bm{\Sigma}^{\star}\|
	\leq\frac{\mu r}{\sqrt{n_{1}n_{2}}} \big\| \bm{M}^{\star} \big\|.
\end{equation*}
Here, the first inequality follows from the fact $\|\bm{A}\bm{B}^{\top}\|_{\infty}\leq\|\bm{A}\|_{2,\infty}\|\bm{B}\|_{2,\infty}$
and the aforementioned one $\|\bm{A}\bm{B}\|_{2,\infty}\leq\|\bm{A}\|_{2,\infty}\|\bm{B}\|$,
while the last inequality again relies on Definition~\ref{assump:mc-incoherence}.

\paragraph{Proof of Lemma~\ref{lemma:mc-noise-bound}.} Note that the matrix $\bm{E}\coloneqq p^{-1}\mathcal{P}_{\Omega}(\bm{M}^{\star})-\bm{M}^{\star}$
can be expressed as the sum of $n_{1}n_{2}$ i.i.d.~random matrices
\[
  \frac{1}{p}  \mathcal{P}_{\Omega}(\bm{M}^{\star})-\bm{M}^{\star}=\sum_{i=1}^{n_{1}}\sum_{j=1}^{n_{2}}\underbrace{\big(p^{-1} \delta_{i,j} -1 \big)M_{i,j}^{\star}\bm{e}_{i}\bm{e}_{j}^{\top}}_{\eqqcolon\bm{X}_{i,j}}.
\]
Here, $\delta_{i,j}$ (which indicates whether the $(i,j)$-th entry is observed) follows an independent Bernoulli distribution
with parameter~$p$, and $\bm{e}_{i}$ stands for the $i$-th standard
basis vector of appropriate dimensions. It is easily seen that for each $(i,j)$,
\[
	\mathbb{E}[\bm{X}_{i,j}]=\bm{0}\quad\text{and}\quad\|\bm{X}_{i,j}\|\leq  \frac{1}{p} \|\bm{M}^{\star}\|_{\infty}\leq\frac{\mu r}{p\sqrt{n_{1}n_{2}}} \big\| \bm{M}^{\star} \big\|,
\]
where the last relation results from the entrywise upper bound (\ref{eq:mc-entry-upper})
on $\bm{M}^{\star}$. In order to apply the matrix Bernstein inequality (cf.~Corollary~\ref{thm:matrix-Bernstein-friendly}),
we need to control the variance statistic
\[
	v\coloneqq\max \Big\{ \Big\Vert \sum\nolimits _{i,j}\mathbb{E}\left[\bm{X}_{i,j}\bm{X}_{i,j}^{\top}\right]\Big\Vert ,\Big\Vert \sum\nolimits _{i,j}\mathbb{E}\left[\bm{X}_{i,j}^{\top}\bm{X}_{i,j}\right] \Big\Vert \Big\} .
\]
Regarding the first variance term, we have
\begin{align*}
\sum\nolimits _{i,j}\mathbb{E}\left[\bm{X}_{i,j}\bm{X}_{i,j}^{\top}\right]
	& =\sum\nolimits _{i,j}\mathbb{E}\left[ \big( p^{-1} \delta_{i,j} -1 \big)^{2}(M_{i,j}^{\star})^{2}\bm{e}_{i}\bm{e}_{j}^{\top}\bm{e}_{j}\bm{e}_{i}^{\top}\right]\\
 & =\frac{1-p}{p}\sum\nolimits _{i,j}\big(M_{i,j}^{\star}\big)^{2}\bm{e}_{i}\bm{e}_{i}^{\top}=\frac{1-p}{p}\sum_{i=1}^{n_{1}}\|\bm{M}_{i,\cdot}^{\star}\|_{2}^{2}\bm{e}_{i}\bm{e}_{i}^{\top} \\
	& \preceq \frac{1-p}{p}  \|\bm{M}^{\star}\|_{2,\infty}^{2} \bm{I}_{n_1} \preceq \frac{\mu r}{n_1p}  \|\bm{M}^{\star}\|^{2} \,\bm{I}_{n_1} .
\end{align*}
Here, the first identity arises from the definition of $\bm{X}_{i,j}$,
the second one calculates the variance of Bernoulli random variables,
and the last line relies on the upper bound~(\ref{eq:mc-row-norm-upper}).
Similarly, the second term in the variance statistic enjoys the following characterization:
\[
	\sum\nolimits _{i,j}\mathbb{E}\left[\bm{X}_{i,j}^{\top}\bm{X}_{i,j}\right] \preceq \frac{\mu r}{n_2p} \|\bm{M}^{\star}\|^{2} \,\bm{I}_{n_2}.
\]
Taking the above  relations together and recalling that $n_1\leq n_2$ give
\begin{align*}
v & 
	\leq\frac{\mu r}{n_{1}p} \big\| \bm{M}^{\star} \big\|^2.
\end{align*}

With the above bounds in place, invoking matrix Bernstein (see Corollary~\ref{thm:matrix-Bernstein-friendly}) reveals that: with probability at least $1-O(n_2^{-10})$,
\begin{align*}
	\| \bm{E}\| & \lesssim \sqrt{\frac{\mu r \| \bm{M}^{\star} \|^2 \log n_{2}}{n_{1}p} }+\frac{\mu r \| \bm{M}^{\star} \| \log n_{2}}{p\sqrt{n_{1}n_{2}}}
	\asymp \sqrt{\frac{\mu r \| \bm{M}^{\star} \|^2 \log n_{2}}{n_{1}p} } ,
\end{align*}
where the last inequality is valid as long as  $n_{2}p \gtrsim \mu r\log n_{2}$.

\section{Tensor completion}
\label{sec:tensor-completion}

Tensor data, which can be viewed as a higher-order generalization of matrix data,  are routinely used in science and engineering applications to capture multi-way interactions across variables of interest \citep{kolda2009tensor,sidiropoulos2017tensor,anandkumar2014tensor}. Akin to matrix completion,   the problem of tensor completion aims to  reconstruct a (structured) tensor when the vast majority of its entries are unobserved, a task that spans a wide spectrum of applications including  visual data inpainting, harmonic retrieval,  seismic data analysis, and so on \citep{liu2012tensor,chen2013spectral,kreimer2013tensor}.

Apparently, this task cannot possibly be accomplished without exploiting further structural assumptions on the tensor under consideration. Inspired by the success of low-rank matrix completion, we explore the case where the unknown tensor enjoys certain low-rank structure (more specifically, low canonical-polyadic (CP) rank~\citep{kolda2009tensor}). For simplicity, we concentrate on order-three tensors (namely, $\bm{T}=[T_{i,j,k}]\in \mathbb{R}^{n_1\times n_2\times n_3}$), which already capture several fundamental challenges intrinsic to tensor estimation. In addition, we take the dimensionality $n_1 = n_2 = n_3 = n$ for simplicity of presentation.

\begin{figure}
\begin{center}
\includegraphics[width=0.65\textwidth]{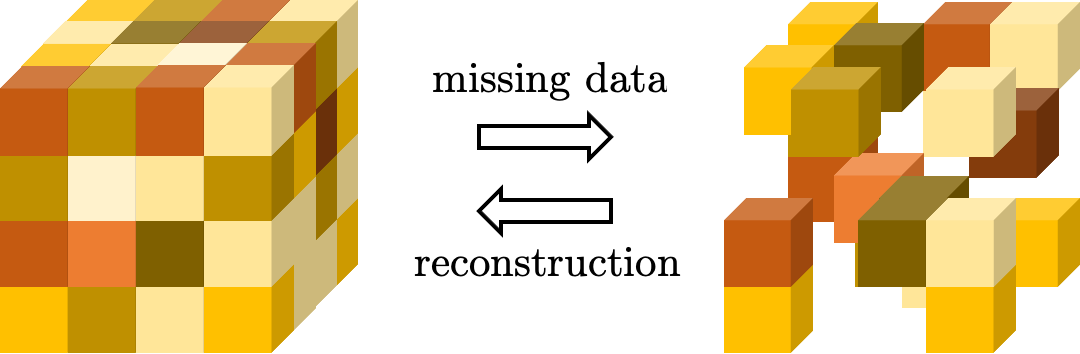}
\end{center}
\caption{Illustration of tensor completion, where we observe partial entries of an order-three tensor.\label{fig:tensor-completion-illustration}}
\end{figure}

\subsection{Problem formulation and assumptions}
\label{sec:setup-tensor-completion}

\paragraph{Notation.} Before describing our models, we introduce several notation that will be useful throughout. For any vectors $\bm{a}=[a_{i}]_{1\leq i\leq n},\bm{b}=[b_{i}]_{1\leq i\leq n},\bm{c}=[c_{i}]_{1\leq i\leq n}\in\mathbb{R}^{n}$,
the tensor $\bm{a}\otimes\bm{b}\otimes\bm{c}$ stands for an $n\times n\times n$
array whose $(i,j,k)$-th entry is given by $a_{i}b_{j}c_{k}$. Additionally, we denote by $\bm{a}\otimes \bm{b}\coloneqq  {\scriptsize \left[\begin{array}{c}
a_{1}\bm{b}\\
\vdots\\
a_{n}\bm{b}
\end{array}\right]}$ the Kronecker product between $\bm{a}$ and $\bm{b}$.  For any tensor $\bm{T}=[T_{i,j,k}]_{1\leq i,j,k\leq n}\in \mathbb{R}^{n\times n\times n}$, we say that $\bm{A}=[A_{i,j}]\in \mathbb{R}^{n\times n^2}$ is the mode-1 matricization of $\bm{T}$, denoted by $$\bm{A}=\mathsf{unfold}(\bm{T}),$$  if $A_{i,(j-1)n+k} = T_{i,j,k}$ for all $(i,j,k)\in [n]\times [n] \times [n]$.

\paragraph{Models and assumptions.}
Suppose the unknown order-three symmetric tensor $\bm{T}^{\star}=[T_{i,j,k}^{\star}]_{1\leq i,j,k\leq n}$  is a superposition of $r$ ($r<n$) rank-one symmetric tensors:
\begin{equation}
\bm{T}^{\star}=\sum_{i=1}^{r}\bm{w}_{i}^{\star}\otimes\bm{w}_{i}^{\star}\otimes\bm{w}_{i}^{\star} \in \mathbb{R}^{n\times n\times n},
\label{eq:defn-true-tensor}
\end{equation}
where $\{\bm{w}_i^{\star}\in \mathbb{R}^{n}\}$ represents a set of latent tensor factors.
What we have available are incomplete observations of the entries of  $\bm{T}^{\star}$. The observed data can be succinctly encoded by an index subset $\Omega \subseteq [n]\times [n] \times [n]$ (called a sampling set) and a tensor $\bm{T}=[T_{i,j,k}]_{1\leq i,j,k\leq n}$ as follows
\begin{equation}
T_{i,j,k}=\begin{cases}
T_{i,j,k}^{\star},\qquad & \text{if }(i,j,k)\in\Omega,\\
0, & \text{else}.
\end{cases}
\label{eq:observed-tensor-T}
\end{equation}
This subsection aims for an intermediate goal, namely, estimating the subspace spanned by $\{\bm{w}_i^{\star}\}_{1\leq i\leq r}$, which often serves as a crucial initial stage towards reliable completion of the whole tensor. The interested reader is referred to \citet{montanari2016spectral,cai2019tensorOR} for subsequent stages of tensor completion algorithms.

Similar to the matrix completion counterpart, we explore a {\em random sampling} pattern such that for all $(i,j,k)\in [n]\times [n] \times [n]$,
\begin{align}
	(i,j,k) \in \Omega \qquad \text{independently with probability }p.
\end{align}
In addition, we define for notational convenience that
\begin{equation}
	\nu_{i}\coloneqq\|\bm{w}_{i}^{\star}\|_{2}^{3},\qquad \nu_{\min}\coloneqq \min_{1\leq i\leq r}\nu_{i},\qquad \nu_{\max} \coloneqq \max_{1\leq i\leq r}\nu_{i},
	\label{eq:defn-wi-wmin-wmax-TC}
\end{equation}
where $\nu_{i}$ reflects the size of the rank-1 component $\bm{w}_{i}^{\star}\otimes \bm{w}_{i}^{\star}\otimes \bm{w}_{i}^{\star}$. The condition number of $\bm{T}^{\star}$ is then defined as $\kappa \coloneqq \nu_{\max} / \nu_{\min}$.

We shall also introduce several incoherence parameters as follows.
\begin{definition}
	Define the incoherence parameters of $\bm{T}^{\star}$ (cf.~\eqref{eq:defn-true-tensor}) as
	\begin{equation}
		\mu_{1}\coloneqq\max_{1\leq i\leq r}\frac{n\|\bm{w}_{i}^{\star}\|_{\infty}^{2}}{\|\bm{w}_{i}^{\star}\|_{2}^{2}},
		\quad \text{and} \quad
		\mu_{2}\coloneqq\max_{i\neq j}\frac{n\big|\langle\bm{w}_{i}^{\star},\bm{w}_{j}^{\star}\rangle\big|^{2}}{\|\bm{w}_{i}^{\star}\|_{2}^{2} \, \|\bm{w}_{j}^{\star}\|_{2}^{2}} .
		\label{eq:defn-incoherence-TC}
	\end{equation}
\end{definition}
Let us explain these parameters in words: small $\mu_1$ and $\mu_2$ reflect that (i) the energy of each tensor factor $\bm{w}_i^{\star}$ is spread out across different entries, and (ii) the factors $\{\bm{w}_i^{\star}\}$  are not too correlated with each other. To simplify presentation, we set
\begin{align*}
	\mu \coloneqq \max \{ \mu_1, \mu_2 \}.
\end{align*}

\subsection{Algorithm}
\label{sec:algorithm-TC}

Unfortunately, it is notoriously difficult to exploit the low-rank structure---and many other low-complexity structure---efficiently in the original tensor space \citep{hillar2013most}.
To circumvent this issue,
a natural strategy thus attempts to matricize the tensor data, followed by an application of suitable low-rank matrix estimation algorithms.   Specifically, let us unfold the tensor $\bm{T}^{\star}$ into an $n\times n^2$ matrix $\bm{A}^{\star}$ as follows
\begin{align}
	\bm{A}^{\star}\coloneqq \mathsf{unfold}\big( \bm{T}^{\star}\big)
	= \sum_{i=1}^{r}\bm{w}_{i}^{\star}\big(\bm{w}_{i}^{\star}\otimes\bm{w}_{i}^{\star}\big)^{\top} \in \mathbb{R}^{n\times n^2}.
	\label{eq:defn-Astar-TC}	
\end{align}
The resulting matrix $\bm{A}^{\star}$ inherits the low-rank structure, as it clearly has rank at most $r$. We shall also matricize the observed data as
\begin{align}
	\bm{A} \coloneqq \mathsf{unfold} ( \bm{T} ) .
	\label{eq:defn-A-TC}
\end{align}

In order to estimate the subspace $\bm{U}^{\star}$ spanned by $\{\bm{w}_i^{\star}\}_{1\leq i\leq r}$ (which is  the column space of $\bm{A}^{\star}$ as well), the spectral method studied here resorts to the rescaled Gram matrix $p^{-2} \bm{A} \bm{A}^{\top}$. As a sanity check, if there is absolutely no missing data (i.e., $p=1$), then $p^{-2} \bm{A} \bm{A}^{\top}$ reduces to $\bm{A}^{\star} \bm{A}^{\star\top}$, whose column space coincides with that of $\bm{A}^{\star}$.
Turning to the scenario with missing data, a close inspection reveals that
\begin{align}
	\frac{1}{p^{2}}\mathbb{E}\big[\bm{A}\bm{A}^{\top}\big]
	=  \bm{A}^{\star}\bm{A}^{\star\top}+ \Big(\frac{1}{p}-1\Big) \mathcal{P}_{\mathsf{diag}} \big(\bm{A}^{\star}\bm{A}^{\star\top}\big),
	\label{eq:expectation-gram-matrix-TC}
\end{align}
where $\mathcal{P}_{\mathsf{diag}} (\cdot)$ denotes the Euclidean projection onto the set of matrices with zero off-diagonal entries.
This, however, makes apparent a severe issue: in the highly subsampled regime (i.e., where $p$ is small), the diagonal components might be excessively large and non-identical, thus destroying the low-rank structure in  \eqref{eq:expectation-gram-matrix-TC}.

To mitigate their undesirable effects, it is advisable to properly adjust the sizes of the diagonal entries \citep{montanari2016spectral,cai2019subspace}.
As it turns out, a simple yet plausible scheme is {\em diagonal deletion}, which exploits only the off-diagonal part as follows
\begin{subequations}
\begin{align}
	\bm{M} \coloneqq \frac{1}{p^{2}} \mathcal{P}_{\mathsf{off}\text{-}\mathsf{diag}}\big(\bm{A}\bm{A}^{\top}\big) .
	\label{defn:gram-matrix-TC}
\end{align}
Here,  $\mathcal{P}_{\mathsf{off}\text{-}\mathsf{diag}}(\cdot)$ stands for the operator that zeros out all diagonal entries of a matrix.
%
One can easily verify that, in expectation,
\begin{equation}
	\mathbb{E}[\bm{M}]=   \underset{\eqqcolon\, \bm{M}^{\star} }{\underbrace{  \bm{A}^{\star}\bm{A}^{\star\top} }} - \mathcal{P}_{\mathsf{diag}}\big( \bm{A}^{\star}\bm{A}^{\star\top} \big),
	\label{eq:defn-Mstar-TC}
\end{equation}
\end{subequations}
which stays quite close to the low-rank matrix $\bm{M}^{\star}$ as long as the diagonal entries of $\bm{M}^{\star}$ are small enough.
The spectral method then proceeds by calculating the top-$r$ eigendecomposition $\bm{U}\bm{\Lambda}\bm{U}^{\top}$ of $\bm{M}$ and returning $\bm{U}$ as the subspace estimate. Here, the columns of $\bm{U}\in \mathbb{R}^{n\times r}$ are formed by the $r$ leading eigenvectors of $\bm{M}$, while $\bm{\Lambda}\in \mathbb{R}^{r\times r}$ is a diagonal matrix containing the $r$ leading eigenvalues.

\begin{remark}
	The diagonal deletion idea has been recommended not just for tensor completion, but also for problems including but not limited to bi-clustering \citep{florescu2016spectral}, PCA with missing data and/or heteroskedastic noise  \citep{cai2019subspace,abbe2020ell_p}, and contextual community detection \citep{abbe2020ell_p}.  Instead of diagonal deletion, one might also consider properly rescaling the diagonal entries
	based on the sampling mechanism; see, e.g., \citet{montanari2016spectral,lounici2014high,loh2012high,zhang2018heteroskedastic,zhu2019high}. 
\end{remark}

\subsection{Performance guarantees}
\label{sec:theory-TC}

The aforementioned spectral method can be analyzed by means of the $\ell_{2}$ perturbation theory as well.
As usual, this requires first
controlling the size of $\bm{E} \coloneqq \bm{M}-\bm{M}^{\star}$, where $\bm{M}$ and $\bm{M}^{\star}$ are defined in \eqref{defn:gram-matrix-TC} and \eqref{eq:defn-Mstar-TC}, respectively.
\begin{lemma}
	\label{lem:size-E-TC}
	Consider the settings in Section~\ref{sec:setup-tensor-completion}. There exists some universal constant $C>0$ such that with probability at least $1-O(n^{-7})$,
	\begin{align}
		\|\bm{E}\| \leq C  \Bigg( \frac{\mu^{3/2}r\sqrt{\log n}}{n^{3/2}p} + \sqrt{\frac{\mu^{2}r\log n}{n^{2}p}}  + \frac{\mu r}{n} \Bigg)\nu_{\max}^{2},
	\end{align}
	provided that $p\gtrsim \frac{\mu^{3/2}r\log^{2.5}n}{n^{3/2}}$ and that $\mu \max\{\log n, r^2\kappa^4 \} \leq c_3 n$ for some sufficiently small constant $c_3>0$.
\end{lemma}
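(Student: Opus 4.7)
My plan is to start by writing the observed matricization as $\bm{A} = p\bm{A}^{\star} + \bm{N}$, where $\bm{N}\coloneqq \bm{A}-p\bm{A}^{\star}$ has entries $N_{i,s} = (\delta_{i,s}-p)\,A^{\star}_{i,s}$ that are independent across both $i$ and $s$, with mean zero and $|\delta_{i,s}-p|\le 1$. Expanding
$p^{-2}\bm{A}\bm{A}^{\top} = \bm{A}^{\star}\bm{A}^{\star\top} + p^{-1}(\bm{A}^{\star}\bm{N}^{\top}+\bm{N}\bm{A}^{\star\top}) + p^{-2}\bm{N}\bm{N}^{\top}$
and noting that $\mathbb{E}[\bm{N}\bm{N}^{\top}]$ is a diagonal matrix (since, within each column, the $N_{i,s}$'s are independent across $i$), the operator $\mathcal{P}_{\mathsf{off}\text{-}\mathsf{diag}}$ kills this diagonal and we obtain the clean split
\begin{align*}
\bm{E} \;=\; \underbrace{p^{-1}\mathcal{P}_{\mathsf{off}\text{-}\mathsf{diag}}\big(\bm{A}^{\star}\bm{N}^{\top}+\bm{N}\bm{A}^{\star\top}\big)}_{\eqqcolon \,\bm{E}_1}
\;+\; \underbrace{p^{-2}\mathcal{P}_{\mathsf{off}\text{-}\mathsf{diag}}\big(\bm{N}\bm{N}^{\top}-\mathbb{E}[\bm{N}\bm{N}^{\top}]\big)}_{\eqqcolon\,\bm{E}_2},
\end{align*}
so the task reduces to controlling $\|\bm{E}_1\|$ and $\|\bm{E}_2\|$ separately.

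Before attacking $\bm{E}_1$ and $\bm{E}_2$, I would collect the following deterministic incoherence consequences, which drive all subsequent quantitative bounds: using $|w^{\star}_{a,i}|\le \sqrt{\mu/n}\|\bm{w}^{\star}_a\|_2$ and $|\langle \bm{w}^{\star}_a,\bm{w}^{\star}_b\rangle|\le \sqrt{\mu/n}\|\bm{w}^{\star}_a\|_2\|\bm{w}^{\star}_b\|_2$, one can verify $\|\bm{A}^{\star}\|_{\infty}\lesssim r\mu^{3/2}n^{-3/2}\nu_{\max}$, $\|\bm{A}^{\star}\|_{\infty,2}^2\lesssim r\mu^2 n^{-2}\nu_{\max}^2$, $\|\bm{A}^{\star}\|_{2,\infty}^2\lesssim r\mu n^{-1}\nu_{\max}^2$, and $\|\bm{A}^{\star}\bm{A}^{\star\top}\|\lesssim \nu_{\max}^2$ (the last one uses $r\mu \lesssim n$). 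These plug directly into the variance statistics.

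For $\bm{E}_1$, I would expand $\bm{A}^{\star}\bm{N}^{\top}=\sum_{l,s}(\delta_{l,s}-p)\,A^{\star}_{l,s}\,\bm{a}^{\star}_{s}\bm{e}_l^{\top}$ as a sum of independent, bounded, mean-zero rank-one matrices and invoke Corollary~\ref{thm:matrix-Bernstein-friendly}. The deterministic boundedness parameter is $L\lesssim p^{-1}\|\bm{A}^{\star}\|_{\infty}\|\bm{A}^{\star}\|_{\infty,2}$, and the variance statistic, after using $\sum_s \|\bm{a}^{\star}_s\|_2^2\bm{a}^{\star}_s\bm{a}^{\star\top}_s\preceq \|\bm{A}^{\star}\|_{\infty,2}^2\,\bm{A}^{\star}\bm{A}^{\star\top}$ (and the analogous estimate for $\sum\mathbb{E}[\bm{X}^{\top}\bm{X}]$), gives $v\lesssim p^{-1}r\mu^2 n^{-2}\nu_{\max}^4$, producing the contribution $\sqrt{\mu^2 r\log n/(n^2 p)}\,\nu_{\max}^2$ after taking $\sqrt{v\log n}$; the $L\log n$ piece is lower order under the sample-size hypothesis.

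The main obstacle is $\bm{E}_2$, since its entries are quadratic in the Bernoulli noise and different rows of $\bm{N}\bm{N}^{\top}$ are coupled through shared $\delta$'s, so the Hanson--Wright-style decoupling has to be replaced by a column decomposition. My plan is to write $\bm{N}\bm{N}^{\top}-\mathbb{E}[\bm{N}\bm{N}^{\top}] = \sum_s\big(\bm{n}_s\bm{n}_s^{\top}-\mathbb{E}[\bm{n}_s\bm{n}_s^{\top}]\big)$, where $\bm{n}_s$ is the $s$-th column of $\bm{N}$, and then apply matrix Bernstein to this sum of \emph{independent} rank-one matrices. The boundedness parameter is $L\lesssim \max_s\|\bm{a}^{\star}_s\|_2^2\lesssim r\mu^2 n^{-2}\nu_{\max}^2$. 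For the variance, a careful direct moment computation, exploiting that $\mathbb{E}[\|\bm{n}_s\|_2^2\,\bm{n}_s\bm{n}_s^{\top}]$ is diagonal by parity (the cross terms involve a single $\mathbb{E}[N_{i,s}]=0$), gives the tighter estimate
\begin{align*}
\Big\|\sum\nolimits_s\mathbb{E}\big[(\bm{n}_s\bm{n}_s^{\top})^2\big]\Big\| \;\lesssim\; \big(p^2\|\bm{A}^{\star}\|_{\infty,2}^2 + p\|\bm{A}^{\star}\|_{\infty}^2\big)\,\|\bm{A}^{\star}\|_{2,\infty}^2 ,
\end{align*}
the dominant $p^2$ piece of which yields the $\mu^{3/2}r\sqrt{\log n}/(n^{3/2}p)\,\nu_{\max}^2$ term after scaling by $p^{-2}$ and taking $\sqrt{v\log n}$; the boundedness contribution $L\log n/p^2$ is then absorbed into the $\mu r/n\,\nu_{\max}^2$ slack under the sample-size condition $p\gtrsim \mu^{3/2}r\log^{5/2}n/n^{3/2}$. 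Combining all pieces via the triangle inequality and taking a union bound over the probabilistic events completes the proof; the mildly technical part I anticipate is getting the parity cancellation in the quadratic variance computation exactly right so that the first (linear in $p$) term indeed dominates.
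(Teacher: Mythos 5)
Your high-level plan—split the off-diagonal of $p^{-2}\bm{A}\bm{A}^\top$ into a linear-in-noise cross term and a quadratic-in-noise term, then apply matrix Bernstein to each using the incoherence bounds on $\|\bm{A}^\star\|_\infty$, $\|\bm{A}^\star\|_{\infty,2}$, $\|\bm{A}^\star\|_{2,\infty}$—is essentially the same as the paper's, modulo the normalization $\bm{Z}=\bm{N}/p$. However, there are two genuine gaps.

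First, your ``clean split'' $\bm{E}=\bm{E}_1+\bm{E}_2$ drops a term. Applying $\mathcal{P}_{\mathsf{off}\text{-}\mathsf{diag}}$ to your expansion of $p^{-2}\bm{A}\bm{A}^\top$ yields $\bm{M}=\mathcal{P}_{\mathsf{off}\text{-}\mathsf{diag}}(\bm{A}^\star\bm{A}^{\star\top})+\bm{E}_1+\bm{E}_2$, but the target $\bm{M}^\star$ is the \emph{full} matrix $\bm{A}^\star\bm{A}^{\star\top}$, not its off-diagonal part. Hence $\bm{E}=\bm{M}-\bm{M}^\star$ carries the extra deterministic diagonal-deletion bias $-\mathcal{P}_{\mathsf{diag}}(\bm{A}^\star\bm{A}^{\star\top})$. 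Its norm is $\max_l\|\bm{A}^\star_{l,\cdot}\|_2^2=\|\bm{A}^\star\|_{2,\infty}^2\lesssim \mu r\nu_{\max}^2/n$, and this is exactly the third term in the advertised bound; without it you will not recover that piece. This is the irreducible price of diagonal deletion and it has to appear in the decomposition.

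Second, the deterministic boundedness level $L\lesssim\max_s\|\bm{a}^\star_s\|_2^2=\|\bm{A}^\star\|_{\infty,2}^2$ for the quadratic term is too crude. With $\|\bm{A}^\star\|_{\infty,2}^2\lesssim \mu^2 r\nu_{\max}^2/n^2$, the contribution $L\log n/p^2$ is $\asymp \mu^2 r\nu_{\max}^2\log n/(n^2 p^2)$; comparing to the target bound (e.g.\ its first term $\mu^{3/2}r\sqrt{\log n}\,\nu_{\max}^2/(n^{3/2}p)$) shows you would need $p\gtrsim\sqrt{\mu\log n/n}$, which is strictly stronger than the stated hypothesis $p\gtrsim\mu^{3/2}r\log^{5/2}n/n^{3/2}$: take $\mu,r=O(1)$ and $p\asymp \log^{5/2}n/n^{3/2}$, which satisfies the lemma's assumptions yet violates $p\gtrsim\sqrt{\log n/n}$, and the boundedness piece blows up. The reason is that $\|\bm{n}_s\|_2$ is \emph{typically} of order $\sqrt{p}\,\|\bm{A}^\star\|_{\infty,2}$ (up to Bernstein corrections), not its worst case $\|\bm{A}^\star\|_{\infty,2}$. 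The fix is the truncated matrix Bernstein (Corollary~\ref{thm:matrix-Bernstein-friendly-truncated}): truncate $\|\bm{n}_s\bm{n}_s^\top\|$ at its high-probability level, roughly $p\log n\,\|\bm{A}^\star\|_{\infty,2}^2+\log^2 n\,\|\bm{A}^\star\|_\infty^2$, which gains the needed factor of $p$. Your variance computation and the incoherence estimates otherwise look right, and your cross-term analysis ($\bm{E}_1$) agrees with the paper's.
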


In order to apply the Davis-Kahan $\sin\bm{\Theta}$ theorem (cf.~Corollary~\ref{cor:davis-kahan-conclusion-corollary}), another step boils down to characterizing the eigengap of the matrix $\bm{M}^{\star}=\bm{A}^{\star}\bm{A}^{\star\top}$ of interest. Our result is this:
\begin{lemma}
	\label{lemma:spectrum-Astar-AstarT-TC}
	Suppose that $\mu r^{2}\kappa^{4}\leq c_{3}n$ for some sufficiently small constant $c_3>0$. Then the $i$-th largest eigenvalue of $\bm{A}^{\star}\bm{A}^{\star\top}$ obeys
	\begin{align*}
		\lambda_{i}\big(\bm{A}^{\star}\bm{A}^{\star\top}\big) & \in\ \big[\nu_{\min}^{2}/2, 2\nu_{\max}^2 \big], \qquad \text{if }1\leq i\leq r; \\
		\lambda_{i}\big(\bm{A}^{\star}\bm{A}^{\star\top}\big) & =0,\qquad\qquad\qquad\qquad ~\text{if }i\geq r+1.
	\end{align*}
\end{lemma}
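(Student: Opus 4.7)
\medskip

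\noindent\textbf{Proof proposal for Lemma~\ref{lemma:spectrum-Astar-AstarT-TC}.}
The vanishing of $\lambda_i(\bm{A}^{\star}\bm{A}^{\star\top})$ for $i \geq r+1$ is immediate, since $\bm{A}^{\star}$ in \eqref{eq:defn-Astar-TC} is a sum of $r$ rank-one matrices and hence $\mathsf{rank}(\bm{A}^{\star}\bm{A}^{\star\top}) \leq r$. The substantive content is to pin down the top $r$ eigenvalues. My plan is to use the identity $(\bm{w}_i^{\star}\otimes\bm{w}_i^{\star})^{\top}(\bm{w}_j^{\star}\otimes\bm{w}_j^{\star}) = \langle\bm{w}_i^{\star},\bm{w}_j^{\star}\rangle^2$ to split the Gram matrix as
\[
\bm{A}^{\star}\bm{A}^{\star\top} \;=\; \underbrace{\sum_{i=1}^{r}\|\bm{w}_i^{\star}\|_2^4\,\bm{w}_i^{\star}\bm{w}_i^{\star\top}}_{\eqqcolon\,\bm{B}^{\star}} \;+\; \underbrace{\sum_{i\neq j}\langle\bm{w}_i^{\star},\bm{w}_j^{\star}\rangle^2\,\bm{w}_i^{\star}\bm{w}_j^{\star\top}}_{\eqqcolon\,\bm{R}}
\]
and to show that (i) the diagonal part $\bm{B}^{\star}$ has nonzero eigenvalues very close to $\{\nu_i^2\}_{1 \leq i \leq r}$, and (ii) the cross-term $\bm{R}$ is negligible in operator norm.

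For step (i), I would normalize by writing $\bm{w}_i^{\star}=\nu_i^{1/3}\bm{u}_i$ with $\|\bm{u}_i\|_2=1$. Collecting $\bm{U}\coloneqq[\bm{u}_1,\cdots,\bm{u}_r]$ gives $\bm{B}^{\star}=\bm{U}\bm{\Lambda}\bm{U}^{\top}$ with $\bm{\Lambda}=\mathsf{diag}([\nu_1^2,\cdots,\nu_r^2])$. Since $\bm{U}\bm{\Lambda}\bm{U}^{\top}$ and $\bm{\Lambda}^{1/2}\bm{U}^{\top}\bm{U}\bm{\Lambda}^{1/2}$ share the same nonzero eigenvalues, and the off-diagonal entries of $\bm{U}^{\top}\bm{U}-\bm{I}_r$ obey $|\langle\bm{u}_i,\bm{u}_j\rangle|\leq\sqrt{\mu_2/n}\leq\sqrt{\mu/n}$ by \eqref{eq:defn-incoherence-TC}, we obtain $\|\bm{U}^{\top}\bm{U}-\bm{I}_r\|\leq r\sqrt{\mu/n}$. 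Consequently, by Weyl's inequality (Lemma~\ref{lemma:weyl}),
\[
\big|\lambda_i(\bm{B}^{\star})-\nu_i^2\big| \;\leq\; \nu_{\max}^2 \,r\sqrt{\mu/n} \;\leq\; \sqrt{c_3}\,\nu_{\min}^2,
\]
where the last inequality uses $r\sqrt{\mu/n}\leq\sqrt{c_3/\kappa^4}$, a direct consequence of $\mu r^2\kappa^4\leq c_3 n$.

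For step (ii), I would factor $\bm{R}=\bm{W}\bm{G}_{\mathsf{off}}\bm{W}^{\top}$ with $\bm{W}=[\bm{w}_1^{\star},\cdots,\bm{w}_r^{\star}]$ and $(\bm{G}_{\mathsf{off}})_{ij}=\langle\bm{w}_i^{\star},\bm{w}_j^{\star}\rangle^2\mathbbm{1}\{i\neq j\}$. The computation just performed yields $\|\bm{W}\|^2\leq 2\nu_{\max}^{2/3}$, whereas incoherence gives $|(\bm{G}_{\mathsf{off}})_{ij}|\leq \mu\nu_{\max}^{4/3}/n$, so Gershgorin's bound yields $\|\bm{G}_{\mathsf{off}}\|\leq r\mu\nu_{\max}^{4/3}/n$. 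Therefore
\[
\|\bm{R}\| \;\lesssim\; \frac{r\mu\,\nu_{\max}^2}{n} \;=\; \frac{r\mu\kappa^2}{n}\,\nu_{\min}^2 \;\leq\; c_3\,\nu_{\min}^2,
\]
where I used $\mu r\kappa^2/n \leq c_3$ (which follows from $\mu r^2\kappa^4\leq c_3 n$ and $r\kappa^2\geq 1$). Combining (i) and (ii) via another application of Weyl to $\bm{A}^{\star}\bm{A}^{\star\top}=\bm{B}^{\star}+\bm{R}$ gives $\lambda_i(\bm{A}^{\star}\bm{A}^{\star\top})\in[\nu_i^2-(\sqrt{c_3}+c_3)\nu_{\min}^2,\,\nu_i^2+(\sqrt{c_3}+c_3)\nu_{\min}^2]$ for $1\leq i\leq r$; choosing $c_3$ sufficiently small yields the advertised interval $[\nu_{\min}^2/2,\,2\nu_{\max}^2]$.

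The main obstacle will be bookkeeping: the two error terms, one from the non-orthogonality of the normalized factors $\{\bm{u}_i\}$ and one from the cross-term $\bm{R}$, scale respectively as $r\sqrt{\mu/n}\cdot\nu_{\max}^2$ and $r\mu/n\cdot\nu_{\max}^2$, and one has to verify that a \emph{single} condition of the form $\mu r^2\kappa^4\lesssim n$ suffices to tame both simultaneously. No other difficulty is expected, since everything boils down to elementary Gershgorin/Weyl estimates once the decomposition is in place.
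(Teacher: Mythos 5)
Your proof is correct and in fact takes the same route as the paper's own argument—the two decompositions agree exactly. Writing $\bm{w}_i^{\star}=\nu_i^{1/3}\bm{u}_i$ with $\bm{U}=[\bm{u}_1,\ldots,\bm{u}_r]$, your $\bm{B}^{\star}=\sum_i\nu_i^2\bm{u}_i\bm{u}_i^{\top}$ is precisely the paper's $\bm{G}_1=\overline{\bm{W}}^{\star}(\bm{D}^{\star})^2\overline{\bm{W}}^{\star\top}$, and your cross-term $\bm{R}=\sum_{i\neq j}\langle\bm{w}_i^{\star},\bm{w}_j^{\star}\rangle^2\bm{w}_i^{\star}\bm{w}_j^{\star\top}$ is the paper's $\bm{G}_2=\overline{\bm{W}}^{\star}\bm{D}^{\star}\bm{R}_{\mathsf{lift}}\bm{D}^{\star}\overline{\bm{W}}^{\star\top}$; similarly, your reduction of $\lambda_i(\bm{B}^{\star})$ to $\lambda_i(\bm{\Lambda}^{1/2}\bm{U}^{\top}\bm{U}\bm{\Lambda}^{1/2})$ matches the paper's use of $\bm{D}^{\star}\overline{\bm{W}}^{\star\top}\overline{\bm{W}}^{\star}\bm{D}^{\star}$. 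The Gershgorin/Weyl bookkeeping and the way the single hypothesis $\mu r^2\kappa^4\leq c_3 n$ absorbs both error terms (one scaling as $r\sqrt{\mu/n}\,\nu_{\max}^2$ and the other as $r\mu\nu_{\max}^2/n$, using $r\kappa^2\geq 1$) mirror the paper's computation. No gaps.
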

The preceding two lemmas, which will be established in Section~\ref{sec:proof-auxiliary-lemmas-TC}, readily lead to the following statistical guarantees for the spectral method presented in Section~\ref{sec:algorithm-TC}.
\begin{theorem}
	\label{thm:performance-TC-l2}
	Consider the settings in Section~\ref{sec:setup-tensor-completion}. Suppose that
	\begin{align}
			\mu r^{2}\kappa^{4}\log n \leq c_4 n
			\qquad \text{and} \qquad
			p\geq c_{5} \frac{\mu^{3/2}\kappa^{2}r \log^{2.5} n} {n^{3/2}}
			\label{eq:condition-TC}
	\end{align}
	hold for some small (resp.~large) enough constant $c_4>0$ (resp.~$c_5>0$). Then with probability at least $1-O(n^{-7})$, one has
	\[
		\mathsf{dist}\big(\bm{U},\bm{U}^{\star}\big)
		\lesssim \frac{\mu^{3/2}\kappa^{2}r\sqrt{\log n}}{n^{3/2}p}+\sqrt{\frac{\mu^{2}\kappa^{4}r\log n}{n^{2}p}}+\frac{\mu\kappa^{2}r}{n} .
	\]
\end{theorem}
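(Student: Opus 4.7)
The plan is to invoke Corollary~\ref{cor:davis-kahan-conclusion-corollary} (Davis-Kahan) to the symmetric pair $(\bm{M}, \bm{M}^\star)$, using Lemma~\ref{lem:size-E-TC} to control the numerator $\|\bm{E}\|$ and Lemma~\ref{lemma:spectrum-Astar-AstarT-TC} together with a diagonal-deletion bound to control the eigengap in the denominator. Since $\bm{U}$ is the rank-$r$ leading eigenspace of $\bm{M}$ and $\bm{U}^\star$ is the column space of $\bm{A}^\star$ (which coincides with the rank-$r$ leading eigenspace of $\bm{M}^\star = \bm{A}^\star\bm{A}^{\star\top} - \mathcal{P}_{\mathsf{diag}}(\bm{A}^\star\bm{A}^{\star\top})$ once we verify the eigengap is positive), Davis-Kahan will yield
\[
\mathsf{dist}(\bm{U},\bm{U}^\star) \lesssim \frac{\|\bm{E}\|}{\lambda_r(\bm{M}^\star) - \lambda_{r+1}(\bm{M}^\star)} .
\]

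The first step is to relate the spectrum of $\bm{M}^\star$ to that of $\bm{A}^\star\bm{A}^{\star\top}$. The perturbation is $\mathcal{P}_{\mathsf{diag}}(\bm{A}^\star\bm{A}^{\star\top})$, whose spectral norm equals $\max_i (\bm{A}^\star\bm{A}^{\star\top})_{i,i} = \max_i \|\bm{A}^\star_{i,\cdot}\|_2^2$. Using $\bm{A}^\star_{i,\cdot} = \sum_{l=1}^r w_{l,i}^\star (\bm{w}_l^\star \otimes \bm{w}_l^\star)^\top$ together with the incoherence conditions \eqref{eq:defn-incoherence-TC}, I expect to show
\[
\big\| \mathcal{P}_{\mathsf{diag}}(\bm{A}^\star\bm{A}^{\star\top}) \big\| \lesssim \frac{\mu r \nu_{\max}^2}{n},
\]
where the $\mu r/n$ factor reflects one incoherence-driven entry of $\bm{w}_l^\star$ times the standard bound $\|\bm{w}_l^\star\otimes \bm{w}_l^\star\|_2 = \|\bm{w}_l^\star\|_2^2$ (plus cross terms controlled by $\mu_2$). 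Combining this with Lemma~\ref{lemma:spectrum-Astar-AstarT-TC} and Weyl's inequality (Lemma~\ref{lemma:weyl}), and using the assumption $\mu r \kappa^2 \ll n$ from \eqref{eq:condition-TC}, I will conclude
\[
\lambda_r(\bm{M}^\star) \geq \tfrac{1}{2}\nu_{\min}^2 - O\!\Big(\tfrac{\mu r \nu_{\max}^2}{n}\Big) \geq \tfrac{1}{4}\nu_{\min}^2,
\qquad
\lambda_{r+1}(\bm{M}^\star) \leq O\!\Big(\tfrac{\mu r \nu_{\max}^2}{n}\Big) \leq \tfrac{1}{8}\nu_{\min}^2,
\]
so the eigengap is $\gtrsim \nu_{\min}^2$.

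The second step plugs in Lemma~\ref{lem:size-E-TC}; its hypotheses are implied by \eqref{eq:condition-TC}. I will then check that the Davis-Kahan smallness requirement $\|\bm{E}\| \leq (1-1/\sqrt{2})\,(\lambda_r(\bm{M}^\star)-\lambda_{r+1}(\bm{M}^\star))$ is satisfied, which reduces to verifying that each of the three terms in the $\|\bm{E}\|$ bound is $o(\nu_{\min}^2)$; this is exactly what the sample-size and incoherence conditions in \eqref{eq:condition-TC} guarantee (the factor $\kappa^2 = \nu_{\max}^2/\nu_{\min}^2$ is where the condition number enters $c_5$). Dividing the $\|\bm{E}\|$ bound by $\nu_{\min}^2$ and absorbing $\nu_{\max}^2/\nu_{\min}^2 = \kappa^2$ produces precisely the three terms
\[
\frac{\mu^{3/2}\kappa^2 r\sqrt{\log n}}{n^{3/2}p} \;+\; \sqrt{\frac{\mu^2\kappa^4 r \log n}{n^2 p}} \;+\; \frac{\mu\kappa^2 r}{n},
\]
as claimed. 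The main technical obstacle is cleanly bounding $\max_i \|\bm{A}^\star_{i,\cdot}\|_2^2$ under the incoherence conditions so that the eigengap of $\bm{M}^\star$ is of order $\nu_{\min}^2$; once that is established, the rest is a direct application of Davis-Kahan and Lemma~\ref{lem:size-E-TC}.
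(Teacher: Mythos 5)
Your plan misreads the paper's definitions in a way that leaves a genuine gap. In the paper, $\bm{M}^\star$ denotes $\bm{A}^\star\bm{A}^{\star\top}$ itself, not the diagonal-deleted matrix; equation~\eqref{eq:defn-Mstar-TC} reads $\mathbb{E}[\bm{M}] = \bm{M}^\star - \mathcal{P}_{\mathsf{diag}}(\bm{A}^\star\bm{A}^{\star\top})$, with the underbrace sitting only under $\bm{A}^\star\bm{A}^{\star\top}$. Consequently $\bm{E} = \bm{M} - \bm{M}^\star$, the quantity bounded in Lemma~\ref{lem:size-E-TC}, already absorbs the deterministic diagonal-deletion bias --- the $\frac{\mu r}{n}\nu_{\max}^2$ term in that lemma is precisely $\|\mathcal{P}_{\mathsf{diag}}(\bm{A}^\star\bm{A}^{\star\top})\| = \|\bm{A}^\star\|_{2,\infty}^2$, controlled via Lemma~\ref{lem:size-W-Wlift-TC}. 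The paper's proof is then a single application of Corollary~\ref{cor:davis-kahan-conclusion-corollary} to the pair $(\bm{M},\bm{A}^\star\bm{A}^{\star\top})$: since $\bm{A}^\star\bm{A}^{\star\top}$ has rank $r$, one has $\lambda_{r+1}(\bm{M}^\star)=0$, its top-$r$ eigenspace is exactly $\bm{U}^\star$, and the eigengap is $\lambda_r(\bm{M}^\star)\geq \nu_{\min}^2/2$ by Lemma~\ref{lemma:spectrum-Astar-AstarT-TC}. Your separate ``diagonal-deletion bound to control the eigengap'' is unnecessary under this bookkeeping.

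By contrast, you apply Davis--Kahan to the pair $(\bm{M},\mathbb{E}[\bm{M}])$ and assert that $\bm{U}^\star$ ``coincides with the rank-$r$ leading eigenspace of $\bm{A}^\star\bm{A}^{\star\top}-\mathcal{P}_{\mathsf{diag}}(\bm{A}^\star\bm{A}^{\star\top})$ once we verify the eigengap is positive.'' That claim is false: subtracting a diagonal matrix from $\bm{A}^\star\bm{A}^{\star\top}$ generically rotates the eigenspace, and a positive eigengap only makes the top-$r$ eigenspace of $\mathbb{E}[\bm{M}]$ well defined, not equal to $\bm{U}^\star$; your Weyl-inequality step controls eigenvalues, not eigenvectors. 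To make your route rigorous you would need an additional Davis--Kahan step bounding $\mathsf{dist}(\widetilde{\bm{U}},\bm{U}^\star)\lesssim \|\mathcal{P}_{\mathsf{diag}}(\bm{A}^\star\bm{A}^{\star\top})\|/\nu_{\min}^2 \lesssim \mu r\kappa^2/n$, where $\widetilde{\bm{U}}$ is the top-$r$ eigenspace of $\mathbb{E}[\bm{M}]$, and then combine via the (near-)triangle inequality for $\mathsf{dist}$; you would also need to bound $\|\bm{M}-\mathbb{E}[\bm{M}]\|$ directly rather than invoke Lemma~\ref{lem:size-E-TC}, which concerns $\bm{M}-\bm{A}^\star\bm{A}^{\star\top}$. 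That two-step repair would reproduce the same three terms, so the gap is fixable, but as written the key eigenspace identification does not hold and the argument does not close.
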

\begin{proof}
In view of Lemmas~\ref{lem:size-E-TC}-\ref{lemma:spectrum-Astar-AstarT-TC}, one would have $\|\bm{E}\|\leq (1-1/\sqrt{2}) \lambda_{r} (\bm{M}^{\star} )$ under Condition \eqref{eq:condition-TC}.
%
%
%
Corollary~\ref{cor:davis-kahan-conclusion-corollary} combined with Lemma~\ref{lem:size-E-TC} then tells us that, with probability at least $1-O(n^{-7})$,
\[
	\mathsf{dist}\big(\bm{U},\bm{U}^{\star}\big)
	\leq \frac{2\big\|\bm{E}\big\|}{\lambda_{r}(\bm{M}^{\star})}\lesssim\frac{\Big(\frac{\mu^{3/2}r\sqrt{\log n}}{n^{3/2}p}+\sqrt{\frac{\mu^{2}r\log n}{n^{2}p}}+\frac{\mu r}{n}\Big)\nu_{\max}^{2}}{\nu_{\min}^{2}}
\]
as desired. \end{proof}

Theorem~\ref{thm:performance-TC-l2} is noteworthy for its implication on the sample complexity. To be precise, consider, for simplicity, the scenario where $r,\mu,\kappa = O(1)$.
In order to achieve consistent estimation in the sense that $\mathsf{dist}\big(\bm{U},\bm{U}^{\star}\big)=o(1)$, it suffices for the sample size---which sharply concentrates around $n^3p$ under our model---to exceed
\[
	n^{3}p\gtrsim n^{3/2}\mathrm{poly}\log(n).
\]

The careful reader might immediately remark that this sample complexity remains substantially higher than the information-theoretic limit,
the latter of which is $nr=O(n)$ in this case since there are only $nr$ free parameters. It is worth noting, however, that all polynomial-time algorithms developed in the literature for tensor completion require a sample size at least exceeding the order of $n^{3/2}$ \citep{barak2016noisy}. This hints at the (potential) existence of a computational barrier that prevents one from achieving the information-theoretic limit efficiently. Viewed in this light, the spectral method presented herein already achieves near-optimal sample complexity---when restricted to  computationally tractable algorithms---if the objective is consistent subspace estimation.

\subsection{Proof of auxiliary lemmas}
\label{sec:proof-auxiliary-lemmas-TC}

\paragraph{Proof of Lemma~\ref{lem:size-E-TC}.}
Define the following zero-mean random matrix
\[
	\bm{Z}=p^{-1}\bm{A}-\bm{A}^{\star}.
\]
It is  self-evident that
\[
  p^{-2} \big(\bm{A}\bm{A}^{\top}-\mathbb{E}\big[\bm{A}\bm{A}^{\top}\big]\big)=\bm{A}^{\star}\bm{Z}^{\top}+\bm{Z}\bm{A}^{\star\top}
+ \big( \bm{Z}\bm{Z}^{\top}-\mathbb{E}\big[\bm{Z}\bm{Z}^{\top}\big] \big),
\]
which implies that the identity holds for the off-diagonal part.  By the definitions
\eqref{defn:gram-matrix-TC} and \eqref{eq:defn-Mstar-TC}, it follows from the triangle inequality that
\begin{align}
\big\|\bm{M}-\bm{M}^{\star}\big\| & \leq\big\|\mathcal{P}_{\mathsf{diag}}\big(\bm{A}^{\star}\bm{A}^{\star\top}\big)\big\|+2\big\|\mathcal{P}_{\mathsf{off}\text{-}\mathsf{diag}}\big(\bm{A}^{\star}\bm{Z}^{\top}\big)\big\|\nonumber \\
	& \qquad+\big\|\mathcal{P}_{\mathsf{off}\text{-}\mathsf{diag}}\big(\bm{Z}\bm{Z}^{\top} -\mathbb{E}\big[\bm{Z}\bm{Z}^{\top} \big] \big)\big\|.
	\label{eq:decompose-M-Mstar-Z-Astar-TC}
\end{align}
In the sequel, we shall discuss how to control the three terms on the right-hand side of \eqref{eq:decompose-M-Mstar-Z-Astar-TC} separately.

\bigskip\noindent
{\em Step 1: bounding $\|\mathcal{P}_{\mathsf{diag}} (\bm{A}^{\star}\bm{A}^{\star\top})\|$.}
It is straightforward to verify that
%
\begin{equation}
\big\|\mathcal{P}_{\mathsf{diag}}\big(\bm{A}^{\star}\bm{A}^{\star\top}\big)\big\|=\max_{1\leq l\leq n}\big\|\bm{A}_{l,\cdot}^{\star}\big\|_{2}^{2}=\big\|\bm{A}^{\star}\big\|_{2,\infty}^{2}.
	\label{eq:P-diag-AAT-bound-TC}
\end{equation}
It thus suffices to  bound $\|\bm{A}^{\star}\|_{2,\infty}$, which we shall discuss momentarily.

%

\bigskip\noindent
{\em Step 2: bounding $\|\mathcal{P}_{\mathsf{off}\text{-}\mathsf{diag}} (\bm{Z}\bm{Z}^{\top} -\mathbb{E}[\bm{Z}\bm{Z}^{\top} ] ) \|$.}
 Define a collection of independent {\em zero-mean} random matrices as follows
\[
	\bm{Q}_{i}\coloneqq  \mathcal{P}_{\mathsf{off}\text{-}\mathsf{diag}}\big(\bm{Z}_{\cdot,i}\bm{Z}_{\cdot,i}^{\top}\big),
	\qquad 1\leq i\leq n^2,
\]
with which we can express
\begin{align}
	\mathcal{P}_{\mathsf{off}\text{-}\mathsf{diag}}\big(\bm{Z}\bm{Z}^{\top} -\mathbb{E}\big[\bm{Z}\bm{Z}^{\top}\big] \big)
	= \sum\nolimits_i  \Big( \bm{Q}_{i} - \mathbb{E}\big[ \bm{Q}_{i} \big] \Big) = \sum\nolimits_i  \bm{Q}_{i} .
	\label{eq:P-offdiag-ZZ-TC}
\end{align}
Here the last relation uses the fact that $\mathcal{P}_{\mathsf{off}\text{-}\mathsf{diag}} (\mathbb{E}[\bm{Z}\bm{Z}^{\top} ] ) = \sum_{i}\mathbb{E}\big[ \bm{Q}_{i} \big] = \bm{0}$.
Recognizing that the entries of $\bm{Z}_{\cdot,i}$ are independently
generated, one can see from straightforward calculations that $\mathbb{E}\big[\bm{Q}_{i}\bm{Q}_{i}^{\top}\big]$
is a diagonal matrix, whose diagonal entries satisfy
\begin{align*}
\Big(\mathbb{E}\big[\bm{Q}_{i}\bm{Q}_{i}^{\top}\big]\Big)_{l,l} & =\mathbb{E}\big[Z_{l,i}^{2}\big]\sum_{j:j\neq l}\mathbb{E}\big[Z_{j,i}^{2}\big]
  = \frac{1-p}{p}\big(A_{l,i}^{\star}\big)^{2}\sum_{j:j\neq l}\frac{1-p}{p}\big(A_{j,i}^{\star}\big)^{2}\\
 & \leq\frac{1}{p^{2}}\big(A_{l,i}^{\star}\big)^{2}\big\|\bm{A}_{\cdot,i}^{\star}\big\|_{2}^{2}
	\leq\frac{1}{p^{2}}\big(A_{l,i}^{\star}\big)^{2}\big\|\bm{A}^{\star}\big\|_{\infty,2}^{2}
\end{align*}
for all $1\leq l\leq n$.  Taking into account all samples yields
\[
	\sum_{i=1}^{n^2}\Big(\mathbb{E}\big[\bm{Q}_{i}\bm{Q}_{i}^{\top}\big]\Big)_{l,l}\leq\frac{1}{p^{2}}\sum_{i=1}^{n^{2}}\big(A_{l,i}^{\star}\big)^{2}\big\|\bm{A}^{\star}\big\|_{\infty,2}^{2}\leq\frac{1}{p^{2}}\big\|\bm{A}^{\star}\big\|_{2,\infty}^{2}\big\|\bm{A}^{\star}\big\|_{\infty,2}^{2}
\]
for any $1\leq l\leq n$,  which together with the diagonal structure of $\mathbb{E}\big[\bm{Q}_{i}\bm{Q}_{i}^{\top}\big]$ leads to an upper bound on the variance statistic
\begin{align*}
	v  \coloneqq\Big\|\sum_{i}\mathbb{E}\big[\bm{Q}_{i}\bm{Q}_{i}^{\top}\big]\Big\|
	= \Big|\max_{l}\Big(\sum_{i}\Big(\mathbb{E}\big[\bm{Q}_{i}\bm{Q}_{i}^{\top}\big]\Big)_{l,l}\Big)\Big|
	\leq\frac{  \big\|\bm{A}^{\star}\big\|_{2,\infty}^{2}\big\|\bm{A}^{\star}\big\|_{\infty,2}^{2} }{p^{2}}.
\end{align*}
In addition, we identify a suitable truncation level and claim that
\begin{subequations}
	\label{eq:Qi-tail-bound-exp-bound-TC}
\begin{align}
\mathbb{P}\Big\{\big\|\bm{Q}_{i}\big\|_{2}\geq L\Big\} & \leq2n^{-7}\eqqcolon q_{0},\\
\Big\|\mathbb{E}\big[\bm{Q}_{i}\mathbbm1\big\{\big\|\bm{Q}_{i}\big\|_{2}\leq L\big\}\big]\Big\| & \leq4n^{-7}p^{-2}\big\|\bm{A}^{\star}\big\|_{\infty,2}^{2}\eqqcolon q_{1},	
\end{align}
\end{subequations}
where we define $L \coloneqq 2\big(4\sqrt{\frac{\log n}{p}}\big\|\bm{A}^{\star}\big\|_{\infty,2}+\frac{6\log n}{p}\big\|\bm{A}^{\star}\big\|_{\infty}\big)^2$.
Armed with these observations, the truncated matrix Bernstein inequality (see Corollary~\ref{thm:matrix-Bernstein-friendly-truncated}) taken together with \eqref{eq:P-offdiag-ZZ-TC} reveals that
\begin{align}
 & \big\|\mathcal{P}_{\mathsf{off}\text{-}\mathsf{diag}}\big(\bm{Z}\bm{Z}^{\top}-\mathbb{E}\big[\bm{Z}\bm{Z}^{\top}\big]\big)\big\|
   \lesssim\sqrt{v\log n}+L\log n+n^{2}q_{1} \notag\\
 &  \lesssim\frac{\sqrt{\log n}}{p}\big\|\bm{A}^{\star}\big\|_{2,\infty}\big\|\bm{A}^{\star}\big\|_{\infty,2}+ \frac{\log^{2}n}{p} \big\|\bm{A}^{\star}\big\|_{\infty,2}^{2}+\frac{\log^{3}n}{p^{2}}\big\|\bm{A}^{\star}\big\|_{\infty}^{2}
	\label{eq:P-off-diag-ZZt-intermediate-TC}
\end{align}
with probability  $1- O(n^{-7}) - nq_0=1- O(n^{-7})$, provided that $p\gtrsim n^{-5}$.

\bigskip\noindent
{\em Step 3: bounding $\|\mathcal{P}_{\mathsf{off}\text{-}\mathsf{diag}}(\bm{A}^{\star}\bm{Z}^{\top})\|$.} This term can be controlled in a similar fashion as $\|\mathcal{P}_{\mathsf{off}\text{-}\mathsf{diag}} (\bm{Z}\bm{Z}^{\top} -\mathbb{E}[\bm{Z}\bm{Z}^{\top} ] ) \|$.  We thus omit the details and only state the result as follows:
\begin{align}
	& \|\mathcal{P}_{\mathsf{off}\text{-}\mathsf{diag}}(\bm{A}^{\star}\bm{Z}^{\top})\|
	\lesssim
	\big\|\bm{A}^{\star}\big\|_{\infty,2}\big\|\bm{A}^{\star}\big\|\sqrt{\frac{\log n}{p}} \notag\\
	& \qquad
	+\sqrt{\frac{\log^{3}n}{p}}\big\|\bm{A}^{\star}\big\|_{\infty,2}^{2} + \|\bm{A}^{\star}\|_{\infty,2}\big\|\bm{A}^{\star}\big\|_{\infty}\frac{\log^{2}n}{p}
	\label{eq:P-offdiag-AZ-intermediate-TC}
\end{align}
holds with probability at least $1-O(n^{-7})$.

\bigskip\noindent
{\em Step 4:}
To finish up,  we are in need of bounding $\|\bm{A}^{\star}\|_{\infty,2}$, $\|\bm{A}^{\star}\|_{2,\infty}$ and $\|\bm{A}^{\star}\|_{\infty}$, which is accomplished in the following lemma.
\begin{lemma}
	\label{lem:size-W-Wlift-TC}
	Suppose that $\mu r^{2}\leq n$. Then one has
\begin{align*}
	 \big\|\bm{A}^{\star}\big\|_{\infty} \leq\frac{\mu^{3/2}r\nu_{\max}}{n^{3/2}}  , ~~
	\big\|\bm{A}^{\star}\big\|_{\infty,2} \leq \frac{\mu \sqrt{2r}\nu_{\max}}{n} ,
	~~ \big\|\bm{A}^{\star}\big\|_{2,\infty}  \leq\sqrt{\frac{2\mu r}{n}}\nu_{\max} .
\end{align*}
\end{lemma}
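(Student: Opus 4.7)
My plan is to prove the three bounds by straightforward direct computation, relying on two elementary consequences of the incoherence conditions \eqref{eq:defn-incoherence-TC}: for every coordinate $l$ and every pair of indices, one has $|w_{i,l}^{\star}| \leq \sqrt{\mu/n}\,\|\bm{w}_i^{\star}\|_2$, and for every $i\neq j$ one has $|\langle \bm{w}_i^{\star},\bm{w}_j^{\star}\rangle| \leq \sqrt{\mu/n}\,\|\bm{w}_i^{\star}\|_2\|\bm{w}_j^{\star}\|_2$. I will use these together with the identity $\|\bm{w}_i^{\star}\|_2^3 = \nu_i \leq \nu_{\max}$ and the crucial tensor-product identity $\langle\bm{w}_i^{\star}\otimes\bm{w}_i^{\star},\,\bm{w}_j^{\star}\otimes\bm{w}_j^{\star}\rangle = \langle \bm{w}_i^{\star},\bm{w}_j^{\star}\rangle^2$, which makes the cross terms in the row-norm computation amenable to control via $\mu_2$.

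For the entrywise bound, each entry of $\bm{A}^{\star}$ has the form $\sum_{i=1}^{r} w_{i,l}^{\star}w_{i,j}^{\star}w_{i,k}^{\star}$. Applying the entrywise incoherence bound to all three factors and then the triangle inequality over the $r$ components gives the claim immediately. For the row norm, I expand
\[
  \|\bm{A}^{\star}_{l,\cdot}\|_2^2 = \Big\|\sum_{i=1}^{r} w_{i,l}^{\star}\,\bm{w}_i^{\star}\otimes\bm{w}_i^{\star}\Big\|_2^2 = \sum_{i,j} w_{i,l}^{\star}w_{j,l}^{\star}\langle\bm{w}_i^{\star},\bm{w}_j^{\star}\rangle^2,
\]
split into diagonal and off-diagonal parts, and control the diagonal part by $(\mu/n)\sum_i \|\bm{w}_i^{\star}\|_2^6 \leq r\mu\nu_{\max}^2/n$ and the off-diagonal part by $r^2(\mu/n)^2\nu_{\max}^2$. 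Under the hypothesis $\mu r^2\leq n$, the off-diagonal contribution is dominated by the diagonal one, yielding $\|\bm{A}^{\star}_{l,\cdot}\|_2^2 \leq 2\mu r\nu_{\max}^2/n$ and hence the advertised bound on $\|\bm{A}^{\star}\|_{2,\infty}$.

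For the column norm, the column indexed by $(j,k)$ equals $\sum_{i=1}^{r} w_{i,j}^{\star}w_{i,k}^{\star}\bm{w}_i^{\star}$, so
\[
  \Big\|\sum_{i=1}^{r} w_{i,j}^{\star}w_{i,k}^{\star}\bm{w}_i^{\star}\Big\|_2^2 = \sum_{i,i'} w_{i,j}^{\star}w_{i,k}^{\star}w_{i',j}^{\star}w_{i',k}^{\star}\langle \bm{w}_i^{\star},\bm{w}_{i'}^{\star}\rangle.
\]
The diagonal terms ($i=i'$) are each bounded by $(\mu/n)^2\|\bm{w}_i^{\star}\|_2^6$, contributing at most $r(\mu/n)^2\nu_{\max}^2$. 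For off-diagonal terms I combine two entrywise incoherence factors giving $(\mu/n)^2\|\bm{w}_i^{\star}\|_2^2\|\bm{w}_{i'}^{\star}\|_2^2$ with one inner-product incoherence factor giving $\sqrt{\mu/n}\,\|\bm{w}_i^{\star}\|_2\|\bm{w}_{i'}^{\star}\|_2$, yielding $(\mu/n)^{5/2}\nu_{\max}^2$ per pair. Summing over the $r^2$ pairs and invoking $\mu r^2\leq n$ (equivalently $r\sqrt{\mu/n}\leq 1$) shows the off-diagonal contribution is again dominated by the diagonal one, giving a total of at most $2\mu^2 r\nu_{\max}^2/n^2$ and therefore $\|\bm{A}^{\star}\|_{\infty,2}\leq \mu\sqrt{2r}\nu_{\max}/n$.

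There is no genuine obstacle: the only thing to be careful about is the bookkeeping of powers of $\mu/n$ in the column-norm computation, since each $w_{i,j}^{\star}$ and each inner product comes with its own $\sqrt{\mu/n}$ factor, and the condition $\mu r^2\leq n$ must be inserted at precisely the right step so that the quadratic-in-$r$ off-diagonal term can be absorbed into the linear-in-$r$ diagonal term. Once that is done, each of the three bounds follows from the triangle inequality alone.
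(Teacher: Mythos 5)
Your proof is correct, and the bounds you derive match the lemma exactly, but you take a more direct and elementary route than the paper. The paper first introduces the auxiliary factor matrices $\bm{W}^{\star}=[\bm{w}_1^{\star},\cdots,\bm{w}_r^{\star}]\in\mathbb{R}^{n\times r}$ and $\bm{W}_{\mathsf{lift}}^{\star}=[\bm{w}_1^{\star}\otimes\bm{w}_1^{\star},\cdots,\bm{w}_r^{\star}\otimes\bm{w}_r^{\star}]\in\mathbb{R}^{n^2\times r}$, bounds their spectral norms (via a diagonal/off-diagonal split of the Gram matrices, using $\mu r^2\leq n$ there) and their $\ell_{2,\infty}$ norms, and then reads off the three claimed bounds from the factorization $\bm{A}^{\star}=\bm{W}^{\star}(\bm{W}_{\mathsf{lift}}^{\star})^{\top}$ and the standard inequalities $\|\bm{B}\bm{C}\|_{\infty}\leq\|\bm{B}\|_{2,\infty}\|\bm{C}^{\top}\|_{2,\infty}$, $\|\bm{B}\bm{C}\|_{\infty,2}\leq\|\bm{B}\|\,\|\bm{C}^{\top}\|_{2,\infty}$, and $\|\bm{B}\bm{C}\|_{2,\infty}\leq\|\bm{B}\|_{2,\infty}\|\bm{C}\|$. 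You instead bound each row norm $\|\bm{A}^{\star}_{l,\cdot}\|_2^2=\sum_{i,j}w_{i,l}^{\star}w_{j,l}^{\star}\langle\bm{w}_i^{\star},\bm{w}_j^{\star}\rangle^2$ and each column norm directly as quadratic forms in the incoherence quantities, performing the diagonal/off-diagonal split at that level. Concretely, what you are doing is bounding the quadratic form $\bm{a}^{\top}(\bm{W}_{\mathsf{lift}}^{\star\top}\bm{W}_{\mathsf{lift}}^{\star})\bm{a}$ with $a_i=w_{i,l}^{\star}$ term by term rather than via $\|\bm{W}_{\mathsf{lift}}^{\star}\|^2\|\bm{a}\|_2^2$; the two estimates agree after inserting $\mu r^2\leq n$. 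The paper's route is more modular (the spectral-norm bounds on $\bm{W}^{\star}$ and $\bm{W}_{\mathsf{lift}}^{\star}$ are reused in the proof of Lemma~\ref{lemma:spectrum-Astar-AstarT-TC}, for instance), whereas yours is self-contained and avoids introducing any operator-norm abstractions; in both proofs the condition $\mu r^2\leq n$ plays the same role, absorbing the quadratic-in-$r$ cross terms into the linear-in-$r$ diagonal ones.
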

Taking Lemma \ref{lem:size-W-Wlift-TC} collectively with \eqref{eq:P-diag-AAT-bound-TC}, \eqref{eq:P-off-diag-ZZt-intermediate-TC}, \eqref{eq:P-offdiag-AZ-intermediate-TC} and combining terms, we arrive at
\begin{align*}
\eqref{eq:P-diag-AAT-bound-TC} + \eqref{eq:P-off-diag-ZZt-intermediate-TC} + \eqref{eq:P-offdiag-AZ-intermediate-TC}
	& \lesssim\Bigg(
	\frac{\mu^{3/2}r\sqrt{\log n}}{n^{3/2}p} + \sqrt{\frac{\mu^{2}r\log n}{n^{2}p}}  + \frac{\mu r}{n} \Bigg)\nu_{\max}^{2}
\end{align*}
with probability at least $1-O(n^{-7})$,
provided that $\mu \log n \leq n$ and $p\gtrsim \frac{\mu^{3/2}r\log^{2.5}n}{n^{3/2}}$. This taken together with \eqref{eq:decompose-M-Mstar-Z-Astar-TC} concludes the proof.


\bigskip\noindent
{\em Proof of the relation~\eqref{eq:Qi-tail-bound-exp-bound-TC}.}
We first make note of a connection between $\bm{Q}_{i}$ and $\bm{Z}_{\cdot,i}$ as follows
\begin{equation}
	\big\|\bm{Q}_{i}\big\|\leq\big\|\bm{Z}_{\cdot,i}\bm{Z}_{\cdot,i}^{\top}\big\|+\big\|\mathcal{P}_{\mathsf{diag}}\big(\bm{Z}_{\cdot,i}\bm{Z}_{\cdot,i}^{\top}\big)\big\|\leq2\big\|\bm{Z}_{\cdot,i}\big\|_{2}^{2},
	\label{eq:Qi-Zi-relation-TC}
\end{equation}
which motivates us to first control the size of $\bm{Z}_{\cdot,i}$.
By construction, each entry $Z_{j,i}$ can be written as $Z_{j,i}= \big(\frac{1}{p} \delta_{j,i} -1\big) A_{j,i}^{\star}$, where $\{\delta_{j,i}\}$ is a collection of independent Bernoulli random variables with mean $p$. This observation allows one to derive  
\begin{align*}
B_{z} & \coloneqq\max_{i,j}\big|Z_{j,i}\big|\leq \frac{1}{p} \big\|\bm{A}^{\star}\big\|_{\infty} ;\\
v_{z} & \coloneqq\mathbb{E}\Big[\big\|\bm{Z}_{\cdot,i}\big\|_{2}^{2}\Big] = \frac{1-p}{p}\sum\nolimits_{j}\big(A_{j,i}^{\star}\big)^{2} \leq\frac{1}{p}\big\|\bm{A}^{\star}\big\|_{\infty,2}^{2} .
\end{align*}
The matrix Bernstein inequality (see Corollary~\ref{thm:matrix-Bernstein-friendly})  then yields
\begin{align}
\big\|\bm{Z}_{\cdot,i}\big\|_{2} & \leq4\sqrt{v_{z}\log n}+6B_{z}\log n\nonumber\\
 & \leq \Bigg(4\sqrt{\frac{\log n}{p}}\big\|\bm{A}^{\star}\big\|_{\infty,2}+\frac{6\log n}{p}\big\|\bm{A}^{\star}\big\|_{\infty}\Bigg) \eqqcolon\beta_{z}
\label{eq:Zi-UB-betaz-TC}
\end{align}
with probability at least $1-2n^{-7}$, which combined with \eqref{eq:Qi-Zi-relation-TC} gives
\begin{align}
	\mathbb{P}\Big\{ \big\|\bm{Q}_{i}\big\|_{2}  \geq 2\beta_{z}^2 \Big\} \leq 2n^{-7}.
\label{eq:Qi-Zi-UB-betaz-TC}
\end{align}
Recalling that $\mathbb{E}[\bm{Q}_i]=\bm{0}$, one can derive
\begin{align*}
 & \Big\|\mathbb{E}\big[\bm{Q}_{i}\mathbbm1\big\{\big\|\bm{Q}_{i}\big\|_{2}\leq2\beta_{z}^{2}\big\}\big]\Big\|=\Big\|\mathbb{E}\big[\bm{Q}_{i}\big]-\mathbb{E}\big[\bm{Q}_{i}\mathbbm1\big\{\big\|\bm{Q}_{i}\big\|_{2}>2\beta_{z}^{2}\big\}\big]\Big\|\\
	& \quad=\Big\|\mathbb{E}\big[\bm{Q}_{i}\mathbbm1\big\{\big\|\bm{Q}_{i}\big\|_{2}>2\beta_{z}^{2}\big\}\big]\Big\|\overset{(\mathrm{i})}{\leq}\mathbb{P}\big\{\big\|\bm{Q}_{i}\big\|_{2}>2\beta_{z}^2 \big\}\cdot  \frac{2}{p^{2}}\big\|\bm{A}^{\star}_{\cdot,i}\big\|_{2}^{2} \\
 & \quad\overset{(\mathrm{ii})}{\leq}  \frac{4}{n^7p^{2}}\big\|\bm{A}^{\star}\big\|_{\infty,2}^{2} .
\end{align*}
Here, (i) relies on \eqref{eq:Qi-Zi-relation-TC} and the fact $\|\bm{Z}_{\cdot,i}\|_2\leq p^{-1}\|\bm{A}^{\star}_{\cdot,i}\|_2$ (by construction), whereas (ii) results from the calculation in \eqref{eq:Zi-UB-betaz-TC}.

\paragraph{Proof of Lemma~\ref{lemma:spectrum-Astar-AstarT-TC}.}
Define the normalized tensor factors as 
\[
	\overline{\bm{w}}_{i}^{\star}:=\bm{w}_{i}^{\star}/\left\Vert \bm{w}_{i}^{\star}\right\Vert _{2}
	\qquad (1\leq i\leq r),
\]
and it is convenient to introduce the following auxiliary matrices that contain information about them:
\begin{align*}
\overline{\bm{W}}^{\star}\coloneqq \left[\overline{\bm{w}}_{1}^{\star},\cdots,\overline{\bm{w}}_{r}^{\star}\right],\qquad\overline{\bm{W}}_{\mathsf{lift}}^{\star}\coloneqq \left[\overline{\bm{w}}_{1}^{\star}\otimes\overline{\bm{w}}_{1}^{\star},\cdots,\overline{\bm{w}}_{r}^{\star}\otimes\overline{\bm{w}}_{r}^{\star}\right].
\end{align*}
Additionally, we introduce a diagonal matrix $\bm{D}^{\star}\in\mathbb{R}^{r\times r}$ whose diagonal entries are given by
\[
	\big[ \bm{D}^{\star} \big]_{i,i} =\big\| \bm{w}_{i}^{\star} \big\|_{2}^3 = v_{i},\qquad1\leq i\leq r.
\]
The matrices introduced above allow one to express $\bm{A}^{\star} =\overline{\bm{W}}^{\star} \bm{D}^{\star}  \overline{\bm{W}}^{\star\top}_{\mathsf{lift}} $ and $\bm{A}^{\star}\bm{A}^{\star\top}$ as follows
\begin{align}
	\label{defn:Astar-Astar-top-expression-TC}
	\bm{A}^{\star}\bm{A}^{\star\top}=\overline{\bm{W}}^{\star} \bm{D}^{\star}  \overline{\bm{W}}^{\star\top}_{\mathsf{lift}} \overline{\bm{W}}^{\star}_{\mathsf{lift}}   \bm{D}^{\star}   \overline{\bm{W}}^{\star\top}.
\end{align}
Clearly, the rank of $\bm{A}^{\star}\bm{A}^{\star\top}$ is bounded above by $r$, and hence it suffices to lower bound $\lambda_i(\bm{A}^{\star}\bm{A}^{\star\top})$ when $i\leq r$.

In order to characterize the spectrum of $\bm{A}^{\star}\bm{A}^{\star\top}$, we  first look at the eigenvalues of $\overline{\bm{W}}^{\star\top} \overline{\bm{W}}^{\star}$ and $\overline{\bm{W}}^{\star\top}_{\mathsf{lift}} \overline{\bm{W}}^{\star}_{\mathsf{lift}}$. Write
\begin{equation}
	\overline{\bm{W}}^{\star\top} \overline{\bm{W}}^{\star}=\bm{I}_{r}+ \bm{R}, 
	\qquad\text{and}\qquad
	\overline{\bm{W}}^{\star\top}_{\mathsf{lift}}\overline{\bm{W}}^{\star}_{\mathsf{lift}}
	=\bm{I}_{r}+{\bm{R}}_{\mathsf{lift}}
	\label{eq:W_top_residual_decomp}
\end{equation}
for some residual matrices $\bm{R},{\bm{R}}_{\mathsf{lift}}\in\mathbb{R}^{r\times r}$ (which are off-diagonal matrices). By virtue of the definition \eqref{eq:defn-incoherence-TC}, we immediately obtain
\begin{equation*}
	\left\Vert \bm{R}\right\Vert _{\infty}\leq\sqrt{\mu /n},
	\qquad\text{and}\qquad
	\big\|{\bm{R}}_{\mathsf{lift}}\big\|_{\infty}\leq\mu /n,
\end{equation*}
thus indicating that
\begin{equation}
	\| \bm{R}\| \leq r\left\Vert \bm{R}\right\Vert _{\infty}\leq r\sqrt{\mu /n},
	\quad ~~
	\big\|{\bm{R}}_{\mathsf{lift}} \big\|\leq r \, \big\|{\bm{R}}_{\mathsf{lift}}\big\|_{\infty}\leq\mu r /  n .
	\label{eq:spectral-norm-residual_C_UB}
\end{equation}
Putting these together with \eqref{eq:W_top_residual_decomp} and invoking Weyl's inequality give
\begin{align}
\label{eq:spectraum-WWstar-bar-TC}
\max_{i}\Big|\lambda_{i}\big(\overline{\bm{W}}^{\star\top}\overline{\bm{W}}^{\star}\big)-1\Big| & \leq\left\Vert \bm{R}\right\Vert \leq r\sqrt{\mu /n},
\end{align}
which together with the assumption $\mu r^{2}\leq n$ further reveals that
\begin{align}
\label{eq:W_op_UB}
\big\|\overline{\bm{W}}^{\star}\big\| & =\sqrt{\lambda_{1}\big(\overline{\bm{W}}^{\star\top}\overline{\bm{W}}^{\star}\big)}\leq\sqrt{1+r\sqrt{\mu /n}} \leq 2.
\end{align}

We now return to study $\bm{A}^{\star}\bm{A}^{\star\top}$. In view of \eqref{defn:Astar-Astar-top-expression-TC} and \eqref{eq:W_top_residual_decomp}, one can decompose $\bm{A}^{\star}\bm{A}^{\star\top}$ into the following two terms
\begin{align}
	\bm{A}^{\star}\bm{A}^{\star\top}
	=  \underset{\eqqcolon\, \bm{G}_1 }{\underbrace{  \overline{\bm{W}}^{\star} \big( \bm{D}^{\star} \big)^2 \overline{\bm{W}}^{\star\top} }}
	+  \underset{\eqqcolon\, \bm{G}_2 }{\underbrace{ \overline{\bm{W}}^{\star}  \bm{D}^{\star}  {\bm{R}_{\mathsf{lift}}}  \bm{D}^{\star}   \overline{\bm{W}}^{\star\top} }}.
	\label{eq:decomposition-AAstar-G1-G2-TC}
\end{align}
Making use of the bounds \eqref{eq:spectral-norm-residual_C_UB} and \eqref{eq:W_op_UB} immediately leads to
\[
	\big\|\bm{G}_{2}\big\| \leq  \big\|\overline{\bm{W}}^{\star}\big\|^{2}\big\|\bm{D}^{\star}\big\|^{2}\big\|\bm{R}_{\mathsf{lift}}\big\|\leq 4\mu r \nu_{\max}^2 / n.
\]
Regarding $\bm{G}_1$, it can be directly seen that  the  non-zero eigenvalues of $\bm{G}_1$ coincide with those of $ \bm{D}^{\star}  \overline{\bm{W}}^{\star\top}\overline{\bm{W}}^{\star}  \bm{D}^{\star} $, where the latter can be decomposed into
\[
	 \bm{D}^{\star}  \overline{\bm{W}}^{\star\top}\overline{\bm{W}}^{\star}  \bm{D}^{\star}
	= (\bm{D}^{\star})^2 +  \bm{D}^{\star}  \bm{R}  \bm{D}^{\star} .
\]
As a result, for any $1\leq i\leq r$ one can derive
\begin{align*}
	\left|\lambda_{i}\big(\bm{G}_{1}\big)-\lambda_{i}\big( \big(\bm{D}^{\star}\big)^2 \big)\right| & =\left|\lambda_{i}\big(\bm{D}^{\star} \overline{\bm{W}}^{\star\top}\overline{\bm{W}}^{\star} \bm{D}^{\star} \Big)-\lambda_{i}\big(\big(\bm{D}^{\star}\big)^{2}\big)\right|\\
 & \leq \big\Vert  \bm{D}^{\star} \bm{R} \bm{D}^{\star} \big\Vert
   \leq \left\Vert \bm{D}^{\star}\right\Vert ^{2}\left\Vert \bm{R}\right\Vert \leq r\sqrt{\frac{\mu}{n}}\,\nu_{\max}^{2}.
\end{align*}
This taken together with the decomposition \eqref{eq:decomposition-AAstar-G1-G2-TC} leads to
\[
	\left|\lambda_{i}\big(\bm{A}^{\star}\bm{A}^{\star\top}\big)-\lambda_{i}\big(\bm{G}_{1}\big)\right|\leq\big\|\bm{G}_{2}\big\|\leq\frac{4\mu r\nu_{\max}^{2}}{n},
\]
thus indicating that
\begin{align*}
	& \left|\lambda_{i}\big(\bm{A}^{\star}\bm{A}^{\star\top}\big)-\lambda_{i}\big(\big(\bm{D}^{\star}\big)^{2}\big)\right| \\
	&\qquad\qquad \leq\left|\lambda_{i}\big(\bm{G}_{1}\big)-\lambda_{i}\big(\big(\bm{D}^{\star}\big)^{2}\big)\right|+\left|\lambda_{i}\big(\bm{A}^{\star}\bm{A}^{\star\top}\big)-\lambda_{i}\big(\bm{G}_{1}\big)\right|\\
 	& \qquad\qquad \leq r\sqrt{\frac{\mu}{n}}\,\nu_{\max}^{2}+\frac{4\mu r \nu_{\max}^{2}}{n}
	\leq  8 \max\Big\{ \sqrt{\frac{\mu}{n}}, \frac{\mu }{n} \Big\} r\nu_{\max}^{2}.
\end{align*}
If $16\max\{ \frac{\mu}{n}, \sqrt{\frac{\mu}{n}} \big\} r\nu_{\max}^{2}\leq \nu_{\min}^{2}$, then one has $\big|\lambda_{i}\big(\bm{A}^{\star}\bm{A}^{\star\top}\big)-\lambda_{i}\big(\big(\bm{D}^{\star}\big)^{2}\big)\big| \leq  \nu_{\min}^{2}/2$. In addition, letting $\nu_{(i)}$ be the $i$-th largest element in $\{\nu_i\}_{1\leq i\leq r}$, we have $\lambda_{i}\big( \big(\bm{D}^{\star}\big)^{2} \big) = \nu_{(i)}^2$ and hence arrive at
\begin{align*}
	\nu_{\min}^{2} / 2 \leq \nu_{(i)}^2  -  \nu_{\min}^{2} / 2 \leq
	\lambda_{i}\big(\bm{A}^{\star}\bm{A}^{\star\top}\big) & \leq  \nu_{(i)}^2  +  \nu_{\min}^{2} / 2 \leq 2 \nu_{\max}^{2}
\end{align*}
for any $1\leq i\leq r$, as claimed.

\paragraph{Proof of Lemma~\ref{lem:size-W-Wlift-TC}.}

Define the following two matrices containing information about the tensor factors:
\begin{subequations}
\begin{align}
\bm{W}^{\star} & \coloneqq\big[\bm{w}_{1}^{\star},\cdots,\bm{w}_{r}^{\star}\big]\in\mathbb{R}^{n\times r},   \label{eq:defn-Wstar_TC}\\
\bm{W}_{\mathsf{lift}}^{\star} & \coloneqq\big[\bm{w}_{1}^{\star}\otimes\bm{w}_{1}^{\star},\cdots,\bm{w}_{r}^{\star}\otimes\bm{w}_{r}^{\star}\big]\in\mathbb{R}^{n^{2}\times r}.
	\label{eq:defn-Wstar-lift-TC}
\end{align}
\end{subequations}
Given that $\bm{W}^{\star\top}\bm{W}^{\star}=\big[\big\langle\bm{w}_{i}^{\star},\bm{w}_{j}^{\star}\big\rangle\big]_{1\leq i,j\leq r}$, its diagonal part satisfies
\begin{align*}
	\Big\| \mathcal{P}_{\mathsf{diag}}\big(\bm{W}^{\star\top}\bm{W}^{\star}\big) \Big\|
	& = \Big\| \mathsf{diag}\Big(\big[\big\|\bm{w}_{i}^{\star}\big\|_{2}^{2}\big]_{1\leq i\leq r}\Big) \Big\|
	\leq \nu_{\max}^{2/3}.
\end{align*}
In addition, the off-diagonal part of $\bm{W}^{\star\top} \bm{W}^{\star}$ satisfies
\begin{align*}
\Big\|\mathcal{P}_{\mathsf{off}\text{-}\mathsf{diag}}\big( \bm{W}^{\star\top} \bm{W}^{\star}\big)\Big\|
	& \leq\sqrt{\sum_{i\neq j}\big|\big\langle\bm{w}_{i}^{\star},\bm{w}_{j}^{\star}\big\rangle\big|^{2}}\leq\sqrt{\frac{\mu}{n}\sum_{i\neq j}\big\|\bm{w}_{i}^{\star}\big\|_{2}^{2}\big\|\bm{w}_{j}^{\star}\big\|_{2}^{2}}\\
	& \leq \sqrt{\frac{\mu r^{2}}{n}} \max_i \big\|\bm{w}_i^{\star} \big\|_2^2 =  \nu_{\max}^{2/3} \sqrt{\frac{\mu r^{2}}{n}},
\end{align*}
where the second inequality   
 relies on the definition \eqref{eq:defn-incoherence-TC} of the incoherence parameter, and the last relation follows from the definition of $\nu_{\max}$ in \eqref{eq:defn-wi-wmin-wmax-TC}. 
 Consequently, if $\mu r^{2}\leq n$, then
\begin{align}
	\big\|\bm{W}^{\star}\big\|^2 & = \big\| \bm{W}^{\star\top} \bm{W}^{\star}\big\|
	\leq \Big\|\mathcal{P}_{\mathsf{diag}}\big( \bm{W}^{\star\top} \bm{W}^{\star}\big)\Big\|+\Big\|\mathcal{P}_{\mathsf{off}\text{-}\mathsf{diag}}\big( \bm{W}^{\star\top} \bm{W}^{\star}\big)\Big\| \nonumber\\
	& \leq \nu_{\max}^{2/3} + \nu_{\max}^{2/3}\sqrt{\frac{\mu r^{2}}{n}} \leq  2\nu_{\max}^{2/3}.
	\label{eq:Wsquare-norm-bound}
\end{align}
Repeating similar arguments also reveals that
\begin{align}
	\big\|\bm{W}^{\star}_{\mathsf{lift}}\big\|^2  \leq  2\nu_{\max}^{4/3}.
	\label{eq:Wsquare-lift-norm-bound}
\end{align}

Next, it is readily seen from the definition \eqref{eq:defn-incoherence-TC}  that
\begin{align*}
	\big\|\bm{W}^{\star}\big\|_{2,\infty} & \leq\sqrt{r}\max_{i}\|\bm{w}_{i}^{\star}\|_{\infty}\leq\sqrt{\frac{\mu r}{n}}\max_{i}\|\bm{w}_{i}^{\star}\|_{2}
	=\sqrt{\frac{\mu r}{n}} \nu_{\max}^{1/3},\\
\big\|\bm{W}_{\mathsf{lift}}^{\star}\big\|_{2,\infty} & \leq\sqrt{r}\max_{i}\|\bm{w}_{i}^{\star}\|_{\infty}^{2}\leq\frac{\mu \sqrt{r}}{n}\max_{i}\|\bm{w}_{i}^{\star}\|_{2}^{2}
	=\frac{\mu \sqrt{r}}{n} \nu_{\max}^{2/3}.
\end{align*}
Combining these bounds with \eqref{eq:Wsquare-norm-bound} and \eqref{eq:Wsquare-lift-norm-bound} immediately yields
\begin{align*}
\big\|\bm{A}^{\star}\big\|_{\infty,2} & =\Big\Vert\bm{W}^{\star}\big(\bm{W}_{\mathsf{lift}}^{\star}\big)^{\top}\Big\Vert_{\infty,2}\leq\|\bm{W}^{\star}\|\left\Vert \bm{W}_{\mathsf{lift}}^{\star}\right\Vert _{2,\infty}\leq\frac{\mu\sqrt{2r}}{n}\nu_{\max} ,\\
\big\|\bm{A}^{\star}\big\|_{\infty} & =\Big\Vert\bm{W}^{\star}\big(\bm{W}_{\mathsf{lift}}^{\star}\big)^{\top}\Big\Vert_{\infty}\leq\|\bm{W}^{\star}\|_{2,\infty}\left\Vert \bm{W}_{\mathsf{lift}}^{\star}\right\Vert _{2,\infty}\leq\frac{\mu^{3/2}r}{n^{3/2}}\nu_{\max} ,\\
\big\|\bm{A}^{\star}\big\|_{2,\infty} & =\Big\Vert\bm{W}^{\star}\big(\bm{W}_{\mathsf{lift}}^{\star}\big)^{\top}\Big\Vert_{2,\infty}\leq\|\bm{W}^{\star}\|_{2,\infty}\left\Vert \bm{W}_{\mathsf{lift}}^{\star}\right\Vert \leq\sqrt{\frac{2\mu r}{n}}\nu_{\max} .
\end{align*}
%


 \section{Notes}

This section provides further pointers to the applications studied in this chapter, and singles out a brief list of applications we have omitted.

Before proceeding, it is worth pointing out several important facts. First, for many applications (e.g., phase retrieval, matrix and tensor completion), the spectral method alone does not allow for perfect reconstruction of the unknowns even when it is information-theoretically feasible to do so.  Instead, the spectral method frequently serves as a suitable initialization step for these applications, and its estimate can often be further refined by means of nonconvex optimization algorithms like gradient descent and alternating minimization; see \citet{chi2019nonconvex,jain2017non} for overviews of recent advances.
Second, throughout this chapter, we have assumed that the underlying matrix is exactly low-rank, and in addition the spectral methods deployed know the correct rank. However, in reality, data matrices are rarely exactly low-rank. It is therefore of great importance to develop and analyze methods that can handle such misspecified cases. When the reconstruction error of the matrix is considered, several methods are capable of achieving graceful tradeoff between the estimation error and the approximation error, without knowing the correct rank, e.g. e.g.,~\citet{koltchinskii2011nuclear,chatterjee2014universal,negahban2011estimation}. 
In addition, further discussions (e.g., convex relaxation approaches and nonconvex landscape analysis) about several of these applications can be found in \citet{candes2014mathematics,wainwright2019high,wright2020high,zhang2020symmetry}.



\paragraph{PCA, factor models and covariance estimation.}

PCA and factor models are among the most classic and extensively studied topics in statistics \citep{anderson1962introduction,fan2020statistical,wainwright2019high}. The model considered in Section~\ref{sec:formulation-PCA} has been studied by, for example, \citet{johnstone2001distribution,paul2007asymptotics,nadler2008finite, perry2016optimality, xie2018sequential, wang2017asymptotics, fan2018spiked, bao2020statistical}  under the name of  spiked covariance models,  
covering both the finite-sample regime and high-dimensional asymptotics.
A more recent strand of work extended the theory to accommodate heteroskedastic noise and missing data (including heterogeneous missing patterns) \citep{lounici2014high,zhang2018heteroskedastic,cai2019subspace,zhu2019high}, as well as exponential family distributions~\citep{liu2018pca}. In addition to providing the distance control between the spectral estimate and the true principle subspace, \citet{koltchinskii2016asymptotics} and \citet{fan2019distributed} also studied the bias of the spectral estimate under various types of data distributions.
It is clearly impossible to review the enormous literature in a monograph of this length; the interested reader is referred to the overview papers \citet{johnstone2018pca,fan2018PCA,vaswani2018rethinking,balzano2018streaming}  and the recent books
  \citet{fan2020statistical,wainwright2019high} for overviews of contemporary developments on this topic (with particular emphasis on high-dimensional data).
In addition, this monograph does not account for the sparsity structure, or a superposition of low-rank and sparsity structure,  where are commonly imposed on either the covariance matrix or the precision matrix \citep{JohLu09,ma2013sparse,vu2012minimax,cai2013sparse,candes2011robust,chandrasekaran2011rank,chandrasekaran2010latent}. These additional structural assumptions play a crucial role in further dimension reduction and are useful for, say, learning graphical models, video surveillance in computer vision, and portfolio allocation and risk managements in finance;   see \citet{fan2020statistical,ma2016GPCA,wainwright2019high,wright2020high} for more detailed discussions.


\paragraph{Applications of PCA in statistical and econometric modeling.}   PCA has been widely applied to estimate dimension-reduced spaces in multiple-index models \citep{Li:92, duan1991slicing,cook2007fisher,xia2009adaptive,li2018sufficient}, and latent factors in econometric modeling \citep{forni2000generalized, stock2002forecasting, bai2002determining,bai2003inferential,bai2009panel,ahn2013eigenvalue,fan2015power,fan2016projected}.
For recent reviews of this topic, we refer the readers to \citet{stock2016dynamic} for dynamic factor models with applications to macroeconomics, to \citet{bai2016econometric} for time series and panel data models, to \citet{fan2020robust} for robust factor models and large covariance estimation, and to \citet{fan2021recent}  for  factor models and their broader applications to econometric learning.  In particular, factor models have been frequently employed to adjust correlated covariates in high-dimensional model selection, large-scale inference, predictions, treatment evaluations, among others;  see \citet{fan2020robust, fan2021recent} and the references therein.

\paragraph{Graph clustering and community recovery.}

Spectral methods---possibly coupled with other subsequent refining schemes like
$k$-means---are among the most widely used algorithms for graph clustering
\citep{mcsherry2001spectral,rohe2011spectral,balakrishnan2011noise,chaudhuri2012spectral,fishkind2013consistent,sarkar2015role,jin2015fast,gao2017achieving,zhang2020theoretical,newman2013spectral,chen2015phase,zhang2020detecting,le2015estimating,jin2015fast,le2018concentration,chen2020global}.
While a large fraction of earlier papers required the average vertex degree  to be significantly larger than  $\log n$,
\citet{lei2015consistency} broadened the coverage of the theory by accommodating sparse graphs with average  degrees as low as $O(\log n)$. This, however, should be differentiated from the ultra-sparse regime with average degrees $O(1)$; in this scenario, spectral methods based on  vanilla adjacency matrices no longer work, and  more intelligent designs are needed  to effectively detect the communities \citep{coja2010graph,massoulie2014community,chin2015stochastic,le2015sparse}.
The theory available for spectral clustering extends far beyond the two-community SBM presented herein, examples including SBMs with growing communities \citep{rohe2011spectral}, degree-corrected SBMs \citep{lei2015consistency,lei2014generic}, graphs with locality \citep{chen2016community}, mixed membership models \citet{fan2019simple,han2019universal}, hyper-graphs \citep{ahn2018hypergraph,michoel2012alignment,cole2020exact}, and directed graphs \citep{wang2020spectral}. An abundance of other paradigms, most notably convex relaxation, have also proved effective for clustering \citep{jalali2011clustering,amini2013pseudo,abbe2014exact,hajek2015achieving,cai2015robust,zhao2012consistency,li2018convex,zhang2020theoretical,yuan2018community,fei2018exponential,fei2019achieving}. We recommend the article \citet{abbe2017community} for an overview of recent developments.


 \paragraph{Gaussian mixture models.}

The Gaussian mixture model is among the most classic and convenient statistical models to capture the effect of multi-modal and heterogeneous data
(e.g., \citet{pearson1894contributions,titterington1985statistical,xu1996convergence,dasgupta1999learning,hsu2013learning,kalai2010efficiently,balakrishnan2014statistical,xu2016global,jin2016local,fei2018hidden,jin2017phase,dan2020sharp,han2020eigen}).
Unlike parameter estimation (e.g., estimating the centers) that does not require center separation \citep{wu2020optimal},
the feasibility of reliable clustering in Gaussian mixture models
is dictated by the minimum center separation \citep{lu2016statistical,cai2018rate,ndaoud2018sharp,giraud2019partial,chen2020cutoff}.
While spectral methods naturally come into mind for the clustering task and have been frequently applied in the literature \citep{von2007tutorial,vempala2004spectral,kannan2008spectral,kumar2010kmeans,awasthi2012improved},
sharp statistical analysis of spectral clustering (and its variants)
has been lacking until recently \citep{ndaoud2018sharp,loffler2019optimality,srivastava2019robust,abbe2020ell_p}.
While it might be tempting to impose a minimum spectral gap requirement on the matrix $\bm{\Theta}^{\star}$ (cf. \eqref{eq:defn-Theta-star-F-star-GMM}) in order to invoke the $\sin\bm{\Theta}$ theorems,
such a condition can be dropped as long as an appropriate spectral clustering scheme is employed \citep{loffler2019optimality}.
Encouragingly, spectral clustering (with the aid of $k$-means) also achieves information-theoretically optimal mis-clustering rate exponents for a couple of scenarios  \citep{loffler2019optimality,abbe2020ell_p}.


 \paragraph{Ranking from pairwise comparisons.}

 Deploying spectral methods to address ranking tasks has a long history, dating back at least to \citet{seeley1949net}. We refer the readers to~\citet{vigna2016spectral} for a historical account of this subject. The specific instance of spectral methods  introduced here was due to \citet{negahban2016rank}, and has been subsequently analyzed in multiple papers \citep{rajkumar2014statistical,chen2015spectral,jang2016top,chen2020partial}. It bears close similarity to the celebrated PageRank algorithm heavily used by Google~\citep{page1999pagerank}.  \citet{negahban2016rank} developed the first $\ell_{2}$ statistical guarantees when estimating the underlying score vector, accounting for missing data and general comparison graphs as well. The $\ell_2$ guarantees for random comparison graphs were further sharpened in \citet{chen2015spectral} (which  closed the logarithmic gap).
Note, however, that the $\ell_2$ score estimation error bounds alone typically do not imply the ranking accuracy. Motivated by this inadequacy, \citet{chen2015spectral} directly studied the top-$K$ ranking accuracy, by demonstrating the  optimality of spectral ranking followed by an iterative refinement scheme.
However, this left open another question regarding whether the follow-up refinement step is necessary in achieving optimal ranking accuracy.
\citet{jang2016top} attempted to address this question by establishing desired ranking accuracy of spectral methods when the number of pairwise comparisons available is large. A complete picture was subsequently obtained by \citet{chen2017spectral}, which proved the optimality of spectral methods  in top-$K$ ranking all the way to the sample-starved regime. Moving beyond exact top-$K$ ranking, the recent work \citet{chen2020partial} studied the capability (and limitations) of spectral methods in handling
partial recovery of the top-$K$ ranked items. Moving beyond the BTL model, there are also a number of other ranking models that have been extensively studied in the literature (e.g., the Plackett-Luce model for multi-way comparisons \citep{hunter2004mm,hajek2014minimax,oh2014learning,agarwal2018accelerated}, the stochastically transitive model \citep{shah2016stochastically,shah2019feeling}), which are beyond the scope of the present monograph.



\paragraph{Phase retrieval.}

\citet{netrapalli2015phase} proposed the first spectral method (cf.~Section~\ref{sec:pr-alg}) for phase retrieval,
and established the performance guarantees when the sample size exceeds $m\gtrsim n\log^3 n$.
The theoretical support was then tightened by \citet{candes2015phase}, allowing the sample size to be as low as $m \asymp n\log n$.
Similar theory was provided for the random coded diffraction pattern model in \citet{candes2015phase}. Several variations and generalizations of the spectral method have been further proposed to improve performance. The first order-wise optimal spectral method  for phase retrieval was proposed by  \citet{chen2015solving}, based on the truncation idea. This method has multiple variants \citep{zhang2016provable,li2017nonconvex, wang2017solving}, and has been shown to be robust against noise and corruptions. The precise asymptotic characterization of the spectral method was first obtained in \citet{lu2020phase}. Based on this characterization, \citet{mondelli2017fundamental,luo2019optimal} later devised optimal designs of spectral methods in phase retrieval when the sensing matrix follows the Gaussian design, where its sensitivity to model mismatch was studied in \citet{monardo2019sensitivity}. \citet{ma2019spectral,dudeja2020analysis} explored similar questions when the sensing matrix is Haar distributed (e.g., an isotropically random unitary matrix).
The spectral method presented herein has been used to seed a follow-up procedure that in turn enhances estimation accuracy; see, e.g., \citet{netrapalli2015phase,candes2015phase,sanghavi2017local,goldstein2018phasemax,bahmani2016phase,ma2017implicit,chandra2019phasepack,dhifallah2017phase,qu2019convolutional,zhang2017nonconvex,ma2019optimization,tan2019phase,jeong2017convergence,cai2019fast,salehi2018precise}. An alternative to the spectral method, based on a nullspace approach, has been proposed in \citet{chen2017phase}. \citet{fannjiang2020numerics} provided an extensive discussion on initialization strategies for algorithmic phase retrieval, including but not limited to various forms of spectral methods.
Sparse phase retrieval is another important topic when the signal of interest is assumed to be a sparse vector; we refer the interested reader to
\citet{li2013sparse,oymak2012simultaneously,chen2015exact,cai2016optimal,wang66sparse,jagatap2019sample,yang2019misspecified,soltanolkotabi2017structured,yang2016sparse,zhang2018compressive,salehi2018learning,shechtman2014gespar,yuan2019phase,eldar2014phase} and additional references cited therein.

\paragraph{Matrix completion.}

Regarding matrix completion, the spectral method was originally proposed in \citet{achlioptas2007fast,keshavan2010matrix} to estimate (approximately) low-rank matrices in the face of missing data and random corruptions.
Similar to phase retrieval, 
the estimate returned by the spectral method is employed as a suitable initialization to enable fast convergence of nonconvex iterative procedures;
see, e.g., \citet{keshavan2010matrix,keshavan2009matrix,jain2013low,hardt2014understanding,sun2016guaranteed,chen2015fast,zheng2016convergence,boumal2015low,wei2016guarantees,chen2020nonconvex,ma2017implicit,zhang2018fast,jin2016provable,charisopoulos2019lowrank}.
Moreover, there are several nuclear norm penalized estimators that also bear close relevance to the spectral method,
e.g., \citet{koltchinskii2011nuclear}.
We also remark in passing that there are other estimators that can effectively handle the case when the underlying matrix is not exactly low-rank, including but not limited to Universal Singular Value Thresholding \citep{chatterjee2014universal} and its soft-thresholded version \citep{koltchinskii2011nuclear}.
In addition, while our discussion focuses on clean data and uniform random sampling patterns,  it is of great importance to study various noisy and quantized scenarios \citep{keshavan2009matrix,candes2010matrix,cao2015poisson,klopp2014noisy,chen2015fast,mcrae2019low,davenport20141,ma2017implicit,zhang2018primal,krahmer2019convex}, 
as well as non-uniform or deterministic sampling patterns \citep{foucart2019weighted,negahban2012restricted,shapiro2018matrix}.


 \paragraph{Tensor completion and estimation.}

Unfolding-based spectral methods  have been frequently adopted to deal with various tensor estimation problems including tensor PCA, tensor decomposition, tensor completion,
and so on \citep{richard2014a,montanari2016spectral,han2020an,zhang2018tensor,cai2019subspace,cai2019tensorOR,xia2019on,xia2017statistically,moitra2019spectral,liu2020tensor,zhang2020islet,xia2020inference,tong2021scaling}.
When it comes to tensor completion, the first near-optimal $\ell_2$ statistical analysis of spectral methods was due to \citet{montanari2016spectral},
which was subsequently extended by \citet{cai2019subspace} to enable $\ell_{2,\infty}$ error control.
The readers interested in higher-order tensors (beyond third-order tensors) can consult \citet{montanari2016spectral,richard2014a}.
In addition, the theory and algorithm presented herein focus attention on the regime where $r< n$, and fall short of accommodating ``over-complete'' tensors when $r$ rises above $n$. Certain ``contraction'' tricks are needed in order to cope with the over-complete regime; see \citet{hopkins2016fast,montanari2016spectral}.

\paragraph{An extensive but non-exhaustive list of other applications.} Finally, the list of applications discussed in this monograph is clearly far from comprehensive. Spectral methods have been successfully applied to a plethora of other problems,  including but not limited to the following topics:
\begin{itemize}
\setlength\itemsep{0.2em}
\item { matrix sensing:} \citet{tu2016low,zheng2015convergent,ma2019beyond,chen2020learning,tong2020low,tong2020accelerating,lee2017near};
\item { phase synchronization and group synchronization:} \citet{singer2011angular,abbe2020entrywise,ling2020near};
\item { joint matching and map synchronization:} \citet{chen2014near,pachauri2013solving,shen2016normalized,bajaj2018smac,sun2018joint,sun2019k,huang2019tensor,huang2019learning};
\item { covariance sketching and quadratic sensing:} \citet{li2019nonconvex,charisopoulos2019lowrank,sanghavi2017local,chi2017subspace};
\item { blind deconvolution and blind calibration:} \citet{li2016deconvolution,ma2017implicit,huang2018blind,charisopoulos2019composite,chen2020convex,li2018blind,cambareri2016non};
\item { blind demixing:} \citet{ling2019regularized,dong2018nonconvex};
\item { low-rank phase retrieval and phaseless PCA:} \citet{vaswani2017low,nayer2019phaseless,vaswani2020nonconvex};
\item { canonical correlation analysis (CCA):} \citet{cai2018rate,ge2016efficient};
\item { sparse PCA:} \citet{amini2008high,JohLu09};
\item { mixed linear regression:} \citet{yi2014alternating,ghosh2020alternating,kwon2020minimax};
\item { finding hidden cliques:} \citet{alon1998finding};
\item { joint image alignment:} \citet{chen2016projected};
\item { robust subspace recovery and robust PCA:} \citet{yi2016fast,netrapalli2014non,cherapanamjeri2017nearly,tong2020low,zhu2018dual,maunu2019well};
\item { contextual stochastic block models:} \citet{binkiewicz2017covariate,abbe2020ell_p};
\item { learning neural networks:} \citet{zhong2017recovery,fu2020guaranteed};
\item { topic modeling:} \citet{ke2017new};
\item { crowd sourcing:} \citet{ghosh2011moderates,dalvi2013aggregating,karger2013efficient,zhang2014spectral,karger2014budget};
\item { meta learning:} \citet{kong2020meta,du2020few,tripuraneni2020provable};
\item { subspace clustering:} \citet{eriksson2012high,li2019theory};
\item { state aggregation and compression of Markov chains:} \citet{zhang2020spectral,duan2019state};
\item {causal inference}: \citet{amjad2018robust};
\item { passive imaging:} \citet{lee2018spectral}.
\end{itemize}
For the sake of conciseness, we have chosen not to detail these applications,
but instead recommend the interested reader to the above articles and the references therein.

\newpage

\chapter[Fine-grained analysis:  $\ell_{\infty}$ and $\ell_{2,\infty}$ perturbation theory]{Fine-grained spectral analysis: \\ $\ell_{\infty}$ and $\ell_{2,\infty}$ perturbation theory}
\label{cha:Linf-theory}

In a growing number of applications, the $\ell_2$-type distance between subspaces, which is the central subject studied in Chapter~\ref{chap:application-L2}, turns out to be inadequate for performance characterization. Rather, what would be of interest is the entrywise behavior of the eigenvector and the matrix under consideration. This is especially important when the individual entries of the eigenvector or the matrix of interest carry pivotal operational meanings. For example, in a recommendation system, one might be interested in controlling the prediction error of a user's preference on a specific product, which  concerns  a specific entry in a user-product rating matrix; in sensor network localization, one might seek to control the ranging error w.r.t.~a pair of sensors, which corresponds to  entrywise prediction errors in a Euclidean distance matrix; and last but not least, in community recovery, the entries of the leading eigenvector of a certain data matrix might encode the community membership associated with each individual (as explained in Section~\ref{sec:community-detection}).

Tackling the preceding applications calls for development of fine-grained spectral analysis beyond classical $\ell_2$ perturbation theory.  To be more precise, consider once again the observation model
$$
\bm{M}=\bm{M}^{\star} + \bm{E}
$$  previously studied in Chapter~\ref{cha:matrix-perturbation} (cf.~\eqref{eq:perturbed-M}). The sort of fine-grained theory being sought after gravitates around the following questions concerned with $\ell_{\infty}$ and/or $\ell_{2,\infty}$ perturbation:
\begin{itemize}
	\item For a symmetric matrix $\bm{M}^{\star}$, how to characterize the effect of $\bm{E}$ on the $\ell_{\infty}$ perturbation  of the leading eigenvector,
		or the $\ell_{2,\infty}$ perturbation of the rank-$r$ leading eigenspace?
	\item For a general matrix $\bm{M}^{\star}$, how to pin down the $\ell_{\infty}$ perturbation of the leading singular vector, or the $\ell_{2,\infty}$ perturbation  of the rank-$r$ leading singular subspace, in response to the perturbation $\bm{E}$?
	\item How to assess the entrywise estimation error of the matrix estimate produced by the spectral method, and how is it affected by $\bm{E}$?
\end{itemize}
Unfortunately, a direct application of classical $\ell_2$ perturbation theory typically leads to overly crude bounds when coping with the above questions.  In particular, when the $\ell_2$ error is approximately evenly distributed across entries, naively upper bounding the entrywise error by the $\ell_2$ error is often loose by an order-of-magnitude.
In order to conquer such limitations,
this chapter introduces a modern suite of techniques that delivers tight $\ell_\infty$ and $\ell_{2,\infty}$ error control by leveraging the statistical nature of data models.

\section{Leave-one-out analysis: An illustrative example}
\label{sec:rank-1-denoising-LOO}

To paint a high-level picture of the core ideas empowering the $\ell_{\infty}$ and $\ell_{2,\infty}$ analysis, we find it helpful to first look at a pedagogical example of rank-1 matrix denoising, a special case of the formulation introduced in Section~\ref{sec:matrix-denoising-setup}.

\subsection{Setup and algorithm}
\label{sec:setup-rank1-denoising}
Suppose that we observe a noisy copy of an unknown rank-1 matrix $\bm{M}^{\star}$ as follows
\begin{equation}
	\bm{M}=\bm{M}^{\star}+\bm{E}=\lambda^{\star}\bm{u}^{\star}\bm{u}^{\star\top}+\bm{E} \in \mathbb{R}^{n\times n},
	\label{eq:defn-matrix-denoising-rank1}
\end{equation}
where $\lambda^{\star}>0$ and $\bm{u}^{\star}\in \mathbb{R}^{n}$
represent the largest eigenvalue  of $\bm{M}^{\star}$ and its associated eigenvector, respectively.  We assume the Gaussian noise model as in Section~\ref{sec:matrix-denoising-setup}, namely, $\bm{E}$ is a symmetric matrix whose upper triangular part comprises of i.i.d.~entries drawn from $\mathcal{N}(0,\sigma^2)$.
 In addition, we remind the readers of the following incoherence parameter $\mu$  (cf.~Definition~\ref{assump:mc-incoherence}):
\begin{align}
	\mu = n\|\bm{u}^{\star}\|_{\infty}^{2},
	\label{defn:incoherence-rank-1-denoising}
\end{align}
which satisfies $1 \leq \mu \leq n$ in this rank-1 case; see Remark~\ref{remark:range-mu-MC}.

Letting $\lambda$  be the leading eigenvalue of $\bm{M}$ (i.e., $\lambda = \lambda_{1}(\bm{M})$) and $\bm{u}\in \mathbb{R}^{n}$
 the associated eigenvector, the spectral method attempts to estimate $\bm{u}^{\star}$ using $\bm{u}$. In this section, we are particularly interested in controlling the entrywise error, defined in terms of the $\ell_{\infty}$ distance (modulo the global sign):
 \begin{align}
	\mathsf{dist}_{\infty}\big(\bm{u},\bm{u}^{\star}\big)
	\coloneqq \min\big\{ \|\bm{u}-\bm{u}^{\star}\|_{\infty}, \|\bm{u}+\bm{u}^{\star}\|_{\infty} \big\}.
	\label{eq:defn-dist-infty-vector}
\end{align}

\subsection{$\ell_{\infty}$ performance guarantees}

While Section~\ref{sec:performance-denoising-L2} delivers $\ell_2$ estimation guarantees for the spectral estimate $\bm{u}$,
it falls short of characterizing the entrywise behavior---except for the crude and highly suboptimal bound  $\mathsf{dist}_{\infty}(\bm{u},\bm{u}^{\star})\leq \mathsf{dist}(\bm{u},\bm{u}^{\star})$. Encouragingly,  this simple spectral method is provably  accurate in an entrywise fashion, as revealed by the following theorem.
 \begin{theorem}
	 Consider the settings in Section~\ref{sec:setup-rank1-denoising}.
	 There exists some sufficiently small constant $c_0>0$ such that if $\sigma \sqrt{n} \leq c_0 \lambda^{\star}$, then
 \begin{align}
	 \mathsf{dist}_{\infty}\big(\bm{u},\bm{u}^{\star}\big)
	 \lesssim \frac{\sigma (\sqrt{\log n} + \sqrt{\mu} )}{\lambda^{\star}}
 \end{align}
	 holds with probability exceeding $1-O(n^{-8})$.
  \end{theorem}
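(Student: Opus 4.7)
The strategy is to express the spectral estimate through the eigenvalue–eigenvector identity, reduce the $\ell_{\infty}$ error to the size of $(\bm{E}\bm{u})_i$ entrywise, and then tame the correlation between $\bm{E}$ and $\bm{u}$ via a leave-one-out surrogate. I plan to combine this with the $\ell_{2}$ guarantees already available from Section~\ref{sec:matrix-denoising-L2} and a routine self-bounding step. Throughout, I will assume without loss of generality that $\langle\bm{u},\bm{u}^{\star}\rangle\ge 0$, so that $\mathsf{dist}_{\infty}(\bm{u},\bm{u}^{\star})=\|\bm{u}-\bm{u}^{\star}\|_{\infty}$, and invoke the standard bound $\|\bm{E}\|\le 5\sigma\sqrt{n}$ together with $|\lambda-\lambda^{\star}|\le\|\bm{E}\|$ and $\mathsf{dist}(\bm{u},\bm{u}^{\star})\lesssim\sigma\sqrt{n}/\lambda^{\star}$ (Weyl and Davis--Kahan).

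First, from $\lambda\bm{u}=\bm{M}\bm{u}=\lambda^{\star}\langle\bm{u}^{\star},\bm{u}\rangle\bm{u}^{\star}+\bm{E}\bm{u}$, the $i$-th coordinate gives
\[
u_i - u_i^{\star} = \Big(\tfrac{\lambda^{\star}\langle\bm{u}^{\star},\bm{u}\rangle}{\lambda}-1\Big)u_i^{\star} + \tfrac{(\bm{E}\bm{u})_i}{\lambda}.
\]
The scalar prefactor can be bounded by $O(\sigma\sqrt{n}/\lambda^{\star})$ using Weyl's inequality together with $\langle\bm{u}^{\star},\bm{u}\rangle=1-\tfrac12\|\bm{u}-\bm{u}^{\star}\|_2^{2}$ and the $\ell_{2}$ control above. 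Since $|u_i^{\star}|\le\sqrt{\mu/n}$, this contributes at most $O(\sigma\sqrt{\mu}/\lambda^{\star})$ to $|u_i-u_i^{\star}|$, explaining the $\sqrt{\mu}$ term in the theorem. The whole burden therefore falls on controlling $|(\bm{E}\bm{u})_i|$ for each $i$.

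The key step is a leave-one-out construction: for each $m$, define $\bm{M}^{(m)}$ by zeroing out the $m$-th row and column of $\bm{E}$, and let $\bm{u}^{(m)}$ denote its leading eigenvector. Then $\bm{u}^{(m)}$ is statistically independent of $\bm{E}_{m,\cdot}$, so decomposing
\[
(\bm{E}\bm{u})_m = \bm{E}_{m,\cdot}\bm{u}^{(m)} + \bm{E}_{m,\cdot}(\bm{u}-\bm{u}^{(m)}),
\]
the first piece is, conditional on $\bm{u}^{(m)}$, a Gaussian with variance $\sigma^{2}\|\bm{u}^{(m)}\|_2^{2}\le\sigma^{2}$, and hence is at most $O(\sigma\sqrt{\log n})$ with probability $1-O(n^{-10})$; a union bound handles all $m$. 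For the second piece, $|\bm{E}_{m,\cdot}(\bm{u}-\bm{u}^{(m)})|\le\|\bm{E}_{m,\cdot}\|_{2}\,\mathsf{dist}(\bm{u},\bm{u}^{(m)})\lesssim\sigma\sqrt{n}\,\mathsf{dist}(\bm{u},\bm{u}^{(m)})$, so I need a refined $\ell_{2}$ proximity bound between $\bm{u}$ and $\bm{u}^{(m)}$. Applying the sharper form of Davis--Kahan, $\mathsf{dist}(\bm{u},\bm{u}^{(m)})\lesssim\|(\bm{M}-\bm{M}^{(m)})\bm{u}^{(m)}\|_{2}/\lambda^{\star}$, and using the sparse structure of $\bm{M}-\bm{M}^{(m)}$ (only row and column $m$ are nonzero), direct calculation shows
\[
\|(\bm{M}-\bm{M}^{(m)})\bm{u}^{(m)}\|_{2}\lesssim |\bm{E}_{m,\cdot}\bm{u}^{(m)}| + \|\bm{E}_{m,\cdot}\|_{2}\,|u_m^{(m)}|\lesssim \sigma\sqrt{\log n} + \sigma\sqrt{n}\,|u_m^{(m)}|.
\]

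Combining these estimates yields
\[
|u_i - u_i^{\star}| \lesssim \tfrac{\sigma\sqrt{\mu}}{\lambda^{\star}} + \tfrac{\sigma\sqrt{\log n}}{\lambda^{\star}} + \tfrac{\sigma\sqrt{n}}{\lambda^{\star}}\cdot\tfrac{\sigma\sqrt{n}}{\lambda^{\star}}\,|u_i^{(i)}|,
\]
and invoking the assumption $\sigma\sqrt{n}\le c_0\lambda^{\star}$ the last term becomes at most $c_0^{2}|u_i^{(i)}|$. The final hurdle---and where I expect the main technical care to be needed---is to close the self-bounding loop: bounding $|u_i^{(i)}|$ in terms of $|u_i^{\star}|$ and the very quantity $\|\bm{u}-\bm{u}^{\star}\|_{\infty}$ we are trying to control. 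I plan to handle this by writing $|u_i^{(i)}|\le |u_i^{\star}|+|u_i-u_i^{\star}|+\mathsf{dist}(\bm{u},\bm{u}^{(i)})$, re-substituting the bound on $\mathsf{dist}(\bm{u},\bm{u}^{(i)})$ to obtain a linear inequality in $\|\bm{u}-\bm{u}^{\star}\|_{\infty}$, and then absorbing the $c_0^{2}\|\bm{u}-\bm{u}^{\star}\|_{\infty}$ term into the left-hand side thanks to the smallness of $c_0$. A union bound over $i\in[n]$ then produces the advertised $\ell_{\infty}$ estimate with probability $1-O(n^{-8})$.
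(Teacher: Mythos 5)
Your proposal is correct, and the argument closes; it is, however, organized differently from the paper's proof, and the comparison is instructive. The paper's Step~2 anchors at the leave-one-out eigenvalue identity
\[
u_l^{(l)} \;=\; \frac{1}{\lambda^{(l)}}\,\bm{M}^{(l)}_{l,\cdot}\,\bm{u}^{(l)} \;=\; \frac{\lambda^{\star}}{\lambda^{(l)}}\,u_l^{\star}\,\bm{u}^{\star\top}\bm{u}^{(l)},
\]
which exploits the fact that the $l$-th row of $\bm{M}^{(l)}$ is exactly $\bm{M}^{\star}_{l,\cdot}$ and is therefore \emph{noise-free}; the entrywise error is then split as $(u_l - u_l^{(l)}) + (u_l^{(l)} - u_l^{\star})$, with the first piece absorbed into $\|\bm{u}-\bm{u}^{(l)}\|_2$ and the second being a clean multiple of $u_l^{\star}$. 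You instead anchor at the eigenvalue identity for the \emph{full} $\bm{M}$, which produces a multiple of $u_i^{\star}$ plus the noise term $(\bm{E}\bm{u})_i/\lambda$, and it is only this noise term that you attack via leave-one-out. Both routes deploy the same ingredients---conditional Gaussian tail for $\bm{E}_{m,\cdot}\bm{u}^{(m)}$, a Davis--Kahan bound on $\mathsf{dist}(\bm{u},\bm{u}^{(m)})$ driven by the sparse structure of $\bm{M}-\bm{M}^{(m)}$, and a self-bounding absorption---but the bookkeeping differs. What the paper's choice buys is that $u_l^{(l)}-u_l^{\star}$ comes out as a pure prefactor times $u_l^{\star}$ with no residual stochastic term, making the final absorption step a one-level loop in $\|\bm{u}-\bm{u}^{\star}\|_{\infty}$. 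Your route produces a nested pair of linear inequalities (first in $|u_i^{(i)}|$, then in $|u_i-u_i^{\star}|$), so the constant-tracking is slightly heavier; it is correct, but you should be careful when you write ``$c_0^2$'': the factor that multiplies $|u_i^{(i)}|$ is really $(\sigma\sqrt{n}/\lambda^{\star})^2$, which is only \emph{at most} $c_0^2$, and after re-substituting $|u_i^{(i)}|\lesssim|u_i^{\star}|+|u_i-u_i^{\star}|+\sigma\sqrt{\log n}/\lambda^{\star}$ you need the cross term $(\sigma\sqrt{n}/\lambda^{\star})^2|u_i^{\star}|$ to be dominated by $\sigma\sqrt{\mu}/\lambda^{\star}$, which holds because $(\sigma\sqrt{n}/\lambda^{\star})^2\sqrt{\mu/n} = (\sigma\sqrt{n}/\lambda^{\star})\cdot\sigma\sqrt{\mu}/\lambda^{\star}$. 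One small clean-up: fixing $\langle\bm{u},\bm{u}^{\star}\rangle\geq 0$ is not enough---you also need each $\bm{u}^{(i)}$ aligned so that $\|\bm{u}-\bm{u}^{(i)}\|_2=\mathsf{dist}(\bm{u},\bm{u}^{(i)})$; this is routine (argue that if $\bm{u}$ and $\bm{u}^{(i)}$ are both within $O(\sigma\sqrt{n}/\lambda^{\star})<1$ of $\bm{u}^{\star}$ in $\ell_2$ after alignment with $\bm{u}^{\star}$, then they are mutually aligned), but it should be stated.
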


In particular, if the incoherence parameter obeys $\mu \lesssim \log n$ (the case where no entries of $\bm{u}^{\star}$ are significantly larger in magnitude than the average magnitude), then our $\ell_{\infty}$ bound reads
\begin{align}
	 \mathsf{dist}_{\infty}\big(\bm{u},\bm{u}^{\star}\big)
	 \lesssim \frac{\sigma\sqrt{\log n}}{\lambda^{\star}} ,
 \end{align}
which is about $\sqrt{n/\log n}$ times smaller than the $\ell_2$ error bound \eqref{eq:dist-U-Ustar-denoising-L2}, that is, $\mathsf{dist}\big(\bm{u},\bm{u}^{\star}\big)   \lesssim \frac{\sigma\sqrt{ n}}{\lambda^{\star}}$.
This implies that the estimation errors of $\bm{u}$ are dispersed  more or less evenly across all entries---a message that is previously unavailable from classical $\ell_2$ perturbation theory.

\subsection{Key ingredient and intuition: Leave-one-out estimates}
\label{sec:LOO-estimates-denoising}

To facilitate entrywise analysis, a crucial ingredient lies in the introduction of a set of leave-one-out auxiliary estimates, detailed below.

\paragraph{Construction of leave-one-out estimates.} For each $1\leq l\leq n$, let us construct an auxiliary matrix  $\bm{M}^{(l)}$ as follows
\begin{align}
	\bm{M}^{(l)} \coloneqq \lambda^{\star}\bm{u}^{\star}\bm{u}^{\star\top}+\bm{E}^{(l)} ,
	\label{eq:defn-Ml-denoising}
\end{align}
where the noise matrix $\bm{E}^{(l)}$ is generated according to
\begin{equation}
E_{i,j}^{(l)} \coloneqq
\begin{cases}
E_{i,j},\qquad & \text{if }i\neq l\text{ and }j\neq l, \\
0, & \text{else}.
\end{cases}
\label{eq:Eij-l-defn}
\end{equation}
In words, $\bm{M}^{(l)}$ (resp.~$\bm{E}^{(l)}$) is obtained by leaving out the randomness in the $l$-th row/column of $\bm{M}$ (resp.~$\bm{E}$). The pattern of the leave-one-out construction is illustrated in Figure~\ref{fig:loo_illustration}.
Let $\lambda^{(l)}$ and $\bm{u}^{(l)}$ denote respectively the leading eigenvalue and leading eigenvector of $\bm{M}^{(l)}$; 
these leave-one-out estimates are introduced solely for analysis purpose. 
It is important to recognize that by construction, $\bm{M}^{(l)}$ (and hence $\bm{u}^{(l)}$) is independent of the noise in the $l$-th row/column of $\bm{E}$, a fact that plays a pivotal role in controlling the perturbation of the $l$-th entry of $\bm{u}^{\star}$.

\begin{figure}[t]
\begin{center}
\includegraphics[width=0.85\textwidth]{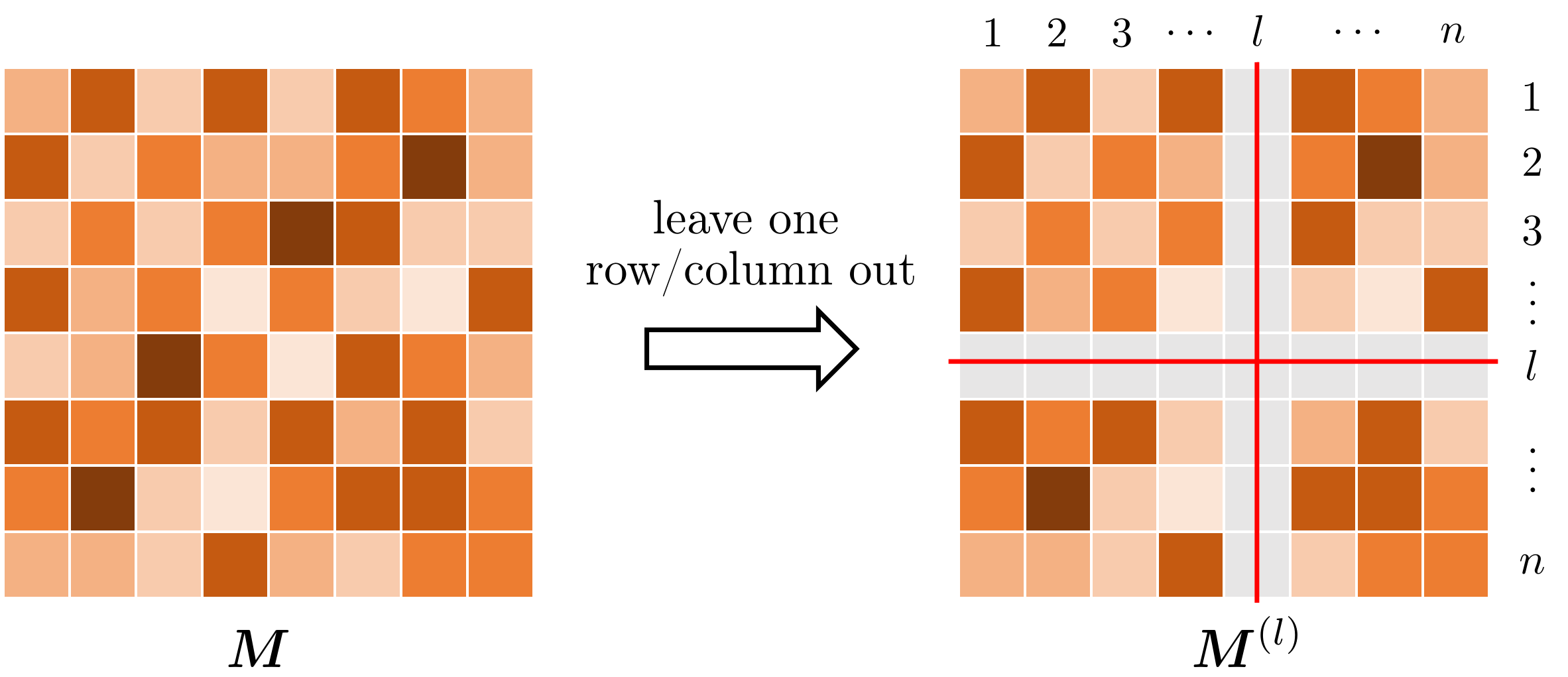}
\end{center}
\caption{Illustration of the leave-one-out auxiliary matrix $\bm{M}^{(l)}$, which removes all the noise in the $l$-th row and the $l$-th column of $\bm{M}$.} \label{fig:loo_illustration}
\end{figure}

\paragraph{Intuition.} Before delving into the proof, let us first explain the rationale at an intuitive level.
\begin{enumerate}
	\item Given that $\bm{u}^{(l)}$ is obtained by dropping only a tiny fraction of the data, we expect $\bm{u}^{(l)}$ to be exceedingly close to $\bm{u}$, i.e.,
	\begin{align}
		\bm{u} \approx \pm \bm{u}^{(l)}.
	\end{align}
	In words,  $\bm{u}^{(l)}$ forms a reliable surrogate of $\bm{u}$, which motivates us to analyze $\bm{u}^{(l)}$ instead (if there are foreseeable benefits to do so).

\item The way we construct $\bm{u}^{(l)}$ makes it particularly convenient to analyze the behavior of the $l$-th entry, denoted by $u_l^{(l)}$.
More specifically, given that $(\lambda^{(l)},\bm{u}^{(l)})$ is an eigenpair of $\bm{M}^{(l)}$, one has (assuming for the moment that $\lambda^{(l)} \neq 0$)
	\begin{align}
		u_{l}^{(l)} & =\frac{1}{\lambda^{(l)}}\bm{M}_{l,\cdot}^{(l)}\bm{u}^{(l)}=\frac{1}{\lambda^{(l)}}\bm{M}_{l,\cdot}^{\star}\bm{u}^{(l)}=\frac{\lambda^{\star}}{\lambda^{(l)}}u_{l}^{\star}\bm{u}^{\star\top}\bm{u}^{(l)}   \label{eq:ul_l-expression-denoising} \\
		& \approx \pm u_{l}^{\star}.
	\end{align}
	Here, the first line follows since, by design, the $l$-th rows of $\bm{M}^{(l)}$
		and $\bm{M}^{\star}$ coincide (both of which are given by $\lambda^{\star}u_{l}^{\star}\bm{u}^{\star\top}$), whereas the second line holds as long as the size $\sigma$ of the noise is sufficiently small, so that $ \lambda^{(l)} / \lambda^{\star} \approx 1$ and $\bm{u}^{\star\top}\bm{u}^{(l)}\approx \pm 1$ according to the $\ell_2$ perturbation theory.
	
	
\end{enumerate}
Combining the above observations suggests that $u_{l} \approx \pm u_{l}^{(l)} \approx \pm u_{l}^{\star}$.

\subsection{Leave-one-out analysis}
\label{sec:LOO-analysis-rank1-denoising}
Now we make rigorous the heuristic argument in the last subsection, which relies heavily on careful statistical analysis.

\subsubsection{Preparation: what we have learned from $\ell_2$ perturbation theory}
Let us start by collecting a few results derived from the $\ell_2$ perturbation theory in Section~\ref{sec:matrix-denoising-L2} for handy reference. Experienced readers can proceed directly to Step 1.

Specifically, suppose that $\sigma\sqrt{n}\leq\frac{1-1/\sqrt{2}}{5}\lambda^{\star}$. Then with probability at least $1-O(n^{-8})$,
\begin{subequations}
\label{eq:L2-perturbation-summary-denoising}
\begin{align}
  \|\bm{E} \|  &\leq 5\sigma \sqrt{n}    & \|\bm{E}^{(l)} \| &\leq \|\bm{E} \| \leq 5\sigma \sqrt{n}  && \\
  \mathsf{dist}( \bm{u} , \bm{u}^{\star} )  &\leq \frac{10\sigma\sqrt{n}}{\lambda^{\star}}  ~~
 & \mathsf{dist}( \bm{u}^{(l)} , \bm{u}^{\star} )  &\leq \frac{10\sigma\sqrt{n}}{\lambda^{\star}} && \\
	| \lambda - \lambda^{\star} |  &\leq 5\sigma \sqrt{n} & | \lambda^{(l)} - \lambda^{\star} |  &\leq 5\sigma \sqrt{n} &&\\
	\max_{j: j\geq 2} | \lambda_j(\bm{M})   |  &\leq 5\sigma \sqrt{n} & \max_{j: j\geq 2}| \lambda_j(\bm{M}^{(l)}) |  &\leq 5\sigma \sqrt{n}
\end{align}
\end{subequations}
hold simultaneously for all $1\leq l\leq n$. Here, the first line arises from~\eqref{eq:noise-matrix-E-bound-denoising}, and the remaining claims follow the same argument as in the proof of Corollary~\ref{cor:davis-kahan-conclusion-corollary}. 
 Consequently, there exist global signs $z, z_l \in \{1,-1\}$  obeying $\|z\bm{u} - \bm{u}^{\star}\|_2= \mathsf{dist}( \bm{u}, \bm{u}^{\star} ) \leq 10\sigma\sqrt{n} / \lambda^{\star}$ and $\|z_l\bm{u}^{(l)} - \bm{u}^{\star}\|_2= \mathsf{dist}( \bm{u}^{(l)}, \bm{u}^{\star} ) \leq 10\sigma\sqrt{n} / \lambda^{\star}$. To simplify presentation, we shall assume
 \begin{subequations}
\label{eq:ul-ustar-WLOG-denoising}
\begin{align}
	\|\bm{u} - \bm{u}^{\star}\|_2 &= \mathsf{dist}( \bm{u}, \bm{u}^{\star} ), \quad \\
	\big\| \bm{u}^{(l)} - \bm{u}^{\star} \big\|_2 &= \mathsf{dist}( \bm{u}^{(l)}, \bm{u}^{\star} ) , \quad 1 \leq l \leq n
\end{align}
\end{subequations}
without loss of generality.
As a simple yet useful byproduct: if ${20\sigma{\sqrt{n}}} < \lambda^{\star}$, then Condition \eqref{eq:ul-ustar-WLOG-denoising} necessarily implies
\begin{align}
	\big\| \bm{u} -  \bm{u}^{(l)} \big\|_2 = \mathsf{dist}\big( \bm{u}, \bm{u}^{(l)}\big), \qquad 1 \leq l \leq n.  	
	\label{eq:u-ul-rotation-denoising}
\end{align}
To see this, combine the triangle inequality and \eqref{eq:L2-perturbation-summary-denoising} to yield
\begin{align*}
	\big\|\bm{u}-\bm{u}^{(l)}\big\|_{2} & \leq\big\|\bm{u}-\bm{u}^{\star}\big\|_{2}+\big\|\bm{u}^{(l)}-\bm{u}^{\star}\big\|_{2}
	\leq\frac{20\sigma\sqrt{n}}{\lambda^{\star}}<1 ,
\end{align*}
which taken collectively with the fact $\|\bm{u} \|_{2} = \|\bm{u}^{(l)}\|_{2}=1$ gives
\[
\big\|\bm{u}+\bm{u}^{(l)}\big\|_{2}^{2}=
2\big\|\bm{u}\big\|_{2}^{2}+2\big\|\bm{u}^{(l)}\big\|_{2}^2
-\big\|\bm{u}-\bm{u}^{(l)}\big\|_{2}^{2}>1>\big\|\bm{u}-\bm{u}^{(l)}\big\|_{2}^{2}.
\]
This together with the definition \eqref{eq:dist_UUstar-spectral} of $\mathsf{dist}(\cdot,\cdot)$ validates  \eqref{eq:u-ul-rotation-denoising}.

\subsubsection{Step 1: bounding the proximity of leave-one-out $\&$ true estimates}

In this step, we seek to control the distance between the true estimate $\bm{u}$ and the leave-one-out estimate $\bm{u}^{(l)}$. Suppose for the moment that
\begin{equation}\label{eq:noise-small-gap}
\|\bm{M}-\bm{M}^{(l)}\| \leq (1 - 1 / \sqrt{2}) \Big(\lambda^{(l)}-  \max_{j\geq 2}\big| \lambda_{j}\big(\bm{M}^{(l)}\big) \big| \Big).
\end{equation}
We can then invoke the Davis-Kahan theorem (cf.~Corollary \ref{cor:davis-kahan-conclusion-corollary}) and the relation \eqref{eq:u-ul-rotation-denoising} to yield
\begin{align}
	\big\|\bm{u}-\bm{u}^{(l)}\big\|_{2} &
	\leq\frac{2\|\big(\bm{M}-\bm{M}^{(l)}\big)\bm{u}^{(l)}\|_{2}}{\lambda^{(l)}- \max\limits_{j\geq 2}\big| \lambda_{j}\big(\bm{M}^{(l)}\big) \big| }
	\leq \frac{4\|\big(\bm{M}-\bm{M}^{(l)}\big)\bm{u}^{(l)}\|_{2}}{\lambda^{\star}}.
	\label{eq:u-ul-diff-denoising}
\end{align}
Here, the last inequality invokes \eqref{eq:L2-perturbation-summary-denoising} and $20\sigma\sqrt{n}\leq \lambda^{\star}$ to obtain
\begin{align}
	\lambda^{(l)}-  \max_{j\geq 2}\big| \lambda_{j}\big(\bm{M}^{(l)}\big) \big| & \geq  (\lambda^{\star}-  5\sigma \sqrt{n} ) - 5\sigma \sqrt{n} \geq\lambda^{\star}/2 .
	\label{eq:lambda-l-minus-lambda-j-LB1}
\end{align}
A byproduct of this calculation is that $\lambda^{(l)}$ is positive for all $1\leq l \leq n$.
%

It thus remains to control the term $\|\big(\bm{M}-\bm{M}^{(l)}\big)\bm{u}^{(l)}\|_{2}$ in \eqref{eq:u-ul-diff-denoising}, towards which certain statistical independence proves crucial. Specifically, we observe that (by construction of $\bm{M}^{(l)}$)
\begin{align*}
\big(\bm{M}-\bm{M}^{(l)}\big)\bm{u}^{(l)} & =\bm{e}_{l}\bm{E}_{l,\cdot}\bm{u}^{(l)}+u_{l}^{(l)}(\bm{E}_{\cdot,l} - E_{l,l}\bm{e}_{l}),
\end{align*}
where $\bm{e}_{l}$ is the $l$-th standard basis vector,
and $\bm{E}_{l,\cdot}$ (resp.~$\bm{E}_{\cdot,l}$)
denotes the $l$-th row (resp.~column) of $\bm{E}$. By construction, $\bm{u}^{(l)}$ is statistically independent of $\bm{E}_{l,\cdot}$, thus indicating that
\begin{align}
	\bm{E}_{l,\cdot} \bm{u}^{(l)} ~\sim~ \mathcal{N}(0, \sigma^2 \|\bm{u}^{(l)}\|_2^2) = \mathcal{N}(0, \sigma^2 )
\end{align}
conditioned on $\bm{u}^{(l)}$. Hence, with probability at least $1-n^{-10}$,
\begin{align}
	\big| \bm{E}_{l,\cdot} \bm{u}^{(l)} \big| \leq 5\sigma \sqrt{\log n}, \qquad 1\leq l\leq n.
\end{align}
In addition, $\|\bm{E}_{\cdot,l} - E_{l,l}\bm{e}_{l}\|_{2} \leq \|\bm{E}_{\cdot,l}\|_2\leq \|\bm{E}\|\leq 5\sigma \sqrt{n}$ (cf.~\eqref{eq:L2-perturbation-summary-denoising}). Consequently,
\begin{align*}
 & \|\big(\bm{M}-\bm{M}^{(l)}\big)\bm{u}^{(l)}\|_{2}
	\leq \big|\bm{E}_{l,\cdot}\bm{u}^{(l)}\big|  +
	 \big\|\bm{E}_{\cdot,l}\big\|_{2} \cdot \big|u_{l}^{(l)}\big| \\
 & \qquad \leq5\sigma\sqrt{\log n}+\big\|\bm{E}_{\cdot,l}\big\|_{2}\big(\big|u_{l}\big|+\big\|\bm{u}-\bm{u}^{(l)}\big\|_{\infty}\big)\\
 & \qquad \leq5\sigma\sqrt{\log n}+5\sigma\sqrt{n}\|\bm{u}\|_{\infty}+5\sigma\sqrt{n}\big\|\bm{u}-\bm{u}^{(l)}\big\|_{2}.
\end{align*}
Substitution into \eqref{eq:u-ul-diff-denoising} yields
\begin{align*}
\big\|\bm{u}-\bm{u}^{(l)}\big\|_{2}
 & \leq \frac{20\sigma\sqrt{\log n}+20\sigma\sqrt{n}\|\bm{u}\|_{\infty}+20\sigma\sqrt{n}\big\|\bm{u}-\bm{u}^{(l)}\big\|_{2}}{\lambda^{\star}}\\
 & \leq\frac{20\sigma\sqrt{\log n}+20\sigma\sqrt{n}\|\bm{u}\|_{\infty}}{\lambda^{\star}}+\frac{1}{2}\big\|\bm{u}-\bm{u}^{(l)}\big\|_{2},
\end{align*}
provided that $40\sigma\sqrt{n}\leq \lambda^{\star}$.
Rearranging terms and taking the union bound, we demonstrate that with probability at least $1- O(n^{-8})$,
\begin{align}
\big\|\bm{u}-\bm{u}^{(l)}\big\|_{2} & \leq\frac{40\sigma\sqrt{\log n}+40\sigma\sqrt{n}\|\bm{u}\|_{\infty}}{\lambda^{\star}} , \qquad 1\leq l\leq n.
	\label{eq:u-ul-proximity-final-denoising}
\end{align}

\medskip
\begin{proof}[Proof of the relation \eqref{eq:noise-small-gap}] Apply the triangle inequality to see that
\begin{align*}
\|\bm{M}-\bm{M}^{(l)}\|&\leq\|\bm{M}-\bm{M}^{\star}\|+\|\bm{M}^{\star}-\bm{M}^{(l)}\|=\|\bm{E}\|+\|\bm{E}^{(l)}\| \\
&\leq10\sigma\sqrt{n}\leq\lambda^{\star}/10.
\end{align*}
Here, the equality arises from the definition of $\bm{M}$ and $\bm{M}^{\star}$,
the penultimate inequality uses (\ref{eq:L2-perturbation-summary-denoising}), while the last inequality holds as long
as $100\sigma\sqrt{n}\leq\lambda^{\star}$. This together with \eqref{eq:lambda-l-minus-lambda-j-LB1} establishes \eqref{eq:noise-small-gap}.
\end{proof}

\subsubsection{Step 2: analyzing leave-one-out estimates}

We now turn attention to bounding the size of the $l$-th entry $u_l^{(l)}$ of $\bm{u}^{(l)}$. Since $\lambda^{(l)}> 0$, by \eqref{eq:ul_l-expression-denoising}, we have 
%
\begin{align*}
u_{l}^{(l)}-u_{l}^{\star} & 
=u_{l}^{\star}\Big(\frac{\lambda^{\star}}{\lambda^{(l)}}\bm{u}^{\star\top}\bm{u}^{(l)}-\bm{u}^{\star\top}\bm{u}^{\star}\Big)\\
 & =u_{l}^{\star}\Big(\frac{\lambda^{\star}-\lambda^{(l)}}{\lambda^{(l)}}\bm{u}^{\star\top}\bm{u}^{(l)}\Big)+u_{l}^{\star}\bm{u}^{\star\top}\big(\bm{u}^{(l)}-\bm{u}^{\star}\big).
\end{align*}
%
The triangle inequality and the Cauchy-Schwarz inequality then give
\begin{align}
\big|u_{l}^{(l)}-u_{l}^{\star}\big|
& \leq\big|u_{l}^{\star}\big|\cdot\frac{\big|\lambda^{\star}-\lambda^{(l)}\big|}{\big|\lambda^{(l)}\big|} \cdot \|\bm{u}^{\star}\|_{2} \cdot \|\bm{u}^{(l)}\|_{2}
	+ \big|u_{l}^{\star}\big|\cdot\|\bm{u}^{\star}\|_{2} \cdot \big\|\bm{u}^{(l)}-\bm{u}^{\star}\big\|_{2} \notag\\
 & \leq\big|u_{l}^{\star}\big|\cdot\frac{10\sigma\sqrt{n}}{\lambda^{\star}}+\big|u_{l}^{\star}\big|\cdot\frac{10\sigma\sqrt{n}}{\lambda^{\star}} \notag\\
 & \leq  \frac{20\sigma\sqrt{n}}{\lambda^{\star}} \big\|\bm{u}^{\star}\big\|_{\infty}.
	\label{eq:ul-ulstar-bound-denoising-rank1}
\end{align}
Here, the second line holds due to Condition (\ref{eq:L2-perturbation-summary-denoising}) and the fact $\big|\lambda^{(l)}\big|\geq\lambda^{\star}/2$; see~(\ref{eq:lambda-l-minus-lambda-j-LB1}).

\subsubsection{Step 3: putting all pieces together}

Putting \eqref{eq:u-ul-proximity-final-denoising} and \eqref{eq:ul-ulstar-bound-denoising-rank1} together, we arrive at
\begin{align}
\big\|\bm{u}-\bm{u}^{\star}\big\|_{\infty} & =\max_{l}\big|u_{l}-u_{l}^{\star}\big|\leq\max_{l}\Big\{\big|u_{l}^{(l)}-u_{l}^{\star}\big|+\big\|\bm{u}-\bm{u}^{(l)}\big\|_{2}\Big\} \notag\\
 & \leq \frac{20\sigma\sqrt{n}}{\lambda^{\star}} \big\|\bm{u}^{\star}\big\|_{\infty}
	+ \frac{40\sigma\sqrt{\log n}+40\sigma\sqrt{n}\|\bm{u}\|_{\infty}}{\lambda^{\star}}.
	\label{eq:u-ustar-inf-first-bound}
\end{align}
The above upper bound, however, involves the term $\|\bm{u}\|_{\infty}$, which can further be bounded by
$\|\bm{u}^{\star}\|_{\infty} + \|\bm{u}-\bm{u}^{\star} \|_{\infty}$.  Substituting this into \eqref{eq:u-ustar-inf-first-bound}, we have
\begin{align*}
\big\|\bm{u}-\bm{u}^{\star}\big\|_{\infty}
	& \leq \frac{40\sigma\sqrt{\log n} + 60 \sigma \sqrt{n}\, \|\bm{u}^{\star}\|_{\infty} }{\lambda^{\star}}+\frac{1}{2}\big\|\bm{u}-\bm{u}^{\star}\big\|_{\infty},
\end{align*}
provided that $80\sigma\sqrt{n}\leq\lambda^{\star}$. Rearranging terms yields
\begin{align*}
	 \big\|\bm{u}-\bm{u}^{\star}\big\|_{\infty}
	 &\leq \frac{ 80\sigma\sqrt{\log n} + 120 \sigma \sqrt{n}\, \|\bm{u}^{\star}\|_{\infty} }{\lambda^{\star}}
	 = \frac{80\sigma\sqrt{\log n} + 120 \sigma \sqrt{\mu} }{\lambda^{\star}} ,
\end{align*}
where the last identity results from the definition \eqref{defn:incoherence-rank-1-denoising} of $\mu$.

\section{$\ell_{2,\infty}$ eigenspace perturbation under independent noise}
\label{sec:setup-general-theory-independent}

The appealing entrywise behavior of the eigenvector estimator in Section~\ref{sec:rank-1-denoising-LOO} hints at the promising performance of spectral methods for broader contexts. In this section, we set out to develop a more general framework about $\ell_{2,\infty}$ eigenspace perturbation  that covers a wide spectrum of scenarios.

\subsection{Setup and notation}\label{sec:setup-general-theory}

\paragraph{Ground truth.}
Consider a rank-$r$ symmetric matrix $\bm{M}^{\star}\in \mathbb{R}^{n\times n}$  with eigenvectors $\bm{u}_1^{\star},\cdots,\bm{u}_n^{\star}$ and associated eigenvalues $\lambda_1^{\star},\cdots,\lambda_n^{\star}$ obeying
\begin{align} \label{eq:assumption-lambda-r-positive-setup}
	|\lambda_1^{\star}| \geq |\lambda_2^{\star}| \geq \cdots \geq |\lambda_r^{\star}| >0
	\quad \text{and} \quad
	\lambda_{r+1}^{\star} = \cdots = \lambda_n^{\star}=0.
\end{align}
We shall write the eigendecomposition $\bm{M}^{\star}=\bm{U}^{\star}\bm{\Lambda}^{\star}\bm{U}^{\star\top}$ as usual, where $\bm{\Lambda}^{\star} \coloneqq \mathsf{diag}([\lambda_1^{\star},\cdots,\lambda_r^{\star}])$ and $\bm{U}^{\star}\coloneqq   [\bm{u}_1^{\star},\cdots, \bm{u}_r^{\star}]\in \mathbb{R}^{n\times r}$. Denote the condition number of $\bm{M}^{\star}$ as
\begin{align}
	\label{eq:defn-kappa}
	\kappa \coloneqq {|\lambda_1^{\star}|} \,/\, {|\lambda_r^{\star}|}.
\end{align}
Akin to Definition~\ref{assump:mc-incoherence}, the  incoherence parameter of $\bm{M}^{\star}$ is defined as
\begin{align}
	\mu \coloneqq \frac{n\|\bm{U}^{\star}\|_{2,\infty}^{2}}{r},
	\label{eq:defn-Ustar-incoherence}
\end{align}
%
a parameter that captures how well the energy of $\bm{U}^{\star}$ is spread out across all rows and that obeys (see Remark~\ref{remark:range-mu-MC})
\begin{align}
	1 \leq \mu \leq n/r .
	\label{eq:mu-constraint-general}
\end{align}

\paragraph{Observed data.} What we observe is a corrupted version
\begin{align}
	\bm{M}=\bm{M}^{\star}+\bm{E} \in \mathbb{R}^{n\times n},
	\label{eq:M-noisy-copy-general}
\end{align}
where $\bm{E}$ is a symmetric noise matrix. We denote by $\{\lambda_i\}_{1\leq i\leq n}$ the set of eigenvalues of $\bm{M}$ obeying
\begin{align}
	|\lambda_1| \geq |\lambda_2| \geq \cdots \geq |\lambda_n|,
	\label{eq:lambda-1-n-ordering-magnitude}
\end{align}
and let $\bm{u}_i$ be the eigenvector of $\bm{M}$ associated with $\lambda_i$. We shall introduce the diagonal matrix $\bm{\Lambda}\in \mathbb{R}^{r\times r}$ as $\bm{\Lambda} \coloneqq \mathsf{diag}([\lambda_1,\cdots,\lambda_r])$.

\paragraph{Noise assumptions.}
This section aims to cover a fairly broad class of scenarios of independent noise.
In particular, the noise matrix considered herein is assumed to satisfy the mild conditions listed below.
\begin{assumption}
	\label{assumption-noise-general}
	The entries in the lower triangular part of $\bm{E}=[E_{i,j}]_{1\leq i,j\leq n}$ are independently generated obeying
\begin{align}
	\mathbb{E}[E_{i,j}] = 0, \quad \mathbb{E}[E_{i,j}^2] =: \sigma_{i,j}^2 \leq \sigma^2, \quad |E_{i,j}|\leq B, \quad \text{for all }i\geq j.
	\label{eq:noise-E-condition-general}
\end{align}
	In particular, $\sigma^2$ is taken to be the smallest choice satisfying \eqref{eq:noise-E-condition-general}. 
	Further, it is assumed that
	%
	\begin{align}
		c_{\mathsf{b}} \coloneqq \frac{B}{  \sigma  \sqrt{n/(\mu\log n)} } = O(1).
		\label{eq:assumption-B-sigma}
	\end{align}
\end{assumption}

We emphasize that both $\sigma$ and $B$ are quantities that are allowed to scale with $n$. When $\mu$ is not too large, Condition \eqref{eq:assumption-B-sigma}
allows the maximum magnitude $B$ of each noisy entry to be substantially larger than the typical size $\sigma$.

\paragraph{Goal and algorithm.}
We seek to estimate $\bm{U}^{\star}$ based on $\bm{M}$. Towards this, a simple spectral  method computes the matrix $\bm{U}=[\bm{u}_1,\cdots,\bm{u}_r]\in \mathbb{R}^{n\times r}$ that comprises the top-$r$ leading eigenvectors of $\bm{M}$.

\subsection{$\ell_{2,\infty}$ and $\ell_{\infty}$ theoretical guarantees}
\label{sec:L2inf-Linf-eigen-theory}

The leave-one-out argument introduced before, when properly strengthened, enables powerful  $\ell_{2,\infty}$  performance guarantees for the spectral estimate $\bm{U}$, which concern row-wise perturbation of the eigenspace. Before continuing, we remind the readers of the global rotation ambiguity, namely, in general we cannot expect $\bm{U}$  to be close to $\bm{U}^{\star}$ unless suitable global rotation is taken into account. In light of this, we introduce the following notation that helps identify a proper rotation matrix.
\begin{definition} For any matrix $\bm{Z}$ with SVD $\bm{Z}=\bm{U}_{Z}\bm{\Sigma}_Z\bm{V}_{Z}^{\top}$ (where $\bm{U}_Z$ and $\bm{V}_Z$ represent respectively the left and right singular matrices of $\bm{Z}$, and $\bm{\Sigma}_Z$ is a diagonal matrix composed of the singular values), define
\begin{align}
	\mathsf{sgn}(\bm{Z}) \coloneqq \bm{U}_{Z}  \bm{V}_{Z}^{\top}
	\label{eq:defn-sgn-Z}
\end{align}
to be the matrix sign function of $\bm{Z}$.
\end{definition}
\begin{remark}
The matrix sign function is commonly encountered when aligning two matrices---classically known as the orthogonal Procrustes problem \citep{schonemann1966generalized}.
Consider any two matrices $\widehat{\bB},\bB\in \mathbb{R}^{n\times r}$ with $r\leq n$. Among all rotation matrices,
the one that best aligns  $\widehat \bB$ with $\bB$ is precisely $ \mathsf{sgn}(\widehat \bB^\top \bB)$ (see, e.g., \citep[Appendix D.2.1]{ma2017implicit}), namely,
$$
	\mathsf{sgn}(\widehat \bB^\top \bB) = \underset{\bm{O}\,\in \mathcal{O}^{r\times r}}{\arg\min} ~\| \widehat \bB {\bm O} - \bB\|_{\mathrm{F}}^2.
$$
\end{remark}
With this definition in place, we are ready to state an $\ell_{2,\infty}$ theory adapted from \citet{abbe2020entrywise}.
Compared to the original development in \citet{abbe2020entrywise}, 
the theorem and its proof provided herein are more streamlined versions tailored to the current setting. 
\begin{theorem}
\label{thm:UsgnH-Ustar-MUstar-general}
	Consider the settings and assumptions in Section~\ref{sec:setup-general-theory}. Define $\bm{H} \coloneqq \bm{U}^{\top}\bm{U}^{\star}$. With probability exceeding $1-O(n^{-5})$, one has
\begin{subequations}
\label{eq:UsgnH-Ustar-MUstar-bound-theorem-general}
\begin{align}
& \big\|\bm{U}\mathsf{sgn}(\bm{H})-\bm{U}^{\star}\big\|_{2,\infty}  \lesssim \frac{\sigma\kappa\sqrt{\mu r}+\sigma\sqrt{r\log n}}{ |\lambda_{r}^{\star} | },
\label{eq:UsgnH-Ustar-bound-theorem-general}\\
& \big\|\bm{U}\mathsf{sgn}(\bm{H})-\bm{M}\bm{U}^{\star}\big(\bm{\Lambda}^{\star}\big)^{-1}\big\|_{2,\infty} \notag\\
& \qquad\qquad \lesssim \frac{\sigma\kappa\sqrt{\mu r}}{ |\lambda_{r}^{\star} | }+ \frac{\sigma^{2}\sqrt{rn\log n}+\sigma B\sqrt{\mu r\log^{3}n}}{\big(\lambda_{r}^{\star}\big)^{2}}, 
\label{eq:UsgnH-MUstar-bound-theorem-general}
\end{align}
\end{subequations}
provided that $\sigma \sqrt{n \log n}  \leq c_{\sigma} |\lambda_r^{\star}|$ for some sufficiently small constant $c_{\sigma}>0$.
\end{theorem}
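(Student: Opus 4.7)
The plan is to extend the leave-one-out strategy of Section~\ref{sec:rank-1-denoising-LOO} from the rank-$1$ case to the rank-$r$ eigenspace setting, with the matrix sign function $\mathsf{sgn}(\bm{H})$ taking the place of the global $\pm 1$ sign used there. As preparation, I would use a truncation argument in the spirit of Section~\ref{sec:proof-eqn-noise-matrix-E-bound-denoising} together with Theorem~\ref{thm:tighter-spectral-normal-ramon} to show that $\|\bm{E}\|\lesssim \sigma\sqrt{n}+B\sqrt{\log n}$ with probability $1-O(n^{-10})$ under Assumption~\ref{assumption-noise-general}, which simplifies to $\|\bm{E}\|\lesssim \sigma\sqrt{n}$ once \eqref{eq:assumption-B-sigma} is invoked. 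Combined with the hypothesis $\sigma\sqrt{n\log n}\leq c_{\sigma}|\lambda_r^{\star}|$ and Corollary~\ref{cor:davis-kahan-conclusion-corollary}, this delivers the baseline $\ell_2$ controls $\mathsf{dist}(\bm{U},\bm{U}^{\star})\lesssim \sigma\sqrt{n}/|\lambda_r^{\star}|$, $|\lambda_r|\geq |\lambda_r^{\star}|/2$, and $\|\bm{U}\mathsf{sgn}(\bm{H})-\bm{U}^{\star}\|\lesssim \mathsf{dist}(\bm{U},\bm{U}^{\star})$ via Lemma~\ref{prop:rotation-UR}; the same bounds apply simultaneously to the leave-one-out copies defined next.

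For each $1\leq l\leq n$ I introduce $\bm{M}^{(l)}\coloneqq \bm{M}^{\star}+\bm{E}^{(l)}$, where $\bm{E}^{(l)}$ zeros out the $l$-th row and $l$-th column of $\bm{E}$ (as in \eqref{eq:Eij-l-defn}), with top-$r$ eigenpair $(\bm{U}^{(l)},\bm{\Lambda}^{(l)})$ and $\bm{H}^{(l)}\coloneqq \bm{U}^{(l)\top}\bm{U}^{\star}$. By construction $\bm{U}^{(l)}$ is statistically independent of $\bm{E}_{l,\cdot}$. The first step is a proximity bound: Davis-Kahan applied to $\bm{M}$ vs.\ $\bm{M}^{(l)}$ reduces $\|\bm{U}\mathsf{sgn}(\bm{H})-\bm{U}^{(l)}\mathsf{sgn}(\bm{H}^{(l)})\|_{\mathrm{F}}$ to $\|(\bm{M}-\bm{M}^{(l)})\bm{U}^{(l)}\|_{\mathrm{F}}/|\lambda_r^{\star}|$, which decomposes into $\|\bm{E}_{l,\cdot}\bm{U}^{(l)}\|_2$---a sum of $n$ independent zero-mean terms bounded via Bernstein by $\sigma\sqrt{r\log n}+B\log n$---and $\|\bm{U}^{(l)}_{l,\cdot}\|_2\cdot\|\bm{E}_{\cdot,l}\|_2$, handled through the operator-norm bound on $\bm{E}$. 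The second step is the row-wise analysis via the eigen-relation $\bm{M}^{(l)}\bm{U}^{(l)}=\bm{U}^{(l)}\bm{\Lambda}^{(l)}$: reading off row $l$ and exploiting $\bm{M}^{(l)}_{l,\cdot}=\bm{M}^{\star}_{l,\cdot}=\bm{U}^{\star}_{l,\cdot}\bm{\Lambda}^{\star}\bm{U}^{\star\top}$ (by construction) yields
\begin{equation*}
\bm{U}^{(l)}_{l,\cdot}\mathsf{sgn}(\bm{H}^{(l)})-\bm{U}^{\star}_{l,\cdot} = \bm{U}^{\star}_{l,\cdot}\big[\bm{\Lambda}^{\star}\bm{U}^{\star\top}\bm{U}^{(l)}(\bm{\Lambda}^{(l)})^{-1}\mathsf{sgn}(\bm{H}^{(l)})-\bm{I}_r\big].
\end{equation*}
The bracketed matrix is of size $O(\kappa\cdot\mathsf{dist}(\bm{U}^{(l)},\bm{U}^{\star}))$: the factor $\bm{U}^{\star\top}\bm{U}^{(l)}\mathsf{sgn}(\bm{H}^{(l)})$ is symmetric with eigenvalues given by the cosines of the principal angles between $\bm{U}^{(l)}$ and $\bm{U}^{\star}$, and $\bm{\Lambda}^{(l)}$ is close to $\bm{\Lambda}^{\star}$ by Weyl's inequality (Lemma~\ref{lemma:weyl}). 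Combined with $\|\bm{U}^{\star}_{l,\cdot}\|_2\leq\sqrt{\mu r/n}$ from \eqref{eq:defn-Ustar-incoherence}, this contributes $\lesssim \sigma\kappa\sqrt{\mu r}/|\lambda_r^{\star}|$. A union bound over $l$ then combines the two steps via the triangle inequality to give \eqref{eq:UsgnH-Ustar-bound-theorem-general} (a mild bootstrap handles the fact that $\|\bm{U}^{(l)}_{l,\cdot}\|_2$ is itself what one is trying to bound, exactly as in \eqref{eq:u-ustar-inf-first-bound}).

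For the refined bound \eqref{eq:UsgnH-MUstar-bound-theorem-general}, the guiding identity is $\bm{M}\bm{U}^{\star}(\bm{\Lambda}^{\star})^{-1}=\bm{U}^{\star}+\bm{E}\bm{U}^{\star}(\bm{\Lambda}^{\star})^{-1}$, which recasts the target as the second-order remainder after subtracting the explicit linear term $\bm{E}\bm{U}^{\star}(\bm{\Lambda}^{\star})^{-1}$ from $\bm{U}\mathsf{sgn}(\bm{H})-\bm{U}^{\star}$. Using $\bm{U}\bm{\Lambda}=\bm{M}\bm{U}$ and telescoping, the row-wise remainder splits into a term proportional to $(\bm{E}(\bm{U}\mathsf{sgn}(\bm{H})-\bm{U}^{\star}))_{l,\cdot}(\bm{\Lambda}^{\star})^{-1}$ and an eigenvalue-alignment piece involving $\bm{\Lambda}^{-1}\mathsf{sgn}(\bm{H})-\mathsf{sgn}(\bm{H})(\bm{\Lambda}^{\star})^{-1}$. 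The first term is again handled by leave-one-out: replace $\bm{U}\mathsf{sgn}(\bm{H})-\bm{U}^{\star}$ by the LOO surrogate $\bm{U}^{(l)}\mathsf{sgn}(\bm{H}^{(l)})-\bm{U}^{\star}$ (which is independent of $\bm{E}_{l,\cdot}$ and, by the first claim just proved, has Frobenius norm $\lesssim \sigma\kappa\sqrt{\mu r}/|\lambda_r^{\star}|+\sigma\sqrt{r\log n}/|\lambda_r^{\star}|$), then invoke the truncated Bernstein inequality (Theorem~\ref{thm:matrix-Bernstein}) on the resulting inner product against the independent row $\bm{E}_{l,\cdot}$ to obtain the $\sigma^{2}\sqrt{rn\log n}+\sigma B\sqrt{\mu r\log^{3}n}$ contribution after dividing by $(\lambda_r^{\star})^{2}$; the alignment piece contributes the $\sigma\kappa\sqrt{\mu r}/|\lambda_r^{\star}|$ term through Weyl's inequality together with the first-order bound. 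The main obstacle throughout is the delicate bookkeeping required to propagate bounds through the rotation $\mathsf{sgn}(\bm{H})$---in particular, showing that $\mathsf{sgn}(\bm{H})$ and $\mathsf{sgn}(\bm{H}^{(l)})$ are close enough (via a Davis-Kahan argument on $\bm{H}$ vs.\ $\bm{H}^{(l)}$) for the LOO surrogate to be a valid proxy under a common alignment---and to verify that the preliminary truncation of $\bm{E}$, which is necessary because $B$ may substantially exceed $\sigma$, does not destroy the conditional independence between $\bm{U}^{(l)}$ and $\bm{E}_{l,\cdot}$ on which the entire argument rests.
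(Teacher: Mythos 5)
Your plan takes a genuinely different route from the paper's proof. You propose a direct rank-$r$ generalization of the rank-$1$ leave-one-out argument from Section~\ref{sec:rank-1-denoising-LOO}: read off the row-wise eigen-relation for $\bm{U}^{(l)}_{l,\cdot}$, bound the resulting alignment ``bracket'' matrix, and add a Frobenius-proximity bound between $\bm{U}\mathsf{sgn}(\bm{H})$ and $\bm{U}^{(l)}\mathsf{sgn}(\bm{H}^{(l)})$. The paper instead establishes a global algebraic decomposition (Lemma~\ref{lem:UH-MUstar-diff-decompose}) that bounds both $\|\bm{U}\bm{H}-\bm{M}\bm{U}^{\star}(\bm{\Lambda}^{\star})^{-1}\|_{2,\infty}$ and $\|\bm{U}\bm{H}-\bm{U}^{\star}\|_{2,\infty}$ by three explicit quantities $\mathcal{E}_{1},\mathcal{E}_{2},\mathcal{E}_{3}$, and invokes leave-one-out only within $\mathcal{E}_{1}$ to decouple $\|\bm{E}(\bm{U}\bm{H}-\bm{U}^{\star})\|_{2,\infty}$. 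The paper also carries $\bm{H}=\bm{U}^{\top}\bm{U}^{\star}$ as the alignment throughout and transfers to $\mathsf{sgn}(\bm{H})$ only at the end via $\|\bm{H}-\mathsf{sgn}(\bm{H})\|\lesssim\|\sin\bm{\Theta}\|^{2}$; this sidesteps the extra step you need of relating $\mathsf{sgn}(\bm{H})$ to $\mathsf{sgn}(\bm{H}^{(l)})$, since $\bm{U}\bm{H}-\bm{U}^{(l)}\bm{H}^{(l)}=(\bm{U}\bm{U}^{\top}-\bm{U}^{(l)}\bm{U}^{(l)\top})\bm{U}^{\star}$ is already rotation-invariant and directly controlled by Davis--Kahan.

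There is, however, a genuine gap in your bracket bound. You argue that $\bm{\Lambda}^{\star}\bm{U}^{\star\top}\bm{U}^{(l)}(\bm{\Lambda}^{(l)})^{-1}\mathsf{sgn}(\bm{H}^{(l)})-\bm{I}_{r}$ is $O(\kappa\cdot\mathsf{dist}(\bm{U}^{(l)},\bm{U}^{\star}))$ because $\bm{U}^{\star\top}\bm{U}^{(l)}\mathsf{sgn}(\bm{H}^{(l)})$ is symmetric with cosine eigenvalues and $\bm{\Lambda}^{(l)}\approx\bm{\Lambda}^{\star}$ by Weyl. But the two diagonal factors are separated by the rotation $\mathsf{sgn}(\bm{H}^{(l)})$, with which they do not commute; when the top-$r$ eigenvalues of $\bm{M}^{\star}$ are nearly degenerate, $\mathsf{sgn}(\bm{H}^{(l)})$ can be a rotation of order one, and Weyl's inequality alone says nothing about the resulting commutator. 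The claimed bound is nonetheless correct, but its proof requires an intertwining identity coming from the eigen-equations: left-multiplying $\bm{M}^{(l)}\bm{U}^{(l)}=\bm{U}^{(l)}\bm{\Lambda}^{(l)}$ by $\bm{U}^{\star\top}$ and using $\bm{M}^{(l)}=\bm{M}^{\star}+\bm{E}^{(l)}$ gives $\bm{\Lambda}^{\star}\bm{H}^{(l)\top}=\bm{H}^{(l)\top}\bm{\Lambda}^{(l)}-\bm{U}^{\star\top}\bm{E}^{(l)}\bm{U}^{(l)}$, the leave-one-out analogue of \eqref{eq:HLambda-Lambda-H-connection}. Substituting this rewrites the bracket as $\bm{H}^{(l)\top}\mathsf{sgn}(\bm{H}^{(l)})-\bm{I}_{r}-\bm{U}^{\star\top}\bm{E}^{(l)}\bm{U}^{(l)}(\bm{\Lambda}^{(l)})^{-1}\mathsf{sgn}(\bm{H}^{(l)})$; the first piece is symmetric with eigenvalues $\cos\theta_{i}^{(l)}-1$ and hence has norm at most $\|\sin\bm{\Theta}^{(l)}\|^{2}$, while the second has norm $O(\|\bm{E}^{(l)}\|/|\lambda_{r}^{\star}|)$, yielding the bound (in fact without the extra $\kappa$ you allowed). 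You would also want the refined form of Bernstein in Lemma~\ref{lemma:general-noise-bound}: your stated $\|\bm{E}_{l,\cdot}\bm{U}^{(l)}\|_{2}\lesssim\sigma\sqrt{r\log n}+B\log n$ drops the factor $\|\bm{U}^{(l)}\|_{2,\infty}$ on the $B\log n$ term, which is precisely what makes assumption~\eqref{eq:assumption-B-sigma} bite. With these two repairs, the overall plan should go through.
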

The proof of this theorem can be found in Section~\ref{sec:proof-thm:UsgnH-Ustar-MUstar-general}. 
Note that under the assumption in the theorem, the bound on the right-hand side of 
\eqref{eq:UsgnH-MUstar-bound-theorem-general} is no larger than the one on the right-hand side of \eqref{eq:UsgnH-Ustar-bound-theorem-general}. 
In fact, \eqref{eq:UsgnH-MUstar-bound-theorem-general} could indeed be tighter than \eqref{eq:UsgnH-Ustar-bound-theorem-general} in some important scenarios like community recovery (see Section~\ref{sec:community-detection-linf}).

The $\ell_{2,\infty}$ perturbation theory in Theorem~\ref{thm:UsgnH-Ustar-MUstar-general} accommodates a broad family of noise matrices with independent entries.
In the sequel, we take a moment to interpret several key messages conveyed by this result.

\paragraph{De-localization of estimation errors.} For simplicity, let us concentrate on the case where $\mu,\kappa =O(1)$.   Note that the Davis-Kahan theorem introduced previously results in the following $\ell_2$ estimation guarantees (to be detailed in Section~\ref{sec:preliminary-facts-Linf-general})
\begin{align}
	\mathsf{dist}_{\mathrm{F}}(\bm{U},\bm{U}^{\star})  \leq \sqrt{r} \,\mathsf{dist}(\bm{U},\bm{U}^{\star}) \lesssim  \frac{\sigma \sqrt{nr}}{|\lambda_r^{\star}|}.
	\label{eq:simple-Euclidean-error-general}
\end{align}
In comparison, the $\ell_{2,\infty}$ bound derived in  Theorem~\ref{thm:UsgnH-Ustar-MUstar-general} simplifies to
\begin{align}
	\min_{\bm{R}\,\in \mathcal{O}^{r\times r}}\big\|\bm{U} \bm{R}-\bm{U}^{\star}\big\|_{2,\infty}
	 \leq \big\|\bm{U}\mathsf{sgn}(\bm{H})-\bm{U}^{\star}\big\|_{2,\infty}  \lesssim  \frac{\sigma \sqrt{r\log n}}{ |\lambda_{r}^{\star}| }
	 \label{eq:Linf-eigen-bound-simpler}
\end{align}
under the condition $\mu,\kappa = O(1)$,  which is about $O(\sqrt{n/\log n})$ times smaller than the Euclidean error bound \eqref{eq:simple-Euclidean-error-general}.
This implies that the estimation error of $\bm{U}$ is fairly de-localized and spread out across all rows.

\paragraph{First-order approximation.}
Informally, Theorem~\ref{thm:UsgnH-Ustar-MUstar-general} (and its analysis) unveils the goodness of the first-order approximation
\begin{align}
	\bm{U}\mathsf{sgn}(\bm{H}) \approx \bm{M}\bm{U}^{\star}(\bm{\Lambda}^{\star})^{-1} = \bm{U}^{\star} + \bm{E}\bm{U}^{\star}(\bm{\Lambda}^{\star})^{-1}
\end{align}
uniformly across all rows. An implication of Theorem~\ref{thm:UsgnH-Ustar-MUstar-general} is that $\bm{U}$ might be closer to the first-order approximation $\bm{M}\bm{U}^{\star}(\bm{\Lambda}^{\star})^{-1}$ than to the ground truth $\bm{U}^{\star}$ (namely, the upper bound on the right-hand side of  \eqref{eq:UsgnH-MUstar-bound-theorem-general} is smaller than the bound on the right-hand side of \eqref{eq:UsgnH-Ustar-bound-theorem-general} under the stated conditions). In principle, the linear term $\bm{E}\bm{U}^{\star}(\bm{\Lambda}^{\star})^{-1}$ can be viewed as a correction term that helps improve the approximation fidelity.
As we shall see momentarily in Section~\ref{sec:community-detection-linf}, this subtle difference leads to sharper performance guarantees in applications like community recovery.

\paragraph{Entrywise estimation errors.} There is no shortage of applications where one cares more about the fine-grained estimation accuracy of the matrix rather than that of the low-rank factors. Fortunately, the $\ell_{2,\infty}$ theory derived in  Theorem~\ref{thm:UsgnH-Ustar-MUstar-general} in turn enables entrywise performance guarantees when estimating the matrix $\bm{M}^{\star}$. This is summarized in the following corollary, with the proof deferred to Section~\ref{sec:proof-cor-entrywise-error}.
\begin{corollary}
	\label{cor:entrywise-error-general}
	Consider the settings and assumptions in Section~\ref{sec:setup-general-theory}, and assume  further that
	$\sigma \kappa \sqrt{n\log n} \leq c_1 |\lambda_r^{\star}|$ for some sufficiently small constant $c_1>0$. Then with probability at least $1-O(n^{-5})$, one has
	\begin{align}
		\big\|\bm{U}\bm{\Lambda}\bm{U}^{\top}-\bm{M}^{\star}\big\|_{\infty} & \lesssim\sigma\kappa^{2}\mu r\sqrt{\frac{\log n}{n}}.
	\end{align}
\end{corollary}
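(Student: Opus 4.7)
The plan is to exploit the identity $\bm{U}\bm{\Lambda}\bm{U}^\top = \widetilde{\bm{U}}\bm{L}\widetilde{\bm{U}}^\top$, where $\widetilde{\bm{U}} \coloneqq \bm{U}\,\mathsf{sgn}(\bm{H})$ and $\bm{L} \coloneqq \widetilde{\bm{U}}^\top\bm{M}\widetilde{\bm{U}}$; the identity simply uses that $\mathsf{sgn}(\bm{H})$ is orthogonal, so that $\widetilde{\bm{U}}\widetilde{\bm{U}}^\top = \bm{U}\bm{U}^\top$. Setting $\bm{\Delta} \coloneqq \widetilde{\bm{U}} - \bm{U}^\star$ and recalling $\bm{M}^\star = \bm{U}^\star\bm{\Lambda}^\star\bm{U}^{\star\top}$, I obtain the four-term decomposition
\begin{equation*}
\bm{U}\bm{\Lambda}\bm{U}^\top - \bm{M}^\star = \widetilde{\bm{U}}(\bm{L}-\bm{\Lambda}^\star)\widetilde{\bm{U}}^\top + \bm{\Delta}\,\bm{\Lambda}^\star\bm{U}^{\star\top} + \bm{U}^\star\bm{\Lambda}^\star\bm{\Delta}^\top + \bm{\Delta}\,\bm{\Lambda}^\star\bm{\Delta}^\top.
\end{equation*}
Each summand is a rank-at-most-$r$ product, so its $\|\cdot\|_\infty$ norm will be controlled by the elementary inequality $\|\bm{A}\bm{B}\bm{C}\|_\infty \leq \|\bm{A}\|_{2,\infty}\|\bm{B}\|\|\bm{C}^\top\|_{2,\infty}$.

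For the three summands involving $\bm{\Delta}$ I invoke Theorem~\ref{thm:UsgnH-Ustar-MUstar-general} to obtain $\xi \coloneqq \|\bm{\Delta}\|_{2,\infty} \lesssim \sigma\kappa\sqrt{\mu r\log n}/|\lambda_r^\star|$ (absorbing the $\sigma\sqrt{r\log n}$ term using $\mu,\kappa \geq 1$), pair it with the incoherence estimate $\|\bm{U}^\star\|_{2,\infty} \leq \sqrt{\mu r/n}$, and observe that the SNR hypothesis $\sigma\kappa\sqrt{n\log n} \leq c_1|\lambda_r^\star|$ forces $\xi \lesssim \sqrt{\mu r/n}$, hence $\|\widetilde{\bm{U}}\|_{2,\infty} \lesssim \sqrt{\mu r/n}$. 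Each of the two cross terms then contributes $\xi|\lambda_1^\star|\sqrt{\mu r/n} \lesssim \sigma\kappa^2\mu r\sqrt{\log n/n}$ to the target bound, and the quadratic term $\xi^2|\lambda_1^\star|$ is trimmed to the same target after one further application of the SNR bound $\sigma\kappa/|\lambda_r^\star| \lesssim 1/\sqrt{n\log n}$.

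The main obstacle is controlling $\|\bm{L}-\bm{\Lambda}^\star\|$; for this I would expand $\bm{L}-\bm{\Lambda}^\star = \widetilde{\bm{U}}^\top\bm{M}\widetilde{\bm{U}} - \bm{U}^{\star\top}\bm{M}^\star\bm{U}^\star$ by substituting $\widetilde{\bm{U}} = \bm{U}^\star + \bm{\Delta}$ and $\bm{M} = \bm{M}^\star + \bm{E}$. The dominant contribution is the linear noise piece $\widetilde{\bm{U}}^\top\bm{E}\widetilde{\bm{U}}$, bounded in spectral norm by $\|\bm{E}\| \lesssim \sigma\sqrt{n}$ via the truncation-plus-Theorem~\ref{thm:tighter-spectral-normal-ramon} argument used in Section~\ref{sec:matrix-denoising-L2} (with truncation level dictated by $B \lesssim \sigma\sqrt{n/(\mu\log n)}$ from Assumption~\ref{assumption-noise-general}). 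The remaining bilinear and quadratic pieces are handled by noting that $\bm{U}^{\star\top}\bm{\Delta} = \bm{U}^{\star\top}\widetilde{\bm{U}} - \bm{I}_r$ equals $-\bm{Y}(\bm{I}_r - \cos\bm{\Theta})\bm{Y}^\top$ (from the SVD $\bm{H} = \bm{X}\bm{\Sigma}\bm{Y}^\top$ with $\bm{\Sigma} = \cos\bm{\Theta}$), so $\|\bm{U}^{\star\top}\bm{\Delta}\| \leq \|\bm{I}_r - \cos\bm{\Theta}\| \leq \|\sin\bm{\Theta}\|^2 \lesssim (\sigma\sqrt{n}/|\lambda_r^\star|)^2$ by Davis-Kahan, and all such pieces are of order $\sigma^2\kappa n/|\lambda_r^\star|$, which the SNR reduces to $\lesssim \sigma\sqrt{n}$. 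Summing gives $\|\bm{L}-\bm{\Lambda}^\star\| \lesssim \sigma\sqrt{n}$ and hence $\|\widetilde{\bm{U}}(\bm{L}-\bm{\Lambda}^\star)\widetilde{\bm{U}}^\top\|_\infty \lesssim (\mu r/n)\cdot\sigma\sqrt{n} = \sigma\mu r/\sqrt{n} \leq \sigma\kappa^2\mu r\sqrt{\log n/n}$. Adding the four contributions establishes the corollary; once Theorem~\ref{thm:UsgnH-Ustar-MUstar-general} and the Davis-Kahan estimate are in hand, everything besides the $\bm{L}$-expansion is essentially arithmetic.
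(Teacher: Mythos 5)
Your argument is correct and takes a genuinely different route from the paper's. The paper decomposes $\bm{U}\bm{\Lambda}\bm{U}^{\top}-\bm{M}^{\star}$ into $\bm{U}(\bm{\Lambda}-\bm{H}\bm{\Lambda}^{\star}\bm{H}^{\top})\bm{U}^{\top}$ plus $\bm{U}\bm{H}\bm{\Lambda}^{\star}\bm{H}^{\top}\bm{U}^{\top}-\bm{M}^{\star}$, where $\bm{H}=\bm{U}^{\top}\bm{U}^{\star}$ is non-orthogonal; bounding the first piece requires deriving the relation $\bm{U}^{\top}\bm{U}^{\star}\bm{\Lambda}^{\star}=\bm{\Lambda}\bm{U}^{\top}\bm{U}^{\star}-\bm{U}^{\top}\bm{E}\bm{U}^{\star}$ and then a separate matrix-Bernstein estimate of $\|\bm{U}^{\star\top}\bm{E}\bm{U}^{\star}\|$, which yields a sharpened bound $\|\bm{\Lambda}-\bm{H}\bm{\Lambda}^{\star}\bm{H}^{\top}\|\lesssim\kappa\sigma^{2}n/\lambda_{r}^{\star}+\sigma\sqrt{r\log n}$. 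You instead work with the orthogonal $\mathsf{sgn}(\bm{H})$ and exploit the exact Rayleigh-quotient identity $\bm{U}\bm{\Lambda}\bm{U}^{\top}=\widetilde{\bm{U}}(\widetilde{\bm{U}}^{\top}\bm{M}\widetilde{\bm{U}})\widetilde{\bm{U}}^{\top}$, which makes the eigenvalue-correction block $\bm{L}-\bm{\Lambda}^{\star}$ controllable by the crude but immediate bound $\|\bm{E}\|\lesssim\sigma\sqrt{n}$ plus quadratic terms handled through $\|\bm{U}^{\star\top}\bm{\Delta}\|\leq\|\sin\bm{\Theta}\|^{2}$. The trade-off: your bound on the eigenvalue block is looser than the paper's by a factor of roughly $\kappa^{2}\sqrt{\log n}$, but that slack is already built into the target rate $\sigma\kappa^{2}\mu r\sqrt{\log n/n}$, so nothing is lost here. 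The paper's sharper intermediate estimate is, however, reused elsewhere (e.g.\ the relation $\mathsf{sgn}(\bm{H})\bm{\Lambda}^{\star}\mathsf{sgn}(\bm{H})^{\top}\approx\bm{\Lambda}$ underpinning the distributional theory of Section~\ref{sec:distribution-theory}), so the extra work it invests buys something downstream. One tiny nit: you need not invoke a truncation step for $\|\bm{E}\|$ — under Assumption~\ref{assumption-noise-general} the entries are already bounded by $B$, so Lemma~\ref{lemma:general-noise-bound} applies directly.
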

Once again, it is instrumental to explain the result by specializing it to the simpler regime where $\kappa, \mu, r =O(1)$.  In this case, the finding of Corollary~\ref{cor:entrywise-error-general} reduces to
\begin{align}
	\big\|\bm{U}\bm{\Lambda}\bm{U}^{\top}-\bm{M}^{\star}\big\|_{\infty} & \lesssim\sigma  \sqrt{\frac{\log n}{n}}.
	\label{eq:ULambdaU-entrywise-simple}
\end{align}
In comparison, the Euclidean error of this spectral estimate satisfies (which follows by combining \eqref{eq:ULambdaU-Mstar-norm-denoising} with \eqref{eq:X-spectral-norm-iid-special-ramon} and \eqref{eq:assumption-B-sigma})
\begin{align}
	\big\|\bm{U}\bm{\Lambda}\bm{U}^{\top}-\bm{M}^{\star}\big\|_{\mathrm{F}} \leq 2\sqrt{2} \|\bm{E}\| \lesssim \sigma\sqrt{n}
\end{align}
with high probability, which is on the order of $n/\sqrt{\log n}$ times larger than the entrywise error bound \eqref{eq:ULambdaU-entrywise-simple}. In other words, the energy of the estimation error of the unknown matrix is also dispersed more or less across all matrix entries, a message that cannot be derived from classical matrix perturbation theory alone.

\paragraph{Leave-one-out analysis.}
As alluded to previously, the proof of Theorem~\ref{thm:UsgnH-Ustar-MUstar-general} relies heavily upon the leave-one-out analysis framework to decouple delicate statistical dependency. While the core idea bears close resemblance to the exposition in Section~\ref{sec:LOO-analysis-rank1-denoising}, implementing this idea rigorously for the general case requires considerably more effort. We defer a complete proof to Section~\ref{sec:proof-thm:UsgnH-Ustar-MUstar-general}.

\section{$\ell_{2,\infty}$ singular subspace perturbation under independent noise}
\label{sec:general-L2inf-SVD-perturbation}

The general theory presented in Section~\ref{sec:setup-general-theory-independent} applies only to symmetric matrices. It is not uncommon, however, to encounter scenarios where the matrix of interest $\bm{M}^{\star}$ is  asymmetric.
This motivates the need of extending the $\ell_{2,\infty}$ perturbation theory to accommodate more general matrices, which is the main content of the current section.

\subsection{Setup and notation}\label{sec:general-theory-setup-asymm}
\paragraph{Ground truth.}

Consider an unknown rank-$r$ matrix $\bm{M}^{\star}\in\mathbb{R}^{n_{1}\times n_{2}}$. Let
 $\bm{M}^{\star}=\bm{U}^{\star}\bm{\Sigma}^{\star}\bm{V}^{\star\top}$
represent the SVD of $\bm{M}^{\star}$,
where $\bm{U}^{\star}\in\mathbb{R}^{n_{1}\times r}$ (resp.~$\bm{V}^{\star}\in\mathbb{R}^{n_{2}\times r}$)
entails the top-$r$ left (resp.~right) singular vectors of $\bm{M}^{\star}$,
and $\bm{\Sigma}^{\star}=\mathsf{diag}([\sigma_{1}^{\star},\sigma_{2}^{\star},\cdots,\sigma_{r}^{\star}])$
is formed by the (nonzero) singular values of $\bm{M}^{\star}$. We  arrange the singular values $\{\sigma_i^{\star}\}$ in descending order (i.e., $\sigma_{1}^{\star}\geq \sigma_{2}^{\star} \geq \cdots \geq \sigma_{r}^{\star} > 0$). 
%
%

\paragraph{Key parameters.}
As usual,  $\mu$ stands for the incoherence parameter  of  $\bm{M}^{\star}$ (see Definition~\ref{assump:mc-incoherence}),
and  the condition number of the matrix $\bm{M}^{\star}$ is defined as $\kappa\coloneqq  {\sigma_{1}^{\star}}/{\sigma_{r}^{\star}}$.
Without loss of generality, it is assumed that $$n_1 \leq n_2$$ and we set $n \coloneqq n_1 + n_2$.

\paragraph{Observations and noise assumptions. }
Assume we have access to corrupted observations of $\bm{M}^{\star}$ as follows:
\[
	\bm{M}=\bm{M}^{\star}+\bm{E}\in\mathbb{R}^{n_{1}\times n_{2}},
\]
where $\bm{E}=[E_{i,j}]$ stands for a  noise or perturbation matrix. We impose the following conditions on  $\bm{E}$, which is a natural
adaptation of Assumption~\ref{assumption-noise-general} to the general case and covers a diverse array of scenarios.

\begin{assumption}
\label{assump:noise-general-rect}
The entries of $\bm{E}$ are independently generated obeying
\begin{align}
	\mathbb{E}[E_{i,j}] = 0, \quad \mathbb{E}[E_{i,j}^2]\leq \sigma^2, \quad |E_{i,j}|\leq B \quad \text{for all }i, j.
\end{align}
	Further,  assume that
	\begin{align}
		c_{\mathsf{b}} \coloneqq \frac{B}{\sigma  \sqrt{n_1/(\mu\log n)}} = O(1).
	\end{align}\end{assumption}

\paragraph{Goal and algorithm.}
Again, we aim at estimating $\bm{U}^{\star}$ and $\bm{V}^{\star}$, based
on the observation $\bm{M}$, using a spectral method. Specifically, let
\begin{equation}
\bm{M}=\left[\begin{array}{cc}
\bm{U} & \bm{U}_{\perp}\end{array}\right]\left[\begin{array}{cc}
\bm{\Sigma}\\
 & \bm{\Sigma}_{\perp}
\end{array}\right]\left[\begin{array}{c}
\bm{U}^{\top}\\
\bm{V}^{\top}
\end{array}\right]\label{eq:general-M-SVD}
\end{equation}
be the SVD of $\bm{M}$, in which $\bm{U}\bm{\Sigma}\bm{V}^{\top}$
is the rank-$r$ SVD (i.e., the singular values in $\bm{\Sigma}\coloneqq\mathsf{diag}([\sigma_{1},\cdots,\sigma_{r}])$
are larger than those in $\bm{\Sigma}_{\perp}$). The spectral method then
	deploys ($\bm{U},\bm{V}$) as an estimate of ($\bm{U}^{\star},\bm{V}^{\star}$).

\subsection{$\ell_{2,\infty}$ and $\ell_{\infty}$ theoretical guarantees}
\label{sec:theory-Linf-SVD-general}

We now present a theorem that generalizes Theorem~\ref{thm:UsgnH-Ustar-MUstar-general} and Corollary~\ref{cor:entrywise-error-general} to accommodate general (asymmetric and possibly rectangular) matrices. This can be accomplished via a standard ``symmetric dilation'' trick; the details can be found in Section~\ref{sec:proof-inf-rect}.  

\begin{theorem}
\label{thm:2-inf-asymm}
Consider the settings and
assumptions in Section~\ref{sec:general-theory-setup-asymm}, and
define $\bm{H}_{\bm{U}}\coloneqq\bm{U}^{\top}\bm{U}^{\star}$ and
$\bm{H}_{\bm{V}}\coloneqq\bm{V}^{\top}\bm{V}^{\star}$. With probability
at least $1-O(n^{-5})$, one has
\begin{align}
	& \max\Big\{\|\bm{U}\mathsf{sgn}(\bm{H}_{\bm{U}})-\bm{U}^{\star}\|_{2,\infty},\, \|\bm{V}\mathsf{sgn}(\bm{H}_{\bm{V}})-\bm{V}^{\star}\|_{2,\infty} \Big\} \nonumber\\
	& \qquad\qquad\qquad\qquad \lesssim\frac{\sigma\sqrt{ r}\big(\kappa\sqrt{\frac{n_{2}}{n_{1}}\mu}+\sqrt{\log n}\big)}{\sigma_{r}^{\star}},\label{eq:main-result-2-infty-general-asymm}
\end{align}
provided that $\sigma\sqrt{n\log n}\leq c_{1}\sigma_{r}^{\star}$
for some sufficiently small constant $c_{1}>0$. In addition, if $\sigma\kappa\sqrt{n\log n}\leq c_{2}\sigma_{r}^{\star}$
for some small enough constant $c_{2}>0$, then the following holds with
probability at least $1-O(n^{-5})$:
\begin{equation}
	\|\bm{U}\bm{\Sigma}\bm{V}^{\top}-\bm{M}^{\star}\|_{\infty}\lesssim\sigma\kappa^{2}\mu r\sqrt{\frac{(n_2/n_1) \log n}{n_{1}}}.\label{eq:main-result-infty-general-asymm}
\end{equation}
\end{theorem}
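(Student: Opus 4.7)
The plan is to reduce Theorem~\ref{thm:2-inf-asymm} to Theorem~\ref{thm:UsgnH-Ustar-MUstar-general} and Corollary~\ref{cor:entrywise-error-general} via the standard symmetric dilation trick. Form the dilated objects
\begin{align*}
  \widetilde{\bm{M}} = \begin{pmatrix} \bm{0} & \bm{M} \\ \bm{M}^{\top} & \bm{0} \end{pmatrix}, \quad
  \widetilde{\bm{M}}^{\star} = \begin{pmatrix} \bm{0} & \bm{M}^{\star} \\ \bm{M}^{\star\top} & \bm{0} \end{pmatrix}, \quad
  \widetilde{\bm{E}} = \widetilde{\bm{M}} - \widetilde{\bm{M}}^{\star},
\end{align*}
which are symmetric matrices in $\mathbb{R}^{\widetilde{n}\times \widetilde{n}}$ with $\widetilde{n} = n_1+n_2 \asymp n$. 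A direct calculation from the SVD of $\bm{M}^{\star}$ shows that $\widetilde{\bm{M}}^{\star}$ has rank exactly $2r$, with nonzero eigenvalues $\{\pm \sigma_i^{\star}\}_{i=1}^r$ and top-$2r$ eigenspace $\widetilde{\bm{U}}^{\star} = \tfrac{1}{\sqrt{2}}\bigl(\begin{smallmatrix} \bm{U}^{\star} & \bm{U}^{\star} \\ \bm{V}^{\star} & -\bm{V}^{\star}\end{smallmatrix}\bigr) \in \mathbb{R}^{\widetilde n \times 2r}$; analogously, the SVD \eqref{eq:general-M-SVD} yields a top-$2r$ eigenpair $(\widetilde{\bm{U}},\widetilde{\bm{\Lambda}})$ of $\widetilde{\bm{M}}$ of the same block form with $(\bm{U},\bm{V},\sigma_i)$ in place of $(\bm{U}^{\star},\bm{V}^{\star},\sigma_i^{\star})$, and $\widetilde{\bm{\Lambda}} = \mathsf{diag}([\sigma_{1},\cdots,\sigma_{r},-\sigma_{1},\cdots,-\sigma_{r}])$.

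Next, I verify that Assumption~\ref{assumption-noise-general} holds for $(\widetilde{\bm{M}}^{\star},\widetilde{\bm{E}})$ with effective parameters $\widetilde{n}$, effective rank $2r$, condition number $\widetilde{\kappa}=\kappa$, smallest-magnitude nonzero eigenvalue $\sigma_r^{\star}$, and incoherence $\widetilde{\mu}$. A row-norm computation together with $n_1 \leq n_2$ gives $\|\widetilde{\bm{U}}^{\star}\|_{2,\infty}^2 \leq \mu r/n_1$, hence $\widetilde{\mu} = \widetilde{n}\,\|\widetilde{\bm{U}}^{\star}\|_{2,\infty}^2/(2r) \leq \frac{n_1+n_2}{2n_1}\mu \lesssim (n_2/n_1)\mu$. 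The noise matrix $\widetilde{\bm{E}}$ has independent lower-triangular entries inherited from $\bm{E}$, with the same variance proxy $\sigma^2$ and magnitude bound $B$, and the dimensionless ratio in \eqref{eq:assumption-B-sigma} for $\widetilde{\bm{E}}$ is $O(1)$.

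Applying Theorem~\ref{thm:UsgnH-Ustar-MUstar-general} to $\widetilde{\bm{M}}$ (with rank $2r$) then yields, with probability $1-O(n^{-5})$,
\begin{align*}
	\big\|\widetilde{\bm{U}}\,\mathsf{sgn}(\widetilde{\bm{H}}) - \widetilde{\bm{U}}^{\star}\big\|_{2,\infty}
	\lesssim \frac{\sigma\kappa\sqrt{\widetilde{\mu}\cdot 2r} + \sigma\sqrt{2r\log n}}{\sigma_r^{\star}}
	\lesssim \frac{\sigma\sqrt{r}\bigl(\kappa\sqrt{(n_2/n_1)\mu} + \sqrt{\log n}\bigr)}{\sigma_r^{\star}},
\end{align*}
where $\widetilde{\bm{H}} = \widetilde{\bm{U}}^{\top}\widetilde{\bm{U}}^{\star}$. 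To translate this into \eqref{eq:main-result-2-infty-general-asymm}, the key algebraic observation is that the orthogonal matrix $\bm{T} = \tfrac{1}{\sqrt{2}}\bigl(\begin{smallmatrix} \bm{I}_r & \bm{I}_r \\ \bm{I}_r & -\bm{I}_r\end{smallmatrix}\bigr)$ block-diagonalizes $\widetilde{\bm{H}}$ in the sense that $\bm{T}^{\top}\widetilde{\bm{H}}\bm{T} = \bigl(\begin{smallmatrix} \bm{H}_{\bm{U}} & \bm{0} \\ \bm{0} & \bm{H}_{\bm{V}}\end{smallmatrix}\bigr)$. Invoking $\mathsf{sgn}(\bm{Q}^{\top}\bm{A}\bm{Q}) = \bm{Q}^{\top}\mathsf{sgn}(\bm{A})\bm{Q}$ for square orthogonal $\bm{Q}$, one derives $\mathsf{sgn}(\widetilde{\bm{H}}) = \bm{T}\bigl(\begin{smallmatrix} \mathsf{sgn}(\bm{H}_{\bm{U}}) & \bm{0} \\ \bm{0} & \mathsf{sgn}(\bm{H}_{\bm{V}})\end{smallmatrix}\bigr)\bm{T}^{\top}$, and a direct multiplication yields the clean identity
\begin{align*}
	\widetilde{\bm{U}}\,\mathsf{sgn}(\widetilde{\bm{H}}) - \widetilde{\bm{U}}^{\star} = \tfrac{1}{\sqrt{2}}\begin{pmatrix} \bm{U}\mathsf{sgn}(\bm{H}_{\bm{U}}) - \bm{U}^{\star} & \bm{U}\mathsf{sgn}(\bm{H}_{\bm{U}}) - \bm{U}^{\star} \\ \bm{V}\mathsf{sgn}(\bm{H}_{\bm{V}}) - \bm{V}^{\star} & -(\bm{V}\mathsf{sgn}(\bm{H}_{\bm{V}}) - \bm{V}^{\star})\end{pmatrix}.
\end{align*}
A row-wise $\ell_2$ calculation then gives $\|\widetilde{\bm{U}}\,\mathsf{sgn}(\widetilde{\bm{H}}) - \widetilde{\bm{U}}^{\star}\|_{2,\infty} = \max\{\|\bm{U}\mathsf{sgn}(\bm{H}_{\bm{U}}) - \bm{U}^{\star}\|_{2,\infty},\, \|\bm{V}\mathsf{sgn}(\bm{H}_{\bm{V}}) - \bm{V}^{\star}\|_{2,\infty}\}$, establishing \eqref{eq:main-result-2-infty-general-asymm}.

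For the entrywise bound \eqref{eq:main-result-infty-general-asymm}, I apply Corollary~\ref{cor:entrywise-error-general} to $\widetilde{\bm{M}}$ to obtain $\|\widetilde{\bm{U}}\widetilde{\bm{\Lambda}}\widetilde{\bm{U}}^{\top} - \widetilde{\bm{M}}^{\star}\|_{\infty} \lesssim \sigma\kappa^2 \widetilde{\mu}(2r)\sqrt{\log n/\widetilde{n}} \lesssim \sigma\kappa^2\mu r\sqrt{(n_2/n_1)\log n/n_1}$, where in the last step I use $\widetilde{\mu}\cdot 2r \lesssim (n_2/n_1)\mu r$ together with $n_1+n_2 \leq 2n_2$. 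A direct block multiplication gives $\widetilde{\bm{U}}\widetilde{\bm{\Lambda}}\widetilde{\bm{U}}^{\top} = \bigl(\begin{smallmatrix} \bm{0} & \bm{U}\bm{\Sigma}\bm{V}^{\top} \\ \bm{V}\bm{\Sigma}\bm{U}^{\top} & \bm{0}\end{smallmatrix}\bigr)$, i.e., precisely the symmetric dilation of $\bm{U}\bm{\Sigma}\bm{V}^{\top}$; hence the entrywise error coincides with $\|\bm{U}\bm{\Sigma}\bm{V}^{\top} - \bm{M}^{\star}\|_{\infty}$, yielding \eqref{eq:main-result-infty-general-asymm}. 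The main technical obstacle is the careful bookkeeping in the block calculations of $\widetilde{\bm{H}}$ and $\mathsf{sgn}(\widetilde{\bm{H}})$ together with the faithful propagation of the extra $n_2/n_1$ factor from $\widetilde{\mu}$ through both bounds; beyond this, no new probabilistic input is required.
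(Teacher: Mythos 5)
Your proof is correct and follows the same symmetric-dilation strategy as the paper's. You verify identical preliminaries (rank $2r$, condition number $\kappa$, smallest nonzero eigenvalue magnitude $\sigma_r^{\star}$, dilated incoherence $\widetilde{\mu}\lesssim (n_2/n_1)\mu$, noise assumption with the same $\sigma,B,c_{\mathsf{b}}$) and apply Theorem~\ref{thm:UsgnH-Ustar-MUstar-general} and Corollary~\ref{cor:entrywise-error-general} to the dilated matrices; the $\ell_\infty$ bound is then handled identically to the paper.

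Where you genuinely deviate is in translating the dilated $\ell_{2,\infty}$ bound back to the individual singular subspaces. The paper applies the ``$\bm{H}$-version'' bound \eqref{eq:UH-MUstar-Lambda-star-2inf} to $\|\overline{\bm{U}}\,\overline{\bm{H}}-\overline{\bm{U}}^{\star}\|_{2,\infty}$, uses the block identity to extract bounds on $\|\bm{U}\bm{H}_{\bm U}-\bm{U}^{\star}\|_{2,\infty}$ and $\|\bm{V}\bm{H}_{\bm V}-\bm{V}^{\star}\|_{2,\infty}$, and then asks the reader to ``repeat the analysis as in the proof of Lemma~\ref{lem:UH-sequence-bounds-E123}'' to pass from $\bm{H}_{\bm U},\bm{H}_{\bm V}$ to their matrix sign functions, omitting the details. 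You instead apply the $\mathsf{sgn}$-version bound \eqref{eq:UsgnH-Ustar-bound-theorem-general} directly, and use the algebraic fact that the orthogonal matrix $\bm T = \tfrac{1}{\sqrt2}\left(\begin{smallmatrix}\bm I & \bm I\\ \bm I & -\bm I\end{smallmatrix}\right)$ block-diagonalizes $\widetilde{\bm H}$ together with the equivariance $\mathsf{sgn}(\bm Q^{\top}\bm A\bm Q)=\bm Q^{\top}\mathsf{sgn}(\bm A)\bm Q$ (for square orthogonal $\bm Q$) and block-diagonality of $\mathsf{sgn}$. This yields the clean closed-form identity for $\widetilde{\bm{U}}\,\mathsf{sgn}(\widetilde{\bm{H}})-\widetilde{\bm{U}}^{\star}$, from which the row-wise norm decomposition is immediate. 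Both routes prove the theorem, but your algebraic shortcut eliminates the omitted ``$\bm{H}\mapsto\mathsf{sgn}(\bm{H})$'' step and is the tighter version of the argument. One small wording nit: in the entrywise bound you write ``together with $n_1+n_2\leq 2n_2$,'' but the direction actually used is $n_1+n_2\geq n_2$ so that $1/\widetilde n \leq 1/n_2$; both inequalities hold since $n_1\leq n_2$, so the numerics are fine.
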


The messages conveyed in Theorem~\ref{thm:2-inf-asymm} largely parallel those in Theorem~\ref{thm:UsgnH-Ustar-MUstar-general} and Corollary~\ref{cor:entrywise-error-general}.
For simplicity, let us discuss the implications when $\kappa, \mu, r = O(1)$ and $n_1 \asymp n_2$ (i.e., the aspect ratio of the matrix is $n_2 / n_1 = O(1)$).
In this scenario, Theorem~\ref{thm:2-inf-asymm} implies that
\begin{align*}
\|\bm{U}\mathsf{sgn}(\bm{H}_{\bm{U}})-\bm{U}^{\star}\|_{2,\infty}+\|\bm{V}\mathsf{sgn}(\bm{H}_{\bm{V}})-\bm{V}^{\star}\|_{2,\infty} & \lesssim\frac{\sigma\sqrt{\log n}}{\sigma_{r}^{\star}},\\
\|\bm{U}\bm{\Sigma}\bm{V}^{\top}-\bm{M}^{\star}\|_{\infty} & \lesssim\sigma  \sqrt{\frac{\log n}{n}},
\end{align*}
both of which bear close similarities to our previous observations \eqref{eq:Linf-eigen-bound-simpler} and \eqref{eq:ULambdaU-entrywise-simple}. Akin to our discussions in Section~\ref{sec:L2inf-Linf-eigen-theory}, these findings tell us that the  singular subspace estimation errors (resp.~the matrix estimation errors)  are fairly spread out across all rows of the singular subspace (resp.~all entries of the matrix).

\section{Application: Entrywise guarantees for matrix completion}

To illustrate the utility of the fine-grained perturbation theory presented in previous sections, let us revisit the problem of matrix completion introduced in Section~\ref{sec:mc-l2} and apply our refined theory.

As a recap, the spectral method proposed for matrix completion proceeds by computing the best rank-$r$ approximation $\bm{U}\bm{\Sigma}\bm{V}^{\top}$ of the rescaled data matrix $\bm{M}=p^{-1}\mathcal{P}_{\Omega}(\bm{M}^{\star})$, where $p$ is the probability of each entry being observed, and $\mathcal{P}_{\Omega}(\cdot)$ denotes the Euclidean projection onto the set of matrices supported on the sampling set $\Omega$.
This time, we seek to characterize the $\ell_{2,\infty}$ error when estimating the true singular subspaces $\bm{U}^{\star}$ and $\bm{V}^{\star}$,
as well as the $\ell_{\infty}$ error when estimating the unknown matrix $\bm{M}^{\star}$, as stated below. As before, we set $n \coloneqq n_1 + n_2$.

\begin{theorem}
\label{thm:mc-inf}
Consider the settings and assumptions in Section~\ref{sec:problem-formulation-MC}, and
define $\bm{H}_{\bm{U}}\coloneqq\bm{U}^{\top}\bm{U}^{\star}$ and
$\bm{H}_{\bm{V}}\coloneqq\bm{V}^{\top}\bm{V}^{\star}$.
Suppose that $n_1\leq n_2$ and $n_{1}p\geq C\kappa^{4}\mu^{2}r^{2}\log n$ for some
sufficiently large constant $C>0$. Then with probability greater
than $1-O(n^{-5})$, we have
\begin{subequations}
\begin{align}
\max\{\|\bm{U}\mathsf{sgn}(\bm{H}_{\bm{U}})-\bm{U}^{\star}\|_{2,\infty},\,\,&\|\bm{V}\mathsf{sgn}(\bm{H}_{\bm{V}})-\bm{V}^{\star}\|_{2,\infty}\} \nonumber\\& \leq\kappa^{2}\sqrt{\frac{\mu^{3}r^{3}\log n}{n_{1}^{2}p}};
\label{eq:mc-inf-claim-1}\\
\|\bm{U}\bm{\Sigma}\bm{V}^{\top}-\bm{M}^{\star}\|_{\infty} & \lesssim\kappa^{2}\mu^{2}r^{2}\sqrt{\frac{\log n}{n_{1}^{3}p}}\|\bm{M}^{\star}\|.
\label{eq:mc-inf-claim-2}
\end{align}
\end{subequations}
\end{theorem}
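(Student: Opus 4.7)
The plan is to invoke Theorem~\ref{thm:2-inf-asymm} directly, viewing matrix completion as a special case of singular-subspace perturbation under independent noise. Setting $\bm{E} \coloneqq \bm{M} - \bm{M}^\star = p^{-1}\mathcal{P}_\Omega(\bm{M}^\star) - \bm{M}^\star$, the entries of $\bm{E}$ read $E_{i,j} = (p^{-1}\delta_{i,j} - 1)M^\star_{i,j}$ with $\delta_{i,j}\sim\mathsf{Bernoulli}(p)$ independent across $(i,j)$. Thus $\bm{E}$ automatically has independent zero-mean entries, and the only real work is to identify the effective parameters $(\sigma, B, \mu, \kappa)$ and check that the sampling-rate hypothesis of Theorem~\ref{thm:mc-inf} implies the two signal-to-noise conditions of Theorem~\ref{thm:2-inf-asymm}.

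\textbf{Identifying parameters and verifying assumptions.} Using the incoherence bound $\|\bm{M}^\star\|_\infty \leq \mu r\|\bm{M}^\star\|/\sqrt{n_1 n_2}$ from Lemma~\ref{claim:mu-incoherence}, I obtain
\[
	\mathbb{E}[E_{i,j}^2] \,=\, \tfrac{1-p}{p}(M^\star_{i,j})^2 \,\leq\, \sigma^2 \coloneqq \tfrac{\mu^2 r^2}{p n_1 n_2}\|\bm{M}^\star\|^2,\quad |E_{i,j}| \,\leq\, B \coloneqq \tfrac{\mu r}{p\sqrt{n_1 n_2}}\|\bm{M}^\star\|.
\]
A direct computation gives $c_{\mathsf{b}} = B/\big(\sigma\sqrt{n_1/(\mu\log n)}\big) = \sqrt{\mu\log n/(n_1 p)}$, which is $O(1)$ under the hypothesis $n_1 p \geq C\kappa^4\mu^2 r^2\log n$. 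Using $\sigma_r^\star = \|\bm{M}^\star\|/\kappa$, the SNR condition $\sigma\kappa\sqrt{n\log n} \leq c_2\sigma_r^\star$ reduces to $n_1 p \gtrsim \kappa^4\mu^2 r^2 (n/n_2)\log n$, which is satisfied because $n/n_2 \leq 2$ under $n_1 \leq n_2$; this is the stronger of the two SNR assumptions of Theorem~\ref{thm:2-inf-asymm}.

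\textbf{Plugging in and simplifying.} Substituting $\sigma$ and $\sigma_r^\star$ into \eqref{eq:main-result-2-infty-general-asymm} yields
\[
	\big\|\bm{U}\mathsf{sgn}(\bm{H}_{\bm{U}})-\bm{U}^\star\big\|_{2,\infty} \,\lesssim\, \frac{\kappa^2\mu^{3/2}r^{3/2}}{n_1\sqrt{p}} + \frac{\kappa\mu r^{3/2}\sqrt{\log n}}{\sqrt{p n_1 n_2}},
\]
and the analogous bound for $\bm{V}$. Both summands are dominated by $\kappa^2\sqrt{\mu^3 r^3\log n/(n_1^2 p)}$ (for the first using $\log n \geq 1$, for the second using $n_2 \geq n_1$ and $\kappa\sqrt{\mu} \geq 1$), which establishes \eqref{eq:mc-inf-claim-1}. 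Likewise, plugging $\sigma$ into the entrywise bound \eqref{eq:main-result-infty-general-asymm} produces
\[
	\|\bm{U}\bm{\Sigma}\bm{V}^\top - \bm{M}^\star\|_\infty \,\lesssim\, \sigma\kappa^2\mu r\sqrt{\tfrac{(n_2/n_1)\log n}{n_1}} \,=\, \kappa^2\mu^2 r^2\sqrt{\tfrac{\log n}{n_1^3 p}}\|\bm{M}^\star\|,
\]
which is exactly \eqref{eq:mc-inf-claim-2}.

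\textbf{Anticipated difficulty.} Since the heavy lifting has already been done in Theorem~\ref{thm:2-inf-asymm}, the proof is essentially a bookkeeping exercise. The only mildly delicate point will be carefully tracking how the two terms inside the bracket in \eqref{eq:main-result-2-infty-general-asymm}, namely $\kappa\sqrt{(n_2/n_1)\mu}$ and $\sqrt{\log n}$, combine under the asymmetric aspect ratio and why the coarser single-expression bound in the theorem still absorbs both contributions; apart from this small algebraic reconciliation, no new probabilistic input is required.
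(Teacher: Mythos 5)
Your proposal is correct and follows essentially the same route as the paper: both identify the effective noise parameters $(\sigma,B)$ for the Bernoulli-rescaled error $\bm{E}$, verify $c_{\mathsf{b}}=O(1)$ and the SNR conditions under the stated sampling-rate hypothesis, and then plug into Theorem~\ref{thm:2-inf-asymm}. The only (cosmetic) difference is that you fold the incoherence bound $\|\bm{M}^\star\|_\infty\leq\mu r\|\bm{M}^\star\|/\sqrt{n_1n_2}$ into your definitions of $\sigma$ and $B$ from the outset, whereas the paper defines $\sigma^2=\|\bm{M}^\star\|_\infty^2/p$, $B=\|\bm{M}^\star\|_\infty/p$ and applies the incoherence bound at the end.
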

\begin{proof}
	Recall our notation $\bm{E}= \bm{M}  -\bm{M}^{\star}= p^{-1}\mathcal{P}_{\Omega}(\bm{M}^{\star})-\bm{M}^{\star}$.
 It is straightforward to check
that $\bm{E}$ satisfies Assumption~\ref{assump:noise-general-rect} with
\begin{equation}
\sigma^{2}\coloneqq\frac{\|\bm{M}^{\star}\|_{\infty}^{2}}{p},\qquad\text{and}\qquad B\coloneqq\frac{\|\bm{M}^{\star}\|_{\infty}}{p}. \label{eq:mc-inf-noise}
\end{equation}
In addition, from the relation $B= c_{\mathsf{b}}\sigma\sqrt{n_1/({\mu}\log n)}$, it is seen that $c_{\mathsf{b}}= O(1)$
holds as long as $n_{1}p\gtrsim\mu\log n$.
With these preparations in place, the claims in Theorem~\ref{thm:mc-inf} follow
directly from Theorem~\ref{thm:2-inf-asymm} and the bound (\ref{eq:mc-entry-upper}) on $\|\bm{M}^{\star}\|_{\infty}$
(and hence on $\sigma$).
\end{proof}


In what follows, we compare the $\ell_{2,\infty}$ and $\ell_{2}$ performance
guarantees derived in the above theorem with those $\ell_{2}$ guarantees presented
in Section~\ref{sec:mc-l2-theory}; see Figure~\ref{fig:matrix-completion-motivation} in Section~\ref{sec:motivating_examples} for empirical performances. For the sake of brevity, we shall concentrate on the case where $\mu,\kappa,r=O(1)$.

\begin{itemize}

\item {\em $\ell_{2,\infty}$ singular space perturbation bounds. }
Comparing the $\ell_{2,\infty}$ performance guarantee (\ref{eq:mc-inf-claim-1}) with  Theorem~\ref{thm:mc-l2-subspace}, one sees that the $\ell_{2,\infty}$ perturbation bounds could be an order of $\sqrt{n_1}$ times smaller than the $\ell_{2}$ counterpart, showcasing the de-localization effect of the errors of the spectral estimates $\bm{U}$ and $\bm{V}$.

\item {\em Entrywise matrix estimation bounds. }
Furthermore, the entrywise error (\ref{eq:mc-inf-claim-2}) is about an order of $n_1$ times smaller than the corresponding Euclidean error predicted  in Theorem~\ref{thm:mc-l2-matrix}. This indicates that no entry in the resulting matrix estimate suffers from an error significantly higher than the average entrywise error.

\end{itemize}

\section{Application: Exact community recovery}
\label{sec:community-detection-linf}

Another application that benefits remarkably from the fine-grained eigenvector perturbation theory is community recovery. This section reexplores the stochastic block model studied in Section~\ref{sec:community-detection}, and develops significantly enhanced theoretical support for spectral clustering.

\subsection{Performance guarantees: Exact recovery}
\label{sec:performance-exact-recovery}

The focus of this section is simultaneous recovery of the community memberships of all vertices, 
which is termed {\em exact recovery} or {\em strong consistency} in the community detection literature \citep{abbe2017community}. This imposes a much stronger requirement than the {\em weak consistency} studied in Section~\ref{sec:theory-community-detection}. 

For the sake of conciseness, the theorem below concentrates on the challenging regime where $p,q\asymp \frac{\log n}{n}$,  corresponding to the lowest possible edge densities that allow for exact recovery. This is because, if $p < \frac{\log n}{n}$, then with high probability, one can find isolated vertices that are not connected with any edge in the graph \citep{durrett2007random}; hence, there will be absolutely no means to infer the community membership of these isolated vertices.  The theoretical guarantee is as follows. 
\begin{theorem}
	\label{thm:community-recovery-linf}
	Fix any constant $\varepsilon>0$, and consider the setting of Section~\ref{sec:setup-community-detection}. Suppose  $p=\frac{\alpha \log n}{n}$ and $q=\frac{\beta \log n}{n}$ for some sufficiently large constants $\alpha > \beta >0$.\footnote{In the current proof, the constants $\alpha$, $\beta$ might depend on the fixed choice of $\varepsilon$. Encouragingly, this restriction can be lifted; see~\citet{abbe2020entrywise} for details.} In addition, assume that
	\begin{align}
		\big( \sqrt{p} -\sqrt{q}\big)^2 \geq   2\left( 1+ \varepsilon  \right) \frac{\log n}{n} .
		\label{eq:H-pq-lower-bound-theorem}
	\end{align}
	With probability  $1-o(1)$, the spectral method in Section~\ref{sec:algorithm-community-detection} yields
	\[
		x_{i}=x_{i}^{\star}~~\text{for all }1\leq i\leq n, \quad~\text{or}\quad~ x_{i}=-x_{i}^{\star}~~\text{for all }1\leq i\leq n.
	\]
\end{theorem}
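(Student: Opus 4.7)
The plan is to leverage the first-order approximation in Theorem~\ref{thm:UsgnH-Ustar-MUstar-general} to reduce exact recovery to a uniform, sharp Chernoff bound on a Bernoulli-type sum.

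First, I will instantiate Theorem~\ref{thm:UsgnH-Ustar-MUstar-general} for the rank-one matrix $\bm{M}^{\star} = \frac{p-q}{2}\bm{x}^{\star}\bm{x}^{\star\top}$ (cf.~\eqref{eq:M-data-matrix-expectation-SBM}), with parameters $r=1$, $\kappa=1$, $\mu = n\|\bm{u}^{\star}\|_\infty^2 = 1$, $\sigma^2 = \max\{p(1-p),q(1-q)\} \leq p \asymp \log n/n$, $B \leq 1$, and $\lambda^{\star} = n(p-q)/2 = (\alpha-\beta)\log n/2$. The hypothesis $c_{\mathsf{b}} = B/(\sigma\sqrt{n/(\mu\log n)}) = 1/\sqrt{\alpha}$ is $O(1)$, and the spectral-gap hypothesis $\sigma\sqrt{n\log n}\lesssim\lambda^{\star}$ reduces to $\sqrt{\alpha}\lesssim \alpha-\beta$; both hold when $\alpha$ and $\alpha-\beta$ are sufficiently large. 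Bound~\eqref{eq:UsgnH-MUstar-bound-theorem-general} then yields, with probability at least $1-O(n^{-5})$,
\begin{equation*}
	\big\|\bm{u}\,\mathsf{sgn}(\bm{h}) - \bm{M}\bm{u}^{\star}/\lambda^{\star}\big\|_{\infty} \;\leq\; \frac{C_0(\alpha,\beta)}{\sqrt{n}}, \qquad \bm{h} := \bm{u}^{\top}\bm{u}^{\star}.
\end{equation*}
A careful substitution into \eqref{eq:UsgnH-MUstar-bound-theorem-general} shows that the dominant contribution is the $\sigma B\sqrt{\log^3 n}/(\lambda^{\star})^2$ term, which scales like $\sqrt{\alpha}/[(\alpha-\beta)^2\sqrt{n}]$, so $C_0(\alpha,\beta)$ can be driven below any prescribed $\eta\in(0,1)$ by enlarging $\alpha$ and $\alpha-\beta$ (compatibly with the threshold constraint $(\sqrt{\alpha}-\sqrt{\beta})^2 \geq 2(1+\varepsilon)$, per the footnote).

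Second, I will convert the $\ell_\infty$ bound into a per-vertex sign condition. Writing $\bm{M}\bm{u}^{\star}/\lambda^{\star} = \bm{u}^{\star} + \bm{E}\bm{u}^{\star}/\lambda^{\star}$ and using $|u_i^{\star}| = 1/\sqrt{n}$, sign-agreement at vertex $i$ is implied by
\begin{equation*}
	x_i^{\star} W_i \;>\; -(1-C_0)\,\lambda^{\star}, \qquad W_i \;:=\; \sqrt{n}\,(\bm{E}\bm{u}^{\star})_i \;=\; \sum_{j\neq i} x_j^{\star} E_{i,j},
\end{equation*}
with $\bm{E} = \bm{A} - \mathbb{E}[\bm{A}]$. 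Conveniently, $x_i^{\star} W_i = (S_+^{(i)} - \mathbb{E} S_+^{(i)}) - (S_-^{(i)} - \mathbb{E} S_-^{(i)})$ where $S_+^{(i)}\sim\mathrm{Bin}(n/2-1,p)$ and $S_-^{(i)}\sim\mathrm{Bin}(n/2,q)$ are independent.

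Third, I will establish the sharp Chernoff bound
\begin{equation*}
	\Pr\{x_i^{\star} W_i \leq -(1-C_0)\lambda^{\star}\} \;\leq\; \exp\!\big(-(1-o(1))\tfrac{n}{2}(\sqrt{p}-\sqrt{q})^2\big),
\end{equation*}
in which the $o(1)$ absorbs both the $C_0$-slack from Step~1 and the $(1-C_0)^2\to 1$ slack in the optimized exponential tilt. Under hypothesis~\eqref{eq:H-pq-lower-bound-theorem}, this tail is at most $n^{-(1+\varepsilon/2)}$ once $C_0$ is fixed sufficiently small; a union bound over $i\in[n]$ gives total failure probability $o(1)$, finishing the proof.

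The main obstacle lies in Step~3: a Bernstein-type bound only gives rate $\tfrac{n(p-q)^2}{8p}$, which is loose by a constant factor and fails to reach the information-theoretic threshold~\eqref{eq:H-pq-lower-bound-theorem}. Attaining the optimal Hellinger-type rate $\tfrac{n}{2}(\sqrt{p}-\sqrt{q})^2$ requires a direct Cram\'er--Chernoff analysis of the moment-generating function of $S_+^{(i)} - S_-^{(i)}$, with the tilt parameter chosen to balance the lower tail of $\mathrm{Bin}(n/2,p)$ against the upper tail of $\mathrm{Bin}(n/2,q)$. The delicate calibration step is to choose $C_0$ small enough that $(1-C_0)^2(1+\varepsilon) > 1$ while simultaneously enlarging $\alpha,\alpha-\beta$ enough to realize this $C_0$ in the second-order error of Theorem~\ref{thm:UsgnH-Ustar-MUstar-general}.
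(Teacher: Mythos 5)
Your proposal is correct and takes essentially the same route as the paper: use the refined first-order approximation in~\eqref{eq:UsgnH-MUstar-bound-theorem-general} to reduce exact recovery to a uniform sign condition on $\bm{M}_{l,\cdot}\bm{u}^{\star} = \lambda^{\star}u_l^{\star} + (\bm{E}\bm{u}^{\star})_l$, then establish the Hellinger-rate large-deviation bound for the resulting difference-of-Binomials. The paper packages your Step~3 as Lemmas~\ref{lemma:LLR-Hellinger}--\ref{lem:sum-Bernoulli-CD}, which is exactly the Cram\'er--Chernoff bound you describe with the exponential tilt fixed at $1/2$ (the Bhattacharyya tilt, optimal for the symmetric threshold problem), and the paper's slack parameter $\zeta$ plays the same bookkeeping role as your $C_0$.
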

This theorem, which first appeared in \citet{abbe2020entrywise},  identifies a sufficient recovery condition in terms of the edge densities. 
The result substantially strengthens the $\ell_2$-based theory in Section~\ref{sec:theory-community-detection}, 
uncovering the capability of the spectral method in achieving not merely almost exact recovery in the average sense, 
but more appealingly,  exact community recovery that ensures correct labels of all vertices.  

A natural question arises as to whether the recovery condition \eqref{eq:H-pq-lower-bound-theorem} is improvable via more sophisticated algorithms. Answering this question requires information-theoretic thinking, that is, how to characterize a fundamental threshold---in terms of the difference of edge densities---below which exact recovery is deemed  infeasible. As has been demonstrated in \citet{abbe2014exact,mossel2015consistency,hajek2015achieving}, no algorithm whatsoever is able to achieve exact community recovery  if
\begin{equation}
	\big( \sqrt{p} -\sqrt{q}\big)^2 \leq  2(1-\varepsilon) \frac{\log n}{n} 
	\label{eq:IT-lower-bound-CD}
\end{equation}
for any constant $\varepsilon >0$. 
This fundamental lower bound, in conjunction with Theorem~\ref{thm:community-recovery-linf}, reveals a sharp phase transition behind the performance of the spectral method. In particular, its optimality is guaranteed all the way down to the information-theoretic threshold; see Figure~\ref{fig:sbm-inf} for numerical evidence.  

\begin{figure}[!t]
\begin{center}
	\includegraphics[width=0.55\textwidth]{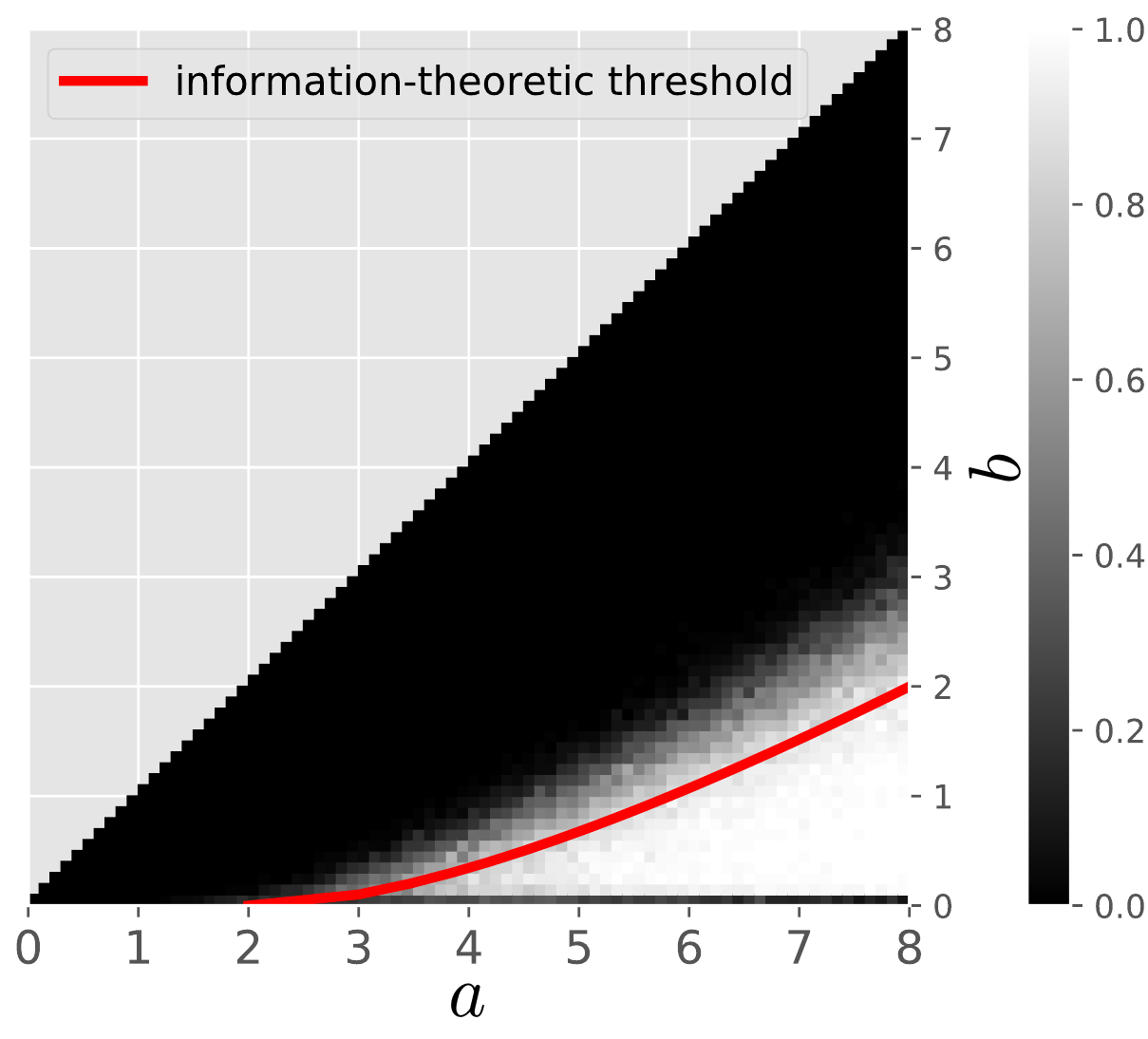} 
\end{center}	
	\caption{Phase transition of spectral methods for exact community recovery. Set $n=300$, $p=(a \log n) / n$, and $q = (b \log n) / n$. We vary $a,b$ from 0 to 8 with an equal space of 0.1. For each configuration of $a \geq b$, we conduct 100 Monte Carlo trials and report the empirical success rate for recovering the entire community structure correctly. 
	The empirical phase transition occurs near the information-theoretic threshold (see \eqref{eq:IT-lower-bound-CD}).
	\label{fig:sbm-inf}}
\end{figure}

Given that the above information-theoretic threshold is specified in terms of $( \sqrt{p} -\sqrt{q})^2$,
the reader might naturally wonder what the operational meaning of this quantity is. As it turns out, this metric is a sort of distance measure between the two edge probability distributions under consideration. In truth, in the setting of Theorem~\ref{thm:community-recovery-linf}, this metric is intimately related to the squared Hellinger distance between two Bernoulli distributions.
\begin{definition}[Squared Hellinger distance]
\label{defn:Hellinger-distance}
Consider two distributions $P$ and $Q$ over a finite alphabet $\mathcal{Y}$. The squared Hellinger distance $\mathsf{H}^2(P\,\|\,Q)$ between $P$ and $Q$  is defined as follows	%
\begin{equation}
\mathsf{H}^2(P\,\|\,Q)\coloneqq\frac{1}{2}\sum\nolimits_{y\in \mathcal{Y}}\Big(\sqrt{P(y)}-\sqrt{Q(y)}\Big)^{2}.
\label{eq:defn-Hellinger-PQ}
\end{equation}
\end{definition}
In particular, consider the squared Hellinger distance between two Bernoulli distributions of interest $\mathsf{Bern}(p)$ and $\mathsf{Bern}(q)$, where we denote by $\mathsf{Bern}(p)$ the Bernoulli distribution with mean $p$. It is seen that \citep{chen2016information}
\begin{align*}
	\mathsf{H}^2\big( \mathsf{Bern}(p),   \mathsf{Bern}(q) \big) 
	&\coloneqq \frac{1}{2} \big( \sqrt{p} -\sqrt{q}\big)^2 + \frac{1}{2} \big( \sqrt{1-p} -\sqrt{1-q}\big)^2 \nonumber\\
	&= (1+o(1)) \frac{1}{2} \big( \sqrt{p} -\sqrt{q}\big)^2,
\end{align*}
when $p=o(1)$ and $q=o(1)$.\footnote{To justify this approximation, the following calculation suffices: \[ \sqrt{1-q}-\sqrt{1-p}=\frac{p-q}{\sqrt{1-p}+\sqrt{1-q}}=(1+o(1))\big(\sqrt{p}-\sqrt{q}\big)\big(\sqrt{p}+\sqrt{q}\big)=o\big(\sqrt{p}-\sqrt{q}\big).\]} 
The phase transition phenomenon identified in \eqref{eq:H-pq-lower-bound-theorem} and \eqref{eq:IT-lower-bound-CD} can then be alternatively described as
\begin{align*}
\text{\text{spectral method works}}\quad & \text{if }\mathsf{H}^{2}\big(\mathsf{Bern}(p),\mathsf{Bern}(q)\big)\geq(1+\varepsilon)\frac{\log n}{n}\\
\text{no algorithm works}\quad & \text{if }\mathsf{H}^{2}\big(\mathsf{Bern}(p),\mathsf{Bern}(q)\big)\leq(1-\varepsilon)\frac{\log n}{n}
\end{align*}
for an arbitrary small constant $\varepsilon >0$.

\subsection{Proof of Theorem~\ref{thm:community-recovery-linf}} 

We now turn to the proof of Theorem~\ref{thm:community-recovery-linf}. 
Without loss of generality, suppose that $x_i^{\star}=1$ for all $1\leq i\leq n/2$  and $x_i^{\star}=-1$ for all $i> n/2$, so that 
$\bm{u}^{\star} =\frac{1}{\sqrt{n}}  {\small
\left[\begin{array}{c}
\bm{1}_{n/2}\\
-\bm{1}_{n/2}
\end{array}\right] }$.

Recalling the matrix $\bM$ given in \eqref{eq:M-data-matrix-SBM} and its mean $\bM^{\star}$ in \eqref{eq:M-data-matrix-expectation-SBM}, 
one can immediately see that $\kappa=\mu=r=1$ for $\bM^{\star}$ in this application. 
Theorem \ref{thm:UsgnH-Ustar-MUstar-general} (cf.~\eqref{eq:UsgnH-MUstar-bound-theorem-general}) readily implies 
the existence of some $z\in\{1,-1\}$ such that
\begin{align}
 & \Big\| z\bm{u}-\frac{1}{\lambda^{\star}}\bm{M}\bm{u}^{\star}\Big\|_{\infty}
\lesssim\frac{\sigma}{|\lambda^{\star}|}+\frac{\sigma^{2}\sqrt{n\log n}+\sigma B \,{\log^{3/2}n}}{(\lambda^{\star})^{2}} 
\label{eq:implication-Thm-symmetry-1st-CD}
\end{align}
with probability at least $1-O(n^{-5})$. Additionally, it has already been explained in Section~\ref{sec:community-detection} that  
\[
	 B=1,\quad\sigma^{2}\leq\max\{p,q\}=p, \quad \text{and} \quad \lambda^{\star}={n(p-q)}/{2}.
\]
Substitution into \eqref{eq:implication-Thm-symmetry-1st-CD} reveals that
\begin{align}
	\big\| z\lambda^{\star}\bm{u}-\bm{M}\bm{u}^{\star} \big\|_{\infty} & \lesssim\sigma+\frac{\sigma^{2}\sqrt{n\log n}}{\lambda^{\star}}+\frac{\sigma B\,\log^{3/2}n}{\lambda^{\star}}\nonumber \\
 	& \leq  C \Big( \sqrt{p}+\frac{p\sqrt{\log n}}{\sqrt{n}(p-q)}+\frac{\sqrt{p}\log^{3/2}n}{n(p-q)} \Big)
	\label{eq:z-lambda-u-MU-bound-CD}
\end{align}
holds for some universal constant $C>0$.
As a result, a crucial step boils down to controlling $\bm{M}\bm{u}^{\star}$ in an entrywise manner: each element is a difference between two independent random binomial random variables and  is accomplished through the following lemma. 
\begin{lemma}
	\label{lemma:M-ustar-lower-bound-CD}
	Suppose that
\begin{align}
	\mathsf{H}_{p,q}^{2} \coloneqq \big( \sqrt{p} -\sqrt{q}\big)^2 \geq\left( 1+ \varepsilon  \right) \frac{2\log n}{n} 
	\label{eq:H-pq-lower-bound-lemma}
\end{align}
for some quantity $\varepsilon>0$.  Let $\varepsilon_0 \coloneqq \frac{\varepsilon\log n}{\sqrt{n}\log\frac{p(1-q)}{q(1-p)}}-\frac{1}{\sqrt{n}}$. 
Then with probability exceeding $1-n^{-\varepsilon/2}$, one has
\begin{align*}
	 \bm{M}_{l,\cdot}\bm{u}^{\star}   \geq \varepsilon_0 \,\,\,\text{for all }l\leq \frac{n}{2},
	\quad\text{and}\quad
	   \bm{M}_{l,\cdot} \bm{u}^{\star}  
	\leq - \varepsilon_0 \,\,\, \text{for all } l > \frac{n}{2}. 
\end{align*}
\end{lemma}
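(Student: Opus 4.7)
}

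By symmetry, it suffices to treat the case $l \le n/2$. The first step is to rewrite $\bm{M}_{l,\cdot}\bm{u}^{\star}$ as a shifted difference of two independent binomials. Since $\bm{M} = \bm{A} - \frac{p+q}{2}\bm{1}_n\bm{1}_n^{\top} + p\bm{I}$, the diagonal entry $M_{l,l} = \frac{p-q}{2}$ while $M_{l,j} = A_{l,j} - \frac{p+q}{2}$ for $j\neq l$. Collecting the deterministic offsets in the two halves of $\bm{u}^{\star}$, a direct computation gives
\begin{equation*}
	\sqrt{n}\,\bm{M}_{l,\cdot}\bm{u}^{\star} \;=\; p \;+\; X \;-\; Y,
\end{equation*}
where $X \coloneqq \sum_{j\le n/2,\, j\neq l} A_{l,j} \sim \mathsf{Binom}(n/2 - 1, p)$ and $Y \coloneqq \sum_{j>n/2} A_{l,j} \sim \mathsf{Binom}(n/2, q)$ are independent. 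The task thus reduces to a lower-tail bound on $X - Y$.

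Next, I will apply the Chernoff/exponential bound: for any $\lambda > 0$,
\begin{equation*}
	\mathbb{P}\big\{X - Y \le t\big\} \;\le\; e^{\lambda t}\,\mathbb{E}[e^{-\lambda X}]\,\mathbb{E}[e^{\lambda Y}] \;=\; e^{\lambda t}\,(1 - p + p e^{-\lambda})^{n/2 - 1}(1 - q + q e^{\lambda})^{n/2}.
\end{equation*}
The crucial move is to pick $\lambda = \tfrac{1}{2}\log\frac{p(1-q)}{q(1-p)}$, which is precisely the Chernoff optimizer and makes the Bernoulli MGFs collapse into the Bhattacharyya coefficient:
\begin{equation*}
	(1 - p + p e^{-\lambda})(1 - q + q e^{\lambda}) \;=\; \big(\sqrt{pq} + \sqrt{(1-p)(1-q)}\big)^2 \;=\; \big(1 - \mathsf{H}^2\big)^2,
\end{equation*}
where $\mathsf{H}^2 \coloneqq \mathsf{H}^2(\mathsf{Bern}(p),\mathsf{Bern}(q))$. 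In the sparse regime $p,q = o(1)$ one has $\mathsf{H}^2 = (1+o(1))\,\tfrac{1}{2}\mathsf{H}^2_{p,q}$, and the hypothesis \eqref{eq:H-pq-lower-bound-lemma} yields $(1 - \mathsf{H}^2)^{n-2} \le e^{-(n-2)\mathsf{H}^2} \le n^{-(1+\varepsilon)(1 + o(1))}$. The leftover factor $(1 - q + q e^{\lambda})$ is only $1 + o(1)$ since $qe^{\lambda} = \sqrt{pq(1-q)/(1-p)} = o(1)$.

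Finally, I plug in $t = \sqrt{n}\,\varepsilon_0 - p$ and verify that the exponential prefactor $e^{\lambda t}$ contributes exactly $n^{\varepsilon/2}(1 + o(1))$. Indeed, writing $L \coloneqq \log\frac{p(1-q)}{q(1-p)}$ so that $\lambda = L/2$, the definition $\sqrt{n}\,\varepsilon_0 = \varepsilon\log n / L - 1$ gives $\lambda\sqrt{n}\,\varepsilon_0 = \tfrac{\varepsilon \log n}{2} - \lambda$, so $e^{\lambda t} = n^{\varepsilon/2}e^{-\lambda(1 + p)} \le n^{\varepsilon/2}$. Combining these estimates,
\begin{equation*}
	\mathbb{P}\big\{X - Y \le \sqrt{n}\,\varepsilon_0 - p\big\} \;\le\; n^{\varepsilon/2}(1 + o(1))\, n^{-(1+\varepsilon)(1+o(1))} \;\le\; n^{-1 - \varepsilon/2 + o(1)}.
\end{equation*}
A union bound over the $n$ choices of $l$ (handling $l > n/2$ symmetrically by swapping the roles of $X$ and $Y$ and flipping signs) then delivers the claim with failure probability at most $n^{-\varepsilon/2}$.

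The main obstacle is the precise bookkeeping in the Chernoff step: making sure that the definition of $\varepsilon_0$ (which looks unmotivated at first glance) is exactly calibrated so that the exponential factor $e^{\lambda t}$ lines up with the $(1-\mathsf{H}^2)^{n-2}$ term to leave a net $n^{-1-\varepsilon/2+o(1)}$, and controlling the $o(1)$ slack coming from the $\sqrt{(1-p)(1-q)}$ contribution to the Hellinger distance in the regime $p,q \asymp \log n / n$.
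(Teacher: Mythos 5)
Your proposal is correct and works, but it takes a somewhat more direct route than the paper's. The paper decomposes $\bm{A}_{l,\cdot}\bm{u}^{\star}$ as a sum of Bernoulli \emph{differences} $z_i - w_i$ supported on $\{1,0,-1\}$ and then invokes two auxiliary lemmas: one (Lemma~\ref{lemma:LLR-Hellinger}) bounds a log-likelihood ratio via the Chernoff bound at exponent $1/2$ to make the Hellinger distance $\mathsf{H}^{2}(P\,\|\,Q)$ appear; the other (Lemma~\ref{lem:sum-Bernoulli-CD}) maps the Bernoulli-difference sum onto such a log-likelihood ratio by designing auxiliary three-point distributions, and bounds $\mathsf{H}^{2}(P\,\|\,Q)=(\sqrt{p(1-q)}-\sqrt{q(1-p)})^2 \geq (\sqrt{p}-\sqrt{q})^2$ exactly. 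You bypass both lemmas by working directly with the two independent binomials $X\sim\mathsf{Binom}(n/2-1,p)$, $Y\sim\mathsf{Binom}(n/2,q)$, applying Chernoff to $X-Y$, and observing that $\lambda = \tfrac12\log\tfrac{p(1-q)}{q(1-p)}$ collapses the Bernoulli MGF product into the Bhattacharyya coefficient $(1-\mathsf{H}^{2}(\mathsf{Bern}(p),\mathsf{Bern}(q)))^{2}$. Two differences are worth noting. First, your rewriting $\sqrt{n}\,\bm{M}_{l,\cdot}\bm{u}^{\star} = p + X - Y$ handles $A_{l,l}=0$ cleanly through the binomial parameter $n/2-1$, whereas the paper resorts to a stochastic-dominance step $A_{1,1}=0\le z_1+1$ that incurs the shift responsible for the $-1/\sqrt{n}$ in $\varepsilon_0$; your version recovers the same $\varepsilon_0$ through the $+p$ offset. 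Second, the paper's use of $\mathsf{H}^{2}(P\,\|\,Q)\geq(\sqrt{p}-\sqrt{q})^2$ gives an \emph{exact} per-entry tail $n^{-1-\varepsilon/2}$, whereas your bound $e^{\lambda t}(1-q+qe^{\lambda})(1-\mathsf{H}^{2})^{n-2}$ carries leftover factors (the stray $(1-q+qe^{\lambda})$ and the $(n-2)/n$ exponent) that introduce $1+o(1)$ slop. In fact you can dispense with the approximation $\mathsf{H}^{2}=(1+o(1))\tfrac12\mathsf{H}^{2}_{p,q}$: the exact inequality $\mathsf{H}^{2}(\mathsf{Bern}(p),\mathsf{Bern}(q))\geq\tfrac12(\sqrt{p}-\sqrt{q})^2$ suffices. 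Tracing through, the leftover prefactor is $e^{-\lambda(1+p)}(1-q+qe^{\lambda})\,n^{2(1+\varepsilon)/n} \approx \sqrt{q/p}\,(1+o(1))$, and since $\sqrt{q/p}<1$ is bounded away from $1$ (by the Hellinger separation condition), the helping constant absorbs the $1+o(1)$ slop and the stated failure probability $n^{-\varepsilon/2}$ is recoverable for $n$ large — but this bookkeeping needs to be stated explicitly rather than hidden in the ``$o(1)$'' of your final display.
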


We now return to analyze the entrywise behavior of $\bm{u}$. Note that
\begin{align*}
(z\lambda^{\star})u_{l} & \geq\bm{M}_{l,\cdot}\bm{u}^{\star}-\big| z\lambda^{\star}u_{l}-\bm{M}_{l,\cdot}\bm{u}^{\star}\big|  \quad\text{for all } l\leq n/2;\\
(z\lambda^{\star})u_{l} & \leq\bm{M}_{l,\cdot}\bm{u}^{\star}+\big| z\lambda^{\star}u_{l}-\bm{M}_{l,\cdot}\bm{u}^{\star}\big|  \quad\text{for all }l>n/2.
\end{align*}
This together with \eqref{eq:z-lambda-u-MU-bound-CD}, Lemma~\ref{lemma:M-ustar-lower-bound-CD} and $\lambda^{\star}>0$ yields that if
\begin{align}
	\frac{\varepsilon\log n}{\sqrt{n}\log\frac{p(1-q)}{q(1-p)}}  > \frac{1}{\sqrt{n}} 
	+  C \Big( \sqrt{p} + \frac{p\sqrt{\log n}}{\sqrt{n}(p-q)} + \frac{\sqrt{p}\log^{3/2}n}{n(p-q)} \Big), 
	\label{eq:epsilon-condition-CD-inf-123}
\end{align}
then it follows that
\[
zu_{l}>0\quad\text{for all }1\leq l\leq\frac{n}{2},\quad\text{and}\quad zu_{l}<0\quad\text{for all }l>\frac{n}{2},
\]
thus guaranteeing exact community recovery once the rounding procedure (based on the sign) is applied.

To finish up, it remains to validate Condition~\eqref{eq:epsilon-condition-CD-inf-123}. 
Fixing $\varepsilon>0$ to be a constant, we make the following observations. 
\begin{itemize}
	\item From the assumptions $\varepsilon \asymp 1$ and $p,q\asymp \frac{\log n}{n}$ (or $\alpha,\beta \asymp 1$), one has
	\[
		\frac{\varepsilon\log n}{\sqrt{n}\log\frac{p(1-q)}{q(1-p)}}=\frac{\varepsilon\log n}{\sqrt{n}\log\frac{(1+o(1))\alpha}{\beta}}
		\asymp \frac{\log n}{\sqrt{n}}\gg \sqrt{p}+\frac{1}{\sqrt{n}}.
	\]
	\item Turning to the term $\frac{p\sqrt{\log n}}{\sqrt{n}(p-q)}$,   we  observe that 
	\begin{align}
		& \log\frac{p(1-q)}{q(1-p)}=\log\Big(1+\frac{p-q}{q(1-p)}\Big)\leq\frac{p-q}{q(1-p)}\leq\frac{2(\alpha-\beta)}{\beta}, \notag\\
		& \qquad\quad \Longrightarrow \quad 
		\frac{\varepsilon\log n}{\sqrt{n}\log\frac{p(1-q)}{q(1-p)}}\geq\frac{\varepsilon\log n}{2\sqrt{n}}\,\frac{\beta}{\alpha-\beta},
		\label{eq:LB-CD-12345}
	\end{align}
	where in the last inequality of the  first line we have used the assumption $p=o(1)$ and hence $1/(1-p) \leq 2$. 
	Given that $\varepsilon, \alpha,\beta \asymp 1$, it is guaranteed that
	\[
		\eqref{eq:LB-CD-12345} \gg \frac{\alpha\sqrt{\log n}}{\sqrt{n}(\alpha-\beta)}=\frac{p\sqrt{\log n}}{\sqrt{n}(p-q)} .
	\]
	\item We then move on to the term $\frac{\sqrt{p}\log^{3/2}n}{n(p-q)}$. 
	If $\alpha / \beta \leq 2$ and  $\beta \geq\frac{200C^{2}}{\varepsilon^{2}} \geq \frac{100C^{2}\alpha/\beta}{\varepsilon^{2}}$, then one  has $\beta \geq \frac{10C\sqrt{\alpha}}{\varepsilon}$ and hence
	\[
		\eqref{eq:LB-CD-12345} \geq\frac{5C\sqrt{\alpha}\log n}{\sqrt{n}(\alpha-\beta)}=\frac{5C\sqrt{p}\log^{3/2}n}{n(p-q)} .
	\]
	 In addition, if $\alpha / \beta > 2$, then it follows that
	\begin{equation}
		\frac{\sqrt{p}\log^{3/2}n}{n(p-q)}=\frac{\sqrt{\alpha}\log n}{\sqrt{n}(\alpha-\beta)}\leq\frac{2\sqrt{\alpha}\log n}{\sqrt{n}\alpha}=\frac{2\log n}{\sqrt{n\alpha}},
		\label{eq:LB-CD-5678}
	\end{equation}
	where the inequality holds true since $\alpha - \beta > \alpha - \alpha/2 = \alpha / 2$. 
	Using the basic inequality $\log x \leq \sqrt{x}$ further leads  to
	\[
		\frac{\varepsilon\log n}{\sqrt{n}\log\frac{p(1-q)}{q(1-p)}}\geq\frac{\varepsilon\log n}{\sqrt{n}\log\frac{2\alpha}{\beta}}
		\geq\frac{\varepsilon\sqrt{\beta}\log n}{\sqrt{n}\sqrt{2\alpha}}\geq\frac{5C\sqrt{p}\log^{3/2}n}{n(p-q)}.
	\]
	Here, the first inequality holds since $p,q=o(1)$ and hence $\frac{1-q}{1-p}\leq 2$, 
	whereas the last relation relies on \eqref{eq:LB-CD-5678} and holds with the proviso that $\beta\geq200C^{2}/\varepsilon^{2}$.

\end{itemize}
The above calculations taken collectively establish Condition~\eqref{eq:epsilon-condition-CD-inf-123} under the assumptions of Theorem~\ref{thm:community-recovery-linf}, thus concluding the proof.

\begin{remark}
	It is worth pointing out that the bound \eqref{eq:UsgnH-Ustar-bound-theorem-general} in Theorem \ref{thm:UsgnH-Ustar-MUstar-general} is not sufficiently tight when establishing this result. 
	Instead, one needs to resort to the more refined bound \eqref{eq:UsgnH-MUstar-bound-theorem-general} in Theorem \ref{thm:UsgnH-Ustar-MUstar-general}, 
	which allows us to sharpen the error bound by explicitly accounting for the first-order error term $(\bm{M}-\bm{M}^{\star})\bm{u}^{\star}$. 
\end{remark}

\subsection{Proof of auxiliary lemmas}
\label{sec:proof-auxiliary-CD}

Before embarking on the proof of Lemma~\ref{lemma:M-ustar-lower-bound-CD}, we first record non-asymptotic tail bounds concerning log-likelihood ratios and a sum of Bernoulli random variables, which make apparent the role of the squared Hellinger distance \citep{tsybakov2009nonparm}.
\begin{lemma}
	\label{lemma:LLR-Hellinger}
	Consider two distributions $P$ and $Q$ over a finite alphabet $\mathcal{Y}$, and suppose that $P(y)\neq 0$ for all $y\in \mathcal{Y}$. Generate an independent sequence $\{y_i\}_{1\leq i\leq n}$ obeying $y_i \sim P$. Then for any $\zeta\in \mathbb{R}$ one has
\begin{equation}
\mathbb{P}\left\{ \sum_{i=1}^{n}\log\frac{Q(y_{i})}{P(y_{i})}\geq-n\zeta\right\} \leq\exp\left(-n\Big[\mathsf{H}^2(P\,\|\,Q)-\frac{\zeta}{2}\Big]\right),
\label{eq:prob-log-PQ-exp-Hel}
\end{equation}
where $\mathsf{H}^2(P\,\|\,Q)$ is the squared Hellinger distance between $P$ and $Q$ defined in \eqref{eq:defn-Hellinger-PQ}.
\end{lemma}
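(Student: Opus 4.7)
The plan is to apply a Chernoff-type large deviation bound to the sum of i.i.d.~log-likelihood ratios $L_i \coloneqq \log\frac{Q(y_i)}{P(y_i)}$, with the crucial choice of exponential parameter $1/2$, since this is precisely the parameter that converts the moment generating function into the Bhattacharyya coefficient and hence naturally surfaces the squared Hellinger distance. Concretely, I would start from the following Markov-type bound valid for any $t > 0$:
\begin{align*}
\mathbb{P}\Big\{\sum_{i=1}^n L_i \geq -n\zeta\Big\}
&= \mathbb{P}\Big\{e^{t\sum_i L_i} \geq e^{-tn\zeta}\Big\}
\leq e^{tn\zeta} \prod_{i=1}^n \mathbb{E}\big[e^{tL_i}\big],
\end{align*}
and then set $t = 1/2$ to exploit the special algebraic structure that follows.

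Next I would evaluate the moment generating function at $t=1/2$: for a single index $i$, a direct computation under $y_i \sim P$ gives
\begin{align*}
\mathbb{E}\Big[e^{\frac{1}{2}L_i}\Big]
= \mathbb{E}_{y\sim P}\bigg[\sqrt{\frac{Q(y)}{P(y)}}\bigg]
= \sum_{y\in\mathcal{Y}} \sqrt{P(y)Q(y)},
\end{align*}
which is exactly the Bhattacharyya coefficient $\mathsf{BC}(P,Q)$. By expanding the definition \eqref{eq:defn-Hellinger-PQ} of the squared Hellinger distance, I would then verify the elementary identity
\begin{align*}
\mathsf{H}^2(P\,\|\,Q)
= \frac{1}{2}\sum_{y}\big(P(y)+Q(y)-2\sqrt{P(y)Q(y)}\big)
= 1 - \mathsf{BC}(P,Q),
\end{align*}
so that $\mathsf{BC}(P,Q) = 1 - \mathsf{H}^2(P\,\|\,Q)$. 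Combining this with independence of $\{y_i\}$ yields $\prod_i \mathbb{E}[e^{L_i/2}] = (1-\mathsf{H}^2(P\,\|\,Q))^n$.

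Finally, applying the elementary inequality $1-x \leq e^{-x}$ (valid for all $x\in\mathbb{R}$) to the factor $(1-\mathsf{H}^2(P\,\|\,Q))^n \leq \exp(-n\mathsf{H}^2(P\,\|\,Q))$ and substituting back into the Chernoff bound with $t = 1/2$ gives
\begin{align*}
\mathbb{P}\Big\{\sum_{i=1}^n L_i \geq -n\zeta\Big\}
\leq e^{n\zeta/2} \cdot e^{-n\mathsf{H}^2(P\,\|\,Q)}
= \exp\!\Big(-n\Big[\mathsf{H}^2(P\,\|\,Q) - \tfrac{\zeta}{2}\Big]\Big),
\end{align*}
which is exactly the claim. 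There is no substantive obstacle here: the proof is entirely mechanical once one recognizes that parameter $t=1/2$ is the canonical choice linking the Chernoff bound to Hellinger distance. The only minor subtlety worth flagging is that the bound is vacuous whenever $\zeta \geq 2\mathsf{H}^2(P\,\|\,Q)$ (the right-hand side exceeds $1$), which is consistent with the result being informative only in the moderate/large deviation regime, and that the assumption $P(y) \neq 0$ for all $y$ is used to ensure the log-likelihood ratio is always well-defined under sampling from $P$.
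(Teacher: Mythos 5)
Your proposal is correct and follows essentially the same route as the paper: a Chernoff bound with the canonical exponent $t=1/2$, reduction of the per-sample moment generating function to $\sum_y\sqrt{P(y)Q(y)}=1-\mathsf{H}^2(P\,\|\,Q)$, and the elementary bound $1-x\leq e^{-x}$. The only cosmetic difference is that you name the Bhattacharyya coefficient explicitly, whereas the paper carries out the same computation inline.
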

\begin{lemma}
\label{lem:sum-Bernoulli-CD}
Consider two sequences of independent random variables
\[
z_{i}\sim\mathsf{Bern}(p),\qquad w_{i}\sim\mathsf{Bern}(q),\qquad1\leq i\leq n,
\]
and suppose that $p>q$. For any $\xi\in \mathbb{R}$, it follows that
\begin{align*}
\mathbb{P}\left\{ \sum_{i=1}^{n}(z_{i}-w_i) \leq n\xi\right\}  & \leq\exp\left(-n\Big[\mathsf{H}_{p,q}^{2}-\frac{\xi}{2}\log\frac{p(1-q)}{q(1-p)}\Big]\right) ,
\end{align*}
where $\mathsf{H}_{p,q}^{2}  \coloneqq  \big(\sqrt{p}-\sqrt{q} \, \big)^{2}$.
%
%
\end{lemma}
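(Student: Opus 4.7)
}

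The plan is to realize the sum $\sum_i (z_i-w_i)$ as a constant multiple of a log-likelihood ratio, so that Lemma~\ref{lemma:LLR-Hellinger} can be invoked directly. To this end, consider the joint random variables $y_i\coloneqq (z_i,w_i)\in\{0,1\}^2$, whose common distribution is
$P(a,b)=p^{a}(1-p)^{1-a}\,q^{b}(1-q)^{1-b}$, and introduce the comparison distribution obtained by swapping the roles of $p$ and $q$, namely $Q(a,b)=q^{a}(1-q)^{1-a}\,p^{b}(1-p)^{1-b}$. A direct calculation shows $P(0,0)=Q(0,0)$ and $P(1,1)=Q(1,1)$, while on the remaining two atoms
\[
\log\frac{Q(y_i)}{P(y_i)}=-(z_i-w_i)\,L,\qquad L\coloneqq\log\frac{p(1-q)}{q(1-p)}>0,
\]
where positivity of $L$ uses the assumption $p>q$.

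With this identification, for any $\xi\in\mathbb{R}$ the target event rewrites as
\[
\Big\{\sum_{i=1}^{n}(z_i-w_i)\leq n\xi\Big\}=\Big\{\sum_{i=1}^{n}\log\tfrac{Q(y_i)}{P(y_i)}\geq -n L\xi\Big\},
\]
so applying Lemma~\ref{lemma:LLR-Hellinger} with $\zeta=L\xi$ gives
\[
\mathbb{P}\Big\{\sum_i(z_i-w_i)\leq n\xi\Big\}\leq \exp\!\Big(-n\Big[\mathsf{H}^2(P\,\|\,Q)-\tfrac{L\xi}{2}\Big]\Big).
\]
It therefore remains to show $\mathsf{H}^{2}(P\,\|\,Q)\geq \mathsf{H}_{p,q}^{2}=(\sqrt{p}-\sqrt{q})^{2}$; substituting this into the exponent and recalling $L=\log\frac{p(1-q)}{q(1-p)}$ will conclude the proof.

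The remaining Hellinger comparison, which I expect to be the only nontrivial step, can be handled via the Bhattacharyya coefficient. Since $P$ and $Q$ are product measures (of $\mathsf{Bern}(p)\otimes\mathsf{Bern}(q)$ and $\mathsf{Bern}(q)\otimes\mathsf{Bern}(p)$, respectively), $BC(P,Q)=[\sqrt{pq}+\sqrt{(1-p)(1-q)}]^{2}$, and a short expansion yields
\[
\mathsf{H}^{2}(P\,\|\,Q)-(\sqrt{p}-\sqrt{q})^{2}=2\sqrt{pq}\Big[1-\sqrt{pq}-\sqrt{(1-p)(1-q)}\,\Big]\geq 0,
\]
where the non-negativity of the bracket is just $\mathsf{H}^{2}(\mathsf{Bern}(p),\mathsf{Bern}(q))\geq 0$. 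Equivalently, one can verify this by expanding both $(\sqrt{p(1-q)}-\sqrt{q(1-p)})^{2}$ (the direct entrywise Hellinger computation between $P$ and $Q$ using the vanishing of the $(0,0)$ and $(1,1)$ terms) and $(\sqrt{p}-\sqrt{q})^{2}$, and comparing the cross terms $2\sqrt{pq}$ vs.~$2\sqrt{pq(1-p)(1-q)}$ together with the $-2pq$ correction. Either route delivers the required bound, so nothing beyond Lemma~\ref{lemma:LLR-Hellinger} and elementary algebra is needed.
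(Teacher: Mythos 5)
Your proposal is correct and, at its core, matches the paper's argument: both proofs realize $\sum_i(z_i-w_i)$ as a scaled log-likelihood ratio and invoke Lemma~\ref{lemma:LLR-Hellinger}. The only differences are implementation details: the paper works with the ternary push-forward distributions of $z_i-w_i$ on $\{1,0,-1\}$ and replaces $P\leftrightarrow Q$ by a sign flip, whereas you work directly with the joint law of $(z_i,w_i)$ on $\{0,1\}^2$ and the coordinate swap $(p,q)\leftrightarrow(q,p)$; since $a-b$ is a sufficient statistic here, these yield identical Hellinger distances. Likewise, for the final inequality $\mathsf{H}^2(P\,\|\,Q)\geq(\sqrt{p}-\sqrt{q})^2$, the paper writes $(\sqrt{p(1-q)}-\sqrt{q(1-p)})^2=\frac{(p-q)^2}{(\sqrt{p(1-q)}+\sqrt{q(1-p)})^2}$ and enlarges the denominator, while your Bhattacharyya tensorization and the factored difference $2\sqrt{pq}\,\mathsf{H}^2(\mathsf{Bern}(p)\,\|\,\mathsf{Bern}(q))\geq0$ is an equally clean route.
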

In what follows, we first establish Lemmas~\ref{lemma:LLR-Hellinger} and \ref{lem:sum-Bernoulli-CD}, and then return to prove Lemma~\ref{lemma:M-ustar-lower-bound-CD}.

\paragraph{Proof of Lemma~\ref{lemma:LLR-Hellinger}.}
Apply the Chernoff bound to yield 
\begin{align}
 & \mathbb{P}\left\{ \sum_{i=1}^{n}\log\frac{Q(y_{i})}{P(y_{i})}\geq-n\zeta\right\} {\leq}\frac{\mathbb{E}_{y_{i}\sim P}\left[\exp\left(\frac{1}{2}\sum_{i=1}^{n}\log\frac{Q(y_{i})}{P(y_{i})}\right)\right]}{\exp(-n\zeta/2)} \notag\\
 & \qquad \qquad\overset{(\mathrm{i})}{=}
 \frac{\left(\mathbb{E}_{y\sim P}\Big[\left(\frac{Q(y)}{P(y)}\right)^{1/2}\Big]\right)^{n}}{\exp(-n\zeta/2)},
\label{eq:prob-log-PQ-UB}
\end{align}
where 
(i) holds due to the i.i.d.~assumption of the $y_{i}$'s.  In addition,
%
\begin{align}
\mathbb{E}_{y\sim P}\left[\left(\frac{Q(y)}{P(y)}\right)^{1/2}\right] & =\sum_{y}P(y)\left(\frac{Q(y)}{P(y)}\right)^{1/2}=\sum_{y}\sqrt{P(y)Q(y)} \notag\\
 & =1-\frac{1}{2}\sum_{y}\left(P(y)+Q(y)-2\sqrt{P(y)Q(y)}\right) \notag\\
  & =1-\frac{1}{2}\sum_{y}\left( \sqrt{P(y)} - \sqrt{Q(y)} \right)^2 \notag\\
 & =1-\mathsf{H}^2(P\,\|\,Q)\leq  \exp \big(-\mathsf{H}^2(P\,\|\,Q) \big),
 \label{eq:P-Q-Hellinger-bound}
\end{align}
where the second line follows since $\sum_yP(y) = \sum_yQ(y)=1$, and  the last line uses the definition \eqref{eq:defn-Hellinger-PQ} and the elementary inequality $1-x\leq \exp(-x)$. 
Substituting \eqref{eq:P-Q-Hellinger-bound} into \eqref{eq:prob-log-PQ-UB} concludes the proof.

\paragraph{Proof of Lemma~\ref{lem:sum-Bernoulli-CD}.}

Set  $y_i\coloneqq z_i - w_i$. The proof is built upon a mapping between $\sum_{i=1}^{n}y_{i}$ and a certain log-likelihood ratio. 
Specifically, let us introduce two distributions $P$ and $Q$ supported on $\{1,0,-1\}$:
\[
P(x)=\begin{cases}
p(1-q),\quad & \text{if }x=1,\\
q(1-p), & \text{if }x=-1,\\
pq+(1-p)(1-q), \quad & \text{if }x=0,
\end{cases} 
\]
\[
Q(x)=\begin{cases}
q(1-p),\quad & \text{if }x=1,\\
p(1-q), & \text{if }x=-1,\\
pq+(1-p)(1-q), \quad & \text{if }x=0.
\end{cases}
\]
Apparently, $P$ (resp.~$Q$) corresponds to the distribution of $y_i$ (resp.~$-y_i$).  A key observation is that
\begin{align*}
\sum_{i=1}^{n}\log\frac{Q(y_{i})}{P(y_{i})} & =\sum_{i=1}^{n}\left\{ \mathbbm1\{y_{i}=1\}\log\frac{q(1-p)}{p(1-q)}+\mathbbm1\{y_{i}=-1\}\log\frac{p(1-q)}{q(1-p)}\right\} \\
 & =\sum_{i=1}^{n}y_{i}\log\frac{q(1-p)}{p(1-q)} ,
\end{align*}
which relies on the fact that $y_i$ is supported on $\{1,0,-1\}$. 
Recognizing that $\log\frac{p(1-q)}{q(1-p)}>0$ holds as long as
$p>q$ (since $q(1-p)<p(1-q)$), we can further derive
\begin{align*}
\mathbb{P}\left\{ \sum_{i=1}^{n}y_{i}\leq n\xi\right\}  & =\mathbb{P}\left\{ \frac{1}{\log\frac{p(1-q)}{q(1-p)}}\sum_{i=1}^{n}\log\frac{Q(y_{i})}{P(y_{i})}\geq-n\xi\right\} \\
 & \leq\exp\left(-n\Big[\mathsf{H}^2(P\,\|\,Q)-\frac{\xi}{2}\log\frac{p(1-q)}{q(1-p)}\Big]\right),
\end{align*}
where the last inequality comes from Lemma~\ref{lemma:LLR-Hellinger}. 
From the constructions of $P$ and $Q$ and the definition \eqref{eq:defn-Hellinger-PQ} of $\mathsf{H}^2(P\,\|\,Q)$, it is easily seen that 
\begin{align*}
\mathsf{H}^{2}(P\,\|\,Q) & =\Big(\sqrt{p(1-q)}-\sqrt{q(1-p)}\,\Big)^{2}
	= \frac{\big(p-q\big)^{2}}{\big(\sqrt{p(1-q)}+\sqrt{q(1-p)}\,\big)^{2}}\\
 & \geq\frac{\big(p-q\big)^{2}}{\big(\sqrt{p}+\sqrt{q}\,\big)^{2}}=\big(\sqrt{p}-\sqrt{q}\big)^{2} ,
\end{align*}
thus concluding the proof.

\paragraph{Proof of Lemma~\ref{lemma:M-ustar-lower-bound-CD}.}

Let us start by looking at the first entry of $\bm{M}\bm{u}^{\star}$. 
It is seen from the construction \eqref{eq:M-data-matrix-SBM} that
\begin{align}
\bm{M}_{1,\cdot}\bm{u}^{\star}=\bm{A}_{1,\cdot}\bm{u}^{\star}-\frac{p+q}{2}\big(\bm{1}^{\top}\bm{u}^{\star}\big)\bm{1}+pu_{1}^{\star}
\geq \bm{A}_{1,\cdot}\bm{u}^{\star} ,
	\label{eq:connection-M1u-A1u-CD}
\end{align}
where we have used the fact that $\bm{1}^{\top}\bm{u}^{\star}=0$  and $u_1^{\star}>0$. 
The expression  $\bm{u}^{\star} =\frac{1}{\sqrt{n}}  {\small
\left[\begin{array}{c}
\bm{1}_{n/2}\\
-\bm{1}_{n/2}
\end{array}\right] }$ admits the following decomposition
\begin{align}
	\bm{A}_{1,\cdot}\bm{u}^{\star}
	= \frac{1}{\sqrt{n}}\sum\nolimits_{i=1}^{n/2}\big(A_{1,i}-A_{1,i+n/2}\big) , 
	\label{eq:A1ustar-decompose-CD}
\end{align}
which can be controlled via Lemma~\ref{lem:sum-Bernoulli-CD}. 

Observe that $A_{1,i} \sim \mathsf{Bern}(p)$ for all $1< i\leq n/2$ and $A_{1,i} \sim \mathsf{Bern}(q)$ otherwise. Using the definitions of $z_i$ and $w_i$ in Lemma~\ref{lem:sum-Bernoulli-CD}, we obtain
\begin{align}
 & \mathbb{P}\left\{ \sum\nolimits_{i=1}^{n/2}\big(A_{1,i}-A_{1,i+n/2}\big)\leq\frac{n\zeta}{2}-1\right\}  
	\leq\mathbb{P}\left\{ \sum\nolimits_{i=1}^{n/2}\big(z_{i}-w_{i}\big)\leq\frac{n\zeta}{2}\right\}  \notag\\
	& \qquad \leq\exp\left(-\frac{n}{2}\Big[\mathsf{H}_{p,q}^{2}-\frac{\zeta}{2}\log\frac{p(1-q)}{q(1-p)}\Big]\right) \leq \frac{1}{n^{1+\delta}}
	\label{eq:A-sum-complicated-CD}
\end{align}
for some $\delta >0$, where the first inequality follows since $A_{1,1}=0\leq z_1+1$ (so that $A_{1,1}-A_{1,n/2+1}$ is stochastically dominated by $z_1 - w_1$), and the last inequality holds as long as
\begin{equation}
	\mathsf{H}_{p,q}^{2}-\frac{\zeta}{2}\log\frac{p(1-q)}{q(1-p)}
	\geq\frac{2(1+\delta)\log n}{n} ,
	\label{eq:H-pq-lower-bound-123}
\end{equation}
which we shall ensure at the end of the proof. Substituting \eqref{eq:A-sum-complicated-CD} into \eqref{eq:A1ustar-decompose-CD} and \eqref{eq:connection-M1u-A1u-CD} yields
\begin{align*}
\mathbb{P}\left\{ \bm{M}_{1,\cdot}\bm{u}^{\star}\leq \frac{n\zeta-2}{2\sqrt{n}} \right\}  
 & \leq \mathbb{P}\left\{ \bm{A}_{1,\cdot}\bm{u}^{\star}\leq\frac{n\zeta-2}{2\sqrt{n}} \right\} \leq\frac{1}{n^{1+\delta}}.
\end{align*}
%
%
%
Repeating the preceding analysis for $ \bm{M}_{l,\cdot}\bm{u}^{\star}$ with other  $l$'s  and taking the union bound, we see that with probability at least $1- n^{-\delta}$,
\begin{subequations}
	\label{eq:M-ustar-bound-CD}
\begin{align}
	\bm{M}_{l,\cdot}\bm{u}^{\star} &\geq \frac{n\zeta-2}{2\sqrt{n}}
	\quad  &&\text{if } l\leq {n}/{2} \\
	\bm{M}_{l,\cdot}\bm{u}^{\star}
	&\leq - \frac{n\zeta-2}{2\sqrt{n}}
	\quad  &&\text{if } l> n/2
\end{align}
\end{subequations}
%
%
hold simultaneously for all $1\leq l\leq n$.

Finally, it remains to ensure satisfaction of \eqref{eq:H-pq-lower-bound-123}. As it turns out, if the condition \eqref{eq:H-pq-lower-bound-lemma}
holds, then it suffices to take $\delta\leq \varepsilon / 2$ and 
$\zeta=\frac{2\varepsilon\log n}{n\log\frac{p(1-q)}{q(1-p)}}$. This completes the proof.

\section{Distributional theory and uncertainty quantification}
\label{sec:distribution-theory}

Thus far, we have demonstrated intriguing statistical performance of estimators developed based on spectral methods. 
As one can anticipate, the quality of a spectral estimator is largely affected by the imperfectness of data generating mechanisms (e.g., noise corruption, missing data). 
The uncertainty of the estimator due to these factors would inevitably influence any subsequent decision making based on it. 
Viewed in this light, it is recommended to accompany the estimator in hand with valid measures of uncertainty (or ``confidence''), 
in order to better inform decision makers.

Take the low-rank matrix estimation problem in Section~\ref{sec:setup-general-theory} for instance:  
an important uncertainty quantification task can be posed as the construction of a valid confidence
interval---based on the spectral estimator---that is likely to cover an unseen entry of the matrix of interest $\bm{M}^{\star}$. 
More precisely, for any location $(i,j)$ and any target coverage level $1-\alpha \in (0,1)$ (e.g., 95\%), we aim to identify a short interval---denoted by $\mathsf{CI}_{i,j}^{1-\alpha}$---based on the spectral estimator such that
\begin{equation}
	\mathbb{P}\big( M_{i,j}^{\star} \in \mathsf{CI}_{i,j}^{1-\alpha} \big) \approx 1-\alpha,  
\end{equation}
which essentially augments a point estimate into an interval that is guaranteed to cover the unknown with the pre-specified target probability.  
Note that the problem of constructing a valid confidence interval falls within the realm of {\em statistical inference} in the statistics literature,
which constitutes an important step beyond statistical estimation. 
Accomplishing this task in high dimension often calls for a refined statistical reasoning toolbox that offers quantitative distributional characterizations of the estimator. 


%



\subsection{Entrywise distributional guarantees}
\label{sec:distribution-general}

As a natural starting point to build confidence intervals, 
we seek to develop comprehensive understanding about the distribution of the spectral estimator. 
In general, obtaining a non-asymptotic yet tractable distributional characterization of a nonconvex estimator like the spectral method 
could be remarkably challenging. 
Fortunately, the $\ell_{\infty}$ and $\ell_{2,\infty}$ perturbation theory introduced previously (e.g., Theorem~\ref{thm:UsgnH-Ustar-MUstar-general}) allows one to make progress for some important scenarios.

Let us revisit the setting in Section~\ref{sec:setup-general-theory}, and consider the following estimator of the unknown low-rank matrix $\bm{M}^{\star}$:
\begin{align}
	\widehat{\bm{M}}=\big[\widehat{M}_{i,j} \big]_{1\leq i,j\leq n}=\bm{U}\bm{\Lambda}\bm{U}^{\top} ,
	\label{eq:estimator-M-hat-distribution}
\end{align}
obtained via the spectral method. The aim is to develop tractable distributional guarantees for each entry of $\widehat{\bm{M}}-\bm{M}^{\star}$.

Towards this end, we first examine whether our previous results shed light on certain distributional properties of $\widehat{\bm{M}}-\bm{M}^{\star}$. 
Informally,  Theorem~\ref{thm:UsgnH-Ustar-MUstar-general} (in particular, \eqref{eq:UsgnH-MUstar-bound-theorem-general}) reveals that
\begin{align}
	\bm{U}\mathsf{sgn}(\bm{H}) \approx \bm{M} \bm{U}^{\star} \big(\bm{\Lambda}^{\star}\big)^{-1} 
	= \bm{U}^{\star} + \bm{E} \bm{U}^{\star} \big(\bm{\Lambda}^{\star}\big)^{-1}. 
	\label{eq:U-sgn-H-first-order-approx-discuss}
\end{align}
Assuming tightness of this first-order approximation, one further derives 
\begin{align}
\bm{U}\bm{\Lambda}\bm{U}^{\top}-\bm{M}^{\star} & \overset{(\mathrm{i})}{\approx}\bm{U}\mathsf{sgn}(\bm{H})\bm{\Lambda}^{\star}\big(\bm{U}\mathsf{sgn}(\bm{H})\big)^{\top}-\bm{U}^{\star}\bm{\Lambda}^{\star}\bm{U}^{\star\top} \notag\\
 & \overset{(\mathrm{ii})}{\approx}\big(\bm{U}\mathsf{sgn}(\bm{H})-\bm{U}^{\star}\big)\bm{\Lambda}^{\star}\bm{U}^{\star\top}+\bm{U}^{\star}\bm{\Lambda}^{\star}\big(\bm{U}\mathsf{sgn}(\bm{H})-\bm{U}^{\star}\big)^{\top} 
 \notag\\
 & \overset{(\mathrm{iii})}{\approx}\bm{E}\bm{U}^{\star}\big(\bm{\Lambda}^{\star}\big)^{-1}\bm{\Lambda}^{\star}\bm{U}^{\star\top}+\bm{U}^{\star}\bm{\Lambda}^{\star}\big(\bm{E}\bm{U}^{\star}\big(\bm{\Lambda}^{\star}\big)^{-1}\big)^{\top} \notag\\
 & =\bm{E}\bm{U}^{\star}\bm{U}^{\star\top}+\bm{U}^{\star}\bm{U}^{\star\top}\bm{E}, 
	\label{eq:approx-ULambdaU-Mstar-discuss}
\end{align}
where (i) holds as long as $\mathsf{sgn}(\bm{H})\bm{\Lambda}^{\star} \mathsf{sgn}(\bm{H})^{\top} \approx \bm{\Lambda}$ (which has already been illuminated in the analysis 
of Corollary~\ref{cor:entrywise-error-general} and will be solidified momentarily),  
(ii) is obtained by dropping the higher-order term $\big(\bm{U}\mathsf{sgn}(\bm{H})-\bm{U}^{\star}\big)\bm{\Lambda}^{\star}\big(\bm{U}\mathsf{sgn}(\bm{H})-\bm{U}^{\star}\big)^{\top}$, and (iii) relies upon the approximation \eqref{eq:U-sgn-H-first-order-approx-discuss}.

Given that \eqref{eq:approx-ULambdaU-Mstar-discuss} is a linear map of the noise matrix $\bm{E}$, 
this essentially forms a first-order approximation of $\widehat{\bm{M}}$,
which in turn enables a tractable distributional theory for $\widehat{\bm{M}}$. 
Observe that each entry of the matrix in \eqref{eq:approx-ULambdaU-Mstar-discuss} 
is a weighted superposition of the independent zero-mean entries of $\bm{E}$.   
Equipped with this observation, some variant of the central limit theorem
suggests that each entry of $\widehat{\bm{M}} - \bm{M}^{\star}$ is approximately zero-mean Gaussian,
as formalized by the theorem below. For notational convenience, we shall define a projection matrix
	\begin{equation}
		\bm{P}^{\star}= \big[ P_{i,j}^{\star} \big]_{1\leq i,j\leq n} \coloneqq \bm{U}^{\star}\bm{U}^{\star\top},
		\label{eq:defn-P-UU-T}
	\end{equation}	
and impose a lower bound requirement on the noise variance: 
\begin{equation}
	 \sigma_{\min}^2 \leq \sigma_{i,j}^2 \leq \sigma^2, \qquad 1\leq i,j\leq n. 
	 \label{eq:sigma-min-definition}
\end{equation}
\begin{theorem}
\label{thm:matrix-distribution-complete}
Suppose that the assumptions of Theorem~\ref{thm:UsgnH-Ustar-MUstar-general} hold. For any $1\leq i,j\leq n$, set
\begin{align}
	v_{i,j}^{\star}=
		\begin{cases}
			\sum_{l=1}^{n}\sigma_{i,l}^{2}P^{\star 2}_{l,j}+\sum_{l=1}^{n}P^{\star 2}_{i,l}\sigma_{l,j}^{2}+2\sigma_{i,j}^{2}P^{\star}_{i,i}P^{\star}_{j,j}, & \text{if }i\neq j,\\
			4\sum_{l=1}^{n}\sigma_{i,l}^{2}P^{\star 2}_{l,i}, & \text{if }i=j. 
		\end{cases}
	\label{eq:variance-formula-Mij-lem}
\end{align}
Assume that $\sigma/\sigma_{\min}=O(1)$, and that
\begin{equation}
	\frac{\big\|\bm{U}_{j,\cdot}^{\star}\big\|_{2}^{2}+\big\|\bm{U}_{i,\cdot}^{\star}\big\|_{2}^{2}}{\big\|\bm{U}^{\star}\big\|_{\mathrm{F}}^{2}}
	\gtrsim \frac{B^{2}\kappa^{2}\mu^{2}r^{2}\log^{2}n}{\sigma^{2}n^{2}}+\frac{\sigma^{2}\mu^{2}r\kappa^{4}\log^{3}n}{(\lambda_{r}^{\star})^{2}}.
	\label{eq:Ui-Uj-condition-thm-distribution-1}
\end{equation}
Then the estimator \eqref{eq:estimator-M-hat-distribution} obeys
\[
	\sup_{z\in\mathbb{R}}\left|\mathbb{P}\left(\widehat{M}_{i,j}-M_{i,j}^{\star}\leq z\sqrt{v_{i,j}^{\star}}\,\right)-\Phi(z)\right|=o(1) ,
\]
where $\Phi(\cdot)$ represents the cumulative density function (CDF) of the standard Gaussian distribution.
\end{theorem}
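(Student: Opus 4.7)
\medskip

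\noindent
\textbf{Proof plan for Theorem~\ref{thm:matrix-distribution-complete}.}
The plan is to derive a tractable first-order linearization of $\widehat{\bm{M}}-\bm{M}^{\star}$, apply a quantitative central limit theorem (e.g., Berry-Esseen / Lyapunov) to the linear term, and show that the higher-order remainder is negligible compared to the standard deviation $\sqrt{v_{i,j}^{\star}}$ in probability. Concretely, mimicking the informal derivation in \eqref{eq:approx-ULambdaU-Mstar-discuss}, I would establish the decomposition
\begin{equation*}
\widehat{\bm{M}}-\bm{M}^{\star} \;=\; \bm{E}\bm{P}^{\star}+\bm{P}^{\star}\bm{E} \;+\; \bm{\Delta},
\end{equation*}
where $\bm{P}^{\star}=\bm{U}^{\star}\bm{U}^{\star\top}$, and $\bm{\Delta}$ is a remainder term that I will argue is uniformly small in the entrywise sense.

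To make this rigorous, I would first write $\bm{U}\bm{R}-\bm{U}^{\star} = \bm{E}\bm{U}^{\star}(\bm{\Lambda}^{\star})^{-1}+\bm{\Psi}$ with $\bm{R}\coloneqq\mathsf{sgn}(\bm{H})$, where the $\ell_{2,\infty}$ bound on $\bm{\Psi}$ is controlled by the right-hand side of \eqref{eq:UsgnH-MUstar-bound-theorem-general}. Expanding $\widehat{\bm{M}} = (\bm{U}\bm{R})(\bm{R}^{\top}\bm{\Lambda}\bm{R})(\bm{U}\bm{R})^{\top}$ and exploiting $\bm{R}^{\top}\bm{\Lambda}\bm{R}\approx \bm{\Lambda}^{\star}$ (which follows from Weyl's inequality together with the proximity $\bm{U}\bm{R}\approx\bm{U}^{\star}$), the cross-terms collapse to $(\bm{U}\bm{R}-\bm{U}^{\star})\bm{\Lambda}^{\star}\bm{U}^{\star\top}+\bm{U}^{\star}\bm{\Lambda}^{\star}(\bm{U}\bm{R}-\bm{U}^{\star})^{\top}$, and substituting the first-order expansion recovers $\bm{E}\bm{P}^{\star}+\bm{P}^{\star}\bm{E}$. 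The entrywise size of $\bm{\Delta}$ can then be bounded by a Cauchy-Schwarz argument combining $\|\bm{\Psi}\|_{2,\infty}$, the quadratic term $\|(\bm{U}\bm{R}-\bm{U}^{\star})\bm{\Lambda}^{\star}(\bm{U}\bm{R}-\bm{U}^{\star})^{\top}\|_{\infty}$, and the perturbation of $\bm{R}^{\top}\bm{\Lambda}\bm{R}$, each of which is controllable via Theorem~\ref{thm:UsgnH-Ustar-MUstar-general} and Corollary~\ref{cor:entrywise-error-general}.

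Next, observe that the linear term $L_{i,j}\coloneqq(\bm{E}\bm{P}^{\star})_{i,j}+(\bm{P}^{\star}\bm{E})_{i,j}$ is a weighted sum of the independent entries $\{E_{k,l}\}_{k\geq l}$, and a direct moment calculation (bookkeeping the contribution of each independent $E_{k,l}$ and accounting for the $E_{i,j}=E_{j,i}$ symmetry) yields $\mathsf{Var}(L_{i,j})=v_{i,j}^{\star}$ as given in \eqref{eq:variance-formula-Mij-lem}. Since each individual summand is bounded by roughly $B\,\|\bm{P}^{\star}\|_{\infty}$ with $\|\bm{P}^{\star}\|_{\infty}\leq \mu r/n$, a Lyapunov/Berry-Esseen bound gives
\begin{equation*}
\sup_{z\in\mathbb{R}}\Bigl|\mathbb{P}\bigl(L_{i,j}\leq z\sqrt{v_{i,j}^{\star}}\bigr)-\Phi(z)\Bigr|\;\lesssim\;\frac{B^{3}\sum_{k,l}(\text{coeff})^{2}}{(v_{i,j}^{\star})^{3/2}},
\end{equation*}
which one verifies goes to zero using the lower bound $v_{i,j}^{\star}\gtrsim \sigma_{\min}^{2}\bigl(\|\bm{U}_{i,\cdot}^{\star}\|_{2}^{2}+\|\bm{U}_{j,\cdot}^{\star}\|_{2}^{2}\bigr)$ (a consequence of $\bm{P}^{\star2}=\bm{P}^{\star}$) together with the first part of Condition~\eqref{eq:Ui-Uj-condition-thm-distribution-1}.

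The principal obstacle, and where most of the work will lie, is the second piece: arguing that $\Delta_{i,j}=o_{\mathbb{P}}(\sqrt{v_{i,j}^{\star}})$ entrywise. The naive entrywise bound from Corollary~\ref{cor:entrywise-error-general} is $O(\sigma\kappa^{2}\mu r\sqrt{(\log n)/n})$, which is too crude here because the target scale $\sqrt{v_{i,j}^{\star}}$ may be as small as $\sigma_{\min}\sqrt{\|\bm{U}_{i,\cdot}^{\star}\|_{2}^{2}+\|\bm{U}_{j,\cdot}^{\star}\|_{2}^{2}}$. The second part of Condition~\eqref{eq:Ui-Uj-condition-thm-distribution-1} is precisely what makes the ratio $\Delta_{i,j}/\sqrt{v_{i,j}^{\star}}$ vanish. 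To exploit this, I would bound $\Delta_{i,j}$ by combining the $\ell_{2,\infty}$ estimates from Theorem~\ref{thm:UsgnH-Ustar-MUstar-general} with a careful row-wise Cauchy-Schwarz against $\bm{U}^{\star}\bm{\Lambda}^{\star}$, yielding a bound of order $\|\bm{\Psi}\|_{2,\infty}\cdot \|\bm{\Lambda}^{\star}\|\cdot(\|\bm{U}^{\star}_{i,\cdot}\|_{2}+\|\bm{U}^{\star}_{j,\cdot}\|_{2})$, plus a quadratic term of order $(\sigma\sqrt{n}/|\lambda_{r}^{\star}|)^{2}\cdot|\lambda_{1}^{\star}|$. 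Matching these bounds against $\sqrt{v_{i,j}^{\star}}$ and simplifying using $\sigma/\sigma_{\min}=O(1)$ and $\mu r\leq n$ will reduce to exactly Condition~\eqref{eq:Ui-Uj-condition-thm-distribution-1}. Combining the Berry-Esseen bound for $L_{i,j}$ with an anti-concentration / Slutsky-type argument to absorb $\Delta_{i,j}$ then yields the claimed Kolmogorov-distance convergence to $\Phi$.
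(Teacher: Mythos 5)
Your proposal follows essentially the same route as the paper's proof: you linearize $\widehat{\bm{M}}-\bm{M}^{\star}$ around $\bm{E}\bm{P}^{\star}+\bm{P}^{\star}\bm{E}$ using the first-order expansion $\bm{U}\mathsf{sgn}(\bm{H})-\bm{U}^{\star}\approx\bm{E}\bm{U}^{\star}(\bm{\Lambda}^{\star})^{-1}$ and the approximation $\mathsf{sgn}(\bm{H})\bm{\Lambda}^{\star}\mathsf{sgn}(\bm{H})^{\top}\approx\bm{\Lambda}$ (the paper's Lemma~\ref{lem:bound-Delta1-Delta2} and Lemma~\ref{thm:matrix-distribution}), apply Berry--Esseen to the linear term with the variance lower bound $v_{i,j}^{\star}\gtrsim\sigma_{\min}^{2}(\|\bm{U}^{\star}_{i,\cdot}\|_2^2+\|\bm{U}^{\star}_{j,\cdot}\|_2^2)$ coming from $\bm{P}^{\star2}=\bm{P}^{\star}$, and absorb the remainder via Condition~\eqref{eq:Ui-Uj-condition-thm-distribution-1} (the paper's Lemma~\ref{lem:normality-spectral}). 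This matches the paper's structure; the only minor inaccuracy is your displayed Lyapunov bound, where the correct third-moment ratio should take the form $\gamma\lesssim B\|\bm{P}^{\star}\|_{\infty}/\sqrt{v_{i,j}^{\star}}$ rather than $B^{3}\sum(\text{coeff})^{2}/(v_{i,j}^{\star})^{3/2}$, but this does not change the argument.
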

The proof of this theorem is postponed to Section~\ref{sec:proof-thm:matrix-distribution-complete}. 
In a nutshell, Theorem~\ref{thm:matrix-distribution-complete} tells us that $\widehat{\bm{M}}$ is a nearly unbiased estimator of the truth $\bm{M}^{\star}$, as long as the signal strength---as captured by $\|\bm{U}^{\star}_{i,\cdot}\|_2$ and $\|\bm{U}^{\star}_{j,\cdot}\|_2$ when estimating the $(i,j)$-th entry---is sufficiently large (cf.~\eqref{eq:Ui-Uj-condition-thm-distribution-1}).  
The resulting estimation error in each entry is well approximated by a zero-mean Gaussian random variable, 
whose variance can be determined in a tractable fashion.  
As can be easily verified, the variance $v_{i,j}^{\star}$ is precisely the variance of the $(i,j)$-th entry of $\bm{E}\bm{U}^{\star}\bm{U}^{\star\top}+\bm{U}^{\star}\bm{U}^{\star\top}\bm{E}$ (as singled out in \eqref{eq:approx-ULambdaU-Mstar-discuss}). 
The above distributional theory is non-asymptotic, 
which lends itself well to high-dimensional applications.

\subsection{Inference and uncertainty quantification}
\label{sec:UQ-general}

The Gaussian approximation unveiled in Theorem~\ref{thm:matrix-distribution-complete},
which is dictated by a single parameter ${v}_{i,j}^{\star}$, 
paves the way for statistical inference and uncertainty quantification tailored to this model. 
In order to construct a valid confidence interval for each entry of $\bm{M}^{\star}$, 
everything boils down to identifying an estimator that approximates the variance parameter ${v}_{i,j}^{\star}$, ideally in a data-driven yet faithful manner. 

In view of the variance characterization \eqref{eq:variance-formula-Mij-lem}, 
computing ${v}_{i,j}^{\star}$ requires information about both the noise variances $\{\sigma_{i,j}^2\}_{1\leq i,j\leq n}$ and the projection matrix $\bm{P}^{\star}$ (cf.~\eqref{eq:defn-P-UU-T}). However, estimating the noise variances is in general statistically infeasible, given that we only have access to a single observation (i.e., $M_{i,j}$) related to each individual variance $\sigma_{i,j}^2$.  Fortunately, the variance  ${v}_{i,j}^{\star}$ involves only the summation or equivalently the average of these individual variances, whose stochastic errors will be averaged out.  This leads us to the following surrogate
\begin{equation}
	\widetilde{v}_{i,j}=\begin{cases}
		\sum_{l=1}^{n}{E}_{i,l}^{2}{P}_{l,j}^{\star 2}+\sum_{l=1}^{n}{P}_{i,l}^{\star 2}{E}_{l,j}^{2}+2{E}_{i,j}^{2}{P}^{\star}_{i,i}{P}_{j,j}^{\star}, & \text{if }i\neq j,\\
	4\sum_{l=1}^{n}{E}_{i,l}^{2}{P}_{l,i}^{\star 2}, & \text{if }i=j, 
\end{cases}
	\label{eq:v-ij-plugin-surrogate-123}
\end{equation}
which is clearly an unbiased estimator of ${v}_{i,j}^{\star}$. 
Given the statistical independence of $\{E_{i,j}\}_{i\geq j}$, 
we can expect to have $\widetilde{v}_{i,j}\approx v_{i,j}^{\star}$, owing to the concentration of measure.


However, the above surrogate $\widetilde{v}_{i,j}$ remains practically incomputable, 
due to the absence of knowledge about both $\bm{E}$ and $\bm{P}^{\star}$.   
To address this issue, we propose the following plug-in estimator:
\begin{equation}
	\widehat{v}_{i,j}=\begin{cases}
	\sum_{l=1}^{n}\widehat{E}_{i,l}^{2}\widehat{P}_{l,j}^{2}+\sum_{l=1}^{n}\widehat{P}_{i,l}^{2}\widehat{E}_{l,j}^{2}+2\widehat{E}_{i,j}^{2}\widehat{P}_{i,i}\widehat{P}_{j,j}, & \text{if }i\neq j,\\
	4\sum_{l=1}^{n}\widehat{E}_{i,l}^{2}\widehat{P}_{l,i}^{2}, & \text{if }i=j, 
\end{cases}
	\label{eq:v-ij-plugin-estimator}
\end{equation}
where $\widehat{\bm{E}}=[\widehat{E}_{i,j}]_{1\leq i,j\leq n}$ and $\widehat{\bm{P}}=[\widehat{P}_{i,j}]_{1\leq i,j\leq n}$ stand for estimators of $\bm{E}$ and $\bm{P}^{\star}$, respectively. 
In particular, we employ the following specific estimators of $\bm{E}$ and $\bm{P}^{\star}$, again adopting the plug-in strategy:
\begin{subequations}
\label{eq:estimators-E-and-P}
\begin{align}
\widehat{\bm{E}} & \coloneqq\bm{M}-\bm{U}\bm{\Lambda}\bm{U}^{\top},\label{eq:estimator-noise-matrix-E}\\
\widehat{\bm{P}} & \coloneqq\bm{U}\bm{U}^{\top},\label{eq:estimator-projection-P}
\end{align}
\end{subequations}
where $\bm{U}$ and $\bm{\Lambda}$ are, as usual, computed via eigendecomposition of $\bm{M}$. 
For a prescribed coverage level $1-\alpha$ (with $0<\alpha <1$), we construct the following confidence interval for the $(i,j)$-th entry of $\bm{M}^{\star}$, 
motivated by the Gaussian approximation in Theorem~\ref{thm:matrix-distribution-complete}: 
\begin{align}
	\label{eq:confidence-interval-construction-general}
	\mathsf{CI}_{i,j}^{1-\alpha} \coloneqq \Big[\widehat{M}_{i,j}\pm\Phi^{-1}(1-\alpha/2)\sqrt{\widehat{v}_{i,j}}\,\Big].
\end{align}
Here and throughout, for any $b>0$, we let $[a\pm b]$ abbreviate the interval $[a-b,a+b]$, and we use $\Phi^{-1}(\cdot)$ to represent the inverse CDF of the standard Gaussian distribution.

As encouraging news, the above construction of entrywise confidence intervals is provably valid with high probability, as revealed by the following theorem. The proof is postponed to Section~\ref{sec:proof-thm:CI-general}. 
\begin{theorem}
	\label{thm:CI-general}
	Consider the settings and assumptions in Section~\ref{sec:setup-general-theory},  
	and suppose that $\sigma / \sigma_{\min} = O(1)$,  $\kappa^{4}\mu^{2}r^{2}\log n\leq n$ and $\sigma \kappa\sqrt{n\log n}\lesssim |\lambda_{r}^{\star}|$. Consider any $1\leq i,j\leq n$, and assume that
	\begin{align}
		\frac{\big\|\bm{U}_{j,\cdot}^{\star}\big\|_{2}^{2}+\big\|\bm{U}_{i,\cdot}^{\star}\big\|_{2}^{2}}{\big\|\bm{U}^{\star}\big\|_{\mathrm{F}}^{2}}\gtrsim\frac{B\kappa^{2}\mu^{2}r^{2}\log^{2}n}{\sigma n^{3/2}}+\frac{\sigma\mu^{2}r\kappa^{3}\log^{3}n}{|\lambda_{r}^{\star}|\sqrt{n}}.  
		\label{eq:Uj-Ui-lower-bound-thm}
	\end{align}
	For any fixed coverage level $1-\alpha \in (0,1)$, 
	the confidence interval $\mathsf{CI}_{i,j}^{1-\alpha}$ constructed in \eqref{eq:confidence-interval-construction-general} obeys
	\begin{align}
		\mathbb{P} \Big( M_{i,j}^{\star} \in \mathsf{CI}_{i,j}^{1-\alpha}  
		\Big) 
		= 1-\alpha + o(1).
	\end{align}
\end{theorem}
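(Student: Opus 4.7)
The plan is to reduce Theorem~\ref{thm:CI-general} to Theorem~\ref{thm:matrix-distribution-complete} by establishing a data-driven variance estimate that is consistent with $v_{i,j}^{\star}$, and then apply a Slutsky-type argument. Concretely, I will prove that $\widehat{v}_{i,j}/v_{i,j}^{\star} = 1 + o_{\mathbb{P}}(1)$, which upon combining with the Gaussian approximation
$$
\frac{\widehat{M}_{i,j}-M_{i,j}^{\star}}{\sqrt{v_{i,j}^{\star}}}\;\xrightarrow{\mathrm{d}}\;\mathcal{N}(0,1)
$$
from Theorem~\ref{thm:matrix-distribution-complete} yields the claimed coverage $1-\alpha+o(1)$. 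Note first that the variance lower bound \eqref{eq:sigma-min-definition} and the basic identity $\sum_{l}P_{l,j}^{\star 2}=P_{j,j}^{\star}=\|\bm{U}_{j,\cdot}^{\star}\|_{2}^{2}$ imply
$$
v_{i,j}^{\star}\;\asymp\;\sigma^{2}\,\bigl(\|\bm{U}_{i,\cdot}^{\star}\|_{2}^{2}+\|\bm{U}_{j,\cdot}^{\star}\|_{2}^{2}\bigr),
$$
which is the ``signal size'' that the perturbation of $\widehat{v}_{i,j}$ needs to be compared against.

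The proof of $\widehat{v}_{i,j}/v_{i,j}^{\star}=1+o_{\mathbb{P}}(1)$ proceeds in two steps. First, I would show the oracle surrogate $\widetilde{v}_{i,j}$ defined in \eqref{eq:v-ij-plugin-surrogate-123} obeys $\widetilde{v}_{i,j}=(1+o_{\mathbb{P}}(1))v_{i,j}^{\star}$; this follows from Bernstein's inequality applied to the sums of independent centered variables $E_{i,l}^{2}-\sigma_{i,l}^{2}$ (each bounded by $B^2$ with variance $\lesssim\sigma^{2}B^{2}$), weighted by $P_{l,j}^{\star 2}$ and $P_{i,l}^{\star 2}$ respectively, together with the bounds $\sum_l P^{\star 2}_{l,j} = \|\bm{U}^{\star}_{j,\cdot}\|_2^2$ and $\max_l P^{\star 2}_{l,j} \leq \|\bm{U}^{\star}\|_{2,\infty}^2 \|\bm{U}^{\star}_{j,\cdot}\|_2^2 \leq (\mu r/n)\|\bm{U}^{\star}_{j,\cdot}\|_2^2$. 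Second, I would control the plug-in error $\widehat{v}_{i,j}-\widetilde{v}_{i,j}$ by expanding the difference and invoking the entrywise bounds
$$
\|\widehat{\bm{E}}-\bm{E}\|_{\infty}=\|\bm{U}\bm{\Lambda}\bm{U}^{\top}-\bm{M}^{\star}\|_{\infty}\;\lesssim\;\sigma\kappa^{2}\mu r\sqrt{\tfrac{\log n}{n}}
$$
from Corollary~\ref{cor:entrywise-error-general}, and an analogous $\ell_{\infty}$ bound on $\widehat{\bm{P}}-\bm{P}^{\star}=\bm{U}\bm{U}^{\top}-\bm{U}^{\star}\bm{U}^{\star\top}$ derived via $\|\widehat{\bm{P}}-\bm{P}^{\star}\|_{\infty}\leq 2\|\bm{U}^{\star}\|_{2,\infty}\|\bm{U}\mathsf{sgn}(\bm{H})-\bm{U}^{\star}\|_{2,\infty}+\|\bm{U}\mathsf{sgn}(\bm{H})-\bm{U}^{\star}\|_{2,\infty}^{2}$ followed by Theorem~\ref{thm:UsgnH-Ustar-MUstar-general}. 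Each term in the expansion splits into a product of (typical size)$\times$(perturbation), and condition~\eqref{eq:Uj-Ui-lower-bound-thm} is precisely what forces these cross terms to be of order $o(1)\cdot v_{i,j}^{\star}$.

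Having established $\widehat{v}_{i,j}=(1+o_{\mathbb{P}}(1))v_{i,j}^{\star}$, the conclusion follows almost mechanically: writing
\begin{align*}
\mathbb{P}\bigl(M_{i,j}^{\star}\in\mathsf{CI}_{i,j}^{1-\alpha}\bigr)
&=\mathbb{P}\!\left(\frac{|\widehat{M}_{i,j}-M_{i,j}^{\star}|}{\sqrt{v_{i,j}^{\star}}}\leq \Phi^{-1}(1-\alpha/2)\sqrt{\widehat{v}_{i,j}/v_{i,j}^{\star}}\right),
\end{align*}
the ratio inside the square root equals $1+o_{\mathbb{P}}(1)$, and the left-hand numerator is asymptotically standard normal by Theorem~\ref{thm:matrix-distribution-complete}; a union bound over a vanishing-probability bad event (on which the ratio deviates from $1$) combined with continuity of $\Phi$ yields the target probability $2\Phi(\Phi^{-1}(1-\alpha/2))-1+o(1)=1-\alpha+o(1)$.

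The main obstacle will be step two above: carefully tracking the plug-in error $\widehat{v}_{i,j}-\widetilde{v}_{i,j}$ so that the resulting bound is dominated by $v_{i,j}^{\star}$. Each of the three terms in $\widehat{v}_{i,j}$ is a sum of $n$ products of two quadratic quantities, so naive triangle-inequality bounds could lose factors of $n$; the trick will be to separate ``typical size'' factors (handled by Cauchy–Schwarz against $\sum_l P_{l,j}^{\star 2}=\|\bm{U}_{j,\cdot}^{\star}\|_{2}^{2}$ and $\sum_l E_{i,l}^2 \lesssim n\sigma^2$ with high probability) from ``perturbation'' factors (handled by the $\ell_{\infty}$ bounds). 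The two terms on the right-hand side of \eqref{eq:Uj-Ui-lower-bound-thm}—one dominated by the $B$-related entrywise noise fluctuations and one dominated by $\sigma/|\lambda_r^\star|$-type subspace perturbations—correspond exactly to the two regimes in which these cross terms are bounded, which is how the signal-strength hypothesis is calibrated.
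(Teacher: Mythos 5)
Your proposal is correct and follows essentially the same route as the paper: reduce to variance consistency $\widehat{v}_{i,j}=(1+o_{\mathbb{P}}(1))\,v_{i,j}^{\star}$ via the two-step decomposition (oracle surrogate $\widetilde{v}_{i,j}\approx v_{i,j}^{\star}$ by Bernstein, then plug-in error $\widehat{v}_{i,j}\approx\widetilde{v}_{i,j}$ via the entrywise bounds from Corollary~\ref{cor:entrywise-error-general} and Theorem~\ref{thm:UsgnH-Ustar-MUstar-general}), and conclude by a Slutsky-type argument with Theorem~\ref{thm:matrix-distribution-complete}. The only minor difference is that the paper's Lemma~\ref{lem:variance-estimation-guarantee} uses the cruder global bound $\max_l P_{l,j}^{\star 2}\leq\|\bm{P}^{\star}\|_{\infty}^{2}$ where you propose the slightly sharper row-specific bound $\max_l P_{l,j}^{\star 2}\leq(\mu r/n)\|\bm{U}_{j,\cdot}^{\star}\|_{2}^{2}$, but this does not change the shape of the resulting condition~\eqref{eq:Uj-Ui-lower-bound-thm}.
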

%
Theorem~\ref{thm:CI-general} confirms that the confidence interval proposed above meets the prescribed coverage requirement,
provided that the associated signal strength is not too low (see \eqref{eq:Uj-Ui-lower-bound-thm}). 
In addition to its statistical validity, the proposed procedure enjoys several features that make it practically appealing: 
\begin{itemize}
	\item {\em Adaptive to unknown noise levels and distributions.} The above inference procedure is fully data-driven, which does not require prior knowledge about the noise levels or noise distributions. As alluded to previously, it is in general impossible to estimate the noise variance in each entry, and hence a data-driven yet valid approach is of critical value. 

	\item {\em Adaptive to heteroskedastic noise.} Our statistical guarantees hold without relying on homogeneity of noise components. In other words, this inference procedure automatically accommodates heteroskedastic noise, a scenario where the variance of the noise components might vary across different locations.   
\end{itemize}

Careful readers might remark that Theorem~\ref{thm:CI-general} is concerned with statistical inference for a single entry. 
Interestingly, the distributional theory presented in Section~\ref{sec:distribution-general} (see also Lemma~\ref{thm:matrix-distribution} in the proof of Theorem~\ref{thm:matrix-distribution-complete}) might also be instrumental in pursuing simultaneous inference, namely, the problem of constructing a confidence region that simultaneously accounts for more than one unknown entries. We omit such an extension for the sake of conciseness.

%



\section{Application: Confidence intervals for matrix completion}
\label{sec:CI-matrix-completion}

As an illustration of the applicability of the inference procedure described in Section~\ref{sec:UQ-general}, 
we develop concrete consequences of Theorem~\ref{thm:CI-general} in application to noisy matrix completion---an extension of the formulation
in Section~\ref{sec:mc-l2} to noisy settings.

\paragraph{Model: noisy matrix completion.} 

Suppose that we are asked to reconstruct a symmetric rank-$r$ matrix
 $\bm{M}^{\star}=[M_{i,j}^{\star}]_{1\leq k,l \leq n}\in \mathbb{R}^{n\times n}$  
 with eigendecomposition $\bm{M}^{\star}=\bm{U}^{\star}\bm{\Lambda}^{\star}\bm{U}^{\star\top}$.
 We only get to acquire noisy observations of a subset of the entries of $\bm{M}^{\star}$; more precisely, 
there exists a sampling set $\Omega \subseteq [n]\times [n]$ such that we observe
\begin{align}
	M_{k,l}^{\star} + \eta_{k,l} \qquad & \text{if } (k,l)\in \Omega. 
\end{align}
Here, $\{\eta_{k,l}\mid k\geq l\}$ denotes independent Gaussian noise obeying  
\begin{align}
	\eta_{k,l} = \eta_{l,k} \overset{\mathrm{i.i.d.}}{\sim} \mathcal{N}(0, \sigma_{\eta}^2), \qquad k\geq l. 
\end{align}
As before, we focus on the random sampling model such that each location $(k,l)$ with $k\geq l$ is included in the sampling set $\Omega$ independently with probability $p$. Further,
 assume that $\bm{M}^{\star}$ has eigenvalues obeying \eqref{eq:assumption-lambda-r-positive-setup}, condition number $\kappa$ (cf.~\eqref{eq:defn-kappa}), and  incoherence parameter $\mu$ (cf.~\eqref{eq:defn-Ustar-incoherence}). 
Can we build a confidence interval for each entry $M_{i,j}^{\star}$, on the basis of the output of the spectral method?

\paragraph{Computing entrywise confidence intervals.} 
In order to apply the inference procedure in Section~\ref{sec:UQ-general}, it suffices to determine the data matrix $\bm{M}=[M_{i,j}]_{1\leq i,j\leq n}$, which can be selected as usual. Specifically, a possible inference procedure proceeds as follows:
\begin{itemize}
	\item  Set $\bm{M}$ such that for any $1\leq i,j \leq n$, 
\begin{align}
	M_{i,j} = \begin{cases} \frac{1}{p} \big( M_{i,j}^{\star} + \eta_{i,j} \big), \qquad & \text{if } (i,j)\in \Omega, \\ 0, &\text{else}, \end{cases}
\end{align}
which clearly obeys $\mathbb{E}[\bm{M}]=\bm{M}^{\star}$. 

	\item Compute the estimate $\widehat{\bm{M}}$ (cf.~\eqref{eq:estimator-M-hat-distribution}) via the spectral method.

	\item For a given coverage level $1-\alpha$ and a given pair $(i,j)$, 
		construct the confidence interval $\mathsf{CI}_{i,j}^{1-\alpha}$ according to \eqref{eq:confidence-interval-construction-general}, 
		with auxiliary parameters provided in \eqref{eq:v-ij-plugin-estimator} and \eqref{eq:estimators-E-and-P}. 

\end{itemize}

\paragraph{Performance guarantees and implications.}
When specialized to noisy matrix completion, our inference theory in Theorem~\ref{thm:CI-general} leads to the following statistical guarantees. 
\begin{theorem}
	\label{thm:CI-noisy-matrix-completion}
	Consider the noisy matrix completion setting in this section. 
	Suppose that $\kappa^{4}\mu^{2}r^{2}\log n\leq n$, 
	$\frac{\max_{k,l}|M_{k,l}^{\star}|}{\min_{k,l}|M_{k,l}^{\star}|}=O(1)$, 
	\begin{equation}
		np\gtrsim \kappa^{4}r\log n\quad\text{and}\quad\sigma_{\eta}\kappa\sqrt{\frac{n\log n}{p}}\lesssim|\lambda_{r}^{\star}|.  
		\label{eq:Mstar-noise-requirement-noisy-MC}
	\end{equation}
	Consider any $1\leq i,j\leq n$, and assume that
\begin{align}
	\frac{\big\|\bm{U}_{j,\cdot}^{\star}\big\|_{2}^{2}+\big\|\bm{U}_{i,\cdot}^{\star}\big\|_{2}^{2}}{\big\|\bm{U}^{\star}\big\|_{\mathrm{F}}^{2}} & \gtrsim\frac{\mu^{2}r^{2}\kappa^{4}\log^{3}n}{n\sqrt{np}}+\frac{\sigma_{\eta}\mu^{2}r\kappa^{3}\log^{3}n}{|\lambda_{r}^{\star}|\sqrt{np}}.
	\label{eq:Ui-Uj-lower-bound-thm-noisy-MC-146}
\end{align}
	For any fixed coverage level $1-\alpha \in (0,1)$, 
	the confidence interval $\mathsf{CI}_{i,j}^{1-\alpha}$ constructed in \eqref{eq:confidence-interval-construction-general} obeys
	\begin{align}
		\mathbb{P} \Big( M_{i,j}^{\star} \in \mathsf{CI}_{i,j}^{1-\alpha}  
		\Big) 
		= 1-\alpha + o(1).
	\end{align}
\end{theorem}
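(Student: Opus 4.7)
The plan is to recast noisy matrix completion into the general framework of Section~\ref{sec:distribution-theory} and invoke Theorem~\ref{thm:CI-general}. Set $\bm{E}\coloneqq \bm{M}-\bm{M}^{\star}$, so that the symmetric entries $\{E_{i,j}\}_{i\ge j}$ are mutually independent with $\mathbb{E}[E_{i,j}]=0$. A direct calculation gives
\begin{equation*}
\sigma_{i,j}^{2}=\mathsf{Var}(E_{i,j})=\tfrac{1-p}{p}(M_{i,j}^{\star})^{2}+\tfrac{\sigma_{\eta}^{2}}{p},
\end{equation*}
so that the variance parameters satisfy $\sigma^{2}\asymp (\|\bm{M}^{\star}\|_{\infty}^{2}+\sigma_{\eta}^{2})/p$ and, thanks to the assumption $\max_{k,l}|M_{k,l}^{\star}|/\min_{k,l}|M_{k,l}^{\star}|=O(1)$, we also get $\sigma/\sigma_{\min}=O(1)$.

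Because $\eta_{i,j}$ is Gaussian and hence unbounded, the first step will be a standard truncation argument: on the high-probability event $\mathcal{A}=\{|\eta_{i,j}|\le C\sigma_{\eta}\sqrt{\log n}\text{ for all }i,j\}$ (which occurs with probability at least $1-O(n^{-10})$), every noise entry obeys $|E_{i,j}|\le B$ with $B\asymp(\|\bm{M}^{\star}\|_{\infty}+\sigma_{\eta}\sqrt{\log n})/p$. I then verify the key regime assumption $c_{\mathsf{b}}=B/\bigl(\sigma\sqrt{n/(\mu\log n)}\bigr)=O(1)$, which reduces after routine algebra to $np\gtrsim\mu\log n$, guaranteed by $np\gtrsim\kappa^{4}r\log n$ and $\mu\le n/r$.

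Next I plug the bounds on $\sigma$ and $B$ into the two remaining hypotheses of Theorem~\ref{thm:CI-general}. Using the incoherence consequence $\|\bm{M}^{\star}\|_{\infty}\lesssim \mu r\kappa|\lambda_{r}^{\star}|/n$ from Lemma~\ref{claim:mu-incoherence}, the noise-size condition $\sigma\kappa\sqrt{n\log n}\lesssim|\lambda_{r}^{\star}|$ decomposes into a ``missing-data'' contribution (handled by $\kappa^{4}\mu^{2}r^{2}\log n\le n$ together with $np\gtrsim\kappa^{4}r\log n$) and a ``Gaussian-noise'' contribution (exactly the second part of \eqref{eq:Mstar-noise-requirement-noisy-MC}). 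In parallel, substituting the explicit $\sigma$ and $B$ into the signal-strength requirement \eqref{eq:Uj-Ui-lower-bound-thm} reproduces, after simplification and absorbing lower-order terms, the hypothesis \eqref{eq:Ui-Uj-lower-bound-thm-noisy-MC-146}. With all premises of Theorem~\ref{thm:CI-general} verified on $\mathcal{A}$, the asymptotic coverage conclusion $\mathbb{P}(M_{i,j}^{\star}\in\mathsf{CI}_{i,j}^{1-\alpha})=1-\alpha+o(1)$ follows, and the $O(n^{-10})$ failure probability of $\mathcal{A}$ is absorbed into the $o(1)$ term.

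The main technical obstacle, in my view, will not be any single estimate in the reduction but rather the bookkeeping: one must carefully carry the two separate sources of stochasticity (Bernoulli sampling and Gaussian noise) through the definitions of $\sigma$, $\sigma_{\min}$, and $B$, and then show that each of the two summands in the bound on $\sigma^{2}$ (and analogously for $B$) is individually controlled by one of the two clauses of \eqref{eq:Mstar-noise-requirement-noisy-MC}. A secondary subtlety is that Theorem~\ref{thm:CI-general} nominally assumes uniformly bounded noise, so the truncation event $\mathcal{A}$ must be introduced at the outset and the estimators \eqref{eq:v-ij-plugin-estimator}--\eqref{eq:estimators-E-and-P} analyzed on $\mathcal{A}$; this introduces only negligible bias since the truncation level matches the typical Gaussian magnitude up to logarithmic factors.
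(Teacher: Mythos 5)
Your overall strategy is exactly the paper's: truncate the Gaussian noise, compute $\sigma_{i,j}^2$, $\sigma$, $\sigma_{\min}$, $B$, check the regularity $\sigma/\sigma_{\min}=O(1)$ and $c_{\mathsf{b}}=O(1)$, and verify the noise-size and signal-strength hypotheses of Theorem~\ref{thm:CI-general}. The truncation step, the variance computations, and the reduction are all correct.

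However, there is one material gap in the way you verify the noise-size condition $\sigma\kappa\sqrt{n\log n}\lesssim|\lambda_r^\star|$. You invoke the incoherence bound $\|\bm{M}^\star\|_\infty \lesssim \mu r\kappa|\lambda_r^\star|/n$ from Lemma~\ref{claim:mu-incoherence}. Plugging this into the ``missing-data'' part of $\sigma$ gives $\sqrt{(1-p)/p}\,\|\bm{M}^\star\|_\infty\cdot\kappa\sqrt{n\log n}\asymp \mu r\kappa^2\sqrt{(\log n)/(np)}\,|\lambda_r^\star|$, so closing the argument requires $np\gtrsim \mu^2 r^2\kappa^4\log n$. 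You claim this follows from $\kappa^4\mu^2 r^2\log n\le n$ together with $np\gtrsim\kappa^4 r\log n$, but it does not: the first gives $\mu^2 r^2\kappa^4\log n\le n$, and to upgrade $n$ to $np$ you would need $p\gtrsim 1$, while the whole point of the theorem is to allow $p$ small. Nor does the second inequality help, since it is missing a factor of $\mu^2 r$. The same loss of a factor of $\mu$ propagates to the verification of the signal-strength condition \eqref{eq:Uj-Ui-lower-bound-thm}, where the correct reduction should land on the stated $\mu^2 r^2\kappa^4\log^3 n/(n\sqrt{np})$, not $\mu^3 r^2\kappa^4\log^3 n/(n\sqrt{np})$.

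The missing ingredient is the separate hypothesis $\max_{k,l}|M_{k,l}^\star|/\min_{k,l}|M_{k,l}^\star|=O(1)$, which you listed but did not actually exploit. It implies $\|\bm{M}^\star\|_\infty\asymp\frac{1}{n}\|\bm{M}^\star\|_{\mathrm{F}}\le\frac{\sqrt{r}}{n}\|\bm{M}^\star\|=\frac{\kappa\sqrt{r}}{n}|\lambda_r^\star|$, which is sharper than the incoherence bound by a factor of $\mu\sqrt{r}$. With this bound the missing-data contribution is $\kappa^2\sqrt{r\log n/(np)}\,|\lambda_r^\star|$, and the condition $np\gtrsim\kappa^4 r\log n$ of \eqref{eq:Mstar-noise-requirement-noisy-MC} closes the estimate exactly. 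You should therefore use this entrywise-flatness bound in place of Lemma~\ref{claim:mu-incoherence} throughout the condition-checking; the rest of your sketch then goes through.
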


In order to help interpret the applicable range of Theorem~\ref{thm:CI-noisy-matrix-completion}, 
let us focus on the simple scenario with $\kappa, \mu, r\asymp 1$ to simplify discussion. 

\begin{itemize}

	\item First of all, Condition~\eqref{eq:Mstar-noise-requirement-noisy-MC} can be simplified as
	\[
		np  \gtrsim  \log n\quad\text{and}\quad \sigma_{\eta}\sqrt{\frac{n\log n}{p}}\lesssim |\lambda_r^{\star}|. 
	\]
	The first condition on the sampling size coincides with the fundamental requirement even if the goal is merely to enable reliable estimation \citep{candes2010NearOptimalMC}, 
	whereas the second condition on the signal-to-noise ratio is also necessary---up to some log factor---to ensure an estimation quality better than that of a random guess \citep[Theorem~3.3]{cai2019subspace}.

	\item Next, we move on to interpret the other condition \eqref{eq:Ui-Uj-lower-bound-thm-noisy-MC-146} imposed in our theory, which simplifies to
	\[
		\frac{\big\|\bm{U}_{j,\cdot}^{\star}\big\|_{2}^{2}+\big\|\bm{U}_{i,\cdot}^{\star}\big\|_{2}^{2}}{\big\|\bm{U}^{\star}\big\|_{\mathrm{F}}^{2}}
		\gtrsim\frac{\log^{3}n}{n\sqrt{np}}+\frac{\sigma_{\eta}\log^{3}n}{|\lambda_{r}^{\star}|\sqrt{np}}.
	\]
	Let us consider the most challenging case where $np \gtrsim  \mathrm{poly}\log n$ and $\sigma_{\eta}\sqrt{\frac{n\mathrm{poly}\log n}{p}}\lesssim |\lambda_r^{\star}|$ (for some sufficiently large poly-log factor). 
	In such a case, the above condition only requires 
	\[
		\frac{\big\|\bm{U}_{j,\cdot}^{\star}\big\|_{2}^{2}+\big\|\bm{U}_{i,\cdot}^{\star}\big\|_{2}^{2}}{\big\|\bm{U}^{\star}\big\|_{\mathrm{F}}^{2}}\gtrsim\frac{1}{n\mathrm{poly}\log n},
	\]
	indicating that the associated signal power $\|\bm{U}_{j,\cdot}^{\star}\|_{2}^{2}+\|\bm{U}_{i,\cdot}^{\star}\|_{2}^{2}$ 
		is allowed to be much smaller than the average signal power across all rows (which can be captured by 
		$\|\bm{U}^{\star}\|_{\mathrm{F}}^{2}/n$). 

\end{itemize}
In a nutshell, the validity of our inference procedure is ensured for broad settings. 
Additionally, we have conducted a series of numerical experiments to examine the entrywise distributions of $\bm{M}$. As illustrated in Figure~\ref{fig:MC-inference-numerics}, 
the normalized estimation error $(\widehat{v}_{i,j})^{-1/2} (\widehat{M}_{i,j} - M_{i,j}^{\star})$ is close in distribution to a standard Gaussian random variable, 
which corroborates our theory on the confidence interval construction.

Before concluding, we would like to remark that: 
while the distributional theory for spectral methods allows for valid construction of confidence intervals for an unseen entry,  
it is oftentimes not among the most effective statistical inference procedures that one can put forward. 
There exist other alternatives that are provably more efficient, including but not limited to inference procedures based on convex relaxation and nonconvex optimization \citep{chen2019inference,xia2021statistical}, and the ones based on more refined spectral methods \citep{yan2021inference,chernozhukov2021inference}.

\begin{figure}

	\begin{tabular}{cc}
		\includegraphics[width=0.45\textwidth]{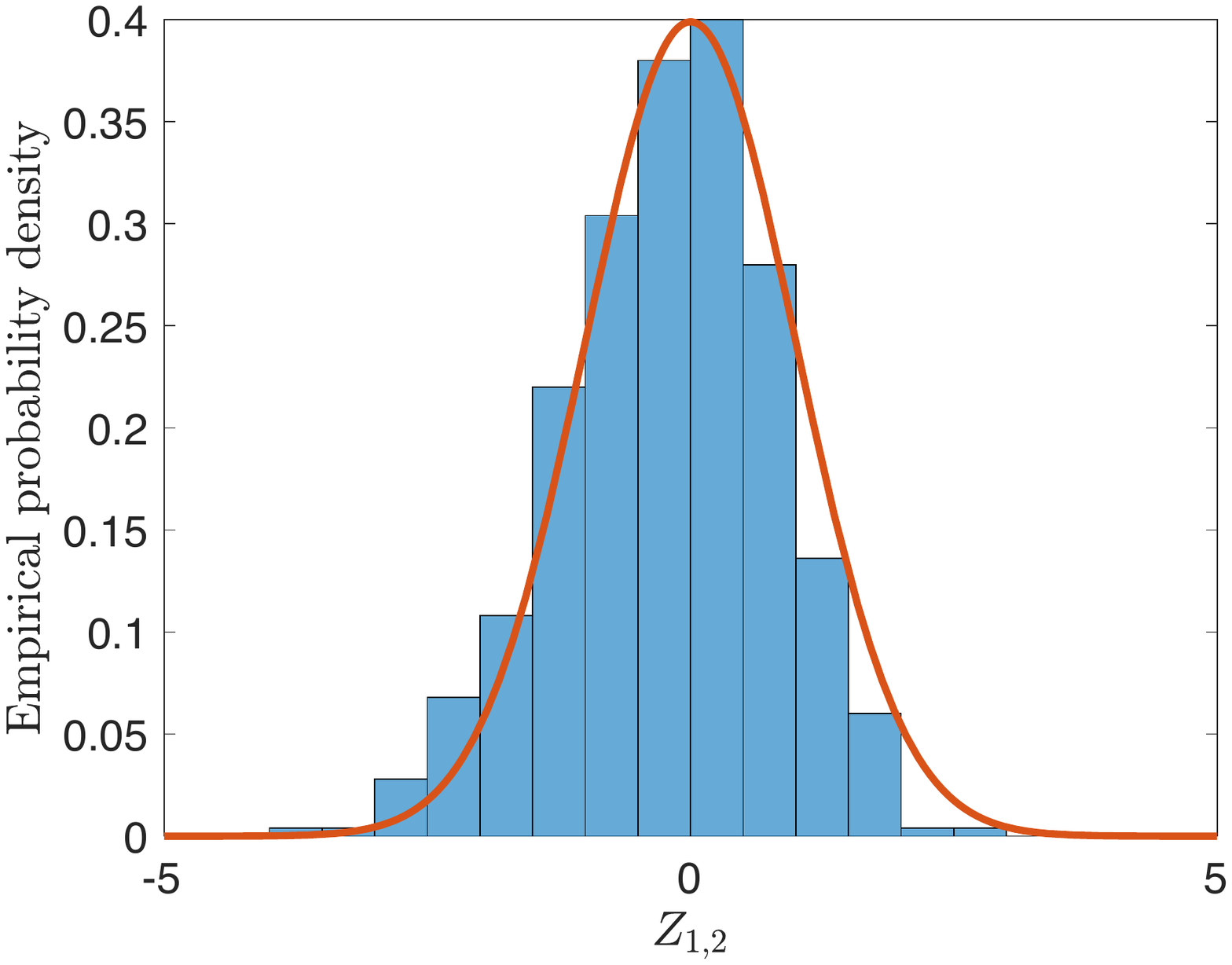} & \includegraphics[width=0.45\textwidth]{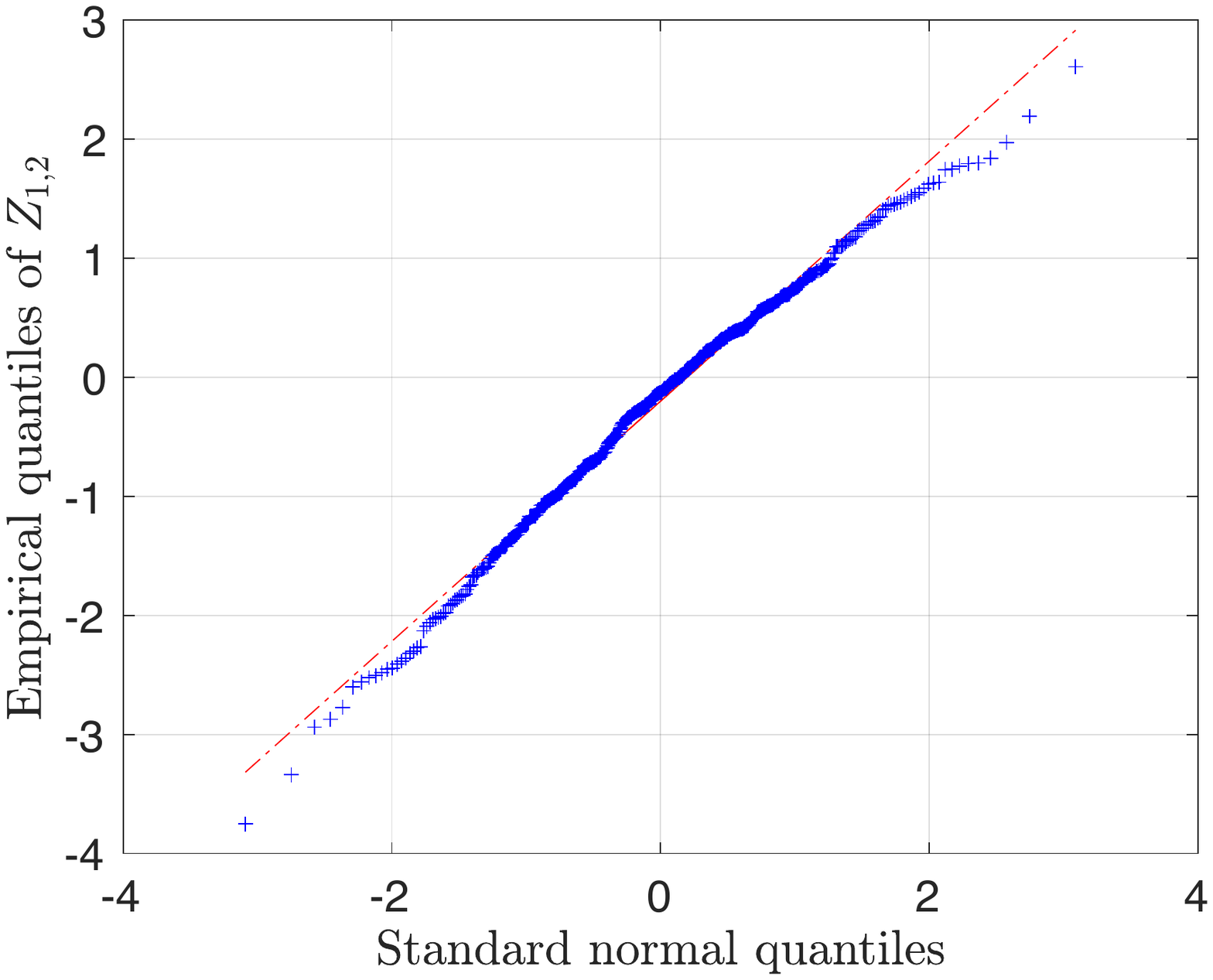} \tabularnewline
		(a)  & (b) \tabularnewline
	\end{tabular}
	
	\caption{Entrywise numerical distribution for noisy matrix completion. We generate $\bm{M}^{\star}=\bm{U}^{\star}\bm{U}^{\star\top}$ with $\bm{U}^{\star}$ being a random orthonormal matrix, and analyze the behavior of $Z_{1,2}= (\widehat{v}_{1,2})^{-0.5} ( \widehat{M}_{1,2}-M_{1,2}^{\star} )$, 
		where $\widehat{\bm{M}}$ is defined in \eqref{eq:estimator-M-hat-distribution} and $\widehat{v}_{i,j}$ is defined in \eqref{eq:v-ij-plugin-estimator}.
		The results are reported for 500 Monte Carlo trials when $p=0.3$, $n=1000$, $\sigma_{\eta} = 10^{-4}$, and $r=3$.  
		 (a) Histogram of the empirical distribution of $Z_{1,2}$; (b)  Q-Q (quantile-quantile) plot of $Z_{1,2}$ vs.~the standard normal distribution. \label{fig:MC-inference-numerics}}  
\end{figure}

\paragraph{Proof of Theorem~\ref{thm:CI-noisy-matrix-completion}.} 

Given that $\eta_{i,j}$ is a Gaussian random variable and hence possibly
unbounded, we find it convenient to introduce a truncated version
as follows
\[
	\widetilde{\eta}_{i,j}=\eta_{i,j}\mathbbm{1}\big\{|\eta_{i,j}|\leq 5\sigma_{\eta}\sqrt{\log n}\big\},\qquad1\leq i,j\leq n.
\]
and 
\[
	\widetilde{M}_{i,j} = \begin{cases} \frac{1}{p} \big( M_{i,j}^{\star} + \widetilde{\eta}_{i,j} \big), \qquad & \text{if } (i,j)\in \Omega, \\ 0, &\text{else}. \end{cases}
\]
Repeating the analysis in Section~\ref{sec:proof-eqn-noise-matrix-E-bound-denoising}, we can show that 
\[
	\mathbb{P}\big\{ \bm{M} = \widetilde{\bm{M}} \big\}
	= \mathbb{P} \big\{ \eta_{i,j} = \widetilde{\eta}_{i,j}, \forall i,j\in [n] \big\} 
	\geq 1- n^{-10},
\]
meaning that  $\bm{M}$ and $\widetilde{\bm{M}}$ are equivalent with high probability. 
As a result, we shall concentrate on validating the confidence interval computed based on $\widetilde{\bm{M}}$ in the subsequent analysis. 
Before proceeding, we record several key properties about  $\widetilde{\eta}_{i,j}$ as follows:
\begin{align}
	\mathbb{E}[ \widetilde{\eta}_{i,j}] = 0, ~~~ \mathbb{E}[ \widetilde{\eta}^2_{i,j}]=(1-o(1)) \sigma_{\eta}^2, 
	 ~~~ \big|\widetilde{\eta}_{i,j}\big|\leq  5\sigma_{\eta}\sqrt{\log n}.  
	 \label{eq:eta-property-CI-MC}
\end{align}

The proof follows by invoking Theorem~\ref{thm:CI-general}, as long as the conditions required therein are satisfied. 
To begin with, the associated variance parameters are given by
\begin{align*}
\sigma_{i,j}^{2} & \coloneqq\mathbb{E}\left[\big(\widetilde{M}_{i,j}-M_{i,j}^{\star}\big)^{2}\right]\\
 & =p\mathbb{E}\left[\left(\frac{1-p}{p}M_{i,j}^{\star}+\frac{1}{p}\widetilde{\eta}_{i,j}\right)^{2}\right]+(1-p)\big(M_{i,j}^{\star}\big)^{2}\\
 & = 
\frac{\left(1-p\right)^{2}}{p} \big(M_{i,j}^{\star}\big)^{2}+\frac{1}{p}\mathbb{E}\left[\widetilde{\eta}_{i,j}^{2}\right]+(1-p)\big(M_{i,j}^{\star}\big)^{2} \\
 & =\frac{1-p}{p}\big(M_{i,j}^{\star}\big)^{2}+\frac{1-o(1)}{p}\sigma_{\eta}^{2}, 
\end{align*}
thus leading to
\begin{align*}
\sigma_{\min}^{2} & \coloneqq \min_{i,j}\sigma_{i,j}^{2}=\frac{1-p}{p}\min_{i,j}\big(M_{i,j}^{\star}\big)^{2}+\frac{1-o(1)}{p}\sigma_{\eta}^{2},\\
\sigma^{2} & \coloneqq \max_{i,j}\sigma_{i,j}^{2}=\frac{1-p}{p} \|\bm{M}^{\star}\|_{\infty}^2 +\frac{1-o(1)}{p}\sigma_{\eta}^{2}.
\end{align*}
Apparently, $\sigma^2/\sigma_{\min}^{2}=O(1)$ holds true under the assumptions of Theorem~\ref{thm:CI-noisy-matrix-completion}. 
In addition, the random variables $\{\widetilde{M}_{i,j}-M_{i,j}^{\star}\}$ are all bounded obeying
\[
\big|\widetilde{M}_{i,j}-M_{i,j}^{\star}\big|\leq\frac{1-p}{p}\big|M_{i,j}^{\star}\big|+\frac{1}{p}\big|\widetilde{\eta}_{i,j}\big|
\leq\frac{(1-p)\|\bm{M}^{\star}\|_{\infty}+\sigma_{\eta}\sqrt{5\log n}}{p}\eqqcolon B.
\]
These bounds readily imply that
\begin{equation}
	\frac{B}{\sigma}\asymp\frac{~\frac{(1-p)\|\bm{M}^{\star}\|_{\infty}+\sigma_{\eta}\sqrt{5\log n}}{p}~}{\sqrt{\frac{1-p}{p}}\|\bm{M}^{\star}\|_{\infty}+\frac{1}{\sqrt{p}}\sigma_{\eta}}\lesssim\sqrt{\frac{\log n}{p}}. 
	\label{eq:B-sigma-ratio-noisy-MC}
\end{equation}

Moving to the condition $\sigma \kappa\sqrt{n\log n}\lesssim |\lambda_{r}^{\star}|$ in Theorem~\ref{thm:CI-noisy-matrix-completion}, it can be guaranteed if
	\begin{equation*}
		\|\bm{M}^{\star}\|_{\infty}\kappa\sqrt{\frac{(1-p)n\log n}{p}}
		\lesssim|\lambda_{r}^{\star}| 
		\quad \text{and} \quad
		\sigma_{\eta}\kappa\sqrt{\frac{n\log n}{p}}\lesssim|\lambda_{r}^{\star}|.  
		\label{eq:Mstar-noise-requirement-noisy-MC-135}
	\end{equation*}
Given the assumption  $\frac{\max_{k,l}|M_{k,l}^{\star}|}{\min_{k,l}|M_{k,l}^{\star}|}=O(1)$, one has 
\begin{equation}
	\label{eq:Mstar-inf-F-relation-noisy-MC}
	\|\bm{M}^{\star}\|_{\infty} \asymp \frac{1}{n} \|\bm{M}^{\star}\|_{\mathrm{F}} 
	\leq \frac{\sqrt{r}}{n} \|\bm{M}^{\star}\| 
	= \frac{\kappa \sqrt{r}}{n} |\lambda_{r}^{\star}|. 
\end{equation}
As a consequence, the condition $\sigma \kappa\sqrt{n\log n}\lesssim |\lambda_{r}^{\star}|$ can be ensured under Condition~\eqref{eq:Mstar-noise-requirement-noisy-MC}. 

It remains to certify Condition~\eqref{eq:Uj-Ui-lower-bound-thm}. 
By virtue of the above calculations of $\sigma$ and $B$ as well as the property \eqref{eq:B-sigma-ratio-noisy-MC}, it is easily seen that Condition~\eqref{eq:Uj-Ui-lower-bound-thm} is valid as long as the following holds:
\begin{align*}
\frac{\big\|\bm{U}_{j,\cdot}^{\star}\big\|_{2}^{2}+\big\|\bm{U}_{i,\cdot}^{\star}\big\|_{2}^{2}}{\big\|\bm{U}^{\star}\big\|_{\mathrm{F}}^{2}} & \gtrsim\frac{\kappa^{2}\mu^{2}r^{2}\log^{5/2}n}{n\sqrt{np}}+\frac{\sqrt{1-p}\,\|\bm{M}^{\star}\|_{\infty}\mu^{2}r\kappa^{3}\log^{3}n}{|\lambda_{r}^{\star}|\sqrt{np}}\\
 & \qquad+\frac{\sigma_{\eta}\mu^{2}r\kappa^{3}\log^{3}n}{|\lambda_{r}^{\star}|\sqrt{np}}.
\end{align*}
Taking this together with the relation \eqref{eq:Mstar-inf-F-relation-noisy-MC}, 
we can demonstrate straightforwardly that Condition~\eqref{eq:Uj-Ui-lower-bound-thm} is guaranteed to hold as long as Condition~\eqref{eq:Ui-Uj-lower-bound-thm-noisy-MC-146} is satisfied. 
This completes the proof.




\section{Appendix A: Proof of Theorem~\ref{thm:UsgnH-Ustar-MUstar-general}}
\label{sec:proof-thm:UsgnH-Ustar-MUstar-general}

To simplify notation, we assume throughout the proof that $\lambda_r^{\star} > 0$, namely,
\begin{align}
	|\lambda_1^{\star}| \geq  \cdots \geq | \lambda_{r-1}^{\star} | \geq \lambda_r^{\star} > 0.
	\label{eq:assumption-lambda-r-positive}
\end{align}
The challenge of the proof arises due to the complicated statistical dependency between $\bm{M}$ and $\bm{U}$, and the leave-one-out analysis paves a plausible path to decouple the dependency.

\subsection{Construction of leave-one-out auxiliary estimates}
\label{sec:construction-LOO-general}

As elucidated in the rank-1 matrix denoising example in Section~\ref{sec:rank-1-denoising-LOO}, the key to enabling fine-grained analysis is to seek assistance from  a collection of leave-one-out estimates.  Akin to Section~\ref{sec:LOO-estimates-denoising}, for each $1\leq l\leq n$, we construct  two auxiliary matrices  $\bm{M}^{(l)}$ and $\bm{E}^{(l)}=\big[E^{(l)}_{i,j} \big]_{1\leq i,j\leq n}$ as follows:
\begin{align}
\bm{M}^{(l)}\coloneqq\bm{M}^{\star}+\bm{E}^{(l)},
\quad
E_{i,j}^{(l)}  \coloneqq
\begin{cases}
E_{i,j}, & \text{if }i\neq l\text{ and }j\neq l,\\
0, & \text{else},
\end{cases}
\label{eq:Ml-construction-general}
\end{align}
which are generated by simply discarding all random noise incurred in the $l$-th column/row of the data matrix.
In addition, let $\lambda_1^{(l)},\cdots,\lambda_n^{(l)}$ be the eigenvalues of $\bm{M}^{(l)}$ sorted by
\begin{align}
	\big|\lambda_1^{(l)}\big| \geq \big|\lambda_2^{(l)}\big| \geq \cdots \geq \big| \lambda_{n}^{(l)} \big| ,
\end{align}
and denote by $\bm{u}_{i}^{(l)}$ the eigenvector of $\bm{M}^{(l)}$  associated with $\lambda_i^{(l)}$. The leave-one-out spectral estimates $\bm{U}^{(l)}$ and $\bm{\Lambda}^{(l)}$ are, therefore, given by
\begin{align}
	\bm{U}^{(l)}\coloneqq\big[\bm{u}_{1}^{(l)},\cdots,\bm{u}_{r}^{(l)}\big]\in\mathbb{R}^{n\times r};
	\quad
	\bm{\Lambda}^{(l)}\coloneqq\mathsf{diag}\big(\big[\lambda_{1}^{(l)},\cdots,\lambda_{r}^{(l)}\big]\big).	
\end{align}

We emphasize again that the main advantage of introducing the leave-one-out estimate $\bm{U}^{(l)}$ stems from its statistical independence from the $l$-th row of $\bm{M}$, which substantially simplifies the analysis for the $l$-th row of the estimate.
In principle, our analysis employs the leave-one-out estimates
to help decouple delicate statistical dependency  in a row-by-row fashion. Another crucial aspect of the analysis lies in the exploitation of the proximity of all these auxiliary estimates, a feature that is enabled by the ``stability'' of the spectral method.

\subsection{Preliminary facts}
\label{sec:preliminary-facts-Linf-general}

Before embarking on the $\ell_{\infty}$ and $\ell_{2,\infty}$ analyses,
we gather a couple of useful facts, whose proofs are postponed to
Section~\ref{sec:proof-auxiliary-lemmas-general}. In what follows,
$\bm{\Theta}$ (resp.~$\bm{\Theta}^{(l)}$) denotes a diagonal matrix
whose diagonal entries are the principal angles between $\bm{U}$
(resp.~$\bm{U}^{(l)}$) and $\bm{U}^{\star}$. In addition, we find
it helpful to introduce the following matrices
\begin{equation}
\bm{H}\coloneqq\bm{U}^{\top}\bm{U}^{\star}\qquad\text{and}\qquad\bm{H}^{(l)}\coloneqq\bm{U}^{(l)\top}\bm{U}^{\star},\label{eq:defn-H-Hl-general}
\end{equation}
which turn out to be close to being orthonormal.

The first set of results follows from the statistical nature of the
perturbation matrix $\bm{E}$ (cf.~Assumption~\ref{assumption-noise-general}),
which is immediately available from the matrix tail bounds. 
\begin{lemma}\label{lemma:general-noise-bound}
Consider the setting in Section~\ref{sec:setup-general-theory-independent}.
There is some constant $c_{2}>0$ such that with probability at least
$1-O(n^{-7})$,
\begin{equation}
\max_{l}\big\|\bm{E}^{(l)}\big\|\leq\|\bm{E}\|\leq c_{2}\sigma\sqrt{n}.\label{eq:noise-size-general}
\end{equation}
Moreover, for any fixed matrix $\bm{A}\in\mathbb{R}^{n\times d}$
with $d\leq n$, one has
\begin{align}
\|\bm{E}\bm{A}\|_{2,\infty}\leq4\sigma\sqrt{\log n}\,\|\bm{A}\|_{\mathrm{F}}+(6B\log n)\|\bm{A}\|_{2,\infty}.\label{eq:E-concentration-inf2-general}
\end{align}
 \end{lemma}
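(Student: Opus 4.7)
My plan has two parallel pieces corresponding to the two displays in the lemma. For the first, observe that $\bm{E}^{(l)}$ is obtained from $\bm{E}$ by zeroing out the $l$-th row and column, so $\bm{E}^{(l)}=(\bm{I}-\bm{e}_l\bm{e}_l^{\top})\bm{E}(\bm{I}-\bm{e}_l\bm{e}_l^{\top})$ where both outer factors are orthogonal projectors of unit spectral norm; submultiplicativity immediately delivers $\|\bm{E}^{(l)}\|\leq\|\bm{E}\|$ uniformly in $l$, which is deterministic. To bound $\|\bm{E}\|$ itself, I would appeal to Theorem~\ref{thm:tighter-spectral-normal-ramon}: by Assumption~\ref{assumption-noise-general}, $\bm{E}$ is a symmetric random matrix whose lower-triangular entries are independent, mean zero, have variance at most $\sigma^{2}$, and are bounded by $B$ in magnitude, so no truncation step is required. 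The simpler form \eqref{eq:X-spectral-norm-iid-special-ramon} then yields $\|\bm{E}\|\leq 4\sigma\sqrt{n}+\widetilde c B\sqrt{\log n}$ with probability at least $1-n^{-8}$. Rearranging hypothesis \eqref{eq:assumption-B-sigma} gives $B\sqrt{\log n}\lesssim\sigma\sqrt{n/\mu}\leq\sigma\sqrt{n}$ (using $\mu\geq 1$), which absorbs the second term into the first and collapses the bound to $c_{2}\sigma\sqrt{n}$ for a suitable constant.

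For the inequality \eqref{eq:E-concentration-inf2-general}, I would fix $i\in[n]$ and write the $i$-th row of $\bm{E}\bm{A}$ as $(\bm{E}\bm{A})_{i,\cdot}=\sum_{j=1}^{n}E_{i,j}\bm{A}_{j,\cdot}$, viewing each summand as a $1\times d$ random matrix. The crucial point is that although the symmetry of $\bm{E}$ couples different rows globally, the entries $\{E_{i,j}\}_{j=1}^{n}$ within a single row are mutually independent, since each equals a distinct independent lower-triangular entry. This lets me invoke Corollary~\ref{thm:matrix-Bernstein-friendly}: the relevant variance statistic is controlled by $\sum_{j}\sigma_{i,j}^{2}\|\bm{A}_{j,\cdot}\|_{2}^{2}\leq\sigma^{2}\|\bm{A}\|_{\mathrm{F}}^{2}$ on the scalar side, and by $\sigma^{2}\|\bm{A}^{\top}\bm{A}\|\leq\sigma^{2}\|\bm{A}\|_{\mathrm{F}}^{2}$ on the $d\times d$ side, while each summand has operator norm at most $B\|\bm{A}\|_{2,\infty}$. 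With an appropriate choice of the parameter $a$ in Corollary~\ref{thm:matrix-Bernstein-friendly} and the trivial observation that $\log d\leq\log n$ since $d\leq n$, this produces a row-wise bound of the advertised form $4\sigma\sqrt{\log n}\,\|\bm{A}\|_{\mathrm{F}}+6B\log n\,\|\bm{A}\|_{2,\infty}$ with probability at least $1-n^{-8}$.

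To conclude, I would take a union bound over $i=1,\dots,n$, which propagates the row-wise tail bound to the $\ell_{2,\infty}$ norm while keeping the overall failure probability at $O(n^{-7})$, and then combine with the spectral norm bound from the previous step. The only real obstacle I foresee is tracking the numerical constants $4$ and $6$ cleanly through Corollary~\ref{thm:matrix-Bernstein-friendly}; this is pure bookkeeping and may require choosing $a$ slightly larger than the naive value so that the union bound absorbs the extra factor of $n$ without inflating the constants. Conceptually, the lemma is a direct consequence of the two matrix concentration results already presented in Section~\ref{sec:matrix-Bernstein}, specialized to the structural assumptions on $\bm{E}$ and $\bm{A}$.
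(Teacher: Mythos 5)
Your proposal is correct and follows essentially the same route as the paper: Theorem~\ref{thm:tighter-spectral-normal-ramon} via \eqref{eq:X-spectral-norm-iid-special-ramon} for the spectral norm bound, and matrix Bernstein (Corollary~\ref{thm:matrix-Bernstein-friendly}) applied row by row with a union bound for \eqref{eq:E-concentration-inf2-general}. You are slightly more explicit than the paper in justifying $\|\bm{E}^{(l)}\|\leq\|\bm{E}\|$ via the projector identity and in showing that the $B\sqrt{\log n}$ term is absorbed into $\sigma\sqrt{n}$ using Assumption~\ref{assumption-noise-general}, but the argument is the same.
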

 %

\begin{remark} In view of \eqref{eq:E-concentration-inf2-general},
with probability at least $1-2n^{-5}$,
\begin{align}
 & \|\bm{E}\bm{U}^{\star}\|_{2,\infty}\leq4\sigma\sqrt{\log n}\,\|\bm{U}^{\star}\|_{\mathrm{F}}+(6B\log n)\|\bm{U}^{\star}\|_{2,\infty}\nonumber \\
 & \qquad=4\sigma\sqrt{r\log n}+6B\sqrt{\frac{\mu r\log^{2}n}{n}}
 = (4+6c_{\mathsf{b}})\sigma\sqrt{r\log n},\label{eq:EU-2-inf-bound-general}
\end{align}
which relies on the definition \eqref{eq:defn-Ustar-incoherence}
and the definition of $c_{\mathsf{b}}$ in \eqref{eq:assumption-B-sigma}.
As a result,
\begin{align}
\big\|\bm{M}\bm{U}^{\star}\big\|_{2,\infty} & \leq\big\|\bm{M}^{\star}\bm{U}^{\star}\big\|_{2,\infty}+\big\|\bm{E}\bm{U}^{\star}\big\|_{2,\infty}\nonumber \\
&\leq \sqrt{\frac{\mu r}{n}}|\lambda_1^{\star}| + (4+6c_{\mathsf{b}})\sigma\sqrt{r\log n}, \label{eq:MU-2-inf-bound-general}
\end{align}
which follows from the fact $\bm{M}^{\star}\bm{U}^{\star}=\bm{U}^{\star}\bm{\Lambda}^{\star}$
(so that $\|\bm{M}^{\star}\bm{U}^{\star}\|_{2,\infty}\leq\big\|\bm{U}^{\star}\big\|_{2,\infty}\|\bm{\Lambda}^{\star}\|=\sqrt{\mu r/n}\,|\lambda_{1}^{\star}|$).
\end{remark}

With the size of the perturbations (i.e., $\|\bm{E}^{(l)}\|$ and
$\|\bm{E}\|$) under control, the $\ell_{2}$ perturbation theory
established in Chapter~\ref{cha:matrix-perturbation} leads to the following set of conclusions.
\begin{lemma}\label{lem:preliminary-result-general-iid} Suppose
that $c_{2}\sigma\sqrt{n}\leq(1-1/\sqrt{2})\lambda_{r}^{\star}$,
where $c_{2}$ is the same constant as in Lemma~\ref{lemma:general-noise-bound}.
Then with probability at least $1-O(n^{-7})$, one has \begin{subequations}
\label{eq:L2-perturbation-summary-general}
\begin{align}
\mathsf{dist}(\bm{U},\bm{U}^{\star}) & \leq\frac{2c_{2}\sigma\sqrt{n}}{\lambda_{r}^{\star}},~~ & \mathsf{dist}\big(\bm{U}^{(l)},\bm{U}^{\star}\big) & \leq\frac{2c_{2}\sigma\sqrt{n}}{\lambda_{r}^{\star}}, \label{eq:dist-U-Ustar-general}\\
\big\|\sin\bm{\Theta}\big\| & \leq\frac{c_{2}\sigma\sqrt{2n}}{\lambda_{r}^{\star}}, ~~ & \big\|\sin\bm{\Theta}^{(l)}\big\| & \leq\frac{c_{2}\sigma\sqrt{2n}}{\lambda_{r}^{\star}}, \label{eq:distp-U-Ustar-general}\\
\max_{1\leq j\leq r}|\lambda_{j}| & \geq\lambda_{r}^{\star}-c_{2}\sigma\sqrt{n}, & \max_{1\leq j\leq r}|\lambda_{j}^{(l)}| & \geq\lambda_{r}^{\star}-c_{2}\sigma\sqrt{n}, \label{eq:lambda-size-large-general}\\
\max_{j:j>r}|\lambda_{j}| & \leq c_{2}\sigma\sqrt{n}, & \max_{j:j>r}|\lambda_{j}^{(l)}| & \leq c_{2}\sigma\sqrt{n}\label{eq:lambda-size-small-general}
\end{align}
hold simultaneously for all $1\leq l\leq n$. In addition,
\begin{align}
\|\bm{U}\bm{H}-\bm{U}^{\star}\|_{\mathrm{F}}\leq\frac{2c_{2}\sigma\sqrt{rn}}{\lambda_{r}^{\star}}.\label{eq:dist-under-H-general}
\end{align}
 \end{subequations} %
with probability exceeding $1-2n^{-5}$. \end{lemma}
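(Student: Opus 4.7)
\medskip

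\noindent\textbf{Proof proposal.}
My plan is to derive all six assertions as a direct consequence of Lemma~\ref{lemma:general-noise-bound} combined with the two classical perturbation results reviewed in Chapter~\ref{cha:matrix-perturbation}: Weyl's inequality (Lemma~\ref{lemma:weyl}) and the Davis--Kahan $\sin\bm{\Theta}$ theorem (Corollary~\ref{cor:davis-kahan-conclusion-corollary}). The starting point is the high-probability event from Lemma~\ref{lemma:general-noise-bound} on which $\max_l\|\bm{E}^{(l)}\|\leq\|\bm{E}\|\leq c_2\sigma\sqrt{n}$ holds simultaneously. Under the hypothesis $c_2\sigma\sqrt{n}\leq(1-1/\sqrt{2})\lambda_r^{\star}$, this event has probability at least $1-O(n^{-7})$, and a single union bound over $l\in[n]$ will suffice for the statements quantified over leave-one-out indices.

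For the eigenvalue bounds (\ref{eq:lambda-size-large-general})--(\ref{eq:lambda-size-small-general}), I would first reorder by value and apply Weyl's inequality to $\bm{M}=\bm{M}^{\star}+\bm{E}$: since $\bm{M}^{\star}$ has exactly $r$ nonzero eigenvalues of magnitude at least $\lambda_r^{\star}$ and the remaining $n-r$ eigenvalues equal zero, the sorting-by-magnitude argument already carried out inside the proof of Corollary~\ref{cor:davis-kahan-conclusion-corollary} shows that exactly $r$ eigenvalues of $\bm{M}$ lie outside $[-c_2\sigma\sqrt{n},c_2\sigma\sqrt{n}]$, each of magnitude at least $\lambda_r^{\star}-c_2\sigma\sqrt{n}$; the remaining $n-r$ eigenvalues have magnitude at most $c_2\sigma\sqrt{n}$. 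Repeating this argument with $\bm{E}^{(l)}$ in place of $\bm{E}$ yields the corresponding leave-one-out statements.

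The subspace bounds (\ref{eq:dist-U-Ustar-general})--(\ref{eq:distp-U-Ustar-general}) then follow directly from Corollary~\ref{cor:davis-kahan-conclusion-corollary}: the eigengap $|\lambda_r^{\star}|-|\lambda_{r+1}^{\star}|$ equals $\lambda_r^{\star}$ because $\bm{M}^{\star}$ has rank $r$, so the assumption $\|\bm{E}\|<(1-1/\sqrt{2})\lambda_r^{\star}$ in that corollary is met, giving $\mathsf{dist}(\bm{U},\bm{U}^{\star})\leq\sqrt{2}\|\sin\bm{\Theta}\|\leq 2\|\bm{E}\|/\lambda_r^{\star}\leq 2c_2\sigma\sqrt{n}/\lambda_r^{\star}$, and therefore $\|\sin\bm{\Theta}\|\leq \sqrt{2}c_2\sigma\sqrt{n}/\lambda_r^{\star}$. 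The identical bound for $\bm{U}^{(l)}$ and $\bm{\Theta}^{(l)}$ is obtained by the same argument applied to $\bm{M}^{(l)}=\bm{M}^{\star}+\bm{E}^{(l)}$.

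Finally, for the Frobenius-norm conclusion (\ref{eq:dist-under-H-general}), I would exploit the algebraic identity
\[
\bm{U}\bm{H}-\bm{U}^{\star}=\bm{U}\bm{U}^{\top}\bm{U}^{\star}-\bm{U}^{\star}=-\bm{U}_{\perp}\bm{U}_{\perp}^{\top}\bm{U}^{\star},
\]
so that $\|\bm{U}\bm{H}-\bm{U}^{\star}\|_{\mathrm{F}}=\|\bm{U}_{\perp}^{\top}\bm{U}^{\star}\|_{\mathrm{F}}=\|\sin\bm{\Theta}\|_{\mathrm{F}}$ by Lemma~\ref{prop:unitary-norm-property}. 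Bounding $\|\sin\bm{\Theta}\|_{\mathrm{F}}\leq\sqrt{r}\,\|\sin\bm{\Theta}\|$ using the spectral-norm estimate already proved delivers the desired $\frac{2c_2\sigma\sqrt{rn}}{\lambda_r^{\star}}$. There is no serious obstacle here; if anything, the only mildly delicate point is making sure the union bound is not applied twice and that the success-probability accounting ($1-O(n^{-7})$ from Lemma~\ref{lemma:general-noise-bound}, downgraded to $1-2n^{-5}$ only where needed) is stated cleanly.
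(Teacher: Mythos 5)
Your proposal is correct, and for the eigenvalue bounds \eqref{eq:lambda-size-large-general}--\eqref{eq:lambda-size-small-general} and the Davis--Kahan application \eqref{eq:dist-U-Ustar-general}--\eqref{eq:distp-U-Ustar-general} it follows exactly the paper's route: invoke Lemma~\ref{lemma:general-noise-bound} to control $\|\bm{E}\|$ and $\|\bm{E}^{(l)}\|$ on one high-probability event, reuse the sorting-by-magnitude step from the proof of Corollary~\ref{cor:davis-kahan-conclusion-corollary}, and apply that corollary with eigengap $\lambda_r^{\star}$ (since $\lambda_{r+1}^{\star}=0$).

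For the Frobenius-norm bound \eqref{eq:dist-under-H-general} you take a mildly different route. The paper bounds $\|\bm{U}\bm{H}-\bm{U}^{\star}\|_{\mathrm{F}}=\|\bm{U}\bm{U}^{\top}\bm{U}^{\star}-\bm{U}^{\star}\bm{U}^{\star\top}\bm{U}^{\star}\|_{\mathrm{F}}\leq\|\bm{U}\bm{U}^{\top}-\bm{U}^{\star}\bm{U}^{\star\top}\|\cdot\|\bm{U}^{\star}\|_{\mathrm{F}}=\sqrt{r}\,\|\sin\bm{\Theta}\|$, whereas you use the exact identity $\bm{U}\bm{H}-\bm{U}^{\star}=-\bm{U}_{\perp}\bm{U}_{\perp}^{\top}\bm{U}^{\star}$, which together with the orthonormality of the columns of $\bm{U}_{\perp}$ and Lemma~\ref{prop:unitary-norm-property} gives the equality $\|\bm{U}\bm{H}-\bm{U}^{\star}\|_{\mathrm{F}}=\|\bm{U}_{\perp}^{\top}\bm{U}^{\star}\|_{\mathrm{F}}=\|\sin\bm{\Theta}\|_{\mathrm{F}}$ before relaxing to $\sqrt{r}\,\|\sin\bm{\Theta}\|$. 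Your route exposes a clean exact identity (that $\bm{U}\bm{H}-\bm{U}^{\star}$ is, up to sign, the projection of $\bm{U}^{\star}$ onto the orthogonal complement of $\bm{U}$) before discarding information, whereas the paper's version is a one-step inequality; both land at the same final constant after the $\sqrt{2}\leq 2$ slack. This is a small presentational difference, not a substantive one, and either way the argument is sound.
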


\begin{remark}\label{remark:perturbation-size-eigen-gap}
Lemmas~\ref{lemma:general-noise-bound} and \ref{lem:preliminary-result-general-iid} allow us to bound the eigengap and perturbation size as follows
\begin{subequations}\label{eq:lambda-rl-lambda-r1l-M-Ml-LB-step}
\begin{align}
	\big| \lambda_{r}^{(l)} \big| - \big|\lambda_{r+1}^{(l)}\big|
	& \geq \lambda_{r}^{\star}-  c_2 \sigma \sqrt{n} - c_2 \sigma \sqrt{n}
	\geq \lambda_{r}^{\star}/2,
	\label{eq:lambda-rl-lambda-r1l-LB-step}  \\
	\|\bm{M}-\bm{M}^{(l)}\| & \leq\|\bm{M}-\bm{M}^{\star}\|+\|\bm{M}^{\star}-\bm{M}^{(l)}\|=\|\bm{E}\|+\|\bm{E}^{(l)}\|  \notag\\
	& \leq2c_{2}\sigma\sqrt{n}\leq(1-1/\sqrt{2}) \big( \big| \lambda_{r}^{(l)} \big| - \big|\lambda_{r+1}^{(l)}\big| \big),
	\label{eq:M-Ml-size-step}
\end{align}
\end{subequations}
which are valid as long as $20 c_2 \sigma \sqrt{n} \leq \lambda_r^{\star}$. These will prove useful when bounding the approximation error of $\bm{U}$ using $\bm{U}^{(l)}$.
\end{remark}

Another collection of results is concerned with $\bm{H}$ and $\bm{H}^{(l)}$.
\begin{lemma} \label{lem:H-property-summary-general} Suppose that the
assumptions of Lemma~\ref{lem:preliminary-result-general-iid} hold. With
probability at least $1-O(n^{-7})$, \begin{subequations}
\label{eq:H-property-summary-general}
\begin{align}
\|\bm{H}^{-1}\| & \leq2,\quad & \big\|\big(\bm{H}^{(l)}\big)^{-1}\big\| & \leq2,\label{eq:H-inv-norm-bound-general}\\
\big\|\bm{H}-\mathsf{sgn}(\bm{H})\big\| & \leq\frac{2c_{2}^{2}n\sigma^{2}}{(\lambda_{r}^{\star})^{2}},~~ & \big\|\bm{H}^{(l)}-\mathsf{sgn}(\bm{H}^{(l)})\big\| & \leq\frac{2c_{2}^{2}n\sigma^{2}}{(\lambda_{r}^{\star})^{2}}\label{eq:H-sgnH-dist-general}
\end{align}
\end{subequations} hold simultaneously for all $1\leq l\leq n$.

\end{lemma}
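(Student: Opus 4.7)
The plan is to reduce both claims in \eqref{eq:H-property-summary-general} to the standard identification of the singular values of $\bm{H}$ (resp.\ $\bm{H}^{(l)}$) with the cosines of the principal angles between the subspaces, after which everything follows from the $\ell_2$ bounds already established in Lemma~\ref{lem:preliminary-result-general-iid}. Since $\bm{U}$ and $\bm{U}^{\star}$ have orthonormal columns, the singular values of $\bm{H} = \bm{U}^{\top}\bm{U}^{\star}$ are exactly $\{\cos\theta_i\}_{1\leq i\leq r}$, where $\theta_i$ are the principal angles between the column spaces of $\bm{U}$ and $\bm{U}^{\star}$ (see the development around \eqref{defn:principal-angles}). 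The same holds for $\bm{H}^{(l)}$ with $\bm{\Theta}^{(l)}$ in place of $\bm{\Theta}$. Throughout, I will use the assumption $c_{2}\sigma\sqrt{n}\leq (1-1/\sqrt{2})\lambda_{r}^{\star}$ to ensure the principal angles stay bounded away from $\pi/2$.

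For the first claim \eqref{eq:H-inv-norm-bound-general}, I will observe that $\|\bm{H}^{-1}\| = 1/\sigma_{\min}(\bm{H}) = 1/\cos\theta_{\max}$, where $\theta_{\max}$ is the largest principal angle. From \eqref{eq:distp-U-Ustar-general} we have $\|\sin\bm{\Theta}\|\leq c_{2}\sigma\sqrt{2n}/\lambda_{r}^{\star}\leq \sqrt{2}-1$ under the standing assumption, so
\[
\sigma_{\min}(\bm{H}) \;=\; \sqrt{1-\|\sin\bm{\Theta}\|^{2}} \;\geq\; \sqrt{1-(\sqrt{2}-1)^{2}} \;>\; 1/2,
\]
which yields $\|\bm{H}^{-1}\|\leq 2$. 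The same calculation, using $\|\sin\bm{\Theta}^{(l)}\|\leq c_{2}\sigma\sqrt{2n}/\lambda_{r}^{\star}$, gives the analogous bound for $\bm{H}^{(l)}$.

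For the second claim \eqref{eq:H-sgnH-dist-general}, I will use the SVD $\bm{H} = \bm{U}_{H}\bm{\Sigma}_{H}\bm{V}_{H}^{\top}$ so that, by definition \eqref{eq:defn-sgn-Z}, $\mathsf{sgn}(\bm{H}) = \bm{U}_{H}\bm{V}_{H}^{\top}$ and hence
\[
\bm{H} - \mathsf{sgn}(\bm{H}) \;=\; \bm{U}_{H}\big(\bm{\Sigma}_{H}-\bm{I}_{r}\big)\bm{V}_{H}^{\top}.
\]
Taking spectral norms and recalling $\bm{\Sigma}_{H} = \cos\bm{\Theta}$, this gives $\|\bm{H}-\mathsf{sgn}(\bm{H})\| = \max_{i}|1-\cos\theta_{i}|$. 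Since $\cos\theta_{i}>0$ by the preceding paragraph, the identity $1-\cos\theta_{i} = \sin^{2}\theta_{i}/(1+\cos\theta_{i})\leq \sin^{2}\theta_{i}$ yields $\|\bm{H}-\mathsf{sgn}(\bm{H})\| \leq \|\sin\bm{\Theta}\|^{2} \leq 2c_{2}^{2}n\sigma^{2}/(\lambda_{r}^{\star})^{2}$, where the last step invokes \eqref{eq:distp-U-Ustar-general}. The bound for $\bm{H}^{(l)}$ follows by verbatim repetition with $\bm{\Theta}^{(l)}$ in place of $\bm{\Theta}$, and a union bound over $1\leq l\leq n$ preserves the $1-O(n^{-7})$ probability inherited from Lemma~\ref{lem:preliminary-result-general-iid}.

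No serious obstacle is expected in this proof: the lemma is essentially a convenient repackaging of the $\ell_{2}$ perturbation bounds once one invokes the principal-angle interpretation of the singular values of $\bm{H}$. The only minor checkpoint is to verify that the sufficient condition $c_{2}\sigma\sqrt{n}\leq (1-1/\sqrt{2})\lambda_{r}^{\star}$ guarantees both $\cos\theta_{i}\geq 1/2$ (needed for \eqref{eq:H-inv-norm-bound-general}) and the nonnegativity used to pass from $1-\cos\theta_{i}$ to $\sin^{2}\theta_{i}$; both reduce to the numerical inequality $\sqrt{2}(1-1/\sqrt{2}) = \sqrt{2}-1 < \sqrt{3}/2$ and are therefore immediate.
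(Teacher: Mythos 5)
Your proof is correct and follows essentially the same route as the paper: identify the singular values of $\bm{H}$ with $\{\cos\theta_i\}$, bound $\|\bm{H}-\mathsf{sgn}(\bm{H})\| = \|\bm{I}-\cos\bm{\Theta}\| \le \|\sin\bm{\Theta}\|^2$, and invoke \eqref{eq:distp-U-Ustar-general}. The only cosmetic difference is in deriving \eqref{eq:H-inv-norm-bound-general}: you compute $\sigma_{\min}(\bm{H}) = \sqrt{1-\|\sin\bm{\Theta}\|^2}$ directly, whereas the paper deduces $\sigma_{\min}(\bm{H})\ge 1 - \|\bm{H}-\mathsf{sgn}(\bm{H})\|$ from Weyl's inequality; both yield the same conclusion under the stated assumption.
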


\begin{remark}\label{remark:AH-norm-bound} As a consequence of \eqref{eq:H-property-summary-general}
and Proposition~\ref{prop:unitary_norm_relation}, one has \begin{subequations}
\label{eq:A-2inf-H-Hl-general}
\begin{align}
\vertiii{\bm{A}} & =\vertiiibig{\bm{A}\bm{H}\bm{H}^{-1}}\leq\vertiii{\bm{A}\bm{H}}\,\big\|\bm{H}^{-1}\big\|\leq2\vertiii{\bm{A}\bm{H}}, \label{eq:A-2inf-H-general}\\
\vertiii{\bm{A}} & \leq\vertiiibig{\bm{A}\bm{H}^{(l)}}\,\big\|\big(\bm{H}^{(l)}\big)^{-1}\big\|\leq2\vertiiibig{\bm{A}\bm{H}^{(l)}}\label{eq:A-2inf-Hl-general}
\end{align}
\end{subequations} for any matrix $\bm{A}$. Here, $\vertiii{\bm{A}}$ could either be the Frobenius norm or the $\ell_{2,\infty}$
norm $\|\cdot\|_{2,\infty}$. \end{remark}

\subsection{Leave-one-out analysis}

Now we move on to the main part of the analysis, which is further decomposed into four steps. 

\subsubsection{Step 1: decomposing the $\ell_{2,\infty}$ estimation error of $\bm{U}$}

By virtue of the proximity of  $\bm{H}$ and $\mathsf{sgn}(\bm{H})$ unveiled in Lemma~\ref{lem:H-property-summary-general}, we are allowed to employ $\bm{U}\bm{H}$ as a surrogate for $\bm{U}\mathsf{sgn}(\bm{H})$, which is more convenient to work with. As it turns out, the discrepancy between $\bm{U}\bm{H}$ and the first-order approximation $\bm{M}\bm{U}^{\star}(\bm{\Lambda}^{\star})^{-1}$, and the discrepancy between $\bm{U}\bm{H}$ and the truth,  can be bounded by three important terms, as asserted below. The proof is built upon elementary  algebra and basic $\ell_2$ perturbation bounds in Section~\ref{sec:preliminary-facts-Linf-general}, and is deferred to Section~\ref{sec:proof-auxiliary-lemmas-general}.

\begin{lemma}
\label{lem:UH-MUstar-diff-decompose}
Suppose that $2c_2\sigma\sqrt{n}\leq \lambda_r^{\star}$ for some sufficiently large constant $c_2>0$. Then with probability at least $1-O(n^{-7})$, one has
\begin{subequations}
	\label{eq:UH-MUstar-Ustar-diff-2inf-general}
	\begin{align}
		\big\|\bm{U}\bm{H}-\bm{M}\bm{U}^{\star}\big(\bm{\Lambda}^{\star}\big)^{-1}\big\|_{2,\infty} & \leq\mathcal{E}_{1}+\mathcal{E}_{2},
		\label{eq:UH-MUstarLambdastar-diff-2inf-general}\\
		\big\|\bm{U}\bm{H}-\bm{U}^{\star}\big\|_{2,\infty} & \leq \mathcal{E}_{1}+\mathcal{E}_{2}+ \mathcal{E}_3,
		\label{eq:UH-Ustar-diff-2inf-general}
	\end{align}
\end{subequations}
where
%
%
$\mathcal{E}_{1}\coloneqq\frac{2\|\bm{M}(\bm{U}\bm{H}-\bm{U}^{\star})\|_{2,\infty}}{\lambda_{r}^{\star}}$, $\mathcal{E}_{2}\coloneqq\frac{4\|\bm{M}\bm{U}^{\star}\|_{2,\infty}\|\bm{E}\|}{(\lambda_{r}^{\star})^{2}}$, and $\mathcal{E}_{3}\coloneqq\frac{\|\bm{E}\bm{U}^{\star}\|_{2,\infty}}{\lambda_{r}^{\star}}$.
%
%
\end{lemma}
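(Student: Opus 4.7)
The plan is to derive an exact algebraic identity for $\bm{U}\bm{H} - \bm{M}\bm{U}^{\star}(\bm{\Lambda}^{\star})^{-1}$, bound each of its two pieces in $\|\cdot\|_{2,\infty}$ using the preliminary facts in Section~\ref{sec:preliminary-facts-Linf-general}, and then deduce \eqref{eq:UH-Ustar-diff-2inf-general} from \eqref{eq:UH-MUstarLambdastar-diff-2inf-general} by one more triangle inequality step. The starting observation is that since $\bm{\Lambda}$ is invertible under the hypothesis $2c_{2}\sigma\sqrt{n}\leq\lambda_{r}^{\star}$ (cf.~\eqref{eq:lambda-size-large-general}), the eigenrelation $\bm{M}\bm{U}=\bm{U}\bm{\Lambda}$ gives $\bm{U}=\bm{M}\bm{U}\bm{\Lambda}^{-1}$ and hence $\bm{U}\bm{H}=\bm{M}\bm{U}\bm{\Lambda}^{-1}\bm{H}$. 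Direct computation using both $\bm{U}^{\top}\bm{M}=\bm{\Lambda}\bm{U}^{\top}$ (which uses symmetry of $\bm{M}$) and $\bm{M}^{\star}\bm{U}^{\star}=\bm{U}^{\star}\bm{\Lambda}^{\star}$ yields the commutator identity $\bm{\Lambda}\bm{H}-\bm{H}\bm{\Lambda}^{\star}=\bm{U}^{\top}\bm{E}\bm{U}^{\star}$.

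Pre-multiplying this commutator relation by $\bm{\Lambda}^{-1}$ and post-multiplying by $(\bm{\Lambda}^{\star})^{-1}$ produces $\bm{\Lambda}^{-1}\bm{H}=\bm{H}(\bm{\Lambda}^{\star})^{-1}-\bm{\Lambda}^{-1}\bm{U}^{\top}\bm{E}\bm{U}^{\star}(\bm{\Lambda}^{\star})^{-1}$. Substituting this into the expression for $\bm{U}\bm{H}$ and collapsing $\bm{M}\bm{U}\bm{\Lambda}^{-1}\bm{U}^{\top}=\bm{U}\bm{U}^{\top}$ gives the key decomposition
\[
\bm{U}\bm{H}-\bm{M}\bm{U}^{\star}(\bm{\Lambda}^{\star})^{-1}=\bm{M}(\bm{U}\bm{H}-\bm{U}^{\star})(\bm{\Lambda}^{\star})^{-1}-\bm{U}\bm{U}^{\top}\bm{E}\bm{U}^{\star}(\bm{\Lambda}^{\star})^{-1}.
\]

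For the first summand, the submultiplicativity $\|\bm{A}\bm{B}\|_{2,\infty}\leq\|\bm{A}\|_{2,\infty}\|\bm{B}\|$ together with $\|(\bm{\Lambda}^{\star})^{-1}\|=1/\lambda_{r}^{\star}$ immediately yields a $\|\cdot\|_{2,\infty}$ bound of $\|\bm{M}(\bm{U}\bm{H}-\bm{U}^{\star})\|_{2,\infty}/\lambda_{r}^{\star}$, which accounts for $\mathcal{E}_{1}/2$. For the second summand, I would rewrite $\bm{U}\bm{U}^{\top}\bm{E}\bm{U}^{\star}=\bm{M}\bm{U}\bm{\Lambda}^{-1}\bm{U}^{\top}\bm{E}\bm{U}^{\star}$ via the same identity $\bm{U}=\bm{M}\bm{U}\bm{\Lambda}^{-1}$, then peel off scalars: $\|\bm{\Lambda}^{-1}\|\leq 2/\lambda_{r}^{\star}$ from \eqref{eq:lambda-size-large-general}, $\|\bm{U}^{\top}\bm{E}\bm{U}^{\star}\|\leq\|\bm{E}\|$, and $\|(\bm{\Lambda}^{\star})^{-1}\|=1/\lambda_{r}^{\star}$. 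The remaining $\|\bm{M}\bm{U}\|_{2,\infty}$ factor I would further bound by $\|\bm{M}\bm{U}\bm{H}\|_{2,\infty}\|\bm{H}^{-1}\|\leq 2\|\bm{M}\bm{U}\bm{H}\|_{2,\infty}$ using Lemma~\ref{lem:H-property-summary-general}, and then split $\|\bm{M}\bm{U}\bm{H}\|_{2,\infty}\leq\|\bm{M}\bm{U}^{\star}\|_{2,\infty}+\|\bm{M}(\bm{U}\bm{H}-\bm{U}^{\star})\|_{2,\infty}$ by the triangle inequality. The $\|\bm{M}\bm{U}^{\star}\|_{2,\infty}$ part delivers $\mathcal{E}_{2}$, while the $\|\bm{M}(\bm{U}\bm{H}-\bm{U}^{\star})\|_{2,\infty}$ part carries an extra factor $2\|\bm{E}\|/\lambda_{r}^{\star}\leq 1$ (absorbed by the hypothesis $2c_{2}\sigma\sqrt{n}\leq\lambda_{r}^{\star}$), so that its contribution fits inside the remaining $\mathcal{E}_{1}/2$. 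Summed, the two summands give exactly $\mathcal{E}_{1}+\mathcal{E}_{2}$, establishing \eqref{eq:UH-MUstarLambdastar-diff-2inf-general}.

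The inequality \eqref{eq:UH-Ustar-diff-2inf-general} then drops out of \eqref{eq:UH-MUstarLambdastar-diff-2inf-general} by a single triangle inequality: since $\bm{M}^{\star}\bm{U}^{\star}=\bm{U}^{\star}\bm{\Lambda}^{\star}$, one has $\bm{M}\bm{U}^{\star}(\bm{\Lambda}^{\star})^{-1}-\bm{U}^{\star}=\bm{E}\bm{U}^{\star}(\bm{\Lambda}^{\star})^{-1}$, whose $\|\cdot\|_{2,\infty}$ norm is bounded by $\|\bm{E}\bm{U}^{\star}\|_{2,\infty}/\lambda_{r}^{\star}=\mathcal{E}_{3}$. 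I expect the main obstacle to be purely the bookkeeping: ensuring that the two occurrences of $\|\bm{M}(\bm{U}\bm{H}-\bm{U}^{\star})\|_{2,\infty}/\lambda_{r}^{\star}$ produced by the two summands combine into exactly one copy of $\mathcal{E}_{1}$ rather than a larger multiple. The pre-factor of $2$ already built into the definition of $\mathcal{E}_{1}$ exists precisely for this purpose, and the smallness of $\|\bm{E}\|$ relative to $\lambda_{r}^{\star}$ (via \eqref{eq:noise-size-general}) is what makes the residual noise-weighted piece collapse into the first piece with room to spare.
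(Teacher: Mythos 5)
Your proposal is correct and follows essentially the same route as the paper's own proof. Both arguments rest on the commutator identity $\bm{\Lambda}\bm{H}-\bm{H}\bm{\Lambda}^{\star}=\bm{U}^{\top}\bm{E}\bm{U}^{\star}$ (derived from $\bm{U}^{\top}\bm{M}^{\star}\bm{U}^{\star}=\bm{U}^{\top}\bm{M}\bm{U}^{\star}-\bm{U}^{\top}\bm{E}\bm{U}^{\star}$ together with the two eigenrelations), leading to the same key decomposition into $\bm{M}(\bm{U}\bm{H}-\bm{U}^{\star})(\bm{\Lambda}^{\star})^{-1}$ plus a noise-weighted remainder of the form $\bm{M}\bm{U}\bm{\Lambda}^{-1}\bm{U}^{\top}\bm{E}\bm{U}^{\star}(\bm{\Lambda}^{\star})^{-1}$ (which you noticed simplifies to $\bm{U}\bm{U}^{\top}\bm{E}\bm{U}^{\star}(\bm{\Lambda}^{\star})^{-1}$, then re-expanded it for the estimate, exactly as the paper does); the peeling of $\|\bm{\Lambda}^{-1}\|\le 2/\lambda_r^{\star}$, the use of $\|\bm{H}^{-1}\|\le 2$ to pass from $\|\bm{M}\bm{U}\|_{2,\infty}$ to $\|\bm{M}\bm{U}\bm{H}\|_{2,\infty}$, the triangle split producing $\mathcal{E}_2$ and the extra $\mathcal{E}_1/2$ absorbed via $4\|\bm{E}\|\le\lambda_r^{\star}$, and the final step for \eqref{eq:UH-Ustar-diff-2inf-general} using $\bm{M}\bm{U}^{\star}(\bm{\Lambda}^{\star})^{-1}-\bm{U}^{\star}=\bm{E}\bm{U}^{\star}(\bm{\Lambda}^{\star})^{-1}$ are all identical to the paper's; the only cosmetic difference is that the paper postpones multiplying by $(\bm{\Lambda}^{\star})^{-1}$ to the very end while you carry it along throughout.
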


Lemma~\ref{lem:UH-MUstar-diff-decompose} leaves us with three important terms to deal with. The term $\mathcal{E}_1$ is most complicated as it involves the product of two random matrices, whereas $\mathcal{E}_2$ and $\mathcal{E}_3$ can be controlled straightforwardly through our preliminary facts in Section~\ref{sec:preliminary-facts-Linf-general}.   Specifically,   the term $\mathcal{E}_2$ can be bounded by combining \eqref{eq:MU-2-inf-bound-general} with the bound \eqref{eq:noise-size-general} on $\bm{E}$ to obtain
%
%
%
\begin{align}
	\mathcal{E}_2 & \leq \frac{4c_{2}\kappa\sigma\sqrt{\mu r}}{\lambda_{r}^{\star}}+\frac{4c_{2}(4+6c_{\mathsf{b}})\sigma^{2}\sqrt{rn\log n}}{(\lambda_{r}^{\star})^{2}}.
	\label{eq:MUstar-E-norm-lambda2-bound}
\end{align}
Regarding the term $\mathcal{E}_3$, the inequality \eqref{eq:EU-2-inf-bound-general} readily gives
\begin{align}
	\mathcal{E}_3 \leq\frac{(4+6c_{\mathsf{b}} )\sigma\sqrt{r\log n}}{\lambda_{r}^{\star}}.
	\label{eq:E3-bound-general}
\end{align}

Turning to controlling the remaining term $\mathcal{E}_{1}$, a closer inspection, however, reveals substantial challenges, due to the complicated statistical dependency between $\bm{M}$ and $\bm{U}$. To further complicate matters, the term $\mathcal{E}_1$---as we shall demonstrate momentarily---depend on some intrinsic properties of interest about $\bm{U}$ (e.g., $\|\bm{U}\bm{H}-\bm{U}^{\star}\|_{2,\infty}$),  which might lead to circular reasoning if not handled properly. In order to circumvent this issue, we intend to establish the following relation
\begin{equation}
	\mathcal{E}_{1}\leq\mathcal{E}_{1,1}+\rho_{1}\big\|\bm{U}\bm{H}-\bm{U}^{\star}\big\|_{2,\infty}
	\label{eq:E1-UB-E11-rho1}
\end{equation}
for some quantity $\mathcal{E}_{1,1}>0$ that does not involve $\|\bm{U}\bm{H}-\bm{U}^{\star} \|_{2,\infty}$  as well as some contraction factor $0<\rho_{1}\leq1/2$. Assuming the relation~\eqref{eq:E1-UB-E11-rho1} holds for the moment, we have the following useful claim (the proof is straightforward and again postponed to Section~\ref{sec:proof-auxiliary-lemmas-general}).
\begin{lemma}
\label{lem:UH-sequence-bounds-E123}
If Conditions \eqref{eq:UH-MUstar-Ustar-diff-2inf-general} and \eqref{eq:E1-UB-E11-rho1} hold with $0<\rho_{1}\leq1/2$, then we have
\begin{subequations}\label{eq:UH-sequence-bounds-E123}
\begin{align}
\big\|\bm{U}\bm{H}-\bm{U}^{\star}\big\|_{2,\infty} & \leq2\big(\mathcal{E}_{1,1}+\mathcal{E}_{2}+\mathcal{E}_{3}\big), \label{eq:UH-Ustar-E11-E2-E3}\\
\big\|\bm{U}\bm{H}-\bm{M}\bm{U}^{\star}\big(\bm{\Lambda}^{\star}\big)^{-1}\big\|_{2,\infty} & \leq2\big(\mathcal{E}_{1,1}+\mathcal{E}_{2}+\rho_{1}\mathcal{E}_{3}\big),
\label{eq:UH-MUstar-Lambdastar-E11-E2-E3}
\end{align}
\begin{align}
	& \big\|\bm{U}\mathsf{sgn}(\bm{H})-\bm{U}^{\star}\big\|_{2,\infty}  \leq4\big(\mathcal{E}_{1,1}+\mathcal{E}_{2}+\mathcal{E}_{3}\big)+\frac{4c_{2}^{2}\sigma^{2}\sqrt{\mu rn}}{(\lambda_{r}^{\star})^{2}}, \label{eq:UsgnH-Ustar-Lambdastar-E11-E2-E3} \\
	& \big\|\bm{U}\mathsf{sgn}(\bm{H})-\bm{M}\bm{U}^{\star}\big(\bm{\Lambda}^{\star}\big)^{-1}\big\|_{2,\infty} \nonumber \\
& \qquad \leq3\mathcal{E}_{1,1}+3\mathcal{E}_{2}+\Big(2\rho_{1}+\frac{8c_{2}^{2}\sigma^{2}n}{(\lambda_{r}^{\star})^{2}}\Big)\mathcal{E}_{3}
 +\frac{4c_{2}^{2}\sigma^{2}\sqrt{\mu rn}}{(\lambda_{r}^{\star})^{2}}.\label{eq:UsgnH-MUstar-Lambdastar-E11-E2-E3}
\end{align}
\end{subequations}
\end{lemma}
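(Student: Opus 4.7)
The proof is essentially algebraic bookkeeping that combines the three given inputs \eqref{eq:UH-MUstarLambdastar-diff-2inf-general}, \eqref{eq:UH-Ustar-diff-2inf-general}, and \eqref{eq:E1-UB-E11-rho1} with the elementary bounds on $\bm{H}^{-1}$ and on $\bm{H}-\mathsf{sgn}(\bm{H})$ provided by Lemma~\ref{lem:H-property-summary-general}. I would proceed in four short steps, matching the four claims in order.

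\textbf{Step 1 (proving \eqref{eq:UH-Ustar-E11-E2-E3}).} Substitute \eqref{eq:E1-UB-E11-rho1} into \eqref{eq:UH-Ustar-diff-2inf-general} to obtain
$\|\bm{U}\bm{H}-\bm{U}^{\star}\|_{2,\infty}\leq \mathcal{E}_{1,1}+\rho_1\|\bm{U}\bm{H}-\bm{U}^{\star}\|_{2,\infty}+\mathcal{E}_2+\mathcal{E}_3$, then move the self-referential term to the left-hand side and use $1-\rho_1\geq 1/2$ to divide through. This is the only place where the contraction assumption $\rho_1\leq 1/2$ is really needed.

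\textbf{Step 2 (proving \eqref{eq:UH-MUstar-Lambdastar-E11-E2-E3}).} Feed the bound from Step~1 back into \eqref{eq:E1-UB-E11-rho1} to get $\mathcal{E}_1\leq (1+2\rho_1)\mathcal{E}_{1,1}+2\rho_1\mathcal{E}_2+2\rho_1\mathcal{E}_3$, then substitute into \eqref{eq:UH-MUstarLambdastar-diff-2inf-general} and regroup, using $1+2\rho_1\leq 2$ to collapse the coefficients into the advertised form.

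\textbf{Step 3 (passing from $\bm{H}$ to $\mathsf{sgn}(\bm{H})$, i.e.\ proving \eqref{eq:UsgnH-Ustar-Lambdastar-E11-E2-E3}).} Write $\bm{U}\mathsf{sgn}(\bm{H})-\bm{U}^{\star}=(\bm{U}\bm{H}-\bm{U}^{\star})+\bm{U}(\mathsf{sgn}(\bm{H})-\bm{H})$ and apply the triangle inequality. The first summand is handled by Step~1. For the second, use $\|\bm{U}(\mathsf{sgn}(\bm{H})-\bm{H})\|_{2,\infty}\leq \|\bm{U}\|_{2,\infty}\|\mathsf{sgn}(\bm{H})-\bm{H}\|$ together with Lemma~\ref{lem:H-property-summary-general}. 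To bound $\|\bm{U}\|_{2,\infty}$, use the identity $\bm{U}=\bm{U}^{\star}\bm{H}^{-1}+(\bm{U}\bm{H}-\bm{U}^{\star})\bm{H}^{-1}$, the incoherence $\|\bm{U}^{\star}\|_{2,\infty}=\sqrt{\mu r/n}$, $\|\bm{H}^{-1}\|\leq 2$, and the Step~1 bound, yielding $\|\bm{U}\|_{2,\infty}\lesssim \sqrt{\mu r/n}+(\mathcal{E}_{1,1}+\mathcal{E}_2+\mathcal{E}_3)$. The $\sqrt{\mu r/n}$ contribution gives the additive term $\tfrac{4c_2^2\sigma^2\sqrt{\mu rn}}{(\lambda_r^{\star})^2}$, and the $(\mathcal{E}_{1,1}+\mathcal{E}_2+\mathcal{E}_3)$ contribution is multiplied by $\tfrac{8c_2^2\sigma^2 n}{(\lambda_r^{\star})^2}$, which is absorbed into the constant~4 as explained in Step~4.

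\textbf{Step 4 (proving \eqref{eq:UsgnH-MUstar-Lambdastar-E11-E2-E3}).} The identical decomposition $\bm{U}\mathsf{sgn}(\bm{H})-\bm{M}\bm{U}^{\star}(\bm{\Lambda}^{\star})^{-1}=(\bm{U}\bm{H}-\bm{M}\bm{U}^{\star}(\bm{\Lambda}^{\star})^{-1})+\bm{U}(\mathsf{sgn}(\bm{H})-\bm{H})$ reduces the claim to Step~2 plus the bound from Step~3 on $\|\bm{U}(\mathsf{sgn}(\bm{H})-\bm{H})\|_{2,\infty}$, and one regroups to match the stated right-hand side. \emph{The main (and essentially only) subtlety} of the whole lemma lives here and in Step~3: one must verify that the numerical factor $\tfrac{8c_2^2\sigma^2 n}{(\lambda_r^{\star})^2}$ arising from $\|\mathsf{sgn}(\bm{H})-\bm{H}\|$ is small enough (certainly at most~$1$) to be swallowed by the constants on $\mathcal{E}_{1,1}$ and $\mathcal{E}_2$ in the target bounds. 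This is immediate from the standing hypothesis $c_2\sigma\sqrt{n}\leq(1-1/\sqrt{2})\lambda_r^{\star}$, which forces $\tfrac{8c_2^2\sigma^2 n}{(\lambda_r^{\star})^2}\leq 8(1-1/\sqrt{2})^2<1$; once this numerical check is in hand, all four inequalities follow mechanically.
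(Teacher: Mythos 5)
Your proof is correct and follows essentially the same route as the paper. Steps 1 and 2 match the paper's argument exactly: rearrange to beat the contraction factor $\rho_1\leq 1/2$, then feed the Step-1 bound back through \eqref{eq:E1-UB-E11-rho1} into \eqref{eq:UH-MUstarLambdastar-diff-2inf-general}. Steps 3 and 4 use the same triangle-inequality split on $\bm{U}(\mathsf{sgn}(\bm{H})-\bm{H})$ together with Lemma~\ref{lem:H-property-summary-general}; your bound $\bm{U}=\bm{U}^{\star}\bm{H}^{-1}+(\bm{U}\bm{H}-\bm{U}^{\star})\bm{H}^{-1}$ is algebraically identical to the paper's $\|\bm{U}\|_{2,\infty}\leq 2\|\bm{U}\bm{H}\|_{2,\infty}\leq 2\|\bm{U}\bm{H}-\bm{U}^{\star}\|_{2,\infty}+2\|\bm{U}^{\star}\|_{2,\infty}$, and your numerical check that $8c_2^2\sigma^2 n/(\lambda_r^{\star})^2<1$ under the standing hypothesis is precisely what the paper invokes (stated there as $4c_2^2\sigma^2 n\leq(\lambda_r^{\star})^2$ and $8c_2^2\sigma^2 n\leq(\lambda_r^{\star})^2$) to absorb the extra factors into the constants.
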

\begin{remark}
When $\mathcal{E}_3$ is the dominant term, the  bound   \eqref{eq:UH-MUstar-Lambdastar-E11-E2-E3}  might be stronger than \eqref{eq:UH-Ustar-E11-E2-E3} if $\rho_1$ is  small. 
\end{remark}



With this lemma in mind, everything boils down to (i) establishing the relation \eqref{eq:E1-UB-E11-rho1} and (ii) deriving a tight bound on $\mathcal{E}_{1,1}$, which form the main content of the rest of the proof. In light of the triangle inequality
\begin{align*}
	\big\|\bm{M}\big(\bm{U}\bm{H}-\bm{U}^{\star}\big)\big\|_{2,\infty}
	\leq
	\big\|\bm{E} \big(\bm{U}\bm{H}-\bm{U}^{\star}\big)\big\|_{2,\infty}
	+ \big\|\bm{M}^{\star}\big(\bm{U}\bm{H}-\bm{U}^{\star}\big)\big\|_{2,\infty},
\end{align*}
we dedicate the next two steps to bounding $\|\bm{E} (\bm{U}\bm{H}-\bm{U}^{\star} ) \|_{2,\infty}$ and $\|\bm{M}^{\star} (\bm{U}\bm{H}-\bm{U}^{\star}) \|_{2,\infty}$ respectively.

\subsubsection{Step 2: bounding $\|\bm{E} (\bm{U}\bm{H}-\bm{U}^{\star}) \|_{2,\infty}$ via leave-one-out analysis}

To obtain tight row-wise control of $\bm{E}\big(\bm{U}\bm{H}-\bm{U}^{\star}\big)$, one needs to carefully decouple the statistical dependency between $\bm{E}$ and $\bm{U}$,  which is where the leave-one-out idea comes into play.

\paragraph{Step 2.1: a convenient decomposition.}

We start by invoking the triangle inequality to decompose the target quantity as follows
\begin{align}
 & \big\|\bm{E}\big(\bm{U}\bm{H}-\bm{U}^{\star}\big)\big\|_{2,\infty}=\max_{l}\big\|\bm{E}_{l,\cdot}\big(\bm{U}\bm{H}-\bm{U}^{\star}\big)\big\|_{2}\nonumber \\
 & \leq\max_{l}\Big\{\big\|\bm{E}_{l,\cdot}\big(\bm{U}^{(l)}\bm{H}^{(l)}-\bm{U}^{\star}\big)\big\|_{2}+\big\|\bm{E}_{l,\cdot}\big(\bm{U}\bm{H}-\bm{U}^{(l)}\bm{H}^{(l)}\big)\big\|_{2}\Big\}\nonumber \\
 & \leq\max_{l}\Big\{\big\|\bm{E}_{l,\cdot}\big(\bm{U}^{(l)}\bm{H}^{(l)}-\bm{U}^{\star}\big)\big\|_{2}+\|\bm{E}\| \big\|\bm{U}\bm{H}-\bm{U}^{(l)}\bm{H}^{(l)}\big\|_{\mathrm{F}}\Big\}.
	\label{eq:decomposition-E-UH-Ustar}
\end{align}
In words, when controlling the $l$-th row of $\bm{E}\big(\bm{U}\bm{H}-\bm{U}^{\star}\big)$, we attempt to employ $\bm{U}^{(l)}\bm{H}^{(l)}$ as a surrogate of $\bm{U}\bm{H}$.  The benefits to be harvested from this decomposition are:
\begin{itemize}
	\item The statistical independence between $\bm{E}_{l,\cdot}$ and $\bm{U}^{(l)}\bm{H}^{(l)}$ allows for convenient upper bounds on $\big\|\bm{E}_{l,\cdot}\big(\bm{U}^{(l)}\bm{H}^{(l)}-\bm{U}^{\star}\big)\big\|_{2}$;
	
	\item  $\bm{U}\bm{H}$ and  $\bm{U}^{(l)}\bm{H}^{(l)}$ are expected to be exceedingly close, so that the discrepancy incurred by replacing $\bm{U}\bm{H}$ with $\bm{U}^{(l)}\bm{H}^{(l)}$ is negligible.
\end{itemize}
In what follows, we flesh out the proof details.



%

\paragraph{Step 2.2: the proximity of $\bm{U}\bm{H}$ and $\bm{U}^{(l)}\bm{H}^{(l)}$.}
Given that
\begin{align}
	\big\|\bm{U}\bm{H}-\bm{U}^{(l)}\bm{H}^{(l)}\big\|_{\mathrm{F}}
	& =\big\|\bm{U}\bm{U}^{\top}\bm{U}^{\star}-\bm{U}^{(l)}\bm{U}^{(l)\top}\bm{U}^{\star}\big\|_{\mathrm{F}} \notag\\
 	& \leq\big\|\bm{U}\bm{U}^{\top}-\bm{U}^{(l)}\bm{U}^{(l)\top}\big\|_{\mathrm{F}} \big\|\bm{U}^{\star}\big\| \notag\\
 	& =\big\|\bm{U}\bm{U}^{\top}-\bm{U}^{(l)}\bm{U}^{(l)\top}\big\|_{\mathrm{F}},
	\label{eq:UU-UlUl-UB01}
\end{align}
it boils down to bounding $\|\bm{U}\bm{U}^{\top}-\bm{U}^{(l)}\bm{U}^{(l)\top}\|_{\mathrm{F}}$.
Under simple conditions on the eigengap and the perturbation size (see \eqref{eq:lambda-rl-lambda-r1l-M-Ml-LB-step}  in Remark~\ref{remark:perturbation-size-eigen-gap}),  the Davis-Kahan theorem (cf.~Corollary~\ref{cor:davis-kahan-conclusion-corollary}) yields
\begin{align}
	\big\|\bm{U}\bm{U}^{\top}-\bm{U}^{(l)}\bm{U}^{(l)\top}\big\|_{\mathrm{F}}
	& \leq\frac{2\big\|\big(\bm{M}-\bm{M}^{(l)}\big)\bm{U}^{(l)}\big\|_{\mathrm{F}} }{ \big| \lambda_{r}^{(l)} \big| - \big|\lambda_{r+1}^{(l)} \big|} \nonumber\\
	& \leq\frac{4\big\|\big(\bm{M}-\bm{M}^{(l)}\big)\bm{U}^{(l)}\big\|_{\mathrm{F}} }{\lambda_{r}^{\star}}.
	\label{eq:UU-UlUl-UB1}
\end{align}

It remains to develop an upper bound on $\| (\bm{M}-\bm{M}^{(l)} )\bm{U}^{(l)} \|$.
The way we construct $\bm{M}^{(l)}$ (see Section~\ref{sec:construction-LOO-general}) allows us to express
\[
	\big(\bm{M}-\bm{M}^{(l)}\big)\bm{U}^{(l)}=\bm{e}_{l}\bm{E}_{l,\cdot}\bm{U}^{(l)}+  \big(\bm{E}_{\cdot,l} - E_{l,l} \bm{e}_l \big)  \bm{e}_{l}^{\top}\bm{U}^{(l)}.
\]
This together with the triangle inequality and the fact \eqref{eq:A-2inf-H-Hl-general} gives
\begin{align*}
 & \big\|\big(\bm{M}-\bm{M}^{(l)}\big)\bm{U}^{(l)}\big\|_{\mathrm{F}}\leq\big\|\bm{E}_{l,\cdot}\bm{U}^{(l)}\big\|_{2}+\big\|\bm{E}_{\cdot,l} - E_{l,l} \bm{e}_l \big\|_{2}\,\big\|\bm{U}^{(l)}\big\|_{2,\infty}\\
 & \quad\leq\big\|\bm{E}_{l,\cdot}\bm{U}^{(l)}\big\|_{2}+2\big\|\bm{E}\big\|\, \big\|\bm{U}^{(l)}\bm{H}^{(l)}\big\|_{2,\infty}\\
 & \quad\leq\big\|\bm{E}_{l,\cdot}\bm{U}^{(l)}\big\|_{2}+2\big\|\bm{E}\big\|\, \big\|\bm{U}\bm{H}\big\|_{2,\infty}+2\big\|\bm{E}\big\|\,\big\|\bm{U}\bm{H}-\bm{U}^{(l)}\bm{H}^{(l)}\big\|_{\mathrm{F}}.
\end{align*}
%
%
Substitution into \eqref{eq:UU-UlUl-UB01} and \eqref{eq:UU-UlUl-UB1} gives
\begin{align*}
 & \big\|\bm{U}\bm{H}-\bm{U}^{(l)}\bm{H}^{(l)}\big\|_{\mathrm{F}} \leq \big\|\bm{U}\bm{U}^{\top}-\bm{U}^{(l)}\bm{U}^{(l)\top}\big\|_{\mathrm{F}} \\
 & \quad\leq\frac{4\big\|\bm{E}_{l,\cdot}\bm{U}^{(l)}\big\|_{2}+8\big\|\bm{E}\big\|\,\big\|\bm{U}\bm{H}\big\|_{2,\infty}+8\big\|\bm{E}\big\|\, \big\|\bm{U}\bm{H}-\bm{U}^{(l)}\bm{H}^{(l)}\big\|_{\mathrm{F}}}{\lambda_{r}^{\star}}.
\end{align*}
As long as $\|\bm{E}\|/\lambda_r^{\star}\leq 1/16$,
 one can further rearrange terms to obtain
\begin{align}
\big\|\bm{U}\bm{H}-\bm{U}^{(l)}\bm{H}^{(l)}\big\|_{\mathrm{F}} & \leq\frac{8\big\|\bm{E}_{l,\cdot}\bm{U}^{(l)}\big\|_{2}+16\big\|\bm{E}\big\|\, \big\|\bm{U}\bm{H}\big\|_{2,\infty}}{\lambda_{r}^{\star}}.
	\label{eq:UH-UlHl-diff-UB10}
\end{align}
In addition, the fact  \eqref{eq:A-2inf-H-Hl-general} combined with the triangle inequality yields
\[
	\tfrac{1}{2}\big\|\bm{E}_{l,\cdot}\bm{U}^{(l)}\big\|_{2}\leq \big\|\bm{E}_{l,\cdot}\bm{U}^{(l)}\bm{H}^{(l)}\big\|_{2}
	\leq \big\|\bm{E}_{l,\cdot}\big(\bm{U}^{(l)}\bm{H}^{(l)}-\bm{U}^{\star}\big)\big\|_{2}+ \big\|\bm{E}_{l,\cdot}\bm{U}^{\star}\big\|_{2},
\]
which taken collectively with \eqref{eq:UH-UlHl-diff-UB10} reveals that
%
\begin{align}
 & \big\|\bm{U}\bm{H}-\bm{U}^{(l)}\bm{H}^{(l)}\big\|_{\mathrm{F}}
	\leq\frac{16\big\|\bm{E}_{l,\cdot}\big(\bm{U}^{(l)}\bm{H}^{(l)}-\bm{U}^{\star}\big)\big\|_{2}}{\lambda_{r}^{\star}} \nonumber \\
	&  \quad + \frac{ 16 \Big\{ \big\|\bm{E}_{l,\cdot}\bm{U}^{\star}\big\|_{2}  + \big\|\bm{E}\big\| \, \big\|\bm{U}\bm{H}-\bm{U}^{\star}\big\|_{2,\infty} +  \big\|\bm{E}\big\|\, \big\|\bm{U}^{\star}\big\|_{2,\infty} \Big\} }{\lambda_{r}^{\star}}.
	\label{eq:UH-UlHl-diff-UB123}
\end{align}

\paragraph{Step 2.3: bounding $\|\bm{E}_{l,\cdot} (\bm{U}^{(l)}\bm{H}^{(l)}-\bm{U}^{\star} ) \|_{2}$.}

%
Recognizing that $\bm{E}_{l,\cdot}$ is statistically independent of $\bm{U}^{(l)}$ (since $\bm{U}^{(l)}$ is computed without using $\bm{E}_{l,\cdot}$),
we invoke Lemma~\ref{lemma:general-noise-bound} (more precisely, we use the proof of this lemma) to demonstrate that
with probability exceeding $1-2n^{-6}$,
\begin{align}
 & \big\|\bm{E}_{l,\cdot}\big(\bm{U}^{(l)}\bm{H}^{(l)}-\bm{U}^{\star}\big)\big\|_{2} \notag\\
 & \leq4\sigma\sqrt{\log n} \, \|\bm{U}^{(l)}\bm{H}^{(l)}-\bm{U}^{\star}\|_{\mathrm{F}}
	+ (6B \log n) \|\bm{U}^{(l)}\bm{H}^{(l)}-\bm{U}^{\star}\|_{2,\infty} \nonumber \\
	& \leq4\sigma \sqrt{\log n} \, \|\bm{U}\bm{H}-\bm{U}^{\star}\|_{\mathrm{F}} + (6B \log n) \|\bm{U}\bm{H}-\bm{U}^{\star}\|_{2,\infty} \nonumber \\
	& \qquad+ ( 10B\log n) \|\bm{U}\bm{H}-\bm{U}^{(l)}\bm{H}^{(l)}\|_{\mathrm{F}}
 \label{eq:El-UH-Ustar-UB-Bern1}
\end{align}
holds simultaneously for all $1\leq l\leq n$, where the last line results from the triangle inequality and the fact $4\sigma\sqrt{\log n}+6B\log n\leq 10B\log n$.


\paragraph{Step 2.4: combining the above bounds.}

The careful reader would immediately remark that the inequalities \eqref{eq:UH-UlHl-diff-UB123} and \eqref{eq:El-UH-Ustar-UB-Bern1} are convoluted, both of which  involve the terms  $\|\bm{E}_{l,\cdot} (\bm{U}^{(l)}\bm{H}^{(l)}-\bm{U}^{\star} ) \|_{2}$ and $\|\bm{U}\bm{H}-\bm{U}^{(l)}\bm{H}^{(l)}\|_{\mathrm{F}}$. Fortunately, one can substitute \eqref{eq:El-UH-Ustar-UB-Bern1} into \eqref{eq:UH-UlHl-diff-UB123} to produce a cleaner bound. By doing so and exploiting the condition $320B\log n \leq \lambda_r^{\star}$, we  rearrange terms to reach
\begin{align*}
	& \big\|\bm{U}\bm{H}-\bm{U}^{(l)}\bm{H}^{(l)}\big\|_{\mathrm{F}}
	\leq \frac{32\big\|\bm{E}_{l,\cdot}\bm{U}^{\star}\big\|_{2}+32\big\|\bm{E}\big\|\, \big\|\bm{U}^{\star}\big\|_{2,\infty}}{\lambda_{r}^{\star}}\nonumber \\
	& +\frac{128\sigma\sqrt{\log n}\big\|\bm{U}\bm{H}-\bm{U}^{\star}\big\|_{\mathrm{F}}+(32c_2\sigma \sqrt{n} +192B\log n)\big\|\bm{U}\bm{H}-\bm{U}^{\star}\big\|_{2,\infty}}{\lambda_{r}^{\star}},
\end{align*}
where we have also used the upper bound on $\|\bm{E}\|$ derived in \eqref{eq:L2-perturbation-summary-general}.
Meanwhile, plugging the above inequality into \eqref{eq:El-UH-Ustar-UB-Bern1} yields
\begin{align}
	& \big\|\bm{E}_{l,\cdot}\big(\bm{U}^{(l)}\bm{H}^{(l)}-\bm{U}^{\star}\big)\big\|_{2}
	\leq \frac{320B\log n}{\lambda_{r}^{\star}}\big(\big\|\bm{E}_{l,\cdot}\bm{U}^{\star}\big\|_{2}+\big\|\bm{E}\big\|\,\big\|\bm{U}^{\star}\big\|_{2,\infty}\big) \nonumber\\
	&\quad + 5\sigma\sqrt{\log n}\,\|\bm{U}\bm{H}-\bm{U}^{\star}\|_{\mathrm{F}}+(7B\log n)\|\bm{U}\bm{H}-\bm{U}^{\star}\|_{2,\infty}  ,
	\label{eq:El-UH-Ustar-UB-Bern123}
\end{align}
provided that $\max \{ \sigma \sqrt{n}, B\log n\} \leq c_3\lambda_r^{\star}$ for some  small constant $c_3$.

Substituting the preceding two bounds into \eqref{eq:decomposition-E-UH-Ustar} and combining terms reveal the existence of some constant $c_4>0$ such that
\begin{align}
	&  \big\|\bm{E}\big(\bm{U}\bm{H}-\bm{U}^{\star}\big)\big\|_{2,\infty} \notag\\
	& \qquad \leq \alpha_0 + \alpha_1 + c_4\big( \sigma  \sqrt{n} + B\log n \big)\|\bm{U}\bm{H}-\bm{U}^{\star}\|_{2,\infty}
	\label{eq:El-UH-Ustar-UB-Bern123}
\end{align}
provided that $\max \{ \sigma \sqrt{n}, B\log n\} \leq c_3\lambda_r^{\star}$ for some constant $c_3>0$ small enough, where
\begin{align}
\begin{array}{ll}
 & \alpha_{0}\coloneqq\frac{32c_{2}\sigma\sqrt{n}+320B\log n}{\lambda_{r}^{\star}}\big(\big\|\bm{E}\bm{U}^{\star}\big\|_{2,\infty}+\big\|\bm{E}\big\|\,\big\|\bm{U}^{\star}\big\|_{2,\infty}\big) , \\
 & \alpha_{1}\coloneqq6\sigma\sqrt{\log n}\,\|\bm{U}\bm{H}-\bm{U}^{\star}\|_{\mathrm{F}} .
\end{array}	
\label{eq:defn-alpha-0-1-general}
\end{align}

Before continuing,  note that we are already well-equipped to bound the above two quantities. First, $\alpha_0$ can be bounded by
\begin{align}
\alpha_{0} & \leq \frac{32c_{2}\sigma\sqrt{n}+320B\log n}{\lambda_{r}^{\star}}\nonumber \\
 & \qquad\cdot\Big(4\sigma\sqrt{r\log n}+6B\sqrt{\frac{\mu r}{n}}\log n+c_{2}\sigma\sqrt{\mu r}\Big),
\label{eq:defn-alpha-0-general}
\end{align}
where we have used the bounds concerning $\bm{E}$ from Lemma~\ref{lemma:general-noise-bound} as well as the definition \eqref{eq:defn-Ustar-incoherence}.
Regarding $\alpha_1$, it is seen from (\ref{eq:dist-under-H-general}) that
\begin{align}
\alpha_{1}
  \leq\frac{12c_{2}\sigma^{2}\sqrt{rn\log n}}{\lambda_{r}^{\star}}.
	\label{eq:alpha1-bound-1234}
\end{align}


\subsubsection{Step 3: bounding $\|\bm{M}^{\star} (\bm{U}\bm{H}-\bm{U}^{\star} ) \|_{2,\infty}$}

We make the key observation that
\begin{align}
\big\|\bm{M}^{\star}\big(\bm{U}\bm{H}-\bm{U}^{\star}\big)\big\|_{2,\infty} & =\big\|\bm{U}^{\star}\bm{\Lambda}^{\star}\bm{U}^{\star\top}\big(\bm{U}\bm{H}-\bm{U}^{\star}\big)\big\|_{2,\infty} \notag\\
 & \leq\big\|\bm{U}^{\star}\big\|_{2,\infty} \|\bm{\Lambda}^{\star}\|\, \big\|\bm{U}^{\star\top}\big(\bm{U}\bm{H}-\bm{U}^{\star}\big)\big\| \notag\\
	& =  \sqrt{\frac{\mu r}{n}} \|\bm{\Lambda}^{\star}\|\, \big\|\bm{U}\bm{U}^{\top} - \bm{U}^{\star} \bm{U}^{\star\top} \big\|^{2}.
	\label{eq:Mstar-UH-Ustar-2-UB-123}
\end{align}
Here, the last identity holds true due to the following observation
\begin{align*}
\big\|\bm{U}^{\star\top}\big(\bm{U}\bm{H}-\bm{U}^{\star}\big)\big\| & =\big\|\bm{U}^{\star\top}\bm{U}\bm{U}^{\top}\bm{U}^{\star}-\bm{U}^{\star\top}\bm{U}^{\star}\big\|=\big\|\bm{U}^{\star\top}\bm{U}\bm{U}^{\top}\bm{U}^{\star}-\bm{I}\big\|\\
 & =\big\|\bm{Y}(\cos^{2}\bm{\Theta})\bm{Y}^{\top}-\bm{I}\big\|=\big\| \cos^{2}  \bm{\Theta} - \bm{I} \big\|\\
	& =\big\|\sin^{2}\bm{\Theta}\big\|=\big\|\sin\bm{\Theta}\big\|^2,
\end{align*}
where we denote by $\bm{X}(\cos \bm{\Theta}) \bm{Y}^{\top}$ the SVD of $\bm{U}^{\top}\bm{U}^{\star}$, with $\bm{X}$ and $\bm{Y}$ being orthonormal matrices and $\bm{\Theta}$ the diagonal matrix consisting of the principal angles between $\bm{U}$ and $\bm{U}^{\star}$ (see the definition in~\eqref{defn:principal-angles}). The above bounds combined with \eqref{eq:distp-U-Ustar-general} indicate that
\begin{align}
\big\|\bm{M}^{\star}\big(\bm{U}\bm{H}-\bm{U}^{\star}\big)\big\|_{2,\infty} & \leq
	\big|\lambda_{1}^{\star}\big| \sqrt{\frac{\mu r}{n}} \, \big\|\sin\bm{\Theta}\big\|^2
	\notag\\
	& \leq \frac{4c_{2}^{2}\kappa\sigma^{2}\sqrt{\mu rn}}{\lambda_{r}^{\star}}
 \eqqcolon \alpha_2.
	\label{eq:Mstar-UH-Ustar-2inf}
\end{align}

\subsubsection{Step 4: putting all pieces together}

Combining the bounds \eqref{eq:El-UH-Ustar-UB-Bern123} and \eqref{eq:Mstar-UH-Ustar-2inf} in Steps 2-3 and using the definition of $\mathcal{E}_1$ (see Lemma~\ref{lem:UH-MUstar-diff-decompose}) give
\begin{align}
\mathcal{E}_{1} & \leq\frac{2\big\|\bm{E}\big(\bm{U}\bm{H}-\bm{U}^{\star}\big)\big\|_{2,\infty}+2\big\|\bm{M}^{\star}\big(\bm{U}\bm{H}-\bm{U}^{\star}\big)\big\|_{2,\infty}}{\lambda_{r}^{\star}} \nonumber \\
	& \leq\mathcal{E}_{1,1}+\rho_{1}\|\bm{U}\bm{H}-\bm{U}^{\star}\|_{2,\infty},
	\label{eq:E1-UB-general-1}
\end{align}
where
\begin{equation}
	\mathcal{E}_{1,1}\coloneqq\frac{2\big(\alpha_{0}+\alpha_{1}+\alpha_{2}\big)}{\lambda_{r}^{\star}}
	\quad\text{and}\quad
	\rho_{1}\coloneqq\frac{2c_{4}(\sigma\sqrt{n}+B\log n)}{\lambda_{r}^{\star}}.
	\label{eq:defn-E1-rho1}
\end{equation}
This matches precisely the relation hypothesized in \eqref{eq:E1-UB-E11-rho1}. In particular, one has $0<\rho\leq 1/2$
as long as $4c_4(\sigma \sqrt{n}+ B\log n) \leq \lambda_r^{\star}$, which holds whenever $\sigma \sqrt{n \log n}\leq c_{\sigma} \lambda_r^{\star}$ for a sufficiently small $c_{\sigma}>0$ in view of our assumption on $B$ (cf. \eqref{eq:assumption-B-sigma}).

Recall that $\mathcal{E}_{1,1}$, $\mathcal{E}_{2}$, $\mathcal{E}_{3}$, $\alpha_0$, $\alpha_1$, $\alpha_2$ and $\rho_1$ have been controlled in \eqref{eq:defn-E1-rho1}, \eqref{eq:MUstar-E-norm-lambda2-bound}, \eqref{eq:E3-bound-general}, \eqref{eq:defn-alpha-0-general}, \eqref{eq:alpha1-bound-1234}, \eqref{eq:Mstar-UH-Ustar-2inf} and \eqref{eq:defn-E1-rho1}, respectively.
In addition, recall our assumption \eqref{eq:assumption-B-sigma} and suppose that $\sigma{\sqrt{n}}\leq c_{\sigma}\lambda_{r}^{\star}$ for some sufficiently small constant $c_{\sigma}>0$.
With these bounds and assumptions in mind, invoking Lemma~\ref{lem:UH-sequence-bounds-E123} and combining terms immediately conclude the proof.

\begin{remark} We shall also make note of an immediate consequence of the above argument as follows
\begin{align}
	\big\|\bm{U}\bm{H}-\bm{U}^{\star}\big\|_{2,\infty}&\lesssim\frac{\sigma\kappa\sqrt{\mu r}+\sigma\sqrt{r\log n}}{\lambda_{r}^{\star}},
	\label{eq:UH-MUstar-Lambda-star-2inf}
\end{align}
which will prove useful for deriving other important results.
\end{remark}

\subsection{Proof of auxiliary lemmas}
\label{sec:proof-auxiliary-lemmas-general}

\paragraph{Proof of Lemma~\ref{lemma:general-noise-bound}.}

Under Assumption~\ref{assumption-noise-general}, Theorem~\ref{thm:tighter-spectral-normal-ramon}
(in particular \eqref{eq:X-spectral-norm-iid-special-ramon}) reveals
the existence of some constant $c_{2}>0$ such that
\[
\max_{l}\big\|\bm{E}^{(l)}\big\|\leq\|\bm{E}\|\leq c_{2}\sigma\sqrt{n},
\]
holds with probability exceeding $1-O(n^{-7})$.

When it comes to $\bm{E}\bm{A}$, we proceed by viewing its $l$-th
row $\bm{E}_{l,\cdot}\bm{A}$ as a sum of independent random vectors
as follows
\[
\bm{E}_{l,\cdot}\bm{A}=\sum\nolimits _{j}E_{l,j}\bm{A}_{j,\cdot}\eqqcolon\sum\nolimits _{j}\bm{z}_{j},
\]
which can be controlled by the matrix Bernstein inequality. Specifically,
it is seen from Assumption~\ref{assumption-noise-general} that
\begin{align*}
v & \coloneqq\sum\nolimits _{j}\mathbb{E}\big[\|\bm{z}_{j}\|_{2}^{2}\big]
\leq\sigma^{2}\sum\nolimits _{j}\big\|\bm{A}_{j,\cdot}\big\|_{2}^{2}=\sigma^{2}\|\bm{A}\|_{\mathrm{F}}^{2};\\
L & \coloneqq\max_{j}\|\bm{z}_{j}\|_{2}=\max_{j}|E_{l,j}|\,\big\|\bm{Z}_{j,\cdot}\big\|_{2}\leq B\|\bm{Z}\|_{2,\infty}.
\end{align*}
Invoke the matrix Bernstein inequality (cf.~Corollary~\ref{thm:matrix-Bernstein-friendly})
and take the union bound to demonstrate that: with probability exceeding
$1-2n^{-6}$,
\begin{align*}
\big\|\bm{E}_{l,\cdot}\bm{A}\big\|_{2} & \leq4\sqrt{v\log n}+6L\log n\leq4\sigma\sqrt{\log n}\,\|\bm{A}\|_{\mathrm{F}}+(6B\log n)\|\bm{A}\|_{2,\infty}
\end{align*}
holds simultaneously for all $1\leq l\leq n$, thus concluding the
proof.

\paragraph{Proof of Lemma~\ref{lem:preliminary-result-general-iid}.}

Lemma~\ref{lemma:general-noise-bound} tells us that
\begin{equation}
\max_{l}\big\|\bm{E}^{(l)}\big\|\leq\|\bm{E}\|\leq c_{2}\sigma\sqrt{n}\label{eq:noise-E-size-general}
\end{equation}
holds with probability exceeding $1-O(n^{-7})$. Repeating the argument
in the proof of Corollary~\ref{cor:davis-kahan-conclusion-corollary}
reveals that \begin{subequations}
\begin{align}
\max_{1\leq j\leq r}|\lambda_{j}| & \geq\lambda_{r}^{\star}-\|\bm{E}\|,\quad & \max_{1\leq j\leq r}|\lambda_{j}^{(l)}| & \geq\lambda_{r}^{\star}-\|\bm{E}^{(l)}\|,\\
\max_{j:j>r}|\lambda_{j}| & \leq\|\bm{E}\|, & \max_{j:j>r}|\lambda_{j}^{(l)}| & \leq\|\bm{E}^{(l)}\|,
\end{align}
\end{subequations} which taken together with \eqref{eq:noise-E-size-general}
validates \eqref{eq:lambda-size-large-general}-\eqref{eq:lambda-size-small-general}.

Regarding \eqref{eq:dist-U-Ustar-general} and \eqref{eq:distp-U-Ustar-general},
apply Corollary~\ref{cor:davis-kahan-conclusion-corollary} to reach
\begin{align}
\mathsf{dist}(\bm{U},\bm{U}^{\star})\leq\sqrt{2}\|\sin\bm{\Theta}\|\leq\frac{2\|\bm{E}\|}{\lambda_{r}^{\star}}\leq\frac{2c_{2}\sigma\sqrt{n}}{\lambda_{r}^{\star}}\label{eq:dist-UUstar-sin-Theta-general-UB123}
\end{align}
as long as $\|\bm{E}\|\leq c_{2}\sigma\sqrt{n}\leq(1-1/\sqrt{2})\lambda_{r}^{\star}$,
where we have used $\lambda_{r+1}^{\star}=0$. The bound on $\mathsf{dist}\big(\bm{U}^{(l)},\bm{U}^{\star}\big)$
follows from the same argument.

Additionally, regarding $\bm{U}\bm{H}-\bm{U}^{\star}$ we can derive
\begin{align}
 & \|\bm{U}\bm{H}-\bm{U}^{\star}\|_{\mathrm{F}}=\|\bm{U}\bm{U}^{\top}\bm{U}^{\star}-\bm{U}^{\star}\bm{U}^{\star\top}\bm{U}^{\star}\|_{\mathrm{F}}\nonumber \\
 & \qquad\quad\leq\|\bm{U}\bm{U}^{\top}-\bm{U}^{\star}\bm{U}^{\star\top}\|\,\|\bm{U}^{\star}\|_{\mathrm{F}} \nonumber\\
 &\qquad\quad=\|\sin\bm{\Theta}\|\,\|\bm{U}^{\star}\|_{\mathrm{F}}\leq\frac{2c_{2}\sigma\sqrt{rn}}{\lambda_{r}^{\star}},
\end{align}
where the last line arises from Lemma~\ref{prop:unitary-norm-property},
\eqref{eq:distp-U-Ustar-general}, and $\|\bm{U}^{\star}\|_{\mathrm{F}}=\sqrt{r}$.

\paragraph{Proof of Lemma~\ref{lem:H-property-summary-general}.}

We shall only prove the result for $\bm{H}$; the proof for $\bm{H}^{(l)}$
follows from identical arguments and is hence omitted.

From our discussion in Section~\ref{sec:introduction-distance-principal-angles},
one can express the SVD of $\bm{H}=\bm{U}^{\top}\bm{U}^{\star}$ as
$\bm{H}=\bm{X}(\cos\bm{\Theta})\bm{Y}^{\top}$, where the columns
of $\bm{X}$ (resp.~$\bm{Y}$) are the left (resp.~right) singular
vectors of $\bm{H}$, and $\bm{\Theta}$ is a diagonal matrix composed
of the principal angles between $\bm{U}$ and $\bm{U}^{\star}$. In
light of this and the definition (\ref{eq:defn-sgn-Z}), we can establish
\eqref{eq:H-sgnH-dist-general} as follows
\begin{align}
\big\|\bm{H}-\mathsf{sgn}(\bm{H})\big\| & =\big\|\bm{X}\big(\cos\bm{\Theta}-\bm{I}\big)\bm{Y}^{\top}\big\|=\big\|\bm{I}-\cos\bm{\Theta}\big\|\nonumber \\
 & \leq \big\|\bm{I}-\cos^{2}\bm{\Theta}\big\| = \|\sin\bm{\Theta}\|^{2}\nonumber \\
 & \leq\frac{2c_{2}^{2}\sigma^{2}n}{(\lambda_{r}^{\star})^{2}},
	\label{eq:H-sgnH-diff-UB-123}
\end{align}
where the middle line holds since $1-\cos\theta \leq 1 - \cos^2 \theta$,
and the last line follows from \eqref{eq:distp-U-Ustar-general}.

Coming back to the claim \eqref{eq:H-inv-norm-bound-general}, it
suffices to justify that $\sigma_{\min}(\bm{H})\geq1/2$. Recognizing
that $\mathsf{sgn}(\bm{H})=\bm{X}\bm{Y}^{\top}$, we see that all
singular values of $\mathsf{sgn}(\bm{H})$ equal 1. Thus, Weyl's inequality
together with (\ref{eq:H-sgnH-diff-UB-123}) gives
\[
\sigma_{\min}(\bm{H})\geq\sigma_{\min}\big(\mathsf{sgn}(\bm{H})\big)-\|\bm{H}-\mathsf{sgn}(\bm{H})\|\geq1-\frac{2c_{2}^{2}\sigma^{2}n}{(\lambda_{r}^{\star})^{2}}\geq\frac{1}{2},
\]
with the proviso that $2c_{2}\sigma\sqrt{n}\leq\lambda_{r}^{\star}$.

\paragraph{Proof of Lemma~\ref{lem:UH-MUstar-diff-decompose}.}
We start by connecting $\bm{U}\bm{H}\bm{\Lambda}^{\star}$ more explicitly with $\bm{M}\bm{U}^{\star}$ as follows (the invertibility of $\bm{\Lambda}$ can be deduced from \eqref{eq:lambda-size-large-general})
\begin{align}
	\bm{U}\bm{H}\bm{\Lambda}^{\star} & =\bm{U}\bm{\Lambda}\bm{\Lambda}^{-1}\bm{H}\bm{\Lambda}^{\star}
	=\bm{M}\bm{U}\bm{\Lambda}^{-1}\bm{U}^{\top}\bm{U}^{\star}\bm{\Lambda}^{\star},\label{eq:UH-derivation-1}
\end{align}
which relies on the definition (\ref{eq:defn-H-Hl-general})
and the eigendecomposition $\bm{M}\bm{U}=\bm{U}\bm{\Lambda}$. In
addition, the eigendecomposition $\bm{M}^{\star}\bm{U}^{\star}=\bm{U}^{\star}\bm{\Lambda}^{\star}$ gives
\begin{align}
	\bm{U}^{\top}\bm{U}^{\star}\bm{\Lambda}^{\star}
	&=\bm{U}^{\top}\bm{M}^{\star}\bm{U}^{\star}=\bm{U}^{\top}\bm{M}\bm{U}^{\star}-\bm{U}^{\top}\bm{E}\bm{U}^{\star} \notag\\
	&= \bm{\Lambda}\bm{U}^{\top}\bm{U}^{\star}-\bm{U}^{\top}\bm{E}\bm{U}^{\star},
	\label{eq:HLambda-Lambda-H-connection}
\end{align}
which taken collectively with  (\ref{eq:UH-derivation-1}) demonstrates that
\begin{align}
\bm{U}\bm{H}\bm{\Lambda}^{\star} & =\bm{M}\bm{U}\bm{\Lambda}^{-1}\bm{\Lambda}\bm{U}^{\top}\bm{U}^{\star}-\bm{M}\bm{U}\bm{\Lambda}^{-1}\bm{U}^{\top}\bm{E}\bm{U}^{\star} \notag\\
 & =\bm{M}\bm{U}\bm{H}-\bm{M}\bm{U}\bm{\Lambda}^{-1}\bm{U}^{\top}\bm{E}\bm{U}^{\star} \notag\\
 & =\bm{M}\bm{U}^{\star}+\bm{M}\big(\bm{U}\bm{H}-\bm{U}^{\star}\big)-\bm{M}\bm{U}\bm{\Lambda}^{-1}\bm{U}^{\top}\bm{E}\bm{U}^{\star}.
\notag
\end{align}
Consequently, the difference between $\bm{U}\bm{H}\bm{\Lambda}^{\star}$ and $\bm{M}\bm{U}^{\star}$ obeys
\begin{align}
 & \big\|\bm{U}\bm{H}\bm{\Lambda}^{\star}-\bm{M}\bm{U}^{\star}\big\|_{2,\infty}
  \leq \big\|\bm{M}\big(\bm{U}\bm{H}-\bm{U}^{\star}\big)\big\|_{2,\infty}  \notag\\
	& \qquad\qquad\qquad +\big\|\bm{M}\bm{U}\bm{\Lambda}^{-1}\bm{U}^{\top}\bm{E}\bm{U}^{\star}\big\|_{2,\infty}.
\label{eq:UH-MULambda-UB22}
\end{align}
%

%
Regarding the second term in \eqref{eq:UH-MULambda-UB22}, one can deduce that
\begin{align}
 & \big\|\bm{M}\bm{U}\bm{\Lambda}^{-1}\bm{U}^{\top}\bm{E}\bm{U}^{\star}\big\|_{2,\infty} \notag\\
 & \quad \leq\big\|\bm{M}\bm{U}\big\|_{2,\infty}\big\|\bm{\Lambda}^{-1}\big\|\cdot\big\|\bm{U}\big\|\cdot\big\|\bm{E}\big\|\cdot\big\|\bm{U}^{\star}\big\| \notag\\
 & \quad \overset{(\mathrm{i})}{\leq}\frac{2\big\|\bm{M}\bm{U}\big\|_{2,\infty}\big\|\bm{E}\big\|}{\lambda_{r}^{\star}}\overset{(\mathrm{ii})}{\leq}\frac{4\big\|\bm{M}\bm{U}\bm{H}\big\|_{2,\infty}\big\|\bm{E}\big\|}{\lambda_{r}^{\star}} \notag\\
	& \quad \overset{(\mathrm{iii})}{\leq} \frac{4\big\|\bm{M}\big(\bm{U}\bm{H}-\bm{U}^{\star}\big)\big\|_{2,\infty}\big\|\bm{E}\big\|}{ \lambda_{r}^{\star}  }
	+ \frac{4\big\|\bm{M}\bm{U}^{\star}\big\|_{2,\infty}\big\|\bm{E}\big\|}{\lambda_{r}^{\star}} \notag\\
	& \quad \overset{(\mathrm{iv})}{\leq} \big\|\bm{M}\big(\bm{U}\bm{H}-\bm{U}^{\star}\big)\big\|_{2,\infty}
	+ \frac{4\big\|\bm{M}\bm{U}^{\star}\big\|_{2,\infty}\big\|\bm{E}\big\|}{\lambda_{r}^{\star}} .
	\label{eq:M-UH-U-diff-UB5678}
\end{align}
Here, (i) follows from the facts $\|\bm{U}\|=\|\bm{U}^{\star}\|=1$ and $|\lambda_r| \geq \lambda_r^{\star} - c_2\sigma\sqrt{n} \geq  \lambda_r^{\star}/2$ (see Lemma~\ref{lem:preliminary-result-general-iid}),
 (ii) holds due to \eqref{eq:A-2inf-H-Hl-general},
(iii) invokes the triangle inequality,
whereas (iv) holds true provided that $4\|\bm{E}\|\leq \lambda^{\star}_r$.
Combine (\ref{eq:UH-MULambda-UB22}) and \eqref{eq:M-UH-U-diff-UB5678} to reach
\begin{align*}
 & \big\|\bm{U}\bm{H}\bm{\Lambda}^{\star}-\bm{M}\bm{U}^{\star}\big\|_{2,\infty}
  \leq 2\big\|\bm{M}\big(\bm{U}\bm{H}-\bm{U}^{\star}\big)\big\|_{2,\infty}
	+ \frac{4\big\|\bm{M}\bm{U}^{\star}\big\|_{2,\infty}\big\|\bm{E}\big\|}{\lambda_{r}^{\star}},
\end{align*}
which together with the fact that $\|(\bm{\Lambda}^\star)^{-1}\|=1/\lambda^{\star}_{r}$ and the elementary relation $\|\bm{A}\|_{2,\infty}=\|\bm{A}\bm{\Lambda}^\star(\bm{\Lambda}^\star)^{-1}\|_{2,\infty} \leq \|\bm{A}\bm{\Lambda}^\star\|_{2,\infty}\|(\bm{\Lambda}^\star)^{-1}\|$ yields the desired claim \eqref{eq:UH-MUstarLambdastar-diff-2inf-general}.

When it comes to the second claim \eqref{eq:UH-Ustar-diff-2inf-general}, combining \eqref{eq:UH-MUstarLambdastar-diff-2inf-general} with the triangle inequality
\begin{align*}
\big\|\bm{U}\bm{H}-\bm{U}^{\star}\big\|_{2,\infty} & = \big\|\bm{U}\bm{H}-\bm{M}^{\star}\bm{U}^{\star}\big(\bm{\Lambda}^{\star}\big)^{-1}\big\|_{2,\infty}\\
 & \leq\big\|\bm{U}\bm{H}-\bm{M}\bm{U}^{\star}\big(\bm{\Lambda}^{\star}\big)^{-1}\big\|_{2,\infty}+\big\|\bm{E}\bm{U}^{\star}\big(\bm{\Lambda}^{\star}\big)^{-1}\big\|_{2,\infty} \\
 & \leq\big\|\bm{U}\bm{H}-\bm{M}\bm{U}^{\star}\big(\bm{\Lambda}^{\star}\big)^{-1}\big\|_{2,\infty}
	+\big\|\bm{E}\bm{U}^{\star}\big\|_{2,\infty} \,/\, \lambda_r^{\star}
\end{align*}
immediately establishes the advertised bound. Here the last relation again arises from the elementary inequality $\|\bm{A}\bm{B}\|_{2,\infty}\leq \|\bm{A}\|_{2,\infty}\|\bm{B}\|$.

\paragraph{Proof of Lemma~\ref{lem:UH-sequence-bounds-E123}.}
First of all, taking Condition (\ref{eq:E1-UB-E11-rho1}) collectively with (\ref{eq:UH-Ustar-diff-2inf-general})
and rearranging terms yield (\ref{eq:UH-Ustar-E11-E2-E3}):
\begin{equation}
\big\|\bm{U}\bm{H}-\bm{U}^{\star}\big\|_{2,\infty}\leq\frac{1}{1-\rho_{1}}\big(\mathcal{E}_{1,1}+\mathcal{E}_{2}+\mathcal{E}_{3}\big)\leq2\big(\mathcal{E}_{1,1}+\mathcal{E}_{2}+\mathcal{E}_{3}\big),\label{eq:UH-Ustar-E11-E2-E3-1}
\end{equation}
where the last inequality follows from $\rho\leq 1/2$. Substituting (\ref{eq:E1-UB-E11-rho1}) and (\ref{eq:UH-Ustar-E11-E2-E3})
into (\ref{eq:UH-MUstarLambdastar-diff-2inf-general}) then gives \eqref{eq:UH-MUstar-Lambdastar-E11-E2-E3}:
\begin{align}
\big\|\bm{U}\bm{H}-\bm{M}\bm{U}^{\star}\big(\bm{\Lambda}^{\star}\big)^{-1}\big\|_{2,\infty} & \leq\mathcal{E}_{1,1}+2\rho_{1}\big(\mathcal{E}_{1,1}+\mathcal{E}_{2}+\mathcal{E}_{3}\big)+\mathcal{E}_{2}\nonumber \\
 & \leq2\mathcal{E}_{1,1}+2\mathcal{E}_{2}+2\rho_{1}\mathcal{E}_{3},\label{eq:UH-MUstar-Lambdastar-E11-E2-E3-1}
\end{align}
where once again we use the assumption that $\rho\leq 1/2$. 
In addition, the following observation connects $\bm{U}\bm{H}$ with
$\bm{U}\mathsf{sgn}(\bm{H})$:
\begin{align}
 & \big\|\bm{U}\bm{H}-\bm{U}\mathsf{sgn}(\bm{H})\big\|_{2,\infty}
   \leq \|\bm{U}\|_{2,\infty}\big\|\bm{H}-\mathsf{sgn}(\bm{H})\big\|\nonumber \\
 & \qquad \overset{(\mathrm{i})}{\leq}\frac{2c_{2}^{2}\sigma^{2}n}{(\lambda_{r}^{\star})^{2}}\big\|\bm{U}\big\|_{2,\infty}\overset{(\mathrm{ii})}{\leq}\frac{4c_{2}^{2}\sigma^{2}n}{(\lambda_{r}^{\star})^{2}}\big\|\bm{U}\bm{H}\big\|_{2,\infty}\nonumber \\
 & \qquad \overset{(\mathrm{iii})}{\leq}\frac{4c_{2}^{2}\sigma^{2}n}{(\lambda_{r}^{\star})^{2}}\big\|\bm{U}\bm{H}-\bm{U}^{\star}\big\|_{2,\infty}+\frac{4c_{2}^{2}\sigma^{2}n}{(\lambda_{r}^{\star})^{2}}\sqrt{\frac{\mu r}{n}},
 \label{eq:UH-UsgnH-bound-iterate}
\end{align}
where (i) results from (\ref{eq:H-sgnH-dist-general}), (ii) relies on (\ref{eq:A-2inf-H-general}), and (iii) comes from the triangle
inequality $\|\bm{U}\bm{H}\|_{2,\infty}\leq \|\bm{U}\bm{H}-\bm{U}^{\star}\|_{2,\infty} + \|\bm{U}^{\star} \|_{2,\infty}$
and the definition (\ref{eq:defn-Ustar-incoherence}).

The preceding bound
 together with the triangle inequality gives \eqref{eq:UsgnH-Ustar-Lambdastar-E11-E2-E3}:
\begin{align*}
\big\|\bm{U}\mathsf{sgn}(\bm{H})-\bm{U}^{\star}\big\|_{2,\infty} & \leq\big\|\bm{U}\bm{H}-\bm{U}^{\star}\big\|_{2,\infty}+\big\|\bm{U}\bm{H}-\bm{U}\mathsf{sgn}(\bm{H})\big\|_{2,\infty}\\
 & \leq\Big(1+\frac{4c_{2}^{2}\sigma^{2}n}{(\lambda_{r}^{\star})^{2}}\Big)\big\|\bm{U}\bm{H}-\bm{U}^{\star}\big\|_{2,\infty}+\frac{4c_{2}^{2}\sigma^{2}n}{(\lambda_{r}^{\star})^{2}}\sqrt{\frac{\mu r}{n}}\\
 & \leq4\big(\mathcal{E}_{1,1}+\mathcal{E}_{2}+\mathcal{E}_{3}\big)+\frac{4c_{2}^{2}\sigma^{2}\sqrt{\mu rn}}{(\lambda_{r}^{\star})^{2}},
\end{align*}
where the last inequality holds as long as $4c_{2}^{2}\sigma^{2}n\leq(\lambda_{r}^{\star})^{2}$. Additionally, the inequalities \eqref{eq:UH-MUstar-Lambdastar-E11-E2-E3-1} and \eqref{eq:UH-UsgnH-bound-iterate} further allow us to deduce \eqref{eq:UsgnH-MUstar-Lambdastar-E11-E2-E3}:
\begin{align*}
 & \big\|\bm{U}\mathsf{sgn}(\bm{H})-\bm{M}\bm{U}^{\star}\big(\bm{\Lambda}^{\star}\big)^{-1}\big\|_{2,\infty} \\
 & \quad \leq\big\|\bm{U}\bm{H}-\bm{M}\bm{U}^{\star}\big(\bm{\Lambda}^{\star}\big)^{-1}\big\|_{2,\infty}+\big\|\bm{U}\bm{H}-\bm{U}\mathsf{sgn}(\bm{H})\big\|_{2,\infty}\\
 & \quad\leq2\mathcal{E}_{1,1}+2\mathcal{E}_{2}+2\rho_{1}\mathcal{E}_{3}+\frac{8c_{2}^{2}\sigma^{2}n}{(\lambda_{r}^{\star})^{2}}\big(\mathcal{E}_{1,1}+\mathcal{E}_{2}+\mathcal{E}_{3}\big)+\frac{4c_{2}^{2}\sigma^{2}\sqrt{\mu rn}}{(\lambda_{r}^{\star})^{2}}\\
 & \quad\leq3\mathcal{E}_{1,1}+3\mathcal{E}_{2}+\Big(2\rho_{1}+\frac{8c_{2}^{2}\sigma^{2}n}{(\lambda_{r}^{\star})^{2}}\Big)\mathcal{E}_{3}+\frac{4c_{2}^{2}\sigma^{2}\sqrt{\mu rn}}{(\lambda_{r}^{\star})^{2}} .
\end{align*}
Here, the penultimate line combines (\ref{eq:UH-Ustar-E11-E2-E3}), (\ref{eq:UH-MUstar-Lambdastar-E11-E2-E3})
and (\ref{eq:UH-UsgnH-bound-iterate}), while the last inequality relies on the
assumption $8 c_{2}^{2}\sigma^{2}n\leq(\lambda_{r}^{\star})^{2}$.

\section{Appendix B: Proof of Corollary~\ref{cor:entrywise-error-general}}
\label{sec:proof-cor-entrywise-error}


Moving on to the proof of Corollary \ref{cor:entrywise-error-general},
we start by pointing out the main issue that deserves particular attention.
Roughly speaking, we have learned from Theorem \ref{thm:UsgnH-Ustar-MUstar-general}
(and its analysis) that $\bm{U}^{\star}\approx\bm{U}\bm{H}$ under
mild conditions, which naturally suggests that
\[
\bm{M}^{\star}=\bm{U}^{\star}\bm{\Lambda}^{\star}\bm{U}^{\star\top}\approx\bm{U}\bm{H}\bm{\Lambda}^{\star}\bm{H}^{\top}\bm{U}^{\top}.
\]
As a result, in order to enable $\bm{M}^{\star}\approx\bm{U}\bm{\Lambda}\bm{U}^{\top}$,
one would need to ensure
$\bm{H}\bm{\Lambda}^{\star}\bm{H}^{\top}\approx\bm{\Lambda}$.

The above argument, while highly informal, reveals the core idea underlying
the proof. Our proof is based upon the following observation
\begin{align}
	& \big\|\bm{U}\bm{\Lambda}\bm{U}^{\top}-\bm{U}^{\star}\bm{\Lambda}^{\star}\bm{U}^{\star\top}\big\|_{\infty}
  \leq\underset{\eqqcolon\mathcal{\gamma}_{1}}{\underbrace{\big\|\bm{U}\big(\bm{\Lambda}-\bm{H}\bm{\Lambda}^{\star}\bm{H}^{\top}\big)\bm{U}^{\top}\big\|_{\infty}}}\nonumber \\
 & \quad\quad\quad\quad+
	\underset{\eqqcolon\gamma_{2}}{\underbrace{\big\|\bm{U}\bm{H}\bm{\Lambda}^{\star}\bm{H}^{\top}\bm{U}^{\top}-\bm{U}^{\star}\bm{\Lambda}^{\star}\bm{U}^{\star\top}\big\|_{\infty}}},
\label{eq:ULambdaU-YLambdaY-decompose}
\end{align}
which leaves us with two terms to cope with.


\subsubsection{Step 1: bounding $\gamma_1$}

Regarding $\gamma_1$ defined in \eqref{eq:ULambdaU-YLambdaY-decompose}, it is seen that
\begin{align}
	\gamma_{1} & \leq\big\|\bm{U}\big\|_{2,\infty}^{2}\big\|\bm{\Lambda}-\bm{H}\bm{\Lambda}^{\star}\bm{H}^{\top}\big\|\leq\frac{4\mu r}{n}\big\|\bm{\Lambda}-\bm{H}\bm{\Lambda}^{\star}\bm{H}^{\top}\big\|.
	\label{eq:gamma1-UB-12345}
\end{align}
In the last relation, we have exploited the fact that
\begin{align}
	\big\|\bm{U}\big\|_{2,\infty}&=\big\|\bm{U}\mathsf{sgn}(\bm{H})\big\|_{2,\infty}
\leq\big\|\bm{U}^{\star}\big\|_{2,\infty}+\big\|\bm{U}\mathsf{sgn}(\bm{H})-\bm{U}^{\star}\big\|_{2,\infty} \notag\\
	&\leq 2\sqrt{\mu r/n},
	\label{eq:incoherence-U-estimate}
\end{align}
where the last inequality relies on (\ref{eq:UsgnH-Ustar-bound-theorem-general})
and the assumption $\sigma\sqrt{n}(\kappa+\sqrt{\log n})\leq c_{1}\lambda_{r}^{\star}$
for some sufficiently small constant $c_{1}>0$.

It then boils down to bounding $\|\bm{\Lambda}-\bm{H}\bm{\Lambda}^{\star}\bm{H}^{\top}\|$.
Towards this, it is  seen from the identity (\ref{eq:HLambda-Lambda-H-connection})
and the definition $\bm{H}=\bm{U}^{\top}\bm{U}^{\star}$ that
\begin{align*}
\bm{H}\bm{\Lambda}^{\star}\bm{H}^{\top}-\bm{\Lambda} & =\bm{\Lambda}\bm{H}\bm{H}^{\top}-\bm{U}^{\top}\bm{E}\bm{U}^{\star}\bm{H}^{\top}-\bm{\Lambda},
\end{align*}
which together with the triangle inequality reveals that
\begin{align}
\big\|\bm{H}\bm{\Lambda}^{\star}\bm{H}^{\top}-\bm{\Lambda}\big\| & \leq\big\|\bm{\Lambda}\big(\bm{H}\bm{H}^{\top}-\bm{I}\big)\big\|+\big\|\bm{U}^{\top}\bm{E}\bm{U}^{\star}\bm{H}^{\top}\big\|.
	\label{eq:HLambda-H-ambda-bound-2terms}
\end{align}
The rest of this step is devoted to controlling the above two terms.

With regards to the first term on the right-hand side of (\ref{eq:HLambda-H-ambda-bound-2terms}), we make the observation that
\[
\|\bm{H}\bm{H}^{\top}-\bm{I}\|=\|\cos^{2}\bm{\Theta}-\bm{I}\|=\|\sin^{2}\bm{\Theta}\|\leq\frac{2c_{2}^2\sigma^{2}n}{(\lambda_{r}^{\star})^{2}},
\]
where, as usual, $\bm{\Theta}$ denotes a diagonal matrix composed
of the principal angles between $\bm{U}$ and $\bm{U}^{\star}$, and the last inequality results from Lemma~\ref{lem:preliminary-result-general-iid}. This
combined with Weyl's inequality and Lemma \ref{lem:preliminary-result-general-iid} leads to
\begin{align}
\|\bm{\Lambda}(\bm{H}\bm{H}^{\top}-\bm{I})\| & \leq\|\bm{\Lambda}\|\,\|\bm{H}\bm{H}^{\top}-\bm{I}\|\leq(\|\bm{\Lambda}^{\star}\|+\|\bm{E}\|)\,\|\bm{H}\bm{H}^{\top}-\bm{I}\|\nonumber \\
 & \leq\big(\big|\lambda_{1}^{\star}\big|+c_{2}\sigma\sqrt{n}\big)\frac{2c_{2}^2\sigma^{2}n}{(\lambda_{r}^{\star})^{2}}\leq\frac{4c_{2}^2\kappa\sigma^{2}n}{\lambda_{r}^{\star}},
\label{eq:Lambda-HHT-I-bound}
\end{align}
provided that $c_{2}\sigma\sqrt{n}\leq\lambda_{r}^{\star}\leq|\lambda_{1}^{\star}|$.

When it comes to the second term on the right-hand side of (\ref{eq:HLambda-H-ambda-bound-2terms}),
let us introduce an orthonormal matrix $\bm{R}\coloneqq\arg\min_{\bm{Q}\in\mathcal{O}^{r\times r}}\|\bm{U}\bm{Q}-\bm{U}^{\star}\|$, which helps us derive
\begin{align}
\big\|\bm{U}^{\top}\bm{E}\bm{U}^{\star}\bm{H}^{\top}\big\| & \leq\big\|\bm{U}^{\top}\bm{E}\bm{U}^{\star}\big\|=\big\|\bm{R}^{\top}\bm{U}^{\top}\bm{E}\bm{U}^{\star}\big\| \notag\\
 & \leq\big\|\bm{U}^{\star\top}\bm{E}\bm{U}^{\star}\big\|+\big\|\big(\bm{U}\bm{R}-\bm{U}^{\star}\big)^{\top}\bm{E}\bm{U}^{\star}\big\| \notag\\
	& \leq\big\|\bm{U}^{\star\top}\bm{E}\bm{U}^{\star}\big\|+\|\bm{E}\|\,\mathsf{dist}\big(\bm{U},\bm{U}^{\star}\big). \label{eq:Utop-E-U-H-bound12}
\end{align}
Here, the first inequality holds since $\|\bm{H}\|=\|\bm{U}^{\top}\bm{U}^{\star}\|\leq1$,
while the last line follows since $\|\bm{U}^{\star}\|=1$ and $\|\bm{U}\bm{R}-\bm{U}^{\star}\|=\mathsf{dist}(\bm{U},\bm{U}^{\star})$.
In addition, we claim that with probability at least $1-2n^{-7}$,
\begin{equation}
\|\bm{U}^{\star\top}\bm{E}\bm{U}^{\star}\|\leq(6+12 c_{\mathsf{b}})\sigma\sqrt{r\log n}.
	\label{eq:claim-Utop-E-Ustar-spectral-norm}
\end{equation}
If this claim were valid, then one could continue the derivation (\ref{eq:Utop-E-U-H-bound12}) and invoke Lemma \ref{lem:preliminary-result-general-iid} to demonstrate that
\begin{equation}
\big\|\bm{U}^{\top}\bm{E}\bm{U}^{\star}\bm{H}^{\top}\big\|\leq(6+12 c_{\mathsf{b}})\sigma\sqrt{r\log n}+\frac{2c_{2}^{2}\sigma^{2}n}{\lambda_{r}^{\star}}.
	\label{eq:UB-UEUHtop-UB1}
\end{equation}

To finish up, substituting \eqref{eq:Lambda-HHT-I-bound} and \eqref{eq:UB-UEUHtop-UB1} into \eqref{eq:HLambda-H-ambda-bound-2terms} yields
\begin{align}
	\big\|\bm{H}\bm{\Lambda}^{\star}\bm{H}^{\top}-\bm{\Lambda}\big\|\leq\frac{6c_{2}^2\kappa\sigma^{2}n}{\lambda_{r}^{\star}}+ (6+12c_{\mathsf{b}})\sigma\sqrt{r\log n} ,
	\label{eq:H-Lambda-star-H-Lambda-dist}
\end{align}
which combined with \eqref{eq:gamma1-UB-12345} leads to
\begin{equation}
\gamma_{1}\lesssim \frac{\sigma^{2} \kappa\mu r}{\lambda_{r}^{\star}}+\frac{\sigma\mu\sqrt{r^{3}\log n}}{n}.
\label{eq:gamma1-UB-2346}
\end{equation}

\subsubsection{Step 2: bounding $\gamma_2$}


Before proceeding, we recall from (\ref{eq:UH-MUstar-Lambda-star-2inf}) that
\begin{align}
	\big\|\bm{U}\bm{H}-\bm{U}^{\star}\big\|_{2,\infty}
	\lesssim  \frac{\sigma\kappa\sqrt{\mu r}+\sigma\sqrt{r\log n}}{\lambda_{r}^{\star}}
	\lesssim  \frac{\sigma\kappa\sqrt{\mu r \log n}}{\lambda_{r}^{\star}}
	\label{eq:UH-MUstar-Lambda-star-2inf-1}
\end{align}
%

Recognizing the basic decomposition
\begin{align*}
	\gamma_2 = \big\| \bm{U}\bm{H}\bm{\Lambda}^{\star}\bm{H}^{\top}\bm{U}^{\top}-\bm{U}^{\star}\bm{\Lambda}^{\star}\bm{U}^{\star\top} \big\|_{\infty}
	= \big\| \bm{A}_1 + \bm{A}_1^{\top} + \bm{A}_2 \big\|_{\infty}
\end{align*}
with $\bm{A}_{1}\coloneqq(\bm{U}\bm{H}-\bm{U}^{\star})\bm{\Lambda}^{\star}\bm{U}^{\star\top}$
and $\bm{A}_{2}\coloneqq(\bm{U}\bm{H}-\bm{U}^{\star})\bm{\Lambda}^{\star}(\bm{U}\bm{H}-\bm{U}^{\star})^{\top}$,
we can control each of these terms separately.
Firstly, observe that
\begin{align*}
\big\|\bm{A}_{1}\big\|_{\infty} & \leq\|\bm{U}\bm{H}-\bm{U}^{\star}\|_{2,\infty}\big\|\bm{U}^{\star}\big\|_{2,\infty}\big\|\bm{\Lambda}^{\star}\big\|\\
 & \lesssim\big|\lambda_{1}^{\star}\big|\sqrt{\frac{\mu r}{n}}\cdot\frac{\sigma\kappa\sqrt{\mu r\log n}}{\lambda_{r}^{\star}}\asymp\sigma\kappa^{2}\mu r\sqrt{\frac{\log n}{n}},
\end{align*}
where we have made use of (\ref{eq:UH-MUstar-Lambda-star-2inf-1}). Similarly,
\begin{align*}
	\big\|\bm{A}_{2}\big\|_{\infty} & \leq\|\bm{U}\bm{H}-\bm{U}^{\star}\|_{2,\infty}^{2}\big\|\bm{\Lambda}^{\star}\big\|
	\lesssim\big|\lambda_{1}^{\star}\big|\frac{\sigma^{2}\kappa^{2}\mu r\log n}{\big(\lambda_{r}^{\star}\big)^{2}}\\
 	& \asymp\frac{\sigma^{2}\kappa^{3}\mu r\log n}{\lambda_{r}^{\star}}\lesssim\sigma\kappa^{2}\mu r\sqrt{\frac{\log n}{n}},
\end{align*}
provided that $\sigma\kappa\sqrt{n\log n}\lesssim\lambda_{r}^{\star}$.
Consequently,
\begin{align*}
\gamma_2 & \leq2\big\|\bm{A}_{1}\big\|_{\infty}+\big\|\bm{A}_{2}\big\|_{\infty}\lesssim\sigma\kappa^{2}\mu r\sqrt{\frac{\log n}{n}}.
\end{align*}

\subsubsection{Step 3: putting all this together}


Combining the above bounds, we demonstrate that
\begin{align}
	& \big\|\bm{U}\bm{\Lambda}\bm{U}^{\top}-\bm{U}^{\star}\bm{\Lambda}^{\star}\bm{U}^{\star\top} \big\|_{\infty}
	\leq\gamma_{1}+\gamma_{2} \notag\\
& \lesssim\sigma\kappa^{2}\mu r\sqrt{\frac{\log n}{n}}+\frac{\kappa\mu r\sigma^{2}}{\lambda_{r}^{\star}}+\frac{\sigma\mu r\sqrt{r\log n}}{n}
  \asymp\sigma\kappa^{2}\mu r\sqrt{\frac{\log n}{n}},
\end{align}
provided that $\sigma\sqrt{n} \lesssim \lambda_r^{\star}$. This concludes the proof of Corollary~\ref{cor:entrywise-error-general}, as long as the claim \eqref{eq:claim-Utop-E-Ustar-spectral-norm} can be validated.

\subsubsection{Proof of the claim \eqref{eq:claim-Utop-E-Ustar-spectral-norm}}


Let us start by expressing $\bm{U}^{\star\top}\bm{E}\bm{U}^{\star}$
as a sum of independent random matrices as follows
\[
\bm{U}^{\star\top}\bm{E}\bm{U}^{\star}=\sum_{i,j:\,i\geq j}E_{i,j}\Big\{\big(\bm{U}_{i,\cdot}^{\star}\big)^{\top}\bm{U}_{j,\cdot}^{\star}+\big(\bm{U}_{j,\cdot}^{\star}\big)^{\top}\bm{U}_{i,\cdot}^{\star}\Big\}\eqqcolon\sum_{i,j:\,i\geq j}\bm{Z}_{i,j}.
\]
From the elementary inequality $(\bm{A}+\bm{A}^{\top})^{2}\preceq2\bm{A}\bm{A}^{\top}+2\bm{A}^{\top}\bm{A}$, we have
\begin{align*}
\mathbb{E}\big[\bm{Z}_{i,j}^{2}\big] & \preceq\sigma^{2}\Big\{\big(\bm{U}_{i,\cdot}^{\star}\big)^{\top}\bm{U}_{j,\cdot}^{\star}+\big(\bm{U}_{j,\cdot}^{\star}\big)^{\top}\bm{U}_{i,\cdot}^{\star}\Big\}\Big\{\big(\bm{U}_{i,\cdot}^{\star}\big)^{\top}\bm{U}_{j,\cdot}^{\star}+\big(\bm{U}_{j,\cdot}^{\star}\big)^{\top}\bm{U}_{i,\cdot}^{\star}\Big\}^{\top}\\
 & \preceq2\sigma^{2}\Big\{\big\|\bm{U}_{j,\cdot}^{\star}\big\|_{2}^{2}\big(\bm{U}_{i,\cdot}^{\star}\big)^{\top}\bm{U}_{i,\cdot}^{\star}+\big\|\bm{U}_{i,\cdot}^{\star}\big\|_{2}^{2}\big(\bm{U}_{j,\cdot}^{\star}\big)^{\top}\bm{U}_{j,\cdot}^{\star}\Big\},
\end{align*}
thus indicating that
\begin{align*}
v & =\Big\|\sum_{i,j:i\geq j}\mathbb{E}\big[\bm{Z}_{i,j}^{2}\big]\Big\|\leq2\sigma^{2}\Big\|\sum_{i=1}^{n}\sum_{j=1}^{n}\big\|\bm{U}_{j,\cdot}^{\star}\big\|_{2}^{2}\big(\bm{U}_{i,\cdot}^{\star}\big)^{\top}\bm{U}_{i,\cdot}^{\star}\Big\|\\
 & =2\sigma^{2}\|\bm{U}^{\star}\|_{\mathrm{F}}^{2}\Big\|\sum_{i}\big(\bm{U}_{i,\cdot}^{\star}\big)^{\top}\bm{U}_{i,\cdot}^{\star}\Big\|
	=2\sigma^{2}r \big\|\bm{U}^{\star\top}\bm{U}^{\star}\big\| =2\sigma^{2}r.
\end{align*}
In addition, each matrix $\bm{Z}_{i,j}$ can be bounded in size by
\[
\max_{i,j}\|\bm{Z}_{i,j}\|\leq2\max_{i,j}\big|E_{i,j}\big|\max_{i,j}\big\|\bm{U}_{i,\cdot}^{\star}\big\|_{2}\big\|\bm{U}_{j,\cdot}^{\star}\big\|_{2}\leq2B\frac{\mu r}{n}\eqqcolon L,
\]
where we have used the definition of the incoherence parameter $\mu$.
Apply the matrix Bernstein inequality (see Corollary \ref{thm:matrix-Bernstein-friendly}) to reach
\begin{align*}
\big\|\bm{U}^{\star\top}\bm{E}\bm{U}^{\star}\big\| & \leq4\sqrt{v\log n}+6L\log n\leq\sigma\sqrt{32r\log n}+ 12B\frac{\mu r\log n}{n}\\
 & \leq(6+ 12c_{\mathsf{b}})\sigma\sqrt{r\log n}
\end{align*}
with probability exceeding $1-2n^{-7}.$ Here, the last line holds since
\[
B\frac{\mu r\log n}{n}\leq c_{\mathsf{b}}\sigma\sqrt{\frac{n}{\mu\log n}}\cdot\frac{\mu r\log n}{n}=c_{\mathsf{b}}\sigma\sqrt{r\log n}\sqrt{\frac{\mu r}{n}}\leq c_{\mathsf{b}}\sigma\sqrt{r\log n},
\]
which relies on the assumption (\ref{eq:assumption-B-sigma}) and the basic fact $\mu\leq n/r$.

\section{Appendix C: Proof of Theorem~\ref{thm:2-inf-asymm}}
\label{sec:proof-inf-rect}

\paragraph{A symmetrization trick. }

As alluded to previously, the proof is built  on a ``symmetric dilation'' trick that helps symmetrize a general matrix. We start with the following definition.  

\begin{definition}[Symmetric dilation]
\label{defn:sym-dilation}
For any matrix $\bm{A}\in\mathbb{R}^{n_{1}\times n_{2}}$, its symmetric dilation $\mathcal{S}(\bm{A})\in\mathbb{R}^{(n_{1}+n_{2})\times(n_{1}+n_{2})}$ is defined to be 
\[
\mathcal{S}(\bm{A})=\left[\begin{array}{cc}
\bm{0} & \bm{A}\\
\bm{A}^{\top} & \bm{0}
\end{array}\right].
\]
 \end{definition}
Apart from the symmetry of $\mathcal{S}(\bm{A})$,
which is immediate from its definition, the main benefit of the symmetric
dilation lies in the correspondence between the eigendecomposition
of $\mathcal{S}(\bm{A})$ and the singular value decomposition of
$\bm{A}$. More specifically, let $\bm{U}\bm{\Sigma}\bm{V}^{\top}$
be the SVD of $\bm{A}$. Then one has the following eigendecomposition
for $\mathcal{S}(\bm{A})$: 
\begin{equation}
\mathcal{S}(\bm{A})=\frac{1}{\sqrt{2}}\left[\begin{array}{cc}
\bm{U} & \bm{U}\\
\bm{V} & -\bm{V}
\end{array}\right]\cdot\left[\begin{array}{cc}
\bm{\Sigma} & \bm{0}\\
\bm{0} & -\bm{\Sigma}
\end{array}\right]\cdot\frac{1}{\sqrt{2}}\left[\begin{array}{cc}
\bm{U} & \bm{U}\\
\bm{V} & -\bm{V}
\end{array}\right]^{\top}. \label{eq:dilation-eigen}
\end{equation}
Here, the columns of $\frac{1}{\sqrt{2}}\left[{\scriptsize \begin{array}{cc}
\bm{U} & \bm{U}\\
\bm{V} & -\bm{V}
\end{array}}\right]$
are orthonormal and represent the eigenvectors of $\mathcal{S}(\bm{A})$, whereas
$\left[{\scriptsize \begin{array}{cc}
\bm{\Sigma} & \bm{0}\\
\bm{0} & -\bm{\Sigma}
\end{array}}\right]$ contains all (non-zero) eigenvalues
of $\mathcal{S}(\bm{A})$. 

Utilizing this ``symmetric dilation'' trick, we can translate the observation
model $\bm{M}=\bm{M}^{\star}+\bm{E}$ into the following equivalent
form
\[
\mathcal{S}(\bm{M})=\mathcal{S}(\bm{M}^{\star})+\mathcal{S}(\bm{E}),
\]
which is in line with the symmetric observation model stated in
(\ref{eq:M-noisy-copy-general}).

\paragraph{Verifying conditions. }

To invoke the general theory in Section \ref{sec:setup-general-theory-independent},
one is required to first examine the spectral properties of $\mathcal{S}(\bm{M}^{\star})$
and $\mathcal{S}(\bm{M})$, as well as the  assumptions on the noise part
$\mathcal{S}(\bm{E})$. 


Recall that $\bm{M}^{\star}=\bm{U}^{\star}\bm{\Sigma}^{\star}\bm{V}^{\star\top}$, which together with the relation (\ref{eq:dilation-eigen}) reveals that: (i) $\mathcal{S}(\bm{M}^{\star})$ has rank $2r$ and condition number $\kappa$; (ii) the nonzero eigenvalues of $\mathcal{S}(\bm{M}^{\star})$ and the corresponding eigenvectors are reflected respectively in the  matrices
\begin{equation}
\overline{\bm{\Lambda}}^{\star}\coloneqq\left[\begin{array}{cc}
\bm{\Sigma}^{\star} & \bm{0}\\
\bm{0} & -\bm{\Sigma}^{\star}
\end{array}\right]	
\quad\text{and}\quad
\overline{\bm{U}}^{\star}\coloneqq\frac{1}{\sqrt{2}}\left[\begin{array}{cc}
\bm{U}^{\star} & \bm{U}^{\star}\\
\bm{V}^{\star} & -\bm{V}^{\star}
\end{array}\right]
	\label{eq:defn-spectral-S-M-star}
\end{equation}
Similarly, given that the SVD of $\bm{M}$ is $\bm{M}=\bm{U}\bm{\Sigma}\bm{V}^{\top}+\bm{U}_{\perp}\bm{\Sigma}_{\perp}\bm{V}_{\perp}^{\top}$, we see that the $2r$-leading eigenvalues of $\mathcal{S}(\bm{M})$ and the corresponding eigenvectors are represented respectively by the matrices
\[
\overline{\bm{\Lambda}}\coloneqq\left[\begin{array}{cc}
\bm{\Sigma} & \bm{0}\\
\bm{0} & -\bm{\Sigma}
\end{array}\right]
\quad\text{and}\quad
\overline{\bm{U}}\coloneqq\frac{1}{\sqrt{2}}\left[\begin{array}{cc}
\bm{U} & \bm{U}\\
\bm{V} & -\bm{V}
\end{array}\right]. 
\]
 Further, the incoherence parameter $\overline{\mu}$
of $\mathcal{S}(\bm{M}^{\star})$ (cf.~(\ref{eq:defn-Ustar-incoherence}))
obeys
\begin{align}
\overline{\mu} & \coloneqq\frac{(n_{1}+n_{2})}{2r}\|\overline{\bm{U}}^{\star}\|_{2,\infty}^{2}
	\overset{(\text{i})}{=}  \frac{(n_{1}+n_{2})}{2r}\max\big\{\|\bm{U}^{\star}\|_{2,\infty}^{2},\|\bm{V}^{\star}\|_{2,\infty}^{2}\big\} \notag\\
 & \overset{(\text{ii})}{\leq}\frac{(n_{1}+n_{2})}{\min\{n_{1},n_{2}\}}\frac{\mu}{2}\overset{(\text{iii})}{=}\frac{(n_{1}+n_{2})}{2n_{1}}\mu.
	\label{eq:mu-bar-general}
\end{align}
Here, the relation (i) is based on the definition (\ref{eq:defn-spectral-S-M-star}),
the  inequality (ii) follows from the incoherence of $\bm{M}^{\star}$
(cf.~Definition~\ref{assump:mc-incoherence}), while the last one (iii)
holds under the assumption $n_{1}\leq n_{2}$. 

When it comes to the ``symmetrized'' noise part, it is straightforward to verify
that under Assumption~\ref{assump:noise-general-rect}, 
the matrix
$\mathcal{S}(\bm{E})$ satisfies Assumption~\ref{assumption-noise-general}
with precisely the quantities $\sigma, B$ and $c_{\mathsf{b}}$.

\paragraph{$\ell_{2,\infty}$ and $\ell_{\infty}$ guarantees. }

With the above preparations in place, 
%
apply Theorem~\ref{thm:UsgnH-Ustar-MUstar-general} (more specifically (\ref{eq:UH-MUstar-Lambda-star-2inf}))
 to demonstrate that
 \begin{align}
\|\overline{\bm{U}}\,\overline{\bm{U}}^{\top}\overline{\bm{U}}^{\star}-\overline{\bm{U}}^{\star}\|_{2,\infty} & \lesssim\frac{\sigma\kappa\sqrt{\overline{\mu}r}+\sigma\sqrt{r\log n}}{\sigma_{r}^{\star}} \notag\\
 & \lesssim \frac{\sigma\kappa\sqrt{n_{2}\mu r/n_{1}}+\sigma\sqrt{r\log n}}{\sigma_{r}^{\star}}, 
	 \label{eq:mc-inf-first-step}
\end{align}
where the last relation follows from \eqref{eq:mu-bar-general}. Further, note that 
\begin{align*}
\overline{\bm{U}}\,\overline{\bm{U}}^{\top}\overline{\bm{U}}^{\star}-\overline{\bm{U}}^{\star} & =\frac{1}{\sqrt{2}}\left[\begin{array}{cc}
\bm{U}\bm{H}_{\bm{U}}-\bm{U}^{\star}, & \bm{U}\bm{H}_{\bm{U}}-\bm{U}^{\star}\\
\bm{V}\bm{H}_{\bm{V}}-\bm{V}^{\star}, & -(\bm{V}\bm{H}_{\bm{V}}-\bm{V}^{\star})
\end{array}\right],
\end{align*}
where $\bm{H}_{\bm{U}}\coloneqq \bm{U}^{\top} \bm{U}^{\star}$ and $\bm{H}_{\bm{V}}\coloneqq \bm{V}^{\top} \bm{V}^{\star}$. Combining this with the upper bound (\ref{eq:mc-inf-first-step}) then yields
\[
	\max\big\{\|\bm{U}\bm{H}_{\bm{U}}-\bm{U}^{\star}\|_{2,\infty},\|\bm{V}\bm{H}_{\bm{V}}-\bm{V}^{\star}\|_{2,\infty} \big\}
	\lesssim \frac{\sigma\kappa\sqrt{n_{2}\mu r/n_{1}}+\sigma\sqrt{r\log n}}{\sigma_{r}^{\star}}.
\]
We can then repeat the same analysis  as in the proof of Lemma~\ref{lem:UH-sequence-bounds-E123}
to connect $\|\bm{U}\bm{H}_{\bm{U}}-\bm{U}^{\star}\|_{2,\infty}$
(resp.~$\|\bm{V}\bm{H}_{\bm{V}}-\bm{V}^{\star}\|_{2,\infty}$)
with $\|\bm{U}\mathsf{sgn}(\bm{H}_{\bm{U}})-\bm{U}^{\star}\|_{2,\infty}$
(resp.~$\|\bm{V}\mathsf{sgn}(\bm{H}_{\bm{V}})-\bm{V}^{\star}\|_{2,\infty}$),
and obtain the desired bound in (\ref{eq:main-result-2-infty-general-asymm}); the details are omitted here for the sake of conciseness. 

We now proceed to the second claim (\ref{eq:main-result-infty-general-asymm}).
Towards this, invoke Corollary~\ref{cor:entrywise-error-general} and the inequality \eqref{eq:mu-bar-general} to obtain 
\[
	\big\|\overline{\bm{U}}\,\overline{\bm{\Lambda}}\,\overline{\bm{U}}^{\top}-\mathcal{S}(\bm{M}^{\star}) \big\|_{\infty}\lesssim\sigma\kappa^{2}\overline{\mu}r\sqrt{\frac{\log n}{n}}
	\lesssim \sigma\kappa^{2}\mu r\sqrt{\frac{n\log n}{n_{1}^{2}}}. 
\]
This in conjunction with the following observations 
\begin{align*}
	\overline{\bm{U}}\,\overline{\bm{\Lambda}}\,\overline{\bm{U}}^{\top} & =\mathcal{S}\big(\bm{U}\bm{\Sigma}\bm{V}^{\top} \big) \\
	\big\|\mathcal{S} \big(\bm{U}\bm{\Sigma}\bm{V}^{\top}\big)- \mathcal{S}\big(\bm{M}^{\star}\big) \big\|_{\infty} &=  \big\|\bm{U}\bm{\Sigma}\bm{V}^{\top}-\bm{M}^{\star} \big\|_{\infty}
\end{align*}
immediately establishes the second claim.

\section{Appendix D: Proof of Theorem~\ref{thm:matrix-distribution-complete}}
\label{sec:proof-thm:matrix-distribution-complete}

As before (see \eqref{eq:assumption-lambda-r-positive}), we assume throughout the proof that $\lambda_r^{\star}>0$ for the purpose of simplifying notation.

\subsection{Proof outline}

We now outline the proof of our distributional guarantees in Theorem~\ref{thm:matrix-distribution-complete}. 
The first step consists of justifying the heuristic first-order approximation in \eqref{eq:approx-ULambdaU-Mstar-discuss}. This is stated in the lemma below, with the proof deferred to Section~\ref{sec:sub-proof-thm:matrix-distribution}. 
\begin{lemma}[First-order approximation]
\label{thm:matrix-distribution}
Suppose that the assumptions of Theorem~\ref{thm:UsgnH-Ustar-MUstar-general} hold. Then with probability at least $1-O(n^{-5})$, one can write
\begin{subequations}
\label{eq:U-M-decomposition-thm}
\begin{align}
\bm{U}\mathsf{sgn}(\bm{H})-\bm{U}^{\star} & =\underset{\eqqcolon\,\bm{Z}}{\underbrace{\bm{E}\bm{U}^{\star}\big(\bm{\Lambda}^{\star}\big)^{-1}}}+\bm{\Psi},
	\label{eq:U-Ustar-decomposition-Z-Psi-thm}\\
\bm{M}-\bm{M}^{\star} & =\underset{\eqqcolon\,\bm{W}}{\underbrace{\bm{E}\bm{U}^{\star}\bm{U}^{\star\top}+\bm{U}^{\star}\bm{U}^{\star\top}\bm{E}}}+\bm{\Phi}
	\label{eq:M-Mstar-diff-decomposition-thm}
\end{align}
\end{subequations}
for some matrices $\bm{\Psi}$ and $\bm{\Phi}$ obeying
\begin{subequations}
\label{eq:Psi-Phi-bound-thm}
\begin{align}
\|\bm{\Psi}\|_{2,\infty} & \lesssim\frac{\sigma^{2}\kappa\sqrt{\mu rn\log^{2}n}}{(\lambda_{r}^{\star})^{2}}+\frac{\kappa\sigma^{2}\sqrt{\mu rn}}{(\lambda_{r}^{\star})^{2}}+\frac{\sigma}{\lambda_{r}^{\star}}\sqrt{\frac{\mu r^{2}\log n}{n}},\\
\big\|\bm{\Phi}\big\|_{\infty} & \lesssim\frac{\sigma^{2}\mu r\kappa^{2}\log n}{\lambda_{r}^{\star}}+\frac{\sigma\kappa\mu\sqrt{r^{3}\log n}}{n}.
\end{align}
\end{subequations}
\end{lemma}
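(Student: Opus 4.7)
The plan is to establish the two decompositions in a modular fashion, leveraging Theorem~\ref{thm:UsgnH-Ustar-MUstar-general} for the linear piece and the eigenvalue-alignment estimates already developed in the proof of Corollary~\ref{cor:entrywise-error-general} for the residual piece. Throughout, I will interpret the ``$\bm{M}$'' on the left-hand side of \eqref{eq:M-Mstar-diff-decomposition-thm} as $\widehat{\bm{M}}\coloneqq\bm{U}\bm{\Lambda}\bm{U}^{\top}$ (otherwise $\bm{M}-\bm{M}^{\star}=\bm{E}$ trivially and the claim would be false), and make free use of $\bm{H}=\bm{U}^{\top}\bm{U}^{\star}$ together with the $\ell_{2,\infty}$ perturbation bounds collected in Section~\ref{sec:preliminary-facts-Linf-general}.

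For the first identity \eqref{eq:U-Ustar-decomposition-Z-Psi-thm}, I would simply define $\bm{\Psi}\coloneqq\bm{U}\mathsf{sgn}(\bm{H})-\bm{M}\bm{U}^{\star}(\bm{\Lambda}^{\star})^{-1}$ and observe that $\bm{M}\bm{U}^{\star}(\bm{\Lambda}^{\star})^{-1}=\bm{U}^{\star}+\bm{E}\bm{U}^{\star}(\bm{\Lambda}^{\star})^{-1}=\bm{U}^{\star}+\bm{Z}$, since $\bm{M}^{\star}\bm{U}^{\star}=\bm{U}^{\star}\bm{\Lambda}^{\star}$. The bound \eqref{eq:UsgnH-MUstar-bound-theorem-general} in Theorem~\ref{thm:UsgnH-Ustar-MUstar-general} then gives $\|\bm{\Psi}\|_{2,\infty}$ directly; invoking the relation $B\lesssim\sigma\sqrt{n/(\mu\log n)}$ from \eqref{eq:assumption-B-sigma} to convert the $\sigma B$ term into a $\sigma^{2}$ term, and using $\|\bm{E}\bm{U}^{\star}\|_{2,\infty}\lesssim\sigma\sqrt{r\log n}$ (cf.~\eqref{eq:EU-2-inf-bound-general}) where needed, will yield the claimed three-term upper bound after combining.

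For \eqref{eq:M-Mstar-diff-decomposition-thm}, introduce the shorthand $\widetilde{\bm{U}}\coloneqq\bm{U}\mathsf{sgn}(\bm{H})=\bm{U}^{\star}+\bm{Z}+\bm{\Psi}$ and $\widetilde{\bm{\Lambda}}\coloneqq\mathsf{sgn}(\bm{H})^{\top}\bm{\Lambda}\,\mathsf{sgn}(\bm{H})$, so that $\bm{U}\bm{\Lambda}\bm{U}^{\top}=\widetilde{\bm{U}}\widetilde{\bm{\Lambda}}\widetilde{\bm{U}}^{\top}$ by orthonormality of $\mathsf{sgn}(\bm{H})$. I would then decompose
\begin{align*}
\widehat{\bm{M}}-\bm{M}^{\star}
&=\widetilde{\bm{U}}\bm{\Lambda}^{\star}\widetilde{\bm{U}}^{\top}-\bm{U}^{\star}\bm{\Lambda}^{\star}\bm{U}^{\star\top}
+\widetilde{\bm{U}}(\widetilde{\bm{\Lambda}}-\bm{\Lambda}^{\star})\widetilde{\bm{U}}^{\top},
\end{align*}
expand the first difference using $\widetilde{\bm{U}}=\bm{U}^{\star}+\bm{Z}+\bm{\Psi}$, and recognize that the two linear-in-$\bm{Z}$ cross terms combine exactly into $\bm{W}=\bm{Z}\bm{\Lambda}^{\star}\bm{U}^{\star\top}+\bm{U}^{\star}\bm{\Lambda}^{\star}\bm{Z}^{\top}=\bm{E}\bm{U}^{\star}\bm{U}^{\star\top}+\bm{U}^{\star}\bm{U}^{\star\top}\bm{E}$. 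All remaining contributions---$\bm{Z}\bm{\Lambda}^{\star}\bm{Z}^{\top}$, the six cross terms involving $\bm{\Psi}$, and the residual $\widetilde{\bm{U}}(\widetilde{\bm{\Lambda}}-\bm{\Lambda}^{\star})\widetilde{\bm{U}}^{\top}$---are collected into $\bm{\Phi}$. Each such term can be bounded via the elementary inequality $\|\bm{A}\bm{B}\bm{C}^{\top}\|_{\infty}\leq\|\bm{A}\|_{2,\infty}\|\bm{B}\|\,\|\bm{C}\|_{2,\infty}$, using (i) $\|\bm{Z}\|_{2,\infty}\lesssim\sigma\sqrt{r\log n}/\lambda_{r}^{\star}$, (ii) the $\|\bm{\Psi}\|_{2,\infty}$ bound just obtained, (iii) the incoherence-type estimate $\|\widetilde{\bm{U}}\|_{2,\infty}\lesssim\sqrt{\mu r/n}$ from \eqref{eq:incoherence-U-estimate}, and (iv) the estimate $\|\widetilde{\bm{\Lambda}}-\bm{\Lambda}^{\star}\|\lesssim\sigma\sqrt{r\log n}+\kappa\sigma^{2}n/\lambda_{r}^{\star}$, which follows by combining $\|\bm{H}\bm{\Lambda}^{\star}\bm{H}^{\top}-\bm{\Lambda}\|$ from \eqref{eq:H-Lambda-star-H-Lambda-dist} with Lemma~\ref{lem:H-property-summary-general} to pass from $\bm{H}$ to $\mathsf{sgn}(\bm{H})$.

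The main obstacle will be the clean accounting of the residual term $\widetilde{\bm{U}}(\widetilde{\bm{\Lambda}}-\bm{\Lambda}^{\star})\widetilde{\bm{U}}^{\top}$: it contains both the $\sigma\sqrt{r\log n}$ contribution coming from $\bm{U}^{\top}\bm{E}\bm{U}^{\star}$ and the $\kappa\sigma^{2}n/\lambda_{r}^{\star}$ contribution coming from $\bm{\Lambda}(\bm{H}\bm{H}^{\top}-\bm{I})$, and each must be multiplied by the incoherence factor $\mu r/n$ and balanced against the simpler $\bm{Z}\bm{\Lambda}^{\star}\bm{Z}^{\top}$ term (whose $\ell_{\infty}$ norm contributes $\kappa\sigma^{2}r\log n/\lambda_{r}^{\star}$). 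Careful bookkeeping will be needed to verify that the two summands in the advertised bound on $\|\bm{\Phi}\|_{\infty}$ dominate; all other contributions, including those involving $\bm{\Psi}$, should be absorbed by one of these two under the assumption $\sigma\sqrt{n\log n}\leq c_{\sigma}\lambda_{r}^{\star}$.
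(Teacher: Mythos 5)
The identification of $\bm{\Psi}$ with $\bm{U}\mathsf{sgn}(\bm{H})-\bm{M}\bm{U}^{\star}(\bm{\Lambda}^{\star})^{-1}$, the observation that the ``$\bm{M}$'' in \eqref{eq:M-Mstar-diff-decomposition-thm} must be read as $\widehat{\bm{M}}=\bm{U}\bm{\Lambda}\bm{U}^{\top}$, and the algebraic structure of your $\bm{\Phi}$-decomposition are all correct and match the paper's. The problem is in how you bound $\|\bm{\Psi}\|_{2,\infty}$: you cannot obtain the claimed bound by ``directly'' quoting \eqref{eq:UsgnH-MUstar-bound-theorem-general}. That display contains the leading term $\sigma\kappa\sqrt{\mu r}/\lambda_{r}^{\star}$, whereas the lemma's bound has no term of that order; its largest term is $\sigma^{2}\kappa\sqrt{\mu rn}\log n/(\lambda_{r}^{\star})^{2}$, and the ratio of the former to the latter is $\lambda_{r}^{\star}/(\sigma\sqrt{n}\log n)$, which can be arbitrarily large once $\lambda_{r}^{\star}\gg\sigma\sqrt{n}\log n$---a regime fully allowed by the hypothesis $\sigma\sqrt{n\log n}\leq c_{\sigma}\lambda_{r}^{\star}$. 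In short, \eqref{eq:UsgnH-MUstar-bound-theorem-general} is a genuinely weaker upper bound on the same matrix, and neither the $B\lesssim\sigma\sqrt{n/(\mu\log n)}$ substitution nor the $\|\bm{E}\bm{U}^{\star}\|_{2,\infty}$ estimate can remove that extra factor. The gap propagates: since $\bm{\Phi}$ is bounded partly in terms of $\|\bm{\Psi}\|_{2,\infty}\,\|\bm{\Lambda}^{\star}\|\,\|\bm{U}\|_{2,\infty}$, the weaker $\bm{\Psi}$ estimate would give $\|\bm{\Phi}\|_{\infty}\lesssim\sigma\kappa^{2}\mu r/\sqrt{n}$, which again can dominate both summands in the advertised bound on $\|\bm{\Phi}\|_{\infty}$.

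The fix is to open the black box of Theorem~\ref{thm:UsgnH-Ustar-MUstar-general} rather than quote its statement. This is exactly what the paper's Lemma~\ref{lem:bound-Delta1-Delta2} does: it writes $\bm{U}\bm{\Lambda}\mathsf{sgn}(\bm{H})-\bm{M}\bm{U}^{\star}=\bigl(\bm{U}\bm{\Lambda}\mathsf{sgn}(\bm{H})-\bm{M}\bm{U}\bm{H}\bigr)+\bm{M}\bigl(\bm{U}\bm{H}-\bm{U}^{\star}\bigr)$ and exploits the eigendecomposition identity $\bm{M}\bm{U}\bm{H}=\bm{U}\bm{\Lambda}\bm{H}$ so that the first difference reduces to $\bm{U}\bm{\Lambda}\bigl(\mathsf{sgn}(\bm{H})-\bm{H}\bigr)$, controllable via $\|\mathsf{sgn}(\bm{H})-\bm{H}\|\lesssim\sigma^{2}n/(\lambda_{r}^{\star})^{2}$. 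The second difference is bounded by the \emph{internal} recursion estimate $\|\bm{M}(\bm{U}\bm{H}-\bm{U}^{\star})\|_{2,\infty}\asymp\lambda_{r}^{\star}\mathcal{E}_{1}$ from the theorem's proof, which scales as $\sigma^{2}\kappa\sqrt{\mu rn}\log n/\lambda_{r}^{\star}$. This cleaner slicing avoids the route $\bm{U}\bm{H}\bm{\Lambda}^{\star}\to\bm{M}\bm{U}^{\star}$ used in Lemma~\ref{lem:UH-MUstar-diff-decompose}, which generates the cross term $\bm{M}\bm{U}\bm{\Lambda}^{-1}\bm{U}^{\top}\bm{E}\bm{U}^{\star}$ and with it the unwanted $\mathcal{E}_{2}\asymp\sigma\kappa\sqrt{\mu r}/\lambda_{r}^{\star}$ contribution. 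Once you replace the direct appeal to \eqref{eq:UsgnH-MUstar-bound-theorem-general} with this refined $\bm{\Delta}_{1}$--$\bm{\Delta}_{2}$ decomposition, the rest of your plan (in particular your bookkeeping for $\bm{\Phi}$, which is identical to the paper's $\bm{\Psi}\bm{\Lambda}^{\star}(\bm{U}\mathsf{sgn}(\bm{H}))^{\top}+\bm{U}^{\star}\bm{\Lambda}^{\star}\bm{\Psi}^{\top}+\bm{Z}\bm{\Lambda}^{\star}(\bm{U}\mathsf{sgn}(\bm{H})-\bm{U}^{\star})^{\top}-\bm{U}\bm{\Delta}_{2}\bm{U}^{\top}$ once one unwinds the $\widetilde{\bm{\Lambda}}$ notation) goes through.
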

\begin{remark}
	In addition to quantifying the goodness of the approximation \eqref{eq:M-Mstar-diff-decomposition-thm}, Lemma~\ref{thm:matrix-distribution} also delivers a more refined characterization for the first-order approximation $\bm{U}\mathsf{sgn}(\bm{H})-\bm{U}^{\star}\approx \bm{E}\bm{U}^{\star}\big(\bm{\Lambda}^{\star}\big)^{-1}$ in comparison to Theorem~\ref{thm:UsgnH-Ustar-MUstar-general}.  As it turns out, this result \eqref{eq:U-Ustar-decomposition-Z-Psi-thm} also assists in performing statistical inference on the low-rank factors $\bm{U}^{\star}$. The interested reader is referred  to \citet{yan2021inference} for details. 
\end{remark}

In turn, Lemma~\ref{thm:matrix-distribution} motivates one to pin down the distribution of the matrix $\bm{W}$ in \eqref{eq:M-Mstar-diff-decomposition-thm}. This can be accomplished by invoking the Berry-Esseen Theorem (e.g., \citet[Theorem 3.7]{chen2010normal}), which gives rise to the following distributional characterization. The proof of this lemma can be found in Section~\ref{sec:proof-lem:normality-spectral}. 
\begin{lemma}[Gaussian approximation]
	\label{lem:normality-spectral}
	Suppose that the assumptions of Theorem~\ref{thm:UsgnH-Ustar-MUstar-general} hold, and that 
	%
%
\begin{align}
	\label{eq:Uj-Ui-lower-bound-lem-2}
	\frac{\big\|\bm{U}_{j,\cdot}^{\star}\big\|_{2}^{2}+\big\|\bm{U}_{i,\cdot}^{\star}\big\|_{2}^{2}}{\big\|\bm{U}^{\star}\big\|_{\mathrm{F}}^{2}}
	& \gtrsim \frac{B^{2}\kappa^2 \mu^{2}r^2\log ^2 n}{\sigma_{\min}^{2}n^{2}}
	+ \frac{\sigma^{4}\mu^{2}r\kappa^{4}\log^{3}n}{\sigma_{\min}^{2}(\lambda_{r}^{\star})^{2}} .
\end{align}
%
%
	Let $\bm{W}=\bm{E}\bm{U}^{\star}\bm{U}^{\star\top} + \bm{U}^{\star}\bm{U}^{\star\top}\bm{E}$. For any $1\leq i,j\leq n$, one has
	\begin{subequations}
	\begin{align}
		\sup_{z\in\mathbb{R}}\left|\mathbb{P}\left( W_{i,j}\leq z \sqrt{v_{i,j}^{\star}}\right) -\Phi(z)\right| &=o(1) , \label{eq:eqn-normality-spectral}\\
		\big\|\bm{\Phi}\big\|_{\infty} &= o\left( \sqrt{ v_{i,j}^{\star} } \right), 
		\label{eq:Phi-inf-norm-o-vij}
	\end{align}
	\end{subequations}
	where $v_{i,j}^{\star}$ is defined in \eqref{eq:variance-formula-Mij-lem}, and $\Phi(\cdot)$ denotes the CDF of the standard Gaussian distribution.  
\end{lemma}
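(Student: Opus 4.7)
The plan is to establish both claims by treating $W_{i,j}$ as a sum of independent zero-mean random variables indexed by the free entries $\{E_{a,b}:a\geq b\}$ of the symmetric noise matrix, and then invoking the classical Berry-Esseen theorem, the lower bound on $v_{i,j}^{\star}$, and a direct comparison argument.

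First I would expand $W_{i,j}=(\bm{E}\bm{P}^{\star})_{i,j}+(\bm{P}^{\star}\bm{E})_{i,j}=\sum_{l}E_{i,l}P^{\star}_{l,j}+\sum_{l}P^{\star}_{i,l}E_{l,j}$ and, using the symmetry $E_{l,i}=E_{i,l}$, re-index so that each independent entry appears once. For $i\neq j$ the only entry touched by both sums is $E_{i,j}$, whose aggregate coefficient is $P^{\star}_{i,i}+P^{\star}_{j,j}$; the diagonal entries $E_{i,i}$ and $E_{j,j}$ pick up coefficient $P^{\star}_{i,j}$ from one of the two sums; every other independent $E_{a,b}$ with $\{a,b\}\not\subseteq\{i,j\}$ appears exactly once, with coefficient $P^{\star}_{l,j}$ or $P^{\star}_{i,l}$ for the appropriate $l\in\{a,b\}$. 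Taking variances and collecting the cross term $2\sigma_{i,j}^{2}P^{\star}_{i,i}P^{\star}_{j,j}$ reproduces exactly the formula \eqref{eq:variance-formula-Mij-lem}; the diagonal case $i=j$ reduces to $W_{i,i}=2\sum_{l}E_{i,l}P^{\star}_{l,i}$ and yields the factor of four stated there.

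Next I would bound the magnitude of each independent summand by $L\coloneqq 2B\mu r/n$, using $|E_{a,b}|\leq B$ together with the entrywise bound $\|\bm{P}^{\star}\|_{\infty}\leq \mu r/n$ (which follows from \eqref{eq:mc-entry-upper} applied to $\bm{U}^{\star}\bm{U}^{\star\top}$). Combined with the lower bound
$v_{i,j}^{\star}\geq \sigma_{\min}^{2}\big(\|\bm{U}^{\star}_{i,\cdot}\|_{2}^{2}+\|\bm{U}^{\star}_{j,\cdot}\|_{2}^{2}\big)$, obtained by retaining only the first two terms in \eqref{eq:variance-formula-Mij-lem} and invoking the identity $\sum_{l}P^{\star 2}_{l,j}=(\bm{P}^{\star 2})_{j,j}=P^{\star}_{j,j}=\|\bm{U}^{\star}_{j,\cdot}\|_{2}^{2}$, the Berry-Esseen inequality delivers
$\sup_{z}\big|\mathbb{P}(W_{i,j}\leq z\sqrt{v_{i,j}^{\star}})-\Phi(z)\big|\lesssim L/\sqrt{v_{i,j}^{\star}}$. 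The first term on the right-hand side of \eqref{eq:Uj-Ui-lower-bound-lem-2} is calibrated precisely so that $L^{2}/v_{i,j}^{\star}=o(1)$, establishing \eqref{eq:eqn-normality-spectral}.

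Finally, the second claim \eqref{eq:Phi-inf-norm-o-vij} is a plug-in comparison: the upper bound on $\|\bm{\Phi}\|_{\infty}$ from Lemma~\ref{thm:matrix-distribution} is $\frac{\sigma^{2}\mu r\kappa^{2}\log n}{\lambda_{r}^{\star}}+\frac{\sigma\kappa\mu\sqrt{r^{3}\log n}}{n}$, while the above lower bound on $\sqrt{v_{i,j}^{\star}}$ combined with $\|\bm{U}^{\star}\|_{\mathrm{F}}^{2}=r$ reduces the desired inequality $\|\bm{\Phi}\|_{\infty}=o(\sqrt{v_{i,j}^{\star}})$ to a hypothesis of exactly the form of \eqref{eq:Uj-Ui-lower-bound-lem-2}; the extra $\log^{3}n$ factor in the stated assumption (as opposed to the $\log^{2}n$ arising from squaring the first piece of the $\bm{\Phi}$ bound) and the slack $\sigma^{2}\leq B^{2}$ together upgrade the big-$O$ matching into the required little-$o$. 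The main nuisance I anticipate is the careful case split in the first step to correctly account for the distinct multiplicities of diagonal versus off-diagonal entries of $\bm{E}$; once the coefficient bookkeeping is settled, the remainder is an off-the-shelf application of Berry-Esseen together with the elementary identity for $\bm{P}^{\star 2}$.
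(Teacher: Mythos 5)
Your proposal is correct and follows essentially the same route as the paper: express $W_{i,j}$ as a weighted sum of the independent entries $\{E_{a,b} : a\geq b\}$, bound the magnitude of each summand by $2B\|\bm{P}^{\star}\|_{\infty}\leq 2B\mu r/n$, lower-bound $v_{i,j}^{\star}$ via the identity $\sum_l P_{l,j}^{\star 2}=P_{j,j}^{\star}=\|\bm{U}^{\star}_{j,\cdot}\|_2^2$, invoke Berry–Esseen (whose Lyapunov ratio $\gamma$ is indeed controlled by $L/\sqrt{v_{i,j}^{\star}}$ with $L$ the uniform magnitude bound), and close the second claim by comparing the Lemma~\ref{thm:matrix-distribution} bound on $\|\bm{\Phi}\|_{\infty}$ against $\sigma_{\min}\sqrt{\|\bm{U}^{\star}_{i,\cdot}\|_2^2+\|\bm{U}^{\star}_{j,\cdot}\|_2^2}$. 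One minor note: the paper indexes the sum directly by $\sum_{l\neq j}E_{i,l}P^{\star}_{l,j}+\sum_{l\neq i}P^{\star}_{i,l}E_{l,j}+E_{i,j}(P^{\star}_{i,i}+P^{\star}_{j,j})$ (which is a valid decomposition into distinct independent variables when $i\neq j$), whereas you re-index fully to one copy per symmetric pair; these are equivalent bookkeepings, and your observation that entries with $\{a,b\}\cap\{i,j\}=\emptyset$ simply have zero coefficient is the only point your write-up glosses over, but it is harmless for the Berry–Esseen application.
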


To finish up, invoke Lemma~\ref{thm:matrix-distribution} and \eqref{eq:Phi-inf-norm-o-vij} to yield
\[
	M_{i,j}-M_{i,j}^{\star}=W_{i,j}+\delta_{\phi}\sqrt{v_{i,j}^{\star}}\qquad\text{with }\delta_{\phi}=o(1).
\]
A little algebra further gives
\begin{align*}
 & \left|\mathbb{P}\left( M_{i,j}-M_{i,j}^{\star}\leq z\sqrt{v_{i,j}^{\star}}\right) -\Phi(z)\right|=\left|\mathbb{P}\left( W_{i,j}\leq\left(z-\delta_{\phi}\right)\sqrt{v_{i,j}^{\star}}\right) -\Phi(z)\right|\\
 & \qquad \leq \left|\mathbb{P}\left( W_{i,j}\leq\left(z-\delta_{\phi}\right)\sqrt{v_{i,j}^{\star}}\right)  -\Phi(z-\delta_{\phi})\right|+\left|\Phi(z-\delta_{\phi})-\Phi(z)\right|\\
 & \qquad\leq o(1)+\delta_{\phi}=o(1),
\end{align*}
where the last line invokes Lemma~\ref{lem:normality-spectral} and the fact that $|\Phi(u)-\Phi(v)|\leq |u-v|$ for any $u,v\in \mathbb{R}$. 
This completes the proof of  Theorem~\ref{thm:matrix-distribution-complete}, as long as Lemmas~\ref{thm:matrix-distribution} and \ref{lem:normality-spectral} can be established. 
The rest of this section is thus devoted to proving Lemmas~\ref{thm:matrix-distribution} and \ref{lem:normality-spectral}.

\subsection{Proof of Lemma~\ref{thm:matrix-distribution}}
\label{sec:sub-proof-thm:matrix-distribution}

Before proceeding to the proof, we make note of several preliminary facts that are all direct consequences of the analysis of Theorem~\ref{thm:UsgnH-Ustar-MUstar-general} and Corollary~\ref{cor:entrywise-error-general}. The proof of these preliminary results can be found in Section~\ref{sec:proof-auxiliary-distribution}.
\begin{lemma}
\label{lem:bound-Delta1-Delta2}
With probability exceeding $1-O(n^{-5})$, one can write
\begin{subequations}\label{eq:decomp_soda}
\begin{align}
\bm{U}\bm{\Lambda}\mathsf{sgn}(\bm{H}) & =\bm{M}\bm{U}^{\star}+\bm{\Delta}_{1} ,
			\label{eq:U-Lambda-sgnH-decompose}\\
\mathsf{sgn}(\bm{H})\bm{\Lambda}^{\star}\mathsf{sgn}(\bm{H})^{\top} & =\bm{\Lambda}+\bm{\Delta}_{2}
			\label{eq:sgnH-Lambdastar-decompose}
\end{align}
\end{subequations}
for some matrices $\bm{\Delta}_{1}$ and $\bm{\Delta}_{2}$ obeying
\begin{subequations}
\begin{align}
  \big\| \bm{\Delta}_1 \big\|_{2,\infty} & \lesssim 
\frac{\sigma^{2}\kappa\sqrt{\mu rn\log^{2}n}}{\lambda_{r}^{\star}}, \label{eq:Delta-1-U-R-Lambda}\\
  \big\|\bm{\Delta}_2\big\|
  &\lesssim\frac{\kappa\sigma^{2}n}{\lambda_{r}^{\star}}+\sigma\sqrt{r\log n}.
   \label{eq:sgn-H-Lambdastar-sgn-H-Lambda}
\end{align}
\end{subequations}
\end{lemma}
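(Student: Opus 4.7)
The plan is to treat the two claims separately, since each one is essentially a direct repackaging of calculations that have already been carried out in the proofs of Theorem~\ref{thm:UsgnH-Ustar-MUstar-general} and Corollary~\ref{cor:entrywise-error-general}.

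For the first claim, my starting point is the eigendecomposition identity $\bm{M}\bm{U} = \bm{U}\bm{\Lambda}$, which yields the clean rewriting
\[
\bm{\Delta}_1 \;=\; \bm{U}\bm{\Lambda}\mathsf{sgn}(\bm{H}) - \bm{M}\bm{U}^{\star} \;=\; \bm{M}\big(\bm{U}\mathsf{sgn}(\bm{H})-\bm{U}^{\star}\big).
\]
I would then split by the triangle inequality as $\bm{M}(\bm{U}\bm{H}-\bm{U}^{\star}) + \bm{M}\bm{U}\big(\mathsf{sgn}(\bm{H})-\bm{H}\big)$, reducing matters to bounding two $\ell_{2,\infty}$ norms. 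The first piece is precisely the quantity $\mathcal{E}_1\cdot\lambda_r^{\star}/2$ appearing in the proof of Theorem~\ref{thm:UsgnH-Ustar-MUstar-general}, so I would recycle Steps~2--3 there (especially inequalities \eqref{eq:El-UH-Ustar-UB-Bern123} and \eqref{eq:Mstar-UH-Ustar-2inf}), combined with the already-proved row-wise bound \eqref{eq:UH-MUstar-Lambda-star-2inf} on $\|\bm{U}\bm{H}-\bm{U}^{\star}\|_{2,\infty}$, to control $\|\bm{M}(\bm{U}\bm{H}-\bm{U}^{\star})\|_{2,\infty}$ by the sum of $\alpha_0,\alpha_1,\alpha_2$ (as defined in \eqref{eq:defn-alpha-0-general}, \eqref{eq:alpha1-bound-1234}, \eqref{eq:Mstar-UH-Ustar-2inf}) plus a small $\rho_1$-contraction term.

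For the second piece $\bm{M}\bm{U}(\mathsf{sgn}(\bm{H})-\bm{H})$, I would use $\|\bm{M}\bm{U}\|_{2,\infty} = \|\bm{U}\bm{\Lambda}\|_{2,\infty} \leq \|\bm{U}\|_{2,\infty}\|\bm{\Lambda}\|$, combine with the incoherence-type bound $\|\bm{U}\|_{2,\infty}\lesssim \sqrt{\mu r/n}$ from \eqref{eq:incoherence-U-estimate}, Weyl's bound $\|\bm{\Lambda}\|\lesssim|\lambda_1^{\star}|$ from Lemma~\ref{lem:preliminary-result-general-iid}, and the near-orthogonality estimate $\|\bm{H}-\mathsf{sgn}(\bm{H})\|\lesssim \sigma^2 n/(\lambda_r^{\star})^2$ from Lemma~\ref{lem:H-property-summary-general}. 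This yields a contribution of order $\kappa\sigma^2\sqrt{\mu rn}/\lambda_r^{\star}$, which is absorbed by the dominant first piece.

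For the second claim, the work has effectively already been done inside the proof of Corollary~\ref{cor:entrywise-error-general}: the decomposition leading to \eqref{eq:HLambda-H-ambda-bound-2terms}, combined with \eqref{eq:Lambda-HHT-I-bound} and \eqref{eq:UB-UEUHtop-UB1}, gives exactly the claimed rate for $\|\bm{H}\bm{\Lambda}^{\star}\bm{H}^{\top}-\bm{\Lambda}\|$ in \eqref{eq:H-Lambda-star-H-Lambda-dist}. All that remains is to swap $\bm{H}$ for $\mathsf{sgn}(\bm{H})$, which I would handle by writing
\[
\mathsf{sgn}(\bm{H})\bm{\Lambda}^{\star}\mathsf{sgn}(\bm{H})^{\top} - \bm{H}\bm{\Lambda}^{\star}\bm{H}^{\top} = \big(\mathsf{sgn}(\bm{H})-\bm{H}\big)\bm{\Lambda}^{\star}\mathsf{sgn}(\bm{H})^{\top} + \bm{H}\bm{\Lambda}^{\star}\big(\mathsf{sgn}(\bm{H})-\bm{H}\big)^{\top},
\]
then applying $\|\bm{H}\|,\|\mathsf{sgn}(\bm{H})\|\leq 1$, $\|\bm{\Lambda}^{\star}\|=|\lambda_1^{\star}|$, and Lemma~\ref{lem:H-property-summary-general} once more to see this difference is $O(\kappa\sigma^2 n/\lambda_r^{\star})$, hence already subsumed by the stated rate. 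A final triangle inequality delivers the bound on $\|\bm{\Delta}_2\|$.

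I expect the main obstacle to be the arithmetic consolidation in the $\bm{\Delta}_1$ bound: each of $\alpha_0,\alpha_1,\alpha_2$ and the $\rho_1\|\bm{U}\bm{H}-\bm{U}^{\star}\|_{2,\infty}$ contribution carries a different mix of $\sigma$, $B$, $\kappa$, $\mu$, $r$, $\log n$ factors, and collapsing them uniformly into the single target rate $\sigma^2\kappa\sqrt{\mu r n \log^2 n}/\lambda_r^{\star}$ requires repeatedly trading $B\log n$ for $\sigma\sqrt{n\log n}$ via the noise condition \eqref{eq:assumption-B-sigma}, as well as exploiting $\sigma\sqrt{n\log n}\lesssim \lambda_r^{\star}$. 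No step is conceptually deep, but the bookkeeping is where mistakes are most likely to creep in.
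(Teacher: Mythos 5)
Your proposal is correct and follows essentially the same route as the paper. The decomposition $\bm{\Delta}_1 = \bm{M}(\bm{U}\bm{H}-\bm{U}^{\star}) + \bm{M}\bm{U}(\mathsf{sgn}(\bm{H})-\bm{H})$ is identical to the paper's (via $\bm{M}\bm{U}=\bm{U}\bm{\Lambda}$), the bound on the first piece recycles the $\mathcal{E}_1 \le \mathcal{E}_{1,1}+\rho_1\|\bm{U}\bm{H}-\bm{U}^{\star}\|_{2,\infty}$ machinery exactly as the paper does, and the $\bm{\Delta}_2$ argument is verbatim the paper's: \eqref{eq:H-Lambda-star-H-Lambda-dist} for the $\bm{H}$-version plus a triangle inequality and Lemma~\ref{lem:H-property-summary-general} to swap $\bm{H}$ for $\mathsf{sgn}(\bm{H})$.
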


We are now ready to embark on the proof of Lemma~\ref{thm:matrix-distribution}. 
In order to analyze the behavior of $\bm{U}\mathsf{sgn}(\bm{H})$, 
we first point out the following decomposition: 
\begin{align*}
\bm{U}\mathsf{sgn}(\bm{H})\bm{\Lambda}^{\star} & =\bm{U}\bm{\Lambda}\mathsf{sgn}(\bm{H})+\bm{U}\big(\mathsf{sgn}(\bm{H})\bm{\Lambda}^{\star}-\bm{\Lambda}\mathsf{sgn}(\bm{H})\big)\\
 & =\bm{M}\bm{U}^{\star}+\bm{\Delta}_{1}+\bm{U}\big(\mathsf{sgn}(\bm{H})\bm{\Lambda}^{\star}\mathsf{sgn}(\bm{H})^{\top}-\bm{\Lambda}\big)\mathsf{sgn}(\bm{H})\\
 & =\bm{M}^{\star}\bm{U}^{\star}+\bm{E}\bm{U}^{\star}+\bm{\Delta}_{1}+\bm{U}\bm{\Delta}_{2} \, \mathsf{sgn}(\bm{H})\\
 & =\bm{U}^{\star}\bm{\Lambda}^{\star}+\bm{E}\bm{U}^{\star}+\bm{\Delta}_{1}+\bm{U}\bm{\Delta}_{2}\, \mathsf{sgn}(\bm{H}),
\end{align*}
where the second and the third identities result from Lemma~\ref{lem:bound-Delta1-Delta2} (cf.~\eqref{eq:decomp_soda}). 
The key point of this decomposition is to establish a connection between $\bm{U}\mathsf{sgn}(\bm{H})\bm{\Lambda}^{\star}$ and $\bm{U}^{\star}\bm{\Lambda}^{\star} + \bm{E}\bm{U}^{\star}$, with the assistance of the matrices $\bm{\Delta}_1$ and $\bm{\Delta}_2$ studied in Lemma~\ref{lem:bound-Delta1-Delta2}. 
A little algebra then yields
\begin{align}
\bm{U}\mathsf{sgn}(\bm{H})-\bm{U}^{\star}=\underset{\eqqcolon\,\bm{Z}}{\underbrace{\bm{E}\bm{U}^{\star}\big(\bm{\Lambda}^{\star}\big)^{-1}}} & +\underset{\eqqcolon\,\bm{\Psi}}{\underbrace{\bm{\Delta}_{1}\big(\bm{\Lambda}^{\star}\big)^{-1}+\bm{U}\bm{\Delta}_{2}\,\mathsf{sgn}(\bm{H})\big(\bm{\Lambda}^{\star}\big)^{-1}}}.
\label{eq:U-Ustar-decomposition-Z-Psi}
\end{align}
In view of Lemma~\ref{lem:bound-Delta1-Delta2}, the residual matrix
$\bm{\Psi}$ obeys
\begin{align}
\|\bm{\Psi}\|_{2,\infty} & \leq\big\|\bm{\Delta}_{1}\big\|_{2,\infty}\big\|\big(\bm{\Lambda}^{\star}\big)^{-1}\big\|+\big\|\bm{U}\big\|_{2,\infty}\big\|\bm{\Delta}_{2}\big\|\,\big\|\mathsf{sgn}(\bm{H})\big\|\,\big\|\big(\bm{\Lambda}^{\star}\big)^{-1}\big\|\nonumber \\
 & \lesssim\frac{1}{\lambda_{r}^{\star}}\big\|\bm{\Delta}_{1}\big\|_{2,\infty}+\frac{1}{\lambda_{r}^{\star}}\sqrt{\frac{\mu r}{n}}\big\|\bm{\Delta}_{2}\big\|\nonumber \\
 & \lesssim\frac{\sigma^{2}\kappa\sqrt{\mu rn\log^{2}n}}{(\lambda_{r}^{\star})^{2}}+\frac{\kappa\sigma^{2}\sqrt{\mu rn}}{(\lambda_{r}^{\star})^{2}}+\frac{\sigma}{\lambda_{r}^{\star}}\sqrt{\frac{\mu r^{2}\log n}{n}}
\label{eq:Psi-two-inf-norm-bound}
\end{align}
as claimed.

The next step lies in analyzing the matrix estimator $\bm{M}=\bm{U}\bm{\Lambda}\bm{U}^{\top}$.
Towards this, we make the observation that
\begin{align}
\bm{M}-\bm{M}^{\star} & =\bm{U}\bm{\Lambda}\bm{U}^{\top}-\bm{M}^{\star}\nonumber \\
 & =\bm{U}\big(\mathsf{sgn}(\bm{H})\bm{\Lambda}^{\star}\mathsf{sgn}(\bm{H})^{\top}\big)\bm{U}^{\top}-\bm{U}\bm{\Delta}_{2}\bm{U}^{\top}-\bm{M}^{\star}\nonumber \\
 & =(\bm{U}^{\star}+\bm{Z}+\bm{\Psi})\bm{\Lambda}^{\star}(\bm{U}^{\star}+\bm{Z}+\bm{\Psi})^{\top}-\bm{U}\bm{\Delta}_{2}\bm{U}^{\top}-\bm{U}^{\star}\bm{\Lambda}^{\star}\bm{U}^{\star\top}\nonumber \\
 & =\bm{Z}\bm{\Lambda}^{\star}\bm{U}^{\star\top}+\bm{U}^{\star}\bm{\Lambda}^{\star}\bm{Z}^{\top}+\bm{\Phi}\nonumber \\
 & =\bm{E}\bm{U}^{\star}\bm{U}^{\star\top}+\bm{U}^{\star}\bm{U}^{\star\top}\bm{E}+\bm{\Phi},\label{eq:M-Mstar-diff-decomposition}
\end{align}
where the second line relies on Lemma~\ref{lem:bound-Delta1-Delta2} (cf.~\eqref{eq:sgnH-Lambdastar-decompose}),
the third identity makes use of (\ref{eq:U-Ustar-decomposition-Z-Psi}),
and the residual matrix $\bm{\Phi}$ is defined as
\begin{equation}
\bm{\Phi}\coloneqq\bm{\Psi}\bm{\Lambda}^{\star}\big(\bm{U}\mathsf{sgn}(\bm{H})\big)^{\top}+\bm{U}^{\star}\bm{\Lambda}^{\star}\bm{\Psi}^{\top}+\bm{Z}\bm{\Lambda}^{\star}\big(\bm{U}\mathsf{sgn}(\bm{H})-\bm{U}^{\star}\big)^{\top}-\bm{U}\bm{\Delta}_{2}\bm{U}^{\top}.\label{eq:defn-Phi-residual}
\end{equation}
In addition, it is seen from (\ref{eq:EU-2-inf-bound-general}) that
\begin{equation}
\|\bm{Z}\|_{2,\infty}\leq\big\|\bm{E}\bm{U}^{\star}\big(\bm{\Lambda}^{\star}\big)^{-1}\big\|_{2,\infty}\leq\frac{1}{\lambda_{r}^{\star}}\big\|\bm{E}\bm{U}^{\star}\big\|_{2,\infty}\lesssim\frac{\sigma\sqrt{r\log n}}{\lambda_{r}^{\star}}.\label{eq:Z-two-inf-norm-bound}
\end{equation}
Consequently, one can deduce that
\begin{align*}
\big\|\bm{\Phi}\big\|_{\infty} & \leq\big\|\bm{\Lambda}^{\star}\big\|\left\{ \|\bm{Z}\|_{2,\infty}^{2}+\|\bm{Z}\|_{2,\infty}\big\|\bm{U}\mathsf{sgn}(\bm{H})-\bm{U}^{\star}\big\|_{2,\infty}\right\} \\
 & \quad+\big\|\bm{\Lambda}^{\star}\big\|\|\bm{\Psi}\|_{2,\infty}\left(\big\|\bm{U}\big\|_{2,\infty}+\|\bm{U}^{\star}\|_{2,\infty}\right)+\|\bm{U}\|_{2,\infty}^{2}\|\bm{\Delta}_{2}\|\\
 & \lesssim\big|\lambda_{1}^{\star}\big|\left\{ \frac{\sigma^{2}r\log n}{(\lambda_{r}^{\star})^{2}}+\frac{\sigma\sqrt{r\log n}}{\lambda_{r}^{\star}}\cdot\frac{\sigma\kappa\sqrt{\mu r\log n}}{\lambda_{r}^{\star}}\right\} \\
 & \quad+\big|\lambda_{1}^{\star}\big|\sqrt{\frac{\mu r}{n}}\left\{ \frac{\sigma^{2}\kappa\sqrt{\mu rn\log^{2}n}}{(\lambda_{r}^{\star})^{2}}+\frac{\kappa\sigma^{2}\sqrt{\mu rn}}{(\lambda_{r}^{\star})^{2}}+\frac{\sigma}{\lambda_{r}^{\star}}\sqrt{\frac{\mu r^{2}\log n}{n}}\right\} \\
 & \quad+\frac{\mu r}{n}\left\{ \frac{\kappa\sigma^{2}n}{\lambda_{r}^{\star}}+\sigma\sqrt{r\log n}\right\} \\
 & \lesssim\frac{\sigma^{2}\mu r\kappa^{2}\log n}{\lambda_{r}^{\star}}+\frac{\sigma\kappa\mu\sqrt{r^{3}\log n}}{n},
\end{align*}
where the second inequality follows from (\ref{eq:Psi-two-inf-norm-bound}),
(\ref{eq:Z-two-inf-norm-bound}), \eqref{eq:incoherence-U-estimate}, Theorem~\ref{thm:UsgnH-Ustar-MUstar-general}, and Lemma~\ref{lem:bound-Delta1-Delta2}.

\subsection{Proof of Lemma~\ref{lem:normality-spectral}}
\label{sec:proof-lem:normality-spectral}

In what follows, we shall only focus on the case with $i\neq j$. The case with $i=j$ can be analyzed in an analogous manner; we omit it for the sake of brevity.
Before proceeding to the proof, we make note of a couple of basic facts about $\bm{P}^{\star}$ that will prove useful.  
The first property asserts that, for any $1\leq j\leq n$, 
\begin{align}
\sum_{l=1}^{n}P_{j,l}^{\star2} & =\sum_{l=1}^{n}P_{l,j}^{\star2}=\big\|\big(\bm{U}^{\star}\bm{U}^{\star\top}\big)_{\cdot,j}\big\|_{2}^{2}=\big\|\bm{U}^{\star}\big(\bm{U}_{j,\cdot}^{\star}\big)^{\top}\big\|_{2}^{2}=\bm{U}_{j,\cdot}^{\star}\bm{U}^{\star\top}\bm{U}^{\star}\big(\bm{U}_{j,\cdot}^{\star}\big)^{\top}\nonumber \\
 & =\big\|\bm{U}_{j,\cdot}^{\star}\big\|_{2}^{2}=P_{j,j}^{\star}.
	\label{eq:P-jl-square-sum}
\end{align}
The second property is concerned with the term $\|\bm{P}^{\star}\|_{\infty}$:  
\begin{equation}
	\|\bm{P}^{\star}\|_{\infty}=\|\bm{U}^{\star}\bm{U}^{\star\top}\|_{\infty}\leq\|\bm{U}^{\star}\|_{2,\infty}^{2}\leq\frac{\mu r}{n},
	\label{eq:P-inf-norm-UB}
\end{equation}
where the last inequality follows from the incoherence assumption.

In view of the definition \eqref{eq:defn-P-UU-T} of $\bm{P}^{\star}$, we can express $\bm{W}=\bm{E}\bm{P}^{\star}+\bm{P}^{\star}\bm{E}$, which reveals that 
\begin{equation}
	W_{i,j}
	=\sum_{l:\,l\neq j}E_{i,l}P^{\star}_{l,j}+\sum_{l:\,l\neq i}P^{\star}_{i,l}E_{l,j}+E_{i,j}(P^{\star}_{i,i}+P^{\star}_{j,j}).
\end{equation}
In other words,  $W_{i,j}$ can be viewed as a weighted sum of independent random variables $\{E_{i,l}\mid l\neq j\} \cup \{E_{l,j}\mid l\neq i\} \cup \{E_{i,j}\}$. 
To pin down the distribution of $W_{i,j}$, we resort to a non-asymptotic
version of the celebrated Berry-Esseen Theorem; 
see \citet[Theorem 3.7]{chen2010normal} for a proof using Stein's method.
\begin{theorem}[The Berry-Esseen bound]\label{thm:berry-esseen}
Let $\xi_{1},\ldots,\xi_{n}$ be independent zero-mean random variables satisfying $\sum_{i=1}^{n}\mathsf{Var}(\xi_{i})= v$.
Then the quantity $S= \frac{1}{\sqrt{v}} \sum_{i=1}^{n}\xi_{i}$ satisfies 
\[
	\sup_{z\in\mathbb{R}}\big|\,\mathbb{P}\left(S\leq z\right)-\Phi\left(z\right)\big|\leq10\gamma,\qquad\text{where}\ \gamma
	=\sum_{i=1}^{n} \frac{\mathbb{E}\big[|\xi_{i}|^{3}\big] }{ v^{3/2} }.
\]
%
\end{theorem}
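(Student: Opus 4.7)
The plan is to prove the Berry-Esseen bound via Stein's method, which is the approach indicated by the reference to \citet{chen2010normal}. The workflow centers on Stein's characterization of the standard normal: $Z \sim \mathcal{N}(0,1)$ if and only if $\mathbb{E}[f'(Z) - Zf(Z)] = 0$ for all absolutely continuous $f$ with $\mathbb{E}|f'(Z)| < \infty$. After rescaling by $v^{-1/2}$, I may assume $\mathsf{Var}(S) = 1$ and write $S = \sum_{i=1}^{n}\eta_i$ with $\eta_i = \xi_i/\sqrt{v}$, so that $\gamma = \sum_i \mathbb{E}|\eta_i|^3$. For each fixed $z \in \mathbb{R}$, I would introduce the Stein equation
\[
f_z'(w) - w f_z(w) = \mathbbm{1}\{w \le z\} - \Phi(z),
\]
whose bounded solution $f_z$ admits the explicit formula $f_z(w) = e^{w^2/2}\int_{-\infty}^{w}[\mathbbm{1}\{u \le z\} - \Phi(z)]e^{-u^2/2}\,du$. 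Standard estimates (which I would quote rather than rederive) give $\|f_z\|_\infty \le \sqrt{2\pi}/4$, $\|f_z'\|_\infty \le 1$, and the Lipschitz bound $|wf_z(w) - w'f_z(w')| \le |w-w'|$. Since $\mathbb{P}(S \le z) - \Phi(z) = \mathbb{E}[f_z'(S) - S f_z(S)]$, the task reduces to bounding the right-hand side by $10\gamma$ uniformly in $z$.

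Next I would exploit independence through a leave-one-out decomposition. Let $S^{(i)} \coloneqq S - \eta_i$, so that $\eta_i \perp S^{(i)}$ and $\mathbb{E}[\eta_i f_z(S^{(i)})] = 0$. Writing $f_z(S) - f_z(S^{(i)}) = \int_{0}^{\eta_i} f_z'(S^{(i)} + t)\,dt$ and using the identity $\sum_i \mathbb{E}[\eta_i^2] = 1$, one arrives at
\[
\mathbb{E}[f_z'(S) - S f_z(S)] = \sum_{i=1}^{n} \mathbb{E}\!\int_{-\infty}^{\infty} K_i(t)\bigl[f_z'(S) - f_z'(S^{(i)} + t)\bigr]\,dt,
\]
where $K_i(t) \coloneqq \eta_i\bigl(\mathbbm{1}\{0 \le t \le \eta_i\} - \mathbbm{1}\{\eta_i \le t < 0\}\bigr) \ge 0$ satisfies $\int K_i(t)\,dt = \eta_i^2$ and $\int |t| K_i(t)\,dt \le \tfrac{1}{2}|\eta_i|^3$. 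This identity converts the problem into controlling an integrated oscillation of $f_z'$.

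The main obstacle, and the step I would allocate most care to, is that $f_z'$ has a unit jump discontinuity at $w = z$ coming from the indicator in the Stein equation. Writing $f_z'(w) = w f_z(w) + \mathbbm{1}\{w \le z\} - \Phi(z)$, the smooth part $w f_z(w)$ is Lipschitz, so its contribution to the integral is controlled by $\sum_i \mathbb{E}\!\int |t| K_i(t)\,dt \le \tfrac{1}{2}\gamma$. The indicator part, however, requires a concentration inequality of the form
\[
\sup_{a \in \mathbb{R}} \mathbb{P}\bigl(a \le S^{(i)} + t \le a + \delta\bigr) \le \delta + C'\gamma,
\]
which itself must be established by another Stein argument (or by a direct truncation plus Rosenthal-type moment bound). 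Bootstrapping this concentration lemma and integrating it against $|t| K_i(t)$ yields the indicator contribution at the order $\gamma$.

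Putting the pieces together, summing over $i$, and carefully tracking the numerical constants through the bounds on $\|f_z\|_\infty$, $\|f_z'\|_\infty$, the concentration lemma, and the third-moment integrals will produce $\sup_z|\mathbb{P}(S \le z) - \Phi(z)| \le 10\gamma$. The final constant $10$ is an artifact of the looser bookkeeping in the Stein approach (the sharp constant is known to be smaller), but it suffices for the statement as given. I would defer the verification of the numerical tracking to the reference, since the core conceptual content is entirely in the Stein equation, the leave-one-out coupling, and the concentration lemma used to absorb the jump of $f_z'$.
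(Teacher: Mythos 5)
The paper does not prove this theorem itself; it states it and points to \citet[Theorem~3.7]{chen2010normal} for a Stein's-method proof. Your sketch is a faithful high-level outline of exactly that proof: Stein equation, leave-one-out coupling via $S^{(i)} = S - \eta_i$, the $K$-function representation of $\mathbb{E}[f_z'(S)-Sf_z(S)]$, and the observation that the discontinuity of $f_z'$ at $z$ forces one to bootstrap a concentration inequality for $S^{(i)}$ of the form $\sup_a \mathbb{P}(a\le S^{(i)}+t\le a+\delta)\le \delta + C'\gamma$. You also correctly identify that the numerical constant $10$ only comes out of careful bookkeeping and sensibly defer that tracking to the reference, which is precisely what the paper itself does.

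One small inaccuracy worth flagging: the stated Lipschitz bound $|wf_z(w)-w'f_z(w')|\le |w-w'|$ is not the form used in the reference. What Chen--Goldstein--Shao actually prove (their Lemma~2.3) is a bound whose constant depends on the base point, namely $|(w+u)f_z(w+u)-(w+v)f_z(w+v)|\le(|w|+\sqrt{2\pi}/4)(|u|+|v|)$, together with $\|wf_z(w)\|_\infty\le 1$ and $\|f_z'\|_\infty\le 1$. In the actual proof, the extra $|w|$ factor is absorbed by the available second moments, so the overall strategy still works, but the uniform-Lipschitz claim as you wrote it is not quite what the bookkeeping uses. Since you already intend to quote rather than rederive these bounds, this is a recall slip rather than a gap in the argument; the conceptual content (Stein equation, coupling, concentration lemma to handle the jump) is correct.
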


According to the Berry-Esseen bound (cf.~Theorem~\ref{thm:berry-esseen}), 
proving the approximate Gaussianity of $W_{i,j}$ boils down to characterizing the second and the third moments of these random variables under consideration.

Let us start with the variance statistics. 
Given that $\{E_{i,j}\mid i\geq j\}$ are independently generated, 
we can straightforwardly see that
\begin{align}
	v_{i,j}^{\star}   
	&=\sum_{l:\,l\neq j}\sigma_{i,l}^{2}P_{l,j}^{\star 2}+\sum_{l:\,l\neq i}P_{i,l}^{\star 2}\sigma_{l,j}^{2}+\sigma_{i,j}^{2}(P_{i,i}^{\star}+P_{j,j}^{\star})^{2} \notag\\
	&= \mathsf{Var}(W_{i,j}).
	\label{eq:variance-formula-Mij}
\end{align}
We now develop a lower bound on this variance term.  Given
that $\bm{P}^{\star} \succeq \bm{0}$, one has $P_{i,i}^{\star},P_{j,j}^{\star}\geq0$,
which combined with \eqref{eq:P-jl-square-sum} reveals that
\begin{align}
v_{i,j}^{\star} & 
\geq \sigma_{\min}^{2}\Bigg\{\sum_{l=1}^{n}P_{l,j}^{\star 2}+\sum_{l=1}^{n}P_{i,l}^{\star 2}\Bigg\} 
=\sigma_{\min}^{2}\Big\{\big\|\bm{U}_{j,\cdot}^{\star}\big\|_{2}^{2}+\big\|\bm{U}_{i,\cdot}^{\star}\big\|_{2}^{2}\Big\}.
	\label{eq:vij-lower-bound-12}
\end{align}

Next, we move on to bound the third moments. 
Utilizing the independence of $\{E_{i,j}\mid i\geq j\}$ once again gives
\begin{align}
	\gamma & \coloneqq\frac{\sum_{l:\,l\neq j}\mathbb{E}\big[\big|E_{i,l}\big|^{3}\big]\big|P_{l,j}^{\star} \big|^{3}+\sum_{l:\,l\neq i}\big|P_{i,l}\big|^{3}\mathbb{E}\big[\big|E_{l,j}\big|^{3}\big]+\mathbb{E}\big[\big|E_{i,j}\big|^{3}\big]\big|P_{i,i}^{\star} +P_{j,j}^{\star} \big|^{3}}{\big(v_{i,j}^{\star}\big)^{3/2}} \notag\\
 & \leq\frac{2B\|\bm{P}^{\star} \|_{\infty}}{\big(v_{i,j}^{\star}\big)^{3/2}}\left\{ \sum_{l:\,l\neq j}\mathbb{E}\big[E_{i,l}^{2}\big] P_{l,j}^{\star 2}+\sum_{l:\,l\neq i} P_{i,l}^{\star 2}\mathbb{E}\big[E_{l,j}^{2}\big]+\mathbb{E}\big[E_{i,j}^{2}\big]\big|P_{i,i}^{\star} +P_{j,j}^{\star} \big|^{2}\right\} \notag\\
 & =\frac{2B\|\bm{P}^{\star} \|_{\infty}}{\big(v_{i,j}^{\star}\big)^{3/2}}\cdot v_{i,j}^{\star}=\frac{2B\|\bm{P}^{\star} \|_{\infty}}{\big(v_{i,j}^{\star}\big)^{1/2}}, 
	\label{eq:gamma-UB-123}
\end{align}
where we have used the assumption that $|E_{i,j}|\leq B$, and the last line arises from the expression \eqref{eq:variance-formula-Mij}. 
Substituting \eqref{eq:P-inf-norm-UB} and \eqref{eq:vij-lower-bound-12} into \eqref{eq:gamma-UB-123} and invoking the elementary identity $\|\bm{U}^{\star} \|_{\mathrm{F}}^2=r$, 
we arrive at
\begin{align*}
\gamma & \leq\frac{2B\mu r}{n\sigma_{\min}\sqrt{\big\|\bm{U}_{j,\cdot}^{\star}\big\|_{2}^{2}+\big\|\bm{U}_{i,\cdot}^{\star}\big\|_{2}^{2}}}=\frac{2B\mu\sqrt{r}\,\big\|\bm{U}^{\star}\big\|_{\mathrm{F}}}{n\sigma_{\min}\sqrt{\big\|\bm{U}_{j,\cdot}^{\star}\big\|_{2}^{2}+\big\|\bm{U}_{i,\cdot}^{\star}\big\|_{2}^{2}}}
	= o(1),
\end{align*}
provided that Condition~\eqref{eq:Uj-Ui-lower-bound-lem-2} holds.

With the above calculations in place, invoking Theorem~\ref{thm:berry-esseen} immediately leads to 
\[
\sup_{z\in\mathbb{R}}\left|\mathbb{P}\left( W_{i,j}\leq\sqrt{v_{i,j}^{\star}}z\right) -\Phi(z)\right|\leq10\gamma=o(1). 
\]
as claimed in \eqref{eq:eqn-normality-spectral}.

Finally, we turn to proving the bound \eqref{eq:Phi-inf-norm-o-vij}. 
By virtue of Lemma~\ref{thm:matrix-distribution} and \eqref{eq:vij-lower-bound-12}, we know that 
\begin{align*}
\big\|\bm{\Phi}\big\|_{\infty} & \lesssim\frac{\sigma^{2}\mu r\kappa^{2}\log n}{\lambda_{r}^{\star}}+\frac{\sigma\kappa\mu\sqrt{r^{3}\log n}}{n}\\
 & =o\left(\sigma_{\min}\sqrt{\big\|\bm{U}_{j,\cdot}^{\star}\big\|_{2}^{2}+\big\|\bm{U}_{i,\cdot}^{\star}\big\|_{2}^{2}}\right)\leq o\big(\sqrt{v_{i,j}^{\star}}\big) , 
\end{align*}
with the proviso that
\begin{align}
	& \sqrt{\big\|\bm{U}_{j,\cdot}^{\star}\big\|_{2}^{2}+\big\|\bm{U}_{i,\cdot}^{\star}\big\|_{2}^{2}}  \gtrsim\frac{\sigma^{2}\mu r\kappa^{2}\log^{3/2}n}{\sigma_{\min}\lambda_{r}^{\star}}+\frac{\sigma\kappa\mu\sqrt{r^{3}}\log n}{\sigma_{\min}n} \notag\\
 & \qquad =\left(\frac{\sigma^{2}\mu\sqrt{r}\kappa^{2}\log^{3/2}n}{\sigma_{\min}\lambda_{r}^{\star}}+\frac{\sigma\kappa\mu r\log n}{\sigma_{\min}n}\right)\big\|\bm{U}^{\star}\big\|_{\mathrm{F}}. 
	\label{eq:condition-7878}
\end{align}
Recognizing the trivial bound $\sigma^2 = \max_{i,j}\mathbb{E}[E_{i,j}^2]\leq B^2$, we know that Condition~\eqref{eq:condition-7878} holds as long as
\begin{align}
	\frac{\big\|\bm{U}_{j,\cdot}^{\star}\big\|_{2}^{2}+\big\|\bm{U}_{i,\cdot}^{\star}\big\|_{2}^{2}}{\big\|\bm{U}^{\star}\big\|_{\mathrm{F}}^{2}}
	& \gtrsim \frac{\sigma^{4}\mu^{2}r\kappa^{4}\log^{3}n}{\sigma_{\min}^{2}(\lambda_{r}^{\star})^{2}} + \frac{B^{2}\kappa^2 \mu^{2}r^2\log ^2 n}{\sigma_{\min}^{2}n^{2}}, 
\end{align}
which is precisely Condition~\eqref{eq:Uj-Ui-lower-bound-lem-2}. This concludes the proof of Lemma~\ref{lem:normality-spectral}. 


%
%


\subsection{Proof of Lemma~\ref{lem:bound-Delta1-Delta2}}
\label{sec:proof-auxiliary-distribution}

To begin with, let us begin by proving \eqref{eq:Delta-1-U-R-Lambda}. 
From the definition of the quantity $\mathcal{E}_{1}$ (see Lemma~\ref{lem:UH-MUstar-diff-decompose}), we have
\begin{align*}
 & \big\|\bm{M}(\bm{U}\bm{H}-\bm{U}^{\star})\big\|_{2,\infty}=\lambda_{r}^{\star}\mathcal{E}_{1}/2\\
 & \qquad\lesssim\left(\alpha_{0}+\alpha_{1}+\alpha_{2}\right)+\left(\sigma\sqrt{n}+B\log n\right)\big\|\bm{U}\bm{H}-\bm{U}^{\star}\big\|_{2,\infty}\\
 & \qquad\lesssim\left(\alpha_{0}+\alpha_{1}+\alpha_{2}\right)+\left(\sigma\sqrt{n}+B\log n\right)\frac{\sigma\kappa\sqrt{\mu r\log n}}{\lambda_{r}^{\star}}\\
 & \qquad\asymp\frac{\sigma^{2}\kappa\sqrt{\mu rn\log^{2}n}}{\lambda_{r}^{\star}},
\end{align*}
where the first inequality comes from \eqref{eq:E1-UB-general-1} and \eqref{eq:defn-E1-rho1},
the second inequality is a consequence of Theorem~\ref{thm:UsgnH-Ustar-MUstar-general}, and 
the last line relies on our previous bounds on $\alpha_0,\alpha_1,\alpha_2$ (see \eqref{eq:defn-alpha-0-general}, \eqref{eq:alpha1-bound-1234} and \eqref{eq:Mstar-UH-Ustar-2inf}) and holds as long as $B\lesssim\sigma\sqrt{n/(\mu\log n)}$. 
Additionally, from the elementary identity $\bm{M}\bm{U}\bm{H}=\bm{U}\bm{\Lambda}\bm{H}$, we obtain
\begin{align*}
\big\|\bm{U}\bm{\Lambda}\mathsf{sgn}(\bm{H})-\bm{M}\bm{U}\bm{H}\big\|_{2,\infty} & =\big\|\bm{U}\bm{\Lambda}\mathsf{sgn}(\bm{H})-\bm{U}\bm{\Lambda}\bm{H}\big\|_{2,\infty}\\
 & \leq\big\|\bm{U}\big\|_{2,\infty}\big\|\bm{\Lambda}\big\|\,\big\|\mathsf{sgn}(\bm{H})-\bm{H}\big\|\\
 & \lesssim\sqrt{\frac{\mu r}{n}} |\lambda_{1}^{\star} | \cdot\frac{\sigma^{2}n}{(\lambda_{r}^{\star})^{2}}=\frac{\kappa\sigma^{2}\sqrt{\mu rn}}{\lambda_{r}^{\star}},
\end{align*}
where the last line results from Lemma~\ref{lem:H-property-summary-general}, the fact \eqref{eq:incoherence-U-estimate}, and the following inequality
\begin{equation}
	\|\bm{\Lambda}\| \leq \|\bm{\Lambda}^{\star}\| + \|\bm{E}\|\leq |\lambda_1^{\star}| + O(\sigma\sqrt{n}) \leq 2|\lambda_1^{\star}|. 
	\label{eq:Lambda-norm-upper-bound}
\end{equation}
Taking together the above bounds and applying the triangle inequality 
immediately establish~\eqref{eq:Delta-1-U-R-Lambda}.

Next, we turn to the proof of the bound \eqref{eq:sgn-H-Lambdastar-sgn-H-Lambda}. 
Note that it has been shown in (\ref{eq:H-Lambda-star-H-Lambda-dist}) that
\[
\big\|\bm{H}\bm{\Lambda}^{\star}\bm{H}^{\top}-\bm{\Lambda}\big\|\lesssim\frac{\kappa\sigma^{2}n}{\lambda_{r}^{\star}}+\sigma\sqrt{r\log n}.
\]
In addition, the triangle inequality leads to
\begin{align*}
 & \big\|\mathsf{sgn}(\bm{H})\bm{\Lambda}^{\star}\mathsf{sgn}(\bm{H})^{\top}-\bm{H}\bm{\Lambda}^{\star}\bm{H}^{\top}\big\|\\
 & \qquad\leq\big\|\mathsf{sgn}(\bm{H})\bm{\Lambda}^{\star}\big(\mathsf{sgn}(\bm{H})-\bm{H}\big)^{\top}\big\|+\big\|\big(\mathsf{sgn}(\bm{H})-\bm{H}\big)\bm{\Lambda}^{\star}\bm{H}^{\top}\big\|\\
 & \qquad\leq\left(\big\|\mathsf{sgn}(\bm{H})\big\|+\big\|\bm{H}\big\|\right)\,\big\|\bm{\Lambda}^{\star}\big\|\,\big\|\mathsf{sgn}(\bm{H})-\bm{H}\big\|\\
 & \qquad\leq  2 |\lambda_{1}^{\star}| \, \big\|\mathsf{sgn}(\bm{H})-\bm{H}\big\|\\
 & \qquad\lesssim |\lambda_{1}^{\star}| \frac{\sigma^{2}n}{\big(\lambda_{r}^{\star}\big)^{2}}=\frac{\kappa\sigma^{2}n}{\lambda_{r}^{\star}},
\end{align*}
where the third line follows since $\big\|\mathsf{sgn}(\bm{H})\big\|=1$
and $\|\bm{H}\|\leq\|\bm{U}\|\|\bm{U}^{\star}\|=1$, and the last
inequality comes from (\ref{eq:H-sgnH-diff-UB-123}). Combining
the above two results and invoking the triangle inequality lead to
the advertised bound \eqref{eq:sgn-H-Lambdastar-sgn-H-Lambda}.

\section{Appendix E: Proof of Theorem~\ref{thm:CI-general}} 
\label{sec:proof-thm:CI-general}

With the distributional guarantees in Theorem~\ref{thm:matrix-distribution-complete} in place, 
the only remaining task boils down to verifying the statistical accuracy of the variance estimator $\widehat{v}_{i,j}$. 
This can be achieved via the following lemma, whose proof is provided in Section~\ref{sec:proof-lem:variance-estimation-guarantee}.  
\begin{lemma}
	\label{lem:variance-estimation-guarantee}
	Suppose that the assumptions of Theorem~\ref{thm:UsgnH-Ustar-MUstar-general} hold. In addition, assume that $\kappa^{4}\mu^{2}r^{2}\log n\leq n$, $\sigma\sqrt{n}\lesssim|\lambda_{r}^{\star}|/\kappa$ and
	\begin{align}
		\big\|\bm{U}_{j,\cdot}^{\star}\big\|_{2}^{2}+\big\|\bm{U}_{i,\cdot}^{\star}\big\|_{2}^{2}\gtrsim\frac{B\sigma\kappa^{2}\mu^{2}r^{2}\log n}{\sigma_{\min}^2 n^{3/2}}+\frac{\sigma^{3}\kappa\mu^{2}r^{2}\log n}{\sigma_{\min}^2|\lambda_{r}^{\star}|\sqrt{n}}. 
		\label{eq:Uj-Ui-lower-bound-lem}
	\end{align}
	With probability exceeding $1-O(n^{-5})$, one has
	\begin{align}
		\big|\widehat{v}_{i,j}-v_{i,j}^{\star}\big| =o\big( v_{i,j}^{\star} \big). 
	\end{align}
\end{lemma}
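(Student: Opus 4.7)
} My plan is to decompose the estimation error into two pieces via the ``oracle'' intermediate estimator $\widetilde{v}_{i,j}$ defined in \eqref{eq:v-ij-plugin-surrogate-123}, namely
\[
\big|\widehat{v}_{i,j}-v_{i,j}^{\star}\big| \;\leq\; \big|\widehat{v}_{i,j}-\widetilde{v}_{i,j}\big| \;+\; \big|\widetilde{v}_{i,j}-v_{i,j}^{\star}\big|,
\]
and control each piece separately. Throughout, I will exploit the lower bound $v_{i,j}^{\star}\gtrsim\sigma_{\min}^{2}\big(\|\bm{U}_{i,\cdot}^{\star}\|_{2}^{2}+\|\bm{U}_{j,\cdot}^{\star}\|_{2}^{2}\big)$ derived in \eqref{eq:vij-lower-bound-12} to certify that the two error pieces are both of order $o(v_{i,j}^{\star})$ under the stated conditions.

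\textbf{Step 1: concentration of $\widetilde{v}_{i,j}$ around $v_{i,j}^{\star}$.} Since $\widetilde{v}_{i,j}$ is (by construction) an unbiased estimator of $v_{i,j}^{\star}$ and is a weighted sum of independent squared noise entries $\{E_{i,l}^{2},E_{l,j}^{2}\}$, I plan to apply Bernstein's inequality. Each summand is bounded by $B^{2}\|\bm{P}^{\star}\|_{\infty}^{2}\leq B^{2}\mu^{2}r^{2}/n^{2}$ (using \eqref{eq:P-inf-norm-UB}), and the total variance is at most $O(B^{2}\sigma^{2}\|\bm{P}^{\star}\|_{\infty}^{2}\cdot(\|\bm{U}_{i,\cdot}^{\star}\|_{2}^{2}+\|\bm{U}_{j,\cdot}^{\star}\|_{2}^{2}))$ after invoking the identity $\sum_{l}P_{l,j}^{\star 2}=\|\bm{U}_{j,\cdot}^{\star}\|_{2}^{2}$ from \eqref{eq:P-jl-square-sum}. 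This should yield $|\widetilde{v}_{i,j}-v_{i,j}^{\star}|\lesssim B\sigma\mu r n^{-1}\sqrt{(\|\bm{U}_{i,\cdot}^{\star}\|_{2}^{2}+\|\bm{U}_{j,\cdot}^{\star}\|_{2}^{2})\log n}+B^{2}\mu^{2}r^{2}n^{-2}\log n$, which is $o(v_{i,j}^{\star})$ under Condition~\eqref{eq:Uj-Ui-lower-bound-lem}.

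\textbf{Step 2: controlling the plug-in error $\widehat{v}_{i,j}-\widetilde{v}_{i,j}$.} This is where I expect the main technical work. The key perturbation bounds that I will rely on are:
\begin{itemize}
\item[(i)] An entrywise bound $\|\widehat{\bm{E}}-\bm{E}\|_{\infty}=\|\bm{U}\bm{\Lambda}\bm{U}^{\top}-\bm{M}^{\star}\|_{\infty}\lesssim\sigma\kappa^{2}\mu r\sqrt{\log n / n}$ from Corollary~\ref{cor:entrywise-error-general};
\item[(ii)] An $\ell_{2,\infty}$ bound on $\widehat{\bm{P}}-\bm{P}^{\star}$ via the identity $\widehat{\bm{P}}-\bm{P}^{\star}=\bm{U}\mathsf{sgn}(\bm{H})(\bm{U}\mathsf{sgn}(\bm{H})-\bm{U}^{\star})^{\top}+(\bm{U}\mathsf{sgn}(\bm{H})-\bm{U}^{\star})\bm{U}^{\star\top}$, combined with the $\ell_{2,\infty}$ bound \eqref{eq:UsgnH-Ustar-bound-theorem-general} and the incoherence estimate \eqref{eq:incoherence-U-estimate}; this should give $\|\widehat{\bm{P}}_{l,\cdot}-\bm{P}^{\star}_{l,\cdot}\|_{2}\lesssim\sqrt{\mu r/n}\cdot(\sigma\kappa\sqrt{\mu r}+\sigma\sqrt{r\log n})/|\lambda_{r}^{\star}|$ uniformly in $l$, and a compatible bound on $|\widehat{P}_{l,j}-P_{l,j}^{\star}|$ pointwise.
\end{itemize}
I will then expand $\widehat{E}_{i,l}^{2}\widehat{P}_{l,j}^{2}-E_{i,l}^{2}P_{l,j}^{\star 2}$ via the telescoping identity $a^{2}b^{2}-c^{2}d^{2}=(a-c)(a+c)b^{2}+c^{2}(b-d)(b+d)$, bounding the resulting cross terms. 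Summing over $l$ and using Cauchy-Schwarz together with $\sum_{l}P_{l,j}^{\star 2}=\|\bm{U}_{j,\cdot}^{\star}\|_{2}^{2}$ should yield
\[
\big|\widehat{v}_{i,j}-\widetilde{v}_{i,j}\big| \;\lesssim\; \sigma\kappa^{2}\mu r\sqrt{\tfrac{\log n}{n}}\cdot\sigma\sqrt{n}\cdot\sqrt{\|\bm{U}_{i,\cdot}^{\star}\|_{2}^{2}+\|\bm{U}_{j,\cdot}^{\star}\|_{2}^{2}} \,+\, (\text{higher-order terms}),
\]
which under the signal lower bound \eqref{eq:Uj-Ui-lower-bound-lem} will again be $o(v_{i,j}^{\star})$.

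\textbf{Main obstacle.} The delicate part is ensuring that, when expanding the differences $\widehat{E}_{i,l}^{2}\widehat{P}_{l,j}^{2}-E_{i,l}^{2}P_{l,j}^{\star 2}$ and summing across $l$, one exploits aggregate concentration (via $\sum_{l}P_{l,j}^{\star 2}=\|\bm{U}_{j,\cdot}^{\star}\|_{2}^{2}$) rather than naively applying the worst-case entrywise bound for every summand; otherwise one loses a factor of $n$ and the resulting bound fails to be $o(v_{i,j}^{\star})$ in the regime where $\|\bm{U}_{j,\cdot}^{\star}\|_{2}^{2}+\|\bm{U}_{i,\cdot}^{\star}\|_{2}^{2}$ is close to the threshold~\eqref{eq:Uj-Ui-lower-bound-lem}. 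Another subtle point is handling terms such as $\sum_{l}\widehat{E}_{i,l}^{2}(\widehat{P}_{l,j}^{2}-P_{l,j}^{\star 2})$, where one wants to replace $\widehat{E}_{i,l}^{2}$ with its expectation $\sigma_{i,l}^{2}$ and invoke concentration again; this will require either bootstrapping Bernstein's inequality with the plug-in estimate or a direct estimate of $\sum_{l}(\widehat{E}_{i,l}^{2}-\sigma_{i,l}^{2})(\widehat{P}_{l,j}+P_{l,j}^{\star})(\widehat{P}_{l,j}-P_{l,j}^{\star})$ via Cauchy-Schwarz, using the $\ell_{2,\infty}$ control of $\widehat{\bm{P}}-\bm{P}^{\star}$ to avoid paying a $\sqrt{n}$ factor.
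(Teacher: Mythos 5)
Your overall structure matches the paper's: decompose the error via the oracle $\widetilde{v}_{i,j}$ of \eqref{eq:v-ij-plugin-surrogate-123}, control $\widetilde{v}_{i,j}-v_{i,j}^{\star}$ by Bernstein's inequality, and control $\widehat{v}_{i,j}-\widetilde{v}_{i,j}$ by the perturbation bounds on $\widehat{\bm{E}}-\bm{E}$ and $\widehat{\bm{P}}-\bm{P}^{\star}$. Step~1 is essentially the paper's argument; your variance bound keeps the dependence on $\|\bm{U}_{i,\cdot}^{\star}\|_{2}^{2}+\|\bm{U}_{j,\cdot}^{\star}\|_{2}^{2}$, which is slightly finer than the paper's use of $\|\bm{P}^{\star}\|_{\infty}^{3}$, but either suffices under \eqref{eq:Uj-Ui-lower-bound-lem}.

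However, there is a genuine error in Step~2. From the identity $\widehat{\bm{P}}-\bm{P}^{\star}=\bm{U}\mathsf{sgn}(\bm{H})\big(\bm{U}\mathsf{sgn}(\bm{H})-\bm{U}^{\star}\big)^{\top}+\big(\bm{U}\mathsf{sgn}(\bm{H})-\bm{U}^{\star}\big)\bm{U}^{\star\top}$, the row $\ell_{2}$ norm is bounded as
\[
\big\|\widehat{\bm{P}}_{l,\cdot}-\bm{P}^{\star}_{l,\cdot}\big\|_{2}
\leq \big\|\bm{U}\big\|_{2,\infty}\,\big\|\bm{U}\mathsf{sgn}(\bm{H})-\bm{U}^{\star}\big\|
   + \big\|\bm{U}\mathsf{sgn}(\bm{H})-\bm{U}^{\star}\big\|_{2,\infty}\,\big\|\bm{U}^{\star}\big\|
\lesssim \frac{\sigma\kappa\sqrt{\mu r}+\sigma\sqrt{r\log n}}{|\lambda_{r}^{\star}|},
\]
with no extra factor $\sqrt{\mu r/n}$. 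The $\sqrt{\mu r/n}$ factor only appears in the entrywise $\ell_{\infty}$ bound \eqref{eq:P-hat-Linf-norm-diff}, because there one multiplies the $\ell_{2,\infty}$ bound of each factor by the $\ell_{2,\infty}$ bound of the other, rather than by its operator norm. So your claimed $\ell_{2,\infty}$ control of $\widehat{\bm{P}}-\bm{P}^{\star}$ is too small by a factor of $\sqrt{n/(\mu r)}$, and the Cauchy--Schwarz estimate built on it does not hold as stated.

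Moreover, the ``main obstacle'' you flag is a non-issue: the paper does not naively pay a factor of $n$ per summand. It bounds each of $\max_{l}|\widehat{E}_{i,l}^{2}-E_{i,l}^{2}|$ and $\max_{l}|\widehat{P}_{l,j}^{2}-P_{l,j}^{\star 2}|$ entrywise, and then multiplies by the aggregate quantities $\sum_{l}\widehat{P}_{l,j}^{2}=\widehat{P}_{j,j}\lesssim\mu r/n$ (via \eqref{eq:P-jl-square-sum} applied to $\widehat{\bm{P}}$) and $\sum_{l}E_{i,l}^{2}\leq\|\bm{E}\|^{2}\lesssim\sigma^{2}n$. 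This gives the absolute bound \eqref{eq:vij-vhat-ij-gap-bound}, and condition \eqref{eq:Uj-Ui-lower-bound-lem} is precisely calibrated (with an extra $\sqrt{\log n}$ slack) so that this absolute bound is $o(v_{i,j}^{\star})$ after invoking \eqref{eq:vij-lower-bound-12}. In other words, the Cauchy--Schwarz refinement you propose is both incorrect as written and unnecessary; the crude entrywise-times-aggregate argument already closes the proof.
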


Lemma~\ref{lem:variance-estimation-guarantee} essentially enables us to express
\[
	\widehat{v}_{i,j} = (1 + \zeta_v)^2 v_{i,j}^{\star} \qquad \text{with } \zeta_v = o(1). 
\]
As a consequence, we can further demonstrate that
\begin{align*}
 & \left|\mathbb{P}\left( M_{i,j}^{\star}\in\mathsf{CI}_{i,j}^{1-\alpha}\right) -(1-\alpha)\right|=\left|\mathbb{P}\left( \big| \widehat{M}_{i,j}-M_{i,j}^{\star}\big|\leq z_{\alpha/2}\sqrt{\widehat{v}_{i,j}}\right) -(1-\alpha)\right|\\
 & =\left|\mathbb{P}\left( \big| \widehat{M}_{i,j}-M_{i,j}^{\star}\big|\leq(1+\zeta_{v})z_{\alpha/2}\sqrt{v_{i,j}^{\star}}\right) -(1-\alpha)\right|\\
 & \leq\left|\Phi\big((1+\zeta_{v})z_{\alpha/2}\big)-\Phi\big(-(1+\zeta_{v})z_{\alpha/2}\big)-(1-\alpha)\right|+o(1)\\
 & \leq\left|\Phi\big(z_{\alpha/2}\big)-\Phi\big(-z_{\alpha/2}\big)-(1-\alpha)\right|+2\left|\Phi\big((1+\zeta_{v})z_{\alpha/2}\big)-\Phi\big(z_{\alpha/2}\big)\right|+o(1)\\
 & \leq2\zeta_{v}z_{\alpha/2}=o(1),
\end{align*}
where the first inequality follows from Theorem~\ref{thm:matrix-distribution-complete} and $z_{\alpha/2}:= \Phi^{-1}(1-\alpha/2)$, the second inequality applies the triangle inequality, 
and the validity of the last line can be seen from the basic fact $|\Phi(u)-\Phi(v)| \leq |u-v|$.

When $\sigma_{\min}\asymp \sigma$, Condition~\eqref{eq:Uj-Ui-lower-bound-lem} simplifies to
\begin{align}
	\frac{\big\|\bm{U}_{j,\cdot}^{\star}\big\|_{2}^{2}+\big\|\bm{U}_{i,\cdot}^{\star}\big\|_{2}^{2}}{\big\|\bm{U}^{\star}\big\|_{\mathrm{F}}^{2}}
	\gtrsim\frac{B\kappa^{2}\mu^{2}r\log n}{\sigma n^{3/2}}+\frac{\sigma\kappa\mu^{2}r\log n}{|\lambda_{r}^{\star}|\sqrt{n}}.		 
		\label{eq:Uj-Ui-lower-bound-lem-31}
\end{align}
We still need to ensure that Condition~\eqref{eq:Ui-Uj-condition-thm-distribution-1} is satisfied.  
It is seen that 
\begin{align*}
	\frac{B^{2}\kappa^{2}\mu^{2}r^{2}\log^{2}n}{\sigma^{2}n^{2}}+\frac{\sigma^{2}\mu^{2}r\kappa^{4}\log^{3}n}{(\lambda_{r}^{\star})^{2}}&\lesssim\frac{B\kappa^{2}\mu^{2}r^{2}\log^{2}n}{\sigma n^{3/2}}+\frac{\sigma\mu^{2}r\kappa^{3}\log^{3}n}{|\lambda_{r}^{\star}|\sqrt{n}},
\end{align*}
%
which holds if $\sigma\kappa \sqrt{n} \lesssim |\lambda_r^{\star}|$ and $B\lesssim \sigma\sqrt{n}$. 
As a result, if Condition~\eqref{eq:Uj-Ui-lower-bound-thm} holds, then both \eqref{eq:Uj-Ui-lower-bound-lem-31} and \eqref{eq:Ui-Uj-condition-thm-distribution-1}
are satisfied. 
This finishes the proof, as long as  Lemma~\ref{lem:variance-estimation-guarantee} can be established.


\subsection{Proof for Lemma~\ref{lem:variance-estimation-guarantee}}
\label{sec:proof-lem:variance-estimation-guarantee}

As before, we shall only present the proof for the case with $i\neq j$ for the sake of conciseness. 
In order to justify the goodness of the estimator $\widehat{v}_{i,j}$, 
we find it convenient to first look at the surrogate estimator introduced in \eqref{eq:v-ij-plugin-surrogate-123}, i.e., 
\begin{equation}
	\widetilde{v}_{i,j}=\sum_{l=1}^{n}E_{i,l}^{2}P_{l,j}^{\star 2}+\sum_{l=1}^{n}P_{i,l}^{\star 2}E_{l,j}^{2}+2E_{i,j}^{2}P_{i,i}^{\star}P_{j,j}^{\star}, 
	\label{eq:v-surrogate}
\end{equation}
%
In the sequel, our proof consists of two main steps: 
\begin{itemize}
	\item Show that the surrogate $\widetilde{v}_{i,j}$ is a reliable estimate of the truth ${v}^{\star}_{i,j}$, namely, $\widetilde{v}_{i,j}\approx {v}^{\star}_{i,j}$. 
	\item Show that the estimator in use and the surrogate estimator are sufficiently close, namely, $\widehat{v}_{i,j}\approx \widetilde{v}_{i,j}$.
\end{itemize}

\subsubsection{Step 1: show that $\widetilde{v}_{i,j}\approx {v}^{\star}_{i,j}$} 
Firstly, the fact that the $E_{i,j}$'s are zero-mean random variables
immediately reveals that $\widetilde{v}_{i,j}$ is an unbiased estimate
of $v_{i,j}^{\star}$, that is, 
\[
\mathbb{E}\big[\widetilde{v}_{i,j}\big]=v_{i,j}^{\star}.
\]
Secondly, given that the $E_{i,j}$'s are statistically independent,
we intend to invoke the Bernstein inequality to control the difference $\widetilde{v}_{i,j}-v_{i,j}^{\star}=\widetilde{v}_{i,j}-\mathbb{E}\big[\widetilde{v}_{i,j}\big]$.
To do so, one first calculates that
\begin{align*}
L_{0} & \coloneqq\max\left\{ \max_{l}E_{i,l}^{2}P_{l,j}^{\star 2},\,\max_{l}E_{l,j}^{2}P_{i,l}^{\star 2},\,E_{i,j}^{2}(P_{i,i}^\star +P_{j,j}^\star )^{2}\right\} \leq4B^{2}\|\bm{P}^\star \|_{\infty}^{2}
\end{align*}
and
\begin{align*}
V_{0} & \coloneqq\sum_{l:\,l\neq j}\mathsf{Var}\big(E_{i,l}^{2}\big)P_{l,j}^{\star 4}+\sum_{l:\,l\neq i}\mathsf{Var}\big(E_{l,j}^{2}\big)P_{i,l}^{\star 4}+\mathsf{Var}\big(E_{i,j}^{2}\big)(P_{i,i}^\star +P_{j,j}^\star )^{4}\\
 & \lesssim\sum_{l=1}^{n}\mathbb{E}\big[E_{i,l}^{4}\big]P_{l,j}^{\star 4}+\sum_{l=1}^{n}\mathbb{E}\big[E_{l,j}^{4}\big]P_{i,l}^{\star 4}+\mathbb{E}\big[E_{i,j}^{4}\big]P_{i,i}^{\star 2}P_{j,j}^{\star 2}\\
 & \lesssim B^{2}\|\bm{P}^\star \|_{\infty}^{2}\left\{ \sum_{l=1}^{n}\mathbb{E}\big[E_{i,l}^{2}\big]P_{l,j}^{2}+\sum_{l=1}^{n}\mathbb{E}\big[E_{l,j}^{2}\big]P_{i,l}^{\star 2}+\mathbb{E}\big[E_{i,j}^{2}\big]P_{i,i}^\star P_{j,j}^\star \right\} \\
 & \lesssim \sigma^{2} B^{2} \|\bm{P}^\star \|_{\infty}^{2}\left\{ \sum_{l=1}^{n}P_{l,j}^{\star 2}+\sum_{l=1}^{n}P_{i,l}^{\star 2}\right\} \\
 & \lesssim \sigma^{2} B^{2} \|\bm{P}^\star \|_{\infty}^{3},
\end{align*}
where the last line follows from \eqref{eq:P-jl-square-sum}. 
%
Invoking the Bernstein inequality (cf.~Corollary~\ref{thm:matrix-Bernstein-friendly}) reveals that with probability exceeding $1-O(n^{-5})$, 
\begin{align}
\big|\widetilde{v}_{i,j}-v_{i,j}^{\star}\big| & =\left|\widetilde{v}_{i,j}-\mathbb{E}\left[\widetilde{v}_{i,j}\right]\right|\lesssim\sqrt{V_{0}\log n}+L_{0}\log n \notag\\
 & \lesssim\sigma B\sqrt{\|\bm{P}^\star \|_{\infty}^{3}\log n}+B^{2}\|\bm{P}^\star \|_{\infty}^{2}\log n \notag\\
 & \lesssim\frac{\sigma B\mu^{3/2}r^{3/2}\sqrt{\log n}}{n^{3/2}}+\frac{\mu^{2}r^{2}B^{2}\log n}{n^{2}} ,
	\label{eq:tilde-v-star-v-diff}
\end{align}
where the last inequality results from \eqref{eq:P-inf-norm-UB}.

\subsubsection{Step 2: show that $\widehat{v}_{i,j}\approx \widetilde{v}_{i,j}$} 
In order to accomplish this, we are in need of controlling the difference between $E_{i,j}$ (resp.~$P^\star_{i,j}$) and $\widehat{E}_{i,j}$ (resp.~$\widehat{P}_{i,j}$). 
To this end, apply the entrywise estimation guarantees in Corollary~\ref{cor:entrywise-error-general} to yield
\begin{align}
\big\|\widehat{\bm{E}}-\bm{E}\big\|_{\infty} & =\big\|\big(\bm{M}-\bm{U}\bm{\Lambda}\bm{U}^{\top}\big)-\big(\bm{M}-\bm{M}^{\star}\big)\big\|_{\infty} \notag\\
 & =\big\|\bm{U}\bm{\Lambda}\bm{U}^{\top}-\bm{M}^{\star}\big\|_{\infty}\lesssim\sigma\kappa^{2}\mu r\sqrt{\frac{\log n}{n}},
	\label{eq:entrywise-estimation-E}
\end{align}
and as a result,
\begin{equation}
\big\|\widehat{\bm{E}}\big\|_{\infty}\leq\big\|\bm{E}\big\|_{\infty}+\big\|\widehat{\bm{E}}-\bm{E}\big\|_{\infty}
	\lesssim B+\sigma\kappa^{2}\mu r\sqrt{\frac{\log n}{n}}
	\asymp B,
	\label{eq:entrywise-estimation-E-1}
\end{equation}
provided that $B\gtrsim \sigma\kappa^{2}\mu r\sqrt{\frac{\log n}{n}}$ (which is trivially satisfied if $\kappa^{2}\mu r\sqrt{\frac{\log n}{n}}\leq 1$). Moving on to the error term $\widehat{P}_{i,j}-P_{i,j}^\star $, we observe that
\begin{align*}
 & \big\|\widehat{\bm{P}}-\bm{P}^\star \big\|_{\infty} =\big\|\bm{U}\bm{U}^{\top}-\bm{U}^{\star}\bm{U}^{\star\top}\big\|_{\infty}\\
 & \qquad \leq\big\|\big(\bm{U}\mathsf{sgn}(\bm{H})-\bm{U}^{\star}\big)\bm{U}^{\star\top}\big\|_{\infty}+\big\|\bm{U}\mathsf{sgn}(\bm{H})\big(\bm{U}\mathsf{sgn}(\bm{H})-\bm{U}^{\star}\big)^{\top}\big\|_{\infty}\\
 & \qquad \leq\big\|\bm{U}\mathsf{sgn}(\bm{H})-\bm{U}^{\star}\big\|_{2,\infty}\left\{ \big\|\bm{U}^{\star}\big\|_{2,\infty}+\big\|\bm{U}\mathsf{sgn}(\bm{H})\big\|_{2,\infty}\right\} \\	
 & \qquad \leq\big\|\bm{U}\mathsf{sgn}(\bm{H})-\bm{U}^{\star}\big\|_{2,\infty}\left\{ \big\|\bm{U}^{\star}\big\|_{2,\infty}+\big\|\bm{U}\big\|_{2,\infty}\right\} .
\end{align*}
Taking this  together with the bound \eqref{eq:UsgnH-Ustar-bound-theorem-general} in Theorem~\ref{thm:UsgnH-Ustar-MUstar-general}, 
the incoherence assumption, and the inequality \eqref{eq:incoherence-U-estimate}, we arrive at
\begin{align}
	\big\|\widehat{\bm{P}}-\bm{P}^\star \big\|_{\infty} & \lesssim \frac{(\sigma\kappa\sqrt{\mu r}+\sigma\sqrt{r\log n})}{|\lambda_{r}^{\star}|} \sqrt{\frac{\mu r}{n}}. \label{eq:P-hat-Linf-norm-diff}
\end{align}
This taken together with \eqref{eq:P-inf-norm-UB} indicates that
\begin{align}
\big\|\widehat{\bm{P}}\big\|_{\infty} & \leq\big\|\bm{P}^\star \big\|_{\infty}+\big\|\widehat{\bm{P}}-\bm{P}^\star \big\|_{\infty}\nonumber \\
 & \lesssim \frac{\mu r}{n}+\frac{(\sigma\kappa\sqrt{\mu r}+\sigma\sqrt{r\log n})}{|\lambda_{r}^{\star}|}\sqrt{\frac{\mu r}{n}}\asymp\frac{\mu r}{n},\label{eq:P-hat-Linf-norm}
\end{align}
with the proviso that $\sigma\sqrt{n}\lesssim|\lambda_{r}^{\star}|/\kappa$ and $\sigma\sqrt{n\log n}\lesssim|\lambda_{r}^{\star}|$.

Armed with the preceding bounds, we are now positioned to control $\widehat{v}_{i,j} - \widetilde{v}_{i,j}$. From the definition of $\widetilde{v}_{i,j}$ and $v_{i,j}^{\star}$, we recognize that
\begin{align}
\big| \widehat{v}_{i,j} - \widetilde{v}_{i,j} \big| & \leq\underset{\eqqcolon\,\alpha_{1}}{\underbrace{\left|\sum_{l=1}^{n}\left(\widehat{E}_{i,l}^{2}\widehat{P}_{l,j}^{2}-E_{i,l}^{2}P_{l,j}^{\star 2}\right)\right|}}+\underset{\eqqcolon\,\alpha_{2}}{\underbrace{\left|\sum_{l=1}^{n}\left(\widehat{P}_{i,l}^{2}\widehat{E}_{l,j}^{2}-P_{i,l}^{\star 2}E_{l,j}^{2}\right)\right|}}\nonumber \\
 & \qquad+2\, \underset{\eqqcolon\,\alpha_{3}}{\underbrace{\left|\widehat{E}_{i,j}^{2}\widehat{P}_{i,i}\widehat{P}_{j,j}-E_{i,j}^{2}P_{i,i}^\star P_{j,j}^\star \right|}},
	\label{eq:v-tilde-bound-alpha-123}
\end{align}
leaving us with three terms to cope with. 
Regarding the first term $\alpha_1$ on the right-hand side of \eqref{eq:v-tilde-bound-alpha-123}, it can be easily verified that
\begin{align}
\alpha_{1} & \leq
\left|\sum_{l=1}^{n}\left(\widehat{E}_{i,l}^{2}\widehat{P}_{l,j}^{2}-E_{i,l}^{2}\widehat{P}_{l,j}^{2}\right)\right|+\left|\sum_{l=1}^{n}\left(E_{i,l}^{2}\widehat{P}_{l,j}^{2}-E_{i,l}^{2}P_{l,j}^{\star2}\right)\right| \notag\\
 & \leq \left(\max_{l}\left|\widehat{E}_{i,l}^{2}-E_{i,l}^{2}\right|\right)\sum_{l=1}^{n}\widehat{P}_{l,j}^{2}+\left(\max_{l}\left| \widehat{P}_{l,j}^{2} - P_{l,j}^{\star 2} \right|\right)\sum_{l=1}^{n}E_{i,l}^{2} \notag\\
	& \overset{\mathrm{(i)}}{\leq} \left(\|\bm{E}\|_{\infty}+\|\widehat{\bm{E}}\|_{\infty}\right)\big\|\widehat{\bm{E}}-\bm{E}\big\|_{\infty}\widehat{P}_{j,j} \notag\\
	&\qquad +\left(\|\bm{P}^\star \|_{\infty}+\|\widehat{\bm{P}}\|_{\infty}\right)\big\|\widehat{\bm{P}}-\bm{P}^\star \big\|_{\infty}\|\bm{E}\|^{2} \notag\\
	& \overset{\mathrm{(ii)}}{\lesssim} B\sigma\kappa^{2}\mu r\sqrt{\frac{\log n}{n}}\cdot\frac{\mu r}{n}+\frac{\mu r}{n}\cdot\frac{(\sigma\kappa\sqrt{\mu r}+\sigma\sqrt{r\log n})}{|\lambda_{r}^{\star}|}\sqrt{\frac{\mu r}{n}}\cdot\sigma^{2}n.  \notag
\end{align}
Here, (i) makes use of \eqref{eq:P-jl-square-sum} (with $\bm{P}^{\star}$ replaced by $\widehat{\bm{P}}$), 
whereas (ii) holds true due to \eqref{eq:entrywise-estimation-E}, \eqref{eq:entrywise-estimation-E-1}, \eqref{eq:P-inf-norm-UB}, \eqref{eq:P-hat-Linf-norm-diff}, \eqref{eq:P-hat-Linf-norm}, and Lemma~\ref{lemma:general-noise-bound}.  
The second term $\alpha_2$ on the right-hand side of \eqref{eq:v-tilde-bound-alpha-123} can be bounded in the same manner and we omit it here for brevity. 
When it comes to the last term $\alpha_3$ on the right-hand side of \eqref{eq:v-tilde-bound-alpha-123}, one has
\begin{align*}
\alpha_{3} & \leq\left|\widehat{E}_{i,j}^{2}-E_{i,j}^{2}\right|\left|\widehat{P}_{i,i}\widehat{P}_{j,j}\right|+ E_{i,j}^{2} \left|\widehat{P}_{i,i}\widehat{P}_{j,j}-P_{i,i}^\star P_{j,j}^\star \right|\\
 & \lesssim\big\|\widehat{\bm{E}}-\bm{E}\big\|_{\infty}\left\{ \big\|\bm{E}\big\|_{\infty}+\big\|\widehat{\bm{E}}\big\|_{\infty}\right\} \big\|\widehat{\bm{P}}\big\|_{\infty}^{2} \notag\\
	& \qquad +B^{2}\big\|\widehat{\bm{P}}- \bm{P}^{\star} \big\|_{\infty}\left\{ \big\|\bm{P}^\star \big\|_{\infty}+\big\|\widehat{\bm{P}}\big\|_{\infty}\right\} \\
 & \lesssim\sigma\kappa^{2}\mu r\sqrt{\frac{\log n}{n}}\cdot B\cdot\left(\frac{\mu r}{n}\right)^{2}+B^{2}\cdot\frac{(\sigma\kappa\sqrt{\mu r}+\sigma\sqrt{r\log n})}{|\lambda_{r}^{\star}|}\sqrt{\frac{\mu r}{n}}\cdot\frac{\mu r}{n} \\
 & \lesssim B\sigma\kappa^{2}\mu r\sqrt{\frac{\log n}{n}}\cdot\frac{\mu r}{n}+\frac{\mu r}{n}\cdot\frac{(\sigma\kappa\sqrt{\mu r}+\sigma\sqrt{r\log n})}{|\lambda_{r}^{\star}|}\sqrt{\frac{\mu r}{n}}\cdot\sigma^{2}n, 
\end{align*}
where the penultimate inequality is a consequence of \eqref{eq:entrywise-estimation-E}, \eqref{eq:entrywise-estimation-E-1}, \eqref{eq:P-inf-norm-UB}, \eqref{eq:P-hat-Linf-norm-diff} and \eqref{eq:P-hat-Linf-norm},
and the last line holds true as long as $\mu r \leq n$ (cf.~\eqref{eq:mu-constraint-general}) and $B\lesssim \sigma \sqrt{n}$. 
Combining the above inequalities allows one to reach
\begin{align*}
 & \big| \widehat{v}_{i,j} - \widetilde{v}_{i,j} \big|  \leq\alpha_{1}+\alpha_{2}+\alpha_{3}\\
 & \qquad \lesssim  
	 B\sigma\kappa^{2}\mu r\sqrt{\frac{\log n}{n}}\cdot\frac{\mu r}{n}+\frac{\mu r}{n}\cdot\frac{\sigma\kappa\sqrt{\mu r}+\sigma\sqrt{r\log n}}{|\lambda_{r}^{\star}|}\sqrt{\frac{\mu r}{n}}\cdot\sigma^{2}n \\
 & \qquad \asymp \frac{B\sigma\kappa^{2}\mu^{2}r^{2}\sqrt{\log n}}{n^{3/2}}+\frac{\sigma^{3}\kappa\mu^{2}r^{2}+\sigma^{3}\mu^{3/2}r^{2}\sqrt{\log n}}{|\lambda_{r}^{\star}|\sqrt{n}}
\end{align*}
with probability at least $1-O(n^{-5})$. 

\subsubsection{Step 3: combining the above bounds} 

Putting together the results in the previous steps, we can readily derive
\begin{align}
\big|\widehat{v}_{i,j}-v_{i,j}^{\star}\big| & \leq\big|\widetilde{v}_{i,j}-v_{i,j}^{\star}\big|+ \big| \widehat{v}_{i,j} - \widetilde{v}_{i,j} \big|  \notag\\
 & \lesssim\left(\frac{\sigma B\mu^{3/2}r^{3/2}\sqrt{\log n}}{n^{3/2}}+\frac{\mu^{2}r^{2}B^{2}\log n}{n^{2}}\right) \notag\\
 & \qquad+\left(\frac{B\sigma\kappa^{2}\mu^{2}r^{2}\sqrt{\log n}}{n^{3/2}}+\frac{\sigma^{3}\kappa\mu^{2}r^{2}+\sigma^{3}\mu^{3/2}r^{2}\sqrt{\log n}}{|\lambda_{r}^{\star}|\sqrt{n}}\right) \notag\\
	& \asymp \frac{B\sigma\kappa^{2}\mu^{2}r^{2}\sqrt{\log n}}{n^{3/2}}+\frac{\sigma^{3}\kappa\mu^{2}r^{2}\sqrt{\log n}}{|\lambda_{r}^{\star}|\sqrt{n}},  \label{eq:vij-vhat-ij-gap-bound}
\end{align}
where the last relation is guaranteed as long as $B\lesssim \sigma \sqrt{n/\log n}$. 
Consequently, if Condition~\eqref{eq:Uj-Ui-lower-bound-lem} holds, 
then it follows from \eqref{eq:vij-vhat-ij-gap-bound} and the lower bound \eqref{eq:vij-lower-bound-12} that
\begin{align*}
	\big|\widehat{v}_{i,j}-v_{i,j}^{\star}\big| \lesssim \frac{1}{\sqrt{\log n}} v_{i,j}^{\star}, 
\end{align*}
thus concluding the proof.


\section{Notes}

\paragraph{Leave-one-out analysis.} The core idea of leave-one-out analysis,  which drops a small amount of randomness to decouple complicated statistical dependency, is deeply rooted in the probability and statistics literature. For instance, an idea of this kind was invoked by \citet{stein1972a} to help establish normal approximation,   was paired with the Stieltjes transform to establish the  limiting spectral law of random matrices (see, e.g., \citep[Section 2.4.3]{Tao2012RMT}), and bears some resemblance to the cavity method in statistical physics \citep{mezard2009information}.
When it comes to statistical estimation, a prominent series of work that unveiled the striking effectiveness of leave-one-out analysis was \citet{el2013robust,el2015impact}, which characterized rigorously the sharp statistical performance (including pre-constants) of M-estimators in high dimension (i.e., a challenging regime where the number of samples is comparable to the number of unknown parameters).
The deep analysis framework developed in these papers inspired much of the follow-up work presented in this chapter. Particularly worth mentioning are: (1) \citet{zhong2017near}: which was the first to determine the entrywise behavior of the generalized projected power method; (2) \citet{abbe2020entrywise,chen2017spectral}: which extended the leave-one-out analysis idea to establish entrywise eigenvector perturbation; and (3) \citet{ma2017implicit,chen2019gradient}: which characterized tight convergence guarantees for nonconvex optimization algorithms with the aid of leave-one-out ideas. For readers' reference, we list below several topics for which leave-one-out analyses prove useful:
\begin{itemize}
	\item Maximum likelihood estimation and M-estimation: \citet{el2013robust,el2015impact,lei2018asymptotics,sur2019likelihood,sur2019modern,chen2017spectral,chen2020partial};
	\item spectral methods: \citet{chen2017spectral,abbe2020entrywise,ma2017implicit,cai2019subspace,lei2019unified,abbe2020ell_p,ling2020near,chen2020partial};
	\item nonconvex optimization for statistical estimation: \citet{ma2017implicit,chen2019gradient, li2019nonconvex,chen2020nonconvex,cai2019tensor,dong2018nonconvex,chen2020convex,wang2021entrywise};
	\item semidefinite relaxation for low-rank matrix factorization: \citet{zhong2017near,ding2020leave,chen2020noisy,chen2020bridging,chen2020convex};
	\item uncertainty quantification and  confidence intervals: \citet{javanmard2018debiasing,chen2019inference,cai2020uncertainty,yan2021inference};
	\item cross validation: \citet{xu2019consistent};
	\item reinforcement learning: \citet{agarwal2020model,li2020breaking,pananjady2020instance,zhang2020model,cui2020minimax,
		wang2021sample}.
\end{itemize}

\paragraph{$\ell_{\infty}$ and $\ell_{2,\infty}$  perturbation theory.}

The $\ell_{\infty}$ and $\ell_{2,\infty}$ perturbation theory for eigenspace and singular subspaces have been investigated in the literature \citep{fan2016ell,cape2019two,eldridge2018unperturbed}, but only scatteredly until very recently.
A modern and systematic framework was established recently,  empowered by the leave-one-out analysis idea.
Its efficacy and tightness  were first demonstrated by \citet{abbe2020entrywise} in a setting that subsumes the one presented herein (with applications to SBMs, phase synchronization and matrix completion), and  by \citet{chen2017spectral} in an asymmetric setting (with application to
 top-$K$ ranking).
 The theoretical framework has subsequently been extended in several aspects.
 For instance, (1) \citet{cai2019subspace} investigated the ``unbalanced'' scenario where the column dimension far exceeds the row dimension of the matrix, resulting in near-optimal fine-grained guarantees for PCA, bi-clustering and tensor completion; (2) \citet{lei2019unified}  expanded the setting by accounting for more flexible noise distributions (including the ones exhibiting certain dependency structure), leading to tight guarantees for, e.g., spectral clustering with more than two communities, and  hierarchical clustering;  (3)  \citet{abbe2020ell_p} explored a more general $\ell_{p}$ perturbation theory that subsumes the $\ell_{\infty}$ perturbation theory as special cases.
Another plausible  approach to study $\ell_{\infty}$ eigenvector perturbation
is to analyze instead the dynamics of an iterative procedure (e.g., the power method) that converges to the leading eigenvector \citep{zhong2017near}, again using the leave-one-out ideas.
This iterative approach offers a perspective complementary to the analysis framework presented herein,
while at the same time playing a pivotal role when studying nonconvex optimization algorithms (see \citet{chi2019nonconvex}).
Moving beyond the leave-one-out analysis framework, $\ell_{\infty}$ and $\ell_{2,\infty}$ eigenspace perturbation theory has been derived via other powerful tools as well, e.g., the Neumann trick \citep{eldridge2018unperturbed,chen2018asymmetry,cheng2020tackling,}, the Procrustes analysis \citep{cape2019two}, and more specialized techniques tailored to Gaussian ensembles \citep{koltchinskii2016perturbation,koltchinskii2016asymptotics,koltchinskii2020efficient}. Perturbations of linear forms and bilinear forms of eigenvectors have also been investigated  in the literature
\citep{koltchinskii2016perturbation,koltchinskii2016asymptotics,chen2018asymmetry,cheng2020tackling,fan2020asymptotic,koltchinskii2020efficient},
which are beyond the scope of this work.
Finally, distributional theory and uncertainty quantification for spectral methods, which
have been recently studied by \citet{xia2019normal, cheng2020tackling, fan2020asymptotic, yan2021inference,agterberg2021entrywise}, 
are still in their infancy. The results presented in Section~\ref{sec:distribution-theory} follow the analysis framework developed in \citet{yan2021inference}.


\newpage
\chapter{Concluding remarks and open problems}
\label{chapter:conclusion}

This monograph offered a coherent statistical treatment for spectral methods,
resulting in appealing theoretical guarantees for a wide spectrum of data science applications ranging from structured signal reconstruction and factor analysis to
clustering and ranking.
The important role of statistical thinking cannot be overstated.
As has been illuminated, the suite of modern statistical techniques not merely empowers classical $\ell_2$ perturbation theory by delivering tight Euclidean error bounds,
but also enables fine-grained $\ell_\infty$ and $\ell_{2,\infty}$ performance guarantees that cannot be derived from classical matrix perturbation theory alone.
We highlighted a unified recipe that underlies our application-driven analyses, which will be readily applicable to tackle many other problems.

The vignettes presented herein only reflect the tip of an iceberg regarding the capability of spectral methods.
There are multiple aspects about spectral methods that remain inadequately explored and are worthy of future investigation. We conclude this monograph by pointing out a few of them.


\begin{itemize}

	\item {\em Precise performance characterization.} The analysis herein falls short of pinpointing a precise trade-off curve between the statistical accuracy and sample complexity of spectral methods, and might even be off by some logarithmic factor.  For algorithms that exhibit order-wise equivalent behavior,  comparing their performances requires finer statistical characterization, ideally with sharp pre-constants.

	\item {\em Handling dependency structure.} Thus far, the $\ell_{\infty}$ and $\ell_{2,\infty}$ perturbation theory we have presented  is restricted to the case where the entries of the data samples are independently generated. There is no shortage of applications where the data samples might exhibit across-entry dependency, examples including blind deconvolution \citep{ahmed2013blind} and phase retrieval with coded diffraction patterns \citep{candes2015CDP,gross2017improved}. Handling such scenarios might require ideas beyond the current leave-one-out framework.


		
	\item {\em Functional estimation.} In many decision making applications, what ultimately matters might not be full information about an eigenvector of a matrix, but rather, some deterministic functions (e.g., certain linear functionals or polynomials) about the entries of this eigenvector. However, naive ``plug-in'' estimators---namely, estimating the eigenvector first and plugging it into the target functional---might suffer from significant estimation bias, even in the case of a linear functional.  A systematic bias-correction paradigm is therefore needed to enable optimal functional estimation.

	\item {\em Small eigengaps.} All theory presented in this monograph imposes a stringent requirement on the associated eigengap, that is, it  needs to exceed the spectral norm of a noise or perturbation matrix.  While this eigengap criterion might be unavoidable in generic matrix perturbation theory (which takes a worst-case perspective), there is often no statistical lower bound that rules out the possibility of reliable eigenspace estimation when the eigengap drops below the perturbation size.
		It would be of fundamental importance to understand how a small eigengap impacts the efficacy of spectral methods under various statistical models.

  \item {\em Weak and sparse factors.}  As mentioned previously, low-rank matrices often admit factor-model interpretations.  
In many applications, one has to deal with weak factors, on which only a small fraction of the variables have non-negligible loadings.  
This gives rise to sparse patterns on the loading matrix or the eigenvectors of the covariance matrix. To utilize such a sparsity structure,  a simple method is to apply marginal screening techniques \citep{fan2008sure,fan2008high}.  Examples of this kind include supervised PCA \citep{bair2006prediction},    PCA on ``targeted predictors'' \citep{bai2008forecasting}, and sparse PCA \citep{zou2006sparse,JohLu09,ma2013sparse}. It remains to develop a more systematic and unified theory concerning how to efficiently exploit such
 special structures in low-rank factorizations, taking into account both statistical and computational considerations.

	\item {\em Heterogeneous missing patterns.} When it comes to missing data, the theory presented herein adopts a uniform sampling model where every entry is independently observed with the same probability. In practice, however, one might encounter non-uniform sampling mechanisms, where the sampling probabilities are non-identical across different entries. How to develop an effective spectral method to automatically account for heterogeneous observation patterns, ideally without knowing the detailed sampling probabilities {\em a priori}?

	\item {\em Confidence regions and hypothesis testing for individual eigenvectors.} Given the output of a spectral method, one might be asked to produce valid confidence regions for an unknown individual eigenvector of interest, a task that has not been fully resolved by the existing literature.  Another closely related task is hypothesis testing for individual eigenvectors: given two random samples, how to develop viable statistical tests regarding whether these two samples are associated with the same individual eigenvectors or not. 
		An even more challenging task is concerned with performing efficient statistical inference on some deterministic functions of an individual eigenvector, which remains largely unknown.



\end{itemize}



\newpage


\begin{acknowledgements}

The authors thank the Editor-in-Chief Prof.~Michael Jordan for his encouragements, and the publisher Mike Casey for his editorial help.

We are deeply indebted to our wonderful collaborators who have contributed significantly to, and helped shape our perspectives into, the materials presented herein,
including Emmanuel Abbe, Changxiao Cai, Emmanuel Cand\`es, Yanxi Chen, Chen Cheng, Yonina Eldar, Yingying Fan, Haoyu Fu, Andrea Goldsmith, Leonidas Guibas, Qixing Huang, Govinda Kamath, Tracy Ke, Gen Li, Yuanxin Li, Yingbin Liang, Yuan Liao, Junwei Lu, Yue Lu, Jinchi Lv,  Vincent Monardo, H.~Vincent Poor, Changho Suh, Tian Tong, David Tse, Bingyan Wang, Kaizheng Wang, Weichen Wang, Yuting Wei, Yuling Yan, Zhuorang Yang, Huishuai Zhang, Yuchen Zhou, Yiqiao Zhong, and Ziwei Zhu.
We owe our particular gratitude to Yuling Yan, 
who has generously helped with most materials presented in Sections~\ref{sec:distribution-theory}-\ref{sec:CI-matrix-completion}. 
We also thank Bingyan Wang and Chen Dan for their helpful comments about an early version of this monograph, Changxiao Cai for his help in producing Figure~\ref{fig:tensor-completion-illustration}, and Kaizheng Wang for his help in generating Figure~\ref{fig:sbm-inf}.

We gratefully acknowledge the generous financial support of multiple agencies. More specifically, Y.~Chen acknowledges the support by the AFOSR YIP award FA9550-19-1-0030, the ONR grant N00014-19-1-2120, the ARO YIP award W911NF-20-1-0097, the ARO grant W911NF-18-1-0303,
the NSF grants CCF-1907661, IIS-1900140, IIS-2100158 and DMS-2014279, and the Princeton SEAS innovation award;
Y.~Chi has been supported in part by the ONR under the grants N00014-18-1-2142 and N00014-19-1-2404,
by the ARO under the grant W911NF-18-1-0303, and by the NSF under the grants
CAREER ECCS-1818571, CCF-1901199, CCF-1806154, CCF-2007911, CCF-2106778 and ECCS-2126634;
and J.~Fan has been supported in part by the ONR grant N00014-19-1-2120,
the NSF grants DMS-1662139, DMS-1712591, DMS-2053832,  DMS-2052926, and the NIH grant R01-GM072611, and the Princeton SEAS innovation award.
Part of this work was done while Y.~Chen was visiting the Simons Institute for the Theory of Computing.

Last but not least, this work would not come to existence without the continuing support of our families, especially during the difficult time of COVID-19 pandemic when this monograph was completed. Y.~Chen thanks Yuting Wei for bringing love and encouragement everyday during the writing of this monograph.  Y.~Chi is deeply grateful to her parents, husband, and daughter for being the silver lining in the pandemic. J.~Fan enjoys gratefully his wife and daughters' company and thanks them for compassionate support.
C.~Ma thanks Xinyi Liu for her unfailing support, and Pidan the Cat for bringing surprises and joys everyday. This monograph is dedicated to them.

\end{acknowledgements}



\backmatter  

\printbibliography

\end{document}